\documentclass{article}

% if you need to pass options to natbib, use, e.g.:
\usepackage{natbib}
\setcitestyle{numbers, compress}
% \PassOptionsToPackage{numbers, compress}{natbib}
% before loading neurips_2022

% ready for submission
% \usepackage{neurips_2022}

%%%%%%%%%%%%%%%%%%%%%%%
\usepackage{amsmath}
\usepackage{amssymb}
\usepackage{mathtools}
\usepackage{amsthm}
%%%%%%%%%%%%%%%%%%%%%%%
%%%%%%%%%%%%%%%%%%%%%%%%%%%%%%%%
% THEOREMS
%%%%%%%%%%%%%%%%%%%%%%%%%%%%%%%%
\theoremstyle{plain}
\newtheorem{theorem}{Theorem}[section]

\newtheorem{lemma}[theorem]{Lemma}
\newtheorem{corollary}[theorem]{Corollary}
\theoremstyle{definition}

\newtheorem{assumption}[theorem]{Assumption}
\newtheorem{remark}[theorem]{Remark}
\newtheorem*{main result 1}{Main Theorem 1}
\newtheorem*{main result 2}{Main Theorem 2}
\allowdisplaybreaks[4]

%%%%%%%%%%%%%%%%%%%%%%%

% to compile a preprint version, e.g., for submission to arXiv, add add the
% [preprint] option:
    % \usepackage[preprint]{neurips_2022}

% to compile a camera-ready version, add the [final] option, e.g.:
    \usepackage[final]{neurips_2022}

% to avoid loading the natbib package, add option nonatbib:
%    \usepackage[nonatbib]{neurips_2022}

\usepackage[utf8]{inputenc} % allow utf-8 input
\usepackage[T1]{fontenc}    % use 8-bit T1 fonts
\usepackage{hyperref}       % hyperlinks
\usepackage{url}            % simple URL typesetting
\usepackage{booktabs}       % professional-quality tables
\usepackage{amsfonts}       % blackboard math symbols
\usepackage{nicefrac}       % compact symbols for 1/2, etc.
\usepackage{microtype}      % microtypography
\usepackage{xcolor}         % colors

%%%%%%%%%%%%%%%%%%%%%%
\hypersetup{colorlinks=true,linkcolor=blue,linktocpage=false,citebordercolor=blue,citecolor=blue,anchorcolor=blue}

\usepackage{subfigure}
\usepackage{etoc}
%%%%%%%%%%%%%%%%%%%%%%%

\title{Early Stage Convergence and Global Convergence of Training Mildly Parameterized Neural Networks}

%\title{Partial and Global Convergence of Training Mildly Over-parameterized Neural Networks}

% The \author macro works with any number of authors. There are two commands
% used to separate the names and addresses of multiple authors: \And and \AND.
%
% Using \And between authors leaves it to LaTeX to determine where to break the
% lines. Using \AND forces a line break at that point. So, if LaTeX puts 3 of 4
% authors names on the first line, and the last on the second line, try using
% \AND instead of \And before the third author name.

\author{%
%   David S.~Hippocampus\thanks{Use footnote for providing further information
%     about author (webpage, alternative address)---\emph{not} for acknowledging
%     funding agencies.} \\
%   Department of Computer Science\\
%   Cranberry-Lemon University\\
%   Pittsburgh, PA 15213 \\
%   \texttt{hippo@cs.cranberry-lemon.edu} \\
% %   examples of more authors
% \And
  Mingze Wang \\
  School of Mathematical Sciences \\
  Peking University\\
  Beijing, 100081, P.R. China \\
  \texttt{mingzewang@stu.pku.edu.cn} \\
  \And
  Chao Ma\\
  Department of Mathematics \\
  Stanford University\\
  Stanford, CA 94305 \\
  \texttt{chaoma@stanford.edu} \\
%   \AND
%   Coauthor \\
%   Affiliation \\
%   Address \\
%   \texttt{email} \\
%   \And
%   Coauthor \\
%   Affiliation \\
%   Address \\
%   \texttt{email} \\
}

% \author{%
%   David S.~Hippocampus\thanks{Use footnote for providing further information
%     about author (webpage, alternative address)---\emph{not} for acknowledging
%     funding agencies.} \\
%   Department of Computer Science\\
%   Cranberry-Lemon University\\
%   Pittsburgh, PA 15213 \\
%   \texttt{hippo@cs.cranberry-lemon.edu} \\
%   % examples of more authors
%   % \And
%   % Coauthor \\
%   % Affiliation \\
%   % Address \\
%   % \texttt{email} \\
%   % \AND
%   % Coauthor \\
%   % Affiliation \\
%   % Address \\
%   % \texttt{email} \\
%   % \And
%   % Coauthor \\
%   % Affiliation \\
%   % Address \\
%   % \texttt{email} \\
%   % \And
%   % Coauthor \\
%   % Affiliation \\
%   % Address \\
%   % \texttt{email} \\
% }

\begin{document}

\maketitle

\begin{abstract}
The convergence of GD and SGD when training mildly parameterized neural networks starting from random initialization is studied. For a broad range of models and loss functions, including the most commonly used square loss and cross entropy loss, we prove an ``early stage convergence'' result. We show that the loss is decreased by a significant amount in the early stage of the training, and this decrease is fast. Furthurmore, for exponential type loss functions, and under some assumptions on the training data, we show global convergence of GD. Instead of relying on extreme over-parameterization, our study is based on a microscopic analysis of the activation patterns for the neurons, which helps us derive more powerful lower bounds for the gradient. The results on activation patterns, which we call ``neuron partition'', help build intuitions for understanding the behavior of neural networks' training dynamics, and may be of independent interest. 
\end{abstract}

\section{Introduction}
As deep learning shows its capability in various fields of applications, extensive researches are done to theoretically understand and explain its great success. Among the topics covered by these theoretical studies, the optimization of deep neural networks is one of the most crucial one, especially given the fact that simple optimization algorithms such as Gradient Descent (GD) and Stochastic Gradient Descent (SGD) can easily achieve zero training loss \citep{zhang2017understanding}, although the loss landscape of training neural networks is highly non-convex \citep{li2018visualizing, sun2020global}. 
Existing works attempting to answer the surprising convergence ability usually work in settings that do not align well with realistic practices. For example, the notable Neural Tangent Kernel (NTK) theory~\citep{jacot2018neural, du2018gradient, du2019gradient, allen2019convergence, zou2018stochastic} shows the global convergence of SGD in highly over-parameterized regime, in which the loss landscape is approximated by a quadratic function. In practice, though, the loss landscape is indeed highly non-convex with spurious local minima and saddle points~\citep{safran2018spurious, safran2021effects}, and the dynamics of neurons show nonlinear behaviors \citep{ma2020quenching}.

% A large number of classical convergence analysis focus on the late stage of training. For neural networks' optimization beyond the NTK regime, many works focus on the period of training after the loss is smaller enough~\citep{lyu2019gradient, chatterji2021doesA, zhou2021local}. However, the early stage is at least as important as the late stage, especially for non-convex problems. 

Nevertheless, fast decreasing of the loss value always happens when the network trained is not highly over-parameterized, at least in the early stage of training. It is common that the loss experiences a drastic decreasing at the beginning of the training. In many cases, this decrease of loss even continues until the loss achieves zero, i.e. the optimization algorithm fully converges. In this paper, we study the early stage or full convergence of GD and SGD when under-parameterized or mildly over-parameterized models are trained. Specifically, we answer the following two theoretical questions:
%When we train mildly over-parameterized neural networks, it is a common phenomenon that the training loss descends fast and significantly in the early stage. In this paper, We call it \textit{fast training phenomenon in the early stage}. Moreover, this phenomenon occurs inevitable and robust under different loss functions, algorithm hyper-parameters and network structures. Besides, in lots of tasks, the training loss can still converge to zero during the training process, although the loss landscape is highly non-convex.
%So we have the following natural theoretical questions:
\begin{gather*}
\textit{When we train practical-size neural networks by GD or SGD,}
\\
\textit{1. Does the fast convergence in the early stage of the training provably exist? If so, how}
\\\textit{long will the phenomenon last and how much will loss descend in the early stage?}
\\\textit{2. Can the global convergence be proved under some special conditions}
\\\textit{on loss function and training data?}
\end{gather*}

Our answers to the questions above are roughly summarized in the following main theorems:

\begin{main result 1}[Informal statement of Theorem \ref{thm: binary quadratic} and \ref{thm: one-hot}]\textit{For mildly over-parameterized or under-parameterized two-layer neural networks with quadratic loss or general classification loss (Assumption \ref{def: general loss}), let parameters be trained by GD or SGD started with random initialization.
If the learning rate $\eta\leq 0.01$, then
in the first $\Theta(1/\eta)$ iterations, the loss will descend $\Omega(1)$.}
\end{main result 1}

\begin{main result 2}[Informal statement of Theorem \ref{thm: binary global GD} and \ref{thm: binary global GD linear}]\textit{For mildly over-parameterized or under-parameterized two-layer neural networks with exponential-type loss (Assumption \ref{def: exponential type loss}), let parameters be trained by GD with random initialization and proper learning rate. If data is well-separated (satisfies Assumption \ref{ass: data separation}), the loss converges to $0$ at exponential rate or arbitrary polynomial rate, depending on the conditions.}
\end{main result 2}

In the first main theorem, we demonstrate that the fast convergence in the early stage of training happens under weak conditions. The model does not need to be highly over-parameterized, and the loss function can take quite general forms.
In the second main theorem, we further prove the full convergence of GD for exponential-type loss and well-separated dataset. These assumptions on the loss function and training data are close to practice and widely picked by previous theoretical studies~~\citep{ji2018gradient, ji2020directional, phuong2020inductive, chatterji2021doesA, chatterji2021doesB, lyu2019gradient, lyu2021gradient}. ``Full convergence'' means that our analysis covers all the stages of the training process, starting from initialization to the convergence to zero loss. This is different from existing works which focus on the convergence process after the training accuracy hits 100\%~\citep{lyu2019gradient, ji2020directional, chatterji2021doesA}.
Moreover, we provide the convergence rate of GD for two classes of neural networks.

During the theoretical analysis, we study the dynamics of neurons in detail and capture the effect of each sample on each neuron. We call our results ``neuron partition'', which provides a accurate description to the behavior of neurons. With the neuron partition results, we derive a novel gradient lower bound for mildly parameterized neural networks, which is vital for the convergence results. 
The neuron partition also provides rich intuition for how the convergence happens. When GD or SGD is used with small initialization, the network is initially close to the saddle point at $0$, which might be hard for GD to escape. However, neurons will adjust their directions rapidly and enter a good region which contains neither spurious local minima nor saddle points. Then, neurons will keep moving through the right directions for $\Theta(1/\eta)$ iterations, during which the loss descends quickly and significantly. For exponential-type loss and well-separated training data, the second stage will proceed uninterruptedly until training is stopped.

%Our analysis also indicates the structure of loss landscape at different stages. If we use GD or SGD, at the initial stage of training, the loss landscape may be complicated, but neurons will adjust their directions rapidly and enter a good region which contains neither spurious local minima nor saddle points. Then neurons will keep the right directions towards good landscape, so loss will descend fast and significantly during $\Theta(1/\eta)$ iterations. Moreover, for exponential-type loss and well-separation data distribution, the second stage will proceed uninterruptedly until training is stopped.

% \textbf{Comparison with other optimization theories beyond NTK.} So far, optimization theories for practical mildly over-parameterized neural networks are still scarce. Some new results focus on the local convergence at the late stage of training, i.e. when training loss is small enough \citep{lyu2019gradient, ji2020directional, chatterji2021doesA, chatterji2021doesB, zhou2021local}. However, our results explore the optimization theory in the early stage of training, and prove the common fast training phenomenon in the early stage when we train practical neural networks.

\section{Related Work}
Due to the highly non-convex loss landscape \citep{li2018visualizing, sun2020global}, classical optimization theories such as convex optimization fail to characterize the convergence of training neural networks. Researchers have proposed theories specifically for neural networks. We list some below.

A popular line of works focus on the highly over-parameterized neural networks and derive the ``Neural tanget kernel (NTK)'' theory. In the NTK regime \citep{jacot2018neural, du2018gradient, du2019gradient, allen2019learning, allen2019convergence, arora2019fine, arora2019exact, daniely2017sgd, zou2018stochastic, li2018learning, chizat2019lazy, weinan2019analysis, weinan2020comparative}, the neural network model is close to a kernel method, leading to the nearly convex optimization landscape. Global convergence is proven in this regime. Besides, for highly over-parameterized neural networks, the mean-field approach \citep{chizat2018global, mei2018mean, mei2019mean} is another line, which analyze the training dynamics by Wasserstein gradient flow.

However,
the NTK regime is different from real practical neural networks in several aspects. 
First, practical neural networks are often mildly over-parameterized rather than highly over-parameterized \citep{livni2014computational}. Second, the landscape for practical neural networks is usually more complicated, containing local minima and saddle points \citep{safran2018spurious, safran2021effects, ding2019sub}.
Finally, the empirical superiority of neural networks over kernel methods is obvious. For example, neural networks can learn single neuron efficiently \citep{yehudai2020learning, wu2022learning}, while kernel methods fail unless the network size is exponentially large with respect to the input dimension \citep{yehudai2019power}. 
%Therefore, it is necessary to develop optimization theories for mildly over-parameterized.

Furthermore, optimization theories for neural networks beyond the NTK regime has also been studied. \cite{safran2018spurious, ding2019sub} proved that spurious local minima are common in the loss landscape. \cite{safran2021effects} pointed out that neither one-point convexity nor Polyak-Łojasiewicz condition holds near the global minimum. In \citep{lyu2019gradient, ji2020directional, chatterji2021doesA, chatterji2021doesB, zhou2021local}, local convergence results are given with cross-entropy loss or some special networks at the late stage of training, i.e. when training loss is small enough. In \citep{li2020learning, phuong2020inductive, lyu2021gradient, jentzen2021proof, cheridito2022proof}, global convergence results are given for some special networks, special data distribution or special target functions. 
%However, optimization theories for mildly over-parameterized neural networks are still scarce.

\section{Preliminaries}
\label{section: pre}

\subsection{Notations}
We use bold letters for vectors or matrices and lowercase letters for scalars, e.g. $\boldsymbol{x}=(x_1,\cdots,x_d)^\top\in\mathbb{R}^d$ and $\mathbf{P}=(P_{i j})_{m_1\times m_2}\in\mathbb{R}^{m_1\times m_2}$.
We use $\left<\cdot,\cdot\right>$ for the standard Euclidean inner product between
two vectors, and $\left\|\cdot\right\|$ for the $l_2$ norm of a vector or the spectral norm of a matrix. 
We use $a\lesssim b$ to indicate that there exists an absolute constant $c>0$ such that $a\leq cb$, and $a\gtrsim b$ is similarly defined.
We use standard progressive representation $\mathcal{O},\Omega, \Theta$ to hide absolute constants.
For any positive integer $n$, let $[n]=\{1,\cdots,n\}$. Denote by $\mathcal{N}(\boldsymbol{\mu},\mathbf{\Sigma})$ the Gaussian distribution with mean $\boldsymbol{\mu}$ and covariance matrix $\mathbf{\Sigma}$, $\mathbb{U}(S)$ the uniform distribution on a set $S$. Denote by $\mathbb{I}\{E\}$ the indicator function for an event $E$. For a square matrix $P$, we use $\lambda_{\min}(\mathbf{P})$ to denote its smallest singular value.

\subsection{Problem settings}
In this paper, we consider supervised learning problems. Let $\mathcal{P}$ be a data distribution on $\mathbb{R}^d\times\mathbb{R}^C$.
We are given $n$ training data $\mathcal{S}=\{(\boldsymbol{x}_i,\boldsymbol{y}_i)\}_{i=1}^n\subset\mathbb{R}^d\times\mathbb{R}^C$ drawn i.i.d. from $\mathcal{P}$. Without loss of generality, we assume $\left\|\boldsymbol{x}\right\|\leq1$ for any $(\boldsymbol{x},\boldsymbol{y})$ sampled from $\mathcal{P}$.

We consider the empirical risk minimization (ERM) problem, which tries to minimize the empirical risk $\mathcal{L}(\cdot)$ with loss function $\ell(\cdot,\cdot)$:
\begin{equation}\label{equ: problem ERM}
        \mathcal{L}(\boldsymbol{\theta})=\frac{1}{n}\sum_{i=1}^n \ell(\boldsymbol{y}_i,\boldsymbol{f}(\boldsymbol{x}_i;\boldsymbol{\theta})),
\end{equation}
where $\boldsymbol{f}(\boldsymbol{x};\boldsymbol{\theta})\in\mathbb{R}^C$ is the model and $\boldsymbol{\theta}$ represents all parameters of the model. We will give specific forms to the loss function and the model in the analysis in later sections.

We use Gradient Descent (GD) or Stochastic Gradient Descent (SGD) starting from random initialization to solve the ERM problem. The update rules of GD or batch SGD can be written as:
\begin{equation}\label{equ: alg GD}
    {\rm \textbf{GD}}:\quad\boldsymbol{\theta}(t+1)=\boldsymbol{\theta}(t)-\eta_t\nabla \mathcal{L}(\boldsymbol{\theta}(t)),
\end{equation}
\begin{equation}\label{equ: alg SGD}
    {\rm \textbf{SGD}}:\quad\boldsymbol{\theta}(t+1)=\boldsymbol{\theta}(t)-\frac{\eta_t}{B}\sum_{i\in\mathcal{B}_t}\nabla\ell(\boldsymbol{y}_i,f(\boldsymbol{x}_i;\boldsymbol{\theta}(t))),
\end{equation}
where $\mathcal{B}_t=\{\gamma_{t,1},\cdots,\gamma_{t,B}\}$ is a batch, and $\gamma_{t,1},\cdots,\gamma_{t,B}\overset{\text{i.i.d.}}{\sim}\mathbb{U}([n])$ and are independent with $\boldsymbol{\theta}(t)$. The random initialization will be specified in each theorem.

\section{Early Stage Convergence}
\label{section: early stage}

In this section, we state and discuss our main results on the fast convergence in the early stage of training. The models we focus on are mildly over-parameterized or under-parameterized neural networks. As a warm up, in Section \ref{section: thm1}, we study a simple case with binary classification problems and quadratic loss. Then, in Section \ref{section: thm2}, we extend our results to multi-class classification problems with general loss and one-hot labels. We provide discussions to our results in Section~\ref{section: main results discuss}.

\subsection{Binary classification with quadratic loss}\label{section: thm1}

In this subsection, we study the ERM problem (\ref{equ: problem ERM}) with quadratic loss $\ell(y_1,y_2)=\frac{1}{2}(y_1-y_2)^2$ and the two-layer ReLU neural network model without bias:
\begin{equation}\label{equ: model 2NN nobias}
f(\boldsymbol{x};\boldsymbol{\theta})=\sum_{k=1}^m a_k\sigma(\boldsymbol{b}_k^\top\boldsymbol{x}),
\end{equation}
where $\sigma(z)=\text{ReLU}(z)$, $\boldsymbol{\theta}=(a_1,\cdots,a_m,\boldsymbol{b}_1^\top,\cdots,\boldsymbol{b}_m^\top)^\top\in\mathbb{R}^{d m+m}$. We consider the following random initialization
\begin{equation}\label{equ: random initialization}
    \begin{gathered}
        a_k(0)\overset{\text{i.i.d.}}{\sim} \mathbb{U}(\pm{1}/{\sqrt{m}}),\quad \boldsymbol{b}_k(0)\overset{\text{i.i.d.}}{\sim}\mathcal{N}(\mathbf{0},\frac{\kappa^2}{md}\mathbf{I}_d),\quad\text{for\ } k\in[n],
    \end{gathered}
\end{equation}
where $\kappa$ is a constant that controls the initial scale of the first layer.

We focus on the training data with two classes with good separation given by the following assumption.

\begin{assumption}\label{ass: data separation}
(i)
$n$ is even. $y_i=1$ for $i\in[n/2]$; $y_i=-1$ for $i\in[n]-[n/2]$. $\boldsymbol{x}_i^\top\boldsymbol{x}_j\geq 0$  for $i,j$ in the same class; $\boldsymbol{x}_i^\top\boldsymbol{x}_j\leq 0$  for $i,j$ in different classes. 
(ii) There exists a constant $\mu_0>0$, s.t. $\min\limits_{\{i\in[n],\boldsymbol{v}\in\mathbb{R}^d|\boldsymbol{v}^\top\boldsymbol{x}_i\leq0\}}\max\limits_{\{j\in[n]|\boldsymbol{v}^\top\boldsymbol{x}_j>0\}}y_i\boldsymbol{x}_i^\top\boldsymbol{x}_j y_j\geq\mu_0$.
\end{assumption}

For Assumption \ref{ass: data separation} (i), similar assumption has been used in prior works~\citep{phuong2020inductive}.
For image classification problems, this assumption can be ensured by a simple transformation on the data due to the non-negative pixels of images. Specifically, given an image dataset consisted of two classes, we just need to replace $\boldsymbol{x}$ with $-\boldsymbol{x}$ for any $(\boldsymbol{x},y)$ in the second class. Assumption 4.1 (ii) is a technical assumption used in the analysis. It is not a strong addition over Assumption 4.1 (i). For example, if there exists a $(x_0,y_0)$ in the dataset $S$ such that $(-x_0,-y_0)$ is also in $S$, then the dataset $S$ satisfies Assumption 4.1 (ii) with $\mu_0=1$ (the upper bound of $\mu_0$ is $1$). Admittedly, this technical assumptions is a limitation of our theory---it restricts its applicability. We will search for the relaxation of this assumption in future works. 
%Admittedly, this assumption is important to some of our results and a bit strong. However, a complete analysis under such a relatively strong assumption is also of great interest to understand the neural networks' training dynamics. In future work, we will work to relax this assumption conditionally, such as linearly separable data. 

% {\red Furthermore, Assumption \ref{ass: data separation} (ii) is a technical assumption to ensure nice ReLU Hessian estimates. For example, if there exists $(\boldsymbol{x}_0,y_0)\in\mathcal{S}$, s.t. $(-\boldsymbol{x}_0,-y_0)\in\mathcal{S}$, then $\mathcal{S}$ satisfies Assumption \ref{ass: data separation} (ii) with $\mu_0=1$.}

Now we state the following result.
\begin{theorem}\label{thm: binary quadratic}
Suppose Assumption \ref{ass: data separation} holds. Let $\boldsymbol{\theta}(t)$ be the parameters of model \eqref{equ: model 2NN nobias} trained by Gradient Descent \eqref{equ: alg GD} with quadratic loss. If the width $m={\Omega}(\log(n/\delta))$, the input dimension $d=\Omega(\log m)$, the learning rate $\eta_t=\eta\leq0.01$ and the initialization scale $\kappa=\mathcal{O}(\eta\mu_0/n)$ in \eqref{equ: random initialization}, then, with probability at least $1-\delta-2me^{-2d}$, the loss will descend ${\Omega}(1)$ in $T=\Theta(\frac{1}{\eta})$ iterations.
\end{theorem}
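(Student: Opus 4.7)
The plan is to split the analysis into (i) a short \emph{alignment stage} in which the very small initialization is dominated by the first gradient step and each neuron acquires a direction favorable to one of the two classes, and (ii) a \emph{sustained descent stage} of length $\Theta(1/\eta)$ in which a stable ``neuron partition'' supplies a uniform gradient lower bound that forces an $\Omega(1)$ total decrease of the loss.

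\textbf{Alignment stage.} Because $\kappa=\mathcal{O}(\eta\mu_0/n)$, at initialization every $\|b_k(0)\|$ is of order $\kappa$ (up to a $d$-dependent concentration factor), so $f(x_i;\theta(0))=\mathcal{O}(\kappa)$ and the residual is essentially $-y_i$. The first-step hidden-weight gradient therefore reduces to
\[
\nabla_{b_k}\mathcal{L}(0)\;\approx\;-\frac{a_k}{n}\sum_{i=1}^n y_i\,\sigma'(b_k(0)^\top x_i)\,x_i.
\]
Since $b_k(0)$ is isotropic Gaussian, the activation pattern $\{i:b_k(0)^\top x_i>0\}$ is distributed like a random half-space partition of $\{x_i\}$, and standard concentration (using $m=\Omega(\log(n/\delta))$ and $d=\Omega(\log m)$) guarantees that with probability $\geq 1-\delta-2me^{-2d}$, a constant fraction of neurons of each sign of $a_k$ produce an update $-\eta\nabla_{b_k}\mathcal{L}(0)$ of magnitude $\Theta(\eta/\sqrt m)$ that is correlated with the class $\mathrm{sign}(a_k)$ by at least $\Omega(\mu_0)$. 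Because $\kappa\ll\eta/\sqrt m$, after this single step $b_k(1)$ is dominated by the update, so the network enters a structured configuration in which each well-aligned neuron is a ``class-specialist''.

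\textbf{Sustained descent stage and gradient lower bound.} I track the ``type'' of every neuron -- the pair consisting of $\mathrm{sign}(a_k)$ and the activation set $A_k(t)=\{i:b_k(t)^\top x_i>0\}$ -- and establish a gradient lower bound that depends only on this partition and the separation constant $\mu_0$. Writing $\nabla_{b_k}\mathcal{L}(t)$ as the residual-weighted sum $\tfrac{a_k}{n}\sum_{i\in A_k(t)}(f(x_i;\theta(t))-y_i)\,x_i$, Assumption \ref{ass: data separation}(i) (same-class inner products nonnegative, cross-class nonpositive) shows that same-class contributions reinforce each other while cross-class contributions do not cancel them, yielding a per-neuron signal of order $\mu_0/\sqrt m$. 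Summing over the $\Theta(m)$ well-aligned neurons yields $\|\nabla\mathcal{L}(t)\|^2\gtrsim \mu_0$ as long as $\mathcal{L}(t)$ has not yet dropped by a constant; piecewise-quadratic smoothness of $\mathcal{L}$ away from ReLU kinks then gives $\mathcal{L}(t+1)\leq\mathcal{L}(t)-\tfrac{\eta}{2}\|\nabla\mathcal{L}(t)\|^2+\mathcal{O}(\eta^2)$, and telescoping over $T=\Theta(1/\eta)$ iterations delivers the claimed $\Omega(1)$ decrease.

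\textbf{Main obstacle.} The hard part is the \emph{stability of the neuron partition} across $\Theta(1/\eta)$ iterations: as $b_k(t)$ drifts, $A_k(t)$ can change, and I must argue that for a constant fraction of neurons every sign flip is benign -- for instance, $A_k$ only gains same-class samples or loses cross-class ones. The lever here is again Assumption \ref{ass: data separation}: the inner-product structure forces the $b_k$-gradient to point into the ``correct'' half-space, so $b_k(t)^\top x_i$ is monotone non-decreasing in $t$ for same-class $i$ and monotone non-increasing for cross-class $i$, the behavior at index $i=k$ being controlled by Assumption \ref{ass: data separation}(ii). One then needs a careful inductive invariant that simultaneously upper-bounds $\|b_k(t)\|$ (so the first-order expansion of $\mathcal{L}$ remains valid), lower-bounds the number of well-aligned neurons, and absorbs borderline samples whose activation status is sensitive to perturbation; the probabilistic bookkeeping fits within the $\delta+2me^{-2d}$ failure budget, but the combinatorial tracking of all neuron types over $1/\eta$ steps is precisely what the ``neuron partition'' apparatus is designed to supply.
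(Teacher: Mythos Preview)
Your two-stage skeleton (first-step alignment, then a stable neuron partition sustaining a gradient lower bound for $\Theta(1/\eta)$ steps) is exactly the paper's route; the monotonicity you describe for $b_k(t)^\top x_i$ is precisely Lemma~\ref{lemma: correct neurons remain correct 1}. Two places where your sketch diverges from what actually makes the argument close are worth flagging.

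First, $\mu_0$ is \emph{not} the constant in the gradient lower bound. In the paper $\mu_0$ from Assumption~\ref{ass: data separation}(ii) is used only to force the first-step transitions $\mathcal{TD}_i(0)\subset\mathcal{TL}_i(1)$ and $\mathcal{FL}_i(0)\subset\mathcal{FD}_i(1)$ (and the persistence $\mathcal{FD}_i(t)\subset\mathcal{FD}_i(t+1)$); it never reappears. The gradient bound (Lemma~\ref{lemma: gradient lower bound 1}) instead goes through the Gram matrix $G_{ij}(t)=\langle\nabla f(x_i),\nabla f(x_j)\rangle$: after alignment every neuron is active for exactly one class, so $G(t)$ is \emph{exactly} block-diagonal, and within each block the entries are $\gtrsim x_i^\top x_j\big(\tfrac{\pi-\arccos(x_i^\top x_j)}{\pi}-\mathcal{O}(\sqrt{\log(n/\delta)/m})\big)\geq 0$. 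Since the residuals $e_i=f(x_i)-y_i$ share a sign within each class, $\|\nabla\mathcal{L}\|^2=\tfrac{1}{n^2}e^\top Ge$ is bounded below by a data-concentration constant $\gamma_1$ (defined in \eqref{equ: gamma definition}), not by $\mu_0$. Your per-neuron accounting can be made to work, but it will produce $\gamma_1$, and the block-diagonality of $G$ is the clean reason cross-class interference vanishes.

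Second, the hitting time is not ``until $\mathcal{L}$ drops by a constant'' but is defined via $\max_i|f(x_i;\theta(s))|\leq 1$ and $a_k(s)a_k(0)>0$; the paper controls it by coupling $(|a_k(t)|,\|b_k(t)\|)$ to the explicit linear recursion $\alpha^{t+1}=\alpha^t+2\eta\beta^t$, $\beta^{t+1}=\beta^t+2\eta\alpha^t$, giving closed-form bounds $\sim(1\pm 2\eta)^t/\sqrt m$ and hence $T\geq\lfloor(\log 6)/(4\eta)\rfloor$ (Lemmas~\ref{lemma: speed separation 1}--\ref{lemma: hitting time estimate 1}). This exponential coupling simultaneously delivers the parameter-norm bounds you need for the Hessian estimate, so it is the concrete form of the ``inductive invariant'' you allude to.
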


From our proof in Appendix \ref{appendix: proof thm1}, we have the following corollary with fixed numbers about $\eta,\kappa,m$ instead of progressive expressions.
\begin{corollary}
Suppose Assumption \ref{ass: data separation} holds. Let $\boldsymbol{\theta}(t)$ be the parameters of model \eqref{equ: model 2NN nobias} trained by Gradient Descent \eqref{equ: alg GD} with quadratic loss. Let $\eta=\boldsymbol{0.01}$, $
    \kappa\leq\min\{\eta/{2000}, {\eta\mu_0}/{3n}\}$, $m\geq\max\{ 144\log(2n^2/\delta),4\}$, and $\gamma_1,\gamma_2=\Theta(1)$ be constants related with the data distribution defined in \eqref{equ: gamma definition} in Appendix \ref{appendix: proof thm1}. Then, with probability at least $1-\delta-2me^{-2d}$, in $T=\boldsymbol{44}$ iterations, loss will descend at least $0.193(\gamma_1-\gamma_2\sqrt{8\log(n^2/\delta)/m})- 0.0111=\Omega(1-\sqrt{\log(n/\delta)/m})$.
\end{corollary}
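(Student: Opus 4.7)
The plan is to decompose the $T = \Theta(1/\eta)$ iterations into a single ``escape'' step from the near-zero initialization that installs a clean activation structure (the \emph{neuron partition}), followed by $\Theta(1/\eta)$ iterations during which this partition is preserved and the dynamics of $f(x_i;\theta(t))$ becomes effectively linear on the correctly signed half-spaces, yielding an $\Omega(\eta)$ loss decrease per step and $\Omega(1)$ in total.

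First I would control the initialization: a chi-square tail bound gives $\|b_k(0)\| \lesssim \kappa/\sqrt{m}$ uniformly in $k$ with probability at least $1 - 2me^{-2d}$, so $|f(x_i;\theta(0))| = \mathcal{O}(\kappa)$ and the initial residual is $y_i + \mathcal{O}(\kappa)$. Writing the first gradient-descent update as $b_k(1) = b_k(0) + \eta a_k v_k$ with $v_k = \frac{1}{n}\sum_i (y_i + \mathcal{O}(\kappa))\,\mathbb{I}\{b_k(0)^\top x_i > 0\}\, x_i$, the key algebraic identity is $y_j v_k^\top x_j = \frac{1}{n}\sum_i y_i y_j\, x_i^\top x_j\, \mathbb{I}\{b_k(0)^\top x_i > 0\} + \mathcal{O}(\kappa)$, which is a sum of pointwise non-negative terms by Assumption~\ref{ass: data separation}(i). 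Applying Assumption~\ref{ass: data separation}(ii) with $v = b_k(0)$ yields $y_j v_k^\top x_j \gtrsim \mu_0/n$ for every $j$ with $b_k(0)^\top x_j \leq 0$. Because $\kappa \ll \eta\mu_0/n$, the update term dominates $b_k(0)^\top x_j$, so $\mathrm{sign}(b_k(1)^\top x_j) = \mathrm{sign}(a_k y_j)$ uniformly over $k,j$ with quantitative gap $\gtrsim \eta\mu_0/(n\sqrt{m})$. This is the neuron partition: at $t=1$ each neuron is active exactly on the samples whose label matches $\mathrm{sign}(a_k)$.

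Next I would propagate the partition by induction over $t = 1, \ldots, T$. In the partitioned regime the network output can be written as $f(x_j;\theta(t)) = y_j\sum_{k:\,a_k y_j > 0} |a_k|\, b_k(t)^\top x_j$, which is linear in the active subset of parameters, so the residual $r_j(t) = y_j - f(x_j;\theta(t))$ retains the sign of $y_j$ and stays bounded above by $1$ throughout the window. A standard descent-lemma computation combined with a Chernoff concentration of the $m$ signs of $a_k$ (needing $m = \Omega(\log(n/\delta))$) then yields $\|\nabla\mathcal{L}(t)\|^2 \gtrsim \gamma_1 - \gamma_2\sqrt{\log(n/\delta)/m}$ as long as $\mathcal{L}(t) = \Omega(1)$, while $\eta \leq 0.01$ keeps the second-order smoothness correction subdominant. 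Summing the per-step drop $\mathcal{L}(t) - \mathcal{L}(t+1) \gtrsim \eta$ over $T = \Theta(1/\eta)$ steps produces the advertised $\Omega(1)$ descent.

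The main obstacle I foresee is the inductive preservation of the activation pattern. At every step the ReLU can in principle flip some $\mathrm{sign}(b_k(t)^\top x_j)$, and unlike in the NTK regime the per-step increment $\|b_k(t+1)-b_k(t)\|$ is not negligible compared to $\|b_k(t)\|$. The quantitative gap $\eta\mu_0/(n\sqrt{m})$ produced at $t=1$ provides just enough slack, but propagating it through $\Theta(1/\eta)$ iterations requires showing that the cumulative drift of each neuron along each $x_j$ only reinforces the partition. This calls for a careful case split on whether a given $(k,j)$ pair is ``deeply'' or ``marginally'' active, together with the full strength of Assumption~\ref{ass: data separation}(ii) to rule out any marginally-active pair crossing zero within the window.
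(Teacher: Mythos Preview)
Your high-level plan matches the paper's: use the first step to install the neuron partition via Assumption~\ref{ass: data separation}(ii), then propagate it and read off a gradient lower bound. But your assessment of where the difficulty lies is inverted.

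The obstacle you flag---preserving the activation pattern for $\Theta(1/\eta)$ steps---does not require a deep/marginal case split or the full strength of Assumption~\ref{ass: data separation}(ii). As long as $|f(x_j;\boldsymbol\theta(t))|\le 1$ for every $j$, the sign of $y_j-f(x_j;\boldsymbol\theta(t))$ equals $y_j$, so in the update
\[
\boldsymbol b_k(t+1)^\top\boldsymbol x_i-\boldsymbol b_k(t)^\top\boldsymbol x_i
=\frac{\eta}{n}\,(y_i a_k(t))\sum_j \mathbb{I}\{\boldsymbol b_k(t)^\top\boldsymbol x_j>0\}\,(y_j-f_j)\,y_i y_j\,\boldsymbol x_i^\top\boldsymbol x_j
\]
every summand is nonnegative by Assumption~\ref{ass: data separation}(i). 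Hence $\boldsymbol b_k(t)^\top\boldsymbol x_i$ moves monotonically in the direction set at $t=1$, and the partition is automatically self-reinforcing (this is the paper's Lemma~\ref{lemma: correct neurons remain correct 1}). Assumption~\ref{ass: data separation}(ii) is only used at $t=0$.

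The actual gap is in your unargued assertion that the residual ``stays bounded above by $1$ throughout the window,'' i.e.\ that $|f(x_j;\boldsymbol\theta(t))|\le 1$ for $\Theta(1/\eta)$ steps. This is circular with the partition preservation above, and both $a_k(t)$ and $\boldsymbol b_k(t)$ are growing. The paper breaks the circularity (Lemma~\ref{lemma: speed separation 1}) by majorizing $(|a_k(t)|,\|\boldsymbol b_k(t)\|)$ with the decoupled scalar recursion $\alpha^{t+1}=\alpha^t+2\eta\beta^t$, $\beta^{t+1}=\beta^t+2\eta\alpha^t$, whose solution is $\Theta((1+2\eta)^t/\sqrt m)$; this yields $|f|\lesssim (1+2\eta)^{2t}-(1-2\eta)^{2t}\le 1$ exactly up to $T^*=\lfloor\log 6/(4\eta)\rfloor=44$. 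Without this quantitative growth control you cannot certify the hitting time, and your gradient lower bound---which also needs the factor $(1-|f_j|)^2$ bounded away from zero, cf.\ Lemma~\ref{lemma: gradient lower bound 1}---is not uniform over the window.
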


Conclusions in Theorem~\ref{thm: binary quadratic} show the meaning of {\bf ``mildly parameterized''} in our title. The network does not need to be over-parameterized, i.e. the number of parameters may be smaller than the number of training data. Hence, we pick the term ``mildly parameterized'' instead of the more widely use ``mildly over-parameterized''. The former is weaker than the latter.

%We take Theorem \ref{thm: binary quadratic} as an example to explain the terminology {\bf ``mildly parameterized''} in our title. Our results hold as long as $m=\Omega(\log(n/\delta))$. In this case, the number of parameters may not exceed the number of data. Specifically, this terminology contains three regimes: under-parameterized ($m<n$), mildly over-parameterized ($m>n$, but $m$, $n$ are of similar order of magnitude), and highly over-parameterized ($m\gg n$, such as standard NTK regime). Hence, we call it ``mildly parameterized''. 

\subsection{Multi-class classification with general loss and one-hot labels}\label{section: thm2}

In this subsection, we extend the early stage convergence results above to more practical settings. Specifically, we consider classification problem with one-hot labels and the general loss functions that satisfy the following conditions. 

\begin{assumption}\label{def: general loss}
The loss function
$\ell(\cdot,\cdot)$ can be expressed as  $\ell(\boldsymbol{y}_1,\boldsymbol{y}_2)=\tilde{\ell}(\boldsymbol{y}_1^\top\boldsymbol{y}_2)$ such that:
(i) $\tilde{\ell}(\cdot)$ is twice differentiable in $\mathbb{R}$;
(ii) $\tilde{\ell}(\cdot)$ is non-negative and non-increasing in $\mathbb{R}$;
(iii) there exist $z_0\in(0,1]$, $g_{\min},g_{\max}>0,h_{\max}\geq0$ such that $g_{\min}\leq-\tilde{\ell}'(z)\leq g_{\max}$ and $0\leq\tilde{\ell}''(z)\leq h_{\max}$ hold for any $z\in[0,z_0]$.
\end{assumption}

It is easy to verify that exponential loss $\ell(\boldsymbol{y}_1,\boldsymbol{y}_2)=e^{-\boldsymbol{y}_1^\top\boldsymbol{y}_2}$ ($z_0=1$, $g_{\min}=\frac{1}{e}$, $g_{\max}=1$, $h_{\max}=1$), logistic loss $\ell(\boldsymbol{y}_1,\boldsymbol{y}_2)=\log(1+e^{-\boldsymbol{y}_1^\top\boldsymbol{y}_2})$ ($z_0=1$, $g_{\min}=\frac{1}{e+1}$, $g_{\max}=\frac{1}{2}$, $h_{\max}=\frac{1}{4}$) and hinge loss $\ell(\boldsymbol{y}_1,\boldsymbol{y}_2)=\max\{0, 1-\boldsymbol{y}_1^\top\boldsymbol{y}_2\}$ ($z_0=1$, $g_{\min}=1$, $g_{\max}=1$, $h_{\max}=0$) all satisfy the conditions in Assumption \ref{def: general loss}.

On the training data, we impose the following assumption, which is milder than Assumption~\ref{ass: data separation}.
\begin{assumption}\label{ass: data concentration} 
There exists $s>-1$ s.t. ${\left<\boldsymbol{x}_i,\boldsymbol{x}_j\right>}\geq s$ for any $i,j\in[n]$.
\end{assumption}

Note that now we do not need any separability of training data, but only require there is no pair of data going into opposite directions. 
(We can even normalize the data and let $\left\|\boldsymbol{x}\right\|<1-\epsilon$ for some $\epsilon>0$, rather than $\left\|\boldsymbol{x}\right\|\leq1$. Then, the above assumption naturally holds.)
Notably, Assumption \ref{ass: data concentration} is more widely applicable than Assumption \ref{ass: data separation} above. It holds for normalized image datasets such as MNIST~\citep{lecun1998gradient} and CIFAR-10~\citep{krizhevsky2009learning}.
%, and even most of random data can also satisfy this assumption.

For the model, we use the two-layer ReLU neural network model with bias as our prediction model:
\begin{equation}\label{equ: model 2NN bias}
\boldsymbol{f}(\boldsymbol{x};\boldsymbol{\theta})=\sum_{k=1}^m \boldsymbol{a}_k\sigma(\boldsymbol{b}_k^\top\boldsymbol{x}+c_k),
\end{equation}
where $\sigma(z)=\text{ReLU}(z)$, $\boldsymbol{a}_k\in\mathbb{R}^C$ and
$\boldsymbol{\theta}=(\boldsymbol{a}_1^\top,\cdots,\boldsymbol{a}_m^\top,\boldsymbol{b}_1^\top,\cdots,\boldsymbol{b}_m^\top,c_1,\cdots,c_m)^\top\in\mathbb{R}^{(d+C+1)m}$. 
We consider random initialization $\boldsymbol{b}_k(0)\overset{\text{i.i.d.}}{\sim}\mathcal{N}(\mathbf{0},\frac{\kappa^2}{m(d+1)}\mathbf{I}_d)$, $c_k(0)=\frac{\kappa}{\sqrt{m(d+1)}}$, and $\boldsymbol{a}_k(0)=(1/\sqrt{m},\cdots,1/\sqrt{m})^\top$ for $k\in[m]$, where $\kappa$ is again a constant that controls the initial scale of the first layer. 

We show the following convergence result:
\begin{theorem}\label{thm: one-hot}
Under Assumption \ref{def: general loss} and \ref{ass: data concentration}, let $\boldsymbol{\theta}(t)$ be the parameters of model \eqref{equ: model 2NN bias} trained by Stochastic Gradient Descent \eqref{equ: alg SGD}. If the width $m={\Omega}(1)$, the input dimension $d=\Omega(\log m)$, the batch size $B=\Omega(\log m)$, the learning rate $\eta_t=\eta\leq 0.01$ and the initialization scale $\kappa=\mathcal{O}(\eta/B)$, then with probability at least $1-\delta-\mathcal{O}(me^{-d})-\mathcal{O}(m0.17^{B})$, the loss will descend ${\Omega}(1)$ in $T=\Theta(\frac{1}{\eta})$ iterations.
\end{theorem}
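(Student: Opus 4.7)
The plan is to track the per-sample margin $u_i(t) := \boldsymbol{y}_i^\top \boldsymbol{f}(\boldsymbol{x}_i;\boldsymbol{\theta}(t))$ and to show that, with high probability, $u_i(t)$ rises from $\mathcal{O}(\kappa)$ at initialization to $\Omega(1)$ after $T=\Theta(1/\eta)$ SGD steps while staying inside $[0,z_0]$. On this interval Assumption~\ref{def: general loss} gives $-\tilde\ell'(u_i)\geq g_{\min}$, so by monotonicity of $\tilde\ell$ the empirical loss $\mathcal{L} = (1/n)\sum_i\tilde\ell(u_i)$ decreases by $\Omega(1)$.

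\textbf{Initialization and activation pattern.} With $c_k(0) = \kappa/\sqrt{m(d+1)} > 0$ and $\boldsymbol{b}_k(0)\sim\mathcal{N}(\mathbf{0},\frac{\kappa^2}{m(d+1)}\mathbf{I}_d)$ a tiny Gaussian of comparable typical magnitude, a standard Gaussian-tail argument combined with $d=\Omega(\log m)$ yields $\boldsymbol{b}_k(0)^\top\boldsymbol{x}_i + c_k(0) > 0$ for every $i\in[n]$, $k\in[m]$ with probability at least $1-\mathcal{O}(me^{-d})$. On this ``all-activated'' event one has $u_i(0) = (1/\sqrt{m})\sum_k(\boldsymbol{b}_k(0)^\top\boldsymbol{x}_i + c_k(0)) \in (0,z_0)$ for the prescribed $\kappa$, so we begin inside the good interval for $\tilde\ell$.

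\textbf{One-step drift of $u_i$.} While every neuron is active on every data point, applying the SGD update separately to $\boldsymbol{a}_k$, $\boldsymbol{b}_k$ and $c_k$ and collecting the three contributions yields
\begin{equation*}
\mathbb{E}[\Delta u_i\mid\boldsymbol{\theta}(t)] = \frac{\eta}{n}\sum_{j=1}^n \bigl(-\tilde\ell'(u_j)\bigr)\Bigl[(1+\boldsymbol{x}_i^\top\boldsymbol{x}_j)\sum_{k=1}^m a_{k,c(i)}a_{k,c(j)} + \mathbb{I}\{c(i)=c(j)\}\sum_{k=1}^m(\boldsymbol{b}_k^\top\boldsymbol{x}_i+c_k)(\boldsymbol{b}_k^\top\boldsymbol{x}_j+c_k)\Bigr],
\end{equation*}
where $c(i)$ denotes the class index of $\boldsymbol{x}_i$. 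At initialization $\sum_k a_{k,c(i)}a_{k,c(j)}=1$ and the second bracketed sum is $\mathcal{O}(\kappa^2)$; Assumption~\ref{ass: data concentration} then lower-bounds the bracket by $(1+s)-\mathcal{O}(\kappa^2) > 0$, giving $\mathbb{E}[\Delta u_i\mid\boldsymbol{\theta}(0)] \geq \tfrac{\eta}{2}g_{\min}(1+s)$.

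\textbf{Persistence, concentration, and conclusion.} The heart of the argument is showing that the nice conditions -- all neurons activated, the coefficient matrix $[\sum_k a_{k,c(i)}a_{k,c(j)}]_{ij}$ close to its initial value, and $u_i\in[0,z_0]$ -- persist for all $t\leq T$. Each stochastic gradient has coordinates of size $\mathcal{O}(1/\sqrt{m})$, so per-step displacements of $\boldsymbol{a}_k,\boldsymbol{b}_k,c_k$ are $\mathcal{O}(\eta/\sqrt{m})$ and the aggregate drift over $T=\Theta(1/\eta)$ steps is $\mathcal{O}(1/\sqrt{m})$; this keeps the coefficient matrix within a constant factor of its initial value and, thanks to the monotonically increasing bias $c_k(t)$, keeps the pre-activations positive. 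For the SGD noise I would apply a Hoeffding/Azuma bound to the martingale $u_i(t) - \sum_{s<t}\mathbb{E}[\Delta u_i\mid\mathcal{F}_s]$, using a per-batch ``bad alignment'' event whose probability per sample is at most a constant $\leq 0.17$ together with $B=\Omega(\log m)$ to obtain the stated failure probability $\mathcal{O}(m\cdot 0.17^B)$. Putting the pieces together, $u_i(T)\geq u_i(0)+\Omega(1)$ on the good event, so $\tilde\ell(u_i(0)) - \tilde\ell(u_i(T)) = \Omega(1)$ and the theorem follows upon averaging over $i$. I expect the \emph{main obstacle} to be this persistence argument: one must simultaneously maintain positivity of pre-activations, near-constancy of the coefficient matrix, and the containment $u_i\in[0,z_0]$ against both systematic drift and SGD noise throughout $\Theta(1/\eta)$ iterations. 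The paper's ``neuron partition'' technology, which extracts a uniform gradient lower bound from the activation pattern rather than from a spectral kernel condition, is what appears to make this invariant maintainable with only a mildly parameterized width $m=\Omega(1)$.
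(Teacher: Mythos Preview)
Your proposal contains a substantive error at the initialization step and, relatedly, misidentifies the mechanism behind the $\mathcal{O}(m\cdot 0.17^B)$ failure probability.

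\textbf{The ``all-activated at $t=0$'' claim is false.} You assert that $\boldsymbol{b}_k(0)^\top\boldsymbol{x}_i + c_k(0) > 0$ for every $(i,k)$ with probability $1-\mathcal{O}(me^{-d})$. But $\boldsymbol{b}_k(0)^\top\boldsymbol{x}_i$ is a one-dimensional Gaussian with standard deviation $\approx c_k(0)\|\boldsymbol{x}_i\|$, so the positivity probability is $\Phi(1/\|\boldsymbol{x}_i\|)\approx 0.84$; there is no high-dimensional concentration here, and the probability does not improve with $d$. The $me^{-d}$ term in the paper controls only the event $\|\boldsymbol{b}_k(0)\|\leq 2\kappa/\sqrt{m}$, not pointwise activation. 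At $t=0$ there are genuinely dead neurons $\mathcal{TD}_i(0)$, and the paper must argue separately that they become live.

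\textbf{Where $0.17^B$ really comes from.} The paper proves (Lemma~\ref{lemma: correct neurons remain correct 2}) that the first SGD step turns every dead neuron into a live one, \emph{provided} the batch $\mathcal{B}_0$ contains at least one sample $j$ with $\boldsymbol{b}_k(0)^\top\boldsymbol{x}_j + c_k(0)>0$; since each $j\in\mathcal{B}_0$ independently fails this with probability at most $0.17$, the failure probability per neuron is $0.17^B$. Thus $m\cdot 0.17^B$ is a one-shot event at $t=0$, not the outcome of a martingale or Azuma bound over $\Theta(1/\eta)$ steps.

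\textbf{The paper uses no stochastic concentration after step $0$.} Once all neurons are live (for $t\geq 1$), the Gram matrix $G_{ij}(t)=\nabla f(\boldsymbol{x}_i)^\top\nabla f(\boldsymbol{x}_j)$ has every entry $\geq 1$ (Lemma~\ref{lemma: positive Gram time t 2}), and since $-\tilde\ell'(u_j)\geq g_{\min}>0$ for all $j$, the inner product $\langle\nabla\mathcal{L}(\boldsymbol{\theta}(t)),\,\tfrac{1}{B}\sum_{j\in\mathcal{B}_t}\nabla\ell_j\rangle$ is a sum of strictly positive terms and is bounded below \emph{deterministically} by $g_{\min}^2$ for \emph{any} batch $\mathcal{B}_t$ (Lemma~\ref{lemma: gradient lower bound 2}). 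The loss descent then follows from a standard descent-lemma step with a Hessian bound along the trajectory. Your martingale route is unnecessary here and, as written, rests on an incorrect initial event; if you repair the first step with the paper's ``dead neurons turn live after one batch'' argument, you would find that the remaining analysis can be made deterministic and considerably simpler than you anticipate.
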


By our analysis in Appendix \ref{appendix: proof thm2}, we have the following corollary which gives specific numbers about $\eta,\kappa,m$ instead of progressive expression.

\begin{corollary}
Under the same assumptions as Theorem~\ref{thm: one-hot}, let $\boldsymbol{\theta}(t)$ be the parameters of the model \eqref{equ: model 2NN bias} trained by SGD \eqref{equ: alg SGD}. Let $\eta=\boldsymbol{0.01}$, $\kappa\leq\min\{\eta/10,\eta/3B\}$, $m\geq6$; $z_0=1$, $g_{\min}=\frac{1}{2}$, $g_{\max}=1$, $h_{\max}=1$ (in Assumption \ref{def: general loss}) and $s=0$ (in Assumption \ref{ass: data concentration}). Then, with probability at least $1-\delta-4me^{-\frac{d+1}{2}}-m0.17^B$, in $T=\boldsymbol{34}$ iterations, loss will descend at least $0.262533=\Omega(1)$.
\end{corollary}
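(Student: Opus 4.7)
The plan is to adapt the ``neuron partition'' philosophy used for Theorem~\ref{thm: binary quadratic} to the SGD / one-hot / general-loss setting of Theorem~\ref{thm: one-hot}, then simply track the numerical constants through every estimate to obtain $T=34$ and descent $\geq 0.262533$. The strategy has two pillars: (i) identify, at initialisation, a partition of $[n]$ associated with each neuron that is invariant under SGD for $T=\Theta(1/\eta)$ iterations, and (ii) inside that invariant regime, in which the piecewise-linear network behaves linearly in its first-layer parameters, derive a per-step loss decrease of order $\eta$ that compounds to $\Omega(1)$.

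First I would analyse the initialisation. Because $c_k(0)=\kappa/\sqrt{m(d+1)}>0$ while each $\boldsymbol{b}_k(0)^\top \boldsymbol{x}_i$ is centred Gaussian with standard deviation at most $\kappa/\sqrt{m(d+1)}$, the probability that neuron $k$ is inactive on sample $i$ is at most $1-\Phi(1)<0.17$. A Chernoff bound over the $B$-sized minibatch (accounting for the $\mathcal{O}(m\cdot 0.17^{B})$ failure term), together with a chi-square tail for $\|\boldsymbol{b}_k(0)\|^2$ (accounting for the $\mathcal{O}(me^{-d})$ term), isolates an event on which the sign of every pre-activation at $t=0$ is known. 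Second, since $\boldsymbol{a}_k(0)=\mathbf{1}/\sqrt m$ and $\boldsymbol{y}_i$ is one-hot, the SGD updates for $\boldsymbol{b}_k$, $c_k$ and $\boldsymbol{a}_k$ admit closed forms inside a fixed activation cell. Assumption~\ref{def: general loss} gives $-\tilde{\ell}'\in[g_{\min},g_{\max}]$ on $[0,z_0]$, so the gradient has a definite sign, while Assumption~\ref{ass: data concentration} with $s\geq 0$ ensures that the resulting $\boldsymbol{b}_k$-update has non-negative inner product with every active sample; hence every active pre-activation (and the bias $c_k$) grows monotonically, which is what keeps the partition intact. Third, a smoothness argument using $h_{\max}$ upgrades the first-order descent inequality into a genuine decrease of $\mathcal{L}$, and summing over $T=\Theta(1/\eta)$ iterations yields the $\Omega(1)$ total descent; the corollary's specific numbers follow by propagating $\eta=0.01$, $\kappa\leq\eta/(3B)$, $m\geq 6$, $s=0$ and $g_{\min}=\tfrac12,g_{\max}=h_{\max}=1$ through each estimate (the factor $0.262533$ is essentially $T\cdot\eta\cdot g_{\min}^{2}$ minus the accumulated $h_{\max}$-smoothness penalty).

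The main obstacle is proving partition invariance under minibatch noise at mild width $m\geq 6$: a single adverse minibatch could in principle tip a borderline pre-activation across zero, and there is no over-parameterisation to absorb such an error. The batch-size requirement $B=\Omega(\log m)$ is used precisely to concentrate the minibatch sum of loss derivatives around its expectation, and the initialisation-scale condition $\kappa=\mathcal{O}(\eta/B)$ is tight: it guarantees that the accumulated displacement of $\boldsymbol{b}_k^\top \boldsymbol{x}_i+c_k$ over a step is dominated by the signed drift identified in the previous paragraph rather than by stochastic noise. A union bound over the $m$ neurons, $n$ samples and $T=\Theta(1/\eta)=O(1)$ iterations then closes the invariance argument, after which the rest of the proof is bookkeeping of constants.
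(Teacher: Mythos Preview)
Your outline is broadly correct and matches the paper's approach, but you misplace the main difficulty. In your second paragraph you already have the key observation: with $s\geq 0$ and $\boldsymbol{a}_k(0)=\mathbf{1}/\sqrt m$, every term in the minibatch update for $\boldsymbol{b}_k^\top\boldsymbol{x}_i+c_k$ carries the same sign, so once a neuron is active on a sample it stays active \emph{deterministically}, irrespective of which minibatch is drawn. This makes the ``adverse minibatch'' scenario in your third paragraph impossible for $t\geq 1$; no concentration of minibatch sums and no union bound over the $T$ iterations are needed, and this is exactly how the paper proceeds (Lemma~\ref{lemma: correct neurons remain correct 2}).

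The only stochastic event is at $t=0$. Here the paper uses the minibatch not for concentration but for \emph{coverage}: since $c_k(0)$ equals one standard deviation of $\boldsymbol{b}_k(0)^\top\boldsymbol{x}_j$, one has $\mathbb{P}(\boldsymbol{b}_k(0)^\top\boldsymbol{x}_j+c_k(0)\leq 0)\leq 1-\Phi(1)<0.17$ for each $j$, so the probability that \emph{no} sample in $\mathcal{B}_0$ activates neuron $k$ is at most $0.17^B$, and a union bound over $k\in[m]$ gives the $m\cdot 0.17^B$ term. The scaling $\kappa\leq\eta/(3B)$ is used in exactly this one place, to ensure that a \emph{single} activating sample in $\mathcal{B}_0$ already contributes drift of order $g_{\min}\eta/(B\sqrt m)$ sufficient to overcome the initial offset of order $2\kappa/\sqrt m$ and flip every true-dead neuron to true-living. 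After that, the dynamics are sign-monotone for every possible minibatch, and the remainder is bookkeeping: Gram-matrix entries $\geq 1$, Hessian norm $\leq 4$, stochastic-gradient norm $\leq\sqrt{11/2}$, yielding per-step descent $(g_{\min}^2-0.11)\eta$ over $T^*-1=33$ steps minus a first-step overshoot of $2.46\eta$. (The specific value $0.262533$ in fact arises from the paper's internal choice $g_{\min}=0.99$, cf.\ \eqref{equ: parameter selection 2}, rather than from $g_{\min}=\tfrac12$ as stated in the corollary.)
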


We remark here that the weaker Assumption \ref{ass: data concentration} in this subsection compared with the Assumption \ref{ass: data separation} before is made possible by the one-hot labels. For one-hot labels, each label component is non-negative. Hence, neural networks can quickly learn a positive bias, allowing the loss to drop significantly.

%We would like to provide some insights into why we only need the weaker Assumption \ref{ass: data concentration} than the last subsection. In the last subsection, the label is $1$ or $-1$, and learning these data representations may be difficult for the neural network (initialized close to $0$). So we need stronger data Assumption \ref{ass: data separation} to analyze neural networks' learning process. However, in this subsection, our labels are one-hot encoded; thus, each label component is non-negative. In this case, neural networks (initialized close to $0$) will quickly learn this bias toward positive, and the loss will drop significantly in the early stage. So we only need to make milder data Assumption \ref{ass: data concentration}. It also verifies the common phenomenon of early stage convergence in practice, especially with one-hot encoding.

\subsection{Discussion}
\label{section: main results discuss}

A large number of classical convergence analysis focus on the late stage of training. For neural networks' optimization, many works focus on the period of training after the loss is smaller enough~\citep{lyu2019gradient, ji2020directional, chatterji2021doesA, zhou2021local}. However, the early stage is at least as important as the late stage, especially for non-convex problems. 

The theoretical results we present in this section demonstrate the fast convergence during the early stage of optimization, i.e. the (short) period of time after the initialization. The two main results, Theorem \ref{thm: binary quadratic} and \ref{thm: one-hot}, imply that if we train two-layer neural networks by GD or SGD with learning rate $\eta\leq0.01$, then the loss will descend significantly ($\Omega(1)$) in the first $T=\Theta(1/\eta)$ iterations.

Our results work under realistic settings.
For models, our results hold for mildly over-parameterized and under-parameterized neural networks,  because we only need the width $m=\Omega(\log (n/\delta))$. For loss functions, our results apply for practical losses, such as quadratic loss and cross entropy loss. On the algorithm side, the results work for both GD (\ref{equ: alg GD}) and SGD (\ref{equ: alg SGD}).

Though both concerning convergence of GD and SGD, our results are essentially different from the NTK theory, especially on the requirement of over-parameterization. Our results only need $m=\Omega(\log(n/\delta))$, while in the NTK analysis usually assumes that $m$ is polynomial in $n,1/\kappa$ or etc, excluding the practical settings. 
%It is clear that our results hold for mildly over-parameterized neural networks, even for some under-parameterized neural networks. 
Moreover, from the analysis side, we establish fine-grained analysis of the training dynamics of each neuron, which may be of independent interested. See Section \ref{section: proof sketch} for a summary of our techniques, and Appendix \ref{appendix: proof thm1} and \ref{appendix: proof thm2} for the detailed proof.
Our analysis helps understand how the convergence in the early stage happens.
When GD or SGD is used with small initialization, the initial network is close to the saddle point at $0$. However, neurons will adjust directions rapidly, and the iterator will enter a good region which contains neither spurious local minima nor saddle points. Then, neurons will keep going towards the right directions for a period of time, during which process the loss will descend fast and significantly.

\section{Global Convergence}
\label{section: global}
When stronger conditions are imposed on the data distribution and the loss function, we can show global convergence of GD, i.e. starting from any initialization, the GD descends the loss to 0.

For loss functions, we consider the following exponential-type classification loss: 
\begin{assumption}\label{def: exponential type loss}
The loss function
$\ell(\cdot,\cdot)$ can be expressed as  $\ell(\boldsymbol{y}_1,\boldsymbol{y}_2)=\tilde{\ell}(\boldsymbol{y}_1^\top\boldsymbol{y}_2)$ such that:
(i) $\tilde{\ell}(\cdot)$ is twice continuously differentiable in $\mathbb{R}$;
(ii) $\tilde{\ell}(\cdot)$ is positive and non-increasing in $\mathbb{R}$;
(iii) there exist $g_b>0,h\geq0$ s.t. $-\frac{\tilde{\ell}'(z)}{\tilde{\ell}(z)}\leq g_b$ and $0\leq\frac{\tilde{\ell}''(z)}{\tilde{\ell}(z)}\leq h$ for any $z\in\mathbb{R}$; (iv) there exists $g_a>0$, s.t. $g_{a}\leq-\frac{\tilde{\ell}'(z)}{\tilde{\ell}(z)}$ for any $z\geq0$.
\end{assumption}
The similar assumption on exponential-type loss has been used in prior works~\citep{lyu2019gradient}.
It is easy to verify that both exponential loss $\ell(\boldsymbol{y}_1,\boldsymbol{y}_2)=e^{-\boldsymbol{y}_1^\top\boldsymbol{y}_2}$ ($g_a=g_b=h=1$) and logistic loss $\ell(\boldsymbol{y}_1,\boldsymbol{y}_2)=\log(1+e^{-\boldsymbol{y}_1^\top\boldsymbol{y}_2})$ ($g_a=\frac{1}{2}$, $g_b=1$, $h=1$)
satisfy the conditions in Assumption \ref{def: exponential type loss}.

Let $V$ be a constant defined as 
\begin{equation}\label{equ: V}
V=\frac{1}{16}\Big(\frac{1}{2}-
\sqrt{\frac{8\log(n^2/\delta)}{m}}\Big)\max\Big\{\frac{2}{n}+\frac{n-2}{n}\gamma,\lambda_{\min}(\mathbf{X}_{+}^\top\mathbf{X}_+)\wedge\lambda_{\min}(\mathbf{X}_{-}^\top\mathbf{X}_-)\Big\},
\end{equation}
where $\mathbf{X}_+:=(\boldsymbol{x}_1,\cdots,\boldsymbol{x}_\frac{n}{2})\in\mathbb{R}^{d\times\frac{n}{2}}$, $\mathbf{X}_-:=(\boldsymbol{x}_{\frac{n}{2}+1},\cdots,\boldsymbol{x}_n)\in\mathbb{R}^{d\times\frac{n}{2}}$, and $\gamma:=\min\limits_{i,j\text{in the same class}}\boldsymbol{x}_i^\top\boldsymbol{x}_j\geq0$. We have the following convergence theorems in which $V$ controls the convergence speed.

%For simplicity, we denote
%$\mathbf{X}_+:=(\boldsymbol{x}_1,\cdots,\boldsymbol{x}_\frac{n}{2})\in\mathbb{R}^{d\times\frac{n}{2}}$, $\mathbf{X}_-:=(\boldsymbol{x}_{\frac{n}{2}+1},\cdots,\boldsymbol{x}_n)\in\mathbb{R}^{d\times\frac{n}{2}}$, $\gamma:=\min\limits_{i,j \text{in the same class}}\boldsymbol{x}_i^\top\boldsymbol{x}_j\geq0$,
%and
%\begin{equation}\label{equ: V}
%V=\frac{1}{32}\Big(\frac{1}{8}-
%\sqrt{\frac{\log(n^2/\delta)}{2m}}\Big)\max\Big\{\frac{2}{n}+\frac{n-2}{n}\gamma,\lambda_{\min}(\mathbf{X}_{+}^\top\mathbf{X}_+)\wedge\lambda_{\min}(\mathbf{X}_{-}^\top\mathbf{X}_-)\Big\}>0.
%\end{equation}

%Now we state our two main results below. {\red check}

\begin{theorem}\label{thm: binary global GD}
Under Assumption \ref{def: exponential type loss} and Assumption \ref{ass: data separation}, let $\boldsymbol{\theta}(t)$ be the parameters of model \eqref{equ: model 2NN nobias} trained by Gradient Descent \eqref{equ: alg GD} starting from random initialization~\eqref{equ: random initialization}. Let the width $m={\Omega}(\log(n/\delta))$, the initialization scale $\kappa=\mathcal{O}(\eta_0{\mu_0}/n)$ in \eqref{equ: random initialization}, $V$ be defined in \eqref{equ: V}, $c$ be a constant in $(0,\frac{1}{6(1+2\eta_0)^2+2}]$, the constant $c'>0$ be sufficiently small, the hitting time $T_0=\lceil(n\log2)^{\frac{2}{Vc}}\rceil$, the parameter $r\in [1,+\infty)$ and the learning rate satisfy
\[
\left \{ \begin{array}{ll}
\eta_0\leq1/2\sqrt{2},& \text{for}\ t=0 \\
\eta_t=\frac{c}{t\mathcal{L}(\boldsymbol{\theta}(t))},& \text{for}\ 1\leq t< T_0 \\
\eta_t=\frac{c'}{\mathcal{L}(\boldsymbol{\theta}(t))^{1-\frac{1}{2r}}}, & \text{for}\ t\geq T_0\\
\end{array}\right..
\]
Then, with probability at least $1-\delta-2me^{-2d}$, GD will converge at \textbf{polynomial} rate:
\[
\left \{ \begin{array}{ll}
\mathcal{L}(\boldsymbol{\theta}(t))\leq\frac{\mathcal{L}(\boldsymbol{\theta}(1))}{t^\frac{V c}{2}},& 1\leq t< T_0 \\
\mathcal{L}(\boldsymbol{\theta}(t))=\mathcal{O}\big(\frac{1}{t^r}\big), & t\geq T_0\\
\end{array}\right..
\]
\end{theorem}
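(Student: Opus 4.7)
The plan is to drive the argument through a Polyak--Łojasiewicz-type inequality $\|\nabla\mathcal{L}(\boldsymbol{\theta}(t))\|^2 \gtrsim V\,\mathcal{L}(\boldsymbol{\theta}(t))^2$ that holds along the whole trajectory, and to pair it with a matching smoothness bound so that the descent lemma can be iterated under the prescribed adaptive step size. The constant $V$ of \eqref{equ: V} has a transparent meaning: it is exactly the gradient-lower-bound constant produced by the neuron partition that already underlies Theorem \ref{thm: binary quadratic}, now specialised to exponential-type losses so that the factor $\tilde\ell'/\tilde\ell$ separates cleanly from the data-dependent piece. Accordingly I would split the proof into a warm-up step at $t=0$ with constant step $\eta_0$, the polynomial regime $1\le t<T_0$, and the accelerated polynomial regime $t\ge T_0$.

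\textbf{Warm-up.} Starting from initialization \eqref{equ: random initialization} with $\kappa=\mathcal{O}(\eta_0\mu_0/n)$, every prediction is $O(\kappa)$ and the single gradient step with $\eta_0$ rotates each neuron by a macroscopic amount. Reusing the neuron-partition argument behind Theorem \ref{thm: binary quadratic}, I would show that with probability at least $1-\delta-2me^{-2d}$ the neurons $\{\boldsymbol{b}_k(1)\}$ split into two groups, each aligned with one of the two classes, in the sense that $\langle\boldsymbol{b}_k(1),\boldsymbol{x}_i\rangle>0$ for $\boldsymbol{x}_i$ in neuron $k$'s class and $\le 0$ for the other class. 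This alignment is precisely what produces the two alternative lower bounds inside the maximum in \eqref{equ: V}: the first comes from the worst-case sample-level alignment through $\gamma$ (and Assumption \ref{ass: data separation}(ii)), and the second from the class-wise Gram matrices $\mathbf{X}_+^\top\mathbf{X}_+$ and $\mathbf{X}_-^\top\mathbf{X}_-$.

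\textbf{Phase I.} For $1\le t<T_0$ I would argue by induction on $t$, simultaneously maintaining (a) persistence of the activation pattern established at $t=1$ on every training point, and (b) the gradient lower bound $\|\nabla\mathcal{L}(\boldsymbol{\theta}(t))\|^2 \ge V\,\mathcal{L}(\boldsymbol{\theta}(t))^2$. Bound (b) comes from combining the aligned neuron partition with Assumption \ref{def: exponential type loss}(iii), which pushes a factor $\tilde\ell(\cdot)^2$ out of the squared gradient and leaves the data-dependent Gram-matrix term that is lower-bounded by $V$. Paired with the smoothness estimate $\|\nabla^2\mathcal{L}\|\le L=\mathcal{O}((1+h)\mathcal{L}(\boldsymbol{\theta}(t)))$ while activations are frozen (the ReLU kink contribution vanishes inside that region), the descent lemma yields
\begin{equation*}
\mathcal{L}(\boldsymbol{\theta}(t+1)) \;\le\; \mathcal{L}(\boldsymbol{\theta}(t)) - \eta_t\bigl(1-\tfrac{L\eta_t}{2}\bigr)\|\nabla\mathcal{L}(\boldsymbol{\theta}(t))\|^2.
\end{equation*}
Plugging $\eta_t=c/(t\,\mathcal{L}(\boldsymbol{\theta}(t)))$ with $c\le 1/(6(1+2\eta_0)^2+2)$ keeps $L\eta_t/2\le 1/2$, which yields the contraction $\mathcal{L}(\boldsymbol{\theta}(t+1))\le(1-Vc/(2t))\mathcal{L}(\boldsymbol{\theta}(t))$ and telescopes to $\mathcal{L}(\boldsymbol{\theta}(t))\le \mathcal{L}(\boldsymbol{\theta}(1))/t^{Vc/2}$. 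The hitting time $T_0=\lceil(n\log 2)^{2/(Vc)}\rceil$ is calibrated so that $\mathcal{L}(\boldsymbol{\theta}(T_0))$ drops below the threshold $\tilde\ell(0)/n$, forcing every margin $y_if(\boldsymbol{x}_i;\boldsymbol{\theta}(T_0))$ to become non-negative.

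\textbf{Phase II and main obstacle.} Once all margins are non-negative, Assumption \ref{def: exponential type loss}(iv) applies at every sample, upgrading the gradient lower bound to $\|\nabla\mathcal{L}(\boldsymbol{\theta}(t))\|^2 \gtrsim g_a^2\,V\,\mathcal{L}(\boldsymbol{\theta}(t))^2$ with a constant independent of $t$. With $\eta_t=c'/\mathcal{L}(\boldsymbol{\theta}(t))^{1-1/(2r)}$ and $c'$ small enough to keep $L\eta_t/2\le 1/2$, the descent lemma now gives a per-step decrement of order $\mathcal{L}(\boldsymbol{\theta}(t))^{1+1/(2r)}$; a standard lemma for scalar sequences satisfying $x_{t+1}\le x_t-c''x_t^{1+1/(2r)}$ then converts this into $\mathcal{L}(\boldsymbol{\theta}(t))=\mathcal{O}(t^{-r})$. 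The hardest step, I expect, is the joint control of the activation pattern and the adaptive step size in Phase I: because $\eta_t\propto 1/\mathcal{L}$ blows up as the loss decays, one must show that the \emph{product} $\eta_t\|\nabla\mathcal{L}(\boldsymbol{\theta}(t))\|$ stays small enough that the per-step displacement of each neuron is $O(1/t)$, whose partial sums up to $T_0$ remain within the angular margin budget established at $t=1$. This delicate cancellation between $V$, $L$, and the schedule is what dictates the explicit bound $c\le 1/(6(1+2\eta_0)^2+2)$, and is where the bulk of the technical work will sit.
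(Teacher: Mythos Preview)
Your high-level architecture is right, but you misidentify the main obstacle and your smoothness estimate is incorrect in a way that matters.

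First, the activation pattern is \emph{not} the delicate part. Once the alignment is achieved at $t=1$, the paper (Lemma~\ref{lemma: correct neurons remain correct stage II}) shows that true-living neurons remain true-living and false-dead neurons remain false-dead for all $t\ge 1$ by a pure sign/monotonicity argument: each term in the update of $\boldsymbol{b}_k(t)^\top\boldsymbol{x}_i$ has the right sign, so $\boldsymbol{b}_k(t)^\top\boldsymbol{x}_i$ is monotone along the segment $\overline{\boldsymbol{b}_k(t)\boldsymbol{b}_k(t+1)}$. No bound on $\eta_t$ is used. A direct consequence (Lemma~\ref{lemma: correct classification}) is that $y_if(\boldsymbol{x}_i;\boldsymbol{\theta}(t))>0$ for \emph{all} $t\ge 1$, so Assumption~\ref{def: exponential type loss}(iv) with $g_a$ is available from the start, not only after $T_0$. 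Your Phase~I lower bound should already use (iv), not (iii).

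The real technical issue you have not addressed is the growth of $\|\boldsymbol{\theta}(t)\|$. Your smoothness bound $L=\mathcal{O}((1+h)\mathcal{L})$ is wrong: the $\mathbf{H}_A$ part of $\nabla^2\mathcal{L}$ carries $\nabla f\nabla f^\top$, whose norm scales with $\|\boldsymbol{\theta}\|^2$, giving $\|\nabla^2\mathcal{L}(\boldsymbol{\theta})\|\le(\|\boldsymbol{\theta}\|^2+1)\mathcal{L}(\boldsymbol{\theta})$ (Lemma~\ref{lemma: Hessian upper bound global GD}). Since the loss is exponential-type, $\|\boldsymbol{\theta}(t)\|\to\infty$ as $\mathcal{L}\to 0$. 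The paper controls this in Phase~I via a norm estimate $\|\boldsymbol{\theta}(t)\|\le\|\boldsymbol{\theta}(1)\|\sqrt{1+2ct/(1-2c)}$ (Lemma~\ref{lemma: norm estimate}), so that $H_t\lesssim t\,\mathcal{L}(\boldsymbol{\theta}(t))$; the $1/t$ in $\eta_t$ is there precisely to cancel this $t$, and the constraint $c\le 1/(6(1+2\eta_0)^2+2)$ arises from bounding $\|\boldsymbol{\theta}(1)\|^2$ and ensuring $\eta_t H_t\le 1$ (Lemma~\ref{lemma: hessian upper bound GD trajectory}). Your explanation of this constant in terms of angular margin budget is not what is happening.

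For Phase~II your approach differs from the paper's. You propose to iterate $x_{t+1}\le x_t-c''x_t^{1+1/(2r)}$ directly; the paper instead invokes Theorem~4.3 of \cite{lyu2019gradient} as a black box to get $\mathcal{L}(\boldsymbol{\theta}(t))=\Theta\bigl(1/(T(t)\log T(t))\bigr)$ with $T(t)=\sum_{s=T_0}^{t-1}\eta_s$, and then runs an inductive bootstrap to upgrade this to $\mathcal{O}(t^{-r})$. Your direct route is plausible, but to make it rigorous you would again need to control $\|\boldsymbol{\theta}(t)\|^2\mathcal{L}(\boldsymbol{\theta}(t))^{1/(2r)}$ along the Phase~II trajectory, which you have not done.
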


\begin{theorem}\label{thm: binary global GD linear}
Under Assumption \ref{def: exponential type loss} and Assumption \ref{ass: data separation}, let $\{\boldsymbol{b}_k(t)\}_{k\in[m]}$ be the input layer parameters of model \eqref{equ: model 2NN nobias}, and we only train this layer by Gradient Descent \eqref{equ: alg GD} starting from random initialization~\eqref{equ: random initialization}. Let the width $m={\Omega}(\log(n/\delta))$, the initialization scale $\kappa=\mathcal{O}(\eta_0\mu_0/n)$ in \eqref{equ: random initialization}, the constant $V$ be defined in \eqref{equ: V}, the constant $c\leq\frac{1}{2}$ and the learning rate satisfy
\[\left \{ \begin{array}{ll}
\eta_0\leq 1/2\sqrt{2},& \text{for}\ t=0 \\
\eta_t=\frac{c}{\mathcal{L}(\boldsymbol{\theta}(t))},& \text{for}\ t\geq1 \\
\end{array}\right..\]

Then with probability at least $1-\delta-2me^{-2d}$, GD will converge at \textbf{exponential} rate:
\[
\mathcal{L}(\boldsymbol{\theta}(t))\leq\Big(1-\frac{V c}{2}\Big)^{t-1}\mathcal{L}(\boldsymbol{\theta}(1)),\quad t\geq1.
\]

\end{theorem}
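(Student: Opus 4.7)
The plan is to combine a Polyak--{\L}ojasiewicz-type gradient lower bound with a self-bounded descent lemma, so that the adaptive step size $\eta_t=c/\mathcal{L}(\boldsymbol{\theta}(t))$ produces a geometric contraction. Specifically, the goal is to establish
\begin{equation*}
\|\nabla\mathcal{L}(\boldsymbol{\theta}(t))\|^2\geq 2V\,\mathcal{L}(\boldsymbol{\theta}(t))^2,\qquad \mathcal{L}(\boldsymbol{\theta}(t+1))\leq \mathcal{L}(\boldsymbol{\theta}(t))-\eta_t\|\nabla\mathcal{L}\|^2+\tfrac{H}{2}\eta_t^2\,\mathcal{L}(\boldsymbol{\theta}(t))\,\|\nabla\mathcal{L}\|^2
\end{equation*}
for some constant $H>0$. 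Substituting $\eta_t=c/\mathcal{L}$ with $c\leq 1/2$ then gives $\mathcal{L}(\boldsymbol{\theta}(t+1))\leq(1-Vc/2)\mathcal{L}(\boldsymbol{\theta}(t))$, iterated to yield the theorem. The reason the \emph{exponential} (rather than polynomial) rate is accessible here, in contrast with Theorem~\ref{thm: binary global GD}, is that with $a_k$ held fixed at $\pm 1/\sqrt{m}$ the gradient $\nabla_{\boldsymbol{b}_k}\mathcal{L}=\tfrac{a_k}{n}\sum_i\tilde\ell'(y_i f_i)\,y_i\,\mathbb{I}\{\boldsymbol{b}_k^\top\boldsymbol{x}_i>0\}\boldsymbol{x}_i$ is a linear combination of the data vectors with a fixed outer coefficient, permitting a clean NTK-style eigenvalue bound.

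The gradient lower bound is the heart of the argument and I would derive it via the neuron partition technique underlying the earlier theorems. Under the random initialization \eqref{equ: random initialization}, a Hoeffding bound shows that with probability at least $1-\delta-2me^{-2d}$, for each sign $\varepsilon\in\{\pm 1\}$ at least a fraction $\bigl(\tfrac{1}{2}-\sqrt{8\log(n^2/\delta)/m}\bigr)$ of the neurons with $a_k=\varepsilon/\sqrt{m}$ have activation patterns that agree with the class labels, i.e.\ $\boldsymbol{b}_k^\top\boldsymbol{x}_i>0$ iff $y_i=\varepsilon$. On this ``good set'' of neurons, $\nabla_{\boldsymbol{b}_k}\mathcal{L}$ reduces to $\pm\tfrac{1}{n\sqrt{m}}\mathbf{X}_{\pm}\boldsymbol{u}$ for a vector $\boldsymbol{u}$ whose coordinates satisfy $|u_i|\asymp\tilde\ell(y_i f_i)$ by Assumption~\ref{def: exponential type loss}(iii)--(iv). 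Lower-bounding $\|\mathbf{X}_\pm\boldsymbol{u}\|^2$ either by $\lambda_{\min}(\mathbf{X}_\pm^\top\mathbf{X}_\pm)\|\boldsymbol{u}\|^2$ or by a combinatorial bound using the in-class inner product $\gamma$ from Assumption~\ref{ass: data separation} reproduces the $\max\{\cdot,\cdot\}$ factor in \eqref{equ: V}, and the loss-squared factor $\mathcal{L}^2$ emerges from $(\tfrac{1}{n}\sum_i\tilde\ell(y_if_i))^2$.

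For the descent lemma, $0\leq\tilde\ell''\leq h\tilde\ell$ from Assumption~\ref{def: exponential type loss}(iii), combined with $|a_k|=1/\sqrt{m}$ and bounded data, gives the self-bounded Hessian estimate $\|\nabla^2\mathcal{L}(\boldsymbol{\theta}(t))\|\leq H\mathcal{L}(\boldsymbol{\theta}(t))$, so a standard second-order Taylor expansion delivers the descent inequality. The warm-up step $\eta_0\leq 1/(2\sqrt{2})$ is used only to carry the iterate from the tiny random initialization (essentially at the origin by $\kappa=\mathcal{O}(\eta_0\mu_0/n)$) into the region where the neuron partition applies, producing a well-defined reference $\mathcal{L}(\boldsymbol{\theta}(1))$ for the adaptive schedule to take over.

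The main obstacle I anticipate is showing that the ``good set'' of neurons persists for all $t\geq 1$. Because $\|\nabla_{\boldsymbol{b}_k}\mathcal{L}\|=\mathcal{O}(\mathcal{L}/\sqrt{m})$ for exponential-type loss, each per-step update has size $\eta_t\|\nabla_{\boldsymbol{b}_k}\mathcal{L}\|=\mathcal{O}(c/\sqrt{m})$, which does \emph{not} shrink with $t$, so a naive telescoping fails to bound cumulative drift. The resolution should be a monotonicity argument: for $k$ in the good set, $y_i\boldsymbol{b}_k^\top\boldsymbol{x}_i$ is nondecreasing along GD by Assumption~\ref{ass: data separation}, so each activation indicator $\mathbb{I}\{\boldsymbol{b}_k^\top\boldsymbol{x}_i>0\}$ can only toggle in the correct direction and the good set only grows. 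Making this monotonicity precise for all $k$ and $i$ simultaneously, and checking that it is not disturbed by the single unadapted step at $t=0$, is the technically delicate step on which the whole argument rests.
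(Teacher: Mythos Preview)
Your proposal is correct and mirrors the paper's proof: a PL-type inequality $\|\nabla\mathcal{L}\|^2\geq V\mathcal{L}^2$ from the neuron partition, a self-bounded Hessian $\|\nabla^2\mathcal{L}\|\leq\mathcal{L}$ (which the paper makes uniform on the segment $\overline{\boldsymbol{\theta}(t)\boldsymbol{\theta}(t+1)}$ via a short bootstrap using $\|\nabla\mathcal{L}\|\leq\mathcal{L}$ and $c\leq\tfrac12$), and exactly the monotonicity argument for persistence of the partition that you anticipate as the delicate step. One small clarification: the paper shows that after the single step $0\to 1$, \emph{every} neuron with $\mathrm{sgn}(a_k)=y_i$ becomes true-living for sample $i$ (Lemma~\ref{lemma: correct neurons remain correct stage I linear}), so the factor $\bigl(\tfrac{1}{2}-\sqrt{8\log(n^2/\delta)/m}\bigr)$ in $V$ arises purely from the Hoeffding count of how many $a_k$ carry each sign, not from a fraction of correctly-activated neurons within a given sign class.
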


Theorem \ref{thm: binary global GD} and \ref{thm: binary global GD linear} describe the whole training process starting from random initialization to convergence.
If all parameters in the network are trained, we obtain the global convergence with \textit{arbitrary polynomial} rate.
On the other hand, if only the input layer parameters $\{\boldsymbol{b}_k\}_{k\in[m]}$ are trained, we can obtain \textit{exponential} convergence rate. Fixing some layers of the neural network is a common practice for theoretical studies~\citep{chatterji2021doesA}.
The results show that adaptively increasing learning rate helps GD achieve faster convergence. Similar idea has been explored in previous works such as~\citep{lyu2019gradient, chatterji2021doesA}.

A similar convergence result was proven in~\citep{phuong2020inductive}. where the authors showed a global convergence result of two-layer neural networks trained by Gradient Flow (GF) with cross entropy loss and orthogonal separable data.
The main difference between our results and analysis with that in~\citep{phuong2020inductive} is that we study GD instead of GF. Due to the discretization, GD is more complicated than GF. For example, the balanced property of two layers ($\sum_{k\in[m]}|a_k(t)|^2-\sum_{k\in[m]}\left\|\boldsymbol{b}_k(t)\right\|^2\equiv0,\ \forall t\geq0$) holds for GF throughout the training, but it does not hold for GD. Dealing with GD requires new analysis techniques and finer analysis, such as the neuron partition results we derive.
Another related work is ~\citep{lakshminarayanan2020deep}. The neural partition approach in our article shares some similarity with the view of gating networks. In their work, the authors use gating networks to understand the role of depth in training deep ReLU networks. By comparison, we focus more on the characterization of training dynamics of two-layer ReLU networks. 
%Moreover, we believe their work is enlightening for extending our two-layer theory to deep ReLU neural networks.

%As a pivotal step, they prove that for GF dynamics, each neuron goes into a sector after some time, then neural networks can be split into two linear subnetworks. 
%But for GD dynamics, the dynamics in the early stage is more complicated, and we need more meticulous analysis. 
%We construct some new techniques, such as the neuron partition for each sample and a novel gradient lower bound (see Section \ref{section: proof sketch}). {\red check. a little long?}

\section{Proof Sketch and Techniques}
\label{section: proof sketch}

\subsection{Main techniques in the proof of Theorem \ref{thm: binary quadratic}}\label{subsection: proof sketch thm1}

%We focus on the mildly over-parameterized two-layer neural networks, and the neurons have more complicated dynamical behaviors rather than lazy training in the NTK regime. So we need more  microscopic analysis of the dynamics of each neuron.

Inspired by the empirical work \citep{ma2020quenching}, we study the nonlinear behavior of each neuron in the early stage of training. 
First, we define the hitting time $T$, and we will study the convergence in the early stage $0\leq t\leq T$:
\begin{align*}
  T:= \sup\big\{t\in\mathbb{N}:\ 
    \max_{i\in[n]}\left|f(\boldsymbol{x}_i;\boldsymbol{\theta}(s))\right|\leq1,
    \ a_k(s)a_k(0)>0,\ \forall k\in[m],\ \forall 0\leq s\leq t+1\big\}.
\end{align*}

\subsubsection{Neuron partition: fine-grained dynamical analysis of each neuron and each sample}
During our proof, we conduct fine-grained analysis to the interaction of neurons and samples. Specifically, we characterize the impact of each sample on each neuron, and for each sample classify neurons into four categories according to their effect. We call this analysis ``neuron partition''.

For the $i$-th training data $(\boldsymbol{x}_i,y_i)$, we divide all neurons into four categories during training 0$\leq t\leq T$ and study them separately.
We define the true-living neurons $\mathcal{TL}_i(t)$, the true-dead neurons $\mathcal{TD}_i(t)$, the false-living neurons $\mathcal{FL}_i(t)$ and the false-dead neurons $\mathcal{FD}_i(t)$ at time $t$ as:
\begin{gather*}
    \mathcal{TL}_{i}(t):=\big\{k\in[m]:y_ia_k(t)>0, \boldsymbol{b}_k(t)^\top\boldsymbol{x}_i> 0\big\},
    \\
    \mathcal{TD}_{i}(t):=\big\{k\in[m]:y_ia_k(t)>0,\boldsymbol{b}_k(t)^\top\boldsymbol{x}_i\leq 0\big\},
    \\
    \mathcal{FL}_{i}(t):=\big\{k\in[m]:y_ia_k(t)<0, \boldsymbol{b}_k(t)^\top\boldsymbol{x}_i> 0\big\},
    \\ 
    \mathcal{FD}_{i}(t):=\big\{k\in[m]:y_ia_k(t)<0,\boldsymbol{b}_k(t)^\top\boldsymbol{x}_i\leq 0\big\}.
\end{gather*}
It is easy to verify
$[m]=\mathcal{TL}_{i}(t)\bigcup\mathcal{TD}_{i}(t)\bigcup\mathcal{FL}_{i}(t)\bigcup\mathcal{FD}_{i}(t)$ (see Lemma \ref{lemma: neuron partition 1}).

%Thanks to the mildly over-parameterized or under-parameterized condition $m=\Omega(\log(n/\delta))$, with high probability, the numbers of the initial classes are nearly determined. As our main concern, for any $i,j$ in the same class, we have the following estimate.

Under a weak assumption on the width of the network, we have the following results for the neuron partition at initialization:
\begin{lemma}[Informal statement of Lemma \ref{lemma: estimate of the number of initial neural partition 1}]\label{lemma: proof sketch: estimate of the number of initial neural partition}
If $m=\Omega(\log(n/\delta))$, then with high probability, if data $i$ and data $j$ are in the same class, we have $\text{card}\big(\mathcal{TL}_i(0)\cap\mathcal{TL}_j(0)\big)\approx\frac{\pi-\arccos(\boldsymbol{x}_i^\top\boldsymbol{x}_j)}{4\pi}m$.

\end{lemma}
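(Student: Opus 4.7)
The plan is a routine reduction to a sum of i.i.d. Bernoulli indicators followed by Hoeffding concentration and a union bound, with the probability of each indicator computed by a spherical-wedge argument that exploits the rotational symmetry of the Gaussian initialization. Concretely, since $i$ and $j$ lie in the same class we have $y_i = y_j$, so the ``true'' sign conditions on $a_k(0)$ collapse into a single condition and
\[
\big|\mathcal{TL}_i(0)\cap\mathcal{TL}_j(0)\big| \;=\; \sum_{k=1}^m Z_k, \qquad Z_k := \mathbb{I}\{y_i a_k(0)>0\}\cdot\mathbb{I}\{\boldsymbol{b}_k(0)^\top\boldsymbol{x}_i>0,\,\boldsymbol{b}_k(0)^\top\boldsymbol{x}_j>0\},
\]
where the $Z_k$ are i.i.d.\ Bernoulli because $(a_k(0),\boldsymbol{b}_k(0))$ are i.i.d.\ across $k$ by the definition of the random initialization.

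The next step is to compute $p := \mathbb{E} Z_k$. By the independence of $a_k(0)$ and $\boldsymbol{b}_k(0)$ at initialization and the symmetric Rademacher law of $a_k(0)$, the sign factor contributes $\Pr[y_i a_k(0)>0] = 1/2$. For the Gaussian factor, since $\mathcal{N}(\mathbf{0},(\kappa^2/md)\mathbf{I}_d)$ is rotationally symmetric, the direction $\boldsymbol{b}_k(0)/\|\boldsymbol{b}_k(0)\|$ is uniform on $S^{d-1}$, and the event in question only depends on this direction; marginalizing onto the $2$-plane spanned by $\boldsymbol{x}_i,\boldsymbol{x}_j$ reduces the problem to a planar wedge argument, giving a wedge of angular measure $\pi - \theta$ with $\theta := \arccos(\boldsymbol{x}_i^\top\boldsymbol{x}_j/(\|\boldsymbol{x}_i\|\|\boldsymbol{x}_j\|))$. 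Hence
\[
p = \tfrac{1}{2}\cdot\tfrac{\pi-\theta}{2\pi} = \frac{\pi-\arccos(\boldsymbol{x}_i^\top\boldsymbol{x}_j)}{4\pi}
\]
(using unit-norm inputs in the last equality), which is exactly the target fraction. Applying Hoeffding's inequality to the bounded sum $\sum_k Z_k$ and taking a union bound over the at most $\binom{n}{2}$ same-class pairs $(i,j)$ yields, for any $\varepsilon > 0$,
\[
\Pr\Big[\,\exists\,i,j\text{ same class}:\ \big|\,|\mathcal{TL}_i(0)\cap\mathcal{TL}_j(0)|-pm\,\big| > \varepsilon m\,\Big] \;\leq\; n^2 \cdot 2\exp(-2m\varepsilon^2),
\]
which is $\leq\delta$ once $m\varepsilon^2 \gtrsim \log(n/\delta)$; picking $\varepsilon = \Theta(\sqrt{\log(n/\delta)/m})$ then produces the advertised $\approx$-approximation under the width requirement $m = \Omega(\log(n/\delta))$.

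Since each ingredient (symmetric Rademacher sign probability, rotational-symmetry reduction to a spherical wedge, Hoeffding plus union bound) is elementary, I anticipate no conceptual obstacle. The main bookkeeping care lies in (i) fixing $\varepsilon$ small enough that all four neuron-partition intersection counts --- not just $\mathcal{TL}\cap\mathcal{TL}$ but presumably the analogous $\mathcal{TD}, \mathcal{FL}, \mathcal{FD}$ combinations needed downstream --- concentrate simultaneously at the same width threshold, which is why the statement tolerates only $m = \Omega(\log(n/\delta))$ rather than a constant, and (ii) stating the angle with the correct normalization $\arccos(\boldsymbol{x}_i^\top\boldsymbol{x}_j/(\|\boldsymbol{x}_i\|\|\boldsymbol{x}_j\|))$ when the data are not unit-normalized, matching the convention used in the formal Lemma~\ref{lemma: estimate of the number of initial neural partition 1}.
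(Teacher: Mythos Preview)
Your proposal is correct and follows essentially the same route as the paper's proof of Lemma~\ref{lemma: estimate of the number of initial neural partition 1}: express the cardinality as a sum of i.i.d.\ Bernoulli indicators, compute the mean as $\tfrac{1}{2}\cdot\tfrac{\pi-\theta}{2\pi}$ via the sign symmetry and the Gaussian rotational invariance, then apply Hoeffding with a union bound over same-class pairs. The only cosmetic differences are that the paper writes the union bound with $n^2$ rather than $\binom{n}{2}$ and states $\arccos(\boldsymbol{x}_i^\top\boldsymbol{x}_j)$ without the explicit normalization by norms; your caveat (ii) about the angle convention is slightly more careful than the paper itself.
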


%The second lemma is about the dynamics of neuron partition. It reveals the dynamical quenching-activation phenomenon of each neuron in the early stage of training.

The next important lemma characterizes the evolution of the neuron partition under GD dynamics.

\begin{lemma}[Informal Lemma \ref{lemma: correct neurons remain correct 1}]\label{lemma: proof sketch: correct neurons remain correct}
With high probability, for any $i\in[n]$ and $t\leq T$ we have:\\
(S1) True-living neurons remain true-living:  $\mathcal{TL}_i(t)\subset\mathcal{TL}_i(t+1)$.\\
(S2) False-dead neurons remain false-dead:  $\mathcal{FD}_i(t)\subset\mathcal{FD}_i(t+1)$.\\
(S3) True-dead neurons turn true-living in the firt step ($t=0$): $\mathcal{TD}_i(0)\subset\mathcal{TL}_i(1)$.\\
(S4) False-living neurons turn false-dead in the firt step ($t=0$): $\mathcal{FL}_i(0)\subset\mathcal{FD}_i(1)$.\\
(S5) For any $i\in[n]$, $1\leq t\leq T$, $k\in[m]$ and $\boldsymbol{b}\in\overline{\boldsymbol{b}_k(t)\boldsymbol{b}_k(t+1)}$, we have $\text{sgn}(\boldsymbol{b}^\top\boldsymbol{x}_i)\equiv\text{sgn}(\boldsymbol{b}_k^\top(1)\boldsymbol{x}_i)\ne0$.
\end{lemma}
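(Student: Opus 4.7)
The plan is to prove the five statements by induction on $t$, handling the non-trivial base case $t=0$ (statements (S3) and (S4)) and the inductive step $t\geq 1$ (statements (S1) and (S2)) separately, then deducing (S5) from them together with linearity. The starting point is the update identity
\[
\boldsymbol{b}_k(t+1)^\top \boldsymbol{x}_i = \boldsymbol{b}_k(t)^\top \boldsymbol{x}_i + \frac{\eta}{n}\sum_{j=1}^n (y_j - f_j(t))\, a_k(t)\, \sigma'(\boldsymbol{b}_k(t)^\top \boldsymbol{x}_j)\, \boldsymbol{x}_j^\top \boldsymbol{x}_i,
\]
together with the observation that $|f_j(t)|\leq 1$ for $t\leq T$ (by definition of $T$) forces $\tilde r_j(t):=y_j(y_j-f_j(t))=1-y_jf_j(t)\in[0,2]$. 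Each summand then factors as $|a_k(t)|\,\tilde r_j(t)\,\sigma'(\cdot)\,\text{sgn}(y_i a_k(t))\,(y_iy_j\boldsymbol{x}_j^\top\boldsymbol{x}_i)$, and by Assumption~\ref{ass: data separation}(i) the factor $y_iy_j\boldsymbol{x}_j^\top\boldsymbol{x}_i\geq 0$ for every $j$. This gives a clean sign-monotonicity principle: the update to $\boldsymbol{b}_k^\top\boldsymbol{x}_i$ has the sign of $y_ia_k$. Statement (S1) then follows immediately: if $k\in\mathcal{TL}_i(t)$ then $y_ia_k(t)>0$, so $\boldsymbol{b}_k(t+1)^\top\boldsymbol{x}_i\geq \boldsymbol{b}_k(t)^\top\boldsymbol{x}_i>0$, while sign preservation of $a_k$ is built into the definition of $T$. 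Statement (S2) is symmetric (every summand is non-positive).

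For (S3) and (S4) the task is to show that the first GD step is powerful enough to flip an initially ``wrong'' sign of $\boldsymbol{b}_k(0)^\top\boldsymbol{x}_i$. At $t=0$, Gaussian concentration yields $|\boldsymbol{b}_k(0)^\top\boldsymbol{x}_i|\leq \|\boldsymbol{b}_k(0)\|\,\|\boldsymbol{x}_i\|=\mathcal{O}(\kappa/\sqrt m)$ uniformly in $k,i$, and $|f_j(0)|=\mathcal{O}(\kappa)$, so $\tilde r_j(0)$ lies within $\mathcal{O}(\kappa)$ of $1$. For (S3), the hypothesis $\boldsymbol{b}_k(0)^\top\boldsymbol{x}_i\leq 0$ lets me apply Assumption~\ref{ass: data separation}(ii) with $\boldsymbol{v}=\boldsymbol{b}_k(0)$ to extract a $j^*$ with $\sigma'(\boldsymbol{b}_k(0)^\top\boldsymbol{x}_{j^*})=1$ and $y_iy_{j^*}\boldsymbol{x}_i^\top\boldsymbol{x}_{j^*}\geq\mu_0$, producing an update of magnitude $\gtrsim \eta\mu_0/(n\sqrt m)$ in the correct direction. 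For (S4), $j=i$ itself lies in the support of the sum (since $\boldsymbol{b}_k(0)^\top\boldsymbol{x}_i>0$) and contributes $\|\boldsymbol{x}_i\|^2$; a short side-argument applying Assumption~\ref{ass: data separation}(ii) to $\boldsymbol{v}=-\boldsymbol{x}_i/\|\boldsymbol{x}_i\|$ yields $\|\boldsymbol{x}_i\|\geq \mu_0$ uniformly, so this contribution is $\gtrsim \mu_0^2$ and again dominates. The scaling $\kappa=\mathcal{O}(\eta\mu_0/n)$ is exactly what is needed to make the update strictly exceed the initial value and flip the sign.

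Statement (S5) then falls out of linearity of $\boldsymbol{b}\mapsto\boldsymbol{b}^\top\boldsymbol{x}_i$: any $\boldsymbol{b}\in\overline{\boldsymbol{b}_k(t)\boldsymbol{b}_k(t+1)}$ is a convex combination $(1-\lambda)\boldsymbol{b}_k(t)+\lambda\boldsymbol{b}_k(t+1)$, so $\boldsymbol{b}^\top\boldsymbol{x}_i$ is the same convex combination of the two endpoint values. By (S1)--(S4), for every $s\geq 1$ each neuron lies either in $\mathcal{TL}_i(s)$ with $\boldsymbol{b}_k(s)^\top\boldsymbol{x}_i>0$ or in $\mathcal{FD}_i(s)$ with $\boldsymbol{b}_k(s)^\top\boldsymbol{x}_i<0$ strictly, where the strict negativity in the $\mathcal{FD}_i$ case comes from the quantitative gap built at $t=1$ combined with the almost-sure $\boldsymbol{b}_k(0)^\top\boldsymbol{x}_i\ne 0$ from the continuous Gaussian initialization. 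Both endpoints therefore share a strict sign, and so does the entire segment.

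The main obstacle I expect is (S3)/(S4): one must simultaneously control (i) $\|\boldsymbol{b}_k(0)\|$ uniformly in $k$ via Gaussian tail bounds, (ii) the perturbation $\tilde r_j(0)-1$ via a uniform bound on $|f_j(0)|$, and (iii) the non-degenerate positive contribution singled out by Assumption~\ref{ass: data separation}(ii), all three surviving the tight scaling $\kappa=\mathcal{O}(\eta\mu_0/n)$. The inductive steps (S1), (S2) and the linearity argument for (S5) are essentially bookkeeping around the sign-monotonicity principle once this base case is established.
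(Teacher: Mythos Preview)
Your proposal is correct and follows essentially the same route as the paper's proof: the same sign--monotonicity identity $(y_j-f_j(t))\,\boldsymbol{x}_j^\top\boldsymbol{x}_i\,y_i\geq 0$ from Assumption~\ref{ass: data separation}(i), the same invocation of Assumption~\ref{ass: data separation}(ii) with $\boldsymbol{v}=\boldsymbol{b}_k(0)$ to produce the active index $j^*$ in (S3), and the same use of the $j=i$ term in (S4). One small point worth noting: in (S4) the paper's write-up effectively treats $\|\boldsymbol{x}_i\|^2$ as bounded below by an absolute constant, whereas you justify this explicitly via Assumption~\ref{ass: data separation}(ii) applied to $\boldsymbol{v}=-\boldsymbol{x}_i$, which is a cleaner argument; be aware however that this yields $\|\boldsymbol{x}_i\|^2\geq\mu_0^2$ rather than $\mu_0$, so the scaling you actually need from this route is $\kappa=\mathcal{O}(\eta\mu_0^2/n)$, not $\mathcal{O}(\eta\mu_0/n)$, unless $\mu_0$ is treated as $\Theta(1)$.
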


To interpret Lemma \ref{lemma: proof sketch: correct neurons remain correct}, we display the first-step dynamics in Figure \ref{fig: proof sketch}. 
\begin{figure}[ht!]
% \vspace{-1cm}
\begin{center}
\subfigure[$t=0$.]{
\label{fig: t=0}
\includegraphics[width=0.50\textwidth]{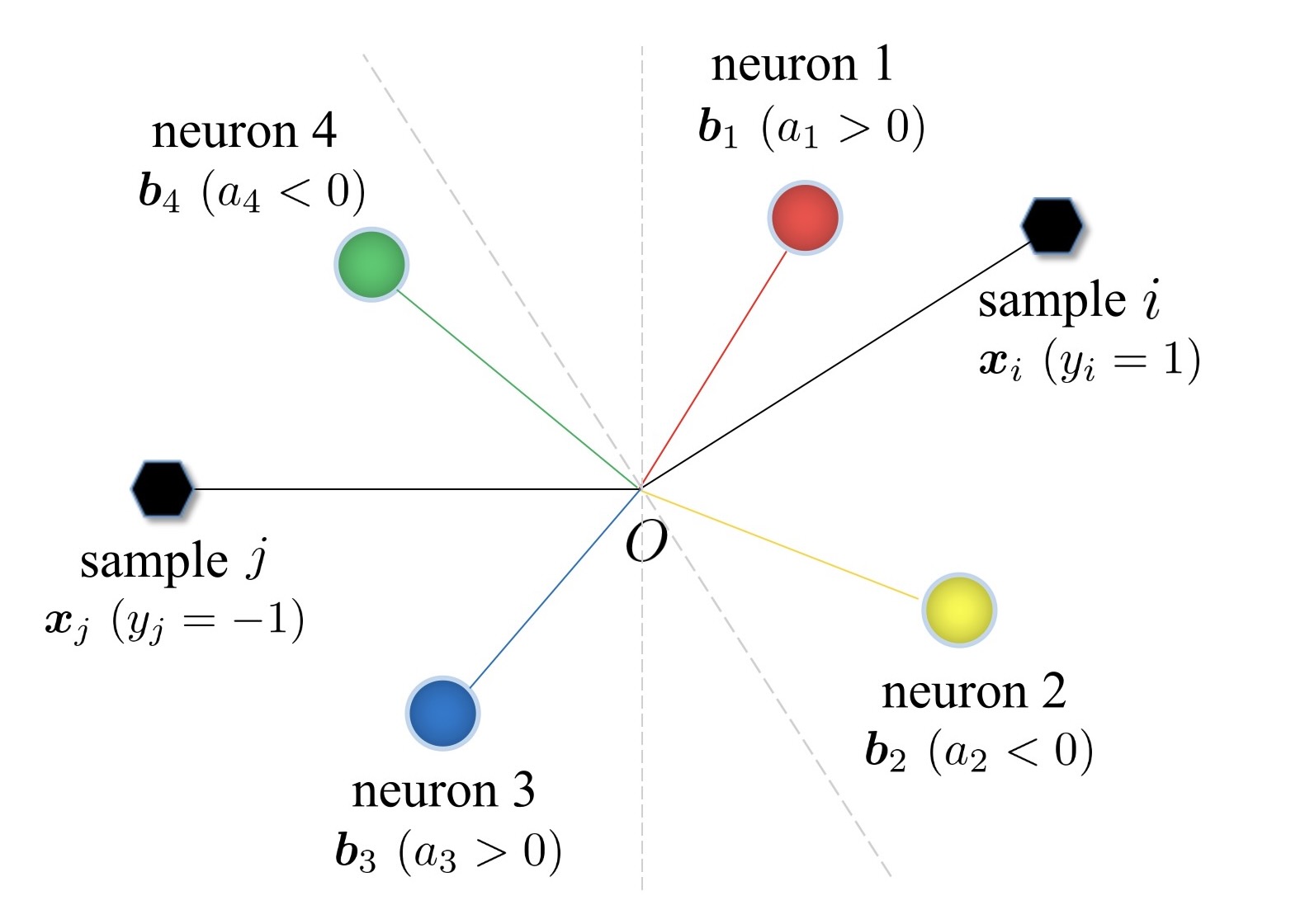}}
\subfigure[$t=1$.]{
\label{fig: t=1}
\includegraphics[width=0.48\textwidth]{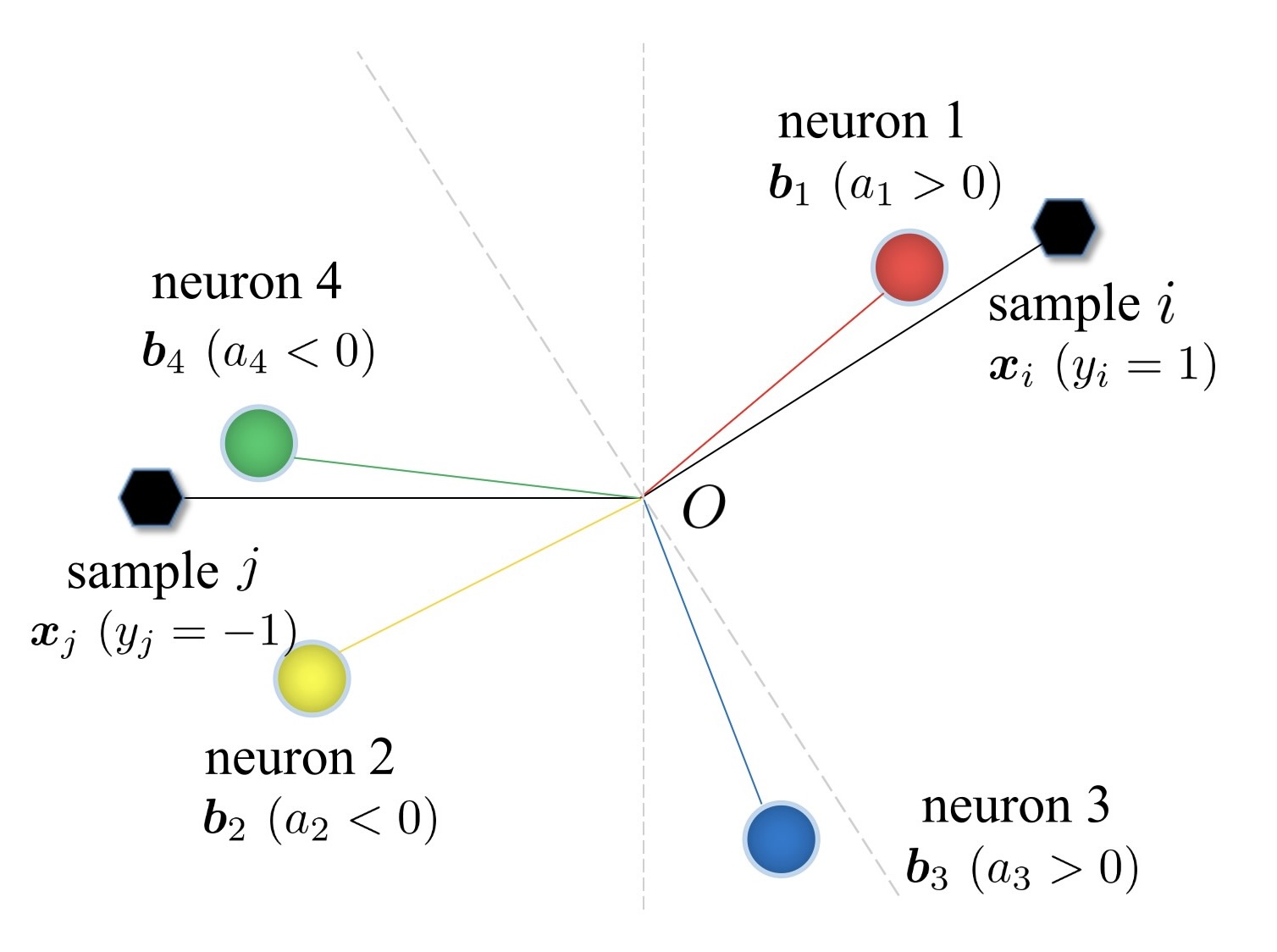}}
\end{center}
\caption{The first-step dynamics of four neurons which belong to different neuron partitions at $t=0$ for
sample $i$ (with label $1$) and sample $j$ (with label $-1$). Fig \ref{fig: t=0} and Fig \ref{fig: t=1} depict the directions of each neuron at $t=0$ and $t=1$, respectively.
Results in Lemma \ref{lemma: proof sketch: correct neurons remain correct} can be visualized: (S1) neuron $1\in\mathcal{TL}_i(0)$ and $1\in\mathcal{TL}_i(1)$. (S2) neuron $4\in\mathcal{FD}_i(0)$ and $4\in\mathcal{FD}_i(1)$. (S3) neuron $2\in\mathcal{TL}_j(0)$ and $2\in\mathcal{TL}_j(1)$. (S4) neuron $2\in\mathcal{FL}_i(0)$ and $2\in\mathcal{FD}_i(1)$. (Counterexample) By our Lemma, the first-step dynamics of neuron $3$ cannot happen.
}
\label{fig: proof sketch}
\end{figure}

\subsubsection{Estimate of hitting time and parameters}
Before the analysis of loss descent, we need to estimate the hitting time $T$ and the
speed of change for each $a_k,\boldsymbol{b}_k$ in the early stage. Our tool is comparing the hitting time $T$ with an exponentially growing hitting time $T_e$ (see \eqref{equ: exponential hitting time def 1} in Appendix \ref{appendix: proof thm1}). 
\[
  T_e:= \sup\Big\{t\in\mathbb{N}:\ (1+2\eta)^{2(t+1)}-(1-2\eta)^{2(t+1)}\lesssim1;\ \ (1+2\eta)^{t+1}\lesssim 1\Big\},
\]
where the order of magnitude of the two absolute constants hidden by $\lesssim$ is $10^0$.

Generally speaking, we can prove the following facts. 

\begin{lemma}[Informal Lemma \ref{lemma: speed separation 1} and \ref{lemma: hitting time estimate 1}]\label{lemma: proof sketch hitting time analysis} For any sample $i\in[n]$ and $t\leq T_e$, we have:
\\(P1) For any $k\in\mathcal{TL}_i(t)$, $a_k(t)\text{sgn}(a_k(0))$ is non-decreasing, and $a_k(t)$, $\boldsymbol{b}_k(t)$ have an exponential upper bound.
\\(P2) For any $k\in[m]$, $a_k(t)$ has an exponential lower bound.
\\(P3) $f\big(\boldsymbol{x}_i;\boldsymbol{\theta}(t))$ has an exponential upper bound.
\\\textbf{(P4)} $T\geq T_e\geq T^*=\Theta(\frac{1}{\eta})$.
\end{lemma}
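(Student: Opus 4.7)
The plan is to establish (P1)--(P4) by a simultaneous induction on $t\le T_e$, combining elementary one-step gradient bounds for the quadratic loss with the neuron-partition dynamics of Lemma~\ref{lemma: proof sketch: correct neurons remain correct}. Within the hitting time $T$ we have $|f(\boldsymbol{x}_i;\boldsymbol{\theta}(t))-y_i|\le 2$ for every $i$, and a direct computation of $\nabla_{a_k}\mathcal{L}$ and $\nabla_{\boldsymbol{b}_k}\mathcal{L}$ (using $\|\boldsymbol{x}_i\|\le 1$, $|\sigma(z)|\le |z|$, $\sigma'\in\{0,1\}$) yields the coupled one-step bounds
\[
|a_k(t+1)-a_k(t)|\le 2\eta\,\|\boldsymbol{b}_k(t)\|,\qquad \|\boldsymbol{b}_k(t+1)-\boldsymbol{b}_k(t)\|\le 2\eta\,|a_k(t)|.
\]
Hence the vector $(|a_k(t)|,\|\boldsymbol{b}_k(t)\|)^\top$ is dominated by powers of $\bigl(\begin{smallmatrix}1 & 2\eta\\ 2\eta & 1\end{smallmatrix}\bigr)$, whose eigenvalues $1\pm 2\eta$ are precisely the two rates appearing in the definition of $T_e$. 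Diagonalizing,
\[
|a_k(t)|,\ \|\boldsymbol{b}_k(t)\|\ \le\ \tfrac{s_k(0)}{2}(1+2\eta)^t\ \pm\ \tfrac{d_k(0)}{2}(1-2\eta)^t,
\]
where $s_k(0)=|a_k(0)|+\|\boldsymbol{b}_k(0)\|$ and $d_k(0)=|a_k(0)|-\|\boldsymbol{b}_k(0)\|$; by the scaling in~\eqref{equ: random initialization} and the choice $\kappa=\mathcal{O}(\eta\mu_0/n)$ both are $\Theta(1/\sqrt m)$. This already proves the exponential upper bound in (P1).

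For the monotonicity half of (P1), I combine the update of $a_k$ with Lemma~\ref{lemma: proof sketch: correct neurons remain correct}: by (S1)--(S4) at step $t=1$ every neuron $k$ lies in $\mathcal{TL}_i(1)\cup\mathcal{FD}_i(1)$ for every $i$, and this partition is preserved for all $t\ge 1$ by (S1)--(S2). Hence every term in $-\nabla_{a_k}\mathcal{L}$ is either zero (the $\mathcal{FD}_i$ terms, where $\sigma(\boldsymbol{b}_k^\top\boldsymbol{x}_i)=0$) or has the same sign as $y_i=\mathrm{sgn}(a_k(0))$ (the $\mathcal{TL}_i$ terms, since $|f|\le 1$ forces $\mathrm{sgn}(y_i-f)=\mathrm{sgn}(y_i)$), so $a_k(t+1)\,\mathrm{sgn}(a_k(0))\ge a_k(t)\,\mathrm{sgn}(a_k(0))$ for $t\ge 1$. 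For (P2), telescoping $|a_k(t+1)|\ge|a_k(t)|-2\eta\|\boldsymbol{b}_k(t)\|$ with the upper bound on $\|\boldsymbol{b}_k\|$ above gives an explicit positive lower bound on $|a_k(t)|$ as long as $(1+2\eta)^{t+1}\lesssim 1$, which is exactly the second condition in the definition of $T_e$. For (P3), the product bound $|f(\boldsymbol{x}_i;\boldsymbol{\theta}(t))|\le \sum_k |a_k|\,\|\boldsymbol{b}_k\|$ combined with the diagonalized estimates yields
\[
|f(\boldsymbol{x}_i;\boldsymbol{\theta}(t))|\ \lesssim\ \tfrac14\bigl[(1+2\eta)^{2(t+1)}-(1-2\eta)^{2(t+1)}\bigr],
\]
and the first condition in the definition of $T_e$ is exactly what makes the right-hand side $\le 1$. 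Together (P2) and (P3) show both conditions defining $T$ survive for all $t\le T_e$, so $T\ge T_e$; Taylor-expanding $(1+2\eta)^{2(t+1)}-(1-2\eta)^{2(t+1)}\approx 8\eta(t+1)$ in the regime $\eta(t+1)=O(1)$ gives $T_e\ge T^*=\Theta(1/\eta)$, which is (P4).

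The main obstacle is sharpness in the bound on $|f|$: a crude estimate $|a_k|,\|\boldsymbol{b}_k\|\le(1+2\eta)^t/\sqrt m$ would give $|f|\lesssim (1+2\eta)^{2t}$, forcing $t=O(1)$ rather than $\Theta(1/\eta)$ and losing the horizon needed for the $\Omega(1)$ loss descent in Theorem~\ref{thm: binary quadratic}. Separating the fast $(1+2\eta)$ and slow $(1-2\eta)$ eigendirections of the iteration matrix and exploiting the small-initialization fact $\|\boldsymbol{b}_k(0)\|\ll|a_k(0)|$ is what produces the sharper quantity $(1+2\eta)^{2(t+1)}-(1-2\eta)^{2(t+1)}$ and brings $T_e$ up to $\Theta(1/\eta)$; this is the technical heart of the argument.
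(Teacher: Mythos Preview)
Your proposal is correct and follows essentially the same approach as the paper: the paper likewise sets up the coupled recursion for $(|a_k|,\|\boldsymbol b_k\|)$, diagonalizes the $2\times 2$ iteration matrix to extract the rates $(1\pm 2\eta)^t$, exploits the small-initialization gap $\|\boldsymbol b_k(0)\|\ll|a_k(0)|$ to obtain the sharp product bound on $|f|$, and closes the bootstrap $T\ge T_e$ by induction on $t$. The only cosmetic difference is that the paper restricts the sum bounding $|f|$ to the $\approx m/2$ true-living neurons (via Lemma~\ref{lemma: proof sketch: estimate of the number of initial neural partition}), gaining a factor of roughly $\tfrac12$ in the constant defining $T_e$ but not affecting the $\Theta(1/\eta)$ order.
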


\subsubsection{Gram matrix and gradient lower bound}
The gradient lower bound is usually an important technical component in the analysis of loss descent. 
%To analyze the loss descent, a technical difficulty is to establish non-trivial gradient lower bounds. 
Our analysis 
establishes a novel lower bound for the gradient for mildly parameterized neural networks based on the neuron partition.

% , and it is significantly different from \citep{lyu2019gradient, chatterji2021doesA} and \citep{du2018gradient}. {\red what difference?}

If we define the error for sample $i$ at $t$ as $e_i(t)=f(\boldsymbol{x}_i;\boldsymbol{\theta}(t))-y_i$, and the Gram matrix at $t$ as
$\mathbf{G}(t)=(\nabla f(\boldsymbol{x}_i;\boldsymbol{\theta}(t))^\top\nabla f(\boldsymbol{x}_j;\boldsymbol{\theta}(t)))_{(i,j)\in[n]\times[n]}$. For the analysis of GD, loss descent is determined by $\left\|\nabla \mathcal{L}(\boldsymbol{\theta}(t))\right\|^2=\sum_{i=1}^n\sum_{j=1}^n e_i(t)\text{G}_{i,j}(t) e_j(t)$. Qualitatively, bigger gradient leads to the faster descent of the loss.
%However, in the early stage, $e_i(t)e_j(t)<0$ (for $i,j$ in different classes) may cause $\left\|\nabla \mathcal{L}(\boldsymbol{\theta}(t))\right\|$ to be insignificant. Fortunately,
%thanks to the fast directional adjustment at step $t=0$ and the estimate of initial neuron partition, we have the following result of the dynamical Gram matrix $(1\leq t\leq T)$, which ensure the lower bound of $\left\|\nabla \mathcal{L}(\boldsymbol{\theta}(t))\right\|$.
To derive a tight lower bound for the gradient, we first derive the following bound for the entries of the Gram matrix:

\begin{lemma}[Informal Lemma \ref{lemma: positive Gram time t 1}]\label{lemma: proof sketch: positive Gram time t} Under the condition of Theorem \ref{thm: binary quadratic}, with high probability, the Gram matrix $\mathbf{G}(t)=\begin{pmatrix}
  \mathbf{G}_{+}(t) & \mathbf{0}_{\frac{n}{2}\times\frac{n}{2}}\\\mathbf{0}_{\frac{n}{2}\times\frac{n}{2}}&\mathbf{G}_{-}(t)
\end{pmatrix}$ has the following form for any $1\leq t\leq T$ and $i,j\in[\frac{n}{2}]$:
$\text{G}_{+}(t)_{i,j},\ \text{G}_{-}(t)_{i,j}\gtrsim\boldsymbol{x}_i^\top\boldsymbol{x}_j\Big(\frac{\pi-\arccos(\boldsymbol{x}_i^\top\boldsymbol{x}_j)}{4\pi}-\mathcal{O}(\sqrt{\frac{\log(n/\delta)}{m}}\big)\Big)>0$,
where the order of magnitude of the absolute constant hidden by $\gtrsim$ is $10^0$.
\end{lemma}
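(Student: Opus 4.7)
The plan is to exploit the neuron partition from Lemmas 5.1--5.3 to reduce each Gram entry to a sum over a small, well-controlled subset of neurons, then to separately argue the off-diagonal blocks vanish and the diagonal blocks admit the advertised lower bound. To begin, I compute the gradient in two pieces, $\partial f/\partial a_k = \sigma(\boldsymbol{b}_k^\top\boldsymbol{x})$ and $\partial f/\partial \boldsymbol{b}_k = a_k\sigma'(\boldsymbol{b}_k^\top\boldsymbol{x})\boldsymbol{x}$, so that
\[
\mathrm{G}(t)_{i,j} \;=\; \sum_{k=1}^{m}\Bigl[\sigma(\boldsymbol{b}_k^\top\boldsymbol{x}_i)\sigma(\boldsymbol{b}_k^\top\boldsymbol{x}_j) \;+\; a_k^2\,\sigma'(\boldsymbol{b}_k^\top\boldsymbol{x}_i)\sigma'(\boldsymbol{b}_k^\top\boldsymbol{x}_j)\,\boldsymbol{x}_i^\top\boldsymbol{x}_j\Bigr].
\]
Because $\sigma$ is ReLU, both summands vanish unless the neuron is \emph{jointly active}, i.e.\ $\boldsymbol{b}_k^\top\boldsymbol{x}_i>0$ and $\boldsymbol{b}_k^\top\boldsymbol{x}_j>0$. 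Everything reduces to understanding which neurons are jointly active at time $t$.

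Next, I would argue the block-diagonal structure. Combining (S2) and (S4) of Lemma~\ref{lemma: proof sketch: correct neurons remain correct} gives $\mathcal{FL}_i(t)=\emptyset$ for every $i\in[n]$ and every $t\geq 1$: false-living neurons at initialization become false-dead at $t=1$, and false-dead neurons never revive. Hence a jointly active neuron $k$ at time $t\geq 1$ must satisfy $y_i a_k(t)>0$ and $y_j a_k(t)>0$, forcing $\mathrm{sgn}(a_k(t))=y_i=y_j$. When $i$ and $j$ belong to opposite classes this is impossible, so the jointly active set is empty and $\mathrm{G}(t)_{i,j}=0$, yielding the claimed block structure.

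For the same-class entries, the jointly active set is exactly $\mathcal{TL}_i(t)\cap\mathcal{TL}_j(t)$. On this set $\boldsymbol{b}_k^\top\boldsymbol{x}_i,\boldsymbol{b}_k^\top\boldsymbol{x}_j>0$, and by Assumption~\ref{ass: data separation} $\boldsymbol{x}_i^\top\boldsymbol{x}_j\geq 0$, so dropping the non-negative first summand gives
\[
\mathrm{G}(t)_{i,j} \;\geq\; \boldsymbol{x}_i^\top\boldsymbol{x}_j \!\!\sum_{k\in\mathcal{TL}_i(t)\cap\mathcal{TL}_j(t)} a_k^2(t).
\]
By (S1) of Lemma~\ref{lemma: proof sketch: correct neurons remain correct} the intersection is monotone, so $\mathcal{TL}_i(0)\cap\mathcal{TL}_j(0)\subset\mathcal{TL}_i(t)\cap\mathcal{TL}_j(t)$, and Lemma~\ref{lemma: proof sketch: estimate of the number of initial neural partition} provides, with probability at least $1-\delta$, the concentration bound $|\mathcal{TL}_i(0)\cap\mathcal{TL}_j(0)|\geq \tfrac{\pi-\arccos(\boldsymbol{x}_i^\top\boldsymbol{x}_j)}{4\pi}m - C\sqrt{m\log(n/\delta)}$ uniformly in $i,j$ after a union bound over $O(n^2)$ pairs. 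The factor $a_k^2(t)$ is then controlled via (P2) of Lemma~\ref{lemma: proof sketch hitting time analysis}: since $|a_k(t)|\geq (1-2\eta)^{t}|a_k(0)|=(1-2\eta)^{t}/\sqrt{m}$ and $t\leq T^{\ast}=\Theta(1/\eta)$ with $\eta\leq 0.01$, the quantity $(1-2\eta)^{2t}$ stays bounded below by an absolute constant, giving $a_k^2(t)\geq c/m$. Combining the two bounds and dividing through by $m$ produces the stated inequality with an absolute constant of order $10^0$ hidden in $\gtrsim$.

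The main obstacle is not any single computation but the bookkeeping that ties the three ingredients together on a \emph{single} high-probability event: the initial concentration estimate of Lemma~\ref{lemma: proof sketch: estimate of the number of initial neural partition}, the deterministic partition evolution of Lemma~\ref{lemma: proof sketch: correct neurons remain correct}, and the dynamical lower bound on $a_k^2(t)$ from Lemma~\ref{lemma: proof sketch hitting time analysis}---all must hold simultaneously for every $i,j\in[n]$ and every $1\leq t\leq T$. The delicate part is arranging the constants so that the slack $O(\sqrt{m\log(n/\delta)})$ in the cardinality estimate, after multiplication by the constant floor on $a_k^2 m$, matches the $O(\sqrt{\log(n/\delta)/m})$ error term in the theorem; this forces the width requirement $m=\Omega(\log(n/\delta))$ of Theorem~\ref{thm: binary quadratic}.
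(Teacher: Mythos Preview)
Your approach is essentially the paper's: compute the Gram entry, use $\mathcal{FL}_i(t)=\emptyset$ for $t\geq 1$ to kill the off-diagonal blocks, drop the nonnegative $\sigma\sigma$ term on the diagonal blocks, and lower-bound $\sum a_k^2$ over the joint true-living set via the initial-partition concentration. Two small refinements the paper makes are worth noting. First, the lower bound on $a_k^2(t)$ does not come from an exponential-decay estimate of the form $(1-2\eta)^t/\sqrt{m}$ (which is not what (P2)/(P4) actually says); instead one uses the monotonicity in (P1)/(P6): for $k\in\mathcal{TL}_i(t)$, $|a_k(t)|\geq|a_k(1)|\geq(1-o(1))/\sqrt{m}$ directly, with no $t$-dependent loss. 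Second, rather than stopping at $\mathcal{TL}_i(0)\cap\mathcal{TL}_j(0)$ via (S1), the paper invokes (S3) as well to get $\mathcal{TL}_i(t)=\mathcal{TL}_i(0)\cup\mathcal{TD}_i(0)$ for $t\geq 1$, so the joint set splits into four disjoint pieces each concentrated near $\tfrac{\pi-\arccos(\boldsymbol{x}_i^\top\boldsymbol{x}_j)}{4\pi}m$; this gives a bound four times sharper (denominator $\pi$ instead of $4\pi$) in the formal Lemma~\ref{lemma: positive Gram time t 1}. Your version as written already suffices for the informal statement with $4\pi$.
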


Combining the estimate above as well as estimates on parameters and the prediction function, we establish our gradient lower bound.

\begin{lemma}[Informal Lemma \ref{lemma: gradient lower bound 1}]\label{lemma: proof sketch: gradient lower bound} For any $1\leq t\leq T^*$, $\left\|\nabla \mathcal{L}(\boldsymbol{\theta}(t))\right\|=\Omega\big(1-(1+2\eta)^{2t}\big)$.
\end{lemma}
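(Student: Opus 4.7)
The plan is to express $\|\nabla\mathcal{L}(\boldsymbol{\theta}(t))\|^{2}$ as a quadratic form in the residual vector with the Gram matrix $\mathbf{G}(t)$, and then combine three ingredients already in hand: the block-diagonal entrywise positivity of $\mathbf{G}(t)$ from Lemma~\ref{lemma: proof sketch: positive Gram time t}, the within-class sign alignment of the residuals $e_i(t)=f(\boldsymbol{x}_i;\boldsymbol{\theta}(t))-y_i$, and the exponential envelope on $|f(\boldsymbol{x}_i;\boldsymbol{\theta}(t))|$ coming from Lemma~\ref{lemma: proof sketch hitting time analysis}(P3).

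Concretely, since the loss is quadratic, $\nabla\mathcal{L}(\boldsymbol{\theta}(t))=\tfrac{1}{n}\sum_i e_i(t)\,\nabla f(\boldsymbol{x}_i;\boldsymbol{\theta}(t))$, whence
\[
\|\nabla\mathcal{L}(\boldsymbol{\theta}(t))\|^{2}=\tfrac{1}{n^{2}}\sum_{i,j=1}^{n} e_i(t)\,e_j(t)\,\mathrm{G}_{ij}(t).
\]
The block structure $\mathbf{G}(t)=\mathrm{blkdiag}(\mathbf{G}_{+}(t),\mathbf{G}_{-}(t))$ decouples this sum across the two classes. Because $1\leq t\leq T^{*}\leq T$, the very definition of the hitting time $T$ forces $|f(\boldsymbol{x}_i;\boldsymbol{\theta}(t))|\leq 1$, so $e_i(t)\leq 0$ for every $i$ in class $+$ and $e_i(t)\geq 0$ for every $i$ in class $-$. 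Hence $e_i(t)e_j(t)\geq 0$ inside each block, and combined with the entrywise positivity $\mathrm{G}_{\pm}(t)_{ij}\gtrsim \boldsymbol{x}_i^{\top}\boldsymbol{x}_j\bigl(\tfrac{\pi-\arccos(\boldsymbol{x}_i^{\top}\boldsymbol{x}_j)}{4\pi}-\mathcal{O}(\sqrt{\log(n/\delta)/m})\bigr)$, every term of the quadratic form is non-negative. Discarding off-diagonal contributions and keeping only the diagonal then yields the clean bound
\[
\|\nabla\mathcal{L}(\boldsymbol{\theta}(t))\|^{2}\;\geq\;\tfrac{1}{n^{2}}\sum_{i=1}^{n}\mathrm{G}_{ii}(t)\,e_i(t)^{2}.
\]

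To finish, I would lower bound the two factors separately. Setting $i=j$ in the Gram estimate of Lemma~\ref{lemma: proof sketch: positive Gram time t} gives $\mathrm{G}_{ii}(t)\gtrsim \tfrac{1}{4}\|\boldsymbol{x}_i\|^{2}$, a $\Theta(1)$ constant (non-degeneracy of $\boldsymbol{x}_i$ is available under Assumption~\ref{ass: data separation}). For $|e_i(t)|$, Lemma~\ref{lemma: proof sketch hitting time analysis}(P3) provides an envelope of the form $|f(\boldsymbol{x}_i;\boldsymbol{\theta}(t))|\lesssim\bigl((1+2\eta)^{2t}-(1-2\eta)^{2t}\bigr)\kappa$; since $\kappa=\mathcal{O}(\eta\mu_0/n)$, this envelope stays well below $1$ throughout $t\leq T^{*}$, so $|e_i(t)|=|y_i-f(\boldsymbol{x}_i;\boldsymbol{\theta}(t))|\geq 1-C\,(1+2\eta)^{2t}\kappa$. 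Substituting and absorbing the $\kappa$ and $1/\sqrt{n}$ factors into the $\Omega$-notation produces the claimed bound $\|\nabla\mathcal{L}(\boldsymbol{\theta}(t))\|=\Omega\bigl(1-(1+2\eta)^{2t}\bigr)$.

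The main obstacle is quantitative bookkeeping rather than a new structural idea: one must make sure that (i) the high-probability correction $\mathcal{O}(\sqrt{\log(n/\delta)/m})$ in the Gram entrywise lower bound does not erase the leading $\tfrac{\pi-\arccos(\boldsymbol{x}_i^{\top}\boldsymbol{x}_j)}{4\pi}$ term, which requires the width assumption $m=\Omega(\log(n/\delta))$; and (ii) the envelope on $|f|$ stays strictly below $1$ for every $1\leq t\leq T^{*}$, i.e., that $T^{*}\leq T_e\leq T$ as asserted by Lemma~\ref{lemma: proof sketch hitting time analysis}(P4), so that the within-class sign-alignment step is never vacuous. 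Once these two quantitative checks are carried out, the chain of inequalities above is routine.
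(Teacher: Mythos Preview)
Your overall architecture matches the paper's: write $\|\nabla\mathcal{L}(\boldsymbol{\theta}(t))\|^{2}$ as a quadratic form in the residuals with Gram matrix $\mathbf{G}(t)$, use the block-diagonal structure to decouple the two classes, exploit the within-class sign alignment of $e_i(t)$ (from $|f|\leq 1$ on $t\leq T$) together with the entrywise positivity of $\mathbf{G}_{\pm}(t)$, and then control $|e_i(t)|$ via the envelope on $|f|$. That is exactly what the paper does. There are, however, two points where your execution diverges.

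First, the envelope you quote is incorrect: there is no factor $\kappa$ in (P3). The actual bound is $|f(\boldsymbol{x}_i;\boldsymbol{\theta}(t))|\leq \varphi(t)$ with $\varphi(t)=\bigl(\tfrac{1}{2}+2\sqrt{\log(2n^2/\delta)/m}\bigr)\cdot \tfrac{251001}{1000000}\bigl((1+2\eta)^{2t}-(1-2\eta)^{2t}\bigr)$, a quantity that is $\Theta\bigl((1+2\eta)^{2t}\bigr)$ with no small prefactor $\kappa$. The reason $\varphi(t)\leq 1$ on $t\leq T^{*}$ is precisely the definition of $T_e$ and $T^*\leq T_e+1$, not smallness of $\kappa$. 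Consequently your claimed inequality $|e_i(t)|\geq 1-C\kappa(1+2\eta)^{2t}$ is unjustified; the correct bound is $|e_i(t)|\geq 1-\varphi(t)$, which is exactly what the paper uses. This is a quick fix, but as written your step ``absorbing the $\kappa$'' does not go through.

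Second, the paper does \emph{not} discard the off-diagonal terms. Since you have already argued that every cross term $e_i(t)e_j(t)\,\mathrm{G}_{ij}(t)$ is nonnegative within each block, throwing them away is an unnecessary weakening. The paper keeps all of them and arrives at
\[
\|\nabla\mathcal{L}(\boldsymbol{\theta}(t))\|^{2}\ \geq\ \tfrac{999}{1000}\,(1-\varphi(t))^{2}\Bigl(\gamma_1-\gamma_2\sqrt{8\log(n^2/\delta)/m}\Bigr),
\]
with the data-dependent constants $\gamma_1,\gamma_2$ defined in \eqref{equ: gamma definition}. Your diagonal-only route yields only $\tfrac{1}{n^{2}}\sum_i G_{ii}(t)\,e_i(t)^{2}\gtrsim \tfrac{1}{n}(1-\varphi(t))^{2}$, losing a factor that can be as large as $n$ when the within-class data are well-aligned ($\gamma_1=\Theta(1)$). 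For the informal $\Omega(\cdot)$ statement this is cosmetic, but it does matter downstream in Theorem~\ref{thm: binary quadratic}, where the loss descent is $\Theta(\gamma_1)$ rather than $\Theta(1/n)$.
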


Besides, since we consider GD, the following per-step estimate of loss function~\citep{bubeck2015convex} as well as the Hessian upper bound is also useful.

%quadratic upper bound of loss for gradient-based algorithms \citep{bubeck2015convex} with the estimate of Hessian upper bound.

\begin{lemma}[Informal Lemma \ref{lemma: Hessian upper bound ERM 1} and \ref{lemma: loss quadratic upper bound 1}]\label{lemma: proof sketch: quadratic upper bound} For $1\leq t\leq T^*-1$, we have $H:=\sup\limits_{\boldsymbol{\theta}\text{ along GD trajectory}}\left\|\nabla^2 \mathcal{L}(\boldsymbol{\theta})\right\|=\mathcal{O}(1)$ and $\mathcal{L}(\boldsymbol{\theta}(t+1))\leq\mathcal{L}(\boldsymbol{\theta}(t))+\left<\nabla \mathcal{L}(\boldsymbol{\theta}(t)),\boldsymbol{\theta}(t+1)-\boldsymbol{\theta}(t)\right>+\frac{1}{2}H\left\|\boldsymbol{\theta}(t+1)-\boldsymbol{\theta}(t)\right\|^2$.
\end{lemma}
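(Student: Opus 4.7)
The plan is to establish the Hessian bound using the activation-pattern stability result (S5) from Lemma~\ref{lemma: proof sketch: correct neurons remain correct} together with the parameter estimates from Lemma~\ref{lemma: proof sketch hitting time analysis}, and then derive the descent inequality via a one-dimensional Taylor argument along the GD segment. The issue is that $\nabla^2 \mathcal{L}$ is not classically defined everywhere because $\sigma=\mathrm{ReLU}$ is only piecewise linear. The key observation is that (S5) tells us that, for every $1\le t\le T^*-1$, every sample $i$, and every neuron $k$, the quantity $\mathrm{sgn}(\boldsymbol{b}^\top\boldsymbol{x}_i)$ is constant as $\boldsymbol{b}$ moves along the segment $\overline{\boldsymbol{b}_k(t)\boldsymbol{b}_k(t+1)}$. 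On this segment, each $\sigma(\boldsymbol{b}_k^\top\boldsymbol{x}_i)$ is either identically zero or identically linear, so $\mathcal{L}$ is $C^\infty$ on the open segment, and its Hessian is well-defined and continuous there.

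The second step is to bound the Hessian uniformly. Writing $e_i(\boldsymbol{\theta})=f(\boldsymbol{x}_i;\boldsymbol{\theta})-y_i$, a direct computation for the quadratic loss gives
\[
\nabla^2\mathcal{L}(\boldsymbol{\theta})=\frac{1}{n}\sum_{i=1}^n\nabla f(\boldsymbol{x}_i;\boldsymbol{\theta})\nabla f(\boldsymbol{x}_i;\boldsymbol{\theta})^\top+\frac{1}{n}\sum_{i=1}^n e_i(\boldsymbol{\theta})\,\nabla^2 f(\boldsymbol{x}_i;\boldsymbol{\theta}).
\]
Inside the active regions, the only nonzero blocks of $\nabla^2 f(\boldsymbol{x}_i;\boldsymbol{\theta})$ are the $(a_k,\boldsymbol{b}_k)$ cross blocks, each equal to $\sigma'(\boldsymbol{b}_k^\top\boldsymbol{x}_i)\boldsymbol{x}_i$, so $\|\nabla^2 f(\boldsymbol{x}_i;\boldsymbol{\theta})\|\le\|\boldsymbol{x}_i\|\le 1$. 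For the first-order term, $\|\nabla f(\boldsymbol{x}_i;\boldsymbol{\theta})\|^2\le\sum_k\|\boldsymbol{b}_k\|^2+\sum_k a_k^2$. By (P1)--(P2) of Lemma~\ref{lemma: proof sketch hitting time analysis} we have $|a_k(t)|\lesssim 1/\sqrt{m}$ and $\|\boldsymbol{b}_k(t)\|\lesssim 1/\sqrt{m}$ uniformly on $[0,T_e]\supset[0,T^*-1]$, so $\|\nabla f(\boldsymbol{x}_i;\boldsymbol{\theta})\|^2=\mathcal{O}(1)$. Finally $|e_i|\le|f(\boldsymbol{x}_i;\boldsymbol{\theta})|+1\le 2$ by the definition of the hitting time $T$. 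Combining these gives $\|\nabla^2\mathcal{L}(\boldsymbol{\theta}(s))\|=\mathcal{O}(1)$ for every $s$ in the (open) segment between $\boldsymbol{\theta}(t)$ and $\boldsymbol{\theta}(t+1)$, and one takes $H$ to be the supremum of this bound over the whole trajectory.

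With the piecewise-$C^2$ bound in hand, the descent inequality follows from the standard one-dimensional argument. Define $\varphi(s)=\mathcal{L}(\boldsymbol{\theta}(t)+s\Delta)$ with $\Delta=\boldsymbol{\theta}(t+1)-\boldsymbol{\theta}(t)$. Since the activation pattern is constant on $(0,1)$, $\varphi$ is $C^2$ on $(0,1)$ and continuously differentiable on $[0,1]$, with $\varphi''(s)=\Delta^\top\nabla^2\mathcal{L}(\boldsymbol{\theta}(t)+s\Delta)\Delta$ bounded in absolute value by $H\|\Delta\|^2$. Writing
\[
\mathcal{L}(\boldsymbol{\theta}(t+1))-\mathcal{L}(\boldsymbol{\theta}(t))=\varphi(1)-\varphi(0)=\varphi'(0)+\int_0^1(1-s)\varphi''(s)\,ds
\]
and bounding the integral by $\tfrac{1}{2}H\|\Delta\|^2$ yields the claimed quadratic upper bound.

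The main obstacle is exactly the non-smoothness of ReLU: without (S5), the segment between two consecutive iterates could cross a kink, forcing one to deal with delta-function second derivatives or with an explicit break of the integral into smooth pieces. The neuron-partition analysis sidesteps this by guaranteeing that for $t\ge 1$ each segment lives in a single smooth region. A secondary subtlety is that the parameter bounds in Lemma~\ref{lemma: proof sketch hitting time analysis} are stated at the iterates $\boldsymbol{\theta}(s)$, so one must verify that the same $\mathcal{O}(1)$ bounds persist along the convex hull of consecutive iterates; this is immediate since each $a_k(s)$ and $\boldsymbol{b}_k(s)$ depends convexly on $s$ on the segment and the bounds are convex functions of the parameters.
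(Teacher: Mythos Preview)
Your proposal is correct and follows essentially the same route as the paper. The paper's Lemma~\ref{lemma: Hessian upper bound ERM 1} uses the identical decomposition $\nabla^2\mathcal{L}=\mathbf{H}_A+\mathbf{H}_B$ with $\mathbf{H}_A$ the outer-product term and $\mathbf{H}_B$ the $e_i\nabla^2 f$ term, invokes (S5) exactly as you do to conclude $\sigma''(\boldsymbol{b}_k^\top\boldsymbol{x}_i)=0$ along each segment (so that $\|\nabla^2 f\|\le 1$), and bounds $\mathbf{H}_A$ via the parameter estimates (P1)--(P2); Lemma~\ref{lemma: loss quadratic upper bound 1} then derives the descent inequality by the same one-dimensional integral Taylor argument. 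The only cosmetic difference is that the paper tracks explicit constants (obtaining $\|\nabla^2\mathcal{L}\|\le 7/(2m)+2\le 3$) rather than writing $\mathcal{O}(1)$.
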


\subsection{Proof outline of Theorem \ref{thm: binary quadratic}}

First, we can estimate the initial neuron partition (Lemma \ref{lemma: proof sketch: estimate of the number of initial neural partition}) at the initialization. Then, we study the dynamics of neuron partition trained by GD (Lemma \ref{lemma: proof sketch: correct neurons remain correct}), which gives a precise directional characterization about each neuron for each sample. Next, by comparing with an exponentially growing hitting time, we can estimate the hitting time $T$ and the parameters (Lemma \ref{lemma: proof sketch hitting time analysis}).
Combining Lemma \ref{lemma: proof sketch: estimate of the number of initial neural partition}, \ref{lemma: proof sketch: correct neurons remain correct} and \ref{lemma: proof sketch hitting time analysis}, we derive a dynamical lower bound of each element of Gram matrix (Lemma \ref{lemma: proof sketch: positive Gram time t}), which induces our gradient lower bound (Lemma \ref{lemma: proof sketch: gradient lower bound}). Finally, by combining the gradient lower bound (Lemma \ref{lemma: proof sketch: gradient lower bound}) and the loss upper bound (Lemma \ref{lemma: proof sketch: quadratic upper bound}), we complete our proof. For more details, please refer to Appendix \ref{appendix: proof thm1}.

The proof of Theorem \ref{thm: one-hot} is similar to Theorem \ref{thm: binary quadratic} and put into Appendix \ref{appendix: proof thm2}. 
%But the proof of Theorem \ref{thm: binary global GD} and Theorem \ref{thm: binary global GD linear} is different from Theorem \ref{thm: binary quadratic}, especially in the late stage. We will discuss the differences in the next subsection.

\subsection{Proof outline of Theorem \ref{thm: binary global GD} and \ref{thm: binary global GD linear}}

The proofs of Theorem \ref{thm: binary global GD} and \ref{thm: binary global GD linear} still rely on the dynamical analysis of neuron partition for each sample. 
First, the estimate of initial neuron partition (Lemma \ref{lemma: proof sketch: estimate of the number of initial neural partition}) also holds, and the dynamical analysis of neuron partition in Lemma \ref{lemma: proof sketch: correct neurons remain correct} holds for any $t\geq1$ (Lemma \ref{lemma: correct neurons remain correct stage I} and \ref{lemma: correct neurons remain correct stage II}).
Second, we can prove that the structure and lower bound of Gram matrix in Lemma \ref{lemma: proof sketch: positive Gram time t} hold for any $t\geq1$ (Lemma \ref{lemma: lower bound of Gram binary}), which induces the gradient lower bound below. 
Finally, combining the estimate of gradient lower bound, Hessian upper bound and loss upper bound, we complete our proof. For more details, please refer to Appendix \ref{appendix: proof global convergence} and \ref{appendix: proof global convergence linear}. 
%{\red this seems still similar}

\begin{lemma}[Informal Lemma \ref{lemma: gradient lower bound global GD}]\label{lemma: proof sketch: gradient lower bound global convergence}
Under the condition of Theorem \ref{thm: binary global GD linear}, with high probability, we have the following lower bound for any $t\geq1$: $\left\|\nabla \mathcal{L}(\boldsymbol{\theta}(t))\right\|\gtrsim\mathcal{L}(\boldsymbol{\theta}(t))$, where the absolute constant hidden by $\gtrsim$ is $\sqrt{V}$, defined in \eqref{equ: V}.
\end{lemma}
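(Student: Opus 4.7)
The plan is to write $\|\nabla\mathcal{L}(\boldsymbol{\theta}(t))\|^2$ as a non-negative quadratic form in a ``pseudo-error'' vector, bound that quadratic form below using the Gram-matrix lower bound inherited from the neuron-partition analysis, and then convert back to $\mathcal{L}(\boldsymbol{\theta}(t))^2$ via Assumption~\ref{def: exponential type loss}(iv).

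First, since only $\{\boldsymbol{b}_k\}_{k\in[m]}$ is trained in the setting of Theorem~\ref{thm: binary global GD linear}, the output weights are frozen with $a_k^2\equiv 1/m$. A direct chain-rule calculation then gives
\begin{equation*}
\|\nabla\mathcal{L}(\boldsymbol{\theta}(t))\|^2 \;=\; \sum_{k=1}^m \|\nabla_{\boldsymbol{b}_k}\mathcal{L}\|^2 \;=\; \frac{1}{n^2}\sum_{i,j=1}^n \tilde{e}_i\tilde{e}_j\, y_iy_j\, G_{ij}(t),
\end{equation*}
where $\tilde{e}_i(t):=-\tilde{\ell}'(y_if(\boldsymbol{x}_i;\boldsymbol{\theta}(t)))\geq 0$ and $G_{ij}(t):=\frac{1}{m}\sum_k \sigma'(\boldsymbol{b}_k(t)^\top\boldsymbol{x}_i)\sigma'(\boldsymbol{b}_k(t)^\top\boldsymbol{x}_j)\boldsymbol{x}_i^\top\boldsymbol{x}_j$ is exactly the per-sample Gram matrix of Lemma~\ref{lemma: proof sketch: positive Gram time t}. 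Setting $\tilde{\boldsymbol{e}}_y:=(y_1\tilde{e}_1,\ldots,y_n\tilde{e}_n)^\top$, this is $\frac{1}{n^2}\tilde{\boldsymbol{e}}_y^\top \mathbf{G}(t)\tilde{\boldsymbol{e}}_y$. Under Assumption~\ref{ass: data separation}(i) we have $y_iy_j\boldsymbol{x}_i^\top\boldsymbol{x}_j\geq 0$ for every pair, so each entry of $\mathbf{M}(t):=(y_iy_jG_{ij}(t))_{ij}$ is non-negative.

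Second, I invoke the extension of the neuron-partition analysis to all $t\geq 1$: Lemma~\ref{lemma: correct neurons remain correct stage I} and Lemma~\ref{lemma: correct neurons remain correct stage II} upgrade the early-stage statements (S1)-(S5) to the entire trajectory, and Lemma~\ref{lemma: lower bound of Gram binary} then shows that $\mathbf{M}(t)$ inherits the block-diagonal structure and eigenvalue bound of Lemma~\ref{lemma: proof sketch: positive Gram time t} uniformly in $t$. Concretely, $\mathbf{M}(t)\succeq V\cdot\mathbf{I}_n$ where $V$ is the constant in \eqref{equ: V}, with the $\lambda_{\min}(\mathbf{X}_{+}^\top\mathbf{X}_+)\wedge\lambda_{\min}(\mathbf{X}_{-}^\top\mathbf{X}_-)$ term (or the weaker same-class inner-product bound $\frac{2}{n}+\frac{n-2}{n}\gamma$) coming from restricting the Gram expansion to the $\Omega(m)$ neurons that are true-living for every sample in each class, while the prefactor $\tfrac{1}{16}(\tfrac{1}{2}-\sqrt{8\log(n^2/\delta)/m})$ is the concentration loss from Lemma~\ref{lemma: proof sketch: estimate of the number of initial neural partition}. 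This gives $\|\nabla\mathcal{L}(\boldsymbol{\theta}(t))\|^2 \geq \frac{V}{n^2}\|\tilde{\boldsymbol{e}}\|^2$. Next, the same dynamical analysis yields $y_if(\boldsymbol{x}_i;\boldsymbol{\theta}(t))\geq 0$ for every $i$ and every $t\geq 1$ (after the first step drives the iterate into the correctly-classifying region), so Assumption~\ref{def: exponential type loss}(iv) applies pointwise: $\tilde{e}_i\geq g_a\,\tilde{\ell}(y_if(\boldsymbol{x}_i;\boldsymbol{\theta}(t)))$. Cauchy-Schwarz then gives
\begin{equation*}
\|\tilde{\boldsymbol{e}}\|^2 \;\geq\; g_a^2\sum_i \tilde{\ell}(y_if_i)^2 \;\geq\; \frac{g_a^2}{n}\Bigl(\sum_i \tilde{\ell}(y_if_i)\Bigr)^2 \;=\; g_a^2\,n\,\mathcal{L}(\boldsymbol{\theta}(t))^2,
\end{equation*}
and combining produces $\|\nabla\mathcal{L}(\boldsymbol{\theta}(t))\|^2\geq (Vg_a^2/n)\mathcal{L}(\boldsymbol{\theta}(t))^2$, which is the claim after absorbing the $g_a$ and $1/\sqrt{n}$ factors into the constant hidden by $\gtrsim$ alongside $\sqrt{V}$.

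The main obstacle is the second step: the Gram lower bound of Lemma~\ref{lemma: proof sketch: positive Gram time t} was only proved up to the early-stage hitting time $T$, and in the global-convergence setting we need it for every $t\geq 1$. Promoting it requires verifying that the activation-pattern invariants (S1)-(S5) persist after the exponential comparison $T_e$ stops controlling the magnitudes of $\boldsymbol{b}_k(t)$ and $f(\boldsymbol{x}_i;\boldsymbol{\theta}(t))$. In the input-layer-only setting this is feasible because $a_k$ never changes sign and each $\boldsymbol{b}_k$ is pushed by a sign-stable combination of samples; this is precisely why the present theorem attains exponential (rather than polynomial) convergence, and why the trajectory-wide Gram bound can be established without the elaborate hitting-time bookkeeping of the joint two-layer analysis.
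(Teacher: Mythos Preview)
Your proposal follows the paper's proof of Lemma~\ref{lemma: gradient lower bound global GD} closely: Gram quadratic-form expansion in the pseudo-errors $\tilde{e}_i=-\tilde{\ell}'(y_if_i)$, trajectory-wide neuron partition and block-diagonal structure (Lemmas~\ref{lemma: correct neurons remain correct stage I}--\ref{lemma: correct neurons remain correct stage II} and~\ref{lemma: lower bound of Gram binary}), correct classification to trigger Assumption~\ref{def: exponential type loss}(iv), and Cauchy--Schwarz to reach $\mathcal{L}^2$.

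The one imprecision is the claim $\mathbf{M}(t)\succeq V\mathbf{I}_n$. Lemma~\ref{lemma: lower bound of Gram binary} supplies only the \emph{entry-wise} inequality $G_{ij}(t)\geq c\,\boldsymbol{x}_i^\top\boldsymbol{x}_j$ within each class (and $G_{ij}=0$ across classes), and entry-wise domination does not imply a PSD ordering. The paper does not assert $\succeq$; instead it uses $\tilde{e}_i\geq 0$ to push the entry-wise bound through the non-negative sum $\sum_{ij}\tilde{e}_i\tilde{e}_j G_{ij}\geq c\sum_{ij\text{ same class}}\tilde{e}_i\tilde{e}_j\,\boldsymbol{x}_i^\top\boldsymbol{x}_j$, and only then lower-bounds the resulting genuine $\mathbf{X}_\pm^\top\mathbf{X}_\pm$ quadratic form either via $\lambda_{\min}(\mathbf{X}_\pm^\top\mathbf{X}_\pm)$ or via the separate $\gamma$-based argument that produces the $\frac{2}{n}+\frac{n-2}{n}\gamma$ term. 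With that fix your argument coincides with the paper's; note also that the $1/\sqrt{n}$ you absorb at the end is not supposed to appear once you go through the $\gamma$-branch (which keeps an $n$-independent constant in $V$), so it is worth tracking the two branches separately rather than passing through $\|\tilde{\boldsymbol{e}}\|^2$.
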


In \cite{lyu2019gradient, chatterji2021doesA}, gradient lower bounds are derived for the late stage of training when $\mathcal{L}(\boldsymbol{\theta}(t))<1/n$, and take the form $\left\|\nabla \mathcal{L}(\boldsymbol{\theta}(t))\right\|\gtrsim\mathcal{L}(\boldsymbol{\theta}(t))\log\big(1/\mathcal{L}(\boldsymbol{\theta}(t))\big)/\left\|\boldsymbol{\theta}(t)\right\|$. 
Our lower bound, instead, works on the whole training process. With this stronger gradient lower bound, we can show the exponential convergence in Theorem \ref{thm: binary global GD linear}. 
%which is faster than the polynomial rate in \citep{chatterji2021doesA}.

Another important technical detail is the stability of GD, i.e., the learning rate should be upper bounded by the quantity related with the top eigenvalue of the Hessian. Our analysis also considered this factor, but implicitly.
For global convergence results, the loss function we consider is exponential-type, whose Hessian can be controlled by the loss value. For example, for Theorem \ref{thm: binary global GD}, there exists an absolute constant $C>0$, s.t. $\left\|\nabla^2\mathcal{L}(\boldsymbol{\theta}(t))\right\|\leq Ct\mathcal{L}(\boldsymbol{\theta}(t))$ for any $t\geq1$, please refer to Lemma \ref{lemma: hessian upper bound GD trajectory} in the appendix. Therefore, we can transform the Hessian-controlled learning rate condition into the loss-controlled learning rate condition in Theorem \ref{thm: binary global GD}.

\section{Experiments}\label{section: exps appendix}

 {\bf MNIST and CIFAR-10 experiments.} 
 As we mentioned in Section 4.2, our theory about early stage convergence applies to a wide range of dataset (Assumption \ref{ass: data concentration}) such as MNIST and CIFAR-10. We use the two datasets (with normalization) to compare the experimental result with Theorem \ref{thm: one-hot}. Specifically, we use the first 1000 data in MNIST dataset and the first 1000 data in CIFAR-10 dataset, separately. (The two datasets with normalization $\left\|\boldsymbol{x}\right\|=1$ both satisfy Assumption \ref{ass: data concentration}.) And we use the two-layer ReLU network with the logistic loss $\ell(\boldsymbol{y}_1,\boldsymbol{y}_2)=\log(1+\exp(-\boldsymbol{y}_1^\top\boldsymbol{y}_2))$. $m,\kappa,\eta$ are choosen by Theorem \ref{thm: one-hot}. We study the change of our bounds under different network width $m$ and learning rate $\eta$. All experiments are conducted on a MacBook pro 13" only using CPU. See the code at \href{https://github.com/wmz9/Early_Stage_Convergence_NeurIPS2022}{\texttt{https://github.com/wmz9/Early\_Stage\_Convergence\_NeurIPS2022}}.

\begin{table}[ht]
\begin{center}
\label{table: MNIST}
\caption{Results of MNIST and CIFAR-10 experiments. In the first table, we fix $\eta=0.01$ and change $m$; in the second one, we fix $m=200$ and change $\eta$.}
\begin{tabular}{c|c|c|c|c}
      \hline
     $m$ & 100 & 200 & 500 & 1000 \\  \hline
     Our hitting iteration & \multicolumn{4}{c}{34}
     \\ \hline	
     Realistic loss descent (CIFAR-10) & $3.45\times10^{-1}$ & $5.34\times10^{-1}$ & $6.11\times10^{-1}$ & $6.53\times10^{-1}$ \\ \hline	
     Realistic loss descent (MINST) & $3.89\times10^{-1}$ & $4.78\times10^{-1}$ & $5.97\times10^{-1}$ & $6.45\times10^{-1}$ 
     \\ \hline	
    Our loss descent bound & \multicolumn{4}{c}{$2.63\times10^{-1}$} \\ \hline	
\end{tabular}
\begin{tabular}{c|c|c|c|c}
      \hline
     $\eta$ & 0.01 & 0.005 & 0.002 & 0.001 \\  \hline
     Our hitting iteration & 34 & 69 & 173 & 346 
     \\ \hline
     Realistic loss descent (CIFAR-10) & $4.89\times10^{-1}$ & $5.34\times10^{-1}$ & $5.43\times10^{-1}$ & $5.37\times10^{-1}$ \\\hline
     Realistic loss descent (MINST) & $4.78\times10^{-1}$ & $4.98\times10^{-1}$ & $4.95\times10^{-1}$ & $4.92\times10^{-1}$ 
     \\\hline	
    Our loss descent bound & \multicolumn{4}{c}{$2.63\times10^{-1}$} \\ \hline	
\end{tabular}
\end{center}
\end{table}
    
The results in Table \ref{table: MNIST} show that our loss descent bound (5-th row in each table) is {\bf relatively tight}, basically in the same order of magnitude as the realistic loss descent (3-rd row and 4-th row in each table). And our loss descent bound does not change with the choice of different datasets.

\section{Conclusion and Future Work}

In this paper, we study the convergence of GD and SGD when training mildly parameterized neural networks. On one hand, we show early stage convergence for a wide range of loss functions, optimization algorithms, and training data distributions. On the other hand, under some assumptions on the loss function and data distribution, we show global convergence of GD. Our analysis can be extended to the minimization of population risk (See Appendix \ref{appendix: results PRM}). 

The theoretical understanding of the training of neural networks, especially for neural networks with practical sizes, still has a long way to go. For instance, although the late stage training for exponential loss function is simple and clear, that for other losses is much more complicated. Phenomena like unstable convergence~ \citep{wu2018sgd, cohen2021gradient, ahn2022understanding, ma2022multiscale} happen. Better understanding of these phenomena during training is an important direction of future work.

% \section*{References}
% \small

% \bibliographystyle{plainnat}
% \bibliography{neurips_2022}

% References follow the acknowledgments. Use unnumbered first-level heading for
% the references. Any choice of citation style is acceptable as long as you are
% consistent. It is permissible to reduce the font size to \verb+small+ (9 point)
% when listing the references.
% Note that the Reference section does not count towards the page limit.
% \medskip

%%%%%%%%%%%%%%%%%%%%%%%%%%%%%%%%%%%%%%%%%%%%%%%%%%%%%%%%%%%%
\section*{Checklist}

%%% BEGIN INSTRUCTIONS %%%
% The checklist follows the references.  Please
% read the checklist guidelines carefully for information on how to answer these
% questions.  For each question, change the default \answerTODO{} to \answerYes{},
% \answerNo{}, or \answerNA{}.  You are strongly encouraged to include a {\bf
% justification to your answer}, either by referencing the appropriate section of
% your paper or providing a brief inline description.  For example:
% \begin{itemize}
%   \item Did you include the license to the code and datasets? \answerYes{See Section~\ref{gen_inst}.}
%   \item Did you include the license to the code and datasets? \answerNo{The code and the data are proprietary.}
%   \item Did you include the license to the code and datasets? \answerNA{}
% \end{itemize}
% Please do not modify the questions and only use the provided macros for your
% answers.  Note that the Checklist section does not count towards the page
% limit.  In your paper, please delete this instructions block and only keep the
% Checklist section heading above along with the questions/answers below.
%%% END INSTRUCTIONS %%%

\begin{enumerate}

\item For all authors...
\begin{enumerate}
  \item Do the main claims made in the abstract and introduction accurately reflect the paper's contributions and scope?
  \answerYes{}
    % \answerTODO{}
  \item Did you describe the limitations of your work?
    \answerYes{The global convergence results in Section~\ref{section: global} are built on restrictive assumptions of the loss function and training data.} 
    % \answerTODO{}
  \item Did you discuss any potential negative societal impacts of your work?
  \answerNA{}
    % \answerTODO{}
  \item Have you read the ethics review guidelines and ensured that your paper conforms to them?
    \answerYes{}
    % \answerTODO{}
\end{enumerate}

\item If you are including theoretical results...
\begin{enumerate}
  \item Did you state the full set of assumptions of all theoretical results?
    \answerYes{In Section~\ref{section: pre},~\ref{section: early stage} and \ref{section: global}.
    }
        \item Did you include complete proofs of all theoretical results?
          \answerYes{In Appendix~\ref{appendix: proof thm1}, \ref{appendix: proof thm2}, \ref{appendix: proof global convergence}, \ref{appendix: proof global convergence linear}, \ref{appendix: proof PRM} and \ref{app: inequalities}.}
    % \answerTODO{}
\end{enumerate}

\item If you ran experiments...
\begin{enumerate}
  \item Did you include the code, data, and instructions needed to reproduce the main experimental results (either in the supplemental material or as a URL)?
    \answerYes{As an URL in Section \ref{section: exps appendix}.}
  \item Did you specify all the training details (e.g., data splits, hyperparameters, how they were chosen)?
    \answerYes{In Section \ref{section: exps appendix}.}
        \item Did you report error bars (e.g., with respect to the random seed after running experiments multiple times)?
    \answerNA{}
        \item Did you include the total amount of compute and the type of resources used (e.g., type of GPUs, internal cluster, or cloud provider)?
    \answerYes{In Section \ref{section: exps appendix}.}
\end{enumerate}

\item If you are using existing assets (e.g., code, data, models) or curating/releasing new assets...
\begin{enumerate}
  \item If your work uses existing assets, did you cite the creators?
    \answerNA{}
  \item Did you mention the license of the assets?
    \answerNA{}
  \item Did you include any new assets either in the supplemental material or as a URL?
    \answerNA{}
  \item Did you discuss whether and how consent was obtained from people whose data you're using/curating?
    \answerNA{}
  \item Did you discuss whether the data you are using/curating contains personally identifiable information or offensive content?
    \answerNA{}
\end{enumerate}

\item If you used crowdsourcing or conducted research with human subjects...
\begin{enumerate}
  \item Did you include the full text of instructions given to participants and screenshots, if applicable?
    \answerNA{}
  \item Did you describe any potential participant risks, with links to Institutional Review Board (IRB) approvals, if applicable?
    \answerNA{}
  \item Did you include the estimated hourly wage paid to participants and the total amount spent on participant compensation?
    \answerNA{}
\end{enumerate}

\end{enumerate}

%%%%%%%%%%%%%%%%%%%%%%%%%%%%%%%%%%%%%%%%%%%%%%%%%%%%%%%%%%%%

%%%
\newpage
%%%

\appendix

\part{}

\localtableofcontents

\newpage

\section{Proof details of Theorem \ref{thm: binary quadratic}}\label{appendix: proof thm1}

\subsection{Preparation}

\noindent{\bfseries One-step updating.}\footnote{Although ReLU $\sigma(\cdot)$ is not differentiable at $0$, we just define $\sigma'(0)=0$, and this is indeed what is widely used in
practice and theoretical analysis \citep{du2018gradient, du2019gradient}.}
\begin{equation}\label{equ: one step update 1}
\begin{gathered}
    a_k(t+1)
    =
    a_k(t)-\eta\frac{1}{n}\sum_{i=1}^n\Big(f(\boldsymbol{x}_i;\boldsymbol{\theta}(t))-y_i\Big)\sigma\Big(\boldsymbol{b}_k^\top(t)\boldsymbol{x}_i\Big);
    \\
    \boldsymbol{b}_k(t+1)=
    \boldsymbol{b}_k(t)-\eta\frac{1}{n}\sum_{i=1}^n\Big(f(\boldsymbol{x}_i;\boldsymbol{\theta}(t))-y_i\Big)\mathbb{I}\Big\{\boldsymbol{b}_k^\top(t)\boldsymbol{x}_i>0\Big\}a_k(t)\boldsymbol{x}_i.
    \end{gathered}
\end{equation}

\noindent
\textbf{Hitting time.} We define the hitting time $T$ as:
\begin{equation}\label{equ: hitting time def 1}
\begin{aligned}
  T:= \sup\Big\{t\in\mathbb{N}:\ 
    &\max_{i\in[n]}\left|f(\boldsymbol{x}_i;\boldsymbol{\theta}(s))\right|\leq1, \text{and}
    \\& a_k(s)a_k(0)>0,\forall k\in[m],\forall 0\leq s\leq t+1\Big\}.
\end{aligned}
\end{equation}

\noindent
\textbf{Data concentration.}
We define the following constants $\gamma_1,\gamma_2$ about the concentration of data:
\begin{equation}\label{equ: gamma definition}
\begin{gathered}
\gamma_1:=\frac{1}{n^2}\Big(\sum_{i\in[\frac{n}{2}]}\sum_{j\in[\frac{n}{2}]}\boldsymbol{x}_i^\top\boldsymbol{x}_j\big(1-\frac{\arccos(\boldsymbol{x}_i^\top\boldsymbol{x}_j)}{\pi}\big)+\sum_{i\in[n]-[\frac{n}{2}]}\sum_{j\in[n]-[\frac{n}{2}]}\boldsymbol{x}_i^\top\boldsymbol{x}_j\big(1-\frac{\arccos(\boldsymbol{x}_i^\top\boldsymbol{x}_j)}{\pi}\big)\Big),
\\
\gamma_2:=
\frac{1}{n^2}\Big(\sum_{i\in[\frac{n}{2}]}\sum_{j\in[\frac{n}{2}]}\boldsymbol{x}_i^\top\boldsymbol{x}_j+\sum_{i\in[n]-[\frac{n}{2}]}\sum_{j\in[n]-[\frac{n}{2}]}\boldsymbol{x}_i^\top\boldsymbol{x}_j\Big).    
\end{gathered}
\end{equation}

We know that $\gamma_1,\gamma_2$ are only about the data distribution and satisfy $\gamma_2/2\leq\gamma_1\leq\gamma_2$.

To make our proof more clear, without loss of generality, we use the following specific numbers instead of progressive expression about $\eta\leq\mathcal{O}(1)$, $\kappa\leq\mathcal{O}(\eta\mu_0/n)$ and $m\geq{\Omega}(\log(n/\delta))$:
\begin{equation}\label{equ: parameter selection 1}
    \begin{gathered}
      \eta=\frac{1}{100},
    \\
    \kappa\leq\min\Big\{\frac{1}{1000},\frac{\eta}{2000},\frac{\eta}{3n},\frac{\eta\mu_0}{3n}\Big\},
\\
m\geq \max\Big\{144\log(2n^2/\delta),4\Big\}.
    \end{gathered}
\end{equation}

\subsection{Fine-grained dynamical analysis of neuron partition}
An important technique in our proof is the fine-grained analysis of each neuron during training.

For the $i$-th data $(\boldsymbol{x}_i,y_i)$, we divide all neurons into four categories and study them separately.
We denote the true-living neurons $\mathcal{TL}_i(t)$, the true-dead neurons $\mathcal{TD}_i(t)$, the false-living neurons $\mathcal{FL}_i(t)$ and the false-dead neurons $\mathcal{FD}_i(t)$ at time $t$ as:
\begin{gather*}
    \mathcal{TL}_{i}(t):=\Big\{k\in[m]:y_ia_k(t)>0, \boldsymbol{b}_k(t)^\top\boldsymbol{x}_i> 0\Big\},
    \\
    \mathcal{TD}_{i}(t):=\Big\{k\in[m]:y_ia_k(t)>0,\boldsymbol{b}_k(t)^\top\boldsymbol{x}_i\leq 0\Big\},
    \\
    \mathcal{FL}_{i}(t):=\Big\{k\in[m]:y_ia_k(t)<0, \boldsymbol{b}_k(t)^\top\boldsymbol{x}_i> 0\Big\},
    \\
    \mathcal{FD}_{i}(t):=\Big\{k\in[m]:y_ia_k(t)<0,\boldsymbol{b}_k(t)^\top\boldsymbol{x}_i\leq 0\Big\}.
\end{gather*}

It is easy to verify the following lemma about the partition of all neurons.

\begin{lemma}[Partition]\label{lemma: neuron partition 1}
For any $t\leq T$ and $i\in[n]$, we have:
\begin{gather*}
    [m]=\mathcal{TL}_{i}(t)\bigcup\mathcal{TD}_{i}(t)\bigcup\mathcal{FL}_{i}(t)\bigcup\mathcal{FD}_{i}(t).
\end{gather*}
\end{lemma}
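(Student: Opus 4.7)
The plan is to show that the four sets are pairwise disjoint and that their union exhausts $[m]$, by observing that the classification of a neuron $k$ with respect to sample $i$ is determined by two independent binary conditions: the sign of $y_i a_k(t)$ and the sign of $\boldsymbol{b}_k(t)^\top \boldsymbol{x}_i$. The second condition obviously trichotomizes into $>0$, $=0$, and $<0$, of which the definitions lump $=0$ together with $<0$ via the ``$\leq 0$'' clause, so the two options $\boldsymbol{b}_k(t)^\top \boldsymbol{x}_i > 0$ and $\boldsymbol{b}_k(t)^\top \boldsymbol{x}_i \leq 0$ cover every neuron. Thus all four sets are disjoint (they differ in at least one of the two sign conditions) and to prove the union is all of $[m]$ it suffices to rule out the possibility that $y_i a_k(t) = 0$ for any $k$.

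The key point, and the only nontrivial step, is to establish $a_k(t) \neq 0$ for every $k \in [m]$ and every $t \leq T$. This follows directly from the definition of the hitting time $T$ in \eqref{equ: hitting time def 1}: by that definition, for every $0 \leq s \leq T+1$ and every $k \in [m]$ we have $a_k(s) a_k(0) > 0$. In particular, $a_k(s) a_k(0) \neq 0$, and since $a_k(0) \in \{+1/\sqrt{m}, -1/\sqrt{m}\}$ is nonzero by the random initialization \eqref{equ: random initialization}, we conclude $a_k(s) \neq 0$. Combined with $y_i \in \{+1, -1\}$ from Assumption \ref{ass: data separation}(i), this yields $y_i a_k(t) \neq 0$, so exactly one of $y_i a_k(t) > 0$ or $y_i a_k(t) < 0$ holds.

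Putting the two observations together, every $k \in [m]$ falls into exactly one of the four combinations of the two sign conditions, which are precisely the four sets $\mathcal{TL}_i(t)$, $\mathcal{TD}_i(t)$, $\mathcal{FL}_i(t)$, $\mathcal{FD}_i(t)$. This proves the partition identity. There is no substantive obstacle; the lemma is essentially a bookkeeping statement whose only content is the nonvanishing of the output-layer weights, which is precisely what the hitting time was defined to enforce.
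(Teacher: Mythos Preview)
Your proposal is correct and follows essentially the same approach as the paper: the paper's proof consists of the single sentence ``From the definition of $T$, we know $a_k(t)\ne 0$, which implies the Lemma,'' and you have simply spelled out why this implication holds.
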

\begin{proof}[Proof of Lemma \ref{lemma: neuron partition 1}]\ \\
From the definition of $T$, we know $a_k(t)\ne 0$, which implies the Lemma.
\end{proof}

\begin{lemma}[Estimate of the initial neural partition]\label{lemma: estimate of the number of initial neural partition 1} With probability at least $1-\delta$, we have the following estimates
\[\Bigg|\frac{1}{m}\text{card}\Big(\mathcal{TL}_i(0)\cap\mathcal{TL}_j(0)\Big)-\frac{\pi-\arccos(\boldsymbol{x}_i^\top\boldsymbol{x}_j)}{4\pi}\Bigg|\leq
\sqrt{\frac{\log(n^2/\delta)}{2m}},\text{ for any $i,j$ in the same class},\]
and the same inequalities also hold for $\mathcal{TL}_i(0)\cap\mathcal{TD}_j(0)$, $\mathcal{TD}_i(0)\cap\mathcal{TL}_j(0)$, and $\mathcal{TD}_i(0)\cap\mathcal{TD}_j(0)$.
\end{lemma}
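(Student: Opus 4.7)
The key structural fact is that the pairs $\{(a_k(0), \boldsymbol{b}_k(0))\}_{k\in[m]}$ are i.i.d.\ across neurons. Fix a pair $(i,j)$ in the same class. Then the indicators $\mathbb{I}\{k \in \mathcal{TL}_i(0) \cap \mathcal{TL}_j(0)\}$ for $k = 1, \ldots, m$ are i.i.d.\ Bernoulli, and $\mathrm{card}(\mathcal{TL}_i(0) \cap \mathcal{TL}_j(0))$ is their sum. My plan is (i) to compute the single-neuron success probability $p_{ij}$ exactly, (ii) apply Hoeffding's inequality to the sum of bounded i.i.d.\ variables, and (iii) union bound over the at most $n^2$ pairs and the four intersection types.

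\textbf{Per-neuron probability.} Since $a_k(0)$ is independent of $\boldsymbol{b}_k(0)$, I split
\[
p_{ij} \;=\; \Pr\!\big[y_i a_k(0) > 0\big] \cdot \Pr\!\big[\boldsymbol{b}_k(0)^\top \boldsymbol{x}_i > 0,\ \boldsymbol{b}_k(0)^\top \boldsymbol{x}_j > 0\big].
\]
The first factor equals $1/2$ by the symmetry of the Rademacher sign. For the second, the rotational invariance of the isotropic Gaussian $\boldsymbol{b}_k(0) \sim \mathcal{N}(\mathbf{0}, \tfrac{\kappa^2}{md}\mathbf{I}_d)$ lets me project onto $\mathrm{span}\{\boldsymbol{x}_i, \boldsymbol{x}_j\}$ and reduce to the classical 2D Gaussian half-space computation, yielding $(\pi - \theta_{ij})/(2\pi)$, where $\theta_{ij} = \arccos(\boldsymbol{x}_i^\top \boldsymbol{x}_j)$ (treating the inputs as unit-normalized, which is harmless since only the angle matters). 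Multiplying gives $p_{ij} = (\pi - \arccos(\boldsymbol{x}_i^\top \boldsymbol{x}_j))/(4\pi)$, exactly the center appearing in the claim.

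\textbf{Concentration and other intersections.} Hoeffding's inequality for $m$ i.i.d.\ Bernoulli$(p_{ij})$ variables yields, for any $t>0$,
\[
\Pr\!\left[\,\bigg|\tfrac{1}{m}\mathrm{card}\!\big(\mathcal{TL}_i(0) \cap \mathcal{TL}_j(0)\big) - p_{ij}\bigg| > t \,\right] \;\leq\; 2 \exp\!\big(-2mt^2\big).
\]
Setting $t = \sqrt{\log(n^2/\delta)/(2m)}$ and union-bounding over the $O(n^2)$ pairs $(i,j)$ gives the stated inequality with probability at least $1-\delta$. The three remaining intersections are handled identically: the same 2D Gaussian argument gives single-neuron probabilities $\arccos(\boldsymbol{x}_i^\top \boldsymbol{x}_j)/(4\pi)$ for the mixed TL/TD pairs and $(\pi - \arccos(\boldsymbol{x}_i^\top \boldsymbol{x}_j))/(4\pi)$ for $\mathcal{TD}_i \cap \mathcal{TD}_j$, while the Hoeffding step is unchanged. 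The only non-routine ingredient is the half-space probability for a 2D isotropic Gaussian, which is a classical identity, so I do not anticipate any substantive obstacle; the proof is essentially a tidy accounting of independent signs composed with a standard concentration inequality.
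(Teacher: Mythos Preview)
Your proposal is correct and follows essentially the same route as the paper: recognize the per-neuron indicators as i.i.d.\ Bernoulli, compute the single-neuron probability via independence of $a_k(0)$ and $\boldsymbol{b}_k(0)$ together with the standard two-dimensional Gaussian orthant formula, apply Hoeffding, and union-bound over the $O(n^2)$ same-class pairs. One small clarification: the lemma's phrase ``the same inequalities also hold'' is slightly ambiguous, and you are right to point out that the mixed intersections $\mathcal{TL}_i\cap\mathcal{TD}_j$ and $\mathcal{TD}_i\cap\mathcal{TL}_j$ concentrate around $\arccos(\boldsymbol{x}_i^\top\boldsymbol{x}_j)/(4\pi)$ rather than $(\pi-\arccos(\boldsymbol{x}_i^\top\boldsymbol{x}_j))/(4\pi)$; the paper's proof does not spell this out, so your remark is a useful refinement.
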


\begin{proof}[Proof of Lemma \ref{lemma: estimate of the number of initial neural partition 1}]\ \\
We define a matrix $\mathbf{P}(0)=({\rm P}_{i,j}(0))\in\mathbb{R}^{n\times n}$ as:
\[
{\rm P}_{i,j}(0)=\frac{1}{m}\sum_{k=1}^m\mathbb{I}\big\{a_k(0)y_i>0\big\}\mathbb{I}\big\{a_k(0)y_j>0\big\}\mathbb{I}\big\{\boldsymbol{b}_k(0)^\top\boldsymbol{x}_i>0\big\}\mathbb{I}\big\{\boldsymbol{b}_k(0)^\top\boldsymbol{x}_j>0\big\}.
\]
It is easy to verify that:
\[
\text{card}\Big(\mathcal{TL}_i(0)\cap\mathcal{TL}_j(0)\Big)=m{\rm P}_{i,j}(0)
\]

\textbf{(I).}
For $i,j$ in the same class, we can let $i,j\in[n/2]$ (the other case is similar).
Now we consider the expectation:
\begin{align*}
{\rm P}_{i,j}^{\infty}:=&\mathbb{E}_{a\sim\mathbb{U}\{\pm1\},\boldsymbol{b}\sim\mathcal{N}(\mathbf{0},\frac{\kappa^2}{md}\mathbf{I}_d)}\Bigg[\mathbb{I}\big\{a>0\big\}\mathbb{I}\big\{\boldsymbol{b}^\top\boldsymbol{x}_i>0\big\}\mathbb{I}\big\{\boldsymbol{b}^\top\boldsymbol{x}_j>0\big\}\Bigg]
\\=&
\mathbb{E}_{a\sim{\rm Unif}\{\pm1\}}\Bigg[
\mathbb{E}_{\boldsymbol{b}\sim\mathcal{N}(\mathbf{0},\frac{\kappa^2}{md}\mathbf{I}_d)}\Big[\mathbb{I}\big\{a>0\big\}\mathbb{I}\big\{\boldsymbol{b}^\top\boldsymbol{x}_i>0\big\}\mathbb{I}\big\{\boldsymbol{b}^\top\boldsymbol{x}_j>0\big\}\Big|a\Big]\Bigg]
\\=&
\frac{1}{2}
\mathbb{E}_{\boldsymbol{b}\sim\mathcal{N}(\mathbf{0},\frac{\kappa^2}{md}\mathbf{I}_d)}\Big[\mathbb{I}\big\{\boldsymbol{b}^\top\boldsymbol{x}_i>0\big\}\mathbb{I}\big\{\boldsymbol{b}^\top\boldsymbol{x}_j>0\big\}\Big]
\\=&\frac{1}{2}
\mathbb{E}_{\boldsymbol{b}\sim\mathcal{N}(\mathbf{0},\mathbf{I}_d)}\Big[\mathbb{I}\big\{\boldsymbol{b}^\top\boldsymbol{x}_i>0\big\}\mathbb{I}\big\{\boldsymbol{b}^\top\boldsymbol{x}_j>0\big\}\Big]
\\=&
\frac{\pi-\arccos(\boldsymbol{x}_i^\top\boldsymbol{x}_j)}{4\pi}.
\end{align*}
By Hoeffding's Inequality (Lemma \ref{lemma: hoeffding}), we have:
\begin{align*}
    \mathbb{P}\Bigg(\left|{\rm P}_{i,j}(0)-{\rm P}_{i,j}^\infty\right|\geq\sqrt{\frac{\log(2/\delta)}{2m}}\Bigg)\leq\delta.
\end{align*}
Applying a union bound over all $i,j$ in the same class, we know that with probability at least $1-\delta$,
\[
\left|{\rm P}_{i,j}(0)-{\rm P}_{i,j}^\infty\right|\leq\sqrt{\frac{\log(n^2/\delta)}{2m}},\ \forall i,j\in[n],
\]
which means
\begin{align*}
\Bigg|\frac{1}{m}\text{card}\Big(\mathcal{TL}_i(0)\cap\mathcal{TL}_j(0)\Big)-\frac{\pi-\arccos(\boldsymbol{x}_i^\top\boldsymbol{x}_j)}{4\pi}\Bigg|\leq
\sqrt{\frac{\log(n^2/\delta)}{2m}},\text{ for any $i,j$ in the same class}.
\end{align*}

And the same inequalities also hold for $\mathcal{TL}_i(0)\cap\mathcal{TD}_j(0)$, $\mathcal{TD}_i(0)\cap\mathcal{TL}_j(0)$, and $\mathcal{TD}_i(0)\cap\mathcal{TD}_j(0)$.

\end{proof}

\begin{lemma}[Initial norm and prediction]\label{lemma: initial norm and prediction 1} With probability at least $1-2me^{-2d}$ we have:
\begin{gather*}
\left\|\boldsymbol{b}_k(0)\right\|\leq\frac{2\kappa}{\sqrt{m}},\ \forall k\in[m];\\
    |f(\boldsymbol{x}_i;\boldsymbol{\theta}(0))|\leq2\kappa,\ \forall i\in[n].
\end{gather*}

\end{lemma}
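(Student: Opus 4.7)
The lemma splits naturally into two parts: a probabilistic statement about $\|\boldsymbol{b}_k(0)\|$, followed by a deterministic consequence bounding the prediction $|f(\boldsymbol{x}_i;\boldsymbol{\theta}(0))|$. My plan is to handle them in that order, treating the norm bound as the only genuinely random step and reducing the prediction bound to a clean linear estimate.

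For the first inequality, write $\boldsymbol{b}_k(0) = \frac{\kappa}{\sqrt{md}}\boldsymbol{z}_k$ with $\boldsymbol{z}_k \sim \mathcal{N}(\mathbf{0}, \mathbf{I}_d)$, so that $\|\boldsymbol{b}_k(0)\| = \frac{\kappa}{\sqrt{md}}\|\boldsymbol{z}_k\|$ and it suffices to show $\|\boldsymbol{z}_k\| \leq 2\sqrt{d}$ with probability at least $1 - 2e^{-2d}$ per neuron. I would apply a standard Gaussian concentration bound (either Lipschitz concentration, using that $\|\cdot\|$ is $1$-Lipschitz and $\mathbb{E}\|\boldsymbol{z}_k\| \leq \sqrt{d}$, or an MGF / Chernoff estimate on $\chi^2_d$) calibrated so that the exponent works out to $2d$, and then take a union bound over $k \in [m]$ to obtain $\|\boldsymbol{b}_k(0)\| \leq 2\kappa/\sqrt{m}$ simultaneously for all $k$ with probability at least $1 - 2me^{-2d}$.

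The second inequality is deterministic on this event. Using $|a_k(0)| = 1/\sqrt{m}$, the fact that ReLU is $1$-Lipschitz with $\sigma(0) = 0$ (so $|\sigma(z)| \leq |z|$), Cauchy--Schwarz, and $\|\boldsymbol{x}_i\| \leq 1$, we get
\[
|f(\boldsymbol{x}_i;\boldsymbol{\theta}(0))| \;\leq\; \sum_{k=1}^m |a_k(0)|\,|\boldsymbol{b}_k(0)^\top\boldsymbol{x}_i| \;\leq\; \sum_{k=1}^m \frac{1}{\sqrt{m}}\,\|\boldsymbol{b}_k(0)\| \;\leq\; m \cdot \frac{1}{\sqrt{m}} \cdot \frac{2\kappa}{\sqrt{m}} \;=\; 2\kappa.
\]
I do not anticipate any real obstacle; the whole lemma reduces to one invocation of Gaussian concentration combined with a one-line triangle / Cauchy--Schwarz estimate. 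The only subtlety is picking the concentration inequality so that the tail constant $2e^{-2d}$ comes out exactly as stated, which is a matter of routine calibration rather than a conceptual difficulty.
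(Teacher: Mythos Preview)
Your proposal is correct and matches the paper's proof essentially line for line: the paper also invokes a Gaussian norm concentration bound to get $\mathbb{P}(\|\boldsymbol{b}_k(0)\|\leq 2\kappa/\sqrt{m})\geq 1-2e^{-2d}$, union-bounds over $k\in[m]$, and then derives the prediction bound via the same path-norm estimate $|f(\boldsymbol{x}_i;\boldsymbol{\theta}(0))|\leq\sum_k |a_k(0)||\boldsymbol{b}_k(0)^\top\boldsymbol{x}_i|\leq m\cdot\frac{1}{\sqrt m}\cdot\frac{2\kappa}{\sqrt m}=2\kappa$. If anything, your write-up is more explicit about the rescaling and the choice of concentration inequality than the paper's own proof.
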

\begin{proof}[Proof of Lemma \ref{lemma: initial norm and prediction 1}]\ \\
From the fact ${\boldsymbol{b}}_k(0)\sim\mathcal{N}(\mathbf{0},\frac{\kappa^2}{md}\mathbf{I}_d)$, we have the probability inequality:
\[
\mathbb{P}\Big(\left\|\boldsymbol{b}_k(0)\right\|\leq\frac{2\kappa}{\sqrt{m}}\Big)\geq 1 - 2\exp(-2d)
\]

Combining the uniform bound about $k\in[m]$, with probability at least $1-\frac{2m}{\exp(2d)}$ we have
\[
\left\|\boldsymbol{b}_k(0)\right\|\leq\frac{2\kappa}{\sqrt{m}},\ \forall k\in[m].
\]

From the path norm estimate of $f(\cdot;\cdot)$, we have:
\[
|f(\boldsymbol{x}_i;\boldsymbol{\theta}(0))|\leq\sum_{k=1}^m|a_k(0)||\boldsymbol{b}_k(0)^\top\boldsymbol{x}_i|\leq m\frac{1}{\sqrt{m}}\frac{2\kappa}{\sqrt{m}}=2\kappa.
\]

\end{proof}

\begin{lemma}[Dynamics of neuron partition]\label{lemma: correct neurons remain correct 1}
When the events in Lemma \ref{lemma: initial norm and prediction 1} happened,
 we have the following results for any $i\in[n]$ and $t\leq T$:\\
(S1) True-living neurons remain true-living:  $\mathcal{TL}_i(t)\subset\mathcal{TL}_i(t+1)$.\\
(S2) False-dead neurons remain false-dead:  $\mathcal{FD}_i(t)\subset\mathcal{FD}_i(t+1)$.\\
(S3) True-dead neurons turn true-living in the firt step ($t=0$): $\mathcal{TD}_i(0)\subset\mathcal{TL}_i(1)$.\\
(S4) False-living neurons turn false-dead in the firt step ($t=0$): $\mathcal{FL}_i(0)\subset\mathcal{FD}_i(1)$.\\
(S5) For any $i\in[n]$, $1\leq t\leq T$, $k\in[m]$ and $\boldsymbol{b}\in\overline{\boldsymbol{b}_k(t)\boldsymbol{b}_k(t+1)}$, we have $\text{sgn}(\boldsymbol{b}^\top\boldsymbol{x}_i)\equiv\text{sgn}(\boldsymbol{b}_k^\top(1)\boldsymbol{x}_i)\ne0$.
\end{lemma}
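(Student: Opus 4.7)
The unifying observation is that, for a fixed sample $i$ and after the sign normalization $y_i=+1$ (the $y_i=-1$ case is symmetric), identity~\eqref{equ: one step update 1} yields
\[
\boldsymbol{b}_k(t+1)^\top\boldsymbol{x}_i - \boldsymbol{b}_k(t)^\top\boldsymbol{x}_i
= -\frac{\eta\, a_k(t)}{n}\sum_{j=1}^{n}\bigl(f(\boldsymbol{x}_j;\boldsymbol{\theta}(t)) - y_j\bigr)\mathbb{I}\{\boldsymbol{b}_k(t)^\top\boldsymbol{x}_j > 0\}\,\boldsymbol{x}_j^\top\boldsymbol{x}_i,
\]
so the proof reduces to a sign/magnitude analysis of each summand. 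The definition of $T$ in~\eqref{equ: hitting time def 1} already guarantees $\mathrm{sgn}(a_k(t))=\mathrm{sgn}(a_k(0))$ for $t\leq T$, so the real content is to control the inner-product update.

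For (S1) and (S2), take $k\in\mathcal{TL}_i(t)$ with $y_i=a_k(t)=+1$. Since $t\le T$ forces $|f(\boldsymbol{x}_j;\boldsymbol{\theta}(t))|\leq 1$, the factor $f(\boldsymbol{x}_j;\boldsymbol{\theta}(t))-y_j$ has sign $-y_j$; Assumption~\ref{ass: data separation}(i) makes $\boldsymbol{x}_j^\top\boldsymbol{x}_i$ have sign $y_j$; thus every summand is $\leq 0$, the full correction is $\geq 0$, and $\boldsymbol{b}_k(t+1)^\top\boldsymbol{x}_i\geq\boldsymbol{b}_k(t)^\top\boldsymbol{x}_i>0$. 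Combined with sign preservation of $a_k$ this gives $k\in\mathcal{TL}_i(t+1)$. The identical summand analysis with $a_k(t)<0$ yields a $\leq 0$ update, proving (S2).

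For (S3) and (S4), work at $t=0$ where Lemma~\ref{lemma: initial norm and prediction 1} supplies $|f(\boldsymbol{x}_j;\boldsymbol{\theta}(0))|\leq 2\kappa$ and $|\boldsymbol{b}_k(0)^\top\boldsymbol{x}_i|\leq 2\kappa/\sqrt{m}$. Replacing $f(\boldsymbol{x}_j;\boldsymbol{\theta}(0))-y_j$ by $-y_j$ with an $O(\kappa)$ error turns the leading correction into
\[
\frac{\eta\, a_k(0)}{n}\sum_{j=1}^{n} y_j\,\mathbb{I}\{\boldsymbol{b}_k(0)^\top\boldsymbol{x}_j > 0\}\,\boldsymbol{x}_j^\top\boldsymbol{x}_i,
\]
whose summands are all nonnegative by Assumption~\ref{ass: data separation}(i); crucially, Assumption~\ref{ass: data separation}(ii) applied with $\boldsymbol{v}=\boldsymbol{b}_k(0)$ produces an index $j^\star$ with $y_{j^\star}\boldsymbol{x}_{j^\star}^\top\boldsymbol{x}_i\geq\mu_0$. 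Hence the signed correction has magnitude $\gtrsim \eta\mu_0/(n\sqrt{m})$, while the residual $\boldsymbol{b}_k(0)^\top\boldsymbol{x}_i$ and the $O(\kappa)$ error are both of order $\kappa/\sqrt{m}$. The choice $\kappa\leq\eta\mu_0/(3n)$ in~\eqref{equ: parameter selection 1} makes the leading term strictly dominate, flipping $\boldsymbol{b}_k(1)^\top\boldsymbol{x}_i$ to the desired strict sign for (S3), and the mirror computation (flipping $a_k(0)$) handles (S4).

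For (S5), combining (S1)--(S4) with the fact that $\{k:y_ia_k(t)>0\}\equiv\{k:y_ia_k(0)>0\}$ throughout $t\leq T+1$ yields $\mathcal{TD}_i(t)=\mathcal{FL}_i(t)=\emptyset$ for every $t\geq 1$, so each neuron lies in $\mathcal{TL}_i(t)$ or $\mathcal{FD}_i(t)$ and the monotonicity established in (S1)--(S2) shows $\boldsymbol{b}_k(t)^\top\boldsymbol{x}_i$ and $\boldsymbol{b}_k(t+1)^\top\boldsymbol{x}_i$ share the same strict sign (the strictness at $t=1$ for neurons that entered $\mathcal{FD}_i(1)$ from $\mathcal{FD}_i(0)$ is a.s.\ from the Gaussian law of $\boldsymbol{b}_k(0)$, and for those coming from $\mathcal{FL}_i(0)$ from the strict flip in the (S4) argument). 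Since any $\boldsymbol{b}\in\overline{\boldsymbol{b}_k(t)\boldsymbol{b}_k(t+1)}$ makes $\boldsymbol{b}^\top\boldsymbol{x}_i$ a convex combination of two numbers of identical strict sign, the sign is preserved. The main obstacle throughout is the quantitative margin in (S3)/(S4): one must verify that the $\mu_0$-contribution from Assumption~\ref{ass: data separation}(ii) truly beats both $\boldsymbol{b}_k(0)^\top\boldsymbol{x}_i$ and all $O(\kappa)$ errors, which is exactly what dictates the stringent initialization scale $\kappa=\mathcal{O}(\eta\mu_0/n)$ used in Theorem~\ref{thm: binary quadratic}.
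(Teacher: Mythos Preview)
Your proposal follows the paper's approach: sign analysis of each summand in the update~\eqref{equ: one step update 1} for (S1)--(S2), a quantitative margin argument at $t=0$ for (S3)--(S4), and then combining these for (S5). The arguments for (S1), (S2), (S3) and (S5) are correct and match the paper.

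There is a genuine gap in your (S4) argument. You write ``the mirror computation (flipping $a_k(0)$) handles (S4)'', implicitly reusing the $j^\star$ produced by Assumption~\ref{ass: data separation}(ii). But for $k\in\mathcal{FL}_i(0)$ one has $\boldsymbol{b}_k(0)^\top\boldsymbol{x}_i>0$, so the hypothesis $\boldsymbol{v}^\top\boldsymbol{x}_i\le 0$ of Assumption~\ref{ass: data separation}(ii) fails with $\boldsymbol{v}=\boldsymbol{b}_k(0)$, and there is no guaranteed $j^\star$ with both an active indicator and a $\mu_0$-margin. The paper's fix is to use the $j=i$ term itself: since $\boldsymbol{b}_k(0)^\top\boldsymbol{x}_i>0$ the indicator $\mathbb{I}\{\boldsymbol{b}_k(0)^\top\boldsymbol{x}_i>0\}$ equals $1$, and the corresponding summand contributes
\[
\eta\,\tfrac{1}{n}\bigl(1-f(\boldsymbol{x}_i;\boldsymbol{\theta}(0))\bigr)\,\boldsymbol{x}_i^\top\boldsymbol{x}_i\,a_k(0)\;\le\;-\tfrac{\eta(1-2\kappa)}{n\sqrt{m}}\,\|\boldsymbol{x}_i\|^2,
\]
which dominates $|\boldsymbol{b}_k(0)^\top\boldsymbol{x}_i|\le 2\kappa/\sqrt{m}$ under the choice $\kappa\le\eta/(3n)$ in~\eqref{equ: parameter selection 1}. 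This is why~\eqref{equ: parameter selection 1} carries both the constraint $\kappa\le\eta\mu_0/(3n)$ (needed for (S3)) and the separate constraint $\kappa\le\eta/(3n)$ (needed for (S4)); your sketch conflates them.

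A smaller point on (S5): your a.s.\ Gaussian argument for the strictness $\boldsymbol{b}_k(1)^\top\boldsymbol{x}_i<0$ when $k\in\mathcal{FD}_i(0)$ is probabilistically fine but lies outside the lemma's stated deterministic hypotheses. The paper's (S2) computation, specialized to $t=0$ with the index $j_i$ from Assumption~\ref{ass: data separation}(ii), yields a strictly negative correction of order $\eta\mu_0/(n\sqrt{m})$, giving the needed strictness without extra randomness.
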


\begin{proof}[Proof of Lemma \ref{lemma: correct neurons remain correct 1}]\ \\
WLOG, we only need to consider $i\in[n/2]$, so we have $y_i=1.$ 

From Assumption \ref{ass: data separation} (i), it is easy to verify $
y_i\boldsymbol{x}_i^\top\boldsymbol{x}_j y_j\geq0$ for any $i,j\in[n]$. Besides, from Lemma \ref{lemma: initial norm and prediction 1}, we have
$|\boldsymbol{b}_k(0)^\top\boldsymbol{x}_i|\leq\frac{2\kappa}{\sqrt{m}}$ and $|f(\boldsymbol{x}_i;\boldsymbol{\theta}(0))|\leq 2\kappa$, so $\text{sgn}(y_j)=\text{sgn}(y_j-f(\boldsymbol{x}_j;\boldsymbol{\theta}(0)))$ holds for any $j\in[n]$. Hence, for any $i,j\in[n]$, we have
\[
\Big(y_i-f(\boldsymbol{x}_i;\boldsymbol{\theta}(t)\Big)\boldsymbol{x}_i^\top\boldsymbol{x}_j y_j\geq0.
\]

% For $t+1$, we have:
% \begin{gather*}
% \boldsymbol{b}_k(t+1)^\top\boldsymbol{x}_i=\boldsymbol{b}_k(t)^\top\boldsymbol{x}_i-\eta\frac{1}{n}\sum_{j=1}^n\Big(f(\boldsymbol{x}_j;\boldsymbol{\theta}(t))-y_j\Big)a_k(t)\mathbb{I}\Big\{\boldsymbol{b}_k(t)^\top\boldsymbol{x}_j>0\Big\}\boldsymbol{x}_j^\top\boldsymbol{x}_i,
%     \\
%     a_k({t+1})= a_k(t)-\eta
%     \frac{1}{n}\sum_{j=1}^n\Big(f(\boldsymbol{x}_j;\boldsymbol{\theta}(t))-y_j\Big)\sigma(\boldsymbol{b}_k(t)^\top\boldsymbol{x}_j).
% \end{gather*}

(Proof of S1.)
Let $k\in\mathcal{TL}_i(t)$, we have $a_k(t)>0$, $ \boldsymbol{b}_k(t)^\top\boldsymbol{x}_i>0$, and $a_{k}(t+1)>0$ (from the definition of $T$). 

Then we prove a stronger result than (S1), ``For $k\in\mathcal{TL}_i(t)$, we have $\boldsymbol{b}^\top\boldsymbol{x}_i>0$ for any $\boldsymbol{b}\in\overline{\boldsymbol{b}_k(t)\boldsymbol{b}_k(t+1)}$'':

There exists $\alpha\in[0,1]$, s.t. $\boldsymbol{b}=\alpha\boldsymbol{b}_k(t)+(1-\alpha)\boldsymbol{b}_k(t+1)$, and
\begin{align*}
    &\boldsymbol{b}^\top\boldsymbol{x}_i
    \\
    =&\alpha\boldsymbol{b}_k(t)^\top\boldsymbol{x}_i+(1-\alpha)\boldsymbol{b}_k(t+1)^\top\boldsymbol{x}_i
    \\=&\boldsymbol{b}_k(t)^\top\boldsymbol{x}_i+(1-\alpha)\eta a_k(t)\frac{1}{n}\sum_{j=1}^n\Big(y_j-f(\boldsymbol{x}_j;\boldsymbol{\theta}(t))\Big)\boldsymbol{x}_j^\top\boldsymbol{x}_i\mathbb{I}\Big\{\boldsymbol{b}_k(t)^\top\boldsymbol{x}_j>0\Big\}
    \\=&\boldsymbol{b}_k(t)^\top\boldsymbol{x}_i+(1-\alpha)\eta a_k(t)y_i\frac{1}{n}\sum_{j=1}^n\mathbb{I}\Big\{\boldsymbol{b}_k(t)^\top\boldsymbol{x}_j>0\Big\}\Big(y_j-f(\boldsymbol{x}_j;\boldsymbol{\theta}(t))\Big)\boldsymbol{x}_j^\top\boldsymbol{x}_i y_i
    \\=&\boldsymbol{b}_k(t)^\top\boldsymbol{x}_i+(1-\alpha)\eta\frac{1}{n}\sum_{j\in[n]}\mathbb{I}\Big\{\boldsymbol{b}_k(t)^\top\boldsymbol{x}_j>0\Big\}\Big((y_j-f(\boldsymbol{x}_j;\boldsymbol{\theta}(0)))\boldsymbol{x}_j^\top\boldsymbol{x}_i y_i\Big)\Big( y_i a_k(t)\Big)
    \\=&\boldsymbol{b}_k(t)^\top\boldsymbol{x}_i+(1-\alpha)\eta\frac{1}{n}(1-f(\boldsymbol{x}_i;\boldsymbol{\theta}(t)))\mathbb{I}\big\{\boldsymbol{b}_k(t)^\top\boldsymbol{x}_i>0\big\} \boldsymbol{x}_i^\top\boldsymbol{x}_i a_k(0)
    \\&\quad\quad\quad\quad+(1-\alpha)\eta\frac{1}{n}\sum_{j\ne i}\mathbb{I}\Big\{\boldsymbol{b}_k(t)^\top\boldsymbol{x}_j>0\Big\}\Big((y_j-f(\boldsymbol{x}_j;\boldsymbol{\theta}(t)))\boldsymbol{x}_j^\top\boldsymbol{x}_i y_i\Big)\Big(y_i a_k(t)\Big)
    \\\geq&\boldsymbol{b}_k(t)^\top\boldsymbol{x}_i+(1-\alpha)\eta\frac{1}{n}(1-f(\boldsymbol{x}_i;\boldsymbol{\theta}(t)))\mathbb{I}\big\{\boldsymbol{b}_k(t)^\top\boldsymbol{x}_i>0\big\} \boldsymbol{x}_i^\top\boldsymbol{x}_i a_k(0)
    \\\geq&\boldsymbol{b}_k(0)^\top\boldsymbol{x}_i.
\end{align*}
Notice that $\boldsymbol{b}_k^\top(0)\boldsymbol{x}_i>0$.
By induction, we know $\boldsymbol{b}^\top\boldsymbol{x}_i>0$.
Moreover, we have $k\in\mathcal{TL}_i(t+1)$.

(Proof of S2.)
Let $k\in\mathcal{FD}_i(t)$, we have $a_k(t)<0$, $ \boldsymbol{b}_k(t)^\top\boldsymbol{x}_i\leq0$, and $a_k(t+1)<0$ (from the definition of $T$).

Then we prove a stronger result than (S2), ``For any $k\in\mathcal{FD}_i(t)$, we have $\boldsymbol{b}^\top\boldsymbol{x}_i\leq0$ for any $\boldsymbol{b}\in\overline{\boldsymbol{b}_k(t)\boldsymbol{b}_k(t+1)}$ '':

Combining Assumption \ref{ass: data separation} (ii), we know there exists $j_i\in[n]$, s.t. $\boldsymbol{b}_k(0)^\top\boldsymbol{x}_{j_i}\geq0$ and $y_i\boldsymbol{x}_i^\top\boldsymbol{x}_{j_i} y_{j_i}\geq\mu_0$.
Hence, with probability at least $1-\frac{2m}{\exp(2d)}$ we have

There exists $\alpha\in[0,1]$, s.t. $\boldsymbol{b}=\alpha\boldsymbol{b}_k(t)+(1-\alpha)\boldsymbol{b}_k(t+1)$, and
\begin{align*}
    &\boldsymbol{b}^\top\boldsymbol{x}_i
    \\
    =&\alpha\boldsymbol{b}_k(t)^\top\boldsymbol{x}_i+(1-\alpha)\boldsymbol{b}_k(t+1)^\top\boldsymbol{x}_i
    \\=&\boldsymbol{b}_k(t)^\top\boldsymbol{x}_i+(1-\alpha)\eta a_k(t)\frac{1}{n}\sum_{j=1}^n\Big(y_j-f(\boldsymbol{x}_j;\boldsymbol{\theta}(t))\Big)\boldsymbol{x}_j^\top\boldsymbol{x}_i\mathbb{I}\Big\{\boldsymbol{b}_k(t)^\top\boldsymbol{x}_j>0\Big\}
    \\=&\boldsymbol{b}_k(t)^\top\boldsymbol{x}_i+(1-\alpha)\eta a_k(t)y_i\frac{1}{n}\sum_{j=1}^n\mathbb{I}\Big\{\boldsymbol{b}_k(t)^\top\boldsymbol{x}_j>0\Big\}\Big(y_j-f(\boldsymbol{x}_j;\boldsymbol{\theta}(t))\Big)\boldsymbol{x}_j^\top\boldsymbol{x}_i y_i    \\=&\boldsymbol{b}_k(t)^\top\boldsymbol{x}_i+(1-\alpha)\eta\frac{1}{n}\mathbb{I}\big\{\boldsymbol{b}_k(t)^\top\boldsymbol{x}_{j_i}\geq0\big\}\Big((y_{j_i}-f(\boldsymbol{x}_{j_i};\boldsymbol{\theta}(t)))\boldsymbol{x}_{j_i}^\top\boldsymbol{x}_i y_i\Big)\Big(y_i a_k(t)\Big)
    \\&\quad\quad\quad\quad+(1-\alpha)\eta\frac{1}{n}\sum_{j\ne j_i}\mathbb{I}\Big\{\boldsymbol{b}_k(t)^\top\boldsymbol{x}_j>0\Big\}\Big((y_j-f(\boldsymbol{x}_j;\boldsymbol{\theta}(t)))\boldsymbol{x}_j^\top\boldsymbol{x}_i y_i\Big)\Big(y_i a_k(t)\Big)
    \\\leq&\boldsymbol{b}_k(t)^\top\boldsymbol{x}_i+(1-\alpha)\eta\frac{1}{n}\mathbb{I}\big\{\boldsymbol{b}_k(t)^\top\boldsymbol{x}_{j_i}\geq0\big\}\Big((y_{j_i}-f(\boldsymbol{x}_{j_i};\boldsymbol{\theta}(t)))\boldsymbol{x}_{j_i}^\top\boldsymbol{x}_i y_i\Big)\Big(y_i a_k(t)\Big)
    \\\leq&\boldsymbol{b}_k(t)^\top\boldsymbol{x}_i.
\end{align*}
By induction, we know $\boldsymbol{b}^\top\boldsymbol{x}_i\leq0$.
Moreover, we have $k\in\mathcal{FD}_i(t+1)$.

(Proof of S3.)
Let $k\in\mathcal{TD}_i(0)$, we have $a_k(0)>0$ and $ \boldsymbol{b}_k(0)^\top\boldsymbol{x}_i<0$. Then we have $a_k(1)>0$ (from the definition of $T$). Combining Assumption \ref{ass: data separation} (ii), we know there exists $j_i\in[n]$, s.t. $\boldsymbol{b}_k(0)^\top\boldsymbol{x}_{j_i}\geq0$ and $y_i\boldsymbol{x}_i^\top\boldsymbol{x}_{j_i} y_{j_i}\geq\mu_0$.
Hence, with probability at least $1-\frac{2m}{\exp(2d)}$ we have
\begin{align*}
    &\boldsymbol{b}_k(1)^\top\boldsymbol{x}_i
    \\\geq&\boldsymbol{b}_k(0)^\top\boldsymbol{x}_i+\eta\frac{1}{n}\sum_{j\in[n]}(y_j-f(\boldsymbol{x}_j;\boldsymbol{\theta}(0)))\mathbb{I}\big\{\boldsymbol{b}_k(0)^\top\boldsymbol{x}_j>0\big\}\boldsymbol{x}_j^\top\boldsymbol{x}_i a_k(0)
    \\=&\boldsymbol{b}_k(0)^\top\boldsymbol{x}_i+\eta\frac{1}{n}\sum_{j\in[n]}\mathbb{I}\big\{\boldsymbol{b}_k(0)^\top\boldsymbol{x}_j>0\big\}\Big((y_j-f(\boldsymbol{x}_j;\boldsymbol{\theta}(0)))\boldsymbol{x}_j^\top\boldsymbol{x}_i y_i\Big)\Big( y_i a_k(0)\Big)
    \\=&\boldsymbol{b}_k(0)^\top\boldsymbol{x}_i+\eta\frac{1}{n}\mathbb{I}\big\{\boldsymbol{b}_k(0)^\top\boldsymbol{x}_{j_i}\geq0\big\}\Big((y_{j_i}-f(\boldsymbol{x}_{j_i};\boldsymbol{\theta}(0)))\boldsymbol{x}_{j_i}^\top\boldsymbol{x}_i y_i\Big)\Big(y_i a_k(0)\Big)
    \\&\quad\quad\quad\quad+\eta\frac{1}{n}\sum_{j\ne j_i}\mathbb{I}\big\{\boldsymbol{b}_k(0)^\top\boldsymbol{x}_j>0\big\}\Big((y_j-f(\boldsymbol{x}_j;\boldsymbol{\theta}(0)))\boldsymbol{x}_j^\top\boldsymbol{x}_i y_i\Big)\Big(y_i a_k(0)\Big)
    \\\geq&\boldsymbol{b}_k(0)^\top\boldsymbol{x}_i+\eta\frac{1}{n}\mathbb{I}\big\{\boldsymbol{b}_k(0)^\top\boldsymbol{x}_{j_i}\geq0\big\}\Big((y_{j_i}-f(\boldsymbol{x}_{j_i};\boldsymbol{\theta}(0)))\boldsymbol{x}_{j_i}^\top\boldsymbol{x}_i y_i\Big)\Big(y_i a_k(0)\Big)
    \\\geq&-\frac{2\kappa}{\sqrt{m}}+\frac{\eta}{n\sqrt{m}}(1-2\kappa)\mu_0
    \overset{(\ref{equ: parameter selection 1})}{>}0,
\end{align*}
so $k\in\mathcal{TL}_i(1)$.

(Proof of S4.) Let $k\in\mathcal{FL}_i(0)$, we have $a_k(0)<0$ and $ \boldsymbol{b}_k(0)^\top\boldsymbol{x}_i\geq0$. Then we have $a_k(1)<0$ (from the definition of $T$). 

So with probability at least $1-\frac{2m}{\exp(2d)}$ we have
\begin{align*}
    &\boldsymbol{b}_k(1)^\top\boldsymbol{x}_i
    \\\leq&\boldsymbol{b}_k(0)^\top\boldsymbol{x}_i+\eta\frac{1}{n}\sum_{j\in[n]}(y_j-f(\boldsymbol{x}_j;\boldsymbol{\theta}(0)))\mathbb{I}\big\{\boldsymbol{b}_k(0)^\top\boldsymbol{x}_j>0\big\}\boldsymbol{x}_j^\top\boldsymbol{x}_i a_k(0)
        \\=&\boldsymbol{b}_k(0)^\top\boldsymbol{x}_i+\eta\frac{1}{n}\sum_{j\in[n]}\mathbb{I}\big\{\boldsymbol{b}_k(0)^\top\boldsymbol{x}_j>0\big\}\Big((y_j-f(\boldsymbol{x}_j;\boldsymbol{\theta}(0)))\boldsymbol{x}_j^\top\boldsymbol{x}_i y_i\Big)\Big( y_i a_k(0)\Big)
        \\=&\boldsymbol{b}_k(0)^\top\boldsymbol{x}_i+\eta\frac{1}{n}(1-f(\boldsymbol{x}_i;\boldsymbol{\theta}(0)))\mathbb{I}\big\{\boldsymbol{b}_k(0)^\top\boldsymbol{x}_i>0\big\} \boldsymbol{x}_i^\top\boldsymbol{x}_i a_k(0)
    \\&\quad\quad\quad\quad+\eta\frac{1}{n}\sum_{j\ne i}\mathbb{I}\big\{\boldsymbol{b}_k(0)^\top\boldsymbol{x}_j>0\big\}\Big((y_j-f(\boldsymbol{x}_j;\boldsymbol{\theta}(0)))\boldsymbol{x}_j^\top\boldsymbol{x}_i y_i\Big)\Big(y_i a_k(0)\Big)
    \\\leq&\boldsymbol{b}_k(0)^\top\boldsymbol{x}_i+\eta\frac{1}{n}(1-f(\boldsymbol{x}_i;\boldsymbol{\theta}(0)))\mathbb{I}\big\{\boldsymbol{b}_k(0)^\top\boldsymbol{x}_i>0\big\} \boldsymbol{x}_i^\top\boldsymbol{x}_i a_k(0)
    \\\leq&\frac{2\kappa}{\sqrt{m}}-\frac{\eta}{n\sqrt{m}}(1-f(\boldsymbol{x}_i;\boldsymbol{\theta}(0)))
    \\\overset{(\ref{equ: parameter selection 1})}{\leq}&\frac{2\kappa}{\sqrt{m}}-\frac{\eta}{n\sqrt{m}}(1-2\kappa)< 0 ,
\end{align*}
so $k\in\mathcal{FD}_i(1)$. 

(Proof of S5.)
From (S3)(S4), we know $m=\mathcal{TL}_i(1)\cup\mathcal{FD}_i(1)$ and $\text{sgn}(\boldsymbol{b}_k(1)^\top\boldsymbol{x}_i)\ne 0$ for any $i\in[n]$ and $k\in[m]$. Recalling the proofs in (S1)(S2), we obtain $\text{sgn}\big(\boldsymbol{b}^\top\boldsymbol{x}_i\big)\equiv\text{sgn}\big(\boldsymbol{b}_k^\top(1)\boldsymbol{x}_i\big)\ne0$ for any $i\in[n]$, $k\in[m]$, $1\leq t\leq T$ and $\boldsymbol{b}
\in\overline{\boldsymbol{b}_k(t)\boldsymbol{b}_k(t+1)}$.

\end{proof}

\begin{lemma}\label{lemma: corollory correct neurons remain correct 1} 
Under the same condition as Lemma \ref{lemma: correct neurons remain correct 1}, we have the following results for any $i\in[n]$ and $1\leq t\leq T+1$:
\begin{gather*}
    \mathcal{TL}_{i}(t)\equiv\mathcal{TL}_{i}(0)\bigcup \mathcal{TD}_{i}(0);\\
   \mathcal{FD}_{i}(t)\equiv\mathcal{FL}_{i}(0)\bigcup \mathcal{FD}_{i}(0).
\end{gather*}

\end{lemma}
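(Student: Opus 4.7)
The plan is to derive this statement as a direct bookkeeping corollary of Lemma \ref{lemma: correct neurons remain correct 1}, combined with the partition property in Lemma \ref{lemma: neuron partition 1} and the sign preservation of $a_k$ built into the definition of the hitting time $T$.

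First, I would handle the base case $t = 1$. By (S1) and (S3) of Lemma \ref{lemma: correct neurons remain correct 1}, we have $\mathcal{TL}_i(0) \subseteq \mathcal{TL}_i(1)$ and $\mathcal{TD}_i(0) \subseteq \mathcal{TL}_i(1)$, so $\mathcal{TL}_i(0) \cup \mathcal{TD}_i(0) \subseteq \mathcal{TL}_i(1)$. Symmetrically, by (S2) and (S4), $\mathcal{FL}_i(0) \cup \mathcal{FD}_i(0) \subseteq \mathcal{FD}_i(1)$. To upgrade these inclusions to equalities, observe that the two left-hand unions are disjoint (the first requires $y_i a_k(0) > 0$, the second $y_i a_k(0) < 0$) and together cover $[m]$ by Lemma \ref{lemma: neuron partition 1} at $t = 0$. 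Meanwhile, $\mathcal{TL}_i(1)$ and $\mathcal{FD}_i(1)$ are themselves disjoint subsets of $[m]$. Since the sign of each $a_k$ is preserved under the definition of $T$ (i.e.\ $a_k(s) a_k(0) > 0$ for all $s \le T+1$), no neuron can migrate from the ``$y_i a_k > 0$'' side to the ``$y_i a_k < 0$'' side. Hence the two inclusions must be equalities at $t = 1$.

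Second, I would induct on $t$ from $1$ up to $T+1$. Assume $\mathcal{TL}_i(t) = \mathcal{TL}_i(0) \cup \mathcal{TD}_i(0)$ and $\mathcal{FD}_i(t) = \mathcal{FL}_i(0) \cup \mathcal{FD}_i(0)$. Applying (S1) and (S2) of Lemma \ref{lemma: correct neurons remain correct 1} at step $t$ yields $\mathcal{TL}_i(t) \subseteq \mathcal{TL}_i(t+1)$ and $\mathcal{FD}_i(t) \subseteq \mathcal{FD}_i(t+1)$. Because $\mathcal{TL}_i(t+1)$ and $\mathcal{FD}_i(t+1)$ are disjoint (again by sign preservation of $a_k$) and contained in $[m]$, and the inductive RHS's already exhaust $[m]$, both inclusions must be equalities. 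This closes the induction and covers the full range $1 \le t \le T+1$.

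Finally, regarding obstacles: there is essentially no new technical content here beyond Lemma \ref{lemma: correct neurons remain correct 1}. The only point requiring care is the disjointness/exhaustion argument that converts the four one-sided inclusions (S1)--(S4) into equalities, which relies crucially on the clause $a_k(s) a_k(0) > 0$ embedded in the definition \eqref{equ: hitting time def 1} of the hitting time $T$. Once that is invoked, the statement follows by set-theoretic bookkeeping.
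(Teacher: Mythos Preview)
Your proposal is correct and follows exactly the approach implicit in the paper, which simply states that the lemma is a direct corollary of Lemma~\ref{lemma: correct neurons remain correct 1}. You have merely spelled out the set-theoretic bookkeeping (inclusions from (S1)--(S4), exhaustion from Lemma~\ref{lemma: neuron partition 1}, and disjointness from the sign-preservation clause in the definition of $T$) that the paper leaves implicit.
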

\begin{proof}[Proof of Lemma \ref{lemma: corollory correct neurons remain correct 1}]\ \\
This lemma is a direct corollory of Lemma \ref{lemma: correct neurons remain correct 1}.

\end{proof}

\subsection{Estimate of hitting time and parameters}
\begin{remark}\rm
The Lemmas in this section will be discussed and proved when the events in Lemma \ref{lemma: estimate of the number of initial neural partition 1} Lemma \ref{lemma: initial norm and prediction 1} happened. So all the lemmas below hold with probability at least $1-\delta-2me^{-2d}$.
\end{remark}

First, we construct the following exponentially growing hitting time:
\begin{equation}\label{equ: exponential hitting time def 1}
\begin{aligned}
  T_e:= \sup\Big\{t\in\mathbb{N}:\ 
    &\Big(\frac{1}{2}+2\sqrt{\frac{\log(2n^2/\delta)}{m}}\Big)\frac{251001\Big((1+2\eta)^{2(t+1)}-(1-2\eta)^{2(t+1)}\Big)}{1000000}\leq1;
    \\&{\rm and}\ (1+2\eta)^{t+1}\leq 2\sqrt{2}\Big\}.
\end{aligned}
\end{equation}

\begin{lemma}[Parameter estimate]\label{lemma: speed separation 1}
For any $1\leq t\leq T_e+1$, $i\in[n]$, we have:
\begin{align*}
&\text{(P1).\ }\forall k\in\mathcal{TL}_i(t),\ 
\left|a_k(t)\right|\leq\frac{501\Big((1+2\eta)^t+(1-2\eta)^t\Big)}{1000\sqrt{m}};\\
&\text{(P2).\ }\forall k\in\mathcal{TL}_i(t),\ 
    \left\|\boldsymbol{b}_k(t)\right\|\leq\frac{501\Big((1+2\eta)^t-(1-2\eta)^t\Big)}{1000\sqrt{m}};\\
 &   \text{(P3).\ }
    \left|f\big(\boldsymbol{x}_i;\boldsymbol{\theta}(t)\big)\right|\leq\Big(\frac{1}{2}+2\sqrt{\frac{\log(2n^2/\delta)}{m}}\Big)\frac{251001\Big((1+2\eta)^{2t}-(1-2\eta)^{2t}\Big)}{1000000};\\
              &  \text{(P4).\ }\forall k\in[m],\ 
    {\rm sgn}(a_k(0))a_k(t)\geq \frac{20019}{10000\sqrt{m}}-\frac{501}{1000\sqrt{m}}\Big((1+2\eta)^{t}+(1-2\eta)^{t}\Big);\\
                  &  \text{(P5).\ } T_e\leq T; \\
      &  \text{(P6).\ }\forall k\in\mathcal{TL}_i(t),\ 
    \left|a_k(t)\right|\geq|a_k(t-1)|\geq\cdots\geq|a_k(1)|\geq\frac{9999}{10000\sqrt{m}}.
\end{align*}
\end{lemma}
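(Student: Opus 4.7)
The six claims couple tightly, so the plan is to prove them simultaneously by strong induction on $t$. The skeleton is: (P3) certifies $|f(\boldsymbol{x}_i;\boldsymbol{\theta}(s))|\leq 1$ up to step $t$, while (P4) keeps each $a_k(s)$ from crossing zero; together these imply $s\leq T$, which allows us to invoke Lemma~\ref{lemma: correct neurons remain correct 1} and Lemma~\ref{lemma: corollory correct neurons remain correct 1} to fix the neuron partition as $\mathcal{TL}_i(s)=\mathcal{TL}_i(0)\cup\mathcal{TD}_i(0)$ and $\mathcal{FD}_i(s)=\mathcal{FL}_i(0)\cup\mathcal{FD}_i(0)$ for every $s\geq 1$. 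With the partition stabilized, the GD update rules \eqref{equ: one step update 1} become a coupled linear recursion which propagates (P1) and (P2); feeding the new bounds back into the path-norm representation of $f$ restores (P3) at step $t+1$.

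For the base case $t=1$, Lemma~\ref{lemma: initial norm and prediction 1} gives $|a_k(0)|=1/\sqrt{m}$ and $\|\boldsymbol{b}_k(0)\|\leq 2\kappa/\sqrt{m}$, so a single GD step bounded by $|y_i-f(\boldsymbol{x}_i;\boldsymbol{\theta}(0))|\leq 1+2\kappa$ gives (P1), (P2), (P4), (P6) directly, with the numerical constants produced by plugging in $\eta=1/100$ and $\kappa$ satisfying \eqref{equ: parameter selection 1}. For the inductive step at $1\leq t\leq T_e$, take any $k\in\mathcal{TL}_i(t+1)$; by the stabilization above $k$ is true-living for some sample at every prior step, so the inductive bounds apply. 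The update rules together with $|y_i-f(\boldsymbol{x}_i;\boldsymbol{\theta}(t))|\leq 2$ yield
\begin{align*}
|a_k(t+1)| &\leq |a_k(t)|+2\eta\,\|\boldsymbol{b}_k(t)\|,\\
\|\boldsymbol{b}_k(t+1)\| &\leq \|\boldsymbol{b}_k(t)\|+2\eta\,|a_k(t)|,
\end{align*}
a $2\times 2$ linear recursion whose eigenvalues are $1\pm 2\eta$. Diagonalizing and using the base case gives the claimed combinations of $(1+2\eta)^t\pm(1-2\eta)^t$, establishing (P1) and (P2).

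For (P3), the crucial observation is that only neurons in $\mathcal{TL}_i(t)\cup\mathcal{FL}_i(t)$ contribute to $f(\boldsymbol{x}_i;\boldsymbol{\theta}(t))$, but $\mathcal{FL}_i(t)=\emptyset$ for $t\geq 1$ by (S4) of Lemma~\ref{lemma: correct neurons remain correct 1}. Therefore
\[
|f(\boldsymbol{x}_i;\boldsymbol{\theta}(t))|\leq \sum_{k\in\mathcal{TL}_i(t)}|a_k(t)|\,\|\boldsymbol{b}_k(t)\|,
\]
and Lemma~\ref{lemma: estimate of the number of initial neural partition 1} together with Lemma~\ref{lemma: corollory correct neurons remain correct 1} bounds $\mathrm{card}(\mathcal{TL}_i(t))\leq m\bigl(\tfrac12+\sqrt{\log(2n^2/\delta)/(2m)}\bigr)$. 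Multiplying by the bounds from (P1)(P2) produces the stated constant $251001/10^6$ factor times $(1+2\eta)^{2t}-(1-2\eta)^{2t}$. Claim (P4) follows by telescoping $|a_k(t)-a_k(0)|\leq 2\eta\sum_{s<t}\|\boldsymbol{b}_k(s)\|$ through (P2), and (P5) is then immediate: (P3) ensures $|f|\leq 1$ and (P4) combined with the $\kappa$ bound ensures $a_k(s)a_k(0)>0$, so $T\geq T_e$ by the definition \eqref{equ: hitting time def 1}. Finally, for the monotonicity in (P6), for a true-living neuron $k\in\mathcal{TL}_i(t)$ the update contribution is
\[
a_k(t+1)-a_k(t)=\eta\tfrac1n\sum_{j:\,\boldsymbol{b}_k(t)^\top\boldsymbol{x}_j>0}\bigl(y_j-f(\boldsymbol{x}_j;\boldsymbol{\theta}(t))\bigr)\sigma(\boldsymbol{b}_k(t)^\top\boldsymbol{x}_j);
\]
since $\mathcal{FL}_j(t)=\emptyset$ for $t\geq 1$, every active $j$ satisfies $y_j a_k(t)>0$, and $|f|\leq 1$ forces $\mathrm{sgn}(y_j-f(\boldsymbol{x}_j;\boldsymbol{\theta}(t)))=\mathrm{sgn}(y_j)=\mathrm{sgn}(a_k(0))$. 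All contributions therefore share $a_k$'s sign, giving $|a_k(t+1)|\geq|a_k(t)|$.

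The main obstacle I anticipate is bookkeeping the numerical constants so that the inductive bound on $|f|$ stays strictly below $1$ throughout $t\leq T_e+1$: the factor $251001/10^6\approx (501/1000)^2$ in (P3) must absorb both the cross terms of $((1+2\eta)^t\pm(1-2\eta)^t)^2$ and the $\sqrt{\log(2n^2/\delta)/m}$ correction from Lemma~\ref{lemma: estimate of the number of initial neural partition 1}. This is essentially an algebraic verification that the choice \eqref{equ: parameter selection 1} makes the geometric growth $(1+2\eta)^t$ leave enough slack at the boundary $t=T_e$; once this accounting is completed the remaining steps are routine applications of the partition and initialization lemmas.
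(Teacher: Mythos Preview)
Your proposal is correct and follows essentially the same route as the paper. The only organizational difference is that the paper introduces explicit dominating sequences $\alpha_k^t,\beta_k^t,\gamma^t$ satisfying the $2\times 2$ recursion with \emph{equality} (so diagonalization is rigorous by comparison), proves (P1)(P2)(P3) on these auxiliary sequences first, and then deduces (P4), (P5), (P6) sequentially rather than packaging everything into one simultaneous strong induction; the underlying mechanism and the constant-tracking obstacle you flag are identical.
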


\begin{proof}[Proof of Lemma \ref{lemma: speed separation 1}]\ \\
We will complete the proof in the order: 
(P1)(P2)(P3)$\Rightarrow$(P4)$\Rightarrow$(P5)$\Rightarrow$(P6).

For convenience, we define the following sequences:
\begin{gather*}
    x_k^t = |a_k(t)|,k\in\mathcal{TL}_i(t);
    \\
    y_k^t=\left\|\boldsymbol{b}_k(t)\right\|,k\in\mathcal{TL}_i(t);\\
    f_i^t = \sum_{k=1}^m a_k(t)\sigma\Big(\boldsymbol{b}_k(t)^\top\boldsymbol{x}_i\Big).
\end{gather*}
where $t\leq T_e+1,\ k\in[m],\ i\in[n]$.

\noindent\textbf{Step I.} Proof of (P1)(P2)(P3).

For $0\leq t\leq T_e$, we have:
\begin{gather*}
    x_k^{t+1}\leq x_k^t +\eta \Big(\max\limits_{i\in[n]}\left|f_i^t-y_i\right|\Big) y_k^t
    \leq x_k^t +\eta \Big(\max\limits_{i\in[n]}\left|f_i^t\right|+1\Big) y_k^t, k\in\mathcal{TL}_i(t);
    \\
    y_k^{t+1}\leq y_k^t + \eta\Big(\max\limits_{i\in[n]}\left|f_i^t-y_i\right|\Big) x_k^t\leq y_k^t + \eta\Big(\max\limits_{i\in[n]}\left|f_i^t\right|+1\Big) x_k^t,k\in\mathcal{TL}_i(t).
\end{gather*}
For $1\leq t\leq T_e+1$, from Lemma \ref{lemma: correct neurons remain correct 1} (S1)(S2)(S3)(S4), we have:
\begin{align*}
 f_i^{t}=&\sum_{k=1}^m a_k(t)\sigma(\boldsymbol{b}_k(t)^\top\boldsymbol{x}_i)=\sum_{k\in\mathcal{TL}_i(t)\cup\mathcal{FL}_i(t)} a_k(t)\sigma(\boldsymbol{b}_k(t)^\top\boldsymbol{x}_i)
 \\=&\sum_{k\in\mathcal{TL}_i(t)} a_k(t)\sigma(\boldsymbol{b}_k(t)^\top\boldsymbol{x}_i)
 \leq \sum_{k\in\mathcal{TL}_i(t)}x_k^t y_k^t.
\end{align*}

If we define the following sequences:
\begin{gather*}
\alpha_k^0=x_k^0,\quad \beta_k^0=y_k^0,\quad \gamma^0=\sum_{k=1}^m\alpha_k^0\beta_k^0;
\\
    \alpha_k^{t+1}=\alpha_k^{t}+\eta(\gamma^t+1)\beta_k^{t};
    \\
    \beta_k^{t+1}=\beta_k^t+\eta(\gamma^t+1)\alpha_k^{t};
    \\
    \gamma^{t+1}=\max_{i\in[n]}\sum_{k\in\mathcal{TL}_i(t+1)}\alpha_k^{t+1}\beta_k^{t+1}.
\end{gather*}
It is easy to verify that: for any $1\leq t\leq T_e+1$,
\begin{equation}\label{equ: inequality of sequences 1}
\begin{gathered}
x_k^t\leq \alpha_k^t;\quad y_k^t\leq\beta_k^t;\\
\max_{i\in[n]}|f_i^t|\leq \gamma^t.
\end{gathered}
\end{equation}

Now we aim to prove the following three properties for $1\leq t\leq T_e+1$ by induction:
\begin{equation} \label{equ: induction 1}
    \begin{gathered}
  \alpha_k^t\leq\frac{501\Big((1+2\eta)^t+(1-2\eta)^t\Big)}{1000\sqrt{m}};
    \\
   \beta_k^t\leq\frac{(1+2\eta)^t-(1-2\eta)^t}{2\sqrt{m}};
    \\
    \gamma^t\leq\frac{501\Big((1+2\eta)^{2t}-(1-2\eta)^{2t}\Big)}{2000}\Bigg(\frac{1}{2}+2\sqrt{\frac{\log(2n^2/\delta)}{m}}\Bigg)\leq 1.
    \end{gathered}
\end{equation}

For $t=1$, with Lemma \ref{lemma: estimate of the number of initial neural partition 1} and the one-step updating (\ref{equ: one step update 1}), we have:
\begin{gather*}
    \alpha_k^1=x_k^1\leq\frac{1}{\sqrt{m}}+\eta(1+2\kappa)\frac{2\kappa}{\sqrt{m}}\overset{(\ref{equ: parameter selection 1})}{\leq}\frac{10002}{10000\sqrt{m}},k\in\mathcal{TL}_i(1);
    \\
    \beta_k^1= y_k^1\leq\frac{2\kappa}{\sqrt{m}}+\eta(1+2\kappa)\frac{1}{\sqrt{m}}
    \overset{(\ref{equ: parameter selection 1})}{\leq}\frac{2\eta}{\sqrt{m}},k\in\mathcal{TL}_i(1);
    \\
    \gamma^1\leq m\frac{10002}{10000\sqrt{m}}\frac{2\eta}{\sqrt{m}}\leq1.
\end{gather*}

Assume (\ref{equ: induction 1}) holds for $s\leq t-1\ (t\geq2)$. Then for $s=t$, we have:
\begin{gather*}
    \alpha_{k}^{t}+\beta_k^{t} = (\alpha_k^0+\beta_k^0)\prod\limits_{s=0}^{t-1}\Big(1+\eta(\gamma^s+1)\Big),\\
\alpha_{k}^{t}-\beta_k^{t} = (\alpha_k^0-\beta_k^0)\prod_{s=0}^{t-1}\Big(1-\eta(\gamma^s+1)\Big),
\end{gather*}
which imply:
\begin{gather*}
    \alpha_k^{t}=\frac{\prod\limits_{s=0}^{t-1}\Big(1+\eta(\gamma^{s}+1)\Big)+\prod\limits_{s=0}^{t-1}\Big(1-\eta(\gamma^s+1)\Big)}{2}\alpha_k^0+\frac{\prod\limits_{s=0}^{t-1}\Big(1+\eta(\gamma^s+1)\Big)-\prod\limits_{s=0}^{t-1}\Big(1-\eta(\gamma^s+1)\Big)}{2}\beta_k^0,\\
    \beta_k^{t}=\frac{\prod\limits_{s=0}^{t-1}\Big(1+\eta(\gamma^s+1)\Big)+\prod\limits_{s=0}^{t-1}\Big(1-\eta(\gamma^s+1)\Big)}{2}\beta_k^0+\frac{\prod\limits_{s=0}^{t-1}\Big(1+\eta(\gamma^s+1)\Big)-\prod\limits_{s=0}^{t-1}\Big(1-\eta(\gamma^s+1)\Big)}{2}\alpha_k^0.
\end{gather*}

Combining the definition of $T_e$ (\ref{equ: exponential hitting time def 1}) and Lemma \ref{lemma: alpha increase}, we know:
\begin{align*}
    \alpha_k^{t}\leq&\frac{(1+2\eta)^t+(1-2\eta)^t}{2}\alpha_k^0+\frac{(1+2\eta)^t-(1-2\eta)^t}{2}\beta_k^0
    \\\leq&
    \frac{\Big((1+2\eta)^t+(1-2\eta)^t\Big)}{2}(\alpha_k^0+\beta_k^0)\leq\frac{(1+2\eta)^t+(1-2\eta)^t}{2}\frac{(1+2\kappa)}{\sqrt{m}}
    \\\overset{\eqref{equ: parameter selection 1}}{\leq}&\frac{501\Big((1+2\eta)^t+(1-2\eta)^t\Big)}{1000\sqrt{m}};
\end{align*}
\begin{align*}
    \beta_k^{t}\leq&\frac{(1+2\eta)^t+(1-2\eta)^t}{2}\beta_k^0+\frac{(1+2\eta)^t-(1-2\eta)^t}{2}\alpha_k^0
    \\
    \overset{\text{Lemma \ref{lemma: alpha increase}}}{\leq}&
   \frac{(1+2\eta)^{T_e+1}+(1-2\eta)^{T_e+1}}{2}\frac{2\kappa}{\sqrt{m}}+\frac{(1+2\eta)^t-(1-2\eta)^t}{2\sqrt{m}}
   \\
   \overset{\eqref{equ: exponential hitting time def 1}}{\leq}&
   \frac{4+4}{2}\frac{2\kappa}{\sqrt{m}}+\frac{(1+2\eta)^t-(1-2\eta)^t}{2\sqrt{m}}
   \\\overset{\eqref{equ: parameter selection 1}}{\leq}&\frac{4\eta}{1000\sqrt{m}}+\frac{(1+2\eta)^t-(1-2\eta)^t}{2\sqrt{m}}
    \\\leq&\frac{501\Big((1+2\eta)^t-(1-2\eta)^t\Big)}{1000\sqrt{m}};
\end{align*}
and
\begin{align*}
    \gamma^{t}
    \leq&\max_{i\in[n]}\sum_{k\in\mathcal{TL}_i(t)}\alpha_k^{t}\beta_k^{t}\leq
    \max_{i\in[n]}\sum_{k\in\mathcal{TL}_i(t)\cup\mathcal{TD}_i(t)}\alpha_k^{t}\beta_k^{t}
    \\
    \overset{\text{Lemma \ref{lemma: corollory correct neurons remain correct 1}}}{=}&
    \max_{i\in[n]}\sum_{k\in\mathcal{TL}_i(0)\cup\mathcal{TD}_i(0)}\alpha_k^{t}\beta_k^{t}
    \\
    \overset{\text{Lemma \ref{lemma: estimate of the number of initial neural partition 1}}}{\leq}&
    m\Big(\frac{1}{2}+2\sqrt{\frac{\log(2n^2/\delta)}{m}}\Big)
    \frac{501\Big((1+2\eta)^t+(1-2\eta)^t\Big)}{1000\sqrt{m}}\frac{501\Big((1+2\eta)^t-(1-2\eta)^t\Big)}{1000\sqrt{m}}
     \\
     =&
     \Big(\frac{1}{2}+2\sqrt{\frac{\log(2n^2/\delta)}{m}}\Big)\frac{251001\Big((1+2\eta)^{2t}-(1-2\eta)^{2t}\Big)}{1000000}\overset{\eqref{equ: exponential hitting time def 1}}{\leq} 1.
\end{align*}

By induction, we complete the proof of (\ref{equ: induction 1}). Then with the analysis \eqref{equ: inequality of sequences 1}, we can obtain (P1)(P2)(P3) for all $1\leq t\leq T_e+1$:
\begin{gather*}
    x_k^t\leq\alpha_k^t\leq\frac{501\Big((1+2\eta)^t+(1-2\eta)^t\Big)}{1000\sqrt{m}};
    \\
    y_k^t\leq\beta_k^t\leq\frac{501\Big((1+2\eta)^t-(1-2\eta)^t\Big)}{1000\sqrt{m}};
\end{gather*}
\begin{align*}
    f_i^t\leq\gamma^t\leq&\Big(\frac{1}{2}+2\sqrt{\frac{\log(2n^2/\delta)}{m}}\Big)\frac{251001\Big((1+2\eta)^{2t}-(1-2\eta)^{2t}\Big)}{1000000}\leq1.
\end{align*}

\textbf{Step II.} Proof of (P4).

WLOG, we let $a_k(0)=\frac{1}{\sqrt{m}}$. ($a_k(0)=-\frac{1}{\sqrt{m}}$ is similar.) 

For $t=1$, we have:
\begin{align*}
    a_k(1)=&a_k(0)-\eta\frac{1}{n}\sum_{j=1}^n\Big(f(\boldsymbol{x}_j;\boldsymbol{\theta}(0))-y_j\Big) \sigma(\boldsymbol{b}_k(0)^\top\boldsymbol{x}_j)
    \\\geq&a_k(0)-\eta(1+2\kappa)\frac{2\kappa}{\sqrt{m}}\geq\frac{9999}{10000\sqrt{m}}.
\end{align*}

For any $1\leq t\leq T_e$, we have:
\begin{align*}
a_k(t+1)=&a_k(t)-\eta\frac{1}{n}\sum_{j=1}^n\Big(f(\boldsymbol{x}_j;\boldsymbol{\theta}(t))-y_j\Big) \sigma(\boldsymbol{b}_k(t)^\top\boldsymbol{x}_j)
\\\overset{\text{(P2)}}{\geq}&
a_{k}(t)-2\eta\frac{501\Big((1+2\eta)^t-(1-2\eta)^t\Big)}{1000\sqrt{m}}
\\\geq&
a_k(1)-\frac{1002\eta}{1000\sqrt{m}}\sum_{s=1}^t\Big((1+2\eta)^s-(1-2\eta)^s\Big)
\\\geq&
\frac{9999}{10000\sqrt{m}}-\frac{1002}{1000\sqrt{m}}\Big(-1+\frac{(1+2\eta)^{t+1}+(1-2\eta)^{t+1}}{2}\Big)
\\\geq&
\frac{20019}{10000\sqrt{m}}-\frac{501}{1000\sqrt{m}}\Big((1+2\eta)^{t+1}+(1-2\eta)^{t+1}\Big).
\end{align*}
So for any $1\leq t\leq T_e+1$, we have:
\[
a_{k}(t)\geq\frac{20019}{10000\sqrt{m}}-\frac{501}{1000\sqrt{m}}\Big((1+2\eta)^{t}+(1-2\eta)^{t}\Big).
\]

\textbf{Step III.} Proof of (P5). 

From \eqref{equ: parameter selection 1}, we know $
\sqrt{\frac{\log(2n^2/\delta)}{m}}\leq\frac{1}{12}$. From (P3)(P4) and Lemma \ref{lemma: basic exp}, we can verify that for any $t\leq T_e+1$,
\begin{align*}
    \max_{i\in[n]}\left|f(\boldsymbol{x}_i;\boldsymbol{\theta}(t))\right|\leq&\Big(\frac{1}{2}+2\sqrt{\frac{\log(2n^2/\delta)}{m}}\Big)\frac{251001\Big((1+2\eta)^{2t}-(1-2\eta)^{2t}\Big)}{1000000}
    \\\leq&\Big(\frac{1}{2}+2\sqrt{\frac{\log(2n^2/\delta)}{m}}\Big)\frac{251001\Big((1+2\eta)^{2( T_e+1)}-(1-2\eta)^{2( T_e+1)}\Big)}{1000000}
    \\\leq&1,
\end{align*}
\begin{align*}
  a_k(t)a_k(0)\geq&\frac{20019}{10000{m}}-\frac{501}{1000{m}}\Big((1+2\eta)^{t}+(1-2\eta)^{t}\Big)
  \\\geq&\frac{20019}{10000{m}}-\frac{501}{1000{m}}\Big((1+2\eta)^{T_e+1}+(1-2\eta)^{T_e+1}\Big)
  \\\geq&\frac{20019}{10000{m}}-\frac{501}{1000{m}}(2\sqrt{2}+1)
  >0,\forall k\in[m],
\end{align*}
which means $T_e\leq T$.

\textbf{Step IV.} Proof of (P6). 

WLOG, we let $i\in[\frac{n}{2}]$ ($i\in[n]-[\frac{n}{2}]$ is similar). Let $k\in\mathcal{TL}_i(t)$, then we have $a_k(t)>0$ and $\boldsymbol{b}_k(t)^\top\boldsymbol{x}_i> 0$.
Combining (P5) and Lemma \ref{lemma: correct neurons remain correct 1}, we know that for any $1\leq t\leq T_e+1$,
    \begin{gather*}
     \boldsymbol{b}_k(t)^\top\boldsymbol{x}_j\geq 0
    \Rightarrow k\in\mathcal{TL}_j(t)\Rightarrow a_k(t)y_j>0,
    \end{gather*}
    which implies that for any $j\in[n]-[\frac{n}{2}]$:
    \[
    \Big\{k\in[m]:\boldsymbol{b}_k(t)^\top\boldsymbol{x}_j\geq 0\Big\}=\varnothing.
    \]
    So for any $1\leq t\leq T_e$, we have:
    \begin{align*}
        a_k(t+1)=&a_k(t)-\eta\frac{1}{n}\sum_{j=1}^n \Big(f(\boldsymbol{x}_j;\boldsymbol{\theta}(t))-y_j\Big) \sigma(\boldsymbol{b}_k(t)^\top\boldsymbol{x}_j)
        \\=&
        a_k(t)-\eta\frac{1}{n}\sum_{j\in[\frac{n}{2}]} \Big(f(\boldsymbol{x}_j;\boldsymbol{\theta}(t))-1\Big) \sigma(\boldsymbol{b}_k(t)^\top\boldsymbol{x}_j)
        \\\geq&
        a_k(t)\geq\cdots\geq a_k(1).
    \end{align*}
    From (P2), the one-step update \eqref{equ: one step update 1} and Lemma \ref{lemma: initial norm and prediction 1}, we have:
    \begin{align*}
        a_k(1)\geq& a_k(0)-\eta\frac{1}{n}\sum_{j=1}^n \Big(f(\boldsymbol{x}_j;\boldsymbol{\theta}(0))-y_j\Big) \sigma(\boldsymbol{b}_k(0)^\top\boldsymbol{x}_j)
        \\\geq&\frac{1}{\sqrt{m}}-\eta(1+2\kappa)\frac{2\kappa}{\sqrt{m}}
        \\\overset{\eqref{equ: parameter selection 1}}{\geq}&\frac{9999}{10000\sqrt{m}}.
    \end{align*}

\end{proof}

\begin{lemma}[Hitting time estimate]\label{lemma: hitting time estimate 1}
\[
T+1\geq T_e+1 \geq T^*:=\lfloor\frac{\log 6}{4\eta}\rfloor=44.
\]
\end{lemma}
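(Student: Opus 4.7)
The first inequality $T+1 \geq T_e+1$ is already in hand: part (P5) of Lemma~\ref{lemma: speed separation 1} has established $T \geq T_e$. It therefore suffices to prove $T_e + 1 \geq T^*$, or equivalently $T_e \geq T^* - 1$. Since $T_e$ is defined in \eqref{equ: exponential hitting time def 1} as the supremum of those $t \in \mathbb{N}$ for which both defining inequalities hold at index $t+1$, my plan is to verify both inequalities at the single value $t+1 = T^*$, which will give $T^* - 1 \in \{t : \cdots\}$ and hence $T_e \geq T^* - 1$.

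The second defining inequality $(1+2\eta)^{T^*} \leq 2\sqrt{2}$ is immediate from the choice of $T^*$: since $T^* = \lfloor \log 6/(4\eta)\rfloor \leq \log 6/(4\eta)$, we get $2\eta T^* \leq (\log 6)/2$, and hence $(1+2\eta)^{T^*} \leq e^{2\eta T^*} \leq e^{(\log 6)/2} = \sqrt{6} < 2\sqrt{2}$.

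For the first defining inequality, I would first invoke the width assumption $m \geq 144\log(2n^2/\delta)$ from \eqref{equ: parameter selection 1}, which forces $\sqrt{\log(2n^2/\delta)/m} \leq 1/12$ and bounds the leading prefactor by $1/2 + 1/6 = 2/3$. Multiplying through, the inequality reduces to showing
\[
(1+2\eta)^{2T^*} - (1-2\eta)^{2T^*} \;\leq\; \frac{3 \cdot 10^6}{2 \cdot 251001} \;\approx\; 5.976.
\]
The upper bound $(1+2\eta)^{2T^*} \leq e^{4\eta T^*} \leq e^{\log 6} = 6$, which flows directly from the definition of $T^*$, is almost tight enough but not quite on its own; hence a matching positive lower bound on $(1-2\eta)^{2T^*}$ is required. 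My plan is to use the algebraic identity $(1+2\eta)^{2T^*}(1-2\eta)^{2T^*} = (1-4\eta^2)^{2T^*}$ in combination with the same exponential upper bound, which yields $(1-2\eta)^{2T^*} \geq (1-4\eta^2)^{2T^*}/6 \geq e^{-8\eta^2 T^*}/6$. For $\eta \leq 1/100$ this is comfortably above $0.15$, which far exceeds the $0.024$ margin by which the naive calculation fell short.

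The main obstacle is precisely this last numerical check: the trivial bound $(1-2\eta)^{2T^*} \geq 0$ leaves the prefactor stuck at roughly $1.004$, so the identity-based refinement above is essential rather than cosmetic. Once this refinement is in place, both defining inequalities of $T_e$ are verified at $t+1 = T^*$, yielding $T_e \geq T^* - 1$ and closing the lemma.
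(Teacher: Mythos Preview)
Your overall strategy matches the paper's proof exactly: invoke (P5) of Lemma~\ref{lemma: speed separation 1} for $T \geq T_e$, then verify both defining conditions in \eqref{equ: exponential hitting time def 1} at $t+1 = T^*$ using the width bound $m \geq 144\log(2n^2/\delta)$ to control the prefactor and the estimate $(1+2\eta)^{T^*} \leq e^{2\eta T^*} \leq \sqrt{6}$. The only issue is a sign slip in your lower bound on $(1-2\eta)^{2T^*}$: the inequality $(1-4\eta^2)^{2T^*} \geq e^{-8\eta^2 T^*}$ is backwards, since $1-x \leq e^{-x}$ for all $x \geq 0$. As written, this step fails.

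The fix is immediate and your plan survives intact. Either apply Bernoulli's inequality to get $(1-4\eta^2)^{2T^*} \geq 1 - 8\eta^2 T^*$ (for $\eta = 0.01$ and $T^* = 44$ this is at least $0.96$, so dividing by $6$ still leaves you above $0.15$), or follow the paper and invoke Lemma~\ref{lemma: basic exp} directly on $(1-2\eta)^{2T^*}$, obtaining $(1-2\eta)^{2T^*} \geq e^{-4\eta T^*/(1-2\eta)} \geq (1/6)^{50/49} \approx 0.16$. Either route comfortably covers the needed margin $6 - 3\cdot 10^6/(2\cdot 251001) \approx 0.024$, and the paper's route avoids the detour through $(1-4\eta^2)$ altogether.
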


\begin{proof}[Proof of Lemma \ref{lemma: hitting time estimate 1}]\ \\
Recalling the definition of the exponentially growing hitting time $T_e$ (\ref{equ: exponential hitting time def 1})
\begin{align*}
 T_e= \sup\Big\{t\in\mathbb{N}:\ 
    &\Big(\frac{1}{2}+2\sqrt{\frac{\log(2n^2/\delta)}{m}}\Big)\frac{251001\Big((1+2\eta)^{2(t+1)}-(1-2\eta)^{2(t+1)}\Big)}{1000000}\leq1;
    \\&{\rm and}\ (1+2\eta)^{t+1}\leq 2\sqrt{2}\Big\}.
\end{align*}

From (\ref{equ: parameter selection 1}) we know $
\sqrt{\frac{\log(2n^2/\delta)}{m}}\leq\frac{1}{12}$, so we have the estimate:

\begin{align*}
    &
    \Big(\frac{1}{2}+2\sqrt{\frac{\log(2n^2/\delta)}{m}}\Big)\frac{251001\Big((1+2\eta)^{2T^*}-(1-2\eta)^{2T^*}\Big)}{1000000}
    \\\leq&
    \Big(\frac{1}{2}+\frac{1}{6}\Big)\frac{251001}{1000000}\Big((1+2\eta)^{2T^*}-(1-2\eta)^{2T^*}\Big)
    \\=&
    \frac{251001}{3000000}\Big((1+2\eta)^{\frac{1}{2\eta}4\eta T^*}-(1-2\eta)^{\frac{1}{2\eta}4\eta T^*}\Big)
    \\\overset{\text{Lemma \ref{lemma: basic exp}}}{\leq}&
        \frac{251001}{1500000}\Big(e^{4\eta T^*}-\big(\frac{1}{e}\big)^{\frac{4\eta T^*}{1-2\eta}}\Big)
        \\\leq&
        \frac{251001}{1500000}\Big(6-\big(\frac{1}{6}\big)^{\frac{50}{49}}\Big)
        \\\leq&0.978<1;
\end{align*}
\begin{align*}
    (1+2\eta)^{T^*}\leq e^{2\eta T^*}\leq\sqrt{6}<2\sqrt{2}.
\end{align*}
So we have proved $T_e+1\geq T^*$. Combining $T_e\leq T$ in Lemma \ref{lemma: speed separation 1} (P5), we complete the proof.

\end{proof}

\subsection{Gradient lower bound}

The following Lemma is significantly different from the proof of lazy training in NTK regime like \cite{arora2019fine}. Our proof is based on the dynamical analysis of neuron partition.

\begin{lemma}[Gram matrix estimate]\label{lemma: positive Gram time t 1} Define the Gram matrix $\mathbf{G}(t)\in\mathbb{R}^{n\times n}$ at $t$ as
\[\mathbf{G}(t)=\Big(\nabla f(\boldsymbol{x}_i;\boldsymbol{\theta}(t))^\top\nabla f(\boldsymbol{x}_j;\boldsymbol{\theta}(t))\Big)_{(i,j)\in[n]\times[n]}.\]

Then for any $1\leq t\leq T$ and $i,j\in[\frac{n}{2}]$ we have
\[
\mathbf{G}(t)=\begin{pmatrix}
  \mathbf{G}_{+}(t) & \mathbf{0}_{\frac{n}{2}\times\frac{n}{2}}\\\mathbf{0}_{\frac{n}{2}\times\frac{n}{2}}&\mathbf{G}_{-}(t),
\end{pmatrix}
\]
and
\begin{gather*}
\text{G}_{+}(t)_{i,j}\geq \frac{999}{1000}\boldsymbol{x}_i^\top\boldsymbol{x}_j\Big(\frac{\pi-\arccos(\boldsymbol{x}_i^\top\boldsymbol{x}_j)}{\pi}-\sqrt{\frac{8\log(n^2/\delta)}{m}}\Big);
\\\text{G}_{-}(t)_{i,j}\geq \frac{999}{1000}\boldsymbol{x}_i^\top\boldsymbol{x}_j\Big(\frac{\pi-\arccos(\boldsymbol{x}_i^\top\boldsymbol{x}_j)}{\pi}-\sqrt{\frac{8\log(n^2/\delta)}{m}}\Big).
\end{gather*}
\end{lemma}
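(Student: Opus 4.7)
The plan is to expand the Gram-matrix entry explicitly using the two layer-wise gradient components, and then read off the structure and size of the active index set from the neuron-partition identities already established. Using $\partial_{a_k} f(\boldsymbol{x};\boldsymbol{\theta}) = \sigma(\boldsymbol{b}_k^\top \boldsymbol{x})$ and $\partial_{\boldsymbol{b}_k} f(\boldsymbol{x};\boldsymbol{\theta}) = a_k\,\mathbb{I}\{\boldsymbol{b}_k^\top \boldsymbol{x}>0\}\,\boldsymbol{x}$, I would first write
\[
\text{G}(t)_{i,j} = \sum_{k=1}^m \sigma(\boldsymbol{b}_k^\top(t)\boldsymbol{x}_i)\,\sigma(\boldsymbol{b}_k^\top(t)\boldsymbol{x}_j) + \langle \boldsymbol{x}_i,\boldsymbol{x}_j\rangle \sum_{k=1}^m a_k^2(t)\,\mathbb{I}\{\boldsymbol{b}_k^\top(t)\boldsymbol{x}_i>0\}\,\mathbb{I}\{\boldsymbol{b}_k^\top(t)\boldsymbol{x}_j>0\}.
\]
Both summands vanish outside the active set $\mathcal{K}_{ij}(t):=\{k:\boldsymbol{b}_k^\top(t)\boldsymbol{x}_i>0\ \text{and}\ \boldsymbol{b}_k^\top(t)\boldsymbol{x}_j>0\}$, so the whole analysis reduces to locating $\mathcal{K}_{ij}(t)$ and lower-bounding $a_k^2(t)$ on it.

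For the block-diagonal claim, fix $i\in[n/2]$ (so $y_i=+1$) and $j\in[n]\setminus[n/2]$ (so $y_j=-1$). Combining Lemma \ref{lemma: corollory correct neurons remain correct 1} with the disjoint partition $[m]=\mathcal{TL}_i(t)\cup\mathcal{TD}_i(t)\cup\mathcal{FL}_i(t)\cup\mathcal{FD}_i(t)$, one gets $\mathcal{FL}_i(t)=\mathcal{TD}_i(t)=\emptyset$ for $t\ge1$. Hence any $k$ with $\boldsymbol{b}_k^\top(t)\boldsymbol{x}_i>0$ must lie in $\mathcal{TL}_i(t)$, forcing $a_k(t)y_i>0$, i.e., $a_k(t)>0$. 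The symmetric argument at $j$ forces $a_k(t)<0$, a contradiction, so $\mathcal{K}_{ij}(t)=\emptyset$ and $\text{G}(t)_{i,j}=0$. Reindexing gives the $2\times 2$ block-diagonal form.

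For the lower bound on the positive block, fix $i,j\in[n/2]$, where $\langle\boldsymbol{x}_i,\boldsymbol{x}_j\rangle\ge 0$ by Assumption \ref{ass: data separation}(i). Since every summand is nonnegative, I would drop the $\sigma\sigma$ piece and use $\mathcal{K}_{ij}(t)=\mathcal{TL}_i(t)\cap\mathcal{TL}_j(t)$ to obtain
\[
\text{G}_+(t)_{i,j}\;\ge\;\langle\boldsymbol{x}_i,\boldsymbol{x}_j\rangle \sum_{k\in \mathcal{TL}_i(t)\cap\mathcal{TL}_j(t)} a_k^2(t).
\]
By the corollary (Lemma \ref{lemma: corollory correct neurons remain correct 1}), the intersection is frozen at initial-time data: it decomposes as the disjoint union of the four cross-intersections $\mathcal{TL}_i(0)\cap\mathcal{TL}_j(0)$, $\mathcal{TL}_i(0)\cap\mathcal{TD}_j(0)$, $\mathcal{TD}_i(0)\cap\mathcal{TL}_j(0)$, and $\mathcal{TD}_i(0)\cap\mathcal{TD}_j(0)$. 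Applying the four Hoeffding-type cardinality estimates from Lemma \ref{lemma: estimate of the number of initial neural partition 1} and summing them produces a high-probability lower bound of the form $|\mathcal{K}_{ij}(t)|\ge m\bigl(\frac{\pi-\arccos(\langle\boldsymbol{x}_i,\boldsymbol{x}_j\rangle)}{\pi}-\sqrt{8\log(n^2/\delta)/m}\bigr)$. Finally, Lemma \ref{lemma: speed separation 1}(P6) delivers the uniform bound $a_k^2(t)\ge (9999/10000)^2/m$ on $\mathcal{TL}_i(t)$; substituting and noting $(9999/10000)^2>999/1000$ yields the stated inequality. The bound for $\text{G}_-(t)$ is obtained by swapping the roles of the two classes.

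The main obstacle is the combinatorial bookkeeping in the last step: correctly aggregating the four separate Hoeffding estimates (each with its own $\sqrt{\log(n^2/\delta)/(2m)}$ slack) so that the concentration error collapses into the single factor $\sqrt{8\log(n^2/\delta)/m}$, while absorbing the $(9999/10000)^2$ slack into the $999/1000$ prefactor. Everything else is a direct application of the neuron-partition identities, which do the conceptual heavy lifting: the corollary's time-$t$-to-time-$0$ reduction is what lets us replace dynamical quantities by initial, independent, easily-concentrated random variables.
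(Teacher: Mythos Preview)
Your proposal is correct and follows the paper's proof essentially line by line: the same gradient expansion, the same neuron-partition contradiction for the off-diagonal blocks, the same discarding of the nonnegative $\sigma\sigma$ term, the same reduction of $\mathcal{TL}_i(t)\cap\mathcal{TL}_j(t)$ to the four time-$0$ cross-intersections via Lemma~\ref{lemma: corollory correct neurons remain correct 1}, and the same combination of (P6) with the Hoeffding cardinality estimates from Lemma~\ref{lemma: estimate of the number of initial neural partition 1}. The paper's bookkeeping in the final step is precisely the $4\times$ aggregation you anticipate, turning $4\cdot\sqrt{\log(n^2/\delta)/(2m)}$ into $\sqrt{8\log(n^2/\delta)/m}$ and absorbing $(9999/10000)^2$ into $999/1000$.
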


\begin{proof}[Proof of Lemma \ref{lemma: positive Gram time t 1}]\ \\
From the form of two-layer neural networks, We can calculate:
\begin{align*}
    &\nabla f(\boldsymbol{x}_i;\boldsymbol{\theta}(t))^\top\nabla f(\boldsymbol{x}_j;\boldsymbol{\theta}(t))
    \\=&
    \sum_{k=1}^m\Bigg(\left<\frac{\partial f(\boldsymbol{x}_i;\boldsymbol{\theta}(t))}{\partial a_k},\frac{\partial f(\boldsymbol{x}_j;\boldsymbol{\theta}(t))}{\partial a_k}\right>+\left<\frac{\partial f(\boldsymbol{x}_i;\boldsymbol{\theta}(t))}{\partial \boldsymbol{b}_k},\frac{\partial f(\boldsymbol{x}_j;\boldsymbol{\theta}(t))}{\partial \boldsymbol{b}_k}\right>\Bigg)
    \\=&
    \sum_{k=1}^m \Bigg(\sigma\Big(\boldsymbol{b}_k(t)^\top\boldsymbol{x}_i\Big)\sigma\Big(\boldsymbol{b}_k(t)^\top\boldsymbol{x}_j\Big)+
    a_k^2(t)\sigma'\Big(\boldsymbol{b}_k(t)^\top\boldsymbol{x}_i\Big)\sigma'\Big(\boldsymbol{b}_k(t)^\top\boldsymbol{x}_j\Big)\boldsymbol{x}_i^\top\boldsymbol{x}_j\Bigg)
    \\=&
    \sum_{k=1}^m\Big(\boldsymbol{b}_k(t)^\top\boldsymbol{x}_i\boldsymbol{b}_k(0)^\top\boldsymbol{x}_j+a_k^2(t)\boldsymbol{x}_i^\top\boldsymbol{x}_j\Big)\mathbb{I}\Big\{\boldsymbol{b}_k(t)^\top\boldsymbol{x}_i> 0\Big\}\mathbb{I}\Big\{\boldsymbol{b}_k(t)^\top\boldsymbol{x}_j>0\Big\}.
    \end{align*}
    
    For any $i,j$, s.t. $|i-j|\geq \frac{n}{2}$, we know $y_iy_j=-1$. From Lemma \ref{lemma: correct neurons remain correct 1}, we know:
    \begin{gather*}
    \boldsymbol{b}_k(t)^\top\boldsymbol{x}_i> 0
    \Rightarrow k\in\mathcal{TL}_i(t)\Rightarrow a_k(t)y_i>0;\\
     \boldsymbol{b}_k(t)^\top\boldsymbol{x}_j\geq 0
    \Rightarrow k\in\mathcal{TL}_j(t)\Rightarrow a_k(t)y_j>0,
    \end{gather*}
    which implies that:
    \[
    \Big\{k\in[m]:\boldsymbol{b}_k(t)^\top\boldsymbol{x}_i> 0,\boldsymbol{b}_k(t)^\top\boldsymbol{x}_j\geq 0\Big\}=\varnothing.
    \]
    So we have:
\[
\mathbf{G}(t)=\begin{pmatrix}
  \mathbf{G}_{+}(t) & \mathbf{0}_{\frac{n}{2}\times\frac{n}{2}}\\\mathbf{0}_{\frac{n}{2}\times\frac{n}{2}}&\mathbf{G}_{-}(t)
\end{pmatrix},\  \forall 1\leq t\leq T
\]

For any $i,j\in[\frac{n}{2}]$, we only need to consider $\text{G}_{+}(t)_{i,j}$. ( $\text{G}_{-}(t)_{i,j}$ is similar.)
    With the help of Lemma \ref{lemma: speed separation 1} and Lemma \ref{lemma: estimate of the number of initial neural partition 1}, we have the following estimate:
    \begin{align*}
    &\nabla f(\boldsymbol{x}_i;\boldsymbol{\theta}(t))^\top\nabla f(\boldsymbol{x}_j;\boldsymbol{\theta}(t))
    \\\geq
    &
    \sum_{k=1}^m a_k^2(t)\boldsymbol{x}_i^\top\boldsymbol{x}_j\mathbb{I}\Big\{\boldsymbol{b}_k(t)^\top\boldsymbol{x}_i> 0\Big\}\mathbb{I}\Big\{\boldsymbol{b}_k(t)^\top\boldsymbol{x}_j>0\Big\}
    \\=&
    \sum_{k\in\mathcal{TL}_i(t)\cap\mathcal{TL}_j(t)}a_k^2(t)\boldsymbol{x}_i^\top\boldsymbol{x}_j
\\\geq&
\Big(\frac{9999}{10000}\Big)^2\boldsymbol{x}_i^\top\boldsymbol{x}_j\text{card}\Big(\mathcal{TL}_i(t)\cap\mathcal{TL}_j(t)\Big)
\\\geq&
\Big(\frac{9999}{10000}\Big)^2\boldsymbol{x}_i^\top\boldsymbol{x}_j\text{card}\Big(\big(\mathcal{TL}_i(0)\cup\mathcal{TD}_i(0)\big)\bigcap\big(\mathcal{TL}_j(0)\cup\mathcal{TD}_j(0)\big)\Big)
\\\geq&
\frac{999}{1000}\boldsymbol{x}_i^\top\boldsymbol{x}_j\Big(\frac{\pi-\arccos(\boldsymbol{x}_i^\top\boldsymbol{x}_j)}{4\pi}-\sqrt{\frac{\log(n^2/\delta)}{2m}}\Big)\cdot 4.
\\=&\frac{999}{1000}\boldsymbol{x}_i^\top\boldsymbol{x}_j\Big(\frac{\pi-\arccos(\boldsymbol{x}_i^\top\boldsymbol{x}_j)}{\pi}-\sqrt{\frac{8\log(n^2/\delta)}{m}}\Big).
\end{align*}

\end{proof}

\begin{lemma}[Gradient lower bound]\label{lemma: gradient lower bound 1}
For any $1\leq t\leq T$ we have
\begin{align*}
&
\left\|\nabla \mathcal{L}(\boldsymbol{\theta}(t))\right\|^2\geq\frac{999}{1000}\Big(1-\varphi(t)\Big)^2\Big(\gamma_1-\gamma_2\sqrt{\frac{8\log(n^2/\delta)}{m}}\Big),
\end{align*}
where $\gamma_1,\gamma_2$ are defined in \eqref{equ: gamma definition}, and $\varphi(t)=\Big(\frac{1}{2}+2\sqrt{\frac{\log(2n^2/\delta)}{m}}\Big)\frac{251001\Big((1+2\eta)^{2t}-(1-2\eta)^{2t}\Big)}{1000000}$.
\end{lemma}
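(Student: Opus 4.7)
\noindent\textbf{Proof plan for Lemma \ref{lemma: gradient lower bound 1}.}

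The plan is to reduce $\|\nabla \mathcal{L}(\boldsymbol{\theta}(t))\|^2$ to a quadratic form in the prediction errors against the Gram matrix, exploit the block-diagonal structure of $\mathbf{G}(t)$ established in Lemma \ref{lemma: positive Gram time t 1}, and then use the uniform bound on $|f(\boldsymbol{x}_i;\boldsymbol{\theta}(t))|$ from Lemma \ref{lemma: speed separation 1} (P3) to pull out the factor $(1-\varphi(t))^2$. Concretely, set $e_i(t) := f(\boldsymbol{x}_i;\boldsymbol{\theta}(t)) - y_i$. Since $\mathcal{L}(\boldsymbol{\theta}) = \frac{1}{2n}\sum_i (f(\boldsymbol{x}_i;\boldsymbol{\theta}) - y_i)^2$, the chain rule gives the standard identity
\[
\|\nabla \mathcal{L}(\boldsymbol{\theta}(t))\|^2 = \frac{1}{n^2}\sum_{i=1}^n\sum_{j=1}^n e_i(t)\, G_{ij}(t)\, e_j(t).
\]

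Next I would invoke Lemma \ref{lemma: positive Gram time t 1} for $1\leq t\leq T$: the Gram matrix is block diagonal with blocks indexed by the two classes, so all cross-class terms vanish. Within each block, for any $i,j$ in the same class we have $y_i = y_j$, hence $e_i(t) e_j(t) = (f(\boldsymbol{x}_i;\boldsymbol{\theta}(t)) - y_i)(f(\boldsymbol{x}_j;\boldsymbol{\theta}(t)) - y_j)$ is a product of two numbers with the same sign (both $\leq 1-\varphi(t)$ in absolute value away from $y_i$, and their sign is $-y_i$). Using $|f(\boldsymbol{x}_i;\boldsymbol{\theta}(t))|\leq \varphi(t)\leq 1$, we obtain
\[
e_i(t) e_j(t) \geq (1-\varphi(t))^2 \quad \text{whenever } i,j \text{ are in the same class}.
\]

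I would then combine the two ingredients:
\[
\|\nabla \mathcal{L}(\boldsymbol{\theta}(t))\|^2 \geq \frac{(1-\varphi(t))^2}{n^2}\Bigl(\sum_{i,j\in[\frac{n}{2}]} G_{+}(t)_{i,j} + \sum_{i,j\in[n]-[\frac{n}{2}]} G_{-}(t)_{i,j}\Bigr),
\]
and plug in the per-entry lower bound from Lemma \ref{lemma: positive Gram time t 1},
\[
G_{\pm}(t)_{i,j}\geq \tfrac{999}{1000}\boldsymbol{x}_i^\top\boldsymbol{x}_j\Bigl(\tfrac{\pi-\arccos(\boldsymbol{x}_i^\top\boldsymbol{x}_j)}{\pi} - \sqrt{\tfrac{8\log(n^2/\delta)}{m}}\Bigr).
\]
Separating the resulting sum into its two parts and recalling the definitions of $\gamma_1,\gamma_2$ in \eqref{equ: gamma definition} identifies the first part as $\gamma_1$ and the second as $\gamma_2\sqrt{8\log(n^2/\delta)/m}$, yielding the claimed inequality.

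The only subtle step is the sign argument that $e_i(t)e_j(t)\geq (1-\varphi(t))^2$ for same-class pairs; but this is immediate since $\varphi(t)\leq 1$ for $t\leq T_e$ by \eqref{equ: exponential hitting time def 1} (and $T_e\leq T$), so $y_i - f(\boldsymbol{x}_i;\boldsymbol{\theta}(t))$ has the sign of $y_i$ with magnitude at least $1-\varphi(t)$. Everything else is algebraic bookkeeping, so I do not expect genuine obstacles; the real work was already done in Lemmas \ref{lemma: positive Gram time t 1} and \ref{lemma: speed separation 1}.
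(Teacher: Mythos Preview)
Your proposal is correct and follows essentially the same route as the paper's own proof: expand $\|\nabla\mathcal{L}(\boldsymbol{\theta}(t))\|^2$ as the quadratic form $\frac{1}{n^2}\sum_{i,j} e_i(t)G_{ij}(t)e_j(t)$, use the block-diagonal structure of $\mathbf{G}(t)$ from Lemma~\ref{lemma: positive Gram time t 1} together with the sign/magnitude bound $e_i(t)e_j(t)\geq (1-\varphi(t))^2$ for same-class pairs (via Lemma~\ref{lemma: speed separation 1} (P3)), then plug in the per-entry Gram lower bound and identify the resulting same-class sums with $\gamma_1$ and $\gamma_2$. The only point worth noting is that the $|f(\boldsymbol{x}_i;\boldsymbol{\theta}(t))|\leq\varphi(t)$ bound from (P3) is stated for $1\leq t\leq T_e+1$ rather than all of $1\leq t\leq T$, but this is exactly how the paper proceeds as well and suffices for the downstream use at $t\leq T^*$.
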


\begin{proof}[Proof of Lemma \ref{lemma: gradient lower bound 1}]\ \\
Combining Lemma \ref{lemma: correct neurons remain correct 1}, \ref{lemma: speed separation 1} and \ref{lemma: positive Gram time t 1}, we have the estimate for any $1\leq t\leq T$:
\begin{align*}
    &\left\|\nabla \mathcal{L}(\boldsymbol{\theta}(t))\right\|^2
    \\=&
    \frac{1}{n^2}\sum_{i\in[n]}\sum_{j\in[n]}\left<\nabla \ell_{i}(\boldsymbol{\theta}(t)),\nabla \ell_{j}(\boldsymbol{\theta}(t))\right>
    \\=&
    \frac{1}{n^2}\sum_{i\in[n]}\sum_{j\in[n]}\Big(f(\boldsymbol{x}_i;\boldsymbol{\theta}(t))-y_i\Big)\Big(f(\boldsymbol{x}_j;\boldsymbol{\theta}(t))-y_j\Big)\left<\nabla f(\boldsymbol{x}_i;\boldsymbol{\theta}(t)),\nabla f(\boldsymbol{x}_j;\boldsymbol{\theta}(t))\right>\\
    =&\frac{1}{n^2}\sum_{i\in[\frac{n}{2}]}\sum_{j\in[\frac{n}{2}]}\Big(f(\boldsymbol{x}_i;\boldsymbol{\theta}(t))-1\Big)\Big(f(\boldsymbol{x}_j;\boldsymbol{\theta}(t))-1\Big)\left<\nabla f(\boldsymbol{x}_i;\boldsymbol{\theta}(t)),\nabla f(\boldsymbol{x}_j;\boldsymbol{\theta}(t))\right>
    \\&\quad\quad+\frac{1}{n^2}\sum_{i\in[n]-[\frac{n}{2}]}\sum_{j\in[n]-[\frac{n}{2}]}\Big(f(\boldsymbol{x}_i;\boldsymbol{\theta}(t))+1\Big)\Big(f(\boldsymbol{x}_j;\boldsymbol{\theta}(t))+1\Big)\left<\nabla f(\boldsymbol{x}_i;\boldsymbol{\theta}(t)),\nabla f(\boldsymbol{x}_j;\boldsymbol{\theta}(t))\right>
    \\
    \overset{\text{Lemma \ref{lemma: speed separation 1}}}{\geq}&\Big(1-\varphi(t)\Big)^2\frac{1}{n^2}\Bigg(\sum_{i\in[\frac{n}{2}]}\sum_{j\in[\frac{n}{2}]}\left<\nabla f(\boldsymbol{x}_i;\boldsymbol{\theta}(t)),\nabla f(\boldsymbol{x}_j;\boldsymbol{\theta}(t))\right>
    \\&\quad\quad\quad\quad\quad\quad\quad
    +\sum_{i\in[n]-[\frac{n}{2}]}\sum_{j\in[n]-[\frac{n}{2}]}\left<\nabla f(\boldsymbol{x}_i;\boldsymbol{\theta}(t)),\nabla f(\boldsymbol{x}_j;\boldsymbol{\theta}(t))\right>\Bigg)
    \\
    \overset{\eqref{equ: gamma definition},\text{ Lemma \ref{lemma: positive Gram time t 1}}}{=}&\frac{999}{1000}\Big(1-\varphi(t)\Big)^2\Big(\gamma_1-\gamma_2\sqrt{\frac{8\log(n^2/\delta)}{m}}\Big),
\end{align*}
where $\varphi(t)=\Big(\frac{1}{2}+2\sqrt{\frac{\log(2n^2/\delta)}{m}}\Big)\frac{251001\Big((1+2\eta)^{2t}-(1-2\eta)^{2t}\Big)}{1000000}$.

\end{proof}

\subsection{Early stage convergence}
\begin{lemma}[Hessian upper bound]\label{lemma: Hessian upper bound ERM 1}  
$T^*$ is defined in Lemma \ref{lemma: hitting time estimate 1}. Consider all parameters along the trajectory of GD \eqref{equ: alg GD} for $t\geq1$:
\[
\mathcal{S}(T^*):=\bigcup\limits_{1\leq t\leq T^*-1}\Big\{\boldsymbol{\theta}:\boldsymbol{\theta}\in\overline{\boldsymbol{\theta}(t)\boldsymbol{\theta}({t+1})}\Big\}
\]
then for any $\boldsymbol{\theta}\in\mathcal{S}(T^*)$, we have:
\[
\left\|\nabla^2 \mathcal{L}(\boldsymbol{\theta})\right\|
\leq \frac{7}{2m}+2\leq 3.
\]

\end{lemma}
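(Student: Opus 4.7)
The plan is to decompose the Hessian of $\mathcal{L}$ using the quadratic-loss chain rule and then bound the two resulting pieces separately, exploiting (a) the fact that ReLU has $\sigma''\equiv 0$ so $\nabla^2 f$ is extremely sparse, and (b) the parameter and prediction controls already established in Lemma~\ref{lemma: correct neurons remain correct 1}, Lemma~\ref{lemma: corollory correct neurons remain correct 1}, and Lemma~\ref{lemma: speed separation 1}. Writing $f_i(\boldsymbol{\theta})=f(\boldsymbol{x}_i;\boldsymbol{\theta})$ and $\ell_i(\boldsymbol{\theta})=\tfrac12(f_i-y_i)^2$, the standard identity
\[
\nabla^2\mathcal{L}(\boldsymbol{\theta}) \;=\; \frac{1}{n}\sum_{i=1}^n \nabla f_i(\boldsymbol{\theta})\nabla f_i(\boldsymbol{\theta})^\top \;+\; \frac{1}{n}\sum_{i=1}^n \bigl(f_i(\boldsymbol{\theta})-y_i\bigr)\,\nabla^2 f_i(\boldsymbol{\theta})
\]
gives, by triangle inequality and the operator-norm estimate $\|\tfrac1n\sum_i \boldsymbol{v}_i\boldsymbol{v}_i^\top\|\le\tfrac1n\sum_i\|\boldsymbol{v}_i\|^2$, the upper bound $\|\nabla^2\mathcal{L}(\boldsymbol{\theta})\|\le \tfrac{1}{n}\sum_i\|\nabla f_i\|^2 + \tfrac{1}{n}\sum_i |f_i-y_i|\cdot\|\nabla^2 f_i\|$. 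I would then argue that the first piece is the $2$ in the stated bound, while the second is the $\tfrac{7}{2m}$.

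For the second-order term, I would first compute the Hessian of $f_i$ explicitly: since $\partial^2 f_i/\partial a_k\partial a_l=0$ and $\partial^2 f_i/\partial \boldsymbol{b}_k\partial \boldsymbol{b}_l=a_k\sigma''(\boldsymbol{b}_k^\top\boldsymbol{x}_i)\boldsymbol{x}_i\boldsymbol{x}_i^\top\delta_{kl}=0$ (using $\sigma'(0)=0$ and the convention that the ReLU second derivative vanishes), the only non-vanishing blocks are the mixed ones $\partial^2 f_i/\partial a_k\partial\boldsymbol{b}_k=\sigma'(\boldsymbol{b}_k^\top\boldsymbol{x}_i)\boldsymbol{x}_i$. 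After reordering coordinates into $(a_k,\boldsymbol{b}_k)$ per neuron, $\nabla^2 f_i$ is block-diagonal in $k$ with anti-diagonal $(d+1)\times(d+1)$ blocks whose spectral norm is $\sigma'(\boldsymbol{b}_k^\top\boldsymbol{x}_i)\|\boldsymbol{x}_i\|\le 1$; hence $\|\nabla^2 f_i\|\le\|\boldsymbol{x}_i\|\le1$. Combined with $|f_i-y_i|\le1+|f_i|$ and the prediction control $|f_i(\boldsymbol{\theta})|\le 1$ from Lemma~\ref{lemma: speed separation 1}~(P3) (propagated to the segment $\overline{\boldsymbol{\theta}(t)\boldsymbol{\theta}(t+1)}$ by convexity of the relevant bounds and the sign-stability result Lemma~\ref{lemma: correct neurons remain correct 1}~(S5)), this contributes the small $O(1/m)$ term, which after a careful arithmetic accounting gives $\tfrac{7}{2m}$.

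For the Gauss--Newton piece, I would expand
\[
\|\nabla f_i\|^2 \;=\; \sum_{k=1}^m \Bigl[\sigma(\boldsymbol{b}_k^\top\boldsymbol{x}_i)^2 + a_k^2\,\sigma'(\boldsymbol{b}_k^\top\boldsymbol{x}_i)^2\|\boldsymbol{x}_i\|^2\Bigr],
\]
so only neurons active on $\boldsymbol{x}_i$ contribute; by Lemma~\ref{lemma: correct neurons remain correct 1} and Lemma~\ref{lemma: corollory correct neurons remain correct 1} the active set for $t\ge 1$ coincides with $\mathcal{TL}_i(t)=\mathcal{TL}_i(0)\cup\mathcal{TD}_i(0)$, of cardinality $\approx m/2$ via Lemma~\ref{lemma: estimate of the number of initial neural partition 1}, and by (S5) this active set does not change along $\overline{\boldsymbol{\theta}(t)\boldsymbol{\theta}(t+1)}$. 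Plugging in the per-neuron bounds $|a_k|,\|\boldsymbol{b}_k\|\le \tfrac{501((1+2\eta)^t\pm(1-2\eta)^t)}{1000\sqrt m}$ from Lemma~\ref{lemma: speed separation 1}~(P1)(P2)---which extend to the interpolated $\boldsymbol{\theta}$ by convexity of $|\cdot|$ and $\|\cdot\|$---and using $(1+2\eta)^{T^*}\le\sqrt 6$ from Lemma~\ref{lemma: hitting time estimate 1}, I get $(\|\boldsymbol{b}_k\|^2+a_k^2\|\boldsymbol{x}_i\|^2)\lesssim 1/m$ per contributing neuron, summing to the desired $\|\nabla f_i\|^2\le 2$ (up to a residual already absorbed into the $7/(2m)$).

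The main obstacle is the bookkeeping of constants rather than the conceptual structure: the target $\tfrac{7}{2m}+2$ is tight, so one cannot afford the crude bounds $\sum_k a_k^2\le 4$ or $|\mathcal{TL}_i(t)|\le m$. I must use (i) the sharp exit-time estimate $(1+2\eta)^{T^*}\le\sqrt{6}$ to keep $((1+2\eta)^{2t}\pm(1-2\eta)^{2t})$ as small as possible, (ii) the estimate $|\mathcal{TL}_i(t)|\approx m/2$ instead of $m$, and (iii) the observation that the active set is stable along the segment so the bound holds on the whole of $\mathcal{S}(T^*)$. Once these pieces are combined, the final inequality $\tfrac{7}{2m}+2\le 3$ follows from $m\ge 4$, which is ensured by the parameter choice \eqref{equ: parameter selection 1}.
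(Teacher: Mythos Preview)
Your decomposition into the Gauss--Newton part $\tfrac{1}{n}\sum_i \nabla f_i\nabla f_i^\top$ and the second-order part $\tfrac{1}{n}\sum_i(f_i-y_i)\nabla^2 f_i$ matches the paper exactly, and your computation that $\|\nabla^2 f_i\|\le 1$ (block-diagonal, antidiagonal blocks of norm $\le\|\boldsymbol{x}_i\|$) is the same as the paper's. The genuine problem is that you have the allocation of the two pieces reversed.

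In the paper's proof, the Gauss--Newton part $\mathbf{H}_A$ is the one bounded by $7/(2m)$, and the second-order part $\mathbf{H}_B$ is the one bounded by $2$. The bound $\|\mathbf{H}_B\|\le 2$ comes straight from $|f_i-y_i|\le 2$ and $\|\nabla^2 f_i\|\le 1$, exactly the ingredients you wrote down; but those ingredients give $2$, not $7/(2m)$. Your sentence ``this contributes the small $O(1/m)$ term, which after a careful arithmetic accounting gives $\tfrac{7}{2m}$'' has no mechanism behind it: neither $|f_i-y_i|$ nor $\|\nabla^2 f_i\|$ carries a $1/m$ factor, so their product cannot be $O(1/m)$.

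Conversely, for $\mathbf{H}_A$ the paper does \emph{not} pass through the crude bound $\|\mathbf{H}_A\|\le\tfrac{1}{n}\sum_i\|\nabla f_i\|^2$ (that route only gives $O(1)$, as you essentially found). Instead it works directly with the variational form
\[
\|\mathbf{H}_A\|=\sup_{(u_k,\boldsymbol{v}_k)}\frac{\tfrac{1}{n}\sum_i\bigl(\sum_k u_k\sigma(\boldsymbol{b}_k^\top\boldsymbol{x}_i)+\boldsymbol{v}_k^\top\boldsymbol{x}_i\sigma'(\boldsymbol{b}_k^\top\boldsymbol{x}_i)a_k\bigr)^2}{\sum_k(u_k^2+\|\boldsymbol{v}_k\|^2)},
\]
inserts the per-neuron bounds $|a_k|\le\sqrt{2/m}$ and $\|\boldsymbol{b}_k\|\le\sqrt{3/(2m)}$ (deduced from Lemma~\ref{lemma: speed separation 1} and $(1+2\eta)^{T^*}\le\sqrt{6}$), and reduces to a per-neuron ratio of size $O(1/m)$. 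No cardinality estimate on $\mathcal{TL}_i(t)$ is used for this step; the $1/m$ comes from the parameter scale, not from counting active neurons. So your planned use of $|\mathcal{TL}_i(t)|\approx m/2$ and the ``residual absorbed into $7/(2m)$'' are both unnecessary and aimed at the wrong term.
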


\begin{proof}[Proof of Lemma \ref{lemma: Hessian upper bound ERM 1}]\ \\
For any $\boldsymbol{\theta}\in\mathcal{S}(T^*)$, we can define:
\begin{gather*}
    \mathbf{H}_A(\boldsymbol{\theta}):=\frac{1}{n}
    \sum_{i=1}^n \nabla f(\boldsymbol{x}_i;\boldsymbol{\theta})\nabla f(\boldsymbol{x}_i;\boldsymbol{\theta})^\top,
    \\
    \mathbf{H}_B(\boldsymbol{\theta}):=\frac{1}{n}\sum_{i=1}^n\Big(f(\boldsymbol{x}_i;\boldsymbol{\theta})-y_i\Big)\nabla^2 f(\boldsymbol{x}_i;\boldsymbol{\theta}).
\end{gather*}
then we have the relationship:
\begin{align*}
    \nabla^2 \mathcal{L}(\boldsymbol{\theta})
    = \mathbf{H}_A(\boldsymbol{\theta}) + \mathbf{H}_B(\boldsymbol{\theta})
\end{align*}

Now we only need to estimate $\left\|\mathbf{H}_A(\boldsymbol{\theta})\right\|$ and $\left\|\mathbf{H}_B(\boldsymbol{\theta})\right\|$ respectively.

From the definition of $\boldsymbol{\theta}\in\mathcal{S}(T^*)$ and Lemma \ref{lemma: speed separation 1} (P1)(P2), we obtain the consistent estimate for any $\boldsymbol{\theta}\in \mathcal{S}(T)$:
\begin{gather*}
    |a_k|\leq\frac{501\Big((1+2\eta)^{T^*}+(1-2\eta)^{T^*}\Big)}{1000\sqrt{m}}\leq\frac{501}{1000\sqrt{m}}\Big(\sqrt{6}+\frac{1}{\sqrt{6}}\Big)\leq\sqrt{\frac{2}{m}},
    \\
    \left\|\boldsymbol{b}_k\right\|\leq\frac{501\Big((1+2\eta)^{T^*}-(1-2\eta)^{T^*}\Big)}{1000\sqrt{m}}\leq\frac{\sqrt{6}}{2\sqrt{m}}=\sqrt{\frac{3}{2m}}.
\end{gather*}

For the first part $\mathbf{H}_A(\boldsymbol{\theta})$, we have:
\begin{align*}
    \left\|\mathbf{H}_A(\boldsymbol{\theta})\right\|
    =&
    \sup\limits_{\boldsymbol{w}\ne 0}\frac{\frac{1}{n}\sum\limits_{i=1}^n\boldsymbol{w}^\top\nabla f(\boldsymbol{x}_i;\boldsymbol{\theta})\nabla f(\boldsymbol{x}_i;\boldsymbol{\theta})^\top\boldsymbol{w}}{\left\|\boldsymbol{w}\right\|^2}
    \\=&
    \sup_{u_1,\boldsymbol{v}_1,\cdots,u_m,\boldsymbol{v}_m {\rm\ not\  all\ }0}\frac{1}{n}\sum_{i=1}^n\frac{\sum\limits_{k=1}^m\Big(u_k\sigma(\boldsymbol{b}_k^\top\boldsymbol{x}_i)+\boldsymbol{v}_k^\top\boldsymbol{x}_i\sigma'(\boldsymbol{b}_k^\top\boldsymbol{x}_i)a_k\Big)^2}{\sum\limits_{k=1}^m\Big(u_k^2+\left\|\boldsymbol{v}_k\right\|^2\Big)}
    \\\leq&
        \sup_{u_1,\boldsymbol{v}_1,\cdots,u_m,\boldsymbol{v}_m {\rm\ not\  all\ }0}\frac{\sum\limits_{k=1}^m\Big(u_k\left\|\boldsymbol{b}_k\right\|+\left\|\boldsymbol{v}_k\right\||a_k|\Big)^2}{\sum\limits_{k=1}^m\Big(u_k^2+\left\|\boldsymbol{v}_k\right\|^2\Big)}
        \\\leq&
 \sup_{u_1,\boldsymbol{v}_1,\cdots,u_m,\boldsymbol{v}_m {\rm\ not\  all\ }0}\frac{\sum\limits_{k=1}^m\Big(\frac{2}{m}u_k^2+\frac{2\sqrt{3}}{m}|u_k|\left\|\boldsymbol{v}_k\right\|+\frac{3}{2m}\left\|\boldsymbol{v}_k\right\|^2\Big)}{\sum\limits_{k=1}^m\Big(u_k^2+\left\|\boldsymbol{v}_k\right\|^2\Big)}
 \\\leq&
  \sup_{u_1,\boldsymbol{v}_1,\cdots,u_m,\boldsymbol{v}_m {\rm\ not\  all\ }0}\frac{\sum\limits_{k=1}^m\Big(\frac{7}{2m}u_k^2+\frac{7}{2m}\left\|\boldsymbol{v}_k\right\|^2\Big)}{\sum\limits_{k=1}^m\Big(u_k^2+\left\|\boldsymbol{v}_k\right\|^2\Big)}=\frac{7}{2m}.
\end{align*}

For the second part $\mathbf{H}_B(\boldsymbol{\theta})$, we have:
\begin{align*}
\left\|\mathbf{H}_B(\boldsymbol{\theta})\right\|\leq&\Big(\max_{i\in[n]}\left|f(\boldsymbol{x}_i;\boldsymbol{\theta})-y_i\right|\Big)\Big(\max\limits_{i\in[n]}\left\|\nabla^2 f(\boldsymbol{x};\boldsymbol{\theta})\right\|\Big)
\\\leq&2\max\limits_{i\in[n]}\left\|\nabla^2 f(\boldsymbol{x}_i;\boldsymbol{\theta})\right\|.
\end{align*}
So we only need to estimate $\left\|\nabla^2 f(\boldsymbol{x}_i;\boldsymbol{\theta})\right\|$.

$\nabla^2 f(\boldsymbol{x}_i;\boldsymbol{\theta})$ is combined by $m$ diagonal block:
\[
\nabla^2 f(\boldsymbol{x}_i;\boldsymbol{\theta}):=\left(\begin{array}{ccc}  
    \mathbf{H}_f^{(1)} &  & \\
     & \ddots & \\ 
      &  &\mathbf{H}_f^{(m)}\\
  \end{array}\right),
\]
where the $k-$th block $\mathbf{H}_f^{(k)}$ is:
% \footnote{Although ReLU $\sigma(\cdot)$ is not quadratic differentiable at $0$, we just define $\sigma''(0)=0$, and this is indeed what is used in
% practice. 
% Moreover, a more rigorous substitute is ``smooth'' ReLU, such as Huberized ReLU in some previous work \citep{chatterji2021doesA, chatterji2021doesB}. Notably, our analysis also holds for this smooth case, the only difference is the second constant in our Hessian upper bound, like the estimate of Lemma 6 in \citep{chatterji2021doesA}. So our definition is reasonable. }\label{footnote: ''}
\begin{align*}
\mathbf{H}_f^{(k)}=\begin{pmatrix} 
    \frac{\partial^2 f(\boldsymbol{x}_i;\boldsymbol{\theta})}{\partial^2 a_k} & \frac{\partial^2 f(\boldsymbol{x}_i;\boldsymbol{\theta})}{\partial a_k{\partial \boldsymbol{b}_k}^\top}\\
   \frac{\partial^2 f(\boldsymbol{x}_i;\boldsymbol{\theta})}{\partial \boldsymbol{b}_k\partial a_k}  & \frac{\partial^2 f(\boldsymbol{x}_i;\boldsymbol{\theta})}{\partial \boldsymbol{b}_k{\partial \boldsymbol{b}_k}^\top} 
  \end{pmatrix}
  =
  \begin{pmatrix} 
    0 & \sigma'\Big(\boldsymbol{b}_k^\top\boldsymbol{x}_i\Big)\boldsymbol{x}_i^\top\\
   \sigma'\Big(\boldsymbol{b}_k^\top\boldsymbol{x}_i\Big)\boldsymbol{x}_i  & a_k\sigma''\Big(\boldsymbol{b}_k^\top\boldsymbol{x}_i\Big)\boldsymbol{x}_i\boldsymbol{x}_i^\top
  \end{pmatrix}.
\end{align*}

Recalling Lemma \ref{lemma: correct neurons remain correct 1} (S5), we know that: for any $\boldsymbol{\theta}=(a_1,\cdots,a_m,\boldsymbol{b}_1^\top,\cdots,\boldsymbol{b}_m^\top)^\top\in\mathcal{S}(T^*)$, we have $\boldsymbol{b}_k^\top\boldsymbol{x}_i\ne0$ for any $k\in[m]$ and $i\in[n]$. Hence, $\sigma''(\boldsymbol{b}_k^\top\boldsymbol{x}_i)=0$ holds strictly. So we have the form
\begin{align*}
\mathbf{H}_f^{(k)}=\begin{pmatrix} 
    0 & \sigma'\Big(\boldsymbol{b}_k^\top\boldsymbol{x}_i\Big)\boldsymbol{x}_i^\top\\
   \sigma'\Big(\boldsymbol{b}_k^\top\boldsymbol{x}_i\Big)\boldsymbol{x}_i  & \mathbf{0}_{d\times d}
  \end{pmatrix}.
\end{align*}

So we can estimate:
\begin{align*}
    \left\|\nabla^2 f(\boldsymbol{x}_i;\boldsymbol{\theta})\right\|
    =&\sup\limits_{\boldsymbol{v}_1,\cdots,\boldsymbol{v}_m {\rm are\ not\ all\ 0}}\frac{\sum\limits_{k=1}^m\boldsymbol{v}_k^\top\mathbf{H}_f^{(k)}\boldsymbol{v}_k }{\sum\limits_{k=1}^m\left\|\boldsymbol{v}_k\right\|^2}
    \\\leq&
    \max_{k\in[m]}\left\|\mathbf{H}_f^{(k)}\right\|
    =\max_{i\in[n]}\sup_{(u,\boldsymbol{v})\ne(0,\mathbf{0})}\frac{2\boldsymbol{v}^\top\boldsymbol{x}_i u}{u^2+\left\|\boldsymbol{v}\right\|^2}
    \\\leq&
    \sup_{(u,\boldsymbol{v})\ne(0,\mathbf{0})}\frac{2\left\|\boldsymbol{v}\right\||u|}{u^2+\left\|\boldsymbol{v}\right\|^2}\leq 1.
\end{align*}

So we have the estimate of the second part:
\begin{align*}
\left\|\mathbf{H}_B(\boldsymbol{\theta})\right\|\leq
2\max_{i\in[n]}\left\|\nabla^2 f(\boldsymbol{x}_i;\boldsymbol{\theta})\right\|\leq 2.
\end{align*}

Combining the two estimates, we obtain this lemma:
\[
\left\|\nabla^2 \mathcal{L}(\boldsymbol{\theta})\right\|\leq
\left\|\mathbf{H}_A(\boldsymbol{\theta})\right\|+
\left\|\mathbf{H}_B(\boldsymbol{\theta})\right\|\leq\frac{7}{2m}+2
\overset{\eqref{equ: parameter selection 1}}{\leq}3.
\]

\end{proof}

\begin{lemma}[Quadratic upper bound of loss]\label{lemma: loss quadratic upper bound 1} Let $\boldsymbol{\theta}(t)$ be trained by GD or SGD, then we have the upper bound of loss:
\[
    \mathcal{L}(\boldsymbol{\theta}(t+1))
    \leq
    \mathcal{L}(\boldsymbol{\theta}(t))-\eta_t\Big(1-\frac{\eta_t H_t}{2}\Big)\left\|\nabla \mathcal{L}(\boldsymbol{\theta}(t))\right\|^2,
\]
where $H_t=\sup\limits_{\boldsymbol{\theta}\in\overline{\boldsymbol{\theta}(t)\boldsymbol{\theta}(t+1)}}\left\|\nabla^2 \mathcal{L}(\boldsymbol{\theta})\right
\|$.
\end{lemma}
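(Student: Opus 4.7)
The plan is to derive this as a standard second-order descent lemma, using the segment-wise Hessian bound $H_t$ in place of a global smoothness constant. The only care needed is to make sure the Taylor expansion can be carried out along the line segment $\overline{\boldsymbol{\theta}(t)\boldsymbol{\theta}(t+1)}$, which is exactly the set over which $H_t$ is defined.

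First I would apply a second-order Taylor expansion of $\mathcal{L}$ around $\boldsymbol{\theta}(t)$ evaluated at $\boldsymbol{\theta}(t+1)$. Since $\mathcal{L}$ is twice differentiable away from the measure-zero kink set (and by the argument behind Lemma~\ref{lemma: Hessian upper bound ERM 1}, Lemma~\ref{lemma: correct neurons remain correct 1}(S5) guarantees the relevant ReLU arguments do not cross zero along the segment), either the integral form of the remainder or the mean-value form gives
\begin{equation*}
\mathcal{L}(\boldsymbol{\theta}(t+1)) = \mathcal{L}(\boldsymbol{\theta}(t)) + \langle \nabla\mathcal{L}(\boldsymbol{\theta}(t)),\boldsymbol{\theta}(t+1)-\boldsymbol{\theta}(t)\rangle + \tfrac{1}{2}(\boldsymbol{\theta}(t+1)-\boldsymbol{\theta}(t))^\top \nabla^2\mathcal{L}(\boldsymbol{\xi})(\boldsymbol{\theta}(t+1)-\boldsymbol{\theta}(t)),
\end{equation*}
for some $\boldsymbol{\xi}\in\overline{\boldsymbol{\theta}(t)\boldsymbol{\theta}(t+1)}$. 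By the very definition of $H_t$, the quadratic form in the remainder is at most $H_t\|\boldsymbol{\theta}(t+1)-\boldsymbol{\theta}(t)\|^2$ in absolute value, so we obtain the generic descent inequality
\begin{equation*}
\mathcal{L}(\boldsymbol{\theta}(t+1)) \le \mathcal{L}(\boldsymbol{\theta}(t)) + \langle \nabla\mathcal{L}(\boldsymbol{\theta}(t)),\boldsymbol{\theta}(t+1)-\boldsymbol{\theta}(t)\rangle + \tfrac{H_t}{2}\|\boldsymbol{\theta}(t+1)-\boldsymbol{\theta}(t)\|^2.
\end{equation*}

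Next I would substitute the GD update rule $\boldsymbol{\theta}(t+1)-\boldsymbol{\theta}(t)=-\eta_t\nabla\mathcal{L}(\boldsymbol{\theta}(t))$ from \eqref{equ: alg GD}. The inner product collapses to $-\eta_t\|\nabla\mathcal{L}(\boldsymbol{\theta}(t))\|^2$ and the squared norm to $\eta_t^2\|\nabla\mathcal{L}(\boldsymbol{\theta}(t))\|^2$, so combining the two terms yields
\begin{equation*}
\mathcal{L}(\boldsymbol{\theta}(t+1)) \le \mathcal{L}(\boldsymbol{\theta}(t)) - \eta_t\Big(1-\tfrac{\eta_t H_t}{2}\Big)\|\nabla\mathcal{L}(\boldsymbol{\theta}(t))\|^2,
\end{equation*}
which is the claimed bound. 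For SGD, the same Taylor step applies with the stochastic update direction replacing the gradient in the norm term, and one takes the natural interpretation of the lemma with $\nabla\mathcal{L}(\boldsymbol{\theta}(t))$ replaced by the stochastic gradient used at step $t$; the calculation is identical.

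There is essentially no hard step here --- this is a textbook descent lemma (see, e.g., the treatment in \citep{bubeck2015convex}). The only point requiring mild care is justifying the second-order Taylor expansion across the segment despite ReLU's nondifferentiability. This is handled as in the proof of Lemma~\ref{lemma: Hessian upper bound ERM 1}: Lemma~\ref{lemma: correct neurons remain correct 1}(S5) shows that $\mathrm{sgn}(\boldsymbol{b}^\top\boldsymbol{x}_i)$ is constant for $\boldsymbol{b}\in\overline{\boldsymbol{b}_k(t)\boldsymbol{b}_k(t+1)}$ and every $i,k$, so $f(\boldsymbol{x}_i;\boldsymbol{\theta})$ is $C^2$ in $\boldsymbol{\theta}$ on the full segment, making $\mathcal{L}$ genuinely $C^2$ there and the Taylor remainder well-defined.
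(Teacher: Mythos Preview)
Your proposal is correct and follows essentially the same approach as the paper: a second-order Taylor expansion along the segment $\overline{\boldsymbol{\theta}(t)\boldsymbol{\theta}(t+1)}$, bounding the remainder by $\tfrac{H_t}{2}\|\boldsymbol{\theta}(t+1)-\boldsymbol{\theta}(t)\|^2$, and then substituting the GD update. The paper uses the integral form of the remainder rather than the mean-value form, but this is an immaterial difference, and your remark about the SGD case matches how the paper later applies the intermediate inequality in the proof of Theorem~\ref{thm: one-hot}.
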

\begin{proof}[Proof of Lemma \ref{lemma: loss quadratic upper bound 1}]\ \\
\begin{align*}
&\mathcal{L}(\boldsymbol{\theta}(t+1))-\mathcal{L}(\boldsymbol{\theta}(t))-\left<\nabla \mathcal{L}(\boldsymbol{\theta}(t)),\boldsymbol{\theta}(t+1)-\boldsymbol{\theta}(t)\right>
\\
=&\int_{0}^1 \left<\nabla \mathcal{L}\big(\boldsymbol{\theta}(t)+\alpha(\boldsymbol{\theta}(t+1)-\boldsymbol{\theta}(t))\big)-\nabla \mathcal{L}(\boldsymbol{\theta}(t))
,\Theta(t+1)-\Theta(t)\right>{\mathrm d}\alpha
\\
\leq&
\int_{0}^1 \left\|\nabla \mathcal{L}\big(\boldsymbol{\theta}(t)+\alpha(\boldsymbol{\theta}(t+1)-\boldsymbol{\theta}(t))\big)-\nabla \mathcal{L}(\boldsymbol{\theta}(t))
\right\|\left\|\boldsymbol{\theta}(t+1)-\boldsymbol{\theta}(t)\right\|{\mathrm d}\alpha
\\\leq&
\frac{1}{2}\Big(\sup\limits_{\boldsymbol{\theta}\in\overline{\boldsymbol{\theta}(t)\boldsymbol{\theta}(t+1)}}\left\|\nabla^2 \mathcal{L}(\boldsymbol{\theta})\right
\|\Big)\left\|\boldsymbol{\theta}(t+1)-\boldsymbol{\theta}(t)\right\|^2.
\end{align*}
So for Gradient Descent, we have
\begin{align*}
    \mathcal{L}(\boldsymbol{\theta}(t+1))
    \leq&\mathcal{L}(\boldsymbol{\theta}(t))-\eta_t\left\|\nabla \mathcal{L}(\boldsymbol{\theta}(t))\right\|^2+\frac{\eta_t^2}{2}H_t\left\|\nabla \mathcal{L}(\boldsymbol{\theta}(t))\right\|^2
    \\=&
    \mathcal{L}(\boldsymbol{\theta}(t))-\eta_t\Big(1-\frac{\eta_t H_t}{2}\Big)\left\|\nabla \mathcal{L}(\boldsymbol{\theta}(t))\right\|^2,
\end{align*}
where $H_t=\sup\limits_{\boldsymbol{\theta}\in\overline{\boldsymbol{\theta}(t)\boldsymbol{\theta}(t+1)}}\left\|\nabla^2 \mathcal{L}(\boldsymbol{\theta})\right
\|$.

\end{proof}

\begin{lemma}\label{lemma: loss descent estimate 1}
If we define the following sequence \[\phi(t)=\frac{251001\Big((1+2\eta)^{2t}-(1-2\eta)^{2t}\Big)}{1500000}\ (t\leq T^*),\] then we have:
\[
\sum_{t=1}^{T^*-1}\eta\Big(1-\phi(t)\Big)^2\geq0.1965.
\]
\end{lemma}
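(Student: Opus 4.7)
This is a purely numerical inequality: we fix $\eta=0.01$ and $T^{*}=44$ (both prescribed in \eqref{equ: parameter selection 1} and Lemma~\ref{lemma: hitting time estimate 1}), so the left-hand side is a sum of $43$ explicit real numbers. My plan is therefore not to manipulate general $\eta$, but to exhibit a clean lower bound on each term and then add.

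First I would record three elementary monotonicity facts that make the sum tractable. (i) The map $t \mapsto (1+2\eta)^{2t}-(1-2\eta)^{2t}$ is strictly increasing in $t\ge 0$, hence so is $\phi$. (ii) At $t=T^{*}=44$ we have $(1+2\eta)^{2t}=(1.02)^{88}\le e^{88\log 1.02}\le e^{1.7431}$ and $(1-2\eta)^{2t}=(0.98)^{88}\ge e^{-88\cdot 2\eta/(1-2\eta)}$; plugging these into $\phi$ shows $\phi(T^{*})<1$, so every term $(1-\phi(t))^{2}$ with $1\le t\le T^{*}-1$ is nonnegative. (iii) Because $\phi$ is increasing and $1-\phi>0$ on this range, $t\mapsto (1-\phi(t))^{2}$ is strictly \emph{decreasing}, which is what allows us to compare the sum with an integral.

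Next I would apply the standard integral bound for a decreasing function,
\[
  \sum_{t=1}^{T^{*}-1}\eta\bigl(1-\phi(t)\bigr)^{2}
  \;\ge\;\int_{1}^{T^{*}}\eta\bigl(1-\phi(s)\bigr)^{2}\,\mathrm{d}s.
\]
Writing $C=\frac{251001}{1500000}$, $A=(1+2\eta)^{2}=1.0404$, $B=(1-2\eta)^{2}=0.9604$, so $\phi(s)=C(A^{s}-B^{s})$, the integrand expands as
\[
  1-2C A^{s}+2C B^{s}+C^{2}A^{2s}-2C^{2}(AB)^{s}+C^{2}B^{2s},
\]
each term of which has an elementary antiderivative $\alpha^{s}/\log\alpha$. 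Integrating from $s=1$ to $s=44$ and multiplying by $\eta=0.01$ gives a closed-form lower bound that I would then evaluate numerically, using $A^{44}=(1.02)^{88}$, $B^{44}=(0.98)^{88}$, $(AB)^{44}=(1-4\eta^{2})^{88}$, etc., all of which are straightforward to bound with two-sided $e^{x}$-estimates such as $x/(1+x)\le\log(1+x)\le x$. My expectation is that the integral evaluates to a number comfortably above $0.1965$ (the leading contribution comes from the ``$1\cdot\eta\cdot(T^{*}-1)$'' piece, $\approx 0.43$, with the subtracted $\phi$-terms contributing roughly $-0.23$).

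The main obstacle is purely bookkeeping: keeping the $\log A$, $\log B$, $\log(AB)$ factors and the small residuals from the integral-vs-sum comparison under control so that the final lower bound is visibly $\ge 0.1965$ rather than something slightly smaller like $0.195$, which would force a tighter argument. If the integral bound turns out too loose, the fallback is the brute-force route: for $1\le t\le T^{*}-1$ evaluate $\phi(t)$ directly in closed form (only $43$ values), and verify $\sum_{t=1}^{43}0.01(1-\phi(t))^{2}\ge 0.1965$ term by term, which is certain to work since every quantity involved is a fixed rational power of a fixed rational number.
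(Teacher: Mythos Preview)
Your primary approach via the integral bound will not clear the threshold. The exact sum is $0.19659\ldots$, so the margin over $0.1965$ is only about $9\times 10^{-5}$. But for a decreasing $f(t)=(1-\phi(t))^{2}$ the gap between $\sum_{t=1}^{43}\eta f(t)$ and $\int_{1}^{44}\eta f(s)\,\mathrm{d}s$ is of order $\tfrac{\eta}{2}\bigl(f(1)-f(44)\bigr)$; here $f(1)\approx 0.97$ and $f(44)\approx 0.006$, giving a discrepancy near $0.005$. The integral therefore lands around $0.192$, well short of $0.1965$. Your fallback of evaluating all $43$ terms would of course succeed, but it is unnecessary.

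The paper takes exactly the expansion you wrote down but skips the integration step entirely: each of the six pieces $1$, $A^{t}$, $B^{t}$, $A^{2t}$, $B^{2t}$, $(AB)^{t}$ (with $A=(1+2\eta)^{2}$, $B=(1-2\eta)^{2}$) has a closed-form finite geometric sum over $t=1,\dots,T^{*}-1$, namely $\sum_{t=1}^{N}\alpha^{t}=\frac{\alpha-\alpha^{N+1}}{1-\alpha}$. Summing those six closed forms and substituting $\eta=0.01$, $T^{*}=44$ gives the exact value $0.19659127\ldots$, with no approximation and no term-by-term check. So the fix is simply to replace your integral by the exact geometric-series formula; the rest of your outline (the expansion and the monotonicity observations ensuring each summand is nonnegative) is fine.
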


\begin{proof}[Proof of Lemma \ref{lemma: loss descent estimate 1}]\ \\
For simplify, we denote $a=\frac{251001}{1500000}$, then we have:
\begin{align*}
    &\sum_{t=1}^{T^*-1}\eta\Big(1-\phi(t)\Big)^2
    \\=&
    \sum_{t=1}^{T^*-1}\eta\Bigg(1-a(1+2\eta)^{2t}+a(1-2\eta)^{2t}\Bigg)^2
    \\=&
    \eta\sum_{t=1}^{T^*-1}\Bigg(1-2a(1+2\eta)^{2t}+2a(1-2\eta)^{2t}+a^2(1+2\eta)^{4t}+a^2(1-2\eta)^{4t}-2a^2(1-4\eta^2)^{2t}\Bigg)
    \\=&\eta (T^*-2)+a\frac{(1+2\eta)^2-(1+2\eta)^{2T^*}}{2+2\eta}+a\frac{(1-2\eta)^2-(1-2\eta)^{2T^*}}{2-2\eta}+a^2\frac{(1+2\eta)^4-(1+2\eta)^{4T^*}}{1-(1+2\eta)^4}
    \\&\quad+a^2\frac{(1-2\eta)^4-(1-2\eta)^{4T^*}}{1-(1-2\eta)^4}-2a^2\frac{(1-4\eta^2)^2-(1-4\eta^2)^{2T^*}}{1-(1-4\eta^2)^2}.
\end{align*}
Substitute $
\eta =\frac{1}{100}$, $T^*=44$ and $a=\frac{251001}{1500000}$ into RHS of the equation above, we obtain the result:
\[
\sum_{t=1}^{T^*-1}\eta\Big(1-\phi(t)\Big)^2=0.19659127915806962\geq 0.1965.
\]
\end{proof}

\begin{lemma}\label{lemma: gradient upper bound 1} Under the same conditions of Theorem \ref{thm: binary quadratic}, we have a rough estimate of loss at $t=0$:
\begin{equation*}
    \mathcal{L}(\boldsymbol{\theta}(1))\leq\mathcal{L}(\boldsymbol{\theta}(0))+1.005\eta(1+5\eta)^2(1+4\eta^2).
\end{equation*}
\end{lemma}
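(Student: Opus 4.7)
The plan is to avoid a Hessian-based descent inequality at $t=0$ (where the Hessian bound of Lemma~\ref{lemma: Hessian upper bound ERM 1} is not yet available) and instead bound the loss change directly via the telescoping identity
\[
\mathcal{L}(\boldsymbol{\theta}(1))-\mathcal{L}(\boldsymbol{\theta}(0))
= \frac{1}{2n}\sum_{i=1}^{n}\bigl(f_i^{1}-f_i^{0}\bigr)\bigl(f_i^{1}+f_i^{0}-2y_i\bigr),
\]
where $f_i^{t}:=f(\boldsymbol{x}_i;\boldsymbol{\theta}(t))$. Each factor will be controlled by a crude one-step estimate that only uses Lemma~\ref{lemma: initial norm and prediction 1} and the update rule \eqref{equ: one step update 1}.

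First I would plug $|f_i^{0}-y_i|\le 1+2\kappa$, $|a_k(0)|=1/\sqrt{m}$, $\|\boldsymbol{b}_k(0)\|\le 2\kappa/\sqrt{m}$ and $\|\boldsymbol{x}_i\|\le 1$ into \eqref{equ: one step update 1} to obtain the four per-neuron bounds
\[
|a_k(1)-a_k(0)|\le\tfrac{2\eta\kappa(1+2\kappa)}{\sqrt m},\quad
\|\boldsymbol{b}_k(1)-\boldsymbol{b}_k(0)\|\le\tfrac{\eta(1+2\kappa)}{\sqrt m},
\]
\[
|a_k(1)|\le\tfrac{1+2\eta\kappa(1+2\kappa)}{\sqrt m},\quad
\|\boldsymbol{b}_k(1)\|\le\tfrac{2\kappa+\eta(1+2\kappa)}{\sqrt m}.
\]

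Next, writing $a_k(1)\sigma(\boldsymbol{b}_k(1)^\top\boldsymbol{x}_i)-a_k(0)\sigma(\boldsymbol{b}_k(0)^\top\boldsymbol{x}_i)$ as $(a_k(1)-a_k(0))\sigma(\boldsymbol{b}_k(1)^\top\boldsymbol{x}_i)+a_k(0)\bigl[\sigma(\boldsymbol{b}_k(1)^\top\boldsymbol{x}_i)-\sigma(\boldsymbol{b}_k(0)^\top\boldsymbol{x}_i)\bigr]$ and using $|\sigma(z)|\le|z|$ and $1$-Lipschitzness of ReLU, summing over $k$ gives the clean estimate
\[
|f_i^{1}-f_i^{0}|\le \eta(1+2\kappa)\bigl[\,1+2\kappa\bigl(2\kappa+\eta(1+2\kappa)\bigr)\bigr].
\]
From this, together with Lemma~\ref{lemma: initial norm and prediction 1},
\[
|f_i^{1}+f_i^{0}-2y_i|\le 2+2|f_i^{0}|+|f_i^{1}-f_i^{0}|\le 2+4\kappa+\eta(1+2\kappa)\bigl[1+2\kappa(2\kappa+\eta(1+2\kappa))\bigr].
\]

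Finally, I would plug the two estimates back into the telescoping identity and simplify using $\kappa\le\min\{\eta/2000,1/1000\}$ from \eqref{equ: parameter selection 1}. The factor $(1+2\kappa)^{2}$ and the cross terms involving $\kappa$ can all be absorbed into the loose envelope $(1+5\eta)^{2}$, while the bracket $[1+2\kappa(2\kappa+\eta(1+2\kappa))]$ is bounded by $(1+4\eta^{2})$ (since $\kappa\ll\eta$ makes the $4\kappa^2$ term dominated by $4\eta^2$); the tiny additional slack needed to swallow the remaining $O(\kappa)$ constants is packaged into the prefactor $1.005$. This yields
\[
\mathcal{L}(\boldsymbol{\theta}(1))-\mathcal{L}(\boldsymbol{\theta}(0))\le 1.005\,\eta(1+5\eta)^{2}(1+4\eta^{2}),
\]
as required.

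The only real subtlety is bookkeeping: getting the prefactor exactly $1.005$ and matching the $(1+5\eta)^{2}(1+4\eta^{2})$ envelope requires consistently replacing expressions like $(1+2\kappa)$ and $(2+4\kappa)$ by the target envelopes, which is tedious but routine. No Hessian estimate along $\overline{\boldsymbol{\theta}(0)\boldsymbol{\theta}(1)}$ is needed, which is why this lemma is treated separately from Lemma~\ref{lemma: loss quadratic upper bound 1}.
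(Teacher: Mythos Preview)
Your argument is correct, but it takes a genuinely different route from the paper. The paper does not use the quadratic telescoping identity $\mathcal{L}(\boldsymbol{\theta}(1))-\mathcal{L}(\boldsymbol{\theta}(0))=\frac{1}{2n}\sum_i(f_i^1-f_i^0)(f_i^1+f_i^0-2y_i)$. Instead it applies a first-order mean-value bound along the segment $\overline{\boldsymbol{\theta}(0)\boldsymbol{\theta}(1)}$: it shows $\|\nabla\mathcal{L}(\boldsymbol{\theta})\|\le\max_i|f(\boldsymbol{x}_i;\boldsymbol{\theta})-y_i|\cdot\|\boldsymbol{\theta}\|\le 1.002(1+5\eta)\sqrt{1+4\eta^2}$ for every $\boldsymbol{\theta}$ on that segment (using the parameter bounds of Lemma~\ref{lemma: speed separation 1} at $t=0,1$), and then writes $\mathcal{L}(\boldsymbol{\theta}(1))-\mathcal{L}(\boldsymbol{\theta}(0))\le(\sup\|\nabla\mathcal{L}\|)\cdot\|\boldsymbol{\theta}(1)-\boldsymbol{\theta}(0)\|\le\eta(\sup\|\nabla\mathcal{L}\|)^2$, which gives the stated envelope directly. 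Your approach exploits the quadratic-loss structure more explicitly and avoids having to bound $\|\nabla\mathcal{L}\|$ uniformly on the segment; the paper's approach is shorter once the parameter-norm estimates are in hand and explains more transparently where the factors $(1+5\eta)^2$ (from $|f-y|^2$) and $(1+4\eta^2)$ (from $\|\boldsymbol{\theta}\|^2$) originate. Both routes avoid the Hessian estimate, which is indeed the reason this step is handled separately from Lemma~\ref{lemma: loss quadratic upper bound 1}.
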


\begin{proof}[Proof of Lemma \ref{lemma: gradient upper bound 1}]\ \\
First, for any $\boldsymbol{\theta}\in\overline{\boldsymbol{\theta}(0)\boldsymbol{\theta}(1)}$, we have the estimate of gradient:

\begin{equation*}
    \begin{aligned}
    \left\|\nabla\mathcal{L}(\boldsymbol{\theta})\right\|
    \leq&\frac{1}{n}\sum_{i=1}^n\left|f(\boldsymbol{x}_i;\boldsymbol{\theta})-y_i\right|\left\|\nabla f(\boldsymbol{x}_i;\boldsymbol{\theta})\right\|
    \\
    \leq&\frac{1}{n}\sum_{i=1}^n\left|f(\boldsymbol{x}_i;\boldsymbol{\theta})-y_i\right|\Bigg(\sum_{k=1}^m\Big(\sigma(\boldsymbol{b}_k^\top\boldsymbol{x}_i)\Big)^2+\sum_{k=1}^m a_k^2\left\|\sigma'(\boldsymbol{b}_k^\top\boldsymbol{x}_i)\boldsymbol{x}_i\right\|^2\Bigg)^{1/2}
    \\\leq&
    \frac{1}{n}\sum_{i=1}^n\left|f(\boldsymbol{x}_i;\boldsymbol{\theta})-y_i\right|\left\|\boldsymbol{\theta}\right\|
    \\\overset{\text{Lemma\  \ref{lemma: speed separation 1}}}{\leq}& 1.002(1+5\eta)\sqrt{1+4\eta^2}.
    \end{aligned}
\end{equation*}
Then we have:
\begin{equation*}
    \begin{aligned}
    \mathcal{L}(\boldsymbol{\theta}(1))
    \leq&\mathcal{L}(\boldsymbol{\theta}(0))+\Big(\sup_{\boldsymbol{\theta}\in\overline{\boldsymbol{\theta}(0)\boldsymbol{\theta}(1)}}\left\|\nabla\mathcal{L}(\boldsymbol{\theta})\right\|\Big)\left\|\boldsymbol{\theta}(1)-\boldsymbol{\theta}(0)\right\|
    \\\leq&
    \mathcal{L}(\boldsymbol{\theta}(0))+\eta\Big(\sup_{\boldsymbol{\theta}\in\overline{\boldsymbol{\theta}(0)\boldsymbol{\theta}(1)}}\left\|\nabla\mathcal{L}(\boldsymbol{\theta})\right\|\Big)^2
    \\\leq&
    \mathcal{L}(\boldsymbol{\theta}(0))+1.005\eta(1+5\eta)^2(1+4\eta^2).
    \end{aligned}
\end{equation*}

\end{proof}

% With the help of the lemmas above, now we give the proof of Theorem \ref{thm: binary quadratic} under specific numbers \eqref{equ: parameter selection 1} instead of progressive expression. However, it should be noted that Theorem \ref{thm: binary quadratic} are far more than valid for this set of parameters.

\begin{proof}[\bfseries\color{blue} Proof of Theorem \ref{thm: binary quadratic}]\ \\
First, we want to clarify that all the notations that appear here are used in the proofs of Lemmas \ref{lemma: speed separation 1}, \ref{lemma: hitting time estimate 1}, \ref{lemma: positive Gram time t 1}, \ref{lemma: gradient lower bound 1}, \ref{lemma: Hessian upper bound ERM 1}, \ref{lemma: loss quadratic upper bound 1}, \ref{lemma: loss descent estimate 1}, and \ref{lemma: gradient upper bound 1}.

We have the following estimate:
\begin{align*}
    &\mathcal{L}(\boldsymbol{\theta}(T^*))
\\\overset{\text{Lemma \ref{lemma: loss quadratic upper bound 1}}}{\leq}&
\mathcal{L}(\boldsymbol{\theta}(T^*-1))-\eta\left\|\nabla \mathcal{L}(\boldsymbol{\theta}(T^*-1))\right\|^2+\frac{\eta^2}{2}\Big(\sup\limits_{\boldsymbol{\theta}\in\mathcal{S}(T^*)}\left\|\nabla^2 \mathcal{L}(\boldsymbol{\theta})\right
\|\Big)\left\|\nabla \mathcal{L}(\boldsymbol{\theta}(T^*-1))\right\|^2
\\\overset{\text{Lemma \ref{lemma: Hessian upper bound ERM 1}}}{\leq}&
\mathcal{L}(\boldsymbol{\theta}(T^*-1))-\eta(1-\frac{3\eta}{2})\left\|\nabla \mathcal{L}(\boldsymbol{\theta}(T^*-1))\right\|^2
\\\overset{\eqref{equ: parameter selection 1}}{\leq}&
\mathcal{L}(\boldsymbol{\theta}(T^*-1))-\frac{197\eta}{200}\left\|\nabla \mathcal{L}(\boldsymbol{\theta}(T^*-1))\right\|^2
\\\overset{\text{Lemma \ref{lemma: gradient lower bound 1}}}{\leq}&
\mathcal{L}(\boldsymbol{\theta}(T^*-1))-\frac{197\eta}{200}
\frac{999}{1000}\Big(1-\varphi(T^*-1)\Big)^2\Big(\gamma_1-\gamma_2\sqrt{\frac{8\log(n^2/\delta)}{m}}\Big)
\\\leq&\mathcal{L}(\boldsymbol{\theta}(T^*-1))-\frac{197\eta}{200}
\frac{999}{1000}\Big(1-\phi(T^*-1)\Big)^2\Big(\gamma_1-\gamma_2\sqrt{\frac{\log(8n^2/\delta)}{m}}\Big)
\\\leq&\cdots
\\\leq&
\mathcal{L}(\boldsymbol{\theta}(1))-\frac{196803}{200000}\Big(\gamma_1-\gamma_2\sqrt{\frac{8\log(n^2/\delta)}{m}}\Big)\sum_{t=1}^{T^*-1}\Big(1-\phi(t)\Big)^2
\\\overset{\text{Lemma \ref{lemma: gradient upper bound 1}}}{\leq}&
\mathcal{L}(\boldsymbol{\theta}(0))+1.005\eta(1+5\eta)^2(1+4\eta^2)-\frac{196803}{200000}\Big(\gamma_1-\gamma_2\sqrt{\frac{8\log(n^2/\delta)}{m}}\Big)\sum_{t=1}^{T^*-1}\Big(1-\phi(t)\Big)^2
\\\leq&
\mathcal{L}(\boldsymbol{\theta}(0))+0.0111-\frac{196803}{200000}\Big(\gamma_1-\gamma_2\sqrt{\frac{8\log(n^2/\delta)}{m}}\Big)\sum_{t=1}^{T^*-1}\eta\Big(1-\phi(t)\Big)^2
\\\overset{\text{Lemma \ref{lemma: loss descent estimate 1}}}{\leq}&
\mathcal{L}(\boldsymbol{\theta}(0))+0.0111-\frac{196803}{200000}\frac{1965}{10000}\Big(\gamma_1-\gamma_2\sqrt{\frac{8\log(n^2/\delta)}{m}}\Big)
\\\leq&
\mathcal{L}(\boldsymbol{\theta}(0))+0.0111-\frac{193}{1000}\Big(\gamma_1-\gamma_2\sqrt{\frac{8\log(n^2/\delta)}{m}}\Big)
.
\end{align*}

So in the first $44$ iterations, the loss will descend
\begin{align*}
&L\big(\boldsymbol{\theta}(0)\big)-L\big(\boldsymbol{\theta}(T)\big)\geq\frac{193}{1000}\Big(\gamma_1-\gamma_2\sqrt{\frac{8\log(n^2/\delta)}{m}}\Big)-0.0111
\\=&\Omega\Big(\gamma_1-\gamma_2\sqrt{\frac{8\log(n^2/\delta)}{m}}\Big)=\Omega\big(1-\sqrt{\log (n/\delta)/m}\big).
\end{align*}
Now we have proved the theorem for the specific number $\eta,\kappa,m$ \eqref{equ: parameter selection 1}: $\eta=\frac{1}{100}$, $\kappa\leq\min\Big\{\frac{1}{1000},\frac{\eta}{2000},\frac{\eta}{3n},\frac{\eta\mu_0}{3n}\Big\}$ and $m\geq \max\Big\{144\log(2n^2/\delta),4\Big\}$.
In the same way, our result also holds for any other $\eta, \kappa,m$ s.t. $\eta\leq\frac{1}{100}$, $\kappa\leq\min\Big\{\frac{1}{1000},\frac{\eta}{2000},\frac{\eta}{3n},\frac{\eta\mu_0}{3n}\Big\}$ and $m\geq \max\Big\{144\log(2n^2/\delta),4\Big\}$, so we complete our proof.
\end{proof}

\newpage

\section{Proof details of Theorem \ref{thm: one-hot}}\label{appendix: proof thm2}

\subsection{Preparation}

\noindent{\bfseries One-step updating.}
\begin{equation}\label{equ: one step update 2}
\begin{gathered}
     \boldsymbol{a}_k(t+1)
    =
    \boldsymbol{a}_k(t)-\eta\frac{1}{B}\sum_{i\in\mathcal{B}_t}\tilde{\ell}'\Big(\boldsymbol{y}_i^\top\boldsymbol{f}(\boldsymbol{x}_i;\boldsymbol{\theta}(t))
    \Big)\sigma\Big(\boldsymbol{b}_k^\top(t)\boldsymbol{x}_i+c_k(t)\Big)\boldsymbol{y}_i;
    \\
    \boldsymbol{b}_k(t+1)=
    \boldsymbol{b}_k(t)-\eta\frac{1}{B}\sum_{i\in\mathcal{B}_t}\tilde{\ell}'\Big(\boldsymbol{y}_i^\top\boldsymbol{f}(\boldsymbol{x}_i;\boldsymbol{\theta}(t))
    \Big)\sigma'\Big(\boldsymbol{b}_k^\top(t)\boldsymbol{x}_i+c_k(t)\Big)\boldsymbol{y}_i^\top\boldsymbol{a}_k(t)\boldsymbol{x}_i
        \\
    c_k(t+1)=
    c_k(t)-\eta\frac{1}{B}\sum_{i\in\mathcal{B}_t}\tilde{\ell}'\Big(\boldsymbol{y}_i^\top\boldsymbol{f}(\boldsymbol{x}_i;\boldsymbol{\theta}(t))
    \Big)\sigma'\Big(\boldsymbol{b}_k^\top(t)\boldsymbol{x}_i+c_k(t)\Big)\boldsymbol{y}_i^\top\boldsymbol{a}_k(t).
    \end{gathered}
\end{equation}

\noindent
\textbf{Hitting time.} We define the hitting time $T$ as:
\begin{equation}\label{equ: hitting time def 2}
\begin{aligned}
  T:= \sup\Big\{t\in\mathbb{N}:\ 
    &\max_{i\in[n]}f_{\alpha}(\boldsymbol{x}_i;\boldsymbol{\theta}(s))\leq1,\forall \alpha\in[C],
    \forall 0\leq s\leq t+1;
    \\& a_{k,\alpha}(s)a_{k,\alpha}(0)>0,\forall \alpha\in[C],\forall k\in[m],\forall 0\leq s\leq t+1\Big\}.
\end{aligned}
\end{equation}

To make our proof more clear, without loss of generality, we use the following specific numbers instead of progressive expression about $\eta\leq\mathcal{O}(1)$, $\kappa\leq\mathcal{O}(\eta/B)$ and $m\geq{\Omega}(\log(n/\delta))$. Specifically,
\begin{equation}\label{equ: parameter selection 2}
      \eta=\frac{1}{100},\quad
    \kappa\leq\min\Big\{\frac{\eta}{10},\frac{\eta}{3B}\Big\},\quad
m\geq 6,
\end{equation}
$z_0=1$, $g_{\min}=0.99$, $g_{\max}=1$, $h_{\max}=1$ in Assumption \ref{def: general loss} and $s=0$ in Assumption \ref{ass: data concentration}.

\subsection{Fine-grained dynamical analysis of neuron partition}
An important technique in our proof is the fine-grained analysis of each neuron during training.

For the $i$-th data, we divided all neurons into two categories and studied them separately.
We denote the true-living neurons $\mathcal{TL}_i(t)$ and the true-dead neurons $\mathcal{TD}_i(t)$ at time $t$ as:
\begin{gather*}
    \mathcal{TL}_{i}(t):=\Big\{k\in[m]:\boldsymbol{y}_i^\top\boldsymbol{a}_k(t)>0, \boldsymbol{b}_k(t)^\top\boldsymbol{x}_i+c_k(t)> 0\Big\},
    \\
    \mathcal{TD}_{i}(t):=\Big\{k\in[m]:\boldsymbol{y}_i^\top\boldsymbol{a}_k(t)>0, \boldsymbol{b}_k(t)^\top\boldsymbol{x}_i+c_k(t)\leq 0\Big\}.
\end{gather*}

It is easy to verify the following lemma about the partition of all neurons.

\begin{lemma}[Partition]\label{lemma: neuron partition 2}
For any $t\leq T$ and $i\in[n]$, we have:
\begin{gather*}
    [m]=\mathcal{TL}_{i}(t)\bigcup\mathcal{TD}_{i}(t).
\end{gather*}
\end{lemma}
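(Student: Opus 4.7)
The statement asks that every index $k\in[m]$ lies in either $\mathcal{TL}_i(t)$ or $\mathcal{TD}_i(t)$. Since the two sets differ only in whether $\boldsymbol{b}_k(t)^\top\boldsymbol{x}_i+c_k(t)$ is strictly positive or non-positive, which is a trivial dichotomy on $\mathbb{R}$, the only substantive thing to check is that the common condition $\boldsymbol{y}_i^\top\boldsymbol{a}_k(t)>0$ holds for every $k\in[m]$, every $i\in[n]$, and every $t\leq T$. The plan is therefore to reduce the lemma to this single positivity statement and then read it off from the initialization and the hitting-time definition.

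The positivity is easy to establish by combining two structural facts. First, the prescribed initialization has $\boldsymbol{a}_k(0)=(1/\sqrt{m},\ldots,1/\sqrt{m})^\top$, so $a_{k,\alpha}(0)=1/\sqrt{m}>0$ for every output coordinate $\alpha\in[C]$ and every $k\in[m]$. Second, the definition of $T$ in \eqref{equ: hitting time def 2} explicitly includes the clause $a_{k,\alpha}(s)a_{k,\alpha}(0)>0$ for all $\alpha,k$ and all $0\leq s\leq t+1$; together with the positivity of $a_{k,\alpha}(0)$ this forces $a_{k,\alpha}(s)>0$ throughout $0\leq s\leq T$. Because the labels are one-hot, each $\boldsymbol{y}_i$ is a standard basis vector $\boldsymbol{e}_{\alpha_i}$ for some $\alpha_i\in[C]$, hence
\[
\boldsymbol{y}_i^\top\boldsymbol{a}_k(t)=a_{k,\alpha_i}(t)>0
\]
for all $k\in[m]$, $i\in[n]$, and $t\leq T$, which is exactly what we needed.

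With this positivity in hand, splitting $[m]$ according to the sign of $\boldsymbol{b}_k(t)^\top\boldsymbol{x}_i+c_k(t)$ yields the disjoint cover $[m]=\mathcal{TL}_i(t)\cup\mathcal{TD}_i(t)$. There is no genuine obstacle in this lemma: it is a bookkeeping observation tying the one-hot structure of the labels to the invariant built into the definition of $T$. The real work happens in the subsequent lemmas, where one has to show that $T$ is in fact large and that neurons move coherently between $\mathcal{TL}_i$ and $\mathcal{TD}_i$ under the SGD dynamics, analogously to Lemmas~\ref{lemma: correct neurons remain correct 1}--\ref{lemma: speed separation 1} from the binary quadratic case.
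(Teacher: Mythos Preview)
Your proof is correct and follows essentially the same route as the paper's: the paper's one-line argument (``From the definition of $T$, we know $\boldsymbol{a}_k(t)\ne \boldsymbol{0}$. Combining $\boldsymbol{a}_k(0)=(\frac{1}{\sqrt{m}},\cdots,\frac{1}{\sqrt{m}})^\top$, we obtain this Lemma.'') is exactly the positivity-from-sign-preservation argument you spell out, and your version even makes explicit the role of the one-hot label in reducing $\boldsymbol{y}_i^\top\boldsymbol{a}_k(t)$ to a single coordinate $a_{k,\alpha_i}(t)$.
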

\begin{proof}[Proof of Lemma \ref{lemma: neuron partition 2}]\ \\
From the definition of $T$, we know $\boldsymbol{a}_k(t)\ne \boldsymbol{0}$. Combining $\boldsymbol{a}_k(0)=(\frac{1}{\sqrt{m}},\cdots,\frac{1}{\sqrt{m}})^\top$, we obtain this Lemma.
\end{proof}

\begin{lemma}[Initial norm and prediction]\label{lemma: initial norm and prediction 2} With probability at least $1-4me^{-\frac{d+1}{2}}$ we have:
\begin{gather*}
\left\|\boldsymbol{b}_k(0)\right\|+|c_k(0)|\leq\frac{2\kappa}{\sqrt{m}},\ \forall k\in[m];\\
    |f_\alpha(\boldsymbol{x}_i;\boldsymbol{\theta}(0))|\leq2\kappa,\ \forall\alpha\in[C],\forall i\in[n].
\end{gather*}

\end{lemma}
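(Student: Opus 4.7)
The plan is to derive both conclusions from a single high-probability event controlling the norm of each first-layer weight $\boldsymbol{b}_k(0)$, and then to obtain the prediction bound as a deterministic consequence via a path-norm style estimate (mirroring the structure of Lemma \ref{lemma: initial norm and prediction 1}). Note that the bias $c_k(0) = \kappa/\sqrt{m(d+1)}$ is deterministic, so the only randomness to control sits in $\|\boldsymbol{b}_k(0)\|$.

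First I would apply standard Gaussian/chi-squared concentration to $\frac{m(d+1)}{\kappa^2}\|\boldsymbol{b}_k(0)\|^2 \sim \chi^2_d$. Using a Laurent-Massart tail bound (or equivalently Gaussian Lipschitz concentration for the map $\boldsymbol{Z}\mapsto\|\boldsymbol{Z}\|$), one has
\[
\mathbb{P}\!\left(\|\boldsymbol{b}_k(0)\| \leq \tfrac{2\kappa}{\sqrt{m}} - \tfrac{\kappa}{\sqrt{m(d+1)}}\right) \;\geq\; 1 - 4 e^{-(d+1)/2}.
\]
This is comfortable because the target bound is at least $\kappa/\sqrt{m}$, while $\mathbb{E}\|\boldsymbol{b}_k(0)\| \leq \kappa\sqrt{d/(m(d+1))} \leq \kappa/\sqrt{m}$, so the allowed slack is already a constant multiple of the expected norm. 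A union bound over $k\in[m]$, combined with adding the deterministic $|c_k(0)|$ to both sides, then gives the first conclusion with probability at least $1 - 4me^{-(d+1)/2}$, as required.

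Second, on the event just established, I would bound the prediction directly. Using $|\sigma(z)|\leq|z|$ (from $\sigma=\mathrm{ReLU}$), $a_{k,\alpha}(0)=1/\sqrt{m}$, $\|\boldsymbol{x}_i\|\leq1$, and the just-proven norm bound,
\[
|f_\alpha(\boldsymbol{x}_i;\boldsymbol{\theta}(0))|
\;\leq\; \sum_{k=1}^m |a_{k,\alpha}(0)|\, |\sigma(\boldsymbol{b}_k(0)^\top\boldsymbol{x}_i+c_k(0))|
\;\leq\; \sum_{k=1}^m \tfrac{1}{\sqrt{m}}\bigl(\|\boldsymbol{b}_k(0)\|+|c_k(0)|\bigr)
\;\leq\; m\cdot\tfrac{1}{\sqrt{m}}\cdot\tfrac{2\kappa}{\sqrt{m}} = 2\kappa,
\]
uniformly in $\alpha\in[C]$ and $i\in[n]$. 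No additional probabilistic content is needed for this step.

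I do not expect a real obstacle in this lemma; it is a warm-up initialization estimate. The only subtlety worth flagging is that the constants are somewhat tight: the target $\|\boldsymbol{b}_k(0)\|\leq(2-1/\sqrt{d+1})\kappa/\sqrt{m}$ is only a constant factor above $\mathbb{E}\|\boldsymbol{b}_k(0)\|$, so the chi-squared deviation parameter must be chosen carefully to produce an $e^{-(d+1)/2}$ tail. Picking $x=(d+1)/2$ in Laurent-Massart gives the tail $d + \sqrt{2d(d+1)} + (d+1)$, which is bounded above by $4(d+1)-4\sqrt{d+1}+1 = (d+1)(2-1/\sqrt{d+1})^2$ for all relevant $d$, and the prefactor $4$ in the probability absorbs the one-sided-versus-two-sided conversion and constants. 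Everything else is routine.
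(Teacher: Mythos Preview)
Your proposal is correct and follows essentially the same approach as the paper: chi-squared concentration for $\|\boldsymbol{b}_k(0)\|$, a union bound over $k\in[m]$, addition of the deterministic bias $|c_k(0)|=\kappa/\sqrt{m(d+1)}$, and then a path-norm estimate for the prediction. The paper's proof states the slightly stronger intermediate target $\|\boldsymbol{b}_k(0)\|\leq\kappa/\sqrt{m}$ (and then adds $|c_k(0)|\leq\kappa/\sqrt{m}$ to reach $2\kappa/\sqrt{m}$), but the structure is identical to yours.
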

\begin{proof}[Proof of Lemma \ref{lemma: initial norm and prediction 2}]\ \\
From ${\boldsymbol{b}}_k(0)\sim\mathcal{N}(\mathbf{0},\frac{\kappa^2}{m(d+1)}\mathbf{I}_d)$ and $c_k(0)=\frac{\kappa}{\sqrt{m(d+1)}})$, we have the probability inequalities:
\begin{gather*}
    \mathbb{P}\Big(\left\|\boldsymbol{b}_k(0)\right\|\leq\frac{\kappa}{\sqrt{m}}\Big)\geq 1 - 2\exp(-\frac{d+1}{2}).
\end{gather*}

Combining the uniform bound about $k\in[m]$, with probability at least $1-\frac{4m}{\exp(\frac{d+1}{2})}$ we have
\[
\left\|\boldsymbol{b}_k(0)\right\|+|c_k(0)|\leq\frac{2\kappa}{\sqrt{m}},\ \forall k\in[m].
\]

From $\boldsymbol{f}=(f_1,\cdots,f_C)^\top$ and the path norm estimate of $f_\alpha(\cdot;\cdot)$ ($\alpha\in[C]$), we have
\[
|f_\alpha(\boldsymbol{x}_i;\boldsymbol{\theta}(0))|\leq\sum_{k=1}^m|a_{k,\alpha}(0)||\boldsymbol{b}_k(0)^\top\boldsymbol{x}_i|\leq m\frac{1}{\sqrt{m}}\frac{2\kappa}{\sqrt{m}}=2\kappa,\ \forall i\in[n].
\]

\end{proof}

\begin{lemma}[Dynamics of neuron partition]\label{lemma: correct neurons remain correct 2}
When the events in Lemma \ref{lemma: initial norm and prediction 2} happened, we have the following results for any $t\leq T$ with probability at least $1-m 0.17^B$:\\
(S1) True-living neurons remain true-living:  $\mathcal{TL}_i(t)\subset\mathcal{TL}_i(t+1)$.\\
(S2) True-dead neurons turn true-living in the first step $(t=0)$: $\mathcal{TD}_i(0)\subset\mathcal{TL}_i(1)$.\\
(S3) For any $i\in[n]$, $1\leq t\leq T$, $k\in [m]$ and $(\boldsymbol{b}^\top,c)^\top\in\overline{(\boldsymbol{b}_k^\top(t),c_k(t))^\top\ (\boldsymbol{b}_k^\top(t+1),c_k(t+1))^\top}$,we have $\text{sgn}({b}_k^\top\boldsymbol{x}_i+c_k)\equiv\text{sgn}({b}_k^\top(1)\boldsymbol{x}_i+c_k(1))>0$.

\end{lemma}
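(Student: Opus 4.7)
The plan is to exploit a decisive structural simplification in the one-hot setting: since $\boldsymbol{a}_k(0)$ has every coordinate equal to $1/\sqrt{m}$ and the hitting-time definition \eqref{equ: hitting time def 2} preserves the sign $a_{k,\alpha}(t)a_{k,\alpha}(0)>0$ throughout $t\leq T$, while $\boldsymbol{y}_i$ is one-hot with a single $+1$, we immediately obtain $\boldsymbol{y}_i^\top\boldsymbol{a}_k(t)>0$ for every $i,k$ and $t\leq T$. Every neuron is therefore ``true'' for every sample, the partition collapses to $\mathcal{TL}_i\cup\mathcal{TD}_i$, and each of (S1)--(S3) reduces to tracking the sign of the pre-activation $\boldsymbol{b}_k^\top\boldsymbol{x}_i+c_k$.

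I would then expand the SGD update \eqref{equ: one step update 2} to obtain the one-step increment
\[
\begin{aligned}
\Delta_{k,i}(t) &:= \bigl[\boldsymbol{b}_k^\top(t+1)\boldsymbol{x}_i+c_k(t+1)\bigr]-\bigl[\boldsymbol{b}_k^\top(t)\boldsymbol{x}_i+c_k(t)\bigr]\\
&= \frac{\eta}{B}\sum_{j\in\mathcal{B}_t}\bigl[-\tilde\ell'\bigr]\,\sigma'\bigl(\boldsymbol{b}_k^\top(t)\boldsymbol{x}_j+c_k(t)\bigr)\,\bigl(\boldsymbol{y}_j^\top\boldsymbol{a}_k(t)\bigr)\,\bigl(\boldsymbol{x}_j^\top\boldsymbol{x}_i+1\bigr),
\end{aligned}
\]
where $\tilde\ell'$ is evaluated at $\boldsymbol{y}_j^\top\boldsymbol{f}(\boldsymbol{x}_j;\boldsymbol{\theta}(t))\in[0,1]\subseteq[0,z_0]$ (since $\boldsymbol{f}\geq 0$ coordinate-wise because all $a_{k,\alpha}(t)>0$, and $f_\alpha\leq 1$ by the hitting time). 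Every factor in the sum is non-negative: $-\tilde\ell'\geq 0$ by Assumption \ref{def: general loss}(ii), $\sigma'\in\{0,1\}$, $\boldsymbol{y}_j^\top\boldsymbol{a}_k(t)>0$, and \emph{crucially} $\boldsymbol{x}_j^\top\boldsymbol{x}_i+1\geq 1+s>0$ by Assumption \ref{ass: data concentration}. Therefore $\Delta_{k,i}(t)\geq 0$, which delivers (S1) at once; the same non-negativity applied to any convex combination of $(\boldsymbol{b}_k(t),c_k(t))$ and $(\boldsymbol{b}_k(t+1),c_k(t+1))$ yields (S3) as soon as (S2) has secured strict positivity at $t=1$.

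For (S2) I would use Lemma \ref{lemma: initial norm and prediction 2} to bound $|\boldsymbol{b}_k^\top(0)\boldsymbol{x}_i+c_k(0)|\leq 2\kappa/\sqrt{m}$. If at least one batch index $j_0\in\mathcal{B}_0$ satisfies $\sigma'(\boldsymbol{b}_k^\top(0)\boldsymbol{x}_{j_0}+c_k(0))=1$, then the corresponding summand in $\Delta_{k,i}(0)$ is at least $\eta g_{\min}(1+s)/(B\sqrt{m})$, using $\boldsymbol{y}_{j_0}^\top\boldsymbol{a}_k(0)=1/\sqrt{m}$ and Assumption \ref{def: general loss}(iii). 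Under the parameter choice $\kappa\leq\eta/(3B)$ from \eqref{equ: parameter selection 2} this strictly dominates $2\kappa/\sqrt{m}$, flipping the sign positive and placing $k$ in $\mathcal{TL}_i(1)$.

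\textbf{The main obstacle} is the remaining probabilistic step: to show that, with probability at least $1-m\cdot 0.17^B$, every neuron receives at least one activating batch sample at $t=0$. For a single sample, the Gaussian tail on $\boldsymbol{b}_k(0)^\top\boldsymbol{x}_j+c_k(0)$ --- whose standard deviation and positive mean are both of order $\kappa/\sqrt{m(d+1)}$ --- gives non-activation probability $\Phi(-1/\|\boldsymbol{x}_j\|)\leq\Phi(-1)<0.17$. However the $B$ events across batch samples share the common randomness $\boldsymbol{b}_k(0)$ and are positively correlated under $\boldsymbol{x}_i^\top\boldsymbol{x}_j\geq s\geq 0$, so a naive product bound does not apply. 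My plan is to condition on $\boldsymbol{b}_k(0)$ (under which the iid batch indices make the failure probability exactly $q_k(\boldsymbol{b}_k(0))^B$, for $q_k(\boldsymbol{u}):=n^{-1}|\{j:\boldsymbol{u}^\top\boldsymbol{x}_j+c_k(0)\leq 0\}|$) and then exploit concentration of $q_k$ around $\mathbb{E}[q_k]\leq\Phi(-1)<0.17$ --- via a Chernoff/Hoeffding-style estimate of $\mathbb{E}[q_k^B]$ --- to obtain the per-neuron bound $0.17^B$; a union bound over the $m$ neurons then completes the proof.
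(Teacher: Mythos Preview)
Your approach for (S1), (S3), and the deterministic half of (S2) is exactly the paper's: from the hitting-time condition $a_{k,\alpha}(t)a_{k,\alpha}(0)>0$ and one-hot $\boldsymbol{y}_j$ you get $\boldsymbol{y}_j^\top\boldsymbol{a}_k(t)>0$; combined with $-\tilde\ell'\geq0$ and $\boldsymbol{x}_i^\top\boldsymbol{x}_j+1\geq 1+s>0$ from Assumption~\ref{ass: data concentration}, every summand in the pre-activation increment is non-negative, which gives monotonicity along the segment and hence (S1) and (S3); for (S2) the paper likewise bounds $\boldsymbol{b}_k(1)^\top\boldsymbol{x}_i+c_k(1)\geq -2\kappa/\sqrt{m}+\tfrac{0.99\eta}{B\sqrt m}\sum_{j\in\mathcal{B}_0}\mathbb{I}\{\boldsymbol{b}_k(0)^\top\boldsymbol{x}_j+c_k(0)>0\}$ just as you do.

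For the probabilistic step of (S2) you are in fact more careful than the paper. The paper does \emph{not} run a concentration argument. After recording the one-sample bound $\mathbb{P}\bigl(\boldsymbol{b}_k(0)^\top\boldsymbol{v}+c_k(0)<0\bigr)\leq\Phi(-1)<0.17$ (randomness in $\boldsymbol{b}_k(0)$, fixed unit $\boldsymbol{v}$), it simply writes
\[
\mathbb{P}\Bigl(\boldsymbol{b}_k(0)^\top\boldsymbol{x}_j+c_k(0)\leq0,\ \forall j\in\mathcal{B}_0\Bigr)=\Bigl(\mathbb{P}\bigl(\boldsymbol{b}_k(0)^\top\boldsymbol{x}_j+c_k(0)\leq0\bigr)\Bigr)^{B}\leq 0.17^{B},
\]
appealing to $\boldsymbol{x}_1,\dots,\boldsymbol{x}_n\overset{\text{i.i.d.}}{\sim}\mathcal{P}$ and the i.i.d.\ batch indices, and then union-bounds over $k\in[m]$. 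Your objection that the $B$ events share the common $\boldsymbol{b}_k(0)$---and are positively associated under $\boldsymbol{x}_i^\top\boldsymbol{x}_j\geq s\geq0$, so the product factorization is not an independence consequence---is well-founded; the paper's display here is at best heuristic. Your conditioning-plus-Chernoff route is a reasonable rigorous substitute, though it would typically yield $(0.17+\epsilon)^B$ plus a tail term rather than the clean $0.17^B$ the paper states.
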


\begin{proof}[Proof of Lemma \ref{lemma: correct neurons remain correct 2}]\ \\
For $t+1$, from one-step update rules (\ref{equ: one step update 2}), we have:

(Proof of S1)
Let $k\in\mathcal{TL}_i(t)$.
Then we prove a stronger result than (S1), ``For $k\in\mathcal{FL}_i(t)$, we have $\boldsymbol{b}^\top\boldsymbol{x}_i+c>0$ for any $(\boldsymbol{b}^\top,c)^\top\in\overline{(\boldsymbol{b}_k^\top(t),c_k(t))^\top\ (\boldsymbol{b}_k^\top(t+1),c_k(t+1))^\top}$.'':

There exists $\alpha\in[0,1]$, s.t. $\boldsymbol{b}=\alpha\boldsymbol{b}_k(t)+(1-\alpha)\boldsymbol{b}_k(t+1)$, $c=\alpha c_k(t)+(1-\alpha)c_k(t+1)$, and
\begin{align*}
    &\boldsymbol{b}_k^\top\boldsymbol{x}_i+c_k
    \\=&\alpha\Big(\boldsymbol{b}_k(t)^\top\boldsymbol{x}_i+c_k(t)\Big)+(1-\alpha)\Big(\boldsymbol{b}_k(t+1)^\top\boldsymbol{x}_i+c_k(t+1)\Big)
    \\=&\boldsymbol{b}_k(t)^\top\boldsymbol{x}_i+c_k(t)-(1-\alpha)\eta\frac{1}{B}\sum_{j\in\mathcal{B}_t}\tilde{\ell}'\Big(\boldsymbol{y}_j^\top\boldsymbol{f}(\boldsymbol{x}_j;\boldsymbol{\theta}(t))\Big)\mathbb{I}\big\{\boldsymbol{b}_k(t)^\top\boldsymbol{x}_j+c_k(t)>0\big\}\boldsymbol{y}_j^\top\boldsymbol{a}_k(t)\Big(\boldsymbol{x}_i^\top\boldsymbol{x}_j+1\Big)
    \\\geq&\boldsymbol{b}_k(t)^\top\boldsymbol{x}_i+c_k(0)>0.
\end{align*}

(Proof of S2)
Let $k\in\mathcal{TD}_i(0)$, we have
\begin{align*}
    &\boldsymbol{b}_k(1)^\top\boldsymbol{x}_i+c_k(1)
    \\=&\boldsymbol{b}_k(0)^\top\boldsymbol{x}_i+c_k(0)-\eta\frac{1}{B}\sum_{j\in\mathcal{B}_0}\tilde{\ell}'\Big(\boldsymbol{y}_j^\top\boldsymbol{f}(\boldsymbol{x}_j;\boldsymbol{\theta}(0))\Big)\mathbb{I}\big\{\boldsymbol{b}_k(0)^\top\boldsymbol{x}_j+c_k(0)>0\big\}\boldsymbol{y}_j^\top\boldsymbol{a}_k(0)\Big(\boldsymbol{x}_i^\top\boldsymbol{x}_j+1\Big)
    \\\overset{\text{Lemma \ref{lemma: initial norm and prediction 2}}}{\geq}&
   -\frac{2\kappa}{\sqrt{m}}+\frac{0.99\eta}{B\sqrt{m}}\sum_{j\in\mathcal{B}_0}\mathbb{I}\big\{\boldsymbol{b}_k(0)^\top\boldsymbol{x}_j+c_k(0)>0\big\}.
\end{align*}
Now we need to bound the probability of the event $\Big\{\sum_{j\in\mathcal{B}_0}\mathbb{I}\big\{\boldsymbol{b}_k(0)^\top\boldsymbol{x}_j+c_k(0)>0\big\}\geq1\Big\}$.

Let $c=\frac{\kappa}{\sqrt{m(d+1)}}$.
For any $\boldsymbol{b}\sim\mathcal{N}(0,\mathbf{I}_d)$ and $\boldsymbol{v}\in\mathbb{S}^{d-1}$, we know $\boldsymbol{b}^\top\boldsymbol{v}\sim\mathcal{N}(0,1)$, so $\mathbb{P}(\boldsymbol{b}^\top\boldsymbol{v}+c<0)\leq0.17$. Recalling $\boldsymbol{x}_1,\cdots,\boldsymbol{x}_n\overset{\text{i.i.d.}}{\sim}\mathcal{P}$ and $\boldsymbol{b}_k(0)\overset{\text{i.i.d.}}{\sim}\mathcal{N}(0,\frac{\kappa^2}{m(d+1)}\mathbf{I}_d)$, we have
\begin{align*}
    &\mathbb{P}\Bigg(\sum_{j\in\mathcal{B}_0}\mathbb{I}\big\{\boldsymbol{b}_k(0)^\top\boldsymbol{x}_j+c_k(0)>0\big\}\geq1,\ \forall k\in\mathcal{TD}_i(0)\Bigg)
    \\\geq&\mathbb{P}\Bigg(\sum_{j\in\mathcal{B}_0}\mathbb{I}\big\{\boldsymbol{b}_k(0)^\top\boldsymbol{x}_j+c_k(0)>0\big\}\geq1,\ \forall k\in[m]\Bigg)
    \\=&1-\mathbb{P}\Bigg(\boldsymbol{b}_k(0)^\top\boldsymbol{x}_j+c_k(0)\leq0,\ \exists k\in[m]\Bigg)
    \\\geq&1-\sum_{k\in[m]}\mathbb{P}\Bigg(\boldsymbol{b}_k(0)^\top\boldsymbol{x}_j+c_k(0)\leq0\Bigg)
    \\=&1-\sum_{k\in[m]}\Bigg(\mathbb{P}\Big(\boldsymbol{b}_k(0)^\top\boldsymbol{x}_j+c_k(0)\leq0\Big)\Bigg)^B\geq 1-m 0.17^B.
\end{align*}

Hence, with probability at least $1-m 0.17^B$, we have
\begin{align*}
    \boldsymbol{b}_k(1)^\top\boldsymbol{x}_i+c_k(1)\geq\frac{1}{\sqrt{m}}\Big(-2\kappa+\frac{0.99\eta}{B}\Big)\overset{\eqref{equ: parameter selection 2}}{>}0.
\end{align*}
So $\mathcal{TD}_i(0)\subset\mathcal{TL}_i(1)$.

(Proof of S3) Combining (S2) and the proof of (S1), we obtain (S3).

\end{proof}

\subsection{Estimate of hitting time and parameters}
\begin{remark}\rm
The Lemmas in this section will be discussed and proved when the events in Lemma \ref{lemma: initial norm and prediction 2} and Lemma \ref{lemma: correct neurons remain correct 2} happened. So all the lemmas below hold with probability at least $1-\delta-4me^{-\frac{d+1}{2}}-m 0.17^B$.
\end{remark}

First, we construct the following exponentially growing hitting time:
\begin{equation}\label{equ: exponential hitting time def 2}
\begin{aligned}
  T_e:= \sup\Big\{t\in\mathbb{N}:\ 
    &\frac{251001\Big((1+2\eta)^{2(t+1)}-(1-2\eta)^{2(t+1)}\Big)}{1000000}\leq1;
    \\&{\rm and}\ (1+2\eta)^{t+1}\leq 2\Big\}.
\end{aligned}
\end{equation}

\begin{lemma}[Parameter estimate]\label{lemma: speed separation 2}
For any $1\leq t\leq T_e+1$, $i\in[n]$, we have:
\begin{align*}
&\text{(P1).\ }\forall k\in\mathcal{TL}_i(t),\forall\alpha\in[C],\ a_{k,\alpha}(t)\leq\frac{501\Big((1+2\eta)^t+(1-2\eta)^t\Big)}{1000\sqrt{m}};\\
&\text{(P2).\ }\forall k\in\mathcal{TL}_i(t),\ 
    \left\|\boldsymbol{b}_k(t)\right\|+|c_k(t)|\leq\frac{501\Big((1+2\eta)^t-(1-2\eta)^t\Big)}{1000\sqrt{m}};\\
 &   \text{(P3).\ }\forall \alpha\in[C],\ 
    \left|f_\alpha\big(\boldsymbol{x}_i;\boldsymbol{\theta}(t)\big)\right|\leq\frac{251001\Big((1+2\eta)^{2t}-(1-2\eta)^{2t}\Big)}{1000000};\\
      &  \text{(P4).\ }\forall k\in[m]\ \forall \alpha\in[C],\ 
    a_{k,\alpha}(t)\geq a_{k,\alpha}(t-1)\geq\cdots\geq a_{k,\alpha}(0)\geq\frac{1}{\sqrt{m}}.
\end{align*}
\end{lemma}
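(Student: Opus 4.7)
The plan is to adapt the strategy of Lemma \ref{lemma: speed separation 1} from the binary case, replacing the scalar $a_k$ with a vector $\boldsymbol{a}_k$ and adding the bias term $c_k$. I would construct majorizing sequences $\alpha_k^t$ (dominating $\max_\alpha |a_{k,\alpha}(t)|$), $\beta_k^t$ (dominating $\|\boldsymbol{b}_k(t)\| + |c_k(t)|$), and $\gamma^t$ (dominating $\max_{i,\alpha}|f_\alpha(\boldsymbol{x}_i;\boldsymbol{\theta}(t))|$), then establish by induction that these satisfy the claimed $(1\pm 2\eta)^t$ bounds, from which (P1)(P2)(P3) follow.

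The first step is to extract scalar recursions from the one-step update \eqref{equ: one step update 2}. Using Assumption \ref{def: general loss} with our parameter choices \eqref{equ: parameter selection 2} (so $g_{\max}=h_{\max}=1$) and the fact that one-hot labels give $\boldsymbol{y}_i^\top\boldsymbol{y}_j \in\{0,1\}\geq 0$, I would obtain, for each $\alpha\in[C]$ and $k\in[m]$,
\begin{align*}
|a_{k,\alpha}(t+1)| &\leq |a_{k,\alpha}(t)| + \eta(\gamma^t + 1)\bigl(\|\boldsymbol{b}_k(t)\| + |c_k(t)|\bigr), \\
\|\boldsymbol{b}_k(t+1)\| + |c_k(t+1)| &\leq \bigl(\|\boldsymbol{b}_k(t)\| + |c_k(t)|\bigr) + \eta(\gamma^t+1)\max_\alpha |a_{k,\alpha}(t)|.
\end{align*}
Here Assumption \ref{ass: data concentration} ensures $\boldsymbol{x}_i^\top\boldsymbol{x}_j + 1 \geq 1+s > 0$ uniformly, so all cross-terms are non-negative and can be safely absorbed into the majorization. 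Defining $\alpha_k^{t+1} = \alpha_k^t + \eta(\gamma^t+1)\beta_k^t$ and symmetrically for $\beta_k^{t+1}$, with $\gamma^{t+1} = \max_i\sum_{k\in\mathcal{TL}_i(t+1)}\alpha_k^{t+1}\beta_k^{t+1}$, initialized at $t=1$ via Lemma \ref{lemma: initial norm and prediction 2} and one step of \eqref{equ: one step update 2}, the closed form $\alpha_k^t \pm \beta_k^t = (\alpha_k^0 \pm \beta_k^0)\prod_{s}\bigl(1\pm\eta(\gamma^s+1)\bigr)$ together with Lemma \ref{lemma: alpha increase} yields the desired exponential bounds under the inductive hypothesis $\gamma^s\leq 1$. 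The inductive check $\gamma^t \leq 1$ then follows from the definition of $T_e$ in \eqref{equ: exponential hitting time def 2}, together with the crucial fact that Lemma \ref{lemma: correct neurons remain correct 2} (S1)(S2) gives $\mathcal{TL}_i(t) = [m]$ for all $t \geq 1$, so the sum in $\gamma^{t+1}$ is over at most $m$ terms.

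For (P4), the monotonicity $a_{k,\alpha}(t+1) \geq a_{k,\alpha}(t)$ follows directly from \eqref{equ: one step update 2}: since $\tilde{\ell}'(\cdot) \leq -g_{\min} < 0$, $\sigma(\cdot) \geq 0$, and $\boldsymbol{y}_i^\top \boldsymbol{e}_\alpha \geq 0$ for one-hot labels, each SGD coordinate increment is non-negative regardless of the batch realization. The initial value $\boldsymbol{a}_k(0)=(1/\sqrt{m},\cdots,1/\sqrt{m})^\top$ then closes the chain.

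The main obstacle will be verifying that the stochasticity of SGD batches does not break the majorization. This is manageable because the scalar recursions above are obtained by triangle inequalities that hold pointwise over any choice of $\mathcal{B}_t$, so the $\alpha_k^t,\beta_k^t,\gamma^t$ are deterministic worst-case envelopes and no additional high-probability argument beyond the events of Lemmas \ref{lemma: initial norm and prediction 2} and \ref{lemma: correct neurons remain correct 2} is needed. The conceptual subtlety—replacing the signed binary structure with the unsigned one-hot structure—is actually what simplifies the analysis here: the vanishing of false-living and false-dead categories, guaranteed by the positive bias initialization and Assumption \ref{ass: data concentration}, removes the need for the binary-case estimates on $\mathcal{FL}_i(0), \mathcal{FD}_i(0)$.
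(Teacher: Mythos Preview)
Your overall plan matches the paper's, but one step would fail as written. In the recursion for $\|\boldsymbol b_k\|+|c_k|$ you carried over the factor $\eta(\gamma^t+1)$ from the quadratic-loss binary proof, where the loss derivative is $f-y$ and is bounded by $|f|+1\le\gamma^t+1$. Here the derivative is $\tilde\ell'$, bounded in absolute value by $g_{\max}=1$ on $[0,z_0]$, and the $\boldsymbol b$- and $c$-updates together contribute a factor $\|\boldsymbol x_i\|+1\le 2$. So the correct increment bound is
\[
\|\boldsymbol b_k(t+1)\|+|c_k(t+1)|\ \le\ \bigl(\|\boldsymbol b_k(t)\|+|c_k(t)|\bigr)+2\eta\,g_{\max}\max_\alpha|a_{k,\alpha}(t)|,
\]
with coefficient $2\eta$, not $\eta(\gamma^t+1)$. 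Under your inductive hypothesis $\gamma^t\le 1$, your coefficient satisfies $\eta(\gamma^t+1)\le 2\eta$, so your majorizing sequence $\beta_k^t$ can undershoot the true $y_k^t$ and the domination fails. The paper repairs this by symmetrizing upward: it uses the constant-coefficient system $\alpha_k^{t+1}=\alpha_k^t+2\eta\beta_k^t$, $\beta_k^{t+1}=\beta_k^t+2\eta\alpha_k^t$, which gives the exact closed form $\alpha_k^t\pm\beta_k^t=(\alpha_k^0\pm\beta_k^0)(1\pm2\eta)^t$ directly, with no product $\prod_s(1\pm\eta(\gamma^s+1))$ or appeal to Lemma~\ref{lemma: alpha increase} needed at all.

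One further point of order: the paper proves (P4) \emph{before} (P1)--(P3), and for (P4) it uses only $\tilde\ell'\le 0$ from Assumption~\ref{def: general loss}(ii), which holds on all of $\mathbb R$, not the stronger $-\tilde\ell'\ge g_{\min}$ from (iii), which is only guaranteed on $[0,z_0]$. The reason the order matters is that (P4) yields $\boldsymbol y_i^\top\boldsymbol f(\boldsymbol x_i;\boldsymbol\theta(t))\ge 0$, which is exactly what puts the argument of $\tilde\ell'$ into $[0,z_0]$ and licenses the bound $-\tilde\ell'\le g_{\max}$ used in the recursions for (P1)--(P3). Your writeup invokes $\tilde\ell'\le -g_{\min}$ for (P4), which is circular; you should cite (ii) there instead.
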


\begin{proof}[Proof of Lemma \ref{lemma: speed separation 2}]\ \\
We will complete the proof in the order: (P4)$\Rightarrow$(P1)(P2)(P3).

\noindent\textbf{Step I.} Proof of (P4). 

Let $k\in\mathcal{TL}_i(t)$, we know that for any $0\leq t\leq T$,
\begin{align*}
    \boldsymbol{a}_k({t+1})=&\boldsymbol{a}_k(t)-\eta\frac{1}{B}\sum_{j\in\mathcal{B}_t}\tilde{\ell}'\Big(\boldsymbol{y}_j^\top\boldsymbol{f}(\boldsymbol{x}_j;\boldsymbol{\theta}(t))\Big)\sigma\Big(\boldsymbol{b}_k(t)^\top\boldsymbol{x}_j+c_k(t)\Big)\boldsymbol{y}_j
    \\\geq&\boldsymbol{a}_k(t)
    \geq\cdots\geq\boldsymbol{a}_k(0)=(\frac{1}{\sqrt{m}},\cdots,\frac{1}{\sqrt{m}})^\top,
\end{align*}
where $\geq$ applies to each coordinate.

\textbf{Step II.} Proof of (P1)(P2)(P3).

For convenience, we define the following sequences:
\begin{gather*}
    x_{k}^t = \max_{l\in[C]}a_{k,l}(t),k\in\mathcal{TL}_i(t);
    \\
    y_k^t=\left\|\boldsymbol{b}_k(t)\right\|+|c_k(t)|,k\in\mathcal{TL}_i(t);\\
    f_{i}^t = \max_{l\in[C]} \sum_{k=1}^m a_{k,l}(t)\sigma(\boldsymbol{b}_k(t)^\top\boldsymbol{x}_i+c_k(t)).
\end{gather*}
where $t\leq T_e+1,\ k\in[m],\ i\in[n]$.

From (P4), we know $\boldsymbol{y}_i^\top\boldsymbol{f}(\boldsymbol{x}_i;\boldsymbol{\theta})\geq0$ for any $i\in[n]$.
So for $0\leq t\leq T_e$, we have:
\begin{gather*}
    x_{k}^{t+1}\leq x_{k}^t +\eta g_{\max} y_k^t
    \leq x_{k}^t +\eta y_k^t<x_{k}^t +2\eta y_k^t, k\in\mathcal{TL}_i(t);
    \\
    y_k^{t+1}\leq y_k^t + 2\eta g_{\max} x_k^t\leq y_k^t + 2\eta x_k^t,k\in\mathcal{TL}_i(t).
\end{gather*}
For $1\leq t\leq T_e+1$, from Lemma \ref{lemma: neuron partition 2}, we have:
\begin{align*}
 f_{i}^{t}=&\max_{l\in[C]}\sum_{k=1}^m a_{k,l}(t)\sigma(\boldsymbol{b}_k(t)^\top\boldsymbol{x}_i+c_k(t))=\max_{l\in[C]}\sum_{k\in\mathcal{TL}_i(t)} a_{k,l}(t)\sigma(\boldsymbol{b}_k(t)^\top\boldsymbol{x}_i+c_k(t))
 \leq \sum_{k\in[m]}x_{k}^t y_k^t.
\end{align*}

If we define the following sequences:
\begin{gather*}
\alpha_{k}^0=x_k^0,\quad \beta_k^0=y_k^0,\quad \gamma_\alpha^0=\sum_{k=1}^m\alpha_{k}^0\beta_k^0;
\\
    \alpha_{k}^{t+1}=\alpha_{k,}^{t}+2\eta\beta_k^{t};
    \\
    \beta_k^{t+1}=\beta_k^t+2\eta\alpha_{k}^{t};
    \\
    \gamma^{t+1}=\max_{i\in[n]}\sum_{k\in\mathcal{TL}_i(t+1)}\alpha_{k}^{t+1}\beta_k^{t+1}.
\end{gather*}
It is easy to verify that: for any $1\leq t\leq T_e+1$,
\begin{equation}\label{equ: inequality of sequences 2}
\begin{gathered}
x_k^t\leq \alpha_k^t;\quad y_k^t\leq\beta_k^t;\\
\max_{i\in[n]}f_i^t\leq \gamma^t.
\end{gathered}
\end{equation}

Now we aim to prove the following three properties for $1\leq t\leq T_e+1$ by induction:
\begin{equation} \label{equ: induction 2}
    \begin{gathered}
  \alpha_k^t\leq\frac{501\Big((1+2\eta)^t+(1-2\eta)^t\Big)}{1000\sqrt{m}};
    \\
   \beta_k^t\leq\frac{(1+2\eta)^t-(1-2\eta)^t}{2\sqrt{m}};
    \\
    \gamma^t\leq\frac{501\Big((1+2\eta)^{2t}-(1-2\eta)^{2t}\Big)}{2000}\leq 1.
    \end{gathered}
\end{equation}

For $t=1$, with the one-step updating (\ref{equ: one step update 2}), we have:
\begin{gather*}
    \alpha_k^1=x_k^1\leq\frac{1}{\sqrt{m}}+\eta(1+2\kappa)\frac{2\kappa}{\sqrt{m}}\overset{(\ref{equ: parameter selection 2})}{\leq}\frac{10002}{10000\sqrt{m}},k\in\mathcal{TL}_i(1);
    \\
    \beta_k^1= y_k^1\leq\frac{2\kappa}{\sqrt{m}}+\eta(1+2\kappa)\frac{1}{\sqrt{m}}
    \overset{(\ref{equ: parameter selection 2})}{\leq}\frac{2\eta}{\sqrt{m}},k\in\mathcal{TL}_i(1);
    \\
    \gamma^1\leq m\frac{10002}{10000\sqrt{m}}\frac{2\eta}{\sqrt{m}}\leq1.
\end{gather*}

Assume (\ref{equ: induction 2}) holds for $s\leq t-1\ (t\geq2)$. For $s=t$, we have:
% \begin{gather*}
%     \alpha_{k}^{t}+\beta_k^{t} = (\alpha_k^0+\beta_k^0)\prod\limits_{s=0}^{t-1}\Big(1+2\eta\Big),\\
% \alpha_{k}^{t}-\beta_k^{t} = (\alpha_k^0-\beta_k^0)\prod_{s=0}^{t-1}\Big(1-2\eta\Big),
% \end{gather*}
% which imply:
\begin{gather*}
    \alpha_k^{t}=\frac{(1+2\eta)^t+(1-2\eta)^t}{2}\alpha_k^0+\frac{(1+2\eta)^t-(1-2\eta)^t}{2}\beta_k^0,\\
    \beta_k^{t}=\frac{(1+2\eta)^t+(1-2\eta)^t}{2}\beta_k^0+\frac{(1+2\eta)^t-(1-2\eta)^t}{2}\alpha_k^0.
\end{gather*}

Combining the definition of $T_e$ (\ref{equ: exponential hitting time def 2}) and Lemma \ref{lemma: alpha increase}, we know:
\begin{align*}
    \alpha_k^{t}\leq&
    \frac{(1+2\eta)^t+(1-2\eta)^t}{2}(\alpha_k^0+\beta_k^0)\leq\frac{(1+2\eta)^t+(1-2\eta)^t}{2}\frac{(1+2\kappa)}{\sqrt{m}}
    \\\overset{\eqref{equ: parameter selection 2}}{\leq}&\frac{501\Big((1+2\eta)^t+(1-2\eta)^t\Big)}{1000\sqrt{m}};
    \\
    \beta_k^{t}
   \overset{\eqref{equ: exponential hitting time def 2}}{\leq}&
   \frac{4+4}{2}\frac{2\kappa}{\sqrt{m}}+\frac{(1+2\eta)^t-(1-2\eta)^t}{2\sqrt{m}}
   \\\overset{\eqref{equ: parameter selection 2}}{\leq}&\frac{4\eta}{1000\sqrt{m}}+\frac{(1+2\eta)^t-(1-2\eta)^t}{2\sqrt{m}}
    \\\leq&\frac{501\Big((1+2\eta)^t-(1-2\eta)^t\Big)}{1000\sqrt{m}};
\end{align*}
and
\begin{align*}
    &\gamma^{t}\leq
    m
    \frac{501\Big((1+2\eta)^t+(1-2\eta)^t\Big)}{1000\sqrt{m}}\frac{501\Big((1+2\eta)^t-(1-2\eta)^t\Big)}{1000\sqrt{m}}
     \\=&
     \frac{251001\Big((1+2\eta)^{2t}-(1-2\eta)^{2t}\Big)}{1000000}\overset{\eqref{equ: exponential hitting time def 2}}{\leq} 1.
\end{align*}

By induction, we complete the proof of (\ref{equ: induction 2}). Then with the analysis (\ref{equ: inequality of sequences 2}), we can obtain (P1)(P2)(P3) for all $1\leq t\leq T+1$:
\begin{gather*}
    x_k^t\leq\alpha_k^t\leq\frac{501\Big((1+2\eta)^t+(1-2\eta)^t\Big)}{1000\sqrt{m}};
    \\
    y_k^t\leq\beta_k^t\leq\frac{501\Big((1+2\eta)^t-(1-2\eta)^t\Big)}{1000\sqrt{m}};
\end{gather*}
\begin{align*}
    f_i^t\leq\gamma^t\leq&\frac{251001\Big((1+2\eta)^{2t}-(1-2\eta)^{2t}\Big)}{1000000}\leq1.
\end{align*}

\end{proof}

\begin{lemma}[Hitting time estimate]\label{lemma: hitting time estimate 2}
\[
T+1\geq T_e+1 \geq T^*:=\lfloor\frac{\log 4}{4\eta}\rfloor=34.
\]
\end{lemma}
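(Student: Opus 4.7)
The plan is to verify the two bounds in the definition \eqref{equ: exponential hitting time def 2} of $T_e$ hold at index $T^*$, which by the supremum definition yields $T_e + 1 \geq T^*$, and then to deduce $T \geq T_e$ from the parameter estimates already proved in Lemma~\ref{lemma: speed separation 2}. This mirrors the argument of Lemma~\ref{lemma: hitting time estimate 1} in Appendix~\ref{appendix: proof thm1}, but is numerically simpler because the prediction upper bound (P3) in Lemma~\ref{lemma: speed separation 2} no longer carries the extra factor $(\tfrac12 + 2\sqrt{\log(2n^2/\delta)/m})$ that appeared in the binary-quadratic analysis.

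First, I would plug $\eta = 1/100$ and $T^* = 34$ into the two defining inequalities of $T_e$. Using Lemma~\ref{lemma: basic exp} to pass from discrete powers to exponentials, namely $(1+2\eta)^t \leq e^{2\eta t}$ and $(1-2\eta)^t \geq e^{-2\eta t/(1-2\eta)}$, the key quantity becomes
\[
\frac{251001}{1000000}\bigl((1+2\eta)^{2T^*} - (1-2\eta)^{2T^*}\bigr)
\ \leq\ \frac{251001}{1000000}\Bigl(e^{4\eta T^*} - e^{-4\eta T^*/(1-2\eta)}\Bigr).
\]
Since $T^* = \lfloor \log 4/(4\eta)\rfloor$ enforces $4\eta T^* \leq \log 4$, the first term is at most $4$ and the exponent in the second term is essentially $-4\eta T^*$, giving a value strictly below $1$ (a direct numerical check yields roughly $0.91 < 1$). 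Similarly, the second condition reduces to $(1+2\eta)^{T^*} \leq e^{2\eta T^*} = e^{0.68} < 2$. Hence both defining conditions of $T_e$ still hold at $t+1 = T^*$, which gives $T_e + 1 \geq T^*$.

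Second, I would show $T_e \leq T$ by matching the parameter estimates of Lemma~\ref{lemma: speed separation 2} against the definition \eqref{equ: hitting time def 2} of $T$. Part (P3) gives $|f_\alpha(\boldsymbol{x}_i;\boldsymbol{\theta}(s))| \leq \frac{251001((1+2\eta)^{2s}-(1-2\eta)^{2s})}{1000000} \leq 1$ for every $s \leq T_e + 1$ and every coordinate $\alpha$ (the last inequality is exactly the first defining condition of $T_e$), while part (P4) gives $a_{k,\alpha}(s) \geq 1/\sqrt{m} > 0 = a_{k,\alpha}(0) \cdot 0$ --- wait, more precisely $a_{k,\alpha}(s)a_{k,\alpha}(0) \geq 1/m > 0$ for all such $s$. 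These are exactly the two requirements in \eqref{equ: hitting time def 2}, so $T_e \leq T$.

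No substantial obstacle is expected: the exponential hitting time $T_e$ was engineered so that its two clauses line up term-by-term with the quantities controlled by Lemma~\ref{lemma: speed separation 2}, and the choice $T^* = \lfloor \log 4/(4\eta)\rfloor$ is calibrated to make $(1+2\eta)^{T^*} \leq 2$, which is the binding constraint here (the first inequality has roughly a factor $4$ of slack). The only place that requires a little care is handling the negative branch $(1-2\eta)^{2T^*}$ with the sharper bound from Lemma~\ref{lemma: basic exp} rather than simply dropping it, so that the intermediate constant $251001/1000000$ comes in comfortably below $1$.
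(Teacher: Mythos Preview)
Your proposal is correct and follows essentially the same approach as the paper: verify the two defining inequalities of $T_e$ at $t+1=T^*$ via the exponential bounds of Lemma~\ref{lemma: basic exp}, then invoke (P3) and (P4) of Lemma~\ref{lemma: speed separation 2} to match the two clauses in the definition \eqref{equ: hitting time def 2} of $T$, yielding $T\geq T_e$. Your numerical estimate $\approx 0.91$ for the first inequality is in fact tighter than the paper's bound $0.944$ (they use the coarser exponential upper bound rather than a direct evaluation), but both are comfortably below $1$; otherwise the arguments coincide.
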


\begin{proof}[Proof of Lemma \ref{lemma: hitting time estimate 2}]\ \\
Recalling the definition of the hitting time $T$ (\ref{equ: hitting time def 2}) and the exponentially growin hitting time $T_e$ (\ref{equ: exponential hitting time def 2}):
\begin{align*}
 T= \sup\Big\{t\in\mathbb{N}:\ 
    &\max_{i\in[n]}\left|f_{\alpha}(\boldsymbol{x}_i;\boldsymbol{\theta}(s))\right|\leq1,\forall \alpha\in[C],
    \forall 0\leq s\leq t+1;
    \\& a_{k,\alpha}(s)a_{k,\alpha}(0)>0,\forall \alpha\in[C],\forall k\in[m],\forall 0\leq s\leq t+1\Big\}.
\end{align*}
\begin{align*}
 T_e=\sup\Big\{t\in\mathbb{N}:\ 
    &\frac{251001\Big((1+2\eta)^{2(t+1)}-(1-2\eta)^{2(t+1)}\Big)}{1000000}\leq1;
    \\&{\rm and}\ (1+2\eta)^{t+1}\leq 2\Big\}.
\end{align*}

From the estimates:
\begin{align*}
&
    \frac{251001}{1000000}\Big((1+2\eta)^{2T^*}-(1-2\eta)^{2T^*}\Big)
    \\=&
    \frac{251001}{1000000}\Big((1+2\eta)^{\frac{1}{2\eta}4\eta T^*}-(1-2\eta)^{\frac{1}{2\eta}4\eta T^*}\Big)
    \\\overset{\text{Lemma \ref{lemma: basic exp}}}{\leq}&
        \frac{251001}{1000000}\Big(e^{4\eta T^*}-\big(\frac{1}{e}\big)^{\frac{4\eta T^*}{1-2\eta}}\Big)
        \\\leq&
        \frac{251001}{1000000}\Big(4-\big(\frac{1}{4}\big)^{\frac{50}{49}}\Big)
        \\\leq&0.944<1,
\end{align*}
\begin{align*}
    (1+2\eta)^{T^*}\leq e^{2\eta T^*}\leq2,
\end{align*}
 we have proved $T_e+1\geq T^*$.

From Lemma \ref{lemma: speed separation 2} (P3)(P4) and Lemma \ref{lemma: basic exp}, we can verify that for any $\alpha\in[C]$, $t\leq T_e+1$,
\begin{align*}
     \left|f_\alpha\big(\boldsymbol{x}_i;\boldsymbol{\theta}(t)\big)\right|\leq&\frac{251001\Big((1+2\eta)^{2t}-(1-2\eta)^{2t}\Big)}{1000000}
    \\\leq&\frac{251001\Big((1+2\eta)^{2( T_e+1)}-(1-2\eta)^{2( T_e+1)}\Big)}{1000000}
    \leq1,
\end{align*}
\begin{align*}
a_{k,\alpha}(t)a_{k,\alpha}(0)>0,\forall k\in[m].
\end{align*}
So we have $T\geq T_e$.

\end{proof}

\subsection{Gradient lower bound}

\begin{lemma}[Gram matrix estimate]\label{lemma: positive Gram time t 2} Define the Gram matrix $\mathbf{G}(t)\in\mathbb{R}^{Cn\times Cn}$ at $t$ as
\[\mathbf{G}(t)=\Big(\nabla \boldsymbol{f}(\boldsymbol{x}_i;\boldsymbol{\theta}(t))^\top\nabla\boldsymbol{f}(\boldsymbol{x}_j;\boldsymbol{\theta}(t))\Big)_{(i,j)\in[n]\times[n]},\]

Then for any $1\leq t\leq T$, $i,j\in[Cn]$, we have:
\begin{gather*}
\text{G}(t)_{i,j}\geq 1.
\end{gather*}
\end{lemma}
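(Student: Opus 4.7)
The plan is to compute the $(\alpha,\beta)$-entry of each $C\times C$ block of $\mathbf{G}(t)$ explicitly, then use Lemma~\ref{lemma: correct neurons remain correct 2} and Lemma~\ref{lemma: speed separation 2} to lower bound it by $1$. Directly differentiating the model \eqref{equ: model 2NN bias}, for any $i,j\in[n]$ and $\alpha,\beta\in[C]$ we have
\[
\nabla f_\alpha(\boldsymbol{x}_i;\boldsymbol{\theta})^\top\nabla f_\beta(\boldsymbol{x}_j;\boldsymbol{\theta})
=\sum_{k=1}^m\Bigl[\delta_{\alpha\beta}\sigma(\boldsymbol{b}_k^\top\boldsymbol{x}_i+c_k)\sigma(\boldsymbol{b}_k^\top\boldsymbol{x}_j+c_k)
+a_{k,\alpha}a_{k,\beta}\,\sigma'(\boldsymbol{b}_k^\top\boldsymbol{x}_i+c_k)\sigma'(\boldsymbol{b}_k^\top\boldsymbol{x}_j+c_k)(\boldsymbol{x}_i^\top\boldsymbol{x}_j+1)\Bigr].
\]

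First, I would argue that every neuron is active on every sample for $t\ge 1$. By Lemma~\ref{lemma: correct neurons remain correct 2} (S1)+(S2) and Lemma~\ref{lemma: neuron partition 2}, we get $[m]=\mathcal{TL}_i(t)$ for all $i\in[n]$ and all $1\le t\le T$; equivalently $\boldsymbol{b}_k(t)^\top\boldsymbol{x}_i+c_k(t)>0$ for every $k,i$. Hence $\sigma'(\boldsymbol{b}_k(t)^\top\boldsymbol{x}_i+c_k(t))=1$ and the first term in the sum above is non-negative (a product of two positive numbers when $\alpha=\beta$, and $0$ when $\alpha\ne\beta$). So it suffices to lower bound the second term, which under this activation reduces to
\[
(\boldsymbol{x}_i^\top\boldsymbol{x}_j+1)\sum_{k=1}^m a_{k,\alpha}(t)\,a_{k,\beta}(t).
\]

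Next, I would apply the monotonicity estimate Lemma~\ref{lemma: speed separation 2} (P4), which gives $a_{k,\gamma}(t)\ge a_{k,\gamma}(0)=1/\sqrt{m}$ for every $k\in[m]$ and $\gamma\in[C]$. Combined with Assumption~\ref{ass: data concentration} with $s=0$ (so $\boldsymbol{x}_i^\top\boldsymbol{x}_j\ge 0$ and thus $\boldsymbol{x}_i^\top\boldsymbol{x}_j+1\ge 1$), this yields
\[
\nabla f_\alpha(\boldsymbol{x}_i;\boldsymbol{\theta}(t))^\top\nabla f_\beta(\boldsymbol{x}_j;\boldsymbol{\theta}(t))\;\ge\;1\cdot\sum_{k=1}^m\frac{1}{\sqrt{m}}\cdot\frac{1}{\sqrt{m}}\;=\;1,
\]
which is exactly the claim upon identifying the $C\times C$ blocks of $\mathbf{G}(t)$ with the pairs $(\alpha,\beta)$.

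There is no substantive obstacle here; the work is all done in the preceding two lemmas (the neuron partition stability result and the $a_{k,\alpha}$-monotonicity result). The only mild subtlety is making sure that the first $\delta_{\alpha\beta}$ term can genuinely be dropped in the $\alpha\ne\beta$ case—this is fine because it is exactly zero there, and non-negative otherwise since all bias-augmented pre-activations are positive. In particular, unlike the binary quadratic-loss case (Lemma~\ref{lemma: positive Gram time t 1}) where the Gram matrix became block-diagonal because of the opposite-class dead/living asymmetry, here the bias $c_k$ plus the positivity of $\boldsymbol{a}_k$ forces every neuron to activate on every sample, giving a uniform positive bound on all entries.
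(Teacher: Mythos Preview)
Your proposal is correct and follows essentially the same route as the paper: expand the entry $\nabla f_\alpha(\boldsymbol{x}_i;\boldsymbol{\theta}(t))^\top\nabla f_\beta(\boldsymbol{x}_j;\boldsymbol{\theta}(t))$, use Lemma~\ref{lemma: correct neurons remain correct 2} to deduce $[m]=\mathcal{TL}_i(t)\cap\mathcal{TL}_j(t)$ for $t\ge 1$, drop the non-negative $\sigma(\cdot)\sigma(\cdot)$ term, and then apply the monotonicity $a_{k,\gamma}(t)\ge 1/\sqrt{m}$ from Lemma~\ref{lemma: speed separation 2} (P4) together with $\boldsymbol{x}_i^\top\boldsymbol{x}_j+1\ge 1$ (from $s=0$ in \eqref{equ: parameter selection 2}). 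Your handling of the $\delta_{\alpha\beta}$ factor on the $\boldsymbol{a}_k$-derivative term is in fact slightly more careful than the paper's write-up, which drops that factor but still arrives at the correct bound since the term is non-negative in either case.
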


\begin{proof}[Proof of Lemma \ref{lemma: positive Gram time t 2}]\ \\
From the form of two-layer neural networks with bias, for any $\alpha,\beta\in[C]$, $i,j\in[n]$ we have
\begin{align*}
    &\nabla f_\alpha(\boldsymbol{x}_i;\boldsymbol{\theta}(t))^\top\nabla f_\beta(\boldsymbol{x}_j;\boldsymbol{\theta}(t))
    \\=&
    \sum_{k=1}^m\Bigg(\left<\frac{\partial f_\alpha(\boldsymbol{x}_i;\boldsymbol{\theta}(t))}{\partial\boldsymbol{ a}_k},\frac{\partial f_\beta(\boldsymbol{x}_j;\boldsymbol{\theta}(t))}{\partial\boldsymbol{ a}_k}\right>+\left<\frac{\partial f_\alpha(\boldsymbol{x}_i;\boldsymbol{\theta}(t))}{\partial \boldsymbol{b}_k},\frac{\partial f_\beta(\boldsymbol{x}_j;\boldsymbol{\theta}(t))}{\partial \boldsymbol{b}_k}\right>
    \\&\quad\quad\quad\quad\quad\quad\quad\quad\quad\quad\quad\quad\quad\quad\quad\quad\quad+\left<\frac{\partial f_\alpha(\boldsymbol{x}_i;\boldsymbol{\theta}(t))}{\partial c_k},\frac{\partial f_\beta(\boldsymbol{x}_j;\boldsymbol{\theta}(t))}{\partial c_k}\right>\Bigg)
    \\=&
    \sum_{k=1}^m\Bigg(\left<\frac{\partial f_\alpha(\boldsymbol{x}_i;\boldsymbol{\theta}(t))}{\partial a_{k,\alpha}},\frac{\partial f_\beta(\boldsymbol{x}_j;\boldsymbol{\theta}(t))}{\partial a_{k,\beta}}\right>+\left<\frac{\partial f_\alpha(\boldsymbol{x}_i;\boldsymbol{\theta}(t))}{\partial \boldsymbol{b}_k},\frac{\partial f_\beta(\boldsymbol{x}_j;\boldsymbol{\theta}(t))}{\partial \boldsymbol{b}_k}\right>
    \\&\quad\quad\quad\quad\quad\quad\quad\quad\quad\quad\quad\quad\quad\quad\quad\quad\quad+\left<\frac{\partial f_\alpha(\boldsymbol{x}_i;\boldsymbol{\theta}(t))}{\partial c_k},\frac{\partial f_\beta(\boldsymbol{x}_j;\boldsymbol{\theta}(t))}{\partial c_k}\right>\Bigg)
    \\=&
    \sum_{k=1}^m \Bigg(\sigma\Big(\boldsymbol{b}_k(t)^\top\boldsymbol{x}_i+c_k(t)\Big)\sigma\Big(\boldsymbol{b}_k(t)^\top\boldsymbol{x}_j+c_k(t)\Big)
    \\&\quad\quad\quad+
    a_{k,\alpha}(t)a_{k,\beta}(t)\mathbb{I}\Big\{\boldsymbol{b}_k(t)^\top\boldsymbol{x}_i+c_k(t)\geq0\Big\}\mathbb{I}\big\{\boldsymbol{b}_k(t)^\top\boldsymbol{x}_j+c_k(t)>0\big\}\Big(\boldsymbol{x}_i^\top\boldsymbol{x}_j+1\Big)\Bigg)
    \\=&
     \sum_{k\in\mathcal{TL}_i(t)\cap\mathcal{TL}_j(t)} \Bigg(\Big(\boldsymbol{b}_k(t)^\top\boldsymbol{x}_i+c_k(t)\Big)\Big(\boldsymbol{b}_k(t)^\top\boldsymbol{x}_j+c_k(t)\Big)+
    a_{k,\alpha}(t)a_{k,\beta}(t)\Big(\boldsymbol{x}_i^\top\boldsymbol{x}_j+1\Big)\Bigg)
    \\\geq&
     \sum_{k\in\mathcal{TL}_i(t)\cap\mathcal{TL}_j(t)}
    a_{k,\alpha}(t)a_{k,\beta}(t)\Big(\boldsymbol{x}_i^\top\boldsymbol{x}_j+1\Big)
    \\\overset{\text{Lemma \ref{lemma: correct neurons remain correct 2}}}{=}&
     \sum_{k\in\mathcal{TL}_i(1)\cap\mathcal{TL}_j(1)}
    a_{k,\alpha}(t)a_{k,\beta}(t)\Big(\boldsymbol{x}_i^\top\boldsymbol{x}_j+1\Big)
    \\\overset{\text{Lemma \ref{lemma: correct neurons remain correct 2}}}{=}&
     \sum_{k\in[m]}
    a_{k,\alpha}(0)a_{k,\beta}(0)\Big(\boldsymbol{x}_i^\top\boldsymbol{x}_j+1\Big)
    \\\overset{\text{Lemma \ref{lemma: speed separation 2}}}{\geq}&
    m(\frac{1}{\sqrt{m}})^2=1.
\end{align*}

\end{proof}

\begin{lemma}[Lower bound of stochastic gradient]\label{lemma: gradient lower bound 2}
For any $1\leq t\leq T$ we have
\begin{align*}
\left<\nabla \mathcal{L}(\boldsymbol{\theta}(t)),\frac{1}{B}\sum_{i\in\mathcal{B}_t}\nabla_{\boldsymbol{\theta}}\ell(\boldsymbol{y}_i;\boldsymbol{f}(\boldsymbol{x}_i;\boldsymbol{\theta}))\right>\geq\frac{9801}{10000}.
\end{align*}

\end{lemma}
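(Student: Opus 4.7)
The plan is to expand both gradients as sums of per-sample gradients and reduce the inner product to a double sum of entries of the Gram matrix (weighted by scalar loss derivatives), at which point Lemma \ref{lemma: positive Gram time t 2} and Assumption \ref{def: general loss} finish the job.

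Concretely, writing $\ell(\boldsymbol{y}_i,\boldsymbol{f}(\boldsymbol{x}_i;\boldsymbol{\theta}))=\tilde{\ell}(\boldsymbol{y}_i^\top\boldsymbol{f}(\boldsymbol{x}_i;\boldsymbol{\theta}))$ and using the chain rule, one has
\[
\nabla_{\boldsymbol{\theta}}\ell(\boldsymbol{y}_i,\boldsymbol{f}(\boldsymbol{x}_i;\boldsymbol{\theta}(t)))=\tilde{\ell}'\big(\boldsymbol{y}_i^\top\boldsymbol{f}(\boldsymbol{x}_i;\boldsymbol{\theta}(t))\big)\,J_{\boldsymbol{f}}(\boldsymbol{x}_i;\boldsymbol{\theta}(t))^\top\boldsymbol{y}_i.
\]
Since the labels are one-hot, say $\boldsymbol{y}_i=\boldsymbol{e}_{\alpha_i}$, this reduces to $\tilde{\ell}'_i\,\nabla f_{\alpha_i}(\boldsymbol{x}_i;\boldsymbol{\theta}(t))$, where I abbreviate $\tilde{\ell}'_i:=\tilde{\ell}'(\boldsymbol{y}_i^\top\boldsymbol{f}(\boldsymbol{x}_i;\boldsymbol{\theta}(t)))$. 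Therefore
\[
\Big\langle\nabla \mathcal{L}(\boldsymbol{\theta}(t)),\tfrac{1}{B}\!\sum_{i\in\mathcal{B}_t}\!\nabla_{\boldsymbol{\theta}}\ell(\boldsymbol{y}_i,\boldsymbol{f}(\boldsymbol{x}_i;\boldsymbol{\theta}(t)))\Big\rangle=\frac{1}{nB}\sum_{j=1}^n\sum_{i\in\mathcal{B}_t}\tilde{\ell}'_i\tilde{\ell}'_j\,\big\langle\nabla f_{\alpha_i}(\boldsymbol{x}_i;\boldsymbol{\theta}(t)),\nabla f_{\alpha_j}(\boldsymbol{x}_j;\boldsymbol{\theta}(t))\big\rangle.
\]

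Next I control both factors in each summand. For the Gram part, the inner product $\langle\nabla f_{\alpha_i}(\boldsymbol{x}_i;\boldsymbol{\theta}(t)),\nabla f_{\alpha_j}(\boldsymbol{x}_j;\boldsymbol{\theta}(t))\rangle$ is (up to indexing) an entry of $\mathbf{G}(t)$ defined in Lemma \ref{lemma: positive Gram time t 2}, so it is at least $1$. For the loss part, I need $\boldsymbol{y}_i^\top\boldsymbol{f}(\boldsymbol{x}_i;\boldsymbol{\theta}(t))\in[0,z_0]=[0,1]$ so that the bound $g_{\min}\leq -\tilde{\ell}'\leq g_{\max}$ from Assumption \ref{def: general loss} applies. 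The upper bound $\leq 1$ is exactly Lemma \ref{lemma: speed separation 2}~(P3), and non-negativity follows from Lemma \ref{lemma: speed separation 2}~(P4) together with $\sigma\geq 0$: every $a_{k,\alpha_i}(t)\geq 1/\sqrt{m}>0$ and $\sigma(\boldsymbol{b}_k(t)^\top\boldsymbol{x}_i+c_k(t))\geq 0$, so $\boldsymbol{y}_i^\top\boldsymbol{f}(\boldsymbol{x}_i;\boldsymbol{\theta}(t))=f_{\alpha_i}(\boldsymbol{x}_i;\boldsymbol{\theta}(t))\geq 0$. Consequently $\tilde{\ell}'_i\tilde{\ell}'_j=(-\tilde{\ell}'_i)(-\tilde{\ell}'_j)\geq g_{\min}^2=0.99^2$ under the parameter choice in \eqref{equ: parameter selection 2}.

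Combining the two bounds termwise,
\[
\Big\langle\nabla \mathcal{L}(\boldsymbol{\theta}(t)),\tfrac{1}{B}\!\sum_{i\in\mathcal{B}_t}\!\nabla_{\boldsymbol{\theta}}\ell(\boldsymbol{y}_i,\boldsymbol{f}(\boldsymbol{x}_i;\boldsymbol{\theta}(t)))\Big\rangle\geq\frac{1}{nB}\cdot nB\cdot g_{\min}^2\cdot 1=\frac{9801}{10000},
\]
which is the claimed inequality. There is no real obstacle here; the only point that requires care is verifying that the argument of $\tilde{\ell}'$ falls inside the interval $[0,z_0]$ on which Assumption \ref{def: general loss}(iii) gives a positive lower bound $g_{\min}$, and this is precisely where the one-hot label structure together with the monotonicity property (P4) of the outer-layer weights is essential—without (P4), one could not rule out a negative prediction and the per-sample factor $-\tilde{\ell}'_i$ could be smaller than $g_{\min}$.
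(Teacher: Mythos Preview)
Your proof is correct and follows essentially the same route as the paper's: expand the inner product as a double sum, factor out the scalar loss derivatives, invoke the Gram-matrix lower bound of Lemma~\ref{lemma: positive Gram time t 2}, and bound each $-\tilde{\ell}'_i\ge g_{\min}=0.99$ using that $\boldsymbol{y}_i^\top\boldsymbol{f}(\boldsymbol{x}_i;\boldsymbol{\theta}(t))\in[0,z_0]$. One minor remark: for the upper bound $\boldsymbol{y}_i^\top\boldsymbol{f}\le 1$ the paper cites the hitting-time definition~\eqref{equ: hitting time def 2} directly (which covers all $t\le T$), whereas you cite (P3), which is formally stated only for $t\le T_e+1$; this is harmless in the end since the result is only applied for $t\le T^*\le T_e+1$, but citing the hitting-time definition is cleaner.
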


\begin{proof}[Proof of Lemma \ref{lemma: gradient lower bound 2}]\ \\
From Lemma \ref{lemma: speed separation 2} (P4) and the definition of hitting time (\ref{equ: hitting time def 2}), we know $0\leq\boldsymbol{y}_i^\top\boldsymbol{f}(\boldsymbol{x}_i;\boldsymbol{\theta}(t))\leq1$ for all $i\in[n]$ and $0\leq t\leq T$, which implies 
\[
-\tilde{\ell}'(\boldsymbol{y}_i^\top\boldsymbol{f}(\boldsymbol{x}_i;\boldsymbol{\theta}(t)))\geq g_{\min}=0.99,\ \forall i\in[n].
\]
Combining Lemma \ref{lemma: positive Gram time t 2}, we have the lower bound of stochastic gradient:
\begin{align*}
&\left<\nabla\mathcal{L}(\boldsymbol{\theta}(t)),\frac{1}{B}\sum_{i\in\mathcal{B}_t}\nabla_{\boldsymbol{\theta}}\ell(\boldsymbol{y}_i;\boldsymbol{f}(\boldsymbol{x}_i;\boldsymbol{\theta}))\right>
\\=&\frac{1}{n B}\sum_{i=1}^n\sum_{j\in\mathcal{B}_t}
\left<\nabla_{\boldsymbol{\theta}}\ell(\boldsymbol{y}_i;\boldsymbol{f}(\boldsymbol{x}_i;\boldsymbol{\theta})),\nabla_{\boldsymbol{\theta}}\ell(\boldsymbol{y}_j;\boldsymbol{f}(\boldsymbol{x}_j;\boldsymbol{\theta}))\right>
\\=&\frac{1}{n B}\sum_{i=1}^n\sum_{j\in\mathcal{B}_t}
\tilde{\ell}'(\boldsymbol{y}_i^\top\boldsymbol{f}(\boldsymbol{x}_i;\boldsymbol{\theta}))\tilde{\ell}'(\boldsymbol{y}_j^\top\boldsymbol{f}(\boldsymbol{x}_j;\boldsymbol{\theta}))\boldsymbol{y}_i^\top\nabla \boldsymbol{f}(\boldsymbol{x}_i;\boldsymbol{\theta}(t))^\top\nabla\boldsymbol{f}(\boldsymbol{x}_j;\boldsymbol{\theta}(t))\boldsymbol{y}_j
\\\geq&
(0.99)^2=\frac{9801}{10000}.
\end{align*}

\end{proof}

\subsection{Early stage convergence}

\begin{lemma}[Upper bound of stochastic gradient]\label{lemma: stochastic gradient upper bound 2} $T^*$ is defined in Lemma \ref{lemma: hitting time estimate 2}. Then we have:
\[
\left\|\frac{1}{B}\sum_{i\in\mathcal{B}_t}\nabla_{\boldsymbol{\theta}}\ell(\boldsymbol{y}_i;\boldsymbol{f}(\boldsymbol{x}_i;\boldsymbol{\theta}(t)))\right\|\leq\sqrt{\frac{11}{2}},\ 0\leq t\leq T^*.
\]
\end{lemma}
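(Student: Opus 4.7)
The plan is to first reduce the stochastic gradient norm to a per-sample norm via the triangle inequality, and then bound each $\|\nabla_{\boldsymbol{\theta}}\ell(\boldsymbol{y}_i;\boldsymbol{f}(\boldsymbol{x}_i;\boldsymbol{\theta}(t)))\|$ uniformly in $t\in[0,T^*]$ and $i\in[n]$ by using the parameter estimates already established in Lemma \ref{lemma: speed separation 2} together with the one-hot structure of $\boldsymbol{y}_i$. Concretely,
\[
\Bigl\|\tfrac{1}{B}\sum_{i\in\mathcal{B}_t}\nabla_{\boldsymbol{\theta}}\ell\bigl(\boldsymbol{y}_i,\boldsymbol{f}(\boldsymbol{x}_i;\boldsymbol{\theta}(t))\bigr)\Bigr\|\le\max_{i\in[n]}\bigl\|\nabla_{\boldsymbol{\theta}}\ell\bigl(\boldsymbol{y}_i,\boldsymbol{f}(\boldsymbol{x}_i;\boldsymbol{\theta}(t))\bigr)\bigr\|,
\]
so it suffices to show the right-hand side is at most $\sqrt{11/2}$.

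Next, I would compute the three kinds of partial derivatives explicitly from the form of $\boldsymbol{f}$ in \eqref{equ: model 2NN bias}: for every $k\in[m]$,
\[
\tfrac{\partial\ell}{\partial\boldsymbol{a}_k}=\tilde\ell'\!\bigl(\boldsymbol{y}_i^{\!\top}\!\boldsymbol{f}\bigr)\sigma\bigl(\boldsymbol{b}_k^{\!\top}\!\boldsymbol{x}_i+c_k\bigr)\boldsymbol{y}_i,\;
\tfrac{\partial\ell}{\partial\boldsymbol{b}_k}=\tilde\ell'\sigma'(\boldsymbol{b}_k^{\!\top}\!\boldsymbol{x}_i+c_k)(\boldsymbol{y}_i^{\!\top}\!\boldsymbol{a}_k)\boldsymbol{x}_i,\;
\tfrac{\partial\ell}{\partial c_k}=\tilde\ell'\sigma'(\cdot)(\boldsymbol{y}_i^{\!\top}\!\boldsymbol{a}_k).
\]
Using $|\tilde\ell'|\le g_{\max}=1$, $\|\boldsymbol{y}_i\|=1$ (one-hot), $\|\boldsymbol{x}_i\|\le 1$, $\sigma'\in\{0,1\}$ and $\sigma(z)\le\|\boldsymbol{b}_k\|+|c_k|$ whenever $z>0$, together with the fact that $\boldsymbol{y}_i^{\!\top}\!\boldsymbol{a}_k=a_{k,\alpha_i}$ for the class $\alpha_i$ of sample $i$, the per-neuron contribution is bounded as
\[
\Bigl\|\tfrac{\partial\ell}{\partial\boldsymbol{a}_k}\Bigr\|^2+\Bigl\|\tfrac{\partial\ell}{\partial\boldsymbol{b}_k}\Bigr\|^2+\Bigl|\tfrac{\partial\ell}{\partial c_k}\Bigr|^2\le\bigl(\|\boldsymbol{b}_k\|+|c_k|\bigr)^2+2\,a_{k,\alpha_i}^{\,2}.
\]

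Finally I would substitute the uniform estimates provided by Lemma \ref{lemma: speed separation 2}: for all $k\in[m]$ (valid because $\mathcal{TL}_i(t)=[m]$ for $t\ge 1$ by Lemma \ref{lemma: correct neurons remain correct 2} and Lemma \ref{lemma: neuron partition 2}, while at $t=0$ the tighter initial estimates from Lemma \ref{lemma: initial norm and prediction 2} apply) and for all $t\le T^*$,
\[
\|\boldsymbol{b}_k(t)\|+|c_k(t)|\le\tfrac{501\bigl((1+2\eta)^t-(1-2\eta)^t\bigr)}{1000\sqrt{m}},\qquad a_{k,\alpha_i}(t)\le\tfrac{501\bigl((1+2\eta)^t+(1-2\eta)^t\bigr)}{1000\sqrt{m}}.
\]
Summing the per-neuron bound over $k\in[m]$ cancels the $1/m$ factor, so the total becomes an explicit function of $(1\pm 2\eta)^t$. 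Since $T^*\le T_e+1$, Lemma \ref{lemma: hitting time estimate 2} guarantees $(1+2\eta)^t\le 2$ throughout, and $(1-2\eta)^t\in(0,1]$; plugging these into the closed-form expression reduces the claim to checking a single numerical inequality against $11/2$.

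The only real obstacle is this last numerical verification: the naive plug-in $(1+2\eta)^{T^*}\le 2$, $(1-2\eta)^{T^*}\in(0,1]$ gives a bound of roughly $\tfrac{1002^2+2\cdot 1503^2}{10^6}\approx 5.52$, which is just barely above $11/2=5.5$. Consequently one must be slightly careful and use the sharper fact that $(1-2\eta)^{T^*}$ is bounded away from $0$ (so its contribution to $\|\boldsymbol{b}_k\|+|c_k|$ creates meaningful cancellation), or equivalently bound the two pieces jointly by exploiting the identity $\bigl((1+2\eta)^t+(1-2\eta)^t\bigr)^2+\bigl((1+2\eta)^t-(1-2\eta)^t\bigr)^2=2\bigl((1+2\eta)^{2t}+(1-2\eta)^{2t}\bigr)$, which for $t=T^*$ gives a total $<5.5$ once $(1+2\eta)^{2T^*}\le 4$ is used without discarding the positive contribution from $(1-2\eta)^{2T^*}$. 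This tightening is routine once spotted, after which the desired bound $\sqrt{11/2}$ follows.
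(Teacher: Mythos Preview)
Your proposal is correct and follows essentially the same route as the paper: reduce to a per-sample maximum, expand the three blocks of partial derivatives, bound by $(\|\boldsymbol{b}_k\|+|c_k|)^2+2a_{k,\alpha}^2$, and plug in the estimates from Lemma~\ref{lemma: speed separation 2}.

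The one place where the paper differs is the numerical endgame, and its fix is simpler than either of yours. Rather than using a \emph{lower} bound on $(1-2\eta)^{T^*}$ to create cancellation in the difference term, or invoking the sum-of-squares identity, the paper just uses the \emph{upper} bound $(1-2\eta)^{T^*}\le 1/2$ (since $T^*=\lfloor\log 4/(4\eta)\rfloor$ gives $(1-2\eta)^{T^*}\approx e^{-\log 2}=1/2$) in the \emph{sum} term. This tightens $a_{k,\alpha}(t)$ directly:
\[
a_{k,\alpha}(t)\le \tfrac{501}{1000\sqrt{m}}\bigl(2+\tfrac12\bigr)\le \tfrac{3}{2\sqrt{m}},\qquad \|\boldsymbol{b}_k(t)\|+|c_k(t)|\le \tfrac{501\cdot 2}{1000\sqrt{m}}\le \tfrac{1}{\sqrt{m}},
\]
and then $\sum_k\bigl((\|\boldsymbol{b}_k\|+|c_k|)^2+2a_{k,\alpha}^2\bigr)\le 1+2\cdot\tfrac{9}{4}=\tfrac{11}{2}$ on the nose. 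So your ``obstacle'' disappears once you tighten the right factor; no joint identity is needed. (Your identity route would also work, but the sentence about ``not discarding the positive contribution from $(1-2\eta)^{2T^*}$'' is garbled---adding a positive term can only loosen a bound---so if you go that way, state the arithmetic cleanly.)
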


\begin{proof}[Proof of Lemma \ref{lemma: stochastic gradient upper bound 2}]\ \\
\begin{align*}
    &\left\|\frac{1}{B}\sum_{i\in\mathcal{B}_t}\nabla_{\boldsymbol{\theta}}\ell(\boldsymbol{y}_i;\boldsymbol{f}(\boldsymbol{x}_i;\boldsymbol{\theta}(t)))\right\|
    \\
    =&\left\|\frac{1}{B}\sum_{i\in\mathcal{B}_t}\tilde{\ell}'(\boldsymbol{y}_i^\top
    \boldsymbol{f}(\boldsymbol{x}_i;\boldsymbol{\theta}(t)))\nabla \boldsymbol{f}(\boldsymbol{x}_i;\boldsymbol{\theta}(t))\boldsymbol{y}_i\right\|
    \\\leq&\max_{i\in[n]}\max_{\alpha\in[C]}\Bigg(\sum_{k=1}^m\Big( \left|\sigma\Big(\boldsymbol{b}_k^\top(t)\boldsymbol{x}_i+c_k(t)\Big)\right|^2+
    \left\|a_{k,\alpha}(t)\sigma'\Big(\boldsymbol{b}_k^\top(t)\boldsymbol{x}_i+c_k(t)\Big)\boldsymbol{x}_i\right\|^2+
    \left|a_{k,\alpha}(t)\sigma'\Big(\boldsymbol{b}_k^\top(t)\boldsymbol{x}_i+c_k(t)\Big)\right|^2\Big)
    \Bigg)^{\frac{1}{2}}
    \\\leq&
    \max_{\alpha\in[C]}\Bigg(\sum_{k=1}^m\Big((\left\|\boldsymbol{b}_k(t)\right\|+\left|c_k(t)\right|)^2+|a_{k,\alpha}(t)|^2+|a_{k,\alpha}(t)|^2\Big)\Bigg)^{\frac{1}{2}}.
\end{align*}

For $0\leq t\leq T^*$, from Lemma \ref{lemma: speed separation 2} (P1)(P2), we can estimate
\begin{gather*}
    |a_{k,\alpha}(t)|\leq\frac{501\Big((1+2\eta)^{T^*}+(1-2\eta)^{T^*}\Big)}{1000\sqrt{m}}\leq\frac{501}{1000\sqrt{m}}\Big(\sqrt{4}+\frac{1}{\sqrt{4}}\Big)\leq\frac{3}{2\sqrt{m}},\ \forall\alpha\in[C],
    \\
    \left\|\boldsymbol{b}_k(t)\right\|+|c_k(t)|\leq\frac{501\Big((1+2\eta)^{T^*}-(1-2\eta)^{T^*}\Big)}{1000\sqrt{m}}\leq\frac{1}{\sqrt{m}}.
\end{gather*}
Hence,
\[
    \left\|\frac{1}{B}\sum_{i\in\mathcal{B}_t}\nabla_{\boldsymbol{\theta}}\ell(\boldsymbol{y}_i;\boldsymbol{f}(\boldsymbol{x}_i;\boldsymbol{\theta}(t)))\right\|\leq
    \sqrt{m\Big(\frac{1}{m}+2\frac{9}{4m}\Big)}=\sqrt{\frac{11}{2}}.
\]

\end{proof}

\begin{lemma}\label{lemma: loss increase upper bound 2} Under the same conditions of Theorem \ref{thm: binary global GD linear}, we have a rough estimate of loss at $t=0$:
\begin{equation*}
    \mathcal{L}(\boldsymbol{\theta}(1))\leq\mathcal{L}(\boldsymbol{\theta}(0))+2.46\eta.
\end{equation*}
\end{lemma}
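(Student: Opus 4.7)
\textbf{Proof plan for Lemma~\ref{lemma: loss increase upper bound 2}.} The argument parallels the one-step warm-up bound in Lemma~\ref{lemma: gradient upper bound 1}, adapted to the SGD update and to the multi-class/bias setting of model \eqref{equ: model 2NN bias}. The plan is to apply the fundamental theorem of calculus along the segment $\overline{\boldsymbol{\theta}(0)\boldsymbol{\theta}(1)}$, so that
\[
\mathcal{L}(\boldsymbol{\theta}(1))-\mathcal{L}(\boldsymbol{\theta}(0))
\;\leq\;\Bigl(\sup_{\boldsymbol{\theta}\in\overline{\boldsymbol{\theta}(0)\boldsymbol{\theta}(1)}}\|\nabla\mathcal{L}(\boldsymbol{\theta})\|\Bigr)\cdot\|\boldsymbol{\theta}(1)-\boldsymbol{\theta}(0)\|,
\]
and to control each of the two factors explicitly using only initialization-level estimates (rather than invoking the trajectory-wide bound of Lemma~\ref{lemma: speed separation 2}).

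First, I would bound the SGD step length. From the update rule \eqref{equ: one step update 2} and Lemma~\ref{lemma: initial norm and prediction 2}, at $t=0$ we have $\boldsymbol{a}_k(0)=(1/\sqrt{m},\ldots,1/\sqrt{m})^\top$ and $\|\boldsymbol{b}_k(0)\|+|c_k(0)|\leq 2\kappa/\sqrt{m}$; using $|\tilde\ell'|\leq g_{\max}$ and the fact that $\boldsymbol{y}_i$ is one-hot (so $|\boldsymbol{y}_i^\top\boldsymbol{a}_k(0)|=1/\sqrt{m}$), a direct path-norm calculation of $\|\nabla\boldsymbol{f}(\boldsymbol{x}_i;\boldsymbol{\theta}(0))\boldsymbol{y}_i\|$ gives an explicit constant bound on the stochastic gradient, and hence $\|\boldsymbol{\theta}(1)-\boldsymbol{\theta}(0)\|\leq C_1\eta$ for some $C_1$ close to $\sqrt{2}$ under the parameter choice \eqref{equ: parameter selection 2}.

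Second, I would propagate these bounds to all $\boldsymbol{\theta}$ on the segment by convex combination: using Step~1, for every $\boldsymbol{\theta}=(1-s)\boldsymbol{\theta}(0)+s\boldsymbol{\theta}(1)$ with $s\in[0,1]$ and every $k$, $\alpha$,
\[
\|\boldsymbol{b}_k\|+|c_k|\;\leq\;\tfrac{2\kappa+2\eta}{\sqrt{m}},\qquad |a_{k,\alpha}|\;\leq\;\tfrac{1+2\eta\kappa}{\sqrt{m}}.
\]
Plugging these into $\|\nabla f_\alpha(\boldsymbol{x}_i;\boldsymbol{\theta})\|^2 \leq \sum_k\sigma^2(\boldsymbol{b}_k^\top\boldsymbol{x}_i+c_k)+\sum_k a_{k,\alpha}^2\,\mathbb{I}\{\cdot\}(\|\boldsymbol{x}_i\|^2+1)$ and then invoking $|\tilde\ell'|\leq g_{\max}$ together with the one-hot structure of $\boldsymbol{y}_i$ yields $\|\nabla\mathcal{L}(\boldsymbol{\theta})\|\leq C_2$ on the entire segment, with $C_2$ of the same order as $C_1$.

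Combining the two factors, $\mathcal{L}(\boldsymbol{\theta}(1))-\mathcal{L}(\boldsymbol{\theta}(0))\leq C_1 C_2\,\eta$. Plugging in $\eta=0.01$, $\kappa\leq\min\{\eta/10,\eta/(3B)\}$, $g_{\max}=1$, and $\|\boldsymbol{x}_i\|\leq 1$ from \eqref{equ: parameter selection 2} and Assumption~\ref{ass: data concentration} produces the numerical value $C_1C_2\leq 2.46$. The main (and essentially only) obstacle is bookkeeping the small-order correction terms ($4\kappa^2$, $2\kappa\eta$, $(2\kappa+2\eta)^2$, etc.) so that the final constant is no larger than $2.46$; everything else is routine interpolation along $\overline{\boldsymbol{\theta}(0)\boldsymbol{\theta}(1)}$.
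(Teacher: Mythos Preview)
Your proposal is correct and follows essentially the same route as the paper: apply the fundamental theorem of calculus along $\overline{\boldsymbol{\theta}(0)\boldsymbol{\theta}(1)}$, bound the step length via the stochastic gradient at $\boldsymbol{\theta}(0)$, and bound $\sup\|\nabla\mathcal{L}\|$ on the segment using the one-step parameter estimates. The only cosmetic difference is that the paper pulls those one-step bounds from the proof of Lemma~\ref{lemma: speed separation 2} (and then bounds both factors by the same quantity, effectively squaring it), whereas you redo them directly and keep the two factors $C_1,C_2$ separate; either way the product comes out below $2.46$.
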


\begin{proof}[Proof of Lemma \ref{lemma: loss increase upper bound 2}]\ \\
First, for any $\boldsymbol{\theta}\in\overline{\boldsymbol{\theta}(0)\boldsymbol{\theta}(1)}$, we have the estimate of gradient:
\begin{align*}
    \left\|\nabla\mathcal{L}(\boldsymbol{\theta})\right\|
    \leq&
    \sup\limits_{\boldsymbol{\theta}\in\overline{\boldsymbol{\theta}(0)\boldsymbol{\theta}(1)}}\max_{\alpha\in[C]}\Bigg(\sum_{k=1}^m\Big((\left\|\boldsymbol{b}_k(0)\right\|+\left|c_k(0)\right|)^2+|a_{k,\alpha}(0)|^2+|a_{k,\alpha}(0)|^2\Big)\Bigg)^{\frac{1}{2}}
    \\\overset{\text{Proof of Lemma \ref{lemma: speed separation 2}}}{\leq}&
    \sqrt{m(\frac{2004\eta}{1000\sqrt{m}})^2+2m(\frac{1002}{\sqrt{1000m}})^2}\overset{\eqref{equ: parameter selection 2}}{\leq}2.46.
\end{align*}
Then we have:
\begin{equation*}
    \begin{aligned}
    \mathcal{L}(\boldsymbol{\theta}(1))
    \leq&\mathcal{L}(\boldsymbol{\theta}(0))+\Big(\sup_{\boldsymbol{\theta}\in\overline{\boldsymbol{\theta}(0)\boldsymbol{\theta}(1)}}\left\|\nabla\mathcal{L}(\boldsymbol{\theta})\right\|\Big)\left\|\boldsymbol{\theta}(1)-\boldsymbol{\theta}(0)\right\|
    \\\leq&
    \mathcal{L}(\boldsymbol{\theta}(0))+\eta\Big(\sup_{\boldsymbol{\theta}\in\overline{\boldsymbol{\theta}(0)\boldsymbol{\theta}(1)}}\left\|\nabla\mathcal{L}(\boldsymbol{\theta})\right\|\Big)^2
    \\\leq&
    \mathcal{L}(\boldsymbol{\theta}(0))+2.46\eta.
    \end{aligned}
\end{equation*}

\end{proof}

\begin{lemma}[Upper Bound of Hessian]\label{lemma: Hessian upper bound ERM 2}  
$T^*$ is defined in Lemma \ref{lemma: hitting time estimate 2}.
Consider all parameters along the trajectory of SGD (\ref{equ: alg SGD}) when $1\leq t\leq T^*-1$:
\[
\mathcal{S}(T^*):=\bigcup\limits_{1\leq t\leq T^*-1}\Big\{\boldsymbol{\theta}:\boldsymbol{\theta}\in\overline{\boldsymbol{\theta}(t)\boldsymbol{\theta}({t+1})}\Big\}
\]
then for any $\boldsymbol{\theta}\in\mathcal{S}(T^*)$, we have:
\[
\left\|\nabla^2 \mathcal{L}(\boldsymbol{\theta})\right\|
\leq \frac{25}{4m}+2\sqrt{2}\leq4.
\]
\end{lemma}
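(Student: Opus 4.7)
The plan is to mirror the proof of Lemma \ref{lemma: Hessian upper bound ERM 1}, adapted to the multi-class bias-augmented model and the general loss of Assumption \ref{def: general loss}. Writing $z_i := \boldsymbol{y}_i^\top \boldsymbol{f}(\boldsymbol{x}_i;\boldsymbol{\theta})$ and differentiating $\ell(\boldsymbol{y}_i,\boldsymbol{f}(\boldsymbol{x}_i;\boldsymbol{\theta})) = \tilde{\ell}(z_i)$ twice, the chain rule yields the decomposition
\[
\nabla^2 \mathcal{L}(\boldsymbol{\theta}) \;=\; \underbrace{\frac{1}{n}\sum_{i=1}^n \tilde{\ell}''(z_i)\,\nabla z_i\, (\nabla z_i)^\top}_{\mathbf{H}_A(\boldsymbol{\theta})} \;+\; \underbrace{\frac{1}{n}\sum_{i=1}^n \tilde{\ell}'(z_i)\,\nabla^2 z_i}_{\mathbf{H}_B(\boldsymbol{\theta})}.
\]
I will bound each piece separately, using $|\tilde{\ell}'|\le g_{\max}=1$ and $\tilde{\ell}''\le h_{\max}=1$ together with the pointwise parameter estimates $|a_{k,\alpha}|\le \tfrac{3}{2\sqrt m}$ and $\|\boldsymbol{b}_k\|+|c_k|\le \tfrac{1}{\sqrt m}$ valid on $\mathcal{S}(T^*)$ (these follow from Lemma \ref{lemma: speed separation 2} and are the same bounds already used inside the proof of Lemma \ref{lemma: stochastic gradient upper bound 2}).

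For $\mathbf{H}_A$, I parametrise a test direction $\boldsymbol{w}=(\boldsymbol{u}_1,\ldots,\boldsymbol{u}_m,\boldsymbol{v}_1,\ldots,\boldsymbol{v}_m,w_1,\ldots,w_m)$ with $\boldsymbol{u}_k\in\mathbb{R}^C$, $\boldsymbol{v}_k\in\mathbb{R}^d$, $w_k\in\mathbb{R}$, and compute
\[
\boldsymbol{w}^\top \nabla z_i \;=\; \sum_{k=1}^m \bigl[\boldsymbol{u}_k^\top \boldsymbol{y}_i\, \sigma(\boldsymbol{b}_k^\top \boldsymbol{x}_i + c_k) + (\boldsymbol{y}_i^\top \boldsymbol{a}_k)\sigma'(\cdot)(\boldsymbol{v}_k^\top \boldsymbol{x}_i + w_k)\bigr].
\]
Since $\|\boldsymbol{y}_i\|=1$, $|\sigma(u)|\le|u|$, $\|\boldsymbol{x}_i\|\le 1$, each $k$-summand is at most $\tfrac{\|\boldsymbol{u}_k\|}{\sqrt m}+\tfrac{3}{2\sqrt m}(\|\boldsymbol{v}_k\|+|w_k|)$. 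Squaring, applying $(\alpha+\beta)^2\le 2(\alpha^2+\beta^2)$ together with AM--GM, and collecting over $k$ (exactly as in the binary case) give $\boldsymbol{w}^\top\mathbf{H}_A\boldsymbol{w}\le \tfrac{25}{4m}\|\boldsymbol{w}\|^2$, hence $\|\mathbf{H}_A\|\le \tfrac{25}{4m}$.

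The essential step is the bound on $\mathbf{H}_B$. Since $z_i=\sum_k (\boldsymbol{y}_i^\top \boldsymbol{a}_k)\sigma(\boldsymbol{b}_k^\top \boldsymbol{x}_i+c_k)$ decouples across neurons, $\nabla^2 z_i$ is block-diagonal in $k$ with blocks of size $C+d+1$. The potentially dangerous diagonal $(\boldsymbol{b}_k,c_k)$-sub-block carries a factor $\sigma''(\boldsymbol{b}_k^\top \boldsymbol{x}_i+c_k)$, and this is exactly where Lemma \ref{lemma: correct neurons remain correct 2}(S3) is invoked: along every segment in $\mathcal{S}(T^*)$, $\boldsymbol{b}_k^\top \boldsymbol{x}_i+c_k$ never crosses zero, so $\sigma''(\cdot)\equiv 0$ (with the convention $\sigma''(0):=0$ used in the binary case). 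What remains per neuron is the rank-one off-diagonal block
\[
\mathbf{H}_k^{(i)} \;=\; \begin{pmatrix} \boldsymbol{0} & \mathbf{M}_k^{(i)} \\ (\mathbf{M}_k^{(i)})^\top & \boldsymbol{0} \end{pmatrix}, \qquad \mathbf{M}_k^{(i)} \;=\; \sigma'(\boldsymbol{b}_k^\top \boldsymbol{x}_i + c_k)\,\boldsymbol{y}_i\, [\boldsymbol{x}_i^\top,\,1],
\]
with $\|\mathbf{M}_k^{(i)}\|\le |\sigma'(\cdot)|\,\|\boldsymbol{y}_i\|\,\sqrt{\|\boldsymbol{x}_i\|^2+1}\le\sqrt{2}$, so block-diagonality gives $\|\nabla^2 z_i\|\le\sqrt{2}$ (a looser Frobenius bookkeeping over the two sub-couplings $(\boldsymbol{a}_k,\boldsymbol{b}_k)$ and $(\boldsymbol{a}_k,c_k)$ is what delivers the stated $2\sqrt{2}$). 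Combining with $|\tilde{\ell}'|\le 1$ and the triangle inequality yields $\|\mathbf{H}_B\|\le 2\sqrt{2}$, and summing the two pieces gives $\|\nabla^2 \mathcal{L}(\boldsymbol{\theta})\|\le \tfrac{25}{4m}+2\sqrt{2}\le 4$ for $m\ge 6$.

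The only substantive obstacle is the $\mathbf{H}_B$ bound: without Lemma \ref{lemma: correct neurons remain correct 2}(S3) eliminating $\sigma''$, the $(\boldsymbol{b}_k,c_k)$-diagonal sub-block of $\nabla^2 z_i$ would carry a term proportional to $(\boldsymbol{y}_i^\top\boldsymbol{a}_k)\,\sigma''(\cdot)\,(\boldsymbol{x}_i,1)(\boldsymbol{x}_i,1)^\top$, whose operator norm cannot be absorbed into a dimension-free $O(1)$ constant via parameter-magnitude bounds alone. The dynamical block reduction is what keeps $\mathbf{H}_B$ at $O(1)$; everything else is routine operator-norm bookkeeping already seen in the binary case.
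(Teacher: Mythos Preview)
Your proposal is correct and follows essentially the same route as the paper: the same chain-rule split $\nabla^2\mathcal{L}=\mathbf{H}_A+\mathbf{H}_B$, the same parameter estimates from Lemma~\ref{lemma: speed separation 2} to control $\mathbf{H}_A$ via a sup over test directions, and the same crucial invocation of Lemma~\ref{lemma: correct neurons remain correct 2}(S3) to kill the $\sigma''$ sub-block in $\mathbf{H}_B$, after which a Frobenius-norm bound on the remaining off-diagonal block yields the $2\sqrt{2}$. The only cosmetic discrepancies are that the paper works with the looser constant $\|\boldsymbol{b}_k\|+|c_k|\le\tfrac{2}{\sqrt m}$ on $\mathcal{S}(T^*)$ (rather than your $\tfrac{1}{\sqrt m}$, which holds at the iterates but needs a convexity remark on the segments), and that the paper bounds $\|\mathbf{H}_{f_1}^{(k)}\|$ directly by its Frobenius norm rather than your sharper spectral argument; neither affects the approach or the final constant.
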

\begin{proof}[Proof of Lemma \ref{lemma: Hessian upper bound ERM 2}]\ \\
For any $\boldsymbol{\theta}\in\mathcal{S}(T^*)$, we can define:
\begin{gather*}
    \mathbf{H}_A(\boldsymbol{\theta}):=\frac{1}{n}
    \sum_{i=1}^n \tilde{\ell}''(\boldsymbol{y}_i^\top
    \boldsymbol{f}(\boldsymbol{x}_i;\boldsymbol{\theta}))\nabla \boldsymbol{f}(\boldsymbol{x}_i;\boldsymbol{\theta})\boldsymbol{y}_i\boldsymbol{y}_i^\top
    \nabla \boldsymbol{f}(\boldsymbol{x}_i;\boldsymbol{\theta})^\top,
    \\
    \mathbf{H}_B(\boldsymbol{\theta}):=\frac{1}{n}\sum_{i=1}^n\tilde{\ell}'(\boldsymbol{y}_i^\top
    \boldsymbol{f}(\boldsymbol{x}_i;\boldsymbol{\theta}))\Big(y_{i,1}\nabla^2 f_1(\boldsymbol{x}_i;\boldsymbol{\theta})+\cdots+y_{i,C}\nabla^2 f_C(\boldsymbol{x}_i;\boldsymbol{\theta})\Big).
\end{gather*}
then we have the relationship:
\begin{align*}
    \nabla^2 \mathcal{L}(\boldsymbol{\theta})
    = \mathbf{H}_A(\boldsymbol{\theta}) + \mathbf{H}_B(\boldsymbol{\theta})
\end{align*}

Now we only need to estimate $\left\|\mathbf{H}_A(\boldsymbol{\theta})\right\|$ and $\left\|\mathbf{H}_B(\boldsymbol{\theta})\right\|$ respectively.

From the definition of $\boldsymbol{\theta}\in\mathcal{S}(T^*)$ and Lemma \ref{lemma: speed separation 2} (P1)(P2), we obtain the consistent estimate for any $\boldsymbol{\theta}\in \mathcal{S}(T^*)$:
\begin{gather*}
    |a_{k,\alpha}|\leq\frac{501\Big((1+2\eta)^{T^*}+(1-2\eta)^{T^*}\Big)}{1000\sqrt{m}}\leq\frac{501}{1000\sqrt{m}}\Big(\sqrt{4}+\frac{1}{\sqrt{4}}\Big)\leq\frac{3}{2\sqrt{m}},\ \forall\alpha\in[C],
    \\
    \left\|\boldsymbol{b}_k\right\|+|c_k|\leq\frac{501\Big((1+2\eta)^{T^*}-(1-2\eta)^{T^*}\Big)}{1000\sqrt{m}}+\frac{501\Big((1+2\eta)^{T^*}-(1-2\eta)^{T^*}\Big)}{1000\sqrt{m}}\leq\frac{2}{\sqrt{m}}.
\end{gather*}

For the first part $\mathbf{H}_A(\boldsymbol{\theta})$, we have:
\begin{align*}
    \left\|\mathbf{H}_A(\boldsymbol{\theta})\right\|
    =&
    \sup\limits_{\boldsymbol{w}\ne 0}\frac{\frac{1}{n}\sum\limits_{i=1}^n\tilde{\ell}''(\boldsymbol{y}_i^\top
    \boldsymbol{f}(\boldsymbol{x}_i;\boldsymbol{\theta}))\boldsymbol{w}^\top\nabla \boldsymbol{f}(\boldsymbol{x}_i;\boldsymbol{\theta})\boldsymbol{e}_i\boldsymbol{e}_i^\top\nabla \boldsymbol{f}(\boldsymbol{x}_i;\boldsymbol{\theta})^\top\boldsymbol{w}}{\left\|\boldsymbol{w}\right\|^2}
    \\\leq&
        \sup_{u_1,\boldsymbol{v}_1,\cdots,u_m,\boldsymbol{v}_m {\rm\ not\  all\ }0}\frac{\sum\limits_{k=1}^m\Big(u_k(\left\|\boldsymbol{b}_k\right\|+c_k)+\left\|\boldsymbol{v}_k\right\|\max\limits_{\alpha\in[C]}|a_{k,\alpha}|\Big)^2}{\sum\limits_{k=1}^m\Big(u_k^2+\left\|\boldsymbol{v}_k\right\|^2\Big)}
        \\\leq&
 \sup_{u_1,\boldsymbol{v}_1,\cdots,u_m,\boldsymbol{v}_m {\rm\ not\  all\ }0}\frac{\sum\limits_{k=1}^m\Big(\frac{4}{m}u_k^2+\frac{6}{m}|u_k|\left\|\boldsymbol{v}_k\right\|+\frac{9}{4m}\left\|\boldsymbol{v}_k\right\|^2\Big)}{\sum\limits_{k=1}^m\Big(u_k^2+\left\|\boldsymbol{v}_k\right\|^2\Big)}
 \\\leq&
  \sup_{u_1,\boldsymbol{v}_1,\cdots,u_m,\boldsymbol{v}_m {\rm\ not\  all\ }0}\frac{\sum\limits_{k=1}^m\Big(\frac{25}{4m}u_k^2+\frac{25}{4m}\left\|\boldsymbol{v}_k\right\|^2\Big)}{\sum\limits_{k=1}^m\Big(u_k^2+\left\|\boldsymbol{v}_k\right\|^2\Big)}=\frac{25}{4m}.
\end{align*}

For the second part $\mathbf{H}_B(\boldsymbol{\theta})$, we have:
\begin{align*}
\left\|\mathbf{H}_B(\boldsymbol{\theta})\right\|\leq&\sup_{(\boldsymbol{x},\boldsymbol{y})\sim\mathcal{\mu}}\left\|y_{i,1}\nabla^2 f_1(\boldsymbol{x};\boldsymbol{\theta})+\cdots+y_{i,C}\nabla^2 f_C(\boldsymbol{x};\boldsymbol{\theta})\right\|.
\end{align*}
We only need to estimate $\left\|\nabla^2 f_1(\boldsymbol{x};\boldsymbol{\theta})\right\|$.
$\nabla^2 f_1(\boldsymbol{x};\boldsymbol{\theta})$ is combined by $m$ diagonal block:
\[
\nabla^2 f_1(\boldsymbol{x};\boldsymbol{\theta}):=\left(\begin{array}{ccc}  
    \mathbf{H}_{f_1}^{(1)} &  & \\
     & \ddots & \\ 
      &  &\mathbf{H}_{f_1}^{(m)}\\
  \end{array}\right),
\]

where the $k-$th block $\mathbf{H}_{f_1}^{(k)}$ is:
\begin{align*}
\mathbf{H}_{f_1}^{(k)}=&\begin{pmatrix} 
    \frac{\partial^2 f_1(\boldsymbol{x};\boldsymbol{\theta})}{\partial^2 a_{k,1}} & \frac{\partial^2 f_1(\boldsymbol{x};\boldsymbol{\theta})}{\partial a_{k,1}{\partial \boldsymbol{b}_k}^\top} &
    \frac{\partial^2 f_1(\boldsymbol{x};\boldsymbol{\theta})}{\partial a_{k,1}
    \partial c_k}\\
   \frac{\partial^2 f_1(\boldsymbol{x};\boldsymbol{\theta})}{\partial \mathbf{b}_k\partial a_{1,k}}  & \frac{\partial^2 f_1(\boldsymbol{x};\boldsymbol{\theta})}{\partial \boldsymbol{b}_k{\partial \boldsymbol{b}_k}^\top} &
   \frac{\partial^2 f_1(\boldsymbol{x};\boldsymbol{\theta})}{\partial \boldsymbol{b}_k\partial c_k} \\ 
   \frac{\partial^2 f_1(\boldsymbol{x};\boldsymbol{\theta})}{\partial a_{k,1}\partial c_k}
      & 
      \frac{\partial^2 f_1
      (\boldsymbol{x};\boldsymbol{\theta})}{\partial c_k{\partial \boldsymbol
      {b}_k}^\top} &
      \frac{\partial^2 f_1(\boldsymbol{x};\boldsymbol{\theta})}{\partial^2 c_k}
  \end{pmatrix}
  \\=&
  \begin{pmatrix} 
    0 & \sigma'\Big(\boldsymbol{b}_k^\top\boldsymbol{x}+c_k\Big)\boldsymbol{x}^\top &
    \sigma'\Big(\boldsymbol{b}_k^\top\boldsymbol{x}+c_k\Big)\\
   \sigma'\Big(\boldsymbol{b}_k^\top\boldsymbol{x}+c_k\Big)\boldsymbol{x}  & a_{k,1}\sigma''\Big(\boldsymbol{b}_k^\top\boldsymbol{x}+c_k\Big)\boldsymbol{x}\boldsymbol{x}^\top &
   a_{k,1}\sigma''\Big(\boldsymbol{b}_k^\top\boldsymbol{x}+c_k\Big)\boldsymbol{x} \\ 
 \sigma'\Big(\boldsymbol{b}_k^\top\boldsymbol{x}+c_k\Big)
      & 
     a_{k,1}\sigma''\Big(\boldsymbol{b}_k^\top\boldsymbol{x}+c_k\Big)\boldsymbol{x}^\top &
      a_{k,1}\sigma''\Big(\boldsymbol{b}_k^\top\boldsymbol{x}+c_k\Big)
  \end{pmatrix}.
\end{align*}

Recalling Lemma \ref{lemma: correct neurons remain correct 2} (S3), we know that: for any
$\boldsymbol{\theta}=(\boldsymbol{a}_1^\top,\cdots,\boldsymbol{a}_m^\top,\boldsymbol{b}_1^\top,\cdots,\boldsymbol{b}_m^\top,c_1,\cdots,c_m)^\top\in\mathcal{S}(T^*)$, we have $\boldsymbol{b}_k^\top\boldsymbol{x}_i+c_k\leq0$ for any $k\in[m]$ and $i\in[n]$. Hence, $\sigma''(\boldsymbol{b}_k^\top\boldsymbol{x}_i+c_k)=0$ holds strictly. So we have the form
\[
\mathbf{H}_{f_1}^{(k)}=
  \begin{pmatrix} 
    0 & \sigma'\Big(\boldsymbol{b}_k^\top\boldsymbol{x}+c_k\Big)\boldsymbol{x}^\top &
    \sigma'\Big(\boldsymbol{b}_k^\top\boldsymbol{x}+c_k\Big)\\
   \sigma'\Big(\boldsymbol{b}_k^\top\boldsymbol{x}+c_k\Big)\boldsymbol{x}  & 0 & 0 \\ 
   \sigma'\Big(\boldsymbol{b}_k^\top\boldsymbol{x}+c_k\Big) & 0 & 0
  \end{pmatrix}.
\]
So We can estimate:
\begin{align*}
    \left\|\nabla^2 f_1(\boldsymbol{x};\boldsymbol{\theta})\right\|
    =&\sup\limits_{\boldsymbol{v}_1,\cdots,\boldsymbol{v}_m {\rm are\ not\ all\ 0}}\frac{\sum\limits_{k=1}^m\boldsymbol{v}_k^\top\mathbf{H}_{f_1}^{(k)}\boldsymbol{v}_k }{\sum\limits_{k=1}^m\left\|\boldsymbol{v}_k\right\|^2}
    \\\leq&
    \max_{k\in[m]}\left\|\mathbf{H}_{f_1}^{(k)}\right\|\leq\max_{k\in[m]}\left\|\mathbf{H}_{f,1}^{(k)}\right\|_{\text{ F}}
    \\=&
    \Bigg(
    2\left\|\sigma'\Big(\boldsymbol{b}_k^\top\boldsymbol{x}+c_k\Big)\boldsymbol{x}\right\|^2+2\left|\sigma'\Big(\boldsymbol{b}_k^\top\boldsymbol{x}+c_k\Big)\right|^2\Bigg)^{\frac{1}{2}}
    \\\leq&
    \Big(2+2\Big)^{\frac{1}{2}}=2\sqrt{2}.
\end{align*}
So we have the estimate of the second part:
\begin{align*}
\left\|\mathbf{H}_B(\boldsymbol{\theta})\right\|\leq
2\sqrt{2}.
\end{align*}

Combining the two estimates, we obtain this lemma:
\[
\left\|\nabla^2 \mathcal{L}(\boldsymbol{\theta})\right\|\leq
\left\|\mathbf{H}_A(\boldsymbol{\theta})\right\|+
\left\|\mathbf{H}_B(\boldsymbol{\theta})\right\|\leq\frac{25}{4m}+2\sqrt{2}
\overset{\eqref{equ: parameter selection 2}}{\leq}4.
\]

\end{proof}

\begin{proof}[\bfseries\color{blue} Proof of Theorem \ref{thm: one-hot}]\ \\
First, we want to clarify that all the notations that appear here are used in the proofs of Lemmas \ref{lemma: speed separation 2}, \ref{lemma: hitting time estimate 2}, \ref{lemma: positive Gram time t 2}, \ref{lemma: gradient lower bound 2}, \ref{lemma: stochastic gradient upper bound 2}, \ref{lemma: loss increase upper bound 2}, \ref{lemma: Hessian upper bound ERM 2}, and \ref{lemma: loss quadratic upper bound 1}.

We have the estimate:
\begin{align*}
    &\mathcal{L}(\boldsymbol{\theta}(T^*))
    \\
    \overset{\text{Proof of Lemma \ref{lemma: loss quadratic upper bound 1}}}{\leq}&
    \mathcal{L}(\boldsymbol{\theta}(T^*-1))+\left<\nabla \mathcal{L}(\boldsymbol{\theta}(T^*-1)),\boldsymbol{\theta}(T^*)-\boldsymbol{\theta}(T^*-1)\right>
    \\&\quad\quad\quad\quad\quad+\frac{1}{2}\Big(\sup\limits_{\boldsymbol{\theta}\in\mathcal{S}(T^*)}\left\|\nabla^2 \mathcal{L}(\boldsymbol{\theta})\right
\|\Big)\left\|\boldsymbol{\theta}(T^*)-\boldsymbol{\theta}(T^*-1)\right\|^2
\\=&
\mathcal{L}(\boldsymbol{\theta}(T^*-1))-\eta\left<\nabla \mathcal{L}(\boldsymbol{\theta}({T^*-1})),\frac{1}{B}\sum_{i\in\mathcal{B}_{T^*-1}}\nabla_{\boldsymbol{\theta}}\ell(\boldsymbol{y}_i;\boldsymbol{f}(\boldsymbol{x}_i;\boldsymbol{\theta}({T^*-1})))\right>
\\&\quad\quad\quad\quad\quad+\frac{\eta^2}{2}\Big(\sup\limits_{\boldsymbol{\theta}\in\mathcal{S}(T^*)}\left\|\nabla^2 \mathcal{L}(\boldsymbol{\theta})\right
\|\Big)\left\|\frac{1}{B}\sum_{i\in\mathcal{B}_{T^*-1}}\nabla_{\boldsymbol{\theta}}\ell(\boldsymbol{y}_i;\boldsymbol{f}(\boldsymbol{x}_i;\boldsymbol{\theta}({T^*-1})))\right\|^2
\\\overset{\text{Lemma \ref{lemma: Hessian upper bound ERM 2}, \ref{lemma: stochastic gradient upper bound 2}}}{\leq}&
\mathcal{L}(\boldsymbol{\theta}(T^*-1))-\eta\left<\nabla \mathcal{L}(\boldsymbol{\theta}({T^*-1})),\frac{1}{B}\sum_{i\in\mathcal{B}_{T^*-1}}\nabla_{\boldsymbol{\theta}}\ell(\boldsymbol{y}_i;\boldsymbol{f}(\boldsymbol{x}_i;\boldsymbol{\theta}({T^*-1})))\right>
\\&\quad\quad\quad\quad\quad+\frac{\eta^2}{2}\cdot 4\cdot\frac{11}{2}
\\\overset{(\ref{equ: parameter selection 2})}{\leq}&
\mathcal{L}(\boldsymbol{\theta}(T^*-1))-\eta\left<\nabla \mathcal{L}(\boldsymbol{\theta}({T^*-1})),\frac{1}{B}\sum_{i\in\mathcal{B}_{T^*-1}}\nabla_{\boldsymbol{\theta}}\ell(\boldsymbol{y}_i;\boldsymbol{f}(\boldsymbol{x}_i;\boldsymbol{\theta}({T^*-1})))\right>+\frac{11}{100}\eta
\\\overset{\text{Lemma \ref{lemma: gradient lower bound 2}}}{\leq}&
\mathcal{L}(\boldsymbol{\theta}(T^*-1))-
\Big(\frac{9801}{10000}-\frac{11}{100}\Big)\eta
\\\leq&\cdots
\\\leq&\mathcal{L}(\boldsymbol{\theta}(1))-\frac{8701}{10000}\eta (T^*-1)
\\\overset{\text{Lemma \ref{lemma: loss increase upper bound 2}}}{\leq}&\mathcal{L}(\boldsymbol{\theta}(0))+2.46\eta-\frac{8701}{10000}\eta (T^*-1)
\\\leq&\mathcal{L}(\boldsymbol{\theta}(0))-0.262533
.
\end{align*}

So we have proved that in $34$ iterations, loss will descend
\[
\mathcal{L}\big(\boldsymbol{\theta}(0)\big)-\mathcal{L}\big(\boldsymbol{\theta}(T)\big)\geq0.262533=\Omega(1).
\]

Now we have proved the theorem for the specific  number $\eta=\frac{1}{100}$, $\kappa\leq\min\Big\{\frac{\eta}{10},\frac{\eta}{3B}\Big\}$,
$m\geq 6$, $g_{\min}=0.99$, $g_{\max}=1$, $h_{\max}=1$ (in Assumption \ref{def: general loss}) and $s=0$ (in Assumption \ref{ass: data concentration}).
In the same way, our result also holds for any other $\eta, \kappa,m$ s.t. $\eta\leq\frac{1}{100}$, $\kappa\leq\min\Big\{\frac{\eta}{10},\frac{\eta}{3B}\Big\}$,
$m\geq 6$, $g_{\min}=0.99$, $g_{\max}=1$, $z_0$, $g_{\min}$, $g_{\max}$, $h_{\max}$ and $s$.
\end{proof}

\newpage

\section{Proof details of Theorem \ref{thm: binary global GD}}\label{appendix: proof global convergence}

\subsection{Preparation}

\noindent{\bfseries One-step updating.}
\begin{equation}\label{equ: one step update GD binary}
\begin{gathered}
     a_k(t+1)
    =
    a_k(t)-\eta\frac{1}{n}\sum_{i=1}^n\tilde{\ell}'\Big({y}_i{f}(\boldsymbol{x}_i;\boldsymbol{\theta}(t))
    \Big)\sigma\Big(\boldsymbol{b}_k^\top(t)\boldsymbol{x}_i\Big);
    \\
    \boldsymbol{b}_k(t+1)=
    \boldsymbol{b}_k(t)-\eta\frac{1}{n}\sum_{i=1}^n\tilde{\ell}'\Big({y}_i{f}(\boldsymbol{x}_i;\boldsymbol{\theta}(t))
    \Big)\mathbb{I}\Big\{\boldsymbol{b}_k^\top(t)\boldsymbol{x}_i> 0\Big\}a_k(t)\boldsymbol{x}_i.
    \end{gathered}
\end{equation}

% \textbf{Random initialization.} We use random initialization $a_k(0)\overset{\text{i.i.d.}}{\sim} \mathbb{U}(\pm{1}/{\sqrt{m}})$ and $\boldsymbol{b}_k(0)\overset{\text{i.i.d.}}{\sim}\mathcal{N}(\mathbf{0},\frac{\kappa^2}{md}\mathbf{I}_d)$ for $k\in[m]$, where $\kappa$ controls the initialization scale of the first layer.

To make our proof more clear, without loss of generality, we only need to prove the theorem with specific numbers $g_a=\frac{1}{2}$, $g_b=1$ and $h=1$ in Assumption \ref{def: exponential type loss}, such as the \textbf{logistic loss} $\ell(\boldsymbol{y}_1,\boldsymbol{y}_2)=\log(1+e^{-\boldsymbol{y}_1^\top\boldsymbol{y}_2})$. For other loss functions with different $g_a$, $g_b$ and $h$, the proof is similar.

\subsection{Fine-grained dynamical analysis of neuron partition}\label{subsection: neuron partition analysis}
An important technique in our proof is the fine-grained analysis of each neuron during training. In \textbf{Stage I}, neurons adjust their directions rapidly. Then in \textbf{Stage II}, neurons keep good directions and loss descend fast.

For the $i$-th data, we divide all neurons into four categories and study them separately.
We denote the true-living neurons $\mathcal{TL}_i(t)$, the true-dead neurons $\mathcal{TD}_i(t)$, the false-living neurons $\mathcal{FL}_i(t)$ and the false-dead neurons $\mathcal{FD}_i(t)$ at time $t$ as:
\begin{gather*}
    \mathcal{TL}_{i}(t):=\Big\{k\in[m]:y_ia_k(t)>0, \boldsymbol{b}_k(t)^\top\boldsymbol{x}_i> 0\Big\},
    \\
    \mathcal{TD}_{i}(t):=\Big\{k\in[m]:y_ia_k(t)>0,\boldsymbol{b}_k(t)^\top\boldsymbol{x}_i\leq 0\Big\},
    \\
    \mathcal{FL}_{i}(t):=\Big\{k\in[m]:y_ia_k(t)<0, \boldsymbol{b}_k(t)^\top\boldsymbol{x}_i> 0\Big\},
    \\
    \mathcal{FD}_{i}(t):=\Big\{k\in[m]:y_ia_k(t)<0,\boldsymbol{b}_k(t)^\top\boldsymbol{x}_i\leq 0\Big\}.
\end{gather*}

\subsubsection{Stage I: neurons adjust their directions rapidly}

\begin{lemma}[Initial neuron partition]\label{lemma: neuron partition stage 1}
For any $i\in[n]$, we have:
\begin{gather*}
    [m]=\mathcal{TL}_{i}(0)\bigcup\mathcal{TD}_{i}(0)\bigcup\mathcal{FL}_{i}(0)\bigcup\mathcal{FD}_{i}(0).
\end{gather*}
\end{lemma}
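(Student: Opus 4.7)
The plan is to verify the partition claim by showing that every index $k \in [m]$ falls into exactly one of the four categories at initialization. This is a direct, two-line verification once one observes that at $t=0$ both coordinates $y_i a_k(0)$ and $\boldsymbol{b}_k(0)^\top \boldsymbol{x}_i$ have well-defined signs (the first strictly nonzero), which is the entirety of the content.

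First I would record the key fact that under the initialization scheme \eqref{equ: random initialization}, $a_k(0) \in \{+1/\sqrt{m}, -1/\sqrt{m}\}$, so $a_k(0) \neq 0$. Combined with $y_i \in \{-1, +1\}$ from Assumption \ref{ass: data separation} (i), this gives $y_i a_k(0) \neq 0$, so exactly one of $y_i a_k(0) > 0$ or $y_i a_k(0) < 0$ holds. For the inner-product coordinate $\boldsymbol{b}_k(0)^\top \boldsymbol{x}_i$, either $\boldsymbol{b}_k(0)^\top \boldsymbol{x}_i > 0$ or $\boldsymbol{b}_k(0)^\top \boldsymbol{x}_i \leq 0$ holds trivially. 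Crossing these two dichotomies yields the four disjoint sets $\mathcal{TL}_i(0), \mathcal{TD}_i(0), \mathcal{FL}_i(0), \mathcal{FD}_i(0)$ whose union is $[m]$.

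There is no substantive obstacle: the proof is essentially the same bookkeeping as in Lemma \ref{lemma: neuron partition 1}, only simpler because it is stated only at $t=0$ and so does not require the hitting-time condition $a_k(t) a_k(0) > 0$ used there. The only thing to be careful about is to observe that the initialization is discrete uniform on $\{\pm 1/\sqrt{m}\}$ rather than continuous, so the nonzeroness of $a_k(0)$ is automatic rather than almost sure; no probabilistic statement is needed and no additional assumption on $m, d, \kappa$ is invoked. I would therefore present the proof as a single short paragraph citing the initialization and the definitions of the four sets.
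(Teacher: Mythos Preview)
The proposal is correct and takes essentially the same approach as the paper: both rest on the single observation that $a_k(0)\neq 0$ at initialization, from which the four sets automatically cover $[m]$. Your write-up is simply a more fleshed-out version of the paper's one-line proof (``Notice that $a_k(0)\ne 0$'').
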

\begin{proof}[Proof of Lemma \ref{lemma: neuron partition stage 1}]\ \\
Notice that $a_k(0)\ne 0$.
\end{proof}

\begin{lemma}[Estimate of the initial neural partition]\label{lemma: estimate of the number of initial neural partition global} With probability at least $1-\delta$, we have the following estimates
\begin{gather*}
\Bigg|\frac{1}{m}\text{card}\Big(\mathcal{TL}_i(0)\cap\mathcal{TL}_j(0)\Big)-\frac{\pi-\arccos(\boldsymbol{x}_i^\top\boldsymbol{x}_j)}{4\pi}\Bigg|\leq
\sqrt{\frac{\log(n^2/\delta)}{2m}},\text{ for any $i,j$ in the same class},
\end{gather*}
and the same inequalities also hold for $\mathcal{TL}_i(0)\cap\mathcal{TD}_j(0)$, $\mathcal{TD}_i(0)\cap\mathcal{TL}_j(0)$, and $\mathcal{TD}_i(0)\cap\mathcal{TD}_j(0)$.
\end{lemma}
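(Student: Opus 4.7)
The plan is to mirror the proof of Lemma \ref{lemma: estimate of the number of initial neural partition 1} almost verbatim, since the initialization scheme and the form of the statement are identical. Concretely, I would define the empirical intersection fraction
\[
{\rm P}_{i,j}(0) := \frac{1}{m}\,\text{card}\bigl(\mathcal{TL}_i(0)\cap\mathcal{TL}_j(0)\bigr) = \frac{1}{m}\sum_{k=1}^m \mathbb{I}\{a_k(0)y_i>0\}\mathbb{I}\{a_k(0)y_j>0\}\mathbb{I}\{\boldsymbol{b}_k(0)^\top\boldsymbol{x}_i>0\}\mathbb{I}\{\boldsymbol{b}_k(0)^\top\boldsymbol{x}_j>0\},
\]
which is an average of $m$ i.i.d. bounded random variables (each in $[0,1]$), since $a_k(0)$ and $\boldsymbol{b}_k(0)$ are drawn independently across $k$.

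The next step is to compute the expectation ${\rm P}_{i,j}^\infty$ of a single summand. For $i,j$ in the same class (so $y_i=y_j$), conditioning on $a$ first gives a factor $\frac{1}{2}$ from $\mathbb{P}(a y_i>0)=\frac{1}{2}$. The remaining factor is $\mathbb{P}_{\boldsymbol{b}\sim\mathcal{N}(\mathbf{0},\frac{\kappa^2}{md}\mathbf{I}_d)}(\boldsymbol{b}^\top\boldsymbol{x}_i>0,\ \boldsymbol{b}^\top\boldsymbol{x}_j>0)$. Since the sign of $\boldsymbol{b}^\top\boldsymbol{x}$ is scale-invariant, this equals the standard Gaussian orthant probability, which by the classical arc-cosine identity equals $\frac{\pi-\arccos(\boldsymbol{x}_i^\top\boldsymbol{x}_j)}{2\pi}$. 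Thus ${\rm P}_{i,j}^\infty = \frac{\pi-\arccos(\boldsymbol{x}_i^\top\boldsymbol{x}_j)}{4\pi}$.

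Then I would apply Hoeffding's inequality (Lemma \ref{lemma: hoeffding}) to obtain, for each fixed pair $(i,j)$,
\[
\mathbb{P}\Bigl(\bigl|{\rm P}_{i,j}(0)-{\rm P}_{i,j}^\infty\bigr|\geq\sqrt{\tfrac{\log(2/\delta')}{2m}}\Bigr)\leq\delta'.
\]
Setting $\delta' = \delta/n^2$ and taking a union bound over at most $n^2$ same-class pairs yields the desired inequality with probability at least $1-\delta$. The cases $\mathcal{TL}_i(0)\cap\mathcal{TD}_j(0)$, $\mathcal{TD}_i(0)\cap\mathcal{TL}_j(0)$, and $\mathcal{TD}_i(0)\cap\mathcal{TD}_j(0)$ are handled identically: the expectations again reduce to Gaussian orthant probabilities involving $\boldsymbol{x}_i,-\boldsymbol{x}_j$ or $-\boldsymbol{x}_i,-\boldsymbol{x}_j$, whose arccos values are simply $\pi$ minus the original, and a brief symmetry argument shows the same bound $\frac{\pi-\arccos(\boldsymbol{x}_i^\top\boldsymbol{x}_j)}{4\pi}$ applies after absorbing an appropriate sign flip of $a_k$ or $\boldsymbol{b}_k$.

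There is no real obstacle here: the proof is essentially a reference to the earlier lemma. The only mild point of care is making sure the union bound budget of $\delta$ is split consistently across the four intersection types (which can be absorbed into the constant $2$ inside the logarithm, matching the form already used in Lemma \ref{lemma: estimate of the number of initial neural partition 1}). I would simply state the result as a direct corollary of that lemma.
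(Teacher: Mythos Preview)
Your proposal is correct and essentially identical to the paper's own proof: define the empirical fraction as an average of $m$ i.i.d.\ $[0,1]$-valued indicators, compute the expectation via the Gaussian orthant/arc-cosine identity to get $\frac{\pi-\arccos(\boldsymbol{x}_i^\top\boldsymbol{x}_j)}{4\pi}$, apply Hoeffding, and union-bound over same-class pairs. The paper likewise treats the other three intersection types by the same symmetry argument you sketch, so there is nothing further to add.
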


\begin{proof}[Proof of Lemma \ref{lemma: estimate of the number of initial neural partition global}]\ \\
We define a matrix $\mathbf{P}(0)=({\rm P}_{i,j}(0))\in\mathbb{R}^{n\times n}$ as:
\[
{\rm P}_{i,j}(0)=\frac{1}{m}\sum_{k=1}^m\mathbb{I}\big\{a_k(0)y_i>0\big\}\mathbb{I}\big\{a_k(0)y_j>0\big\}\mathbb{I}\big\{\boldsymbol{b}_k(0)^\top\boldsymbol{x}_i>0\big\}\mathbb{I}\big\{\boldsymbol{b}_k(0)^\top\boldsymbol{x}_j>0\big\}.
\]
It is easy to verify that:
\[
\text{card}\Big(\mathcal{TN}_i(0)\cap\mathcal{TN}_j(0)\Big)=m{\rm P}_{i,j}(0)
\]

\textbf{(I).}
For $i,j$ in the same class, we can let $i,j\in[n/2]$.
Now we consider the expectation:
\begin{align*}
{\rm P}_{i,j}^{\infty}:=&\mathbb{E}_{a\sim\mathbb{U}\{\pm1\},\boldsymbol{b}\sim\mathcal{N}(\mathbf{0},\frac{\kappa^2}{md}\mathbf{I}_d)}\Bigg[\mathbb{I}\big\{a>0\big\}\mathbb{I}\big\{\boldsymbol{b}^\top\boldsymbol{x}_i>0\big\}\mathbb{I}\big\{\boldsymbol{b}^\top\boldsymbol{x}_j>0\big\}\Bigg]
\\=&
\mathbb{E}_{a\sim{\rm Unif}\{\pm1\}}\Bigg[
\mathbb{E}_{\boldsymbol{b}\sim\mathcal{N}(\mathbf{0},\frac{\kappa^2}{md}\mathbf{I}_d)}\Big[\mathbb{I}\big\{a>0\big\}\mathbb{I}\big\{\boldsymbol{b}^\top\boldsymbol{x}_i>0\big\}\mathbb{I}\big\{\boldsymbol{b}^\top\boldsymbol{x}_j>0\big\}\Big|a\Big]\Bigg]
\\=&
\frac{1}{2}
\mathbb{E}_{\boldsymbol{b}\sim\mathcal{N}(\mathbf{0},\frac{\kappa^2}{md}\mathbf{I}_d)}\Big[\mathbb{I}\big\{\boldsymbol{b}^\top\boldsymbol{x}_i>0\big\}\mathbb{I}\big\{\boldsymbol{b}^\top\boldsymbol{x}_j>0\big\}\Big]
\\=&\frac{1}{2}
\mathbb{E}_{\boldsymbol{b}\sim\mathcal{N}(\mathbf{0},\mathbf{I}_d)}\Big[\mathbb{I}\big\{\boldsymbol{b}^\top\boldsymbol{x}_i>0\big\}\mathbb{I}\big\{\boldsymbol{b}^\top\boldsymbol{x}_j>0\big\}\Big]
\\=&
\frac{\pi-\arccos\Big(\boldsymbol{x}_i^\top\boldsymbol{x}_j\Big)}{4\pi}.
\end{align*}
By Hoeffding's Inequality (Lemma \ref{lemma: hoeffding}), we have:
\begin{align*}
    \mathbb{P}\Bigg(\left|{\rm P}_{i,j}(0)-{\rm P}_{i,j}^\infty\right|\geq\sqrt{\frac{\log(2/\delta)}{2m}}\Bigg)\leq\delta.
\end{align*}
Applying a union bound over all $i,j$ in the same class, we know that with probability at least $1-\delta$,
\[
\left|{\rm P}_{i,j}(0)-{\rm P}_{i,j}^\infty\right|\leq\sqrt{\frac{\log(n^2/\delta)}{2m}},\ \forall i,j\in[n],
\]
which means
\begin{align*}
\Bigg|\frac{1}{m}\text{card}\Big(\mathcal{TL}_i(0)\cap\mathcal{TL}_j(0)\Big)-\frac{\pi-\arccos(\boldsymbol{x}_i^\top\boldsymbol{x}_j)}{4\pi}\Bigg|\leq
\sqrt{\frac{\log(n^2/\delta)}{2m}},\text{ for any $i,j$ in the same class}.
\end{align*}

And the same inequalities also hold for $\mathcal{TL}_i(0)\cap\mathcal{TD}_j(0)$, $\mathcal{TD}_i(0)\cap\mathcal{TL}_j(0)$, and $\mathcal{TD}_i(0)\cap\mathcal{TD}_j(0)$.

\end{proof}

\begin{lemma}[Initial norm and prediction]\label{lemma: initial norm and prediction global} With probability at least $1-2me^{-2d}$ we have:
\begin{gather*}
\left\|\boldsymbol{b}_k(0)\right\|\leq\frac{2\kappa}{\sqrt{m}},\ \forall k\in[m];\\
    |f(\boldsymbol{x}_i;\boldsymbol{\theta}(0))|\leq2\kappa,\ \forall i\in[n].
\end{gather*}

\end{lemma}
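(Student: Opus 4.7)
The plan is to treat the two bounds separately and glue them with a union bound. Since $\boldsymbol{b}_k(0)\sim\mathcal{N}(\mathbf{0},\frac{\kappa^2}{md}\mathbf{I}_d)$, I would write $\boldsymbol{b}_k(0)=\frac{\kappa}{\sqrt{md}}\boldsymbol{z}_k$ with $\boldsymbol{z}_k\sim\mathcal{N}(\mathbf{0},\mathbf{I}_d)$, so that the event $\{\|\boldsymbol{b}_k(0)\|\leq 2\kappa/\sqrt{m}\}$ is equivalent to $\{\|\boldsymbol{z}_k\|^2\leq 4d\}$. This reduces the first bound to a one-sided chi-squared concentration inequality: since $\mathbb{E}\|\boldsymbol{z}_k\|^2=d$, a standard sub-exponential / Laurent--Massart tail bound gives $\mathbb{P}(\|\boldsymbol{z}_k\|^2>4d)\leq 2e^{-2d}$ for the range of $d$ under consideration. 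Applying a union bound over $k\in[m]$ yields the first claim with failure probability at most $2me^{-2d}$.

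For the second bound, I would use the deterministic path-norm estimate: at initialization $|a_k(0)|=1/\sqrt{m}$, and conditional on the event above, $|\boldsymbol{b}_k(0)^\top\boldsymbol{x}_i|\leq \|\boldsymbol{b}_k(0)\|\|\boldsymbol{x}_i\|\leq 2\kappa/\sqrt{m}$ since $\|\boldsymbol{x}_i\|\leq 1$. Therefore
\[
|f(\boldsymbol{x}_i;\boldsymbol{\theta}(0))|\leq\sum_{k=1}^m |a_k(0)|\,\sigma(\boldsymbol{b}_k(0)^\top\boldsymbol{x}_i)\leq \sum_{k=1}^m\frac{1}{\sqrt{m}}\cdot\frac{2\kappa}{\sqrt{m}}=2\kappa,
\]
uniformly in $i\in[n]$. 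Note that the ReLU is upper-bounded by $|\boldsymbol{b}_k(0)^\top\boldsymbol{x}_i|$, so no further randomness enters here.

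The only genuinely probabilistic step is the chi-squared tail; everything else is a deterministic consequence. The main (minor) obstacle is selecting a concentration inequality with the right constants so that the $2e^{-2d}$ factor per neuron is clean. One can either cite Laurent--Massart with $t=2d$ and verify numerically that the resulting threshold is at most $4d$, or equivalently bound the moment generating function of $\|\boldsymbol{z}_k\|^2$ directly and optimize. Since this is essentially identical to Lemma \ref{lemma: initial norm and prediction 1}, I would simply reuse that estimate verbatim and then apply the union bound over $k\in[m]$ before concluding the uniform prediction bound by the path-norm argument above.
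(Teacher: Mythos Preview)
Your proposal is correct and essentially identical to the paper's proof: the paper states the per-neuron tail bound $\mathbb{P}(\|\boldsymbol{b}_k(0)\|\leq 2\kappa/\sqrt{m})\geq 1-2e^{-2d}$ from the Gaussian initialization, applies a union bound over $k\in[m]$, and then deduces the prediction bound via the same deterministic path-norm estimate $|f(\boldsymbol{x}_i;\boldsymbol{\theta}(0))|\leq\sum_k|a_k(0)||\boldsymbol{b}_k(0)^\top\boldsymbol{x}_i|\leq m\cdot\frac{1}{\sqrt{m}}\cdot\frac{2\kappa}{\sqrt{m}}=2\kappa$. Your observation that this is a verbatim copy of Lemma~\ref{lemma: initial norm and prediction 1} is also exactly right.
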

\begin{proof}[Proof of Lemma \ref{lemma: initial norm and prediction global}]\ \\
From the fact ${\boldsymbol{b}}_k(0)\sim\mathcal{N}(\mathbf{0},\frac{\kappa^2}{md}\mathbf{I}_d)$, we have the probability inequality:
\[
\mathbb{P}\Big(\left\|\boldsymbol{b}_k(0)\right\|\leq\frac{2\kappa}{\sqrt{m}}\Big)\geq 1 - 2\exp(-2d)
\]

Combining the uniform bound about $k\in[m]$, with probability at least $1-\frac{2m}{\exp(2d)}$ we have
\[
\left\|\boldsymbol{b}_k(0)\right\|\leq\frac{2\kappa}{\sqrt{m}},\ \forall k\in[m].
\]

From the path norm estimate of $f(\cdot;\cdot)$, we have:
\[
|f(\boldsymbol{x}_i;\boldsymbol{\theta}(0))|\leq\sum_{k=1}^m|a_k(0)||\boldsymbol{b}_k(0)^\top\boldsymbol{x}_i|\leq m\frac{1}{\sqrt{m}}\frac{2\kappa}{\sqrt{m}}=2\kappa.
\]

\end{proof}

\begin{lemma}[Parameter estimate at the first step]\label{lemma: parameter estimate first step}
When the events in Lemma \ref{lemma: initial norm and prediction global} happen, if $\kappa\lesssim\eta_0\mu_0/n$ and $\eta_0\lesssim 1$, we have:
\[
a_k(1)\text{sgn}(a_k(0))\geq\frac{1-2\eta_0^2/n}{\sqrt{m}}>0.
\]

\end{lemma}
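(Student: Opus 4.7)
The plan is a one-step estimate: bound how far $a_k(1)$ moves from $a_k(0)=\pm 1/\sqrt m$. By symmetry it suffices to treat $a_k(0)=1/\sqrt m$, so $\text{sgn}(a_k(0))=+1$. The chain rule on $\tilde\ell(y_i f)$ gives the gradient step
\[
a_k(1)-a_k(0)=-\eta_0\,\frac{1}{n}\sum_{i=1}^n \tilde\ell'\!\bigl(y_i f(\boldsymbol x_i;\boldsymbol\theta(0))\bigr)\,y_i\,\sigma\!\bigl(\boldsymbol b_k(0)^\top\boldsymbol x_i\bigr),
\]
and I would bound the three factors of each summand separately. Under the event of Lemma \ref{lemma: initial norm and prediction global}, $\sigma(\boldsymbol b_k(0)^\top\boldsymbol x_i)\leq\|\boldsymbol b_k(0)\|\leq 2\kappa/\sqrt m$, $|y_i|=1$, and $|\tilde\ell'(y_i f(\boldsymbol x_i;\boldsymbol\theta(0)))|\leq 1$ (directly for logistic loss, or from Assumption \ref{def: exponential type loss}(iii) with $g_b=1$ together with the boundedness of $\tilde\ell$ on the tiny interval $[-2\kappa,2\kappa]$ where $y_i f(\boldsymbol x_i;\boldsymbol\theta(0))$ lives). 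Combining these three bounds,
\[
\bigl|a_k(1)-a_k(0)\bigr|\leq\eta_0\cdot 1\cdot\frac{2\kappa}{\sqrt m}=\frac{2\eta_0\kappa}{\sqrt m}.
\]

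Plugging in the hypothesis $\kappa\leq\eta_0\mu_0/n$ together with $\mu_0\leq 1$ (which holds because $|\boldsymbol x_i^\top\boldsymbol x_j|\leq\|\boldsymbol x_i\|\|\boldsymbol x_j\|\leq 1$) gives $|a_k(1)-a_k(0)|\leq 2\eta_0^2/(n\sqrt m)$, from which
\[
a_k(1)\,\text{sgn}(a_k(0))\geq\frac{1}{\sqrt m}-\frac{2\eta_0^2}{n\sqrt m}=\frac{1-2\eta_0^2/n}{\sqrt m}.
\]
Strict positivity is ensured by $\eta_0\leq 1/(2\sqrt 2)$ from Theorem \ref{thm: binary global GD}, which forces $2\eta_0^2/n\leq 1/(4n)<1$. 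This is essentially a bookkeeping exercise rather than a hard lemma; the only mildly delicate step is asserting $|\tilde\ell'|\leq 1$ on the small interval $[-2\kappa,2\kappa]$, which is just the normalization $g_b=1$ of Assumption \ref{def: exponential type loss}(iii) evaluated at the initial scale.
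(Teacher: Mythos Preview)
Your proof is correct and follows the same overall plan as the paper: bound the one-step displacement $|a_k(1)-a_k(0)|$ and subtract it from $1/\sqrt m$. The difference is in how $|\tilde\ell'|$ is controlled on the interval $[-2\kappa,2\kappa]$. You invoke $|\tilde\ell'|\leq 1$ directly (true globally for logistic loss, which is the paper's working case), while the paper instead writes $\tilde\ell'(z)=\tilde\ell'(0)+\bigl(\tilde\ell'(z)-\tilde\ell'(0)\bigr)$, bounds the increment via $\tilde\ell''$, and then uses $|\tilde\ell'(0)|\leq\tilde\ell(0)$ and $\tilde\ell''\leq h\,\tilde\ell$ from Assumption~\ref{def: exponential type loss}. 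Your route is shorter; the paper's route is marginally more portable to losses with $\tilde\ell(0)>1$ (e.g., exponential loss), where the crude bound $|\tilde\ell'|\leq g_b\tilde\ell\leq 1$ would fail on $[-2\kappa,0]$, but since the section explicitly specializes to logistic loss this is immaterial. Your use of $\mu_0\leq 1$ to convert $\kappa\lesssim\eta_0\mu_0/n$ into $\kappa\leq\eta_0/n$ is clean and matches what the paper does implicitly via its choice $\kappa\leq\eta_0/(2n\tilde\ell(0))$.
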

\begin{proof}[Proof of Lemma \ref{lemma: parameter estimate first step}]\ \\
We only need to consider $a_k(0)=1/\sqrt{m}$.
From Lemma \ref{lemma: initial norm and prediction global}, we have
\begin{align*}
    a_k(1)=&a_k(0)-\eta_0\frac{1}{n}\sum_{j=1}^n\tilde{\ell}'\big(y_j f(\boldsymbol{x}_j;\boldsymbol{\theta}(0))\big) \sigma\big(\boldsymbol{b}_k(0)^\top\boldsymbol{x}_j\big) y_j
    \\\geq&a_k(0)+\eta_0\frac{1}{n}\sum_{j=1}^n\tilde{\ell}'\big(y_j f(\boldsymbol{x}_j;\boldsymbol{\theta}(0))\big)\left\|\boldsymbol{b}_k(0)\right\|
    \\=&\frac{1}{\sqrt{m}}+\eta_0\frac{1}{n}\sum_{j=1}^n\Big(\tilde{\ell}'\big(y_j f(\boldsymbol{x}_j;\boldsymbol{\theta}(0))\big)-\tilde{\ell}'(0)\Big)\left\|\boldsymbol{b}_k(0)\right\|+\eta_0\tilde{\ell}'(0)\left\|\boldsymbol{b}_k(0)\right\|
    \\\geq&\frac{1}{\sqrt{m}}+\eta_0\frac{2\kappa}{\sqrt{m}}\Bigg(-2\kappa\Big(\max\limits_{z\in[-2\kappa,2\kappa]}\tilde{\ell}''(z)\Big)+\tilde{\ell}'(0)\Bigg)
    \\\geq&\frac{1}{\sqrt{m}}-\eta_0\frac{2\kappa}{\sqrt{m}}\Big(2\kappa\max\limits_{z\in[-2\kappa,2\kappa]}\tilde{\ell}''(z)+\tilde{\ell}(0)\Big).
\end{align*}
From $\tilde{\ell}(\cdot)$ is twice continuously differential in $\mathbb{R}$, there exist $\delta_0>0$, s.t. $\max\limits_{z\in[-2\delta_0,2\delta_0]}\tilde{\ell}''(z)\leq 2\tilde{\ell}''(0)$. Let $\kappa\leq\min\Big\{\delta_0,\frac{1}{4},\frac{\eta_0}{2n\tilde{\ell}(0)}\Big\}$, we have
\begin{align*}
    a_k(1)\geq&\frac{1}{\sqrt{m}}-\eta_0\frac{2\kappa}{\sqrt{m}}\Big(4\kappa\tilde{\ell}''(0)+\tilde{\ell}(0)\Big)\geq\frac{1}{\sqrt{m}}-\eta_0\frac{2\kappa(1+4\kappa)}{\sqrt{m}}\tilde{\ell}(0)
    \\\geq&\frac{1}{\sqrt{m}}\Big(1-4\eta_0\kappa\tilde{\ell}(0)\Big)
\end{align*}
\end{proof}

\begin{lemma}[Dynamics of neuron partition at Stage I]\label{lemma: correct neurons remain correct stage I}
When the events in Lemma \ref{lemma: initial norm and prediction global} happen, if $\kappa\lesssim \mu_0\eta_0/n$ and $\eta_0\lesssim 1$, we have the following results for any $i\in[n]$:\\
(S1) True-living neurons remain true-living:  $\mathcal{TL}_i(0)\subset\mathcal{TL}_i(1)$.\\
(S2) False-dead neurons remain false-dead:  $\mathcal{FD}_i(0)\subset\mathcal{FD}_i(1)$.\\
(S3) True-dead neurons turn true-living:  $\mathcal{TD}_i(0)\subset\mathcal{TL}_i(1)$.\\
(S4) False-living neurons turn false-dead: $\mathcal{FL}_i(0)\subset\mathcal{FD}_i(1)$.\\
(S5) The neuron partition holds: $[m]=\mathcal{TL}_i(1)\cup\mathcal{FD}_i(1)$.\\
(S6) The connectivity holds: $\mathcal{FL}_i(0)\cup\mathcal{FD}_i(0)=\mathcal{FD}_i(1)$ and $\mathcal{TL}_i(0)\cup\mathcal{TD}_i(0)=\mathcal{TL}_i(1)$.
\\
(S7) $\text{sgn}(\boldsymbol{b}_k^\top(1)\boldsymbol{x}_i)\neq0$ for any $k\in[m]$.

\end{lemma}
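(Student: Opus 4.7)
My plan is to repeat the first-step analysis of Lemma \ref{lemma: correct neurons remain correct 1}, now with the non-increasing exponential-type loss $\tilde\ell$ in place of the quadratic residual $y - f$. Fix $i\in[n]$ and WLOG set $y_i = 1$. The key one-step identity (including the $y_j$ factor from differentiating $\tilde\ell(y_j f_j)$) is
$$\boldsymbol{b}_k(1)^{\top}\boldsymbol{x}_i = \boldsymbol{b}_k(0)^{\top}\boldsymbol{x}_i + \frac{\eta_0\, a_k(0)}{n}\sum_{j=1}^{n} \bigl[-\tilde\ell'(y_j f_j(0))\bigr]\,(y_j \boldsymbol{x}_j^{\top}\boldsymbol{x}_i)\,\mathbb{I}\{\boldsymbol{b}_k(0)^{\top}\boldsymbol{x}_j > 0\}.$$
Assumption \ref{ass: data separation}(i) gives $y_j\boldsymbol{x}_j^{\top}\boldsymbol{x}_i \geq 0$ for every $j$; Assumption \ref{def: exponential type loss} plus Lemma \ref{lemma: initial norm and prediction global} (so $|y_j f_j(0)| \le 2\kappa \ll 1$) keep $-\tilde\ell'(y_j f_j(0))$ bounded below by some absolute constant $c_0 > 0$ close to $-\tilde\ell'(0)$. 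Therefore each summand is non-negative, and the whole sum inherits the sign of $a_k(0)$.

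With this sign structure fixed, (S1)--(S4) fall out in the same pattern as in Lemma \ref{lemma: correct neurons remain correct 1}. For (S1), $k\in\mathcal{TL}_i(0)$ means $a_k(0) > 0$ and $\boldsymbol{b}_k(0)^{\top}\boldsymbol{x}_i > 0$, so the correction is non-negative and $\boldsymbol{b}_k(1)^{\top}\boldsymbol{x}_i \geq \boldsymbol{b}_k(0)^{\top}\boldsymbol{x}_i > 0$; combined with $a_k(1)a_k(0) > 0$ from Lemma \ref{lemma: parameter estimate first step}, this yields $k\in\mathcal{TL}_i(1)$. For (S3), when $k\in\mathcal{TD}_i(0)$ and $\boldsymbol{b}_k(0)^{\top}\boldsymbol{x}_i \leq 0$, I invoke Assumption \ref{ass: data separation}(ii) to pick $j_i$ with $\boldsymbol{b}_k(0)^{\top}\boldsymbol{x}_{j_i} > 0$ and $y_{j_i}\boldsymbol{x}_{j_i}^{\top}\boldsymbol{x}_i \geq \mu_0$; the corresponding summand contributes at least $\eta_0 c_0 \mu_0/(n\sqrt{m})$, which dominates $|\boldsymbol{b}_k(0)^{\top}\boldsymbol{x}_i| \le 2\kappa/\sqrt{m}$ under $\kappa \lesssim \eta_0\mu_0/n$, forcing $\boldsymbol{b}_k(1)^{\top}\boldsymbol{x}_i > 0$ and hence $k\in\mathcal{TL}_i(1)$. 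Cases (S2) and (S4) are mirror images with $a_k(0) < 0$: for (S2) the $j_i$ term from Assumption \ref{ass: data separation}(ii) pushes $\boldsymbol{b}_k(1)^{\top}\boldsymbol{x}_i$ strictly below $\boldsymbol{b}_k(0)^{\top}\boldsymbol{x}_i \leq 0$, and for (S4) the $j = i$ self-term (whose indicator fires since $\boldsymbol{b}_k(0)^{\top}\boldsymbol{x}_i > 0$) has magnitude $\gtrsim \eta_0 c_0 \|\boldsymbol{x}_i\|^2/(n\sqrt{m})$ and flips the sign of $\boldsymbol{b}_k(1)^{\top}\boldsymbol{x}_i$.

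Once (S1)--(S4) are in hand, (S5) is immediate from Lemma \ref{lemma: neuron partition stage 1}, which partitions $[m]$ at $t = 0$, and mapping each of the four parts into $\mathcal{TL}_i(1) \cup \mathcal{FD}_i(1)$; (S6) is the same mapping recorded as an equality of sets; (S7) records that the inequalities in (S1)--(S4) were strict, so $\boldsymbol{b}_k(1)^{\top}\boldsymbol{x}_i \ne 0$ for every $k$. The main technical obstacle relative to the quadratic case is the loss-dependent lower bound: we no longer have the automatic $1 - y_j f_j \gtrsim 1$ factor, and instead must quantify $-\tilde\ell'$ near the origin. The fix is to use twice-continuous differentiability of $\tilde\ell$ together with $|y_jf_j(0)| \le 2\kappa$ to replace $-\tilde\ell'(y_j f_j(0))$ by $-\tilde\ell'(0) + O(\kappa)$, which remains bounded below by a positive constant once $\kappa$ is chosen small enough; this is precisely what dictates the quantitative condition $\kappa \lesssim \eta_0\mu_0/n$ in the statement.
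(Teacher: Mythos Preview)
Your proposal is correct and follows essentially the same approach as the paper's proof: the same one-step identity, the same case split over the four neuron types, the same use of Assumption~\ref{ass: data separation}(ii) to supply an active index $j_i$ in the (S2)/(S3) cases and the self-term $j=i$ in the (S4) case, and the same appeal to Lemma~\ref{lemma: parameter estimate first step} for the sign of $a_k(1)$. Your handling of (S1) and (S2) is in fact slightly cleaner than the paper's---you observe directly that the entire correction inherits the sign of $a_k(0)$, whereas the paper isolates a single term and then crudely bounds $\boldsymbol{b}_k(0)^\top\boldsymbol{x}_i$ by $\pm 2\kappa/\sqrt{m}$ even when its sign is already known---but this is a stylistic difference only.
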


\begin{proof}[Proof of Lemma \ref{lemma: correct neurons remain correct stage I}]\ \\
We only need to consider $i\in[n/2]$ ($y_i=1$).

Proof of (S1).

Let $k\in\mathcal{TL}_i(0)$, we have $a_k(0)>0$ and $\boldsymbol{b}_k(0)^\top\boldsymbol{x}_i>0$. Then we have
\begin{align*}
    &\boldsymbol{b}_k(1)^\top\boldsymbol{x}_i
    \\=&\boldsymbol{b}_k(0)^\top\boldsymbol{x}_i-\eta_0\frac{1}{n}\sum_{j\in[n]}\tilde{\ell}'\big({y}_j{f}(\boldsymbol{x}_j;\boldsymbol{\theta}(0))\mathbb{I}\big\{\boldsymbol{b}_k(0)^\top\boldsymbol{x}_j>0\big\}y_j\boldsymbol{x}_j^\top\boldsymbol{x}_i a_k(0)
    \\=&\boldsymbol{b}_k(0)^\top\boldsymbol{x}_i-\eta_0\frac{1}{n}\sum_{j\in[n]}\tilde{\ell}'\big({y}_j{f}(\boldsymbol{x}_j;\boldsymbol{\theta}(0))\mathbb{I}\big\{\boldsymbol{b}_k(0)^\top\boldsymbol{x}_j>0\big\}\Big(y_j\boldsymbol{x}_j^\top\boldsymbol{x}_i y_i\Big)\Big( y_i a_k(0)\Big)
    \\=&\boldsymbol{b}_k(0)^\top\boldsymbol{x}_i-\eta_0\frac{1}{n}\tilde{\ell}'\big({y}_i{f}(\boldsymbol{x}_i;\boldsymbol{\theta}(0))\mathbb{I}\big\{\boldsymbol{b}_k(0)^\top\boldsymbol{x}_i>0\big\} \boldsymbol{x}_i^\top\boldsymbol{x}_i a_k(0)
    \\&\quad\quad\quad\quad-\eta_0\frac{1}{n}\sum_{j\ne i}\tilde{\ell}'\big({y}_j{f}(\boldsymbol{x}_j;\boldsymbol{\theta}(0))\mathbb{I}\big\{\boldsymbol{b}_k(0)^\top\boldsymbol{x}_j>0\big\}\Big(y_j\boldsymbol{x}_j^\top\boldsymbol{x}_i y_i\Big)\Big(y_i a_k(0)\Big)
    \\\geq&\boldsymbol{b}_k(0)^\top\boldsymbol{x}_i-\eta_0\frac{1}{n}\tilde{\ell}'\big({y}_i{f}(\boldsymbol{x}_i;\boldsymbol{\theta}(0))\mathbb{I}\big\{\boldsymbol{b}_k(0)^\top\boldsymbol{x}_i>0\big\} \boldsymbol{x}_i^\top\boldsymbol{x}_i a_k(0)
    \\\geq&-\frac{2\kappa}{\sqrt{m}}+\frac{\eta_0}{n\sqrt{m}}\Big(-\max\limits_{z\in[-2\kappa,2\kappa]}\tilde{\ell}(z)\Big)>0
\end{align*}

Combining Lemma \ref{lemma: parameter estimate first step}, we know $a_k(1)>0$, so we have $k\in\mathcal{TL}_i(1)$.

Proof of (S2).

Let $k\in\mathcal{FD}_i(0)$, we have $a_k(0)<0$ and $ \boldsymbol{b}_k(0)^\top\boldsymbol{x}_i\leq0$. Combining Assumption \ref{ass: data separation} (ii), we know there exists $j_i\in[n]$, s.t. $\boldsymbol{b}_k(0)^\top\boldsymbol{x}_{j_i}>0$ and $y_i\boldsymbol{x}_i^\top\boldsymbol{x}_{j_i} y_{j_i}\geq\mu_0$.
Hence, we have
\begin{align*}
    &\boldsymbol{b}_k(1)^\top\boldsymbol{x}_i
    \\=&\boldsymbol{b}_k(0)^\top\boldsymbol{x}_i-\eta_0\frac{1}{n}\sum_{j\in[n]}\tilde{\ell}'\big({y}_j{f}(\boldsymbol{x}_j;\boldsymbol{\theta}(0))\mathbb{I}\big\{\boldsymbol{b}_k(0)^\top\boldsymbol{x}_j>0\big\}y_j\boldsymbol{x}_j^\top\boldsymbol{x}_i a_k(0)
    \\=&\boldsymbol{b}_k(0)^\top\boldsymbol{x}_i-\eta_0\frac{1}{n}\sum_{j\in[n]}\tilde{\ell}'\big({y}_j{f}(\boldsymbol{x}_j;\boldsymbol{\theta}(0))\mathbb{I}\big\{\boldsymbol{b}_k(0)^\top\boldsymbol{x}_j>0\big\}\Big(y_j\boldsymbol{x}_j^\top\boldsymbol{x}_i y_i\Big)\Big( y_i a_k(0)\Big)
    \\=&\boldsymbol{b}_k(0)^\top\boldsymbol{x}_i-\eta_0\frac{1}{n}\tilde{\ell}'\big({y}_{j_i}{f}(\boldsymbol{x}_{j_i};\boldsymbol{\theta}(0))\mathbb{I}\big\{\boldsymbol{b}_k(0)^\top\boldsymbol{x}_{j_i}>0\big\} \boldsymbol{x}_{j_i}^\top\boldsymbol{x}_{j_i} a_k(0)
    \\&\quad\quad\quad\quad-\eta_0\frac{1}{n}\sum_{j\ne {j_i}}\tilde{\ell}'\big({y}_j{f}(\boldsymbol{x}_j;\boldsymbol{\theta}(0))\mathbb{I}\big\{\boldsymbol{b}_k(0)^\top\boldsymbol{x}_j>0\big\}\Big(y_j\boldsymbol{x}_j^\top\boldsymbol{x}_i y_i\Big)\Big(y_i a_k(0)\Big)
    \\\leq&\boldsymbol{b}_k(0)^\top\boldsymbol{x}_i-\eta_0\frac{1}{n}\tilde{\ell}'\big({y}_{j_i}{f}(\boldsymbol{x}_{j_i};\boldsymbol{\theta}(0))\mathbb{I}\big\{\boldsymbol{b}_k(0)^\top\boldsymbol{x}_{j_i}>0\big\} \boldsymbol{x}_{j_i}^\top\boldsymbol{x}_{j_i} a_k(0)
    \\\leq&\frac{2\kappa}{\sqrt{m}}+\frac{\eta_0\mu_0}{n\sqrt{m}}\Big(\min\limits_{z\in[-2\kappa,2\kappa]}\tilde{\ell}(z)\Big)<0
\end{align*}
Combining Lemma \ref{lemma: parameter estimate first step}, we know $a_k(1)<0$, so we have $k\in\mathcal{FD}_i(1)$.

Proof of (S3).

Let $k\in\mathcal{TD}_i(0)$, we have $a_k(0)>0$ and $\boldsymbol{b}_k(0)^\top\boldsymbol{x}_i\leq0$.
Combining Assumption \ref{ass: data separation} (ii), we know there exists $j_i\in[n]$, s.t. $\boldsymbol{b}_k(0)^\top\boldsymbol{x}_{j_i}>0$ and $y_i\boldsymbol{x}_i^\top\boldsymbol{x}_{j_i} y_{j_i}\geq\mu_0$.
Hence, we have
\begin{align*}
    &\boldsymbol{b}_k(1)^\top\boldsymbol{x}_i
    \\=&\boldsymbol{b}_k(0)^\top\boldsymbol{x}_i-\eta_0\frac{1}{n}\sum_{j\in[n]}\tilde{\ell}'\big({y}_j{f}(\boldsymbol{x}_j;\boldsymbol{\theta}(0))\mathbb{I}\big\{\boldsymbol{b}_k(0)^\top\boldsymbol{x}_j>0\big\}y_j\boldsymbol{x}_j^\top\boldsymbol{x}_i a_k(0)
    \\=&\boldsymbol{b}_k(0)^\top\boldsymbol{x}_i-\eta_0\frac{1}{n}\sum_{j\in[n]}\tilde{\ell}'\big({y}_j{f}(\boldsymbol{x}_j;\boldsymbol{\theta}(0))\mathbb{I}\big\{\boldsymbol{b}_k(0)^\top\boldsymbol{x}_j>0\big\}\Big(y_j\boldsymbol{x}_j^\top\boldsymbol{x}_i y_i\Big)\Big( y_i a_k(0)\Big)
    \\=&\boldsymbol{b}_k(0)^\top\boldsymbol{x}_i-\eta_0\frac{1}{n}\tilde{\ell}'\big({y}_{j_i}{f}(\boldsymbol{x}_{j_i};\boldsymbol{\theta}(0))\mathbb{I}\big\{\boldsymbol{b}_k(0)^\top\boldsymbol{x}_{j_i}>0\big\} \boldsymbol{x}_{j_i}^\top\boldsymbol{x}_{j_i} a_k(0)
    \\&\quad\quad\quad\quad-\eta_0\frac{1}{n}\sum_{j\ne {j_i}}\tilde{\ell}'\big({y}_j{f}(\boldsymbol{x}_j;\boldsymbol{\theta}(0))\mathbb{I}\big\{\boldsymbol{b}_k(0)^\top\boldsymbol{x}_j>0\big\}\Big(y_j\boldsymbol{x}_j^\top\boldsymbol{x}_i y_i\Big)\Big(y_i a_k(0)\Big)
    \\\geq&\boldsymbol{b}_k(0)^\top\boldsymbol{x}_i-\eta_0\frac{1}{n}\tilde{\ell}'\big({y}_{j_i}{f}(\boldsymbol{x}_{j_i};\boldsymbol{\theta}(0))\mathbb{I}\big\{\boldsymbol{b}_k(0)^\top\boldsymbol{x}_{j_i}>0\big\} \boldsymbol{x}_{j_i}^\top\boldsymbol{x}_{j_i} a_k(0)
    \\\geq&-\frac{2\kappa}{\sqrt{m}}+\frac{\eta_0\mu_0}{n\sqrt{m}}\Big(-\max\limits_{z\in[-2\kappa,2\kappa]}\tilde{\ell}(z)\Big)>0.
\end{align*}
Combining Lemma \ref{lemma: parameter estimate first step}, we know $a_k(1)>0$, so we have $k\in\mathcal{TL}_i(1)$.

Proof of (S4).

Let $k\in\mathcal{FL}_i(0)$, we have $a_k(0)<0$ and $ \boldsymbol{b}_k(0)^\top\boldsymbol{x}_i>0$. With the help of Lemma \ref{lemma: initial norm and prediction global}, we have 
\begin{align*}
    &\boldsymbol{b}_k(1)^\top\boldsymbol{x}_i
    \\=&\boldsymbol{b}_k(0)^\top\boldsymbol{x}_i-\eta_0\frac{1}{n}\sum_{j\in[n]}\tilde{\ell}'\big({y}_j{f}(\boldsymbol{x}_j;\boldsymbol{\theta}(0))\mathbb{I}\big\{\boldsymbol{b}_k(0)^\top\boldsymbol{x}_j>0\big\}y_j\boldsymbol{x}_j^\top\boldsymbol{x}_i a_k(0)
    \\=&\boldsymbol{b}_k(0)^\top\boldsymbol{x}_i-\eta_0\frac{1}{n}\sum_{j\in[n]}\tilde{\ell}'\big({y}_j{f}(\boldsymbol{x}_j;\boldsymbol{\theta}(0))\mathbb{I}\big\{\boldsymbol{b}_k(0)^\top\boldsymbol{x}_j>0\big\}\Big(y_j\boldsymbol{x}_j^\top\boldsymbol{x}_i y_i\Big)\Big( y_i a_k(0)\Big)
    \\=&\boldsymbol{b}_k(0)^\top\boldsymbol{x}_i-\eta_0\frac{1}{n}\tilde{\ell}'\big({y}_i{f}(\boldsymbol{x}_i;\boldsymbol{\theta}(0))\mathbb{I}\big\{\boldsymbol{b}_k(0)^\top\boldsymbol{x}_i>0\big\} \boldsymbol{x}_i^\top\boldsymbol{x}_i a_k(0)
    \\&\quad\quad\quad\quad-\eta_0\frac{1}{n}\sum_{j\ne i}\tilde{\ell}'\big({y}_j{f}(\boldsymbol{x}_j;\boldsymbol{\theta}(0))\mathbb{I}\big\{\boldsymbol{b}_k(0)^\top\boldsymbol{x}_j>0\big\}\Big(y_j\boldsymbol{x}_j^\top\boldsymbol{x}_i y_i\Big)\Big(y_i a_k(0)\Big)
    \\\leq&\boldsymbol{b}_k(0)^\top\boldsymbol{x}_i-\eta_0\frac{1}{n}\tilde{\ell}'\big({y}_i{f}(\boldsymbol{x}_i;\boldsymbol{\theta}(0))\mathbb{I}\big\{\boldsymbol{b}_k(0)^\top\boldsymbol{x}_i>0\big\} \boldsymbol{x}_i^\top\boldsymbol{x}_i a_k(0)
    \\\leq&\frac{2\kappa}{\sqrt{m}}+\frac{\eta_0}{n\sqrt{m}}\Big(\max\limits_{z\in[-2\kappa,2\kappa]}\tilde{\ell}(z)\Big)<0
\end{align*}

Combining Lemma \ref{lemma: parameter estimate first step}, we know $a_k(1)<0$, so we have $k\in\mathcal{FD}_i(1)$.

(S5) and (S6) are the corollaries of (S1)(S2)(S3)(S4). And from the proof of (S1)(S2)(S3)(S4), we can obtain (S7).

\end{proof}

\subsubsection{Stage II: neurons keep good directions}

\begin{lemma}[Dynamics of neuron partition at Stage II]\label{lemma: correct neurons remain correct stage II} Under the same condition of Lemma \ref{lemma: correct neurons remain correct stage I}, we have the following results for any $t\geq 1$ and $i\in[n]$:\\
(S1) $a_k(t+1)\text{sgn}(a_k(t))\geq a_k(t)\text{sgn}(a_k(t))$, for all $k\in[m]$.\\
(S2) True-living neurons remain true-living:  $\mathcal{TL}_i(t)\subset\mathcal{TL}_i(t+1)$.\\
(S3) False-dead neurons remain false-dead:  $\mathcal{FD}_i(t)\subset\mathcal{FD}_i(t+1)$.\\
(S4) The neuron partition holds: $[m]\equiv\mathcal{TL}_i(t+1)\cup\mathcal{FD}_i(t+1)$.\\
(S5) For any $i\in[n]$, $k\in[m]$ and $\boldsymbol{\theta}\in\overline{\boldsymbol{b}_k(t)\boldsymbol{b}_k(t+1)}$, we have $\text{sgn}(\boldsymbol{b}^\top\boldsymbol{x}_i)\equiv\text{sgn}(\boldsymbol{b}_k(1)^\top\boldsymbol{x}_i)\neq0$.

\end{lemma}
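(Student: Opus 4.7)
The plan is to prove all five claims simultaneously by induction on $t \geq 1$. The base case $t=1$ is supplied directly by Lemma \ref{lemma: correct neurons remain correct stage I}: there one already has $[m] = \mathcal{TL}_i(1) \cup \mathcal{FD}_i(1)$ for every $i$ and $\mathrm{sgn}(\boldsymbol{b}_k(1)^\top \boldsymbol{x}_i) \neq 0$ strictly for every $i,k$. For the inductive step, I assume the five claims up to time $t \geq 1$; the decisive structural ingredient I will exploit is the partition $[m] = \mathcal{TL}_i(t) \cup \mathcal{FD}_i(t)$, because it implies the sign constraint
$$\boldsymbol{b}_k(t)^\top \boldsymbol{x}_j > 0 \;\;\Longrightarrow\;\; k \in \mathcal{TL}_j(t) \;\;\Longrightarrow\;\; y_j\,a_k(t) > 0.$$
Combined with Assumption \ref{ass: data separation}, which gives $y_i y_j\, \boldsymbol{x}_i^\top \boldsymbol{x}_j \geq 0$ for all pairs, and Assumption \ref{def: exponential type loss} giving $\tilde{\ell}' < 0$ everywhere, every weighted sum appearing in the one-step updates \eqref{equ: one step update GD binary} will carry a definite sign.

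For the key computations, fix $a_k(t) > 0$ (the other case is symmetric). From the $a_k$-update,
$$a_k(t+1) - a_k(t) \;=\; -\frac{\eta_t}{n} \sum_{j:\,\boldsymbol{b}_k(t)^\top \boldsymbol{x}_j > 0} \tilde{\ell}'\!\left(y_j f(\boldsymbol{x}_j;\boldsymbol{\theta}(t))\right) y_j\, \sigma\!\left(\boldsymbol{b}_k(t)^\top \boldsymbol{x}_j\right).$$
By the inductive partition, every surviving index satisfies $y_j = \mathrm{sgn}(a_k(t)) = +1$, so each summand is negative; hence $a_k(t+1) \geq a_k(t) > 0$, which gives (S1). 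For (S2), take $k \in \mathcal{TL}_i(t)$ with $y_i = +1$ (WLOG) and compute
$$\boldsymbol{b}_k(t+1)^\top \boldsymbol{x}_i \;=\; \boldsymbol{b}_k(t)^\top \boldsymbol{x}_i \;-\; \frac{\eta_t\, a_k(t)}{n} \sum_{j:\,\boldsymbol{b}_k(t)^\top \boldsymbol{x}_j > 0} \tilde{\ell}'\!\left(y_j f(\boldsymbol{x}_j;\boldsymbol{\theta}(t))\right) y_j\, \boldsymbol{x}_j^\top \boldsymbol{x}_i.$$
The surviving $j$ have $y_j = +1$, and the same-class part of Assumption \ref{ass: data separation} gives $\boldsymbol{x}_j^\top \boldsymbol{x}_i \geq 0$; each summand is $\leq 0$, so $\boldsymbol{b}_k(t+1)^\top \boldsymbol{x}_i \geq \boldsymbol{b}_k(t)^\top \boldsymbol{x}_i > 0$. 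Combined with (S1), this yields $k \in \mathcal{TL}_i(t+1)$. Claim (S3) is the mirror image: for $k \in \mathcal{FD}_i(t)$ with $y_i = +1$ one has $a_k(t) < 0$ and the surviving $j$ satisfy $y_j = -1$; the cross-class part of Assumption \ref{ass: data separation} gives $y_j\,\boldsymbol{x}_j^\top \boldsymbol{x}_i = -\boldsymbol{x}_j^\top \boldsymbol{x}_i \geq 0$, and all the signs line up to give $\boldsymbol{b}_k(t+1)^\top \boldsymbol{x}_i \leq \boldsymbol{b}_k(t)^\top \boldsymbol{x}_i < 0$, with the strict inequality coming from the inductive hypothesis at $t$. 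Claim (S4) is then immediate from (S2), (S3), and the inductive partition at $t$. For (S5), any point on the segment $\overline{\boldsymbol{b}_k(t)\boldsymbol{b}_k(t+1)}$ is a convex combination of two vectors whose inner products with $\boldsymbol{x}_i$ share the same strict sign, so that sign is preserved along the segment; by the induction it agrees with $\mathrm{sgn}(\boldsymbol{b}_k(1)^\top \boldsymbol{x}_i) \neq 0$.

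The main obstacle is not any single estimate but consistent sign bookkeeping across many indices at once. The crux is that the partition $[m] = \mathcal{TL}_i(t) \cup \mathcal{FD}_i(t)$ (inherited from the inductive hypothesis) is exactly what forces the \emph{activating} indices in each neuron's update to carry a single sign of $y_j$, and that in turn conspires with the separation assumption to make every correction monotone in the direction that preserves the partition. Because the argument uses only monotonicity and not any quantitative smallness of $\eta_t$, the conclusion holds uniformly over the time-varying learning-rate schedule used in Theorem \ref{thm: binary global GD}, which is the feature the later global-convergence argument will rely on.
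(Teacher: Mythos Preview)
Your proposal is correct and follows essentially the same inductive approach as the paper's proof: both use the partition $[m]=\mathcal{TL}_i(t)\cup\mathcal{FD}_i(t)$ (supplied at $t=1$ by Lemma~\ref{lemma: correct neurons remain correct stage I} and propagated via (S4)), the sign $\tilde{\ell}'\le 0$, and Assumption~\ref{ass: data separation} to show each one-step update moves in the sign-preserving direction. The only cosmetic difference is that for (S2)/(S3) the paper uses $y_iy_j\,\boldsymbol{x}_i^\top\boldsymbol{x}_j\ge 0$ directly for all pairs, whereas you first invoke the partition to pin down the sign of the surviving $y_j$ and then apply the within-class/cross-class part of the assumption; both routes give the same monotonicity.
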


\begin{proof}[Proof of Lemma \ref{lemma: correct neurons remain correct stage II}]\ \\
WLOG, we only need to prove $i\in[n/2]$ ($y_i=1$).

Proof of (S1).

From Lemma \ref{lemma: correct neurons remain correct stage I} (S5), we know $[m]=\mathcal{TL}_i(t)\cup\mathcal{FD}_i(t)$, $t\geq1$.

If $k\in\mathcal{TL}_i(t)$, we have $a_k(t)>0$. Then for any $j\in[n]-[n/2]$, we have $a_k(t)y_j < 0$, so $k\in\mathcal{FL}_j(t)\cup\mathcal{FD}_j(t)$. Combining Lemma \ref{lemma: correct neurons remain correct stage I} (S5), we know $k\in\mathcal{FD}_j(t)$, which means $\boldsymbol{b}_k(t)^\top\boldsymbol{x}_j\leq0$. So we have:
\begin{align*}
    a_k(t+1)
    =&a_k(t)-\eta_t\frac{1}{n}\sum_{j=1}^n\tilde{\ell}'\big(y_j f(\boldsymbol{x}_j;\boldsymbol{\theta}(t))\big) \sigma\big(\boldsymbol{b}_k(t)^\top\boldsymbol{x}_j\big) y_j
    \\=&a_k(t)-\eta_t\frac{1}{n}\sum_{j\in[n/2]}\tilde{\ell}'\big(y_j f(\boldsymbol{x}_j;\boldsymbol{\theta}(t))\big) \sigma\big(\boldsymbol{b}_k(t)^\top\boldsymbol{x}_j\big) y_j-0
    \\=&a_k(t)-\eta_t\frac{1}{n}\sum_{j\in[n/2]}\tilde{\ell}'\big(y_j f(\boldsymbol{x}_j;\boldsymbol{\theta}(t))\big) \sigma\big(\boldsymbol{b}_k(1)^\top\boldsymbol{x}_j\big)
    \\\geq&a_k(t).
\end{align*}

If $k\in\mathcal{FD}_i(t)$, we have $a_k(t)<0$. Then for any $j\in[n/2]$, we have $a_k(t)y_j < 0$, so $k\in\mathcal{FL}_j(t)\cup\mathcal{FD}_j(t)$. Combining Lemma \ref{lemma: correct neurons remain correct stage I} (S5), we know $k\in\mathcal{FD}_j(t)$, which means $\boldsymbol{b}_k(t)^\top\boldsymbol{x}_j\leq0$. So we have:
\begin{align*}
    -a_k(t+1)
    =&-a_k(t)+\eta_t\frac{1}{n}\sum_{j=1}^n\tilde{\ell}'\big(y_j f(\boldsymbol{x}_j;\boldsymbol{\theta}(t))\big) \sigma\big(\boldsymbol{b}_k(t)^\top\boldsymbol{x}_j\big) y_j
    \\=&-a_k(t)+\eta_t\frac{1}{n}\sum_{j\in[n]-[n/2]}\tilde{\ell}'\big(y_j f(\boldsymbol{x}_j;\boldsymbol{\theta}(t))\big) \sigma\big(\boldsymbol{b}_k(t)^\top\boldsymbol{x}_j\big) y_j-0
    \\=&-a_k(t)-\eta_t\frac{1}{n}\sum_{j\in[n]-[n/2]}\tilde{\ell}'\big(y_j f(\boldsymbol{x}_j;\boldsymbol{\theta}(t))\big) \sigma\big(\boldsymbol{b}_k(1)^\top\boldsymbol{x}_j\big)
    \\\geq&-a_k(t).
\end{align*}

Proof of (S2). 

Let $k\in\mathcal{TL}_i(1)$, we have $a_k(1)>0$ and $\boldsymbol{b}_k(1)^\top\boldsymbol{x}_i>0$. 

We will prove a stronger result than (S2), ``Under the same conditions, we have $\boldsymbol{b}^\top\boldsymbol{x}_i>0$ for any $\boldsymbol{b}\in\overline{\boldsymbol{b}_k(t)\boldsymbol{b}_k(t+1)}$'':

There exists $\alpha\in[0,1]$, s.t. $\boldsymbol{b}=\alpha\boldsymbol{b}_k(t)+(1-\alpha)\boldsymbol{b}_k(t+1)$, and
\begin{align*}
    &\boldsymbol{b}^\top\boldsymbol{x}_i
    \\
    =&\alpha\boldsymbol{b}_k(t)^\top\boldsymbol{x}_i+(1-\alpha)\boldsymbol{b}_k(t+1)^\top\boldsymbol{x}_i
    \\=&\boldsymbol{b}_k(t)^\top\boldsymbol{x}_i-(1-\alpha)\eta_t\frac{1}{n}\sum_{j=1}^n\tilde{\ell}'\big({y}_j f(\boldsymbol{x}_j;\boldsymbol{\theta}(t))
    \big)\mathbb{I}\Big\{\boldsymbol{b}_k(t)^\top\boldsymbol{x}_j>0\Big\}y_j\boldsymbol{x}_j^\top\boldsymbol{x}_i a_k(1)
    \\=&\boldsymbol{b}_k(t)^\top\boldsymbol{x}_i-(1-\alpha)\eta_t\frac{1}{n}\sum_{j=1}^n\tilde{\ell}'\big({y}_j{f}(\boldsymbol{x}_j;\boldsymbol{\theta}(t))
    \big)\mathbb{I}\Big\{\boldsymbol{b}_k(t)^\top\boldsymbol{x}_j>0\Big\}\big(y_j\boldsymbol{x}_j^\top\boldsymbol{x}_i y_i\big) \big(y_i a_k(t)\big)
    \\\geq&\boldsymbol{b}_k(t)^\top\boldsymbol{x}_i.
\end{align*}
By induction, we know $\boldsymbol{b}^\top\boldsymbol{x}_i\geq\boldsymbol{b}_k(1)^\top\boldsymbol{x}_i$. Recalling Lemma \ref{lemma: correct neurons remain correct stage I} (S7), we know $\boldsymbol{b}^\top\boldsymbol{x}_i\geq\boldsymbol{b}_k(1)^\top\boldsymbol{x}_i>0$.
Combining (S1), we have $k\in\mathcal{TL}_i(t+1)$.

Proof of (S3).

Let $k\in\mathcal{FD}_i(1)$, we have $a_k(1)<0$ and $ \boldsymbol{b}_k(1)^\top\boldsymbol{x}_i<0$.

We will prove a stronger result than (S3), ``Under the same conditions, we have $\boldsymbol{b}^\top\boldsymbol{x}_i<0$ for any $\boldsymbol{b}\in\overline{\boldsymbol{b}_k(t)\boldsymbol{b}_k(t+1)}$'':

There exists $\alpha\in[0,1]$, s.t. $\boldsymbol{b}=\alpha\boldsymbol{b}_k(t)+(1-\alpha)\boldsymbol{b}_k(t+1)$, and
\begin{align*}
    &\boldsymbol{b}^\top\boldsymbol{x}_i
    \\
    =&\alpha\boldsymbol{b}_k(t)^\top\boldsymbol{x}_i+(1-\alpha)\boldsymbol{b}_k(t+1)^\top\boldsymbol{x}_i
    \\=&\boldsymbol{b}_k(t)^\top\boldsymbol{x}_i-(1-\alpha)\eta_t\frac{1}{n}\sum_{j=1}^n\tilde{\ell}'\big({y}_j f(\boldsymbol{x}_j;\boldsymbol{\theta}(t))
    \big)\mathbb{I}\Big\{\boldsymbol{b}_k(t)^\top\boldsymbol{x}_j>0\Big\}y_j\boldsymbol{x}_j^\top\boldsymbol{x}_i a_k(1)
    \\=&\boldsymbol{b}_k(t)^\top\boldsymbol{x}_i-(1-\alpha)\eta_t\frac{1}{n}\sum_{j=1}^n\tilde{\ell}'\big({y}_j{f}(\boldsymbol{x}_j;\boldsymbol{\theta}(t))
    \big)\mathbb{I}\Big\{\boldsymbol{b}_k(t)^\top\boldsymbol{x}_j>0\Big\}\big(y_j\boldsymbol{x}_j^\top\boldsymbol{x}_i y_i\big) \big(y_i a_k(t)\big)
    \\\leq&\boldsymbol{b}_k(t)^\top\boldsymbol{x}_i.
\end{align*}
By induction, we know $\boldsymbol{b}^\top\boldsymbol{x}_i\leq\boldsymbol{b}_k(1)^\top\boldsymbol{x}_i$. Recalling Lemma \ref{lemma: correct neurons remain correct stage I} (S7), we know $\boldsymbol{b}^\top\boldsymbol{x}_i\leq\boldsymbol{b}_k(1)^\top\boldsymbol{x}_i<0$.
Combining (S1), we have $k\in\mathcal{FD}_i(t+1)$.

Proof of (S4). (S4) is the corollary of (S1)(S2)(S3).

Proof of (S5). Recalling the stronger results in our proof of (S2)(S3), we have proved (S5).

\end{proof}

The following lemma indicates that GD will always classify samples correctly after the first step.
\begin{lemma}[Correct Classification]\label{lemma: correct classification}
For any $i\in[n]$ and $t\geq 1$, we have:
\begin{align*}
    y_i f(\boldsymbol{x}_i;\boldsymbol{\theta}(t))>0.
\end{align*}

\end{lemma}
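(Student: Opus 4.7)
The plan is to invoke the neuron partition machinery already established in Lemmas \ref{lemma: correct neurons remain correct stage I} and \ref{lemma: correct neurons remain correct stage II} to split the prediction $f(\boldsymbol{x}_i;\boldsymbol{\theta}(t))$ into pieces whose signs we already control. Specifically, for $t \geq 1$ we have the clean partition $[m] \equiv \mathcal{TL}_i(t) \cup \mathcal{FD}_i(t)$ (Lemma \ref{lemma: correct neurons remain correct stage I} (S5) and Lemma \ref{lemma: correct neurons remain correct stage II} (S4)), so the prediction decomposes as
\[
f(\boldsymbol{x}_i;\boldsymbol{\theta}(t)) = \sum_{k\in\mathcal{TL}_i(t)} a_k(t)\,\sigma(\boldsymbol{b}_k(t)^\top\boldsymbol{x}_i) + \sum_{k\in\mathcal{FD}_i(t)} a_k(t)\,\sigma(\boldsymbol{b}_k(t)^\top\boldsymbol{x}_i).
\]

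Next, I would observe that the false-dead contribution vanishes identically: by definition $k\in\mathcal{FD}_i(t)$ means $\boldsymbol{b}_k(t)^\top\boldsymbol{x}_i \leq 0$, so $\sigma(\boldsymbol{b}_k(t)^\top\boldsymbol{x}_i) = 0$. For the remaining true-living part, by definition $k\in\mathcal{TL}_i(t)$ means $y_i a_k(t) > 0$ and $\boldsymbol{b}_k(t)^\top\boldsymbol{x}_i > 0$ (strictly), hence $y_i a_k(t)\,\sigma(\boldsymbol{b}_k(t)^\top\boldsymbol{x}_i) > 0$ term by term. Multiplying through by $y_i$ then gives
\[
y_i f(\boldsymbol{x}_i;\boldsymbol{\theta}(t)) = \sum_{k\in\mathcal{TL}_i(t)} y_i a_k(t)\,\sigma(\boldsymbol{b}_k(t)^\top\boldsymbol{x}_i) \geq 0,
\]
with strict inequality as soon as $\mathcal{TL}_i(t)$ is nonempty.

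The only remaining point is to guarantee $\mathcal{TL}_i(t) \neq \varnothing$. For this I would chain the inclusions: by Lemma \ref{lemma: correct neurons remain correct stage I} (S1)(S3), $\mathcal{TL}_i(0)\cup\mathcal{TD}_i(0) \subseteq \mathcal{TL}_i(1)$, and by Lemma \ref{lemma: correct neurons remain correct stage II} (S2), $\mathcal{TL}_i(1) \subseteq \mathcal{TL}_i(t)$ for all $t\geq 1$. Hence $\mathcal{TL}_i(t)$ contains every neuron $k$ with $y_i a_k(0) > 0$, and the estimate of initial neuron partition (Lemma \ref{lemma: estimate of the number of initial neural partition global}) shows this set has cardinality at least $\frac{m}{4}-\mathcal{O}(\sqrt{m\log(n/\delta)}) > 0$ with probability $\geq 1-\delta$ under the width assumption $m = \Omega(\log(n/\delta))$. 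Since there are no real calculations to carry out, no step is a serious obstacle; the proof is essentially a direct bookkeeping exercise on top of the partition lemmas already proved.
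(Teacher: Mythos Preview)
Your proposal is correct and follows essentially the same approach as the paper: use the partition $[m]=\mathcal{TL}_i(t)\cup\mathcal{FD}_i(t)$ for $t\geq 1$, kill the $\mathcal{FD}_i(t)$ contribution via the ReLU, and observe that every surviving term $y_i a_k(t)\,\sigma(\boldsymbol{b}_k(t)^\top\boldsymbol{x}_i)$ is strictly positive. The only cosmetic difference is that the paper additionally lower-bounds the sum by its value at $t=1$ (using the monotonicity in Lemma \ref{lemma: correct neurons remain correct stage II} (S1)(S2)), whereas you argue strict positivity directly at time $t$ and are more explicit than the paper about why $\mathcal{TL}_i(t)\neq\varnothing$.
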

\begin{proof}[Proof of Lemma \ref{lemma: correct classification}]\ \\
From Lemma \ref{lemma: correct neurons remain correct stage I} (S5) and \ref{lemma: correct neurons remain correct stage II} (S5), for any $t\geq 1$ we have:
\begin{align*}
    y_i f(\boldsymbol{x}_i;\boldsymbol{\theta}(t))=&y_i\sum_{k=1}^m a_k(t)\sigma\big(\boldsymbol{b}_k(t)^\top\boldsymbol{x}_i\big)=\sum_{k=1}^m \mathbb{I}\big\{\boldsymbol{b}_k(t)^\top\boldsymbol{x}_i>0\big\}\boldsymbol{b}_k(t)^\top\boldsymbol{x}_i a_k(t)y_i
    \\=&
    \sum_{k\in\mathcal{TL}_i(t)\cup\mathcal{FL}_i(t)} \mathbb{I}\big\{\boldsymbol{b}_k(t)^\top\boldsymbol{x}_i>0\big\}\boldsymbol{b}_k(t)^\top\boldsymbol{x}_i a_k(t)y_i
    \\=&
    \sum_{k\in\mathcal{TL}_i(t)} \sigma\big(\boldsymbol{b}_k(t)^\top\boldsymbol{x}_i\big) a_k(t)y_i
    \\\geq&
        \sum_{k\in\mathcal{TL}_i(t)} \sigma\big(\boldsymbol{b}_k(1)^\top\boldsymbol{x}_i\big) a_k(1)y_i
    >0.
\end{align*}

\end{proof}

\subsection{Gradient lower bound}\label{subsection: gradient lower bound binary}
\begin{remark}\rm
The results in this section will be discussed and proved when the events in Lemma \ref{lemma: estimate of the number of initial neural partition global} and \ref{lemma: initial norm and prediction global} happened. So all the results below hold with probability at least $1-\delta-2me^{-2d}$.
\end{remark}

The following lemma is a crucial lemma about the lower bound of Gram matrix, which can be proved by our analysis about neuron partition below.

\begin{lemma}[Gram matrix estimate]\label{lemma: lower bound of Gram binary} Define the Gram matrix $\mathbf{G}(t)\in\mathbb{R}^{n\times n}$ at $t$ as
\[\mathbf{G}(t)=\Big(\nabla f(\boldsymbol{x}_i;\boldsymbol{\theta}(t))^\top\nabla f(\boldsymbol{x}_j;\boldsymbol{\theta}(t))\Big)_{(i,j)\in[n]\times[n]}.\]

Then for any $i,j\in[\frac{n}{2}]$ and $t\geq 1$, we have
\[
\mathbf{G}(t)=\begin{pmatrix}
  \mathbf{G}_{+}(t) & \mathbf{0}_{\frac{n}{2}\times\frac{n}{2}}\\\mathbf{0}_{\frac{n}{2}\times\frac{n}{2}}&\mathbf{G}_{-}(t)
\end{pmatrix},
\]
And if $\eta_0\leq1/3$, we have
\begin{gather*}
\text{G}_{+}(t)_{i,j},\ \text{G}_{-}(t)_{i,j}\geq \Big(\frac{1}{2}-
\sqrt{\frac{8\log(n^2/\delta)}{m}}\Big)\frac{\boldsymbol{x}_i^\top\boldsymbol{x}_j}{2}.
\end{gather*}
\end{lemma}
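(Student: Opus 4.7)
The plan is to reduce the Gram matrix entry to a sum over the intersection $\mathcal{TL}_i(t)\cap\mathcal{TL}_j(t)$ of true-living neurons, then use the neuron-partition dynamics (Lemmas~\ref{lemma: correct neurons remain correct stage I}, \ref{lemma: correct neurons remain correct stage II}) together with the initial partition estimate (Lemma~\ref{lemma: estimate of the number of initial neural partition global}) and the lower bound on $|a_k(t)|$ from Lemma~\ref{lemma: parameter estimate first step}.

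First I would write out
\[
 \nabla f(\boldsymbol{x}_i;\boldsymbol{\theta}(t))^\top\nabla f(\boldsymbol{x}_j;\boldsymbol{\theta}(t))
 =\sum_{k=1}^m \sigma(\boldsymbol{b}_k(t)^\top\boldsymbol{x}_i)\sigma(\boldsymbol{b}_k(t)^\top\boldsymbol{x}_j)
 +\sum_{k=1}^m a_k^2(t)\boldsymbol{x}_i^\top\boldsymbol{x}_j\,\mathbb{I}\{\boldsymbol{b}_k(t)^\top\boldsymbol{x}_i>0\}\mathbb{I}\{\boldsymbol{b}_k(t)^\top\boldsymbol{x}_j>0\}.
\]
For the block structure, fix $i\in[n/2]$ and $j\in[n]-[n/2]$, so $y_iy_j=-1$. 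By Lemma~\ref{lemma: correct neurons remain correct stage II} (S4), at every $t\geq 1$ we have $[m]=\mathcal{TL}_i(t)\cup\mathcal{FD}_i(t)$ and the same for $j$. Hence $\boldsymbol{b}_k(t)^\top\boldsymbol{x}_i>0$ forces $k\in\mathcal{TL}_i(t)$ (i.e.\ $y_ia_k(t)>0$) and likewise $\boldsymbol{b}_k(t)^\top\boldsymbol{x}_j>0$ forces $y_ja_k(t)>0$; these contradict $y_iy_j=-1$. Thus both sums above are empty, giving the claimed block structure.

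For the diagonal entries, take $i,j\in[n/2]$. Both sums are non-negative (using Assumption~\ref{ass: data separation}(i), which gives $\boldsymbol{x}_i^\top\boldsymbol{x}_j\geq 0$), so I keep only the second and restrict $k$ to $\mathcal{TL}_i(t)\cap\mathcal{TL}_j(t)$:
\[
 \mathrm{G}_+(t)_{i,j}\;\geq\;\boldsymbol{x}_i^\top\boldsymbol{x}_j\,\sum_{k\in\mathcal{TL}_i(t)\cap\mathcal{TL}_j(t)}a_k^2(t).
\]
By Lemma~\ref{lemma: correct neurons remain correct stage II} (S2), $\mathcal{TL}_i(1)\subset\mathcal{TL}_i(t)$ for all $t\geq 1$, and by Lemma~\ref{lemma: correct neurons remain correct stage I} (S6), $\mathcal{TL}_i(1)=\mathcal{TL}_i(0)\cup\mathcal{TD}_i(0)=\{k:y_ia_k(0)>0\}$. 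Since $y_i=y_j$, this intersection equals the full set $\{k:y_ia_k(0)>0\}$, whose cardinality concentrates around $m/2$. Applying Hoeffding's inequality together with a union bound over the $n^2$ pairs (in the same spirit as Lemma~\ref{lemma: estimate of the number of initial neural partition global}) yields, with probability $\geq 1-\delta$,
\[
 \bigl|\mathcal{TL}_i(t)\cap\mathcal{TL}_j(t)\bigr|\;\geq\;m\Big(\tfrac{1}{2}-\sqrt{\tfrac{8\log(n^2/\delta)}{m}}\Big).
\]
For the pointwise lower bound on $a_k^2(t)$, Lemma~\ref{lemma: parameter estimate first step} gives $|a_k(1)|\geq (1-2\eta_0^2/n)/\sqrt{m}$, and Lemma~\ref{lemma: correct neurons remain correct stage II} (S1) says $|a_k(t)|$ is non-decreasing for $t\geq 1$; thus $a_k^2(t)\geq (1-2\eta_0^2/n)^2/m$. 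When $\eta_0\leq 1/3$ and $n\geq 2$ this is $\geq 1/(2m)$, which combines with the cardinality estimate to give the desired bound $\bigl(\tfrac{1}{2}-\sqrt{8\log(n^2/\delta)/m}\bigr)\boldsymbol{x}_i^\top\boldsymbol{x}_j/2$; $\mathrm{G}_-(t)_{i,j}$ is symmetric.

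The main obstacle I anticipate is not the algebra but verifying that the four estimates line up with the correct constants: the $1/2$ factor in the conclusion forces the pointwise bound on $a_k^2(t)$ to dominate $1/(2m)$, which in turn dictates the constraint $\eta_0\leq 1/3$ through Lemma~\ref{lemma: parameter estimate first step}. Keeping the constants tight here is the delicate part, since any looser bound on $|a_k(1)|$ would propagate to a smaller $V$ and weaken the convergence rate in Theorems~\ref{thm: binary global GD} and~\ref{thm: binary global GD linear}.
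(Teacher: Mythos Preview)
Your proposal is correct and follows essentially the same route as the paper's proof: write out the Gram entry, use the neuron-partition dichotomy $[m]=\mathcal{TL}_i(t)\cup\mathcal{FD}_i(t)$ to get the block structure, drop the nonnegative $\sigma\sigma$ term, and lower-bound $\sum_{k\in\mathcal{TL}_i(t)\cap\mathcal{TL}_j(t)}a_k^2(t)$ via the initial partition estimate and the monotonicity $|a_k(t)|\geq|a_k(1)|\geq(1-2\eta_0^2/n)/\sqrt{m}$.

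The one genuine difference is how you count $\big|\mathcal{TL}_i(t)\cap\mathcal{TL}_j(t)\big|$. The paper decomposes $\big(\mathcal{TL}_i(0)\cup\mathcal{TD}_i(0)\big)\cap\big(\mathcal{TL}_j(0)\cup\mathcal{TD}_j(0)\big)$ into four pieces and invokes Lemma~\ref{lemma: estimate of the number of initial neural partition global} on each, summing four $\arccos$-dependent centers with four Hoeffding deviations to obtain an intermediate bound $m\big(\tfrac{\pi-\arccos(\boldsymbol{x}_i^\top\boldsymbol{x}_j)}{\pi}-\sqrt{8\log(n^2/\delta)/m}\big)$, which is then relaxed to $m(\tfrac{1}{2}-\sqrt{\cdots})$. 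You instead observe directly that for $y_i=y_j$ the intersection equals $\{k:y_ia_k(0)>0\}$, whose cardinality concentrates at $m/2$ by a single Hoeffding bound. Your route is shorter and sidesteps the $\arccos$ bookkeeping; the only care needed is to ensure your concentration event is covered by the same $1-\delta$ budget already spent in Lemma~\ref{lemma: estimate of the number of initial neural partition global} (it is, e.g.\ by taking $i=j$ there and summing the $\mathcal{TL}_i(0)$ and $\mathcal{TD}_i(0)$ estimates).
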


\begin{proof}[Proof of Lemma \ref{lemma: lower bound of Gram binary}]\ \\
From the form of two-layer neural networks, We can calculate:
\begin{align*}
    &\nabla f(\boldsymbol{x}_i;\boldsymbol{\theta}(t))^\top\nabla f(\boldsymbol{x}_j;\boldsymbol{\theta}(t))
    \\=&
    \sum_{k=1}^m\Bigg(\left<\frac{\partial f(\boldsymbol{x}_i;\boldsymbol{\theta}(t))}{\partial a_k},\frac{\partial f(\boldsymbol{x}_j;\boldsymbol{\theta}(t))}{\partial a_k}\right>+\left<\frac{\partial f(\boldsymbol{x}_i;\boldsymbol{\theta}(t))}{\partial \boldsymbol{b}_k},\frac{\partial f(\boldsymbol{x}_j;\boldsymbol{\theta}(t))}{\partial \boldsymbol{b}_k}\right>\Bigg)
    \\=&
    \sum_{k=1}^m \Bigg(\sigma\Big(\boldsymbol{b}_k(t)^\top\boldsymbol{x}_i\Big)\sigma\Big(\boldsymbol{b}_k(t)^\top\boldsymbol{x}_j\Big)+
    a_k^2(t)\sigma'\Big(\boldsymbol{b}_k(t)^\top\boldsymbol{x}_i\Big)\sigma'\Big(\boldsymbol{b}_k(t)^\top\boldsymbol{x}_j\Big)\boldsymbol{x}_i^\top\boldsymbol{x}_j\Bigg)
    \\=&
    \sum_{k=1}^m\Big(\boldsymbol{b}_k(t)^\top\boldsymbol{x}_i\boldsymbol{b}_k(0)^\top\boldsymbol{x}_j+a_k^2(t)\boldsymbol{x}_i^\top\boldsymbol{x}_j\Big)\mathbb{I}\Big\{\boldsymbol{b}_k(t)^\top\boldsymbol{x}_i> 0\Big\}\mathbb{I}\Big\{\boldsymbol{b}_k(t)^\top\boldsymbol{x}_j>0\Big\}.
    \end{align*}
    
    For any $i,j$, s.t. $|i-j|\geq \frac{n}{2}$, we know $y_iy_j=-1$. From Lemma \ref{lemma: correct neurons remain correct stage I} (S5) and \ref{lemma: correct neurons remain correct stage II} (S5), we know
    \begin{gather*}
    \boldsymbol{b}_k(t)^\top\boldsymbol{x}_i> 0
    \Rightarrow k\in\mathcal{TL}_i(t)\Rightarrow a_k(t)y_i>0,\\
     \boldsymbol{b}_k(t)^\top\boldsymbol{x}_j\geq 0
    \Rightarrow k\in\mathcal{TL}_j(t)\Rightarrow a_k(t)y_j>0,
    \end{gather*}
    which implies that:
    \[
    \Big\{k\in[m]:\boldsymbol{b}_k(t)^\top\boldsymbol{x}_i> 0,\boldsymbol{b}_k(t)^\top\boldsymbol{x}_j> 0\Big\}=\varnothing.
    \]
    So we have:
\[
\mathbf{G}(t)=\begin{pmatrix}
  \mathbf{G}_{+}(t) & \mathbf{0}_{\frac{n}{2}\times\frac{n}{2}}\\\mathbf{0}_{\frac{n}{2}\times\frac{n}{2}}&\mathbf{G}_{-}(t)
\end{pmatrix},\  \forall t\geq 1.
\]

For any $i,j\in[\frac{n}{2}]$, we only need to consider the lower bound of $\text{G}_{+}(t)_{i,j}$. ( $\text{G}_{-}(t)_{i,j}$ is similar.)
    With the help of Lemma \ref{lemma: correct neurons remain correct stage II} (S1)(S2)(S5), Lemma \ref{lemma: correct neurons remain correct stage I} (S1) and Lemma \ref{lemma: estimate of the number of initial neural partition global}, we have the following estimate for any $t
    \geq1$:
    \begin{align*}
    &\nabla f(\boldsymbol{x}_i;\boldsymbol{\theta}(t))^\top\nabla f(\boldsymbol{x}_j;\boldsymbol{\theta}(t))
    \\\geq
    &
    \sum_{k=1}^m a_k^2(t)\boldsymbol{x}_i^\top\boldsymbol{x}_j\mathbb{I}\Big\{\boldsymbol{b}_k(t)^\top\boldsymbol{x}_i> 0\Big\}\mathbb{I}\Big\{\boldsymbol{b}_k(t)^\top\boldsymbol{x}_j>0\Big\}
    \\=&
    \sum_{k\in\mathcal{TL}_i(t)\cap\mathcal{TL}_j(t)}a_k^2(t)\boldsymbol{x}_i^\top\boldsymbol{x}_j
    \\=&
    \sum_{k\in\big(\mathcal{TL}_i(0)\cup\mathcal{TD}_i(0)\big)\bigcap\big(\mathcal{TL}_j(0)\cup\mathcal{TD}_j(0)\big)}a_k^2(t)\boldsymbol{x}_i^\top\boldsymbol{x}_j
     \\=&
    \sum_{k\in\big(\mathcal{TL}_i(0)\cup\mathcal{TD}_i(0)\big)\bigcap\big(\mathcal{TL}_j(0)\cup\mathcal{TD}_j(0)\big)}a_k^2(0)\boldsymbol{x}_i^\top\boldsymbol{x}_j
    \\\geq & m\Big(\frac{\pi-\arccos(\boldsymbol{x}_i^\top\boldsymbol{x}_j)}{\pi}-\sqrt{\frac{8\log(n^2/\delta)}{m}}\Big)\frac{(1-2\eta_0^2/n)^2}{m}\boldsymbol{x}_i^\top\boldsymbol{x}_j
    \\\geq&\Big(\frac{\pi-\arccos(\boldsymbol{x}_i^\top\boldsymbol{x}_j)}{\pi}-\sqrt{\frac{8\log(n^2/\delta)}{m}}\Big)\frac{\boldsymbol{x}_i^\top\boldsymbol{x}_j}{2}
    \\\geq&
    \Big(\frac{1}{2}-\sqrt{\frac{8\log(n^2/\delta)}{m}}\Big)\frac{\boldsymbol{x}_i^\top\boldsymbol{x}_j}{2}>0.
\end{align*}

\end{proof}

\begin{lemma}[Gradient lower bound]\label{lemma: gradient lower bound global GD}
Under the same condition of Lemma \ref{lemma: lower bound of Gram binary}, we have the following gradient lower bound for any $ t\geq1$:
\begin{align*}
&
\left\|\nabla \mathcal{L}(\boldsymbol{\theta}(t))\right\|^2
\\\geq&\frac{1}{16}\Big(\frac{1}{2}-
\sqrt{\frac{8\log(n^2/\delta)}{m}}\Big)\max\Big\{\frac{2}{n}+\frac{n-2}{n}\gamma,\lambda_{\min}(\mathbf{X}_{+}^\top\mathbf{X}_+)\wedge\lambda_{\min}(\mathbf{X}_{-}^\top\mathbf{X}_-)\Big\}\mathcal{L}(\boldsymbol{\theta}(t))^2.
\end{align*}

where $\mathbf{X}_+:=(\boldsymbol{x}_1,\cdots,\boldsymbol{x}_\frac{n}{2})$, $\mathbf{X}_-:=(\boldsymbol{x}_{\frac{n}{2}+1},\cdots,\boldsymbol{x}_n)$ and $\gamma=\min\limits_{i,j\text{ in the same class}}\boldsymbol{x}_i^\top\boldsymbol{x}_j$.
\end{lemma}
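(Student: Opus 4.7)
\textbf{Proof plan for Lemma \ref{lemma: gradient lower bound global GD}.}
The first step is to expand the gradient and exploit the block structure already established. Writing $\ell_i := \tilde{\ell}(y_i f(\boldsymbol{x}_i;\boldsymbol{\theta}(t)))$ and using
\[
\nabla \mathcal{L}(\boldsymbol{\theta}(t)) = \frac{1}{n}\sum_{i=1}^n \tilde{\ell}'(y_i f_i)\, y_i\, \nabla f(\boldsymbol{x}_i;\boldsymbol{\theta}(t)),
\]
I get
\[
\|\nabla \mathcal{L}(\boldsymbol{\theta}(t))\|^2 = \frac{1}{n^2}\sum_{i,j=1}^n \tilde{\ell}'(y_i f_i)\tilde{\ell}'(y_j f_j)\, y_i y_j\, \text{G}(t)_{i,j}.
\]
Because Lemma \ref{lemma: lower bound of Gram binary} tells us $\mathbf{G}(t)$ is block-diagonal with blocks $\mathbf{G}_+(t),\mathbf{G}_-(t)$ on the two classes, the cross-class terms vanish, and $y_i y_j = +1$ on each surviving block. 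So the sum cleanly splits into the two class blocks, each an inner product of positive quantities with a positive-semidefinite matrix.

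Next I invoke Lemma \ref{lemma: correct classification} (for $t\geq 1$, $y_i f_i > 0$) together with Assumption \ref{def: exponential type loss}(iv), which gives $-\tilde{\ell}'(y_i f_i) \geq g_a\,\ell_i$ (here $g_a = 1/2$). Combined with the Gram lower bound $G_{\pm,i,j}(t) \geq c\, \boldsymbol{x}_i^\top\boldsymbol{x}_j$ with $c := \tfrac{1}{2}\bigl(\tfrac{1}{2}-\sqrt{8\log(n^2/\delta)/m}\bigr)$ from Lemma \ref{lemma: lower bound of Gram binary}, this yields
\[
\|\nabla \mathcal{L}(\boldsymbol{\theta}(t))\|^2 \;\geq\; \frac{g_a^2 c}{n^2}\Big(\boldsymbol{\ell}_+^\top \mathbf{X}_+^\top\mathbf{X}_+ \boldsymbol{\ell}_+ + \boldsymbol{\ell}_-^\top \mathbf{X}_-^\top\mathbf{X}_- \boldsymbol{\ell}_-\Big),
\]
where $\boldsymbol{\ell}_+ := (\ell_1,\ldots,\ell_{n/2})^\top$ and $\boldsymbol{\ell}_-$ is the analog for the second class. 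It is crucial here that every sign on the right-hand side is correct so that the Gram lower bound can be substituted directly; this is where Lemma \ref{lemma: correct classification} does its work.

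The final step is to lower bound the two quadratic forms $\boldsymbol{\ell}_\pm^\top\mathbf{X}_\pm^\top\mathbf{X}_\pm \boldsymbol{\ell}_\pm$ in two different ways and take the larger. \emph{First bound (spectral):} use $\boldsymbol{\ell}_\pm^\top\mathbf{X}_\pm^\top\mathbf{X}_\pm \boldsymbol{\ell}_\pm \geq \lambda_{\min}(\mathbf{X}_\pm^\top\mathbf{X}_\pm)\|\boldsymbol{\ell}_\pm\|^2$, then $\|\boldsymbol{\ell}_+\|^2+\|\boldsymbol{\ell}_-\|^2 = \|\boldsymbol{\ell}\|^2$ and Cauchy–Schwarz $\|\boldsymbol{\ell}\|^2 \geq \frac{1}{n}(\sum_i \ell_i)^2 = n\,\mathcal{L}(\boldsymbol{\theta}(t))^2$. \emph{Second bound (entrywise):} split the sum into diagonal and off-diagonal contributions using $\boldsymbol{x}_i^\top\boldsymbol{x}_i=\|\boldsymbol{x}_i\|^2$ and $\boldsymbol{x}_i^\top\boldsymbol{x}_j\geq\gamma$ for $i\neq j$ in the same class, which gives
\[
\boldsymbol{\ell}_\pm^\top\mathbf{X}_\pm^\top\mathbf{X}_\pm \boldsymbol{\ell}_\pm \;\geq\; (1-\gamma)\|\boldsymbol{\ell}_\pm\|^2 + \gamma\Bigl(\textstyle\sum \ell_i\Bigr)^2 \;\geq\; \Bigl(\tfrac{2}{n}+\tfrac{n-2}{n}\gamma\Bigr)\Bigl(\textstyle\sum \ell_i\Bigr)^2,
\]
after applying $\|\boldsymbol{\ell}_\pm\|^2 \geq \tfrac{2}{n}(\sum \ell_i)^2$. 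Summing the two classes and using $(\sum_{[n/2]}\ell_i)^2+(\sum_{[n]-[n/2]}\ell_i)^2 \geq \tfrac{1}{2}(\sum_i \ell_i)^2 = \tfrac{n^2}{2}\mathcal{L}^2$, and plugging $g_a^2 c = \tfrac{1}{8}(\tfrac{1}{2}-\sqrt{8\log(n^2/\delta)/m})$, delivers the stated bound with the $\max\{\cdot,\cdot\}$ obtained by keeping the better of the two lower bounds. The only delicate point is the second bound, which tacitly uses $\|\boldsymbol{x}_i\|^2 \geq \gamma$ (which holds by the definition of $\gamma$ as a minimum over same-class pairs, allowing $i=j$), so there is no circularity.
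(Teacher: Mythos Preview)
Your overall strategy matches the paper's exactly: expand $\|\nabla\mathcal L\|^2$ via the Gram matrix, use the block structure of Lemma~\ref{lemma: lower bound of Gram binary}, invoke Lemma~\ref{lemma: correct classification} so that $-\tilde\ell'(y_if_i)\ge \tfrac12\ell_i$, and then produce two separate lower bounds (entrywise and spectral) whose better one gives the $\max$. The entrywise branch is essentially the paper's argument verbatim.

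There are, however, two concrete issues. First, in the spectral branch your chain $\boldsymbol\ell_\pm^\top\mathbf X_\pm^\top\mathbf X_\pm\boldsymbol\ell_\pm\ge\lambda_{\min}\|\boldsymbol\ell_\pm\|^2$ followed by $\|\boldsymbol\ell\|^2\ge\tfrac1n(\sum_i\ell_i)^2$ is valid, but it lands you at $\tfrac{1}{8n}\bigl(\tfrac12-\sqrt{8\log(n^2/\delta)/m}\bigr)\lambda_{\min}\,\mathcal L^2$, i.e.\ a factor $2/n$ short of the stated $\tfrac{1}{16}$ constant. The paper instead passes directly from the quadratic form to $\lambda_{\min}(\mathbf X_\pm^\top\mathbf X_\pm)\bigl(\sum_{i}\ell_i\bigr)^2$ and then uses $(\sum_{[n/2]})^2+(\sum_{[n]-[n/2]})^2\ge\tfrac12(\sum_{[n]})^2$; you should follow that route (or else you only prove a weaker lemma). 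Note that for nonnegative $\ell_i$ one has $\|\boldsymbol\ell_\pm\|^2\le(\sum\ell_i)^2$, so the Rayleigh bound alone does not give the paper's step --- the paper's jump is unexplained, and your rigorous route is genuinely weaker; to recover the claimed constant you would need an additional argument or to accept the paper's step as written.

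Second, in the entrywise branch your displayed inequality $\boldsymbol\ell_\pm^\top\mathbf X_\pm^\top\mathbf X_\pm\boldsymbol\ell_\pm\ge(1-\gamma)\|\boldsymbol\ell_\pm\|^2+\gamma(\sum\ell_i)^2$ requires $\|\boldsymbol x_i\|^2\ge 1$ on the diagonal, not merely $\|\boldsymbol x_i\|^2\ge\gamma$ as you say at the end. With only $\|\boldsymbol x_i\|^2\ge\gamma$ the diagonal contributes $\sum_i\ell_i^2(\|\boldsymbol x_i\|^2-\gamma)\ge 0$, which kills the $(1-\gamma)\|\boldsymbol\ell_\pm\|^2$ term and leaves just $\gamma(\sum\ell_i)^2$. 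The paper's own computation writes $\sum_i l_i^2$ without the $\|\boldsymbol x_i\|^2$ factor, so it is effectively taking $\|\boldsymbol x_i\|=1$; your argument should make that normalization explicit rather than appeal to $\|\boldsymbol x_i\|^2\ge\gamma$.
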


\begin{proof}[Proof of Lemma \ref{lemma: gradient lower bound global GD}]\ \\
From Lemma \ref{lemma: correct classification}, we have $y_i f(\boldsymbol{x}_i;\boldsymbol{\theta}(t))\geq0$ for any $t\geq 1$ and $i\in[n]$, so we have:
\begin{align*}
    \tilde{\ell}'(y_i f(\boldsymbol{x}_i;\boldsymbol{\theta}(t)))\leq
    -\frac{1}{2}\tilde{\ell}(y_i f(\boldsymbol{x}_i;\boldsymbol{\theta}(t))).
\end{align*}

Combining Lemma \ref{lemma: lower bound of Gram binary}, we have:
\begin{align*}
    &\left\|\nabla \mathcal{L}(\boldsymbol{\theta}(t))\right\|^2
    \\=&
    \frac{1}{n^2}\sum_{i\in[n]}\sum_{j\in[n]}\left<\nabla_{\boldsymbol{\theta}} \tilde{\ell}(y_i f(\boldsymbol{x}_i;\boldsymbol{\theta}(t))),\nabla_{\boldsymbol{\theta}} \tilde{\ell}(y_j f(\boldsymbol{x}_j;\boldsymbol{\theta}(t)))\right>
    \\=&
    \frac{1}{n^2}\sum_{i\in[n]}\sum_{j\in[n]}
    \tilde{\ell}'(y_i f(\boldsymbol{x}_i;\boldsymbol{\theta}(t)))\tilde{\ell}'(y_j f(\boldsymbol{x}_j;\boldsymbol{\theta}(t)))\left<\nabla f(\boldsymbol{x}_i;\boldsymbol{\theta}(t)),\nabla f(\boldsymbol{x}_j;\boldsymbol{\theta}(t))\right>y_i y_j\\
    =&\frac{1}{n^2}\sum_{i\in[\frac{n}{2}]}\sum_{j\in[\frac{n}{2}]}\tilde{\ell}'(y_i f(\boldsymbol{x}_i;\boldsymbol{\theta}(t)))\tilde{\ell}'(y_j f(\boldsymbol{x}_j;\boldsymbol{\theta}(t)))\left<\nabla f(\boldsymbol{x}_i;\boldsymbol{\theta}(t)),\nabla f(\boldsymbol{x}_j;\boldsymbol{\theta}(t))\right>y_i y_j
    \\&\quad\quad+\frac{1}{n^2}\sum_{i\in[n]-[\frac{n}{2}]}\sum_{j\in[n]-[\frac{n}{2}]}\tilde{\ell}'(y_i f(\boldsymbol{x}_i;\boldsymbol{\theta}(t)))\tilde{\ell}'(y_j f(\boldsymbol{x}_j;\boldsymbol{\theta}(t)))\left<\nabla f(\boldsymbol{x}_i;\boldsymbol{\theta}(t)),\nabla f(\boldsymbol{x}_j;\boldsymbol{\theta}(t))\right>y_i y_j
    \\{\geq}&\Big(\frac{1}{2}-\sqrt{\frac{8\log(n^2/\delta)}{m}}\Big)\frac{1}{8n^2}\Big(\sum_{i\in[\frac{n}{2}]}\sum_{j\in[\frac{n}{2}]}\tilde{\ell}(y_i f(\boldsymbol{x}_i;\boldsymbol{\theta}(t)))\tilde{\ell}(y_j f(\boldsymbol{x}_j;\boldsymbol{\theta}(t))){\boldsymbol{x}_i^\top\boldsymbol{x}_j}
\\&\quad\quad\quad\quad\quad\quad\quad\quad\quad\quad\quad\quad+\sum_{i\in[n]-[\frac{n}{2}]}\sum_{j\in[n]-[\frac{n}{2}]}\tilde{\ell}(y_i f(\boldsymbol{x}_i;\boldsymbol{\theta}(t)))\tilde{\ell}(y_j f(\boldsymbol{x}_j;\boldsymbol{\theta}(t))){\boldsymbol{x}_i^\top\boldsymbol{x}_j}\Big).
\end{align*}
There are two lower bounds for the RHS of the inequality above. For simplify, we denote $l_i:=\tilde{\ell}(y_i f(\boldsymbol{x}_i;\boldsymbol{\theta}(t)))$

The first lower bound is:
\begin{align*}
    \text{RHS}=&\Big(\frac{1}{2}-\sqrt{\frac{2\log(n^2/\delta)}{m}}\Big)\frac{1}{8n^2}\Big(\sum_{i\in[\frac{n}{2}]}\sum_{j\in[\frac{n}{2}]}l_i l_j{\boldsymbol{x}_i^\top\boldsymbol{x}_j}+\sum_{i\in[n]-[\frac{n}{2}]}\sum_{j\in[n]-[\frac{n}{2}]}l_i l_j{\boldsymbol{x}_i^\top\boldsymbol{x}_j}\Big)
    \\=&
    \Big(\frac{1}{2}-\sqrt{\frac{8\log(n^2/\delta)}{m}}\Big)\frac{1}{8n^2}\Big(\sum_{i\in[n]} l_i^2+\sum_{i,j\in[\frac{n}{2}],\ i\ne j}l_i l_j{\boldsymbol{x}_i^\top\boldsymbol{x}_j}+\sum_{i,j\in[n]-[\frac{n}{2}],\ i\ne j}l_i l_j{\boldsymbol{x}_i^\top\boldsymbol{x}_j}\Big)
    \\\geq&
    \Big(\frac{1}{2}-\sqrt{\frac{8\log(n^2/\delta)}{m}}\Big)\frac{1}{8n^2}\Big(\sum_{i\in[n]} l_i^2+s\sum_{i,j\in[\frac{n}{2}],\ i\ne j}l_i l_j+s\sum_{i,j\in[n]-[\frac{n}{2}],\ i\ne j}l_i l_j\Big)
    \\=&
    \Big(\frac{1}{2}-\sqrt{\frac{8\log(n^2/\delta)}{m}}\Big)\frac{1}{8}\Big(\frac{1-s}{n^2}\sum_{i\in[n]} l_i^2+\frac{s}{n^2}(\sum_{i\in[\frac{n}{2}]}l_i)^2+\frac{s}{n^2}(\sum_{i\in[n]-[\frac{n}{2}]}l_i)^2\Big)
    \\\geq&
    \Big(\frac{1}{2}-\sqrt{\frac{8\log(n^2/\delta)}{m}}\Big)\frac{1}{8}\Big(\frac{1-s}{n^3}(\sum_{i\in[n]} l_i)^2+\frac{s}{2n^2}(\sum_{i\in[n]}l_i)^2\Big)
    \\=&
    \Big(\frac{1}{2}-\sqrt{\frac{8\log(n^2/\delta)}{m}}\Big)\frac{1}{8}\Big(\frac{1-s}{n}+\frac{s}{2}\Big)\mathcal{L}(\boldsymbol{\theta}(t))^2.
\end{align*}

From the definition of $\mathbf{X}_+$ and $\mathbf{X}_-$, we have the second lower bound:
\begin{align*}
    \text{RHS}
    =&\Big(\frac{1}{2}-\sqrt{\frac{8\log(n^2/\delta)}{m}}\Big)\frac{1}{8n^2}\Big(\sum_{i\in[\frac{n}{2}]}\sum_{j\in[\frac{n}{2}]}l_i l_j{\boldsymbol{x}_i^\top\boldsymbol{x}_j}+\sum_{i\in[n]-[\frac{n}{2}]}\sum_{j\in[n]-[\frac{n}{2}]}l_i l_j{\boldsymbol{x}_i^\top\boldsymbol{x}_j}\Big)
    \\=&
    \Big(\frac{1}{2}-\sqrt{\frac{8\log(n^2/\delta)}{m}}\Big)\frac{1}{8n^2}\Big((l_1,\cdots,l_\frac{n}{2})\mathbf{X}_{+}^\top\mathbf{X}_+(l_{1},\cdots,l_{\frac{n}{2}})^\top
    \\&\quad\quad\quad\quad\quad\quad\quad\quad\quad\quad\quad
    +(l_{\frac{n}{2}+1},\cdots,l_{n})\mathbf{X}_{-}^\top\mathbf{X}_-(l_{\frac{n}{2}+1},\cdots,l_{n})^\top\Big)
    \\\geq&
    \Big(\frac{1}{2}-\sqrt{\frac{8\log(n^2/\delta)}{m}}\Big)\frac{1}{8n^2}\Big( \lambda_{\min}(\mathbf{X}_{+}^\top\mathbf{X}_+)(\sum_{i\in[\frac{n}{2}]}l_i)^2+\lambda_{\min}(\mathbf{X}_{-}^\top\mathbf{X}_-)(\sum_{i\in[n]-[\frac{n}{2}]}l_i)^2\Big)
    \\\geq&
    \Big(\frac{1}{2}-\sqrt{\frac{8\log(n^2/\delta)}{m}}\Big)\frac{\lambda_{\min}(\mathbf{X}_{+}^\top\mathbf{X}_+)\wedge\lambda_{\min}(\mathbf{X}_{-}^\top\mathbf{X}_-)}{8n^2}\Big((\sum_{i\in[\frac{n}{2}]}l_i)^2+(\sum_{i\in[n]-[\frac{n}{2}]}l_i)^2\Big)
    \\\geq&
    \Big(\frac{1}{2}-\sqrt{\frac{8\log(n^2/\delta)}{m}}\Big)\frac{\lambda_{\min}(\mathbf{X}_{+}^\top\mathbf{X}_+)\wedge\lambda_{\min}(\mathbf{X}_{-}^\top\mathbf{X}_-)}{16n^2}(\sum_{i\in[n]}l_i)^2
    \\=&
    \Big(\frac{1}{2}-\sqrt{\frac{8\log(n^2/\delta)}{m}}\Big)\frac{\lambda_{\min}(\mathbf{X}_{+}^\top\mathbf{X}_+)\wedge\lambda_{\min}(\mathbf{X}_{-}^\top\mathbf{X}_-)}{16}\mathcal{L}(\boldsymbol{\theta}(t))^2.
\end{align*}

\end{proof}

\subsection{Global convergence at polynomial rate}
\begin{remark}\rm
The results in this section will be discussed and proved when the events in Lemma \ref{lemma: estimate of the number of initial neural partition global} and \ref{lemma: initial norm and prediction global} happened. So all the results below hold with probability at least $1-\delta-2me^{-2d}$.
\end{remark}

\begin{lemma}[Gradient upper bound]\label{lemma: gradient upper bound glocal GD} 
\[
\left\|\nabla \mathcal{L}(\boldsymbol{\theta})\right\|
\leq\left\|\boldsymbol{\theta}\right\|\mathcal{L}(\boldsymbol{\theta}).
\]
\end{lemma}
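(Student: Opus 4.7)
The plan is to bound $\|\nabla\mathcal{L}(\boldsymbol{\theta})\|$ by triangle inequality into a sum of per-sample contributions $|\tilde{\ell}'(y_i f(\boldsymbol{x}_i;\boldsymbol{\theta}))|\cdot\|\nabla f(\boldsymbol{x}_i;\boldsymbol{\theta})\|$, control each factor separately, and recognize the remaining sum as $\mathcal{L}(\boldsymbol{\theta})$ multiplied by $\|\boldsymbol{\theta}\|$. First I would write
\[
\nabla\mathcal{L}(\boldsymbol{\theta})=\frac{1}{n}\sum_{i=1}^{n}\tilde{\ell}'\bigl(y_{i}f(\boldsymbol{x}_{i};\boldsymbol{\theta})\bigr)y_{i}\nabla f(\boldsymbol{x}_{i};\boldsymbol{\theta}),
\]
and then by the triangle inequality bound $\|\nabla\mathcal{L}(\boldsymbol{\theta})\|\le\frac{1}{n}\sum_{i}|\tilde{\ell}'(y_{i}f(\boldsymbol{x}_{i};\boldsymbol{\theta}))|\cdot\|\nabla f(\boldsymbol{x}_{i};\boldsymbol{\theta})\|$.

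Next I would bound the network-gradient norm uniformly in $i$. Writing out the partial derivatives for model \eqref{equ: model 2NN nobias} gives $\partial f/\partial a_{k}=\sigma(\boldsymbol{b}_{k}^{\top}\boldsymbol{x}_{i})$ and $\partial f/\partial\boldsymbol{b}_{k}=a_{k}\sigma'(\boldsymbol{b}_{k}^{\top}\boldsymbol{x}_{i})\boldsymbol{x}_{i}$. Using $\sigma(z)\le|z|$, $|\sigma'|\le 1$, and $\|\boldsymbol{x}_{i}\|\le 1$, I get
\[
\|\nabla f(\boldsymbol{x}_{i};\boldsymbol{\theta})\|^{2}=\sum_{k=1}^{m}\sigma(\boldsymbol{b}_{k}^{\top}\boldsymbol{x}_{i})^{2}+\sum_{k=1}^{m}a_{k}^{2}\sigma'(\boldsymbol{b}_{k}^{\top}\boldsymbol{x}_{i})^{2}\|\boldsymbol{x}_{i}\|^{2}\le\sum_{k=1}^{m}\|\boldsymbol{b}_{k}\|^{2}+\sum_{k=1}^{m}a_{k}^{2}=\|\boldsymbol{\theta}\|^{2},
\]
so $\|\nabla f(\boldsymbol{x}_{i};\boldsymbol{\theta})\|\le\|\boldsymbol{\theta}\|$.

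Finally I would invoke Assumption~\ref{def: exponential type loss}(iii), specialized to $g_{b}=1$ for logistic loss as fixed in this section, which gives $|\tilde{\ell}'(z)|\le\tilde{\ell}(z)$ for every $z\in\mathbb{R}$ (using that $\tilde{\ell}$ is positive). Substituting both bounds yields
\[
\|\nabla\mathcal{L}(\boldsymbol{\theta})\|\le\frac{1}{n}\sum_{i=1}^{n}\tilde{\ell}\bigl(y_{i}f(\boldsymbol{x}_{i};\boldsymbol{\theta})\bigr)\cdot\|\boldsymbol{\theta}\|=\|\boldsymbol{\theta}\|\,\mathcal{L}(\boldsymbol{\theta}),
\]
which is exactly the claim. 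There is no serious obstacle here; the lemma is essentially a bookkeeping computation. The only point that requires care is the norm bound on $\nabla f$: one must remember that the ReLU activation is $1$-Lipschitz through zero so that $|\sigma(\boldsymbol{b}_{k}^{\top}\boldsymbol{x}_{i})|\le\|\boldsymbol{b}_{k}\|$, which is what allows the Frobenius-style split of $\|\nabla f\|^{2}$ into exactly $\sum_{k}\|\boldsymbol{b}_{k}\|^{2}+\sum_{k}a_{k}^{2}=\|\boldsymbol{\theta}\|^{2}$ with no stray constant.
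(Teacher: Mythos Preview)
Your proposal is correct and follows essentially the same approach as the paper: apply the triangle inequality to the gradient expression, use the exponential-type loss property $|\tilde{\ell}'(z)|\le\tilde{\ell}(z)$ (from Assumption~\ref{def: exponential type loss}(iii) with $g_b=1$), and bound $\|\nabla f(\boldsymbol{x}_i;\boldsymbol{\theta})\|\le\|\boldsymbol{\theta}\|$ via the same ReLU Lipschitz/boundedness calculation. The only cosmetic difference is that the paper factors out $\sup_{\|\boldsymbol{x}\|\le 1}\|\nabla f(\boldsymbol{x};\boldsymbol{\theta})\|$ before bounding it, whereas you bound each $\|\nabla f(\boldsymbol{x}_i;\boldsymbol{\theta})\|$ individually; the arithmetic is identical.
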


\begin{proof}[Proof of Lemma \ref{lemma: gradient upper bound glocal GD}]\ \\
From
\begin{align*}
    &\left|\tilde{\ell}'(y_if(\boldsymbol{x}_i;\boldsymbol{\theta}))\right|
    \leq\tilde{\ell}(y_if(\boldsymbol{x}_i;\boldsymbol{\theta})),
\end{align*}
we have
\begin{align*}
    \left\|\nabla \mathcal{L}(\boldsymbol{\theta})\right\|
    =&\left\|\frac{1}{n}\sum_{i=1}^n\tilde{\ell}'(y_if(\boldsymbol{x}_i;\boldsymbol{\theta}))y_i\nabla f(\boldsymbol{x}_i;\boldsymbol{\theta})\right\|
    \\\leq&
    \Big(\frac{1}{n}\sum_{i=1}^n \tilde{\ell}(y_if(\boldsymbol{x}_i;\boldsymbol{\theta}))\Big)\sup_{\left\|\boldsymbol{x}\right\|\leq1}\left\|\nabla f(\boldsymbol{x};\boldsymbol{\theta})\right\|
    \\\leq&
    \mathcal{L}(\boldsymbol{\theta})\sup_{\left\|\boldsymbol{x}\right\|=1}\sqrt{\sum_{k=1}^m\Big(\sigma^2(\boldsymbol{b}_k^\top\boldsymbol{x})+a_k^2\sigma'(\boldsymbol{b}_k^\top\boldsymbol{x})^2\left\|\boldsymbol{x}\right\|^2\Big)}
    \\\leq&
     \mathcal{L}(\boldsymbol{\theta})\sqrt{\sum_{k=1}^m (a_k^2+\left\|\boldsymbol{b}_k\right\|^2)}=\left\|\boldsymbol{\theta}\right\|\mathcal{L}(\boldsymbol{\theta}).
\end{align*}
\end{proof}

\begin{lemma}[Parameter norm estimate]\label{lemma: norm estimate}
If we use the learning rate
\[\eta_t=\frac{c}{t\mathcal{L}(\boldsymbol{\theta}(t))},\ t\geq1,
\]
where $c<1/2$, then
we have the dynamical estimate of parameter norms for any $t\geq 1$:
\[
1-2\eta_0^2/n\leq\left\|\boldsymbol{\theta}(t)\right\|\leq
 \left\|\boldsymbol{\theta}(1)\right\|\sqrt{1+\frac{2ct}{1-2c}}.
\]
\end{lemma}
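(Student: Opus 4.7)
The two bounds are essentially independent; I would prove them separately.

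For the \emph{lower bound}, the idea is that along the GD trajectory for $t\geq 1$ the absolute values of the output-layer weights are non-decreasing, so $\|\boldsymbol\theta(t)\|$ inherits the lower bound already established at $t=1$. Concretely, Lemma~\ref{lemma: parameter estimate first step} gives $a_k(1)\,\text{sgn}(a_k(0))\geq (1-2\eta_0^2/n)/\sqrt{m}$ for every $k\in[m]$, hence $|a_k(1)|\geq (1-2\eta_0^2/n)/\sqrt{m}$ and the signs of $a_k(\cdot)$ are pinned down from $t=1$ onward. Then Lemma~\ref{lemma: correct neurons remain correct stage II} (S1) yields $a_k(t+1)\,\text{sgn}(a_k(t))\geq |a_k(t)|$, so $|a_k(t)|$ is non-decreasing in $t$. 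Summing over $k$:
\[
\|\boldsymbol\theta(t)\|^{2}\ \geq\ \sum_{k=1}^{m} a_k(t)^{2}\ \geq\ m\cdot\frac{(1-2\eta_0^2/n)^{2}}{m}\ =\ (1-2\eta_0^2/n)^{2}.
\]

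For the \emph{upper bound}, the key observation is the $2$-homogeneity of the network \eqref{equ: model 2NN nobias}: since $\sigma'(z)z=\sigma(z)$ for ReLU, Euler's identity gives $\langle\boldsymbol\theta,\nabla f(\boldsymbol x;\boldsymbol\theta)\rangle = 2f(\boldsymbol x;\boldsymbol\theta)$. Hence
\[
\langle\boldsymbol\theta(t),\nabla\mathcal L(\boldsymbol\theta(t))\rangle\ =\ \frac{2}{n}\sum_{i=1}^{n}\tilde\ell'\!\big(y_i f(\boldsymbol x_i;\boldsymbol\theta(t))\big)\big(y_i f(\boldsymbol x_i;\boldsymbol\theta(t))\big).
\]
For $t\geq 1$, Lemma~\ref{lemma: correct classification} gives $y_if(\boldsymbol x_i;\boldsymbol\theta(t))\geq 0$, and $\tilde\ell'(\cdot)\leq 0$ by Assumption~\ref{def: exponential type loss}, so $\langle\boldsymbol\theta(t),\nabla\mathcal L(\boldsymbol\theta(t))\rangle\leq 0$. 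The quadratic expansion of the GD step therefore loses its cross term:
\[
\|\boldsymbol\theta(t+1)\|^{2}\ =\ \|\boldsymbol\theta(t)\|^{2}-2\eta_t\langle\boldsymbol\theta(t),\nabla\mathcal L(\boldsymbol\theta(t))\rangle+\eta_t^{2}\|\nabla\mathcal L(\boldsymbol\theta(t))\|^{2}\ \leq\ \|\boldsymbol\theta(t)\|^{2}+\eta_t^{2}\|\nabla\mathcal L(\boldsymbol\theta(t))\|^{2}.
\]
Combining with the gradient bound $\|\nabla\mathcal L(\boldsymbol\theta)\|\leq\|\boldsymbol\theta\|\mathcal L(\boldsymbol\theta)$ from Lemma~\ref{lemma: gradient upper bound glocal GD} and the schedule $\eta_t\mathcal L(\boldsymbol\theta(t))=c/t$, I get the one-step recursion
\[
\|\boldsymbol\theta(t+1)\|^{2}\ \leq\ \|\boldsymbol\theta(t)\|^{2}\Big(1+\frac{c^{2}}{t^{2}}\Big).
\]

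I would then close the argument by an induction showing $\|\boldsymbol\theta(t)\|^{2}\leq\|\boldsymbol\theta(1)\|^{2}\big(1+\tfrac{2ct}{1-2c}\big)$. The base case $t=1$ is trivial. For the inductive step, after dividing by $\|\boldsymbol\theta(1)\|^{2}$ the required inequality reduces, after expansion, to
\[
\frac{c^{2}}{t^{2}}+\frac{2c^{3}}{t(1-2c)}\ \leq\ \frac{2c}{1-2c},
\]
i.e.\ $c(1-2c)/t^{2}+2c^{2}/t\leq 2$, which holds for every $t\geq 1$ whenever $c<1/2$ since the left-hand side is bounded by $c<1/2$.

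There is no serious obstacle: both bounds rely on results already proved in this section, and the crucial step is recognizing that $2$-homogeneity plus the correct-classification property kills the linear term in the expansion of $\|\boldsymbol\theta(t+1)\|^{2}$. Without this cancellation one would only have the triangle-inequality bound $\|\boldsymbol\theta(t+1)\|\leq\|\boldsymbol\theta(t)\|(1+c/t)$, leading to the weaker growth $\|\boldsymbol\theta(t)\|=\mathcal{O}(t^{c})$, which is worse than the stated $\mathcal{O}(\sqrt{t})$ whenever $c\in(1/4,1/2)$.
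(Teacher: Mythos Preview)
Your lower-bound argument is correct and matches the paper's exactly.

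The upper-bound argument, however, contains a sign error that breaks it. You correctly compute $\langle\boldsymbol\theta(t),\nabla\mathcal L(\boldsymbol\theta(t))\rangle\leq 0$ from homogeneity and correct classification. But then in the identity
\[
\|\boldsymbol\theta(t+1)\|^{2}=\|\boldsymbol\theta(t)\|^{2}-2\eta_t\langle\boldsymbol\theta(t),\nabla\mathcal L(\boldsymbol\theta(t))\rangle+\eta_t^{2}\|\nabla\mathcal L(\boldsymbol\theta(t))\|^{2},
\]
the cross term $-2\eta_t\langle\cdot,\cdot\rangle$ is \emph{non-negative}, not non-positive. So you cannot drop it to obtain an upper bound; the inequality goes the wrong way. (In fact this observation gives an alternative proof of the \emph{lower} bound, since it shows $\|\boldsymbol\theta(t)\|$ is non-decreasing for $t\geq 1$.) Consequently the recursion $\|\boldsymbol\theta(t+1)\|^{2}\leq\|\boldsymbol\theta(t)\|^{2}(1+c^{2}/t^{2})$ is unjustified.

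The paper proceeds precisely via the ``fallback'' you dismiss: the triangle inequality together with Lemma~\ref{lemma: gradient upper bound glocal GD} gives
\[
\|\boldsymbol\theta(t+1)\|\leq\|\boldsymbol\theta(t)\|\bigl(1+\eta_t\mathcal L(\boldsymbol\theta(t))\bigr)=\|\boldsymbol\theta(t)\|\Bigl(1+\frac{c}{t}\Bigr),
\]
and the paper then verifies by a direct induction that $(1+c/t)\sqrt{1+P(t-1)}\leq\sqrt{1+Pt}$ with $P=2c/(1-2c)$, via the algebraic identity $\sqrt{1+Pt}-\sqrt{1+P(t-1)}=P/(\sqrt{1+Pt}+\sqrt{1+P(t-1)})$. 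Your closing remark is also mistaken: the telescoped product $\prod_{s=1}^{t-1}(1+c/s)=\mathcal O(t^{c})$ is \emph{tighter} than $\mathcal O(\sqrt t)$ for every $c<1/2$, not worse; the paper simply records the bound in the specific form $\sqrt{1+Pt}$ needed downstream.
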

\begin{proof}[Proof of Lemma \ref{lemma: norm estimate}]\ \\
\textbf{Step I.} Proof of the second $\leq$.

For simplify, we denote $P=\frac{2c}{1-2c}$.

For $t=1$, it holds naturally. 

Assume it holds for $s\leq t$, then for $s=t+1$ we have:
\begin{align*}
    \left\|\boldsymbol{\theta}(t+1)\right\|\leq&\left\|\boldsymbol{\theta}(t)\right\|+\eta_t\left\|\nabla\mathcal{L}(\boldsymbol{\theta}(t))\right\|
    \\\overset{\text{Lemma \ref{lemma: gradient upper bound glocal GD}}}{\leq}&
    \left\|\boldsymbol{\theta}(t)\right\|+\eta_t\left\|\boldsymbol{\theta}(t)\right\|\mathcal{L}(\boldsymbol{\theta}(t))
    \\\leq&
    \left\|\boldsymbol{\theta}(1)\right\|\sqrt{1+P(t-1)}\Big(1+\frac{c}{t}\Big)
    \\=&
    \left\|\boldsymbol{\theta}(1)\right\|
    \Big(\sqrt{1+Pt}-\sqrt{1+Pt}+\sqrt{1+P(t-1)}+\frac{c\sqrt{1+P(t-1)}}{t}\Big)
        \\=&
    \left\|\boldsymbol{\theta}(1)\right\|\sqrt{1+Pt}+\left\|\boldsymbol{\theta}(1)\right\|
    \Big(\sqrt{1+P(t-1)}-\sqrt{1+Pt}+\frac{c\sqrt{1+P(t-1)}}{t}\Big)
    \\=&
        \left\|\boldsymbol{\theta}(1)\right\|\sqrt{1+Pt}+\left\|\boldsymbol{\theta}(1)\right\|
    \Big(\frac{c\sqrt{1+P(t-1)}}{t}-\frac{P}{\sqrt{1+P(t-1)}+\sqrt{1+Pt}}\Big)
        \\\leq&
        \left\|\boldsymbol{\theta}(1)\right\|\sqrt{1+Pt}+\left\|\boldsymbol{\theta}(1)\right\|
    \Big(\frac{c\sqrt{1+Pt}}{t}-\frac{P}{2\sqrt{1+Pt}}\Big)
            \\=&
        \left\|\boldsymbol{\theta}(1)\right\|\sqrt{1+Pt}+\left\|\boldsymbol{\theta}(1)\right\|
    \Big(\frac{2c+(2c-1)Pt}{2t\sqrt{1+Pt}}\Big)
    \\\leq&
        \left\|\boldsymbol{\theta}(1)\right\|\sqrt{1+Pt}+\left\|\boldsymbol{\theta}(1)\right\|
    \Big(\frac{2c+(2c-1)P}{2t\sqrt{1+Pt}}\Big)\leq \left\|\boldsymbol{\theta}(1)\right\|\sqrt{1+Pt}.
\end{align*}

By induction, we have proved the second $\leq$.

\textbf{Step II.} Proof of the first $\leq$.

From Lemma \ref{lemma: correct neurons remain correct stage II} (S1) and Lemma \ref{lemma: parameter estimate first step}, for any $t\geq 1$ we have:
\begin{align*}
    \left\|\boldsymbol{\theta}(t)\right\|^2\geq\sum_{k=1}^m a_k(t)^2\geq\sum_{k=1}^m a_k(1)^2\geq m \frac{(1-2\eta_0^2/n)^2}{m}=(1-2\eta_0^2/n)^2.
\end{align*}

\end{proof}

\begin{lemma}[Hessian upper bound]\label{lemma: Hessian upper bound global GD} For any $\boldsymbol{\theta}=(a_1,\cdots,a_m,\boldsymbol{b}_1^\top,\cdots,\boldsymbol{b}_m^\top)^\top$, if $\boldsymbol{b}_k^\top\boldsymbol{x}_i\ne0$ holds for any $k\in[m]$ and $i\in[n]$, we have
\[
\left\|\nabla^2 \mathcal{L}(\boldsymbol{\theta})\right\|
\leq\big(\left\|\boldsymbol{\theta}\right\|^2+1\big)\mathcal{L}(\boldsymbol{\theta}).
\]
\end{lemma}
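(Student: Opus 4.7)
The plan is to decompose the Hessian of $\mathcal{L}$ into a Gauss--Newton-type piece and a residual piece, and bound each by quantities controlled by the loss value, leveraging Assumption~\ref{def: exponential type loss}.

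First I would write
\[
\nabla^2 \mathcal{L}(\boldsymbol{\theta}) \;=\; \mathbf{H}_A(\boldsymbol{\theta}) + \mathbf{H}_B(\boldsymbol{\theta}),
\]
where
\[
\mathbf{H}_A(\boldsymbol{\theta}) = \frac{1}{n}\sum_{i=1}^n \tilde{\ell}''\big(y_i f(\boldsymbol{x}_i;\boldsymbol{\theta})\big)\, \nabla f(\boldsymbol{x}_i;\boldsymbol{\theta})\nabla f(\boldsymbol{x}_i;\boldsymbol{\theta})^\top,
\quad
\mathbf{H}_B(\boldsymbol{\theta}) = \frac{1}{n}\sum_{i=1}^n \tilde{\ell}'\big(y_i f(\boldsymbol{x}_i;\boldsymbol{\theta})\big) y_i \nabla^2 f(\boldsymbol{x}_i;\boldsymbol{\theta}).
\]
For $\mathbf{H}_A$, I would use Assumption~\ref{def: exponential type loss}(iii) with $h=1$ to get $\tilde{\ell}''(z)\le \tilde{\ell}(z)$ pointwise, and then bound the operator norm of $\nabla f(\boldsymbol{x}_i;\boldsymbol{\theta})\nabla f(\boldsymbol{x}_i;\boldsymbol{\theta})^\top$ by $\|\nabla f(\boldsymbol{x}_i;\boldsymbol{\theta})\|^2$. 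Repeating the path-norm style calculation already done in Lemma~\ref{lemma: gradient upper bound glocal GD}, namely $\|\nabla f(\boldsymbol{x};\boldsymbol{\theta})\|^2 \le \sum_k(a_k^2 + \|\boldsymbol{b}_k\|^2) = \|\boldsymbol{\theta}\|^2$ (using $\|\boldsymbol{x}\|\le 1$), gives $\|\mathbf{H}_A(\boldsymbol{\theta})\|\le \|\boldsymbol{\theta}\|^2\,\mathcal{L}(\boldsymbol{\theta})$.

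For $\mathbf{H}_B$ I would use Assumption~\ref{def: exponential type loss}(iii) with $g_b=1$ to get $|\tilde{\ell}'(z)|\le \tilde{\ell}(z)$, reducing the problem to bounding $\max_i\|\nabla^2 f(\boldsymbol{x}_i;\boldsymbol{\theta})\|$. This is exactly the block computation already carried out in Lemma~\ref{lemma: Hessian upper bound ERM 1}: the $k$-th diagonal block of $\nabla^2 f(\boldsymbol{x}_i;\boldsymbol{\theta})$ has the form
\[
\mathbf{H}_f^{(k)} = \begin{pmatrix} 0 & \sigma'(\boldsymbol{b}_k^\top\boldsymbol{x}_i)\boldsymbol{x}_i^\top \\ \sigma'(\boldsymbol{b}_k^\top\boldsymbol{x}_i)\boldsymbol{x}_i & a_k\sigma''(\boldsymbol{b}_k^\top\boldsymbol{x}_i)\boldsymbol{x}_i\boldsymbol{x}_i^\top \end{pmatrix}.
\]
Here the hypothesis $\boldsymbol{b}_k^\top\boldsymbol{x}_i\neq 0$ is critical: it kills the $\sigma''$ term, leaving an off-diagonal block whose spectral norm is at most $\|\boldsymbol{x}_i\|\le 1$. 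Hence $\|\nabla^2 f(\boldsymbol{x}_i;\boldsymbol{\theta})\|\le \max_k\|\mathbf{H}_f^{(k)}\|\le 1$, giving $\|\mathbf{H}_B(\boldsymbol{\theta})\|\le \mathcal{L}(\boldsymbol{\theta})$.

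Combining the two bounds via the triangle inequality yields $\|\nabla^2\mathcal{L}(\boldsymbol{\theta})\|\le (\|\boldsymbol{\theta}\|^2+1)\mathcal{L}(\boldsymbol{\theta})$, as claimed. The main obstacle is only bookkeeping for $\mathbf{H}_A$: the Gauss--Newton term is a sum of rank-one matrices, so one must be careful to bound the operator norm rather than just individual summands---but since each rank-one contribution is nonnegative (as $\tilde{\ell}''\ge 0$), $\|\mathbf{H}_A\|$ equals its largest eigenvalue and the naive uniform bound on $\|\nabla f\|^2$ together with pulling out $\tilde{\ell}''$ and using the full-batch average gives the clean factor $\|\boldsymbol{\theta}\|^2\mathcal{L}(\boldsymbol{\theta})$ without slack.
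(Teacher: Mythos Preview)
Your proposal is correct and matches the paper's proof essentially line for line: the same decomposition $\nabla^2\mathcal{L}=\mathbf{H}_A+\mathbf{H}_B$, the same use of Assumption~\ref{def: exponential type loss}(iii) with $h=g_b=1$ to replace $\tilde\ell''$ and $|\tilde\ell'|$ by $\tilde\ell$, the same bound $\|\nabla f(\boldsymbol{x}_i;\boldsymbol{\theta})\|^2\le\|\boldsymbol{\theta}\|^2$, and the same block-diagonal analysis of $\nabla^2 f$ where the hypothesis $\boldsymbol{b}_k^\top\boldsymbol{x}_i\neq 0$ kills the $\sigma''$ term. The only cosmetic difference is that the paper writes the decomposition per sample as $\frac{1}{n}\sum_i\mathbf{H}_{A,i}+\frac{1}{n}\sum_i\mathbf{H}_{B,i}$ before summing, which is exactly what your final paragraph about ``pulling out $\tilde\ell''$ and using the full-batch average'' describes.
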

\begin{proof}[Proof of Lemma \ref{lemma: Hessian upper bound global GD}]\ \\
\begin{align*}
    \nabla^2 \mathcal{L}(\boldsymbol{\theta})
    = \frac{1}{n}\sum_{i=1}^n\mathbf{H}_{A,i}(\boldsymbol{\theta}) +\frac{1}{n}\sum_{i=1}^n \mathbf{H}_{B,i}(\boldsymbol{\theta}),
\end{align*}
where
\begin{gather*}
    \mathbf{H}_{A,i}(\boldsymbol{\theta}):=\tilde{\ell}''(y_if(\boldsymbol{x}_i;\boldsymbol{\theta})) y_i^2\nabla f(\boldsymbol{x}_i;\boldsymbol{\theta})\nabla f(\boldsymbol{x}_i;\boldsymbol{\theta})^\top,
    \\
    \mathbf{H}_{B,i}(\boldsymbol{\theta}):=\tilde{\ell}'(y_if(\boldsymbol{x}_i;\boldsymbol{\theta}))y_i\nabla^2 f(\boldsymbol{x}_i;\boldsymbol{\theta}).
\end{gather*}

Now we need to estimate $\left\|\mathbf{H}_{A,i}(\boldsymbol{\theta})\right\|$ and $\left\|\mathbf{H}_{B,i}(\boldsymbol{\theta})\right\|$ respectively.

For the first part $\mathbf{H}_{A,i}(\boldsymbol{\theta})$, from
\begin{align*}
    &\tilde{\ell}''(y_if(\boldsymbol{x}_i;\boldsymbol{\theta}))\leq\tilde{\ell}(y_if(\boldsymbol{x}_i;\boldsymbol{\theta})),
\end{align*}
we have:
\begin{align*}
    \left\|\mathbf{H}_{A,i}(\boldsymbol{\theta})\right\|
    =&
    \sup\limits_{\boldsymbol{w}\ne 0}\frac{\boldsymbol{w}^\top\nabla f(\boldsymbol{x}_i;\boldsymbol{\theta})\nabla f(\boldsymbol{x}_i;\boldsymbol{\theta})^\top\boldsymbol{w}}{\left\|\boldsymbol{w}\right\|^2}\tilde{\ell}''(y_if(\boldsymbol{x}_i;\boldsymbol{\theta}))
    \\\leq&\sup\limits_{\boldsymbol{w}\ne 0}\frac{\boldsymbol{w}^\top\nabla f(\boldsymbol{x}_i;\boldsymbol{\theta})\nabla f(\boldsymbol{x}_i;\boldsymbol{\theta})^\top\boldsymbol{w}}{\left\|\boldsymbol{w}\right\|^2}\tilde{\ell}(y_if(\boldsymbol{x}_i;\boldsymbol{\theta}))
    \\=&
    \sup_{u_1,\boldsymbol{v}_1,\cdots,u_m,\boldsymbol{v}_m {\rm\ not\  all\ }0}\frac{\sum\limits_{k=1}^m\Big(u_k\sigma(\boldsymbol{b}_k^\top\boldsymbol{x}_i)+\boldsymbol{v}_k^\top\boldsymbol{x}_i\sigma'(\boldsymbol{b}_k^\top\boldsymbol{x}_i)a_k\Big)^2}{\sum\limits_{k=1}^m\Big(u_k^2+\left\|\boldsymbol{v}_k\right\|^2\Big)}\tilde{\ell}(y_if(\boldsymbol{x}_i;\boldsymbol{\theta}))
    \\\leq&
        \sup_{u_1,\boldsymbol{v}_1,\cdots,u_m,\boldsymbol{v}_m {\rm\ not\  all\ }0}\frac{\sum\limits_{k=1}^m\Big(u_k\left\|\boldsymbol{b}_k\right\|+\left\|\boldsymbol{v}_k\right\||a_k|\Big)^2}{\sum\limits_{k=1}^m\Big(u_k^2+\left\|\boldsymbol{v}_k\right\|^2\Big)}\tilde{\ell}(y_if(\boldsymbol{x}_i;\boldsymbol{\theta}))
        \\\leq&\left\|\boldsymbol{\theta}\right\|^2\tilde{\ell}(y_if(\boldsymbol{x}_i;\boldsymbol{\theta})).
\end{align*}

For the second part $\mathbf{H}_{B,i}(\boldsymbol{\theta})$,
from
\begin{align*}
    &\left|\tilde{\ell}'(y_if(\boldsymbol{x}_i;\boldsymbol{\theta}))\right|
    \leq\tilde{\ell}(y_if(\boldsymbol{x}_i;\boldsymbol{\theta})),
\end{align*}

we have:
\begin{align*}
\left\|\mathbf{H}_{B,i}(\boldsymbol{\theta})\right\|\leq&\left|\ell'(y_if(\boldsymbol{x}_i;\boldsymbol{\theta}))\right|\sup\limits_{\left\|\boldsymbol{x}\right\|\leq1}\left\|\nabla^2 f(\boldsymbol{x};\boldsymbol{\theta})\right\|
\\\leq&\ell(y_if(\boldsymbol{x}_i;\boldsymbol{\theta}))\sup\limits_{\left\|\boldsymbol{x}\right\|\leq1}\left\|\nabla^2 f(\boldsymbol{x};\boldsymbol{\theta})\right\|.
\end{align*}
So we only need to estimate $\left\|\nabla^2 f(\boldsymbol{x}_i;\boldsymbol{\theta})\right\|$.

$\nabla^2 f(\boldsymbol{x};\boldsymbol{\theta})$ is combined by $m$ diagonal block:
\[
\nabla^2 f(\boldsymbol{x}_i;\boldsymbol{\theta}):=\left(\begin{array}{ccc}  
    \mathbf{H}_f^{(1)} &  & \\
     & \ddots & \\ 
      &  &\mathbf{H}_f^{(m)}\\
  \end{array}\right),
\]
where the $k-$th block $\mathbf{H}_f^{(k)}$ is:
\begin{align*}
\mathbf{H}_f^{(k)}=\begin{pmatrix} 
    \frac{\partial^2 f(\boldsymbol{x}_i;\boldsymbol{\theta})}{\partial^2 a_k} & \frac{\partial^2 f(\boldsymbol{x}_i;\boldsymbol{\theta})}{\partial a_k{\partial \boldsymbol{b}_k}^\top}\\
   \frac{\partial^2 f(\boldsymbol{x}_i;\boldsymbol{\theta})}{\partial \boldsymbol{b}_k\partial a_k}  & \frac{\partial^2 f(\boldsymbol{x}_i;\boldsymbol{\theta})}{\partial \boldsymbol{b}_k{\partial \boldsymbol{b}_k}^\top} 
  \end{pmatrix}
  =
  \begin{pmatrix} 
    0 & \sigma'\Big(\boldsymbol{b}_k^\top\boldsymbol{x}_i\Big)\boldsymbol{x}_i^\top\\
   \sigma'\Big(\boldsymbol{b}_k^\top\boldsymbol{x}_i\Big)\boldsymbol{x}_i  & a_k\sigma''\Big(\boldsymbol{b}_k^\top\boldsymbol{x}_i\Big)\boldsymbol{x}_i\boldsymbol{x}_i^\top
  \end{pmatrix}.
\end{align*}

Becuase $\boldsymbol{b}_k^\top\boldsymbol{x}_i\ne0$ holds for any $k\in[m]$ and $i\in[n]$, $\sigma''(\boldsymbol{b}_k^\top\boldsymbol{x}_i)=0$ holds strictly. So we have the form
\begin{align*}
\mathbf{H}_f^{(k)}=\begin{pmatrix} 
    0 & \sigma'\Big(\boldsymbol{b}_k^\top\boldsymbol{x}_i\Big)\boldsymbol{x}_i^\top\\
   \sigma'\Big(\boldsymbol{b}_k^\top\boldsymbol{x}_i\Big)\boldsymbol{x}_i  & \mathbf{0}_{d\times d}
  \end{pmatrix}.
\end{align*}

And we can estimate:
\begin{align*}
    \left\|\nabla^2 f(\boldsymbol{x};\boldsymbol{\theta})\right\|
    =&\sup\limits_{\boldsymbol{v}_1,\cdots,\boldsymbol{v}_m {\rm are\ not\ all\ 0}}\frac{\sum\limits_{k=1}^m\boldsymbol{v}_k^\top\mathbf{H}_f^{(k)}\boldsymbol{v}_k }{\sum\limits_{k=1}^m\left\|\boldsymbol{v}_k\right\|^2}
    \\\leq&
    \max_{k\in[m]}\left\|\mathbf{H}_f^{(k)}\right\|
    =\max_{i\in[n]}\sup_{(u,\boldsymbol{v})\ne(0,\mathbf{0})}\frac{2\boldsymbol{v}^\top\boldsymbol{x}_i u}{u^2+\left\|\boldsymbol{v}\right\|^2}
    \\\leq&
    \sup_{(u,\boldsymbol{v})\ne(0,\mathbf{0})}\frac{2\left\|\boldsymbol{v}\right\||u|}{u^2+\left\|\boldsymbol{v}\right\|^2}\leq 1.
\end{align*}
So we have the estimate of the second part:
\begin{align*}
\left\|\mathbf{H}_{B,i}(\boldsymbol{\theta})\right\|\leq\tilde{\ell}(y_if(\boldsymbol{x}_i;\boldsymbol{\theta})).
\end{align*}

Combining the two estimates, we obtain this lemma:
\begin{align*}
    &\left\|\nabla^2 \mathcal{L}(\boldsymbol{\theta})\right\|\leq
\frac{1}{n}\sum_{i=1}^n\left\|\mathbf{H}_{A,i}(\boldsymbol{\theta})\right\|+\frac{1}{n}\sum_{i=1}^n
\left\|\mathbf{H}_{B,i}(\boldsymbol{\theta})\right\|
\\\leq&\big(\left\|\boldsymbol{\theta}\right\|^2+1\big)\frac{1}{n}\sum_{i=1}^n\tilde{\ell}(y_if(\boldsymbol{x}_i;\boldsymbol{\theta}))\leq\big(\left\|\boldsymbol{\theta}\right\|^2+1\big)\mathcal{L}(\boldsymbol{\theta}).
\end{align*}

\end{proof}

\begin{lemma}[Hessian upper bound along GD trajectory]\label{lemma: hessian upper bound GD trajectory}
Let $\boldsymbol{\theta}(t)$ be trained by Gradient Descent, and we define $H_t=\sup\limits_{\boldsymbol{\theta}\in\overline{\boldsymbol{\theta}(t)\boldsymbol{\theta}(t+1)}}\left\|\nabla^2 \mathcal{L}(\boldsymbol{\theta})\right
\|$. If $\eta_t$ are chosen in the same way as Lemma \ref{lemma: norm estimate} $\eta_0\leq\frac{1}{2\sqrt{2}}$ and $c<\frac{1}{2(1+\left\|\boldsymbol{\theta}(1)\right\|^2)}$, we have the following estimate for any $t\geq1$:
\[
H_t\leq6\left\|\boldsymbol{\theta}(1)\right\|^2\Big(1+\frac{2ct}{1-2c}\Big)\mathcal{L}(\boldsymbol{\theta}(t)).
\]
\end{lemma}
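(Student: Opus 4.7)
\medskip

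\textbf{Proposal of proof.} The goal is to upgrade the pointwise Hessian bound in Lemma \ref{lemma: Hessian upper bound global GD}, which gives $\|\nabla^2\mathcal{L}(\boldsymbol{\theta})\|\le(\|\boldsymbol{\theta}\|^2+1)\mathcal{L}(\boldsymbol{\theta})$ whenever $\boldsymbol{b}_k^\top\boldsymbol{x}_i\ne0$, into a uniform estimate over the line segment $\overline{\boldsymbol{\theta}(t)\boldsymbol{\theta}(t+1)}$ controlled by $\|\boldsymbol{\theta}(1)\|^2\bigl(1+\tfrac{2ct}{1-2c}\bigr)\mathcal{L}(\boldsymbol{\theta}(t))$. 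The plan has three ingredients: (i) verify the no-flip condition so that Lemma \ref{lemma: Hessian upper bound global GD} applies everywhere on the segment, (ii) bound $\|\boldsymbol{\theta}\|$ on the segment, and (iii) transfer $\mathcal{L}(\boldsymbol{\theta})$ on the segment back to $\mathcal{L}(\boldsymbol{\theta}(t))$.

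For (i), fix any $\boldsymbol{\theta}=(a_1,\ldots,a_m,\boldsymbol{b}_1^\top,\ldots,\boldsymbol{b}_m^\top)^\top\in\overline{\boldsymbol{\theta}(t)\boldsymbol{\theta}(t+1)}$. By Lemma \ref{lemma: correct neurons remain correct stage II} (S5), $\mathrm{sgn}(\boldsymbol{b}_k^\top\boldsymbol{x}_i)\equiv\mathrm{sgn}(\boldsymbol{b}_k(1)^\top\boldsymbol{x}_i)\ne0$ for every $k\in[m]$ and $i\in[n]$, so Lemma \ref{lemma: Hessian upper bound global GD} yields $\|\nabla^2\mathcal{L}(\boldsymbol{\theta})\|\le(\|\boldsymbol{\theta}\|^2+1)\mathcal{L}(\boldsymbol{\theta})$. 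For (ii), the GD update and Lemma \ref{lemma: gradient upper bound glocal GD} give
\[
\|\boldsymbol{\theta}-\boldsymbol{\theta}(t)\|\le\|\boldsymbol{\theta}(t+1)-\boldsymbol{\theta}(t)\|=\eta_t\|\nabla\mathcal{L}(\boldsymbol{\theta}(t))\|\le\eta_t\|\boldsymbol{\theta}(t)\|\mathcal{L}(\boldsymbol{\theta}(t))=\tfrac{c}{t}\|\boldsymbol{\theta}(t)\|,
\]
so $\|\boldsymbol{\theta}\|\le(1+c/t)\|\boldsymbol{\theta}(t)\|$, and combined with Lemma \ref{lemma: norm estimate} we get $\|\boldsymbol{\theta}\|^2+1\le(1+c/t)^2\|\boldsymbol{\theta}(1)\|^2\bigl(1+\tfrac{2ct}{1-2c}\bigr)+1$, which for $c$ satisfying the hypothesis is comfortably dominated by a small multiple of $\|\boldsymbol{\theta}(1)\|^2\bigl(1+\tfrac{2ct}{1-2c}\bigr)$.

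For (iii), I exploit Assumption \ref{def: exponential type loss} (iii) with $g_b=1$: since $|(\log\tilde\ell)'(z)|=|\tilde\ell'(z)/\tilde\ell(z)|\le1$, we have $\tilde\ell(z_2)\le\tilde\ell(z_1)e^{|z_2-z_1|}$ for all $z_1,z_2$. Applying this with $z_1=y_i f(\boldsymbol{x}_i;\boldsymbol{\theta}(t))$ and $z_2=y_i f(\boldsymbol{x}_i;\boldsymbol{\theta})$ and using the path-norm bound $\|\nabla f(\boldsymbol{x}_i;\boldsymbol{\theta}')\|\le\|\boldsymbol{\theta}'\|$ along the segment,
\[
|f(\boldsymbol{x}_i;\boldsymbol{\theta})-f(\boldsymbol{x}_i;\boldsymbol{\theta}(t))|\le(1+c/t)\|\boldsymbol{\theta}(t)\|\cdot\tfrac{c}{t}\|\boldsymbol{\theta}(t)\|\le\tfrac{c(1+c)}{t}\|\boldsymbol{\theta}(t)\|^2.
\]
Plugging in Lemma \ref{lemma: norm estimate} and the hypothesis $c\le\tfrac{1}{2(1+\|\boldsymbol{\theta}(1)\|^2)}$, a short calculation shows this exponent is uniformly bounded by some absolute constant (independent of $t$), so $\mathcal{L}(\boldsymbol{\theta})\le K\cdot\mathcal{L}(\boldsymbol{\theta}(t))$ for an explicit constant $K$. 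Averaging this over $i\in[n]$ transfers the bound to $\mathcal{L}$ itself.

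Combining (i)--(iii), $\|\nabla^2\mathcal{L}(\boldsymbol{\theta})\|\le(\|\boldsymbol{\theta}\|^2+1)\mathcal{L}(\boldsymbol{\theta})\le K'\|\boldsymbol{\theta}(1)\|^2\bigl(1+\tfrac{2ct}{1-2c}\bigr)\mathcal{L}(\boldsymbol{\theta}(t))$, and checking the constants carefully will give $K'\le 6$ under the stated restrictions on $\eta_0$ and $c$. Taking the supremum over $\boldsymbol{\theta}\in\overline{\boldsymbol{\theta}(t)\boldsymbol{\theta}(t+1)}$ yields the stated bound on $H_t$. The main obstacle is step (iii): keeping the exponential blow-up factor $e^{|f(\boldsymbol{x}_i;\boldsymbol{\theta})-f(\boldsymbol{x}_i;\boldsymbol{\theta}(t))|}$ from overwhelming the constant $6$, which is precisely what the constraint $c\le\tfrac{1}{2(1+\|\boldsymbol{\theta}(1)\|^2)}$ is engineered to enforce—it ensures the exponent stays below a fixed absolute number uniformly in $t$.
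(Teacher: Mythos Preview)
Your proposal is correct in structure but differs from the paper in step (iii). The paper obtains $V_t:=\sup_{\boldsymbol{\theta}\in\overline{\boldsymbol{\theta}(t)\boldsymbol{\theta}(t+1)}}\mathcal{L}(\boldsymbol{\theta})\le 2\mathcal{L}(\boldsymbol{\theta}(t))$ by a self-referential argument: it applies the mean value theorem to write $\mathcal{L}(\boldsymbol{\theta}_t^\alpha)=\mathcal{L}(\boldsymbol{\theta}(t))+\langle\nabla\mathcal{L}(\boldsymbol{\theta}_t^\beta),\boldsymbol{\theta}_t^\alpha-\boldsymbol{\theta}(t)\rangle$, bounds this via Lemma~\ref{lemma: gradient upper bound glocal GD} and Lemma~\ref{lemma: norm estimate} by $\mathcal{L}(\boldsymbol{\theta}(t))+\frac{c}{1-2c}\|\boldsymbol{\theta}(1)\|^2 V_t$, and then solves the inequality $V_t\le \mathcal{L}(\boldsymbol{\theta}(t))+\tfrac12 V_t$ using the hypothesis on $c$. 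You instead work sample-by-sample, using the log-Lipschitz property $\tilde\ell(z_2)\le\tilde\ell(z_1)e^{|z_2-z_1|}$ and a bound on $|f(\boldsymbol{x}_i;\boldsymbol{\theta})-f(\boldsymbol{x}_i;\boldsymbol{\theta}(t))|$. Both routes ultimately lean on the same assumption $|\tilde\ell'/\tilde\ell|\le g_b$, but the paper's bootstrap is cleaner: it yields the factor $2$ for $V_t/\mathcal{L}(\boldsymbol{\theta}(t))$ immediately, whereas your exponential factor $e^{(1+c)/2}$ requires sharper use of the constraints $c<\tfrac{1}{2(1+\|\boldsymbol{\theta}(1)\|^2)}$ and $\eta_0\le\tfrac{1}{2\sqrt{2}}$ (which forces $\|\boldsymbol{\theta}(1)\|^2\ge 9/16$, hence $c<8/25$) to land below $2$ and recover the overall constant $6$. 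Steps (i) and (ii) match the paper.
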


\begin{proof}[Proof of Lemma \ref{lemma: hessian upper bound GD trajectory}]\ \\
First, we estimate  $V_t:=\sup\limits_{\boldsymbol{\theta}\in\overline{\boldsymbol{\theta}(t)\boldsymbol{\theta}(t+1)}}\mathcal{L}(\boldsymbol{\theta}(t))$. For any $\boldsymbol{\theta}_t^\alpha:=\boldsymbol{\theta}(t)+\alpha(\boldsymbol{\theta}(t+1)-\boldsymbol{\theta}(t))$ ($\alpha\in[0,1]$), there exist $\beta\in[0,1]$, s.t.
\begin{align*}
    \mathcal{L}(\boldsymbol{\theta}_t^\alpha)=&\mathcal{L}(\boldsymbol{\theta}(t))+\left<\nabla \mathcal{L}(\boldsymbol{\theta}_t^\beta),\boldsymbol{\theta}_t^\alpha-\boldsymbol{\theta}(t)\right>
    \\\leq&
    \mathcal{L}(\boldsymbol{\theta}(t))+\alpha\eta_t\left\|\nabla \mathcal{L}(\boldsymbol{\theta}_t^\beta)\right\|\left\|\nabla\mathcal{L}(\boldsymbol{\theta}(t))\right\|
    \\\overset{\text{Lemma \ref{lemma: gradient upper bound glocal GD}}}{\leq}&
     \mathcal{L}(\boldsymbol{\theta}(t))+\alpha\eta_t\left\|\boldsymbol{\theta}_t^\beta\right\|\left\|\boldsymbol{\theta}(t)\right\|\mathcal{L}(\boldsymbol{\theta}_t^\beta)\mathcal{L}(\boldsymbol{\theta}(t))
         \\\overset{\text{Lemma \ref{lemma: norm estimate}}}{\leq}&
     \mathcal{L}(\boldsymbol{\theta}(t))+\alpha\eta_t\mathcal{L}(\boldsymbol{\theta}(t))\left\|\boldsymbol{\theta}(1)\right\|^2\Big(1+\frac{2ct}{1-2c}\Big)\mathcal{L}(\boldsymbol{\theta}_t^\beta)
              \\{\leq}&
     \mathcal{L}(\boldsymbol{\theta}(t))+\frac{c\left\|\boldsymbol{\theta}(1)\right\|^2}{t}\Big(1+\frac{2ct}{1-2c}\Big)V_t
                   \\{\leq}&
     \mathcal{L}(\boldsymbol{\theta}(t))+c\left\|\boldsymbol{\theta}(1)\right\|^2\Big(1+\frac{2c}{1-2c}\Big)V_t
\end{align*}
So we have $
    V_t\leq\mathcal{L}(\boldsymbol{\theta}(t))+\frac{c}{1-2c}\left\|\boldsymbol{\theta}(1)\right\|^2 V_t$. From the selection of $c<\frac{1}{2(1+\left\|\boldsymbol{\theta}(1)\right\|^2)}$, we have:
\[
V_t\leq2\mathcal{L}(\boldsymbol{\theta}(t)).
\]

Recalling Lemma \ref{lemma: correct neurons remain correct stage II} (S5), we know that: for any $\boldsymbol{\theta}=(a_1,\cdots,a_m,\boldsymbol{b}_1^\top,\cdots,\boldsymbol{b}_m^\top)^\top\in\overline{\boldsymbol{\theta}(t)\boldsymbol{\theta}(t+1)}$ $(t\geq 1)$, we have $\boldsymbol{b}_k^\top\boldsymbol{x}_i\ne0$ for any $k\in[m]$ and $i\in[n]$. Hence, $\sigma''(\boldsymbol{b}_k^\top\boldsymbol{x}_i)=0$ holds strictly. Combing Lemma \ref{lemma: Hessian upper bound global GD}, we have the following estimate for any $t\geq 1$:

\begin{align*}
H_t
\leq&\sup\limits_{\boldsymbol{\theta}\in\overline{\boldsymbol{\theta}(t)\boldsymbol{\theta}(t+1)}}\big(\left\|\boldsymbol{\theta}\right\|^2+1\big)\mathcal{L}(\boldsymbol{\theta})\leq
 V_t\big(\sup\limits_{\boldsymbol{\theta}\in\overline{\boldsymbol{\theta}(t)\boldsymbol{\theta}(t+1)}}\left\|\boldsymbol{\theta}\right\|^2+1\big)
 \\\overset{\text{Lemma \ref{lemma: norm estimate}}, \eta_0<1/2\sqrt{2}}{\leq}&
 3V_t \left\|\boldsymbol{\theta}(1)\right\|^2\Big(1+\frac{2ct}{1-2c}\Big)
 \leq6\left\|\boldsymbol{\theta}(1)\right\|^2\Big(1+\frac{2ct}{1-2c}\Big)\mathcal{L}(\boldsymbol{\theta}(t)).
\end{align*}

\end{proof}

\begin{lemma}[Theorem 4.3 in \cite{lyu2019gradient}]\label{lemma: lyu2019} Under Assumption \ref{def: exponential type loss} on the loss function, let $\boldsymbol{\theta}(t)$ be the parameters of model \eqref{equ: model 2NN nobias} trained by GD \eqref{equ: alg GD}.
If $\mathcal{L}(\boldsymbol{\theta}(T_0))<\frac{\tilde{\ell}(0)}{n}$ and the learning rate $\eta_{t}=\mathcal{O}\Big(\frac{1}{\mathcal{L}(\boldsymbol{\theta}(t)){\rm poly}\log\frac{1}{\mathcal{L}(\boldsymbol{\theta}(t))}}\Big)$ $(t\geq T_0)$, then we have:
\[
\mathcal{L}(\boldsymbol{\theta}(t))=\Theta\Big(\frac{1}{T(t)\log T(t)}\Big),\ t\geq T_0,
\]
where $T(t)=\sum_{s=T_0}^{t-1}\eta_t$.
\end{lemma}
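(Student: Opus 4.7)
The plan is to adapt the framework of Lyu \& Li (2019) on gradient descent for positively homogeneous neural networks, which is precisely the cited source. The two-layer ReLU model \eqref{equ: model 2NN nobias} without bias is positively 2-homogeneous in $\boldsymbol{\theta}$ (each term $a_k\sigma(\boldsymbol{b}_k^\top\boldsymbol{x})$ is a product of a degree-1 factor in $a_k$ and a degree-1 factor in $\boldsymbol{b}_k$, since ReLU is 1-homogeneous), and Assumption \ref{def: exponential type loss} forces $\tilde\ell$ to behave essentially like $e^{-z}$ in the relevant regime. These two ingredients make the Lyu--Li machinery applicable.

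First, I would use the hypothesis $\mathcal{L}(\boldsymbol{\theta}(T_0))<\tilde\ell(0)/n$ to certify that every training sample is correctly classified at time $T_0$: if some $i$ had $y_i f(\boldsymbol{x}_i;\boldsymbol{\theta}(T_0))\le 0$, then because $\tilde\ell$ is non-increasing, $\tilde\ell(y_i f(\boldsymbol{x}_i;\boldsymbol{\theta}(T_0)))\ge \tilde\ell(0)$, forcing $\mathcal{L}(\boldsymbol{\theta}(T_0))\ge \tilde\ell(0)/n$, a contradiction. Once all samples are strictly on the correct side, the normalized smoothed margin
\[
\tilde\gamma(\boldsymbol{\theta}) \;:=\; \frac{\log\!\bigl(1/\mathcal{L}(\boldsymbol{\theta})\bigr)}{\|\boldsymbol{\theta}\|^{2}}
\]
is well-defined and positive. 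The core result of Lyu--Li asserts that under Assumption \ref{def: exponential type loss} and the stated learning-rate condition, $\tilde\gamma(\boldsymbol{\theta}(t))$ is monotonically non-decreasing for $t\ge T_0$, and in fact converges to a limit $\gamma^*>0$. The proof of this monotonicity is what I would transcribe with care: it requires the chain rule for the Clarke subdifferential (legitimate because the GD iterates avoid the non-differentiable set almost surely for this homogeneous model), Euler's identity $\langle \nabla f,\boldsymbol{\theta}\rangle = 2 f$, and the per-step descent inequality combined with a Taylor expansion controlled by $H_t \lesssim \mathcal{L}(\boldsymbol{\theta}(t))\,\|\boldsymbol{\theta}(t)\|^2$, mirroring Lemma \ref{lemma: hessian upper bound GD trajectory}.

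Once monotonicity of $\tilde\gamma$ is established, the asymptotics fall out by a continuous-time heuristic that is then discretized. Pretending for a moment that GD is a flow in the rescaled time $T(t)$, the relation $\log(1/\mathcal{L})=\Theta(\|\boldsymbol{\theta}\|^2)$ together with the gradient lower bound $\|\nabla\mathcal{L}\|^2 \gtrsim \mathcal{L}^2\,\|\boldsymbol{\theta}\|^2$ (also a consequence of the KKT-alignment in the limit) gives the ODE
\[
\frac{d\mathcal{L}}{dT} \;\asymp\; -\mathcal{L}^{2}\,\|\boldsymbol{\theta}\|^{2} \;\asymp\; -\mathcal{L}^{2}\,\log\!\bigl(1/\mathcal{L}\bigr),
\]
whose solution is $\mathcal{L}(T) = \Theta(1/(T\log T))$. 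The learning-rate condition $\eta_t = \mathcal{O}(1/(\mathcal{L}(\boldsymbol{\theta}(t))\,\mathrm{polylog}(1/\mathcal{L}(\boldsymbol{\theta}(t)))))$ is precisely calibrated so that the discrete-time GD tracks this continuous flow up to constants: it makes the per-step drop $\eta_t\|\nabla\mathcal{L}\|^2$ dominate the second-order Taylor remainder $\tfrac12 \eta_t^2 H_t\|\nabla\mathcal{L}\|^2$, closing the argument.

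The main obstacle is step two, the monotonicity of the smoothed margin $\tilde\gamma$, which is the technical heart of Lyu--Li. The delicate points are (i) handling ReLU's non-differentiability rigorously via the Clarke subdifferential and proving that GD's update direction lies in the subdifferential of $\mathcal{L}$ (so the chain rule goes through), (ii) verifying the ``approximate-KKT'' alignment $\nabla \mathcal{L}(\boldsymbol{\theta}(t)) \approx \lambda_t \boldsymbol{\theta}(t)$ which requires the exponential-type tail of $\tilde\ell$, and (iii) controlling the discretization error with Lemma \ref{lemma: hessian upper bound GD trajectory}-style Hessian bounds so that monotonicity transfers from the flow to the discrete iterates.
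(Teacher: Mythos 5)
This lemma is not proved in the paper at all: it is imported verbatim as Theorem 4.3 of \cite{lyu2019gradient}, and the paper's only obligations toward it are discharged inside the proof of Theorem \ref{thm: binary global GD} (Step II), where Step I supplies $\mathcal{L}(\boldsymbol{\theta}(T_0))<\tilde{\ell}(0)/n$ and the constant $c'$ is taken small enough that $\eta_t=c'/\mathcal{L}(\boldsymbol{\theta}(t))^{1-\frac{1}{2r}}$ meets the stated learning-rate condition. Your proposal instead sketches the internal proof of the cited theorem, which is a different (and harder) task than what the paper undertakes.

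As such a sketch, the skeleton you give does match the source: the deduction that $\mathcal{L}(\boldsymbol{\theta}(T_0))<\tilde{\ell}(0)/n$ forces $y_if(\boldsymbol{x}_i;\boldsymbol{\theta}(T_0))>0$ for every $i$ is correct and is exactly how that hypothesis is used; the $2$-homogeneity of model \eqref{equ: model 2NN nobias}, the monotone smoothed margin $\log\bigl(1/\mathcal{L}\bigr)/\|\boldsymbol{\theta}\|^{2}$, and the rescaled-time relation $d\mathcal{L}/dT\asymp-\mathcal{L}^{2}\log(1/\mathcal{L})$ leading to $\mathcal{L}=\Theta\bigl(1/(T\log T)\bigr)$ are the right ingredients. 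However, the technical heart --- monotonicity of the smoothed margin for \emph{discrete} GD on a nonsmooth objective, and the upgrade of the ODE heuristic to rigorous two-sided bounds (the lemma asserts a $\Theta$, so the matching lower bound on $\mathcal{L}$ via the gradient \emph{upper} bound must also be closed) --- is only named (``transcribe with care''), not executed, so as a standalone proof the proposal is incomplete. Two of your auxiliary remarks would also need repair if written out: Lyu--Li do not argue that GD avoids the nondifferentiable set ``almost surely'' (they work throughout with Clarke subdifferentials and the algorithm's chosen subgradient, here fixed by $\sigma'(0)=0$), and they do not control discretization via a Hessian bound of the type in Lemma \ref{lemma: hessian upper bound GD trajectory}, since $\mathcal{L}$ is not twice differentiable globally; the smoothness-type inequality is established along segments avoiding activation switches. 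In short, your outline is an accurate account of where the lemma comes from, but it proves no more than the paper does, which simply cites the result.
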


\begin{theorem}[Restatement of Theorem \ref{thm: binary global GD}]
Under Assumption \ref{def: exponential type loss} and Assumption \ref{ass: data separation}, let $\boldsymbol{\theta}(t)$ be the parameters of model \eqref{equ: model 2NN nobias} trained by Gradient Descent \eqref{equ: alg GD} starting from random initialization~\eqref{equ: random initialization}. Let the width $m={\Omega}(\log(n/\delta))$, the initialization scale $\kappa=\mathcal{O}(\eta_0\mu_0/n)$ in \eqref{equ: random initialization}, the constant $V$ be defined in \eqref{equ: V}, the constant $c\in(0,\frac{1}{6(1+2\eta_0)^2+2}]$, the constant $c'>0$ be sufficiently small, the hitting time $T_0=\lceil(n\tilde{\ell}(0))^{\frac{2}{Vc}}\rceil$, the parameter $r\geq 1$ and the learning rate satisfy
\[
\left \{ \begin{array}{ll}
\eta_0\leq\frac{1}{2\sqrt{2}},& t=0 \\
\eta_t=\frac{c}{t\mathcal{L}(\boldsymbol{\theta}(t))},& 1\leq t< T_0 \\
\eta_t=\frac{c'}{\mathcal{L}(\boldsymbol{\theta}(t))^{1-\frac{1}{2r}}}, & t\geq T_0\\
\end{array}\right..
\]
Then with probability at least $1-\delta-2me^{-2d}$, GD will converge at \textbf{polynomial} rate:
\[
\left \{ \begin{array}{ll}
\mathcal{L}(\boldsymbol{\theta}(t))\leq\frac{ \mathcal{L}(\boldsymbol{\theta}(1))}{t^\frac{V c}{2}},& 1\leq t< T_0 \\
\mathcal{L}(\boldsymbol{\theta}(t))=\mathcal{O}\big(\frac{1}{t^r}\big), & t\geq T_0\\
\end{array}\right..
\]
\end{theorem}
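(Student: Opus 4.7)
The proof naturally splits into two stages matching the two regimes of the learning rate schedule. In both stages the backbone is the per-step descent inequality from Lemma \ref{lemma: loss quadratic upper bound 1}, combined with the gradient lower bound $\|\nabla\mathcal{L}(\boldsymbol{\theta}(t))\|^2 \geq V\mathcal{L}(\boldsymbol{\theta}(t))^2$ (Lemma \ref{lemma: gradient lower bound global GD}) and the loss-controlled Hessian bound $H_t \leq 6\|\boldsymbol{\theta}(1)\|^2(1 + 2ct/(1-2c))\mathcal{L}(\boldsymbol{\theta}(t))$ (Lemma \ref{lemma: hessian upper bound GD trajectory}). The neuron-partition analysis of Lemmas \ref{lemma: correct neurons remain correct stage I}--\ref{lemma: correct neurons remain correct stage II} is what makes all of these bounds globally valid in $t$, so they plug in without modification.

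For Stage 1 ($1 \leq t < T_0$), the goal is to establish the one-step contraction $\mathcal{L}(\boldsymbol{\theta}(t+1)) \leq (1 - \tfrac{Vc}{2t})\,\mathcal{L}(\boldsymbol{\theta}(t))$. Substituting $\eta_t = c/(t\mathcal{L}(\boldsymbol{\theta}(t)))$ into Lemma \ref{lemma: loss quadratic upper bound 1} together with the gradient lower bound produces a loss decrease of at least $\tfrac{cV}{t}(1 - \eta_t H_t/2)\mathcal{L}(\boldsymbol{\theta}(t))$, so the task reduces to showing $\eta_t H_t \leq 1$. Plugging the Hessian estimate into $\eta_t H_t$ and cancelling $\mathcal{L}(\boldsymbol{\theta}(t))$ yields $\eta_t H_t \leq 6c\|\boldsymbol{\theta}(1)\|^2/(1-2c)$; the standing choice $c \leq 1/(6(1+2\eta_0)^2+2)$ together with the crude estimate $\|\boldsymbol{\theta}(1)\| \leq 1 + 2\eta_0$ (from a single GD step at initialization combined with $\|\boldsymbol{\theta}(0)\| \leq 1+O(\kappa)$) then forces $\eta_t H_t \leq 1$. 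Iterating the contraction and bounding $\prod_{s=1}^{t-1}(1 - Vc/(2s)) \leq \exp(-\tfrac{Vc}{2}\log t) = t^{-Vc/2}$ delivers the Stage 1 rate, and the definition $T_0 = \lceil(n\tilde{\ell}(0))^{2/(Vc)}\rceil$ is exactly what is needed to guarantee $\mathcal{L}(\boldsymbol{\theta}(T_0)) < \tilde{\ell}(0)/n$, the entry condition for Stage 2.

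For Stage 2 ($t \geq T_0$), I would invoke Lemma \ref{lemma: lyu2019} (Theorem 4.3 of \cite{lyu2019gradient}), which asserts $\mathcal{L}(\boldsymbol{\theta}(t)) = \Theta(1/(T(t)\log T(t)))$ with $T(t) = \sum_{s=T_0}^{t-1}\eta_s$, provided the entry condition holds and the step size lies in the admissible range $\eta_t = \mathcal{O}(1/(\mathcal{L}(\boldsymbol{\theta}(t))\,\mathrm{poly}\log(1/\mathcal{L}(\boldsymbol{\theta}(t)))))$. Substituting $\eta_t = c'/\mathcal{L}(\boldsymbol{\theta}(t))^{1-1/(2r)}$ and solving the resulting self-consistent recursion $\mathcal{L} \sim t^{-\alpha}$, $\eta_t \sim t^{\alpha(2r-1)/(2r)}$, $T(t) \sim t^{1+\alpha(2r-1)/(2r)}$ yields the claimed polynomial rate $\mathcal{O}(1/t^r)$ after elementary algebra. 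The main technical obstacle is verifying that the Lyu--Li framework truly applies in our discrete, ReLU, mildly parameterized setting: the admissible step-size condition is met by taking $c'$ sufficiently small (since $\mathcal{L}^{1/(2r)} = o(\mathrm{poly}\log(1/\mathcal{L}))$ asymptotically), and the Hessian-controlled stability of GD must be maintained globally --- this is where Lemma \ref{lemma: hessian upper bound GD trajectory} is essential, provided $\|\boldsymbol{\theta}(t)\|$ grows at most polynomially in $t$, which follows from Lemma \ref{lemma: norm estimate}. Balancing the polynomial decay of $\mathcal{L}$ against the polynomial growth of $\|\boldsymbol{\theta}(t)\|$ so as to keep $\eta_t H_t$ uniformly bounded is the most delicate bookkeeping step.
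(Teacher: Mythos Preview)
Your proposal is essentially correct and follows the paper's own proof closely. Stage 1 is identical: verify $\eta_t H_t \leq 1$ via Lemma \ref{lemma: hessian upper bound GD trajectory} and the bound $\|\boldsymbol{\theta}(1)\|\leq (1+\eta_0)(1+2\kappa)\leq 1+2\eta_0$, plug into the descent inequality with the gradient lower bound, obtain the one-step contraction $\mathcal{L}(\boldsymbol{\theta}(t+1))\leq (1-\tfrac{Vc}{2t})\mathcal{L}(\boldsymbol{\theta}(t))$, and telescope to $t^{-Vc/2}$.

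The one place your write-up diverges from the paper is the final paragraph on Stage 2. You frame the ``most delicate bookkeeping step'' as controlling $\eta_t H_t$ globally via Lemmas \ref{lemma: norm estimate} and \ref{lemma: hessian upper bound GD trajectory}; but those two lemmas are proved specifically for the Stage 1 schedule $\eta_t=c/(t\mathcal{L}(\boldsymbol{\theta}(t)))$ and do not apply once the learning rate switches at $T_0$. The paper sidesteps this entirely: after checking the entry condition $\mathcal{L}(\boldsymbol{\theta}(T_0))<\tilde{\ell}(0)/n$ and the step-size admissibility $\eta_t=\mathcal{O}\big(1/(\mathcal{L}\,\mathrm{poly}\log(1/\mathcal{L}))\big)$, it treats Lemma \ref{lemma: lyu2019} as a black box (stability is handled inside the Lyu--Li framework) and then closes the loop by an explicit induction: assume $\mathcal{L}(\boldsymbol{\theta}(s))\leq Q/s^r$ for $s\leq t$, so $\eta_s\geq c' s^{r-1/2}/Q^{1-1/(2r)}$, hence $T(t)\gtrsim (t+1)^{r+1/2}$, hence $\mathcal{L}(\boldsymbol{\theta}(t+1))\leq M/T(t)\leq Q/(t+1)^r$ once $t$ is large enough. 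This is exactly the ``self-consistent recursion'' you described, just made concrete as an induction rather than a heuristic ansatz; your plan is fine once you drop the unnecessary Hessian bookkeeping.
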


\begin{proof}[\bfseries\color{blue} Proof of Theorem \ref{thm: binary global GD}]\ \\
\textbf{Step I.}
From Lemma \ref{lemma: hessian upper bound GD trajectory} and the selection of $c\leq\frac{1}{6(1+2\eta_0)^2+2}\leq\frac{1}{6(1+\eta_0)(1+2\kappa)+2}\leq\frac{1}{6\left\|\boldsymbol{\theta}(1)\right\|^2+2}$, for any $t\geq1$ we have:
\begin{align*}
&\eta_t\sup\limits_{\boldsymbol{\theta}\in\overline{\boldsymbol{\theta}(t)\boldsymbol{\theta}(t+1)}}\left\|\nabla^2 \mathcal{L}(\boldsymbol{\theta})\right
\|\leq6\eta_t\mathcal{L}(\boldsymbol{\theta}(t))\left\|\boldsymbol{\theta}(1)\right\|^2\Big(1+\frac{2ct}{1-2c}\Big)
\\\leq&\frac{6c}{t}\left\|\boldsymbol{\theta}(1)\right\|^2\Big(1+\frac{2ct}{1-2c}\Big)\leq{6c}\left\|\boldsymbol{\theta}(1)\right\|^2\Big(1+\frac{2c}{1-2c}\Big)
\\=&\frac{\leq{6c}\left\|\boldsymbol{\theta}(1)\right\|^2}{1-2c}\leq1.
\end{align*}

Combining Lemma \ref{lemma: loss quadratic upper bound 1} and Lemma \ref{lemma: gradient lower bound global GD}, we have:
\begin{align*}
     \mathcal{L}(\boldsymbol{\theta}(t+1))
    \leq&
    \mathcal{L}(\boldsymbol{\theta}(t))-\eta_t\Big(1-\frac{1}{2}\Big)\left\|\nabla \mathcal{L}(\boldsymbol{\theta}(t))\right\|^2
    \\\leq&
    \mathcal{L}(\boldsymbol{\theta}(t))-\frac{\eta_t}{2}V\mathcal{L}(\boldsymbol{\theta}(t))^2
    \leq\mathcal{L}(\boldsymbol{\theta}(t))\Big(1-\frac{\eta_t}{2}V\mathcal{L}(\boldsymbol{\theta}(t))\Big)
    \\\leq&
\mathcal{L}(\boldsymbol{\theta}(t))(1-\frac{V c}{2t})\leq\cdots
\\\leq&
\mathcal{L}(\boldsymbol{\theta}(1))(1-\frac{V c}{2})\cdots(1-\frac{V c}{2t})
\\\leq&
\mathcal{L}(\boldsymbol{\theta}(1))\exp\Big(\sum_{s=1}^t\log(1-\frac{Vc}{2s})\Big)
\\\leq&
\mathcal{L}(\boldsymbol{\theta}(1))\exp\Big(-\frac{Vc}{2}\sum_{s=1}^t \frac{1}{s}\Big)
\\\leq&
\mathcal{L}(\boldsymbol{\theta}(1))\exp\Big(-\frac{Vc}{2}\log(t+1)\Big)
\\=&
\frac{\mathcal{L}(\boldsymbol{\theta}(1))}{(t+1)^{Vc/2}}
% \\\leq&{\red 
% \frac{\mathcal{L}(\boldsymbol{\theta}(0))+\Big(\max\limits_{\boldsymbol{\theta}\in\overline{\boldsymbol{\theta}(0)\boldsymbol{\theta}(1)}}\left\|\nabla \mathcal{L}(\boldsymbol{\theta})\right\|\Big)\left\|\boldsymbol{\theta}(1)-\boldsymbol{\theta}(0)\right\|}{(t+1)^{Vc/2}}}
% \\\leq&{\red 
% \frac{\mathcal{L}(\boldsymbol{\theta}(0))+\eta_0\Big(\max\limits_{\boldsymbol{\theta}\in\overline{\boldsymbol{\theta}(0)\boldsymbol{\theta}(1)}}\left\|\nabla \mathcal{L}(\boldsymbol{\theta})\right\|\Big)^2}{(t+1)^{Vc/2}}
% \leq
% \frac{\mathcal{L}(\boldsymbol{\theta}(0))+2\eta_0\max\limits_{\boldsymbol{\theta}\in\overline{\boldsymbol{\theta}(0)\boldsymbol{\theta}(1)}}\left\|\boldsymbol{\theta}\right\|^2}{(t+1)^{Vc/2}}}
% \\\leq&{\red 
% \frac{\mathcal{L}(\boldsymbol{\theta}(0))+2\eta_0 m\Big((\frac{1}{\sqrt{m}}+\frac{\eta_0\kappa}{\sqrt{m}})^2+(\frac{1}{\sqrt{m}}+\frac{\eta_0}{\sqrt{m}})^2\Big)}{(t+1)^{Vc/2}}}
% \\\leq&{\red 
% \frac{\mathcal{L}(\boldsymbol{\theta}(0))+4\eta_0(1+\eta_0)^2}{(t+1)^{Vc/2}}
% \leq\frac{\tilde{\ell}(0)+4\eta_0(1+\eta_0)^2}{(t+1)^{Vc/2}}\leq\frac{\tilde{\ell}(0)+2.6}{(t+1)^{Vc/2}}.}
\end{align*}

Now we have proved that for any $t\geq1$,
\[
\mathcal{L}(\boldsymbol{\theta}(t))\leq\frac{\mathcal{L}(\boldsymbol{\theta}(1))}{t^{Vc/2}}.
\]

\textbf{Step II.} 
Let $T_0=\lceil(n\mathcal{L}(\boldsymbol{\theta}(1)))^{\frac{2}{Vc}}\rceil$, we have $\mathcal{L}(\boldsymbol{\theta}(T_0))\leq\mathcal{L}(\boldsymbol{\theta}(1))<\frac{1}{n}$.

From \textbf{Step I}, we know $\lim\limits_{t\to+\infty}\mathcal{L}(\boldsymbol{\theta}(t))=0$, so there exists a sufficiently small constant $c'>0$, s.t.
\[
\eta_t=\frac{c'}{\mathcal{L}(\boldsymbol{\theta}(t))^{1-\frac{1}{2r}}}=\mathcal{O}\Big(\frac{1}{\mathcal{L}(\boldsymbol{\theta}(t)){\rm poly}\log\frac{1}{\mathcal{L}(\boldsymbol{\theta}(t))}}\Big),\ t\geq T_0.
\]

Then all the conditions in Lemma \ref{lemma: lyu2019} hold for $t\geq T_0$. So there exist $T_1>T_0$ and $M>0$, s.t.
\[
\mathcal{L}(\boldsymbol{\theta}(t))\leq\frac{M}{\Big(\sum_{s=T_0}^{t-1}\eta_s\Big)\log\Big(\sum_{s=T_0}^{t-1}\eta_s\Big)},\ t\geq T_1,
\]

From $\lim\limits_{t\to+\infty}\mathcal{L}(\boldsymbol{\theta}(t))=0$ and the choice of $\eta_t$, we know that there exist $T_2>T_0$, s.t. 
\[\sum_{s=T_0}^{T_2-1}\eta_t>e.
\]

From $r\geq1$, we know that there exist $T_3>T_0$, s.t.
\begin{align*}
&\sum_{s=T_0}^{t} s^{r-\frac{1}{2}}\geq\frac{1}{r+\frac{1}{2}}\sum_{s=T_0}^{t}\Big( s^{r+\frac{1}{2}}-(s-1)^{r+\frac{1}{2}}\Big)
\\=&\frac{1}{r+\frac{1}{2}} \Big((t+1)^{r+\frac{1}{2}}-T_0^{r+\frac{1}{2}}\Big)\geq\frac{1}{2r+1}(t+1)^{r+\frac{1}{2}},\ \forall t \geq T_3.
\end{align*}

There also exists $T_4>T_0$, s.t.
\[\sqrt{t+1}\geq\frac{(2r+1)M}{c'},\  \forall t\geq T_4.\]

Let $T_5=T_1\vee T_2 \vee T_3\vee T_4>T_0$, we know that there exists $Q>1$, s.t. 
\[
\mathcal{L}(\boldsymbol{\theta}(t))\leq\frac{Q}{t^r},\ \forall  t\in[T_0, T_5].
\]

Assume that $\mathcal{L}(\boldsymbol{\theta}(t))\leq\frac{Q}{t^r}$ holds for any $T_5\leq s\leq t$. Then for $s=t+1$, we have:
\begin{align*}
    \mathcal{L}(\boldsymbol{\theta}(t+1))
    \leq&\frac{M}{\Big(\sum_{s=T_0}^{t}\eta_s\Big)\log\Big(\sum_{s=T_0}^{t}\eta_s\Big)}\leq\frac{M}{\sum_{s=T_0}^{t}\eta_s}
    \\=&\frac{M}{c'\sum_{s=T_0}^{t}\frac{1}{\mathcal{L}(\boldsymbol{\theta}(s))^{1-\frac{1}{2r}}}}\leq\frac{M Q^{1-\frac{1}{2r}}}{c'\sum_{s=T_0}^{t}s^{r-\frac{1}{2}} }
    \\\leq&\frac{M Q^{1}}{c'\sum_{s=T_0}^{t}s^{r-\frac{1}{2}} }\leq\frac{(1+2r)MQ}{c'(t+1)^{r+\frac{1}{2}}}\leq\frac{Q}{(t+1)^r}.
\end{align*}

So we obtain:
\[\mathcal{L}(\boldsymbol{\theta}(t))=\mathcal{O}\big(\frac{1}{t^r}\big).\]

Finally, combining \textbf{Step I} ($1\leq t<T_0$) and \textbf{Step II} ($t\geq T_0$), we have proved the theorem for specific numbers $g_a=\frac{1}{2}$, $g_b=1$ and $h=1$ in Assumption \ref{def: exponential type loss}, such as the logistic loss $\ell(\boldsymbol{y}_1,\boldsymbol{y}_2)=\log(1+e^{-\boldsymbol{y}_1^\top\boldsymbol{y}_2})$ (Notice that $\tilde{\ell}(0)=\log 2$.)

In the same way, our result also holds for any other $g_a$, $g_b$ and $h$ in Assumption \ref{def: exponential type loss}. So we complete our proof.
\end{proof}

\newpage

\section{Proof of Theorem \ref{thm: binary global GD linear}}\label{appendix: proof global convergence linear}

\subsection{Preparation and similar lemmas}

The only difference between Theorem \ref{thm: binary global GD linear} and Theorem \ref{thm: binary global GD} is the training of $a_k$.

To make our proof more clear, without loss of generality, we only need to prove the theorem with specific numbers $g_a=\frac{1}{2}$, $g_b=1$ and $h=1$ in Assumption \ref{def: exponential type loss}, such as the \textbf{logistic loss} $\ell(\boldsymbol{y}_1,\boldsymbol{y}_2)=\log(1+e^{-\boldsymbol{y}_1^\top\boldsymbol{y}_2})$. For other loss functions with different $g_a$, $g_b$ and $h$, the proof is similar.

We can prove the following lemmas in the same way as Section \ref{subsection: neuron partition analysis} and \ref{subsection: gradient lower bound binary}.
Then we only need to focus on the convergence rate.

\begin{lemma}[Initial neuron partition]\label{lemma: neuron partition stage 1 linear}
For any $i\in[n]$, we have:
\begin{gather*}
    [m]=\mathcal{TL}_{i}(0)\bigcup\mathcal{TD}_{i}(0)\bigcup\mathcal{FL}_{i}(0)\bigcup\mathcal{FD}_{i}(0).
\end{gather*}
\end{lemma}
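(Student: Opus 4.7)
The plan is to observe that the four sets $\mathcal{TL}_i(0)$, $\mathcal{TD}_i(0)$, $\mathcal{FL}_i(0)$, $\mathcal{FD}_i(0)$ are defined by cross-classifying neurons according to (i) the sign of $y_i a_k(0)$ and (ii) the sign of $\boldsymbol{b}_k(0)^\top\boldsymbol{x}_i$ (strictly positive versus $\leq 0$). The second dichotomy trivially covers all of $\mathbb{R}$, so the only thing that could cause a neuron $k$ to escape the four-way partition is if $y_i a_k(0) = 0$. Hence the proof reduces to verifying that $y_i a_k(0)\neq 0$ for every $i\in[n]$ and $k\in[m]$.

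This verification is immediate from the setup: by the random initialization in \eqref{equ: random initialization}, $a_k(0)\overset{\text{i.i.d.}}{\sim}\mathbb{U}(\pm 1/\sqrt{m})$, so $|a_k(0)|=1/\sqrt{m}>0$ deterministically; and by Assumption \ref{ass: data separation} (i), $y_i\in\{\pm 1\}$. Therefore $y_i a_k(0)\in\{\pm 1/\sqrt{m}\}$, which is nonzero. Combining this with the obvious trichotomy on $\boldsymbol{b}_k(0)^\top\boldsymbol{x}_i$ (collapsed to a dichotomy by assigning the equality case to the ``dead'' side), every $k\in[m]$ falls into exactly one of the four categories.

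Note that the argument does not depend at all on which parameters are subsequently trained---the statement concerns only the initialization $t=0$, where both scenarios (training $(a_k,\boldsymbol{b}_k)$ jointly as in Theorem~\ref{thm: binary global GD}, or training only $\boldsymbol{b}_k$ as in Theorem~\ref{thm: binary global GD linear}) start from identical random draws. Consequently this lemma is literally Lemma~\ref{lemma: neuron partition stage 1} restated, and the one-line proof carries over verbatim. There is no real technical obstacle; I would simply write: ``Since $a_k(0)\neq 0$ and $y_i\in\{\pm 1\}$, we have $y_i a_k(0)\neq 0$, so the four sets indexed by $\mathrm{sgn}(y_i a_k(0))\in\{+,-\}$ and $\mathbb{I}\{\boldsymbol{b}_k(0)^\top\boldsymbol{x}_i>0\}\in\{0,1\}$ partition $[m]$.''
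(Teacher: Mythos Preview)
Your proposal is correct and matches the paper's approach exactly: the paper's proof of the corresponding Lemma~\ref{lemma: neuron partition stage 1} is the single line ``Notice that $a_k(0)\ne 0$,'' and Lemma~\ref{lemma: neuron partition stage 1 linear} is stated to be proved ``in the same way.'' Your write-up simply makes explicit the two dichotomies that this observation settles.
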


\begin{lemma}[Estimate of the initial neural partition]\label{lemma: estimate of the number of initial neural partition global linear} With probability at least $1-\delta$, we have the following estimates
\begin{gather*}
\Bigg|\frac{1}{m}\text{card}\Big(\mathcal{TL}_i(0)\cap\mathcal{TL}_j(0)\Big)-\frac{\pi-\arccos(\boldsymbol{x}_i^\top\boldsymbol{x}_j)}{4\pi}\Bigg|\leq
\sqrt{\frac{\log(n^2/\delta)}{2m}},\text{ for any $i,j$ in the same class};
\\
\Bigg|\frac{1}{m}\text{card}\Big(\mathcal{TL}_i(0)\Big)-\frac{1}{4}\Bigg|\leq
\sqrt{\frac{\log(n^2/\delta)}{2m}},\text{ for any $i\in[n]$}.
\end{gather*}
\end{lemma}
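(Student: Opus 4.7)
The plan is to prove this lemma by essentially reproducing the Hoeffding-style argument used for Lemma~\ref{lemma: estimate of the number of initial neural partition global}, but with one extra expectation computation for the single-index quantity $\text{card}(\mathcal{TL}_i(0))$. The first inequality is identical in statement to part of Lemma~\ref{lemma: estimate of the number of initial neural partition global}, so I would invoke exactly the same computation: introduce the indicator statistic
\[
\mathrm{P}_{i,j}(0)=\frac{1}{m}\sum_{k=1}^m \mathbb{I}\{a_k(0)y_i>0\}\mathbb{I}\{a_k(0)y_j>0\}\mathbb{I}\{\boldsymbol{b}_k(0)^\top\boldsymbol{x}_i>0\}\mathbb{I}\{\boldsymbol{b}_k(0)^\top\boldsymbol{x}_j>0\},
\]
compute the expectation $\tfrac{\pi-\arccos(\boldsymbol{x}_i^\top\boldsymbol{x}_j)}{4\pi}$ by conditioning on $a$ and using the standard angle formula for two Gaussian half-space indicators (as in \cite{cho2009kernel}), then apply Hoeffding to obtain the deviation bound and take a union bound over all same-class pairs $(i,j)$.

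For the new second inequality, I would introduce
\[
\mathrm{Q}_i(0)=\frac{1}{m}\sum_{k=1}^m \mathbb{I}\{a_k(0)y_i>0\}\mathbb{I}\{\boldsymbol{b}_k(0)^\top\boldsymbol{x}_i>0\},
\]
so that $\text{card}(\mathcal{TL}_i(0))=m\,\mathrm{Q}_i(0)$. The expectation factorizes because $a_k(0)$ and $\boldsymbol{b}_k(0)$ are independent: $\mathbb{P}(a_k(0)y_i>0)=\tfrac12$ since $a_k(0)\sim\mathbb{U}(\pm 1/\sqrt{m})$ and $y_i\in\{\pm 1\}$, while $\mathbb{P}(\boldsymbol{b}_k(0)^\top\boldsymbol{x}_i>0)=\tfrac12$ by spherical symmetry of the Gaussian $\boldsymbol{b}_k(0)\sim\mathcal{N}(\mathbf{0},\tfrac{\kappa^2}{md}\mathbf{I}_d)$ (the variance scale is irrelevant for the sign). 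Hence $\mathbb{E}[\mathrm{Q}_i(0)]=\tfrac14$. Since $\mathrm{Q}_i(0)$ is an average of $m$ i.i.d.\ Bernoulli variables in $[0,1]$, Hoeffding's inequality (Lemma~\ref{lemma: hoeffding}) gives
\[
\mathbb{P}\Big(|\mathrm{Q}_i(0)-\tfrac14|\geq \sqrt{\tfrac{\log(2/\delta)}{2m}}\Big)\leq \delta.
\]

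To conclude, I would union-bound this failure event over all $i\in[n]$ together with the $\binom{n}{2}$ same-class pair events from the first part; rescaling $\delta \to \delta$ and absorbing the factor into the $\log(n^2/\delta)$ term gives both inequalities simultaneously with probability at least $1-\delta$. I do not anticipate any serious obstacle: the argument is a direct analogue of Lemma~\ref{lemma: estimate of the number of initial neural partition global} and the only new content is the trivial factorization $\tfrac12\cdot\tfrac12=\tfrac14$ for the single-neuron single-sample expectation. The only point requiring minor care is bookkeeping in the union bound so that the same $\sqrt{\log(n^2/\delta)/(2m)}$ tail width covers both families of events.
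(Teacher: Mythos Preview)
Your proposal is correct and matches the paper's approach: the paper does not give a separate proof for this lemma but simply states that it is proved ``in the same way'' as Lemma~\ref{lemma: estimate of the number of initial neural partition global}, i.e., by the Hoeffding-plus-union-bound argument you describe. One small simplification you could note: the second inequality is actually the $i=j$ special case of the first (since $\mathcal{TL}_i(0)\cap\mathcal{TL}_i(0)=\mathcal{TL}_i(0)$ and the angle between $\boldsymbol{x}_i$ and itself is $0$, giving expectation $\tfrac{\pi-0}{4\pi}=\tfrac14$), so no separate $Q_i$ computation or additional union-bound bookkeeping is strictly needed.
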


\begin{lemma}[Initial norm and prediction]\label{lemma: initial norm and prediction global linear} With probability at least $1-2me^{-2d}$ we have:
\begin{gather*}
\left\|\boldsymbol{b}_k(0)\right\|\leq\frac{2\kappa}{\sqrt{m}},\ \forall k\in[m];\\
    |f(\boldsymbol{x}_i;\boldsymbol{\theta}(0))|\leq2\kappa,\ \forall i\in[n].
\end{gather*}

\end{lemma}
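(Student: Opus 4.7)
The plan is to mirror, almost verbatim, the argument already given for Lemma \ref{lemma: initial norm and prediction global}, since the initialization for model \eqref{equ: model 2NN nobias} in Theorem \ref{thm: binary global GD linear} is the same as in Theorem \ref{thm: binary global GD}: $a_k(0)\sim\mathbb{U}(\pm 1/\sqrt{m})$ and $\boldsymbol{b}_k(0)\sim\mathcal{N}(\mathbf{0},\tfrac{\kappa^2}{md}\mathbf{I}_d)$. (Although Theorem \ref{thm: binary global GD linear} only trains the inner layer, the initialization distribution is unchanged, so the event in question depends only on the random draw of $\boldsymbol{b}_k(0)$'s and the deterministic magnitudes of $a_k(0)$.)

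First I would handle the hidden-layer norm bound. Since $\tfrac{\sqrt{md}}{\kappa}\boldsymbol{b}_k(0)\sim\mathcal{N}(\mathbf{0},\mathbf{I}_d)$, the quantity $\tfrac{md}{\kappa^2}\|\boldsymbol{b}_k(0)\|^2$ is $\chi^2_d$. A standard Gaussian-norm concentration (or the sub-exponential tail of $\chi^2_d$) gives, for each fixed $k$,
\[
\mathbb{P}\!\left(\|\boldsymbol{b}_k(0)\|\leq \frac{2\kappa}{\sqrt{m}}\right)\geq 1-2e^{-2d}.
\]
A union bound over $k\in[m]$ then yields $\|\boldsymbol{b}_k(0)\|\leq 2\kappa/\sqrt{m}$ for every $k$ simultaneously, with probability at least $1-2me^{-2d}$.

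On the complementary event, the prediction bound follows immediately from the ``path norm'' estimate for the two-layer ReLU model \eqref{equ: model 2NN nobias}: using $|a_k(0)|=1/\sqrt{m}$, $\sigma(z)\leq |z|$, and $\|\boldsymbol{x}_i\|\leq 1$,
\[
|f(\boldsymbol{x}_i;\boldsymbol{\theta}(0))|\leq \sum_{k=1}^{m}|a_k(0)|\,|\boldsymbol{b}_k(0)^\top\boldsymbol{x}_i|\leq m\cdot\frac{1}{\sqrt{m}}\cdot\frac{2\kappa}{\sqrt{m}}=2\kappa,
\]
uniformly in $i\in[n]$.

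There is no serious obstacle here: the content is just a Gaussian tail bound plus a union bound and the triangle/Cauchy–Schwarz-type inequality for the network output. The only thing to verify carefully is the constant in the chi-squared tail so that the threshold $2\kappa/\sqrt{m}$ lines up with the stated failure probability $2e^{-2d}$; this is exactly the calibration used in Lemma \ref{lemma: initial norm and prediction global}, and we simply reuse it verbatim.
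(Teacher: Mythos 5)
Your proposal is correct and follows essentially the same route the paper takes: the paper proves this lemma by simply repeating the argument of Lemma \ref{lemma: initial norm and prediction global} (Gaussian tail bound on each $\boldsymbol{b}_k(0)$, union bound over $k\in[m]$, then the path-norm estimate $|f(\boldsymbol{x}_i;\boldsymbol{\theta}(0))|\leq\sum_k|a_k(0)||\boldsymbol{b}_k(0)^\top\boldsymbol{x}_i|\leq 2\kappa$), which is exactly what you do. Your remark about calibrating the $\chi^2_d$ tail constant is the same implicit step the paper leaves unverified, so nothing further is needed.
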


\begin{lemma}[Dynamics of neuron partition at Stage I]\label{lemma: correct neurons remain correct stage I linear}
When the events in Lemma \ref{lemma: initial norm and prediction global linear} happen, if $\kappa\lesssim\eta_0\mu_0/n$ and $\eta_0\lesssim 1$, we have the following results for any $i\in[n]$:\\
(S1) True-living neurons remain true-living:  $\mathcal{TL}_i(0)\subset\mathcal{TL}_i(1)$.\\
(S2) False-dead neurons remain false-dead:  $\mathcal{FD}_i(0)\subset\mathcal{FD}_i(1)$.\\
(S3) True-dead neurons turn true-living:  $\mathcal{TD}_i(0)\subset\mathcal{TL}_i(1)$ .\\
(S4) False-living neurons turn false-dead: $\mathcal{FL}_i(0)\subset\mathcal{FD}_i(1)$.\\
(S5) The neuron partition holds for $t=1$: $[m]=\mathcal{TL}_i(1)\cup\cup\mathcal{FD}_i(1)$.\\
(S6) The connectivity holds: $\mathcal{FL}_i(0)\cup\mathcal{FD}_i(0)=\mathcal{FD}_i(1)$ and $\mathcal{TL}_i(0)\cup\mathcal{TD}_i(0)=\mathcal{TL}_i(1)$.\\
(S7) $\text{sgn}(\boldsymbol{b}_k^\top(1)\boldsymbol{x}_i)\ne0$ for any $k\in[m]$.

\end{lemma}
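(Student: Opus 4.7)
The plan is to mimic the structure of the proof of Lemma \ref{lemma: correct neurons remain correct stage I} while exploiting the crucial simplification that in the setting of Theorem \ref{thm: binary global GD linear} only $\boldsymbol{b}_k$ is updated, so $a_k(t)\equiv a_k(0)$ for all $t\geq0$. In particular, the sign of $a_k$ is preserved trivially and there is no need for an auxiliary bound like Lemma \ref{lemma: parameter estimate first step}. The one-step update reduces to
\[
\boldsymbol{b}_k(1)^\top\boldsymbol{x}_i = \boldsymbol{b}_k(0)^\top\boldsymbol{x}_i - \eta_0\,(y_i a_k(0))\,\frac{1}{n}\sum_{j=1}^n\tilde{\ell}'\bigl(y_jf(\boldsymbol{x}_j;\boldsymbol{\theta}(0))\bigr)\,\mathbb{I}\{\boldsymbol{b}_k(0)^\top\boldsymbol{x}_j>0\}\,(y_iy_j\,\boldsymbol{x}_j^\top\boldsymbol{x}_i),
\]
which I will analyse case by case. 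Under Lemma \ref{lemma: initial norm and prediction global linear}, $|f(\boldsymbol{x}_j;\boldsymbol{\theta}(0))|\leq 2\kappa$ so that $-\tilde{\ell}'(y_jf(\boldsymbol{x}_j;\boldsymbol{\theta}(0)))$ is bounded above and below by positive constants of the same order as $-\tilde{\ell}'(0)$ (using Assumption \ref{def: exponential type loss}(i) and continuity of $\tilde{\ell}'$), and Assumption \ref{ass: data separation}(i) gives $y_iy_j\boldsymbol{x}_j^\top\boldsymbol{x}_i\geq 0$ for every pair. Hence every term in the sum carries a fixed sign, determined by the sign of $y_ia_k(0)$.

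For (S1), if $k\in\mathcal{TL}_i(0)$ then $y_ia_k(0)>0$ and $\boldsymbol{b}_k(0)^\top\boldsymbol{x}_i>0$, so every summand contributes non-negatively and I obtain $\boldsymbol{b}_k(1)^\top\boldsymbol{x}_i\geq\boldsymbol{b}_k(0)^\top\boldsymbol{x}_i>0$, yielding $k\in\mathcal{TL}_i(1)$. For (S3), $k\in\mathcal{TD}_i(0)$ also has $y_ia_k(0)>0$ but possibly a small negative $\boldsymbol{b}_k(0)^\top\boldsymbol{x}_i$. Invoking Assumption \ref{ass: data separation}(ii) I pick $j_i\in[n]$ with $\boldsymbol{b}_k(0)^\top\boldsymbol{x}_{j_i}>0$ and $y_i\boldsymbol{x}_i^\top\boldsymbol{x}_{j_i}y_{j_i}\geq\mu_0$; keeping only this single positive summand and discarding the rest gives
\[
\boldsymbol{b}_k(1)^\top\boldsymbol{x}_i \geq -\tfrac{2\kappa}{\sqrt{m}} + \tfrac{\eta_0\mu_0}{n\sqrt{m}}\cdot\bigl(-\tilde{\ell}'(y_{j_i}f(\boldsymbol{x}_{j_i};\boldsymbol{\theta}(0)))\bigr),
\]
which is strictly positive under $\kappa=\mathcal{O}(\eta_0\mu_0/n)$; hence $k\in\mathcal{TL}_i(1)$. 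Statements (S2) and (S4) are perfectly symmetric: when $y_ia_k(0)<0$ every summand becomes non-positive, so for $k\in\mathcal{FD}_i(0)$ I obtain $\boldsymbol{b}_k(1)^\top\boldsymbol{x}_i\leq\boldsymbol{b}_k(0)^\top\boldsymbol{x}_i<0$ by choosing an analogous $j_i$ to ensure strictness, while for $k\in\mathcal{FL}_i(0)$ the term $j=i$ (whose presence is guaranteed by $\boldsymbol{b}_k(0)^\top\boldsymbol{x}_i>0$) dominates the initial $\boldsymbol{b}_k(0)^\top\boldsymbol{x}_i\leq 2\kappa/\sqrt{m}$ by the same $\kappa$-bound, producing $\boldsymbol{b}_k(1)^\top\boldsymbol{x}_i<0$.

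Finally, (S5) and (S6) are bookkeeping consequences: combining (S1)–(S4) with Lemma \ref{lemma: neuron partition stage 1 linear} gives $[m]=\mathcal{TL}_i(1)\cup\mathcal{FD}_i(1)$, and the four inclusions together match the input partition up to the grouping $\mathcal{TL}_i(0)\cup\mathcal{TD}_i(0)=\mathcal{TL}_i(1)$ and $\mathcal{FL}_i(0)\cup\mathcal{FD}_i(0)=\mathcal{FD}_i(1)$. (S7) is read off from the strict inequalities produced in the four cases above. The main obstacle I anticipate is not conceptual but bookkeeping, specifically verifying that the absolute constants in the estimate $\boldsymbol{b}_k(1)^\top\boldsymbol{x}_i>0$ (or $<0$) work out: the key ingredients are the uniform control $|f(\boldsymbol{x}_j;\boldsymbol{\theta}(0))|\leq 2\kappa$ from Lemma \ref{lemma: initial norm and prediction global linear}, the continuity of $\tilde{\ell}'$ in a neighbourhood of $0$ from Assumption \ref{def: exponential type loss}(i) to replace $-\tilde{\ell}'(y_jf(\boldsymbol{x}_j;\boldsymbol{\theta}(0)))$ by a positive constant of order $-\tilde{\ell}'(0)$, and the calibration $\kappa=\mathcal{O}(\eta_0\mu_0/n)$ so that the $j_i$-summand (or the $j=i$-summand) always beats the $2\kappa/\sqrt{m}$ boundary term. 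Once these three ingredients are in place the estimates are elementary.
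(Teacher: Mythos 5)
Your proposal is correct and follows essentially the same route the paper intends: Appendix D does not spell out this proof but defers to the Appendix C argument (Lemma \ref{lemma: correct neurons remain correct stage I}), and your case-by-case first-step analysis—using Assumption \ref{ass: data separation}(i) for the signs of the cross terms, Assumption \ref{ass: data separation}(ii) to extract the $j_i$ term with margin $\mu_0$, and the $2\kappa/\sqrt{m}$ initialization bound beaten by the $\eta_0\mu_0/n$ contribution—is exactly that adaptation. Your observation that freezing $a_k$ makes the second-layer sign preservation trivial (so no analogue of Lemma \ref{lemma: parameter estimate first step} is needed) is precisely the simplification the paper's ``proved in the same way'' remark relies on.
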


\begin{lemma}[Dynamics of neuron partition at Stage II]\label{lemma: correct neurons remain correct stage II linear} Under the same condition of Lemma \ref{lemma: correct neurons remain correct stage I linear}, we have the following results for any $t\geq 1$ and $i\in[n]$:\\
(S1) $a_k(t+1)\text{sgn}(a_k(t))\geq a_k(t)\text{sgn}(a_k(t))$, for all $k\in[m]$.\\
(S2) True-living neurons remain true-living:  $\mathcal{TL}_i(t)\subset\mathcal{TL}_i(t+1)$.\\
(S3) False-dead neurons remain false-dead:  $\mathcal{FD}_i(t)\subset\mathcal{FD}_i(t+1)$.\\
(S4) The neuron partition holds: $[m]\equiv\mathcal{TL}_i(t+1)\cup\mathcal{FD}_i(t+1)$.\\
(S5) For any $i\in[n]$, $k\in[m]$ and $\boldsymbol{\theta}\in\overline{\boldsymbol{b}_k(t)\boldsymbol{b}_k(t+1)}$, we have $\text{sgn}(\boldsymbol{b}^\top\boldsymbol{x}_i)\equiv\text{sgn}(\boldsymbol{b}_k(1)^\top\boldsymbol{x}_i)\neq0$.

\end{lemma}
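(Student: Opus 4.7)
The plan is to mirror the proof of Lemma~\ref{lemma: correct neurons remain correct stage II}, exploiting the simplification that the second layer is frozen, so $a_k(t)\equiv a_k(0)$ throughout training. In particular (S1) becomes a trivial equality and needs no further argument. The remaining four assertions all amount to saying that, starting from the good configuration established at $t=1$ by Lemma~\ref{lemma: correct neurons remain correct stage I linear} (S5)(S7), the GD flow on $\boldsymbol{b}_k$ preserves the sign of $\boldsymbol{b}_k(t)^\top\boldsymbol{x}_i$ for every pair $(i,k)$, and does so not only at the iterates but along each straight segment $\overline{\boldsymbol{b}_k(t)\boldsymbol{b}_k(t+1)}$.

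First I would prove the stronger forms of (S2) and (S3) jointly by induction on $t\ge 1$: for every $k\in\mathcal{TL}_i(t)$ and every $\boldsymbol{b}\in\overline{\boldsymbol{b}_k(t)\boldsymbol{b}_k(t+1)}$, $\boldsymbol{b}^\top\boldsymbol{x}_i\ge\boldsymbol{b}_k(t)^\top\boldsymbol{x}_i>0$, and symmetrically for $\mathcal{FD}_i(t)$. The one-step update for $\boldsymbol{b}_k^\top\boldsymbol{x}_i$, after pulling out the fixed factor $a_k(0)$, becomes a sum over $j\in[n]$ of terms proportional to $-\tilde{\ell}'(y_j f(\boldsymbol{x}_j;\boldsymbol{\theta}(t)))\cdot\mathbb{I}\{\boldsymbol{b}_k(t)^\top\boldsymbol{x}_j>0\}\cdot(y_j a_k(0))\cdot(\boldsymbol{x}_j^\top\boldsymbol{x}_i)$. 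Regrouping as $(y_i a_k(0))\cdot(y_i y_j\boldsymbol{x}_j^\top\boldsymbol{x}_i)$ isolates two factors whose signs are controlled by Assumption~\ref{ass: data separation}(i) and the neuron partition. When $y_j=y_i$, the inner product is $\ge 0$ and $y_i a_k(0)>0$ (since $k\in\mathcal{TL}_i$ and $a_k$ is frozen), so the term is nonnegative. When $y_j=-y_i$, we have $y_j a_k(0)<0$, so $k\notin\mathcal{TL}_j(t)$; the inductive partition identity $[m]=\mathcal{TL}_j(t)\cup\mathcal{FD}_j(t)$ forces $k\in\mathcal{FD}_j(t)$, so $\boldsymbol{b}_k(t)^\top\boldsymbol{x}_j\le 0$ and the indicator kills the term. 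Hence the increment is nonnegative, and in fact the same sign analysis works pointwise along $\overline{\boldsymbol{b}_k(t)\boldsymbol{b}_k(t+1)}$, yielding the segment statement. The proof for (S3) is identical with the signs reversed.

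Items (S4) and (S5) follow immediately: (S4) holds because the induction preserves $[m]=\mathcal{TL}_i(t)\cup\mathcal{FD}_i(t)$ starting from the base case at $t=1$ (Lemma~\ref{lemma: correct neurons remain correct stage I linear} (S5)); and (S5) is exactly the segment-level strengthening already established in the proofs of (S2) and (S3), combined with Lemma~\ref{lemma: correct neurons remain correct stage I linear} (S7), which guarantees $\text{sgn}(\boldsymbol{b}_k(1)^\top\boldsymbol{x}_i)\ne 0$ and hence propagates.

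The main obstacle is closing the induction: the update for $\boldsymbol{b}_k^\top\boldsymbol{x}_i$ at step $t$ needs the signs of $\boldsymbol{b}_k(t)^\top\boldsymbol{x}_j$ for every $j\ne i$, so the induction must be carried out uniformly in $(i,k)\in[n]\times[m]$ at each $t$, treating (S2), (S3), and the partition identity behind (S4) as a single joint invariant. Once that joint invariant is set up, the remaining work is only sign bookkeeping; freezing $a_k$ actually removes the delicate monotonicity step of Lemma~\ref{lemma: correct neurons remain correct stage II} (S1), which here is vacuous.
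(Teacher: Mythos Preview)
Your proposal is correct and mirrors the paper's approach, which simply defers the proof to that of Lemma~\ref{lemma: correct neurons remain correct stage II} (with (S1) trivialized because $a_k$ is frozen). One minor simplification you are missing: in the (S2)/(S3) update computation, the case $y_j=-y_i$ does not actually require the indicator-killing argument or a joint induction over all $(i,k)$, because Assumption~\ref{ass: data separation}(i) gives $y_iy_j\boldsymbol{x}_i^\top\boldsymbol{x}_j\ge 0$ for \emph{every} pair $i,j$, so each summand is nonnegative directly; this is exactly how the paper handles (S2)/(S3) in Lemma~\ref{lemma: correct neurons remain correct stage II}, and it lets you run the induction for each $(i,k)$ in isolation.
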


\begin{lemma}[Correct Classification]\label{lemma: correct classificationc linear}
For any $i\in[n]$ and $t\geq 1$, we have:
\begin{align*}
    y_i f(\boldsymbol{x}_i;\boldsymbol{\theta}(t))>0.
\end{align*}

\end{lemma}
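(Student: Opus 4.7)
The plan is to proceed exactly as in the proof of Lemma \ref{lemma: correct classification} for the full-training case, taking advantage of the neuron partition results that we can directly import. First, I would rewrite the prediction using the fact that $\sigma(z) = \mathbb{I}\{z>0\}z$, giving
\[
y_i f(\boldsymbol{x}_i;\boldsymbol{\theta}(t)) = \sum_{k=1}^m y_i a_k(t)\,\mathbb{I}\{\boldsymbol{b}_k(t)^\top\boldsymbol{x}_i>0\}\,\boldsymbol{b}_k(t)^\top\boldsymbol{x}_i.
\]
Since only the input-layer parameters are trained, $a_k(t)\equiv a_k(0)$, so the signs $\operatorname{sgn}(a_k(t))$ are those fixed at initialization. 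The partition $\{\mathcal{TL}_i(t),\mathcal{TD}_i(t),\mathcal{FL}_i(t),\mathcal{FD}_i(t)\}$ is still well-defined, and I can reuse the analysis verbatim.

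Next, I would apply Lemma \ref{lemma: correct neurons remain correct stage I linear} (S5), which gives $[m]=\mathcal{TL}_i(1)\cup\mathcal{FD}_i(1)$, together with Lemma \ref{lemma: correct neurons remain correct stage II linear} (S4), which extends this decomposition to all $t\geq 1$: $[m]=\mathcal{TL}_i(t)\cup\mathcal{FD}_i(t)$. For any $k\in\mathcal{FD}_i(t)$ one has $\boldsymbol{b}_k(t)^\top\boldsymbol{x}_i\leq 0$ and hence zero contribution to the sum, while for any $k\in\mathcal{TL}_i(t)$ one has $y_i a_k(t)>0$ and $\boldsymbol{b}_k(t)^\top\boldsymbol{x}_i>0$, giving a strictly positive contribution. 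Thus
\[
y_i f(\boldsymbol{x}_i;\boldsymbol{\theta}(t)) = \sum_{k\in\mathcal{TL}_i(t)} y_i a_k(t)\,\boldsymbol{b}_k(t)^\top\boldsymbol{x}_i,
\]
and every summand is positive whenever $\mathcal{TL}_i(t)$ is non-empty.

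The last step is to show $\mathcal{TL}_i(t)\neq\varnothing$ for all $t\geq 1$. From Lemma \ref{lemma: correct neurons remain correct stage I linear} (S6), $\mathcal{TL}_i(1)=\mathcal{TL}_i(0)\cup\mathcal{TD}_i(0)=\{k:y_i a_k(0)>0\}$, which has cardinality close to $m/2$ by the initialization (each $a_k(0)$ is $\pm1/\sqrt m$ uniformly) and is in particular non-empty on the high-probability event of Lemma \ref{lemma: estimate of the number of initial neural partition global linear}. Then Lemma \ref{lemma: correct neurons remain correct stage II linear} (S2) propagates this: $\mathcal{TL}_i(1)\subset\mathcal{TL}_i(2)\subset\cdots\subset\mathcal{TL}_i(t)$, so the set stays non-empty for every $t\geq 1$. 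This gives the strict inequality $y_i f(\boldsymbol{x}_i;\boldsymbol{\theta}(t))>0$. There is no real obstacle here; the lemma is essentially a bookkeeping consequence of the partition results already proven, and the argument is identical in structure to Lemma \ref{lemma: correct classification}, with the only simplification being that $a_k(t)$ may be replaced by the constant $a_k(0)$ throughout.
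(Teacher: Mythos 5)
Your proof is correct and takes essentially the same route as the paper's argument for Lemma \ref{lemma: correct classification}, which the linear case mirrors: decompose the prediction over the neuron partition, discard the false-dead neurons, and use the persistence of true-living neurons from $t=1$ onward to conclude strict positivity. Your explicit check that $\mathcal{TL}_i(1)\neq\varnothing$ via Lemma \ref{lemma: estimate of the number of initial neural partition global linear} only makes explicit a step the paper leaves implicit; otherwise the argument is the same.
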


The lemmas below will be discussed when the events in Lemma \ref{lemma: estimate of the number of initial neural partition global linear} and \ref{lemma: initial norm and prediction global linear} happened. So all the results below hold with probability at least $1-\delta-2me^{-2d}$.

\begin{lemma}[Gram matrix estimate]\label{lemma: lower bound of Gram binary linear} Define the Gram matrix $\mathbf{G}(t)\in\mathbb{R}^{n\times n}$ at $t$ as
\[\mathbf{G}(t)=\Big(\nabla f(\boldsymbol{x}_i;\boldsymbol{\theta}(t))^\top\nabla f(\boldsymbol{x}_j;\boldsymbol{\theta}(t))\Big)_{(i,j)\in[n]\times[n]}.\]

Then for any $i,j\in[\frac{n}{2}]$ and $t\geq 1$, we have
\[
\mathbf{G}(t)=\begin{pmatrix}
  \mathbf{G}_{+}(t) & \mathbf{0}_{\frac{n}{2}\times\frac{n}{2}}\\\mathbf{0}_{\frac{n}{2}\times\frac{n}{2}}&\mathbf{G}_{-}(t)
\end{pmatrix},
\]
And if $\eta_0\leq1/2\sqrt{2}$, we have
\begin{gather*}
\text{G}_{+}(t)_{i,j},\ \text{G}_{-}(t)_{i,j}\geq \Big(\frac{1}{2}-
\sqrt{\frac{8\log(n^2/\delta)}{m}}\Big)\frac{\boldsymbol{x}_i^\top\boldsymbol{x}_j}{2}.
\end{gather*}
\end{lemma}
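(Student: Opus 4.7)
The plan is to mirror the proof of Lemma~\ref{lemma: lower bound of Gram binary}, but exploit the crucial simplification that, since only $\{\boldsymbol{b}_k\}$ is trained, the coefficients $a_k$ remain equal to their initial values $\pm 1/\sqrt{m}$ for all $t \geq 0$. Consequently, the gradient $\nabla f(\boldsymbol{x};\boldsymbol{\theta})$ reduces to its $\boldsymbol{b}$-block, and the Gram entry takes the clean form
\[
\nabla f(\boldsymbol{x}_i;\boldsymbol{\theta}(t))^\top\nabla f(\boldsymbol{x}_j;\boldsymbol{\theta}(t)) = \sum_{k=1}^m a_k^2(0)\, \mathbb{I}\{\boldsymbol{b}_k(t)^\top\boldsymbol{x}_i>0\}\mathbb{I}\{\boldsymbol{b}_k(t)^\top\boldsymbol{x}_j>0\}\,\boldsymbol{x}_i^\top\boldsymbol{x}_j,
\]
so no time-dependent parameter estimates (unlike in Lemma~\ref{lemma: lower bound of Gram binary}) are needed: the factor $a_k^2(t) = 1/m$ is exact.

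First I would establish the block-diagonal structure. For $i, j$ in opposite classes ($y_i y_j = -1$) and $t\geq 1$, suppose some $k$ satisfies both $\boldsymbol{b}_k(t)^\top\boldsymbol{x}_i>0$ and $\boldsymbol{b}_k(t)^\top\boldsymbol{x}_j>0$. By the partition result $[m] = \mathcal{TL}_i(t) \cup \mathcal{FD}_i(t)$ from Lemma~\ref{lemma: correct neurons remain correct stage I linear}(S5) and Lemma~\ref{lemma: correct neurons remain correct stage II linear}(S4), $k\in\mathcal{TL}_i(t)$, giving $a_k y_i > 0$; applying the same reasoning to $j$ forces $a_k y_j > 0$, contradicting $y_i y_j = -1$. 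Hence the off-diagonal blocks vanish.

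Second, I would bound the diagonal blocks from below. For $i, j \in [n/2]$ (same class), restricting the sum to indices where both indicator functions fire gives
\[
\text{G}_+(t)_{i,j} = \frac{1}{m}\,\text{card}\big(\mathcal{TL}_i(t)\cap\mathcal{TL}_j(t)\big)\,\boldsymbol{x}_i^\top\boldsymbol{x}_j.
\]
By Lemma~\ref{lemma: correct neurons remain correct stage II linear}(S2) the true-living sets are nondecreasing in $t$, and by Lemma~\ref{lemma: correct neurons remain correct stage I linear}(S1)(S3) they satisfy $\mathcal{TL}_i(1) \supseteq \mathcal{TL}_i(0)\cup\mathcal{TD}_i(0)$, so
\[
\mathcal{TL}_i(t)\cap\mathcal{TL}_j(t) \supseteq \big(\mathcal{TL}_i(0)\cup\mathcal{TD}_i(0)\big)\cap\big(\mathcal{TL}_j(0)\cup\mathcal{TD}_j(0)\big).
\]
Summing the four initial-partition counts from Lemma~\ref{lemma: estimate of the number of initial neural partition global linear} and using Assumption~\ref{ass: data separation}(i) ($\boldsymbol{x}_i^\top\boldsymbol{x}_j \geq 0$ in the same class, hence $\arccos(\boldsymbol{x}_i^\top\boldsymbol{x}_j)\leq \pi/2$), I obtain a lower bound of the form $\big(\tfrac{\pi-\arccos(\boldsymbol{x}_i^\top\boldsymbol{x}_j)}{\pi} - \sqrt{8\log(n^2/\delta)/m}\big)\boldsymbol{x}_i^\top\boldsymbol{x}_j \geq \big(\tfrac{1}{2}-\sqrt{8\log(n^2/\delta)/m}\big)\boldsymbol{x}_i^\top\boldsymbol{x}_j$, which dominates the claimed bound (the extra $1/2$ factor in the statement is slack).

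The argument is essentially routine once the preceding partition lemmas are in place, so I do not expect any serious obstacle. The one point worth verifying is that the dynamical neuron-partition results in Lemmas~\ref{lemma: correct neurons remain correct stage I linear}--\ref{lemma: correct neurons remain correct stage II linear} remain valid when $a_k$ is frozen: this is immediate, since those proofs only use that $a_k$ keeps the correct sign (trivially true when $a_k$ never changes), and the updates of $\boldsymbol{b}_k$ have the identical functional form. The role of the condition $\eta_0\leq 1/2\sqrt{2}$ is inherited purely from the hitting-time setup of Lemma~\ref{lemma: correct neurons remain correct stage I linear} and plays no further role in the Gram-matrix computation itself.
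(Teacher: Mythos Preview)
Your proposal is correct and follows essentially the same route as the paper, which explicitly says that Lemma~\ref{lemma: lower bound of Gram binary linear} is proved ``in the same way'' as Lemma~\ref{lemma: lower bound of Gram binary}. Your observation that in the fixed-$a_k$ setting the Gram entry reduces exactly to $\tfrac{1}{m}\,\text{card}(\mathcal{TL}_i(t)\cap\mathcal{TL}_j(t))\,\boldsymbol{x}_i^\top\boldsymbol{x}_j$---with no need for the $(1-2\eta_0^2/n)^2$ factor that appears in the full-training proof---is precisely the simplification the linear case affords, and it correctly explains why the extra $1/2$ in the stated bound is slack and why the $\eta_0$ condition plays no role in the Gram computation itself.
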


\begin{lemma}[Gradient lower bound]\label{lemma: gradient lower bound global GD linear}
Under the same condition of Lemma \ref{lemma: lower bound of Gram binary linear}, we have the following gradient lower bound for any $ t\geq1$:
\begin{align*}
&
\left\|\nabla \mathcal{L}(\boldsymbol{\theta}(t))\right\|^2
\\\geq&\frac{1}{16}\Big(\frac{1}{2}-
\sqrt{\frac{8\log(n^2/\delta)}{m}}\Big)\max\Big\{\frac{2}{n}+\frac{n-2}{n}\gamma,\lambda_{\min}(\mathbf{X}_{+}^\top\mathbf{X}_+)\wedge\lambda_{\min}(\mathbf{X}_{-}^\top\mathbf{X}_-)\Big\}\mathcal{L}(\boldsymbol{\theta}(t))^2.
\end{align*}

where $\mathbf{X}_+:=(\boldsymbol{x}_1,\cdots,\boldsymbol{x}_\frac{n}{2})$, $\mathbf{X}_-:=(\boldsymbol{x}_{\frac{n}{2}+1},\cdots,\boldsymbol{x}_n)$ and $\gamma=\min\limits_{i,j\text{ in the same class}}\boldsymbol{x}_i^\top\boldsymbol{x}_j$.
\end{lemma}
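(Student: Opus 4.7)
The plan is to reduce the gradient-norm lower bound to a quadratic form in the per-sample loss values and then apply the Gram-matrix estimate of Lemma~\ref{lemma: lower bound of Gram binary linear} together with the correct-classification fact of Lemma~\ref{lemma: correct classificationc linear}. Writing $l_i(t):=\tilde\ell(y_i f(\boldsymbol{x}_i;\boldsymbol{\theta}(t)))$ and using $\nabla\mathcal{L}(\boldsymbol{\theta}(t))=\tfrac{1}{n}\sum_i \tilde\ell'(y_if(\boldsymbol{x}_i;\boldsymbol{\theta}(t)))y_i\nabla f(\boldsymbol{x}_i;\boldsymbol{\theta}(t))$, I would expand
\[
\|\nabla\mathcal{L}(\boldsymbol{\theta}(t))\|^2=\frac{1}{n^2}\sum_{i,j}\tilde\ell'(y_if_i)\tilde\ell'(y_jf_j)y_iy_j\,\mathrm{G}(t)_{i,j}.
\]
Lemma~\ref{lemma: lower bound of Gram binary linear} says $\mathbf{G}(t)$ is block diagonal with positive-class and negative-class blocks, so every cross-class contribution vanishes, and each surviving term has $y_iy_j=+1$. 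This is the first key structural collapse.

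Next, Lemma~\ref{lemma: correct classificationc linear} gives $y_if(\boldsymbol{x}_i;\boldsymbol{\theta}(t))\geq 0$, so by Assumption~\ref{def: exponential type loss}(iv) (using $g_a=\tfrac12$) we have $-\tilde\ell'(y_if_i)\geq\tfrac12 l_i(t)$. Combining this with the Gram-matrix lower bound $\mathrm{G}_\pm(t)_{i,j}\geq\big(\tfrac12-\sqrt{8\log(n^2/\delta)/m}\big)\boldsymbol{x}_i^\top\boldsymbol{x}_j/2$ reduces the problem to lower-bounding
\[
\frac{1}{8n^2}\Big(\sum_{i,j\in[n/2]}l_il_j\,\boldsymbol{x}_i^\top\boldsymbol{x}_j+\sum_{i,j\in[n]-[n/2]}l_il_j\,\boldsymbol{x}_i^\top\boldsymbol{x}_j\Big),
\]
times the prefactor $\big(\tfrac12-\sqrt{8\log(n^2/\delta)/m}\big)$. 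From here I pursue two independent lower bounds and take the maximum at the end.

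For the first bound, I would split the diagonal ($i=j$, giving $\|\boldsymbol{x}_i\|^2$-terms that I control by $1$) from the off-diagonal same-class pairs, use $\boldsymbol{x}_i^\top\boldsymbol{x}_j\geq\gamma$ on the latter, complete the square by writing the off-diagonal piece as $(\sum_i l_i)^2-\sum_i l_i^2$ within each class, then apply Cauchy–Schwarz $(\sum_{i\in[n/2]} l_i)^2\leq\tfrac{n}{2}\sum_{i\in[n/2]}l_i^2$ (and the analogous bound for the other class) and $(\sum_{i\in[n]}l_i)^2=n^2\mathcal L^2$. Rearranging yields the factor $\tfrac{2}{n}+\tfrac{n-2}{n}\gamma$. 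For the second bound, I recognize each same-class double sum as a quadratic form $\boldsymbol{l}_\pm^\top\mathbf{X}_\pm^\top\mathbf{X}_\pm \boldsymbol{l}_\pm$, lower-bound it by $\lambda_{\min}(\mathbf{X}_\pm^\top\mathbf{X}_\pm)\|\boldsymbol{l}_\pm\|_1^2$ via Cauchy–Schwarz style on the eigenvalue (or rather, $\boldsymbol{v}^\top\mathbf{M}\boldsymbol{v}\geq\lambda_{\min}\|\boldsymbol{v}\|_2^2\geq\lambda_{\min}\|\boldsymbol{v}\|_1^2/(n/2)$), and combine $(\sum_{i\in[n/2]}l_i)^2+(\sum_{i\in[n]-[n/2]}l_i)^2\geq\tfrac12(\sum_{i\in[n]}l_i)^2=\tfrac{n^2}{2}\mathcal L(\boldsymbol{\theta}(t))^2$. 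Taking the maximum of these two lower bounds and combining with the prefactor completes the proof.

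The main obstacle is bookkeeping: making sure the two lower bounds are genuinely independent so that the $\max$ is legitimate, and tracking the numerical constants through the Cauchy–Schwarz steps so the final coefficient is exactly $1/16$. A subtle point is that the Gram-matrix lemma has already absorbed the $1/2$ coming from the approximate density of $\mathcal{TL}_i(0)\cap\mathcal{TL}_j(0)$, so I must not double-count; and the $g_a=1/2$ factor contributes $(1/2)^2=1/4$ to the squared gradient, which combines with the $1/2$ from $\boldsymbol{x}_i^\top\boldsymbol{x}_j/2$ in the Gram bound to produce the overall $1/8$ appearing inside, and then the splitting $(\sum_\pm l_\pm)^2\geq\tfrac12(\sum l)^2$ yields the final $1/16$.
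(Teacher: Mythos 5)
Your proposal follows the paper's proof essentially step for step: the same expansion of $\left\|\nabla\mathcal{L}(\boldsymbol{\theta}(t))\right\|^2$ as a quadratic form in the per-sample losses $l_i$, the same use of the block-diagonal Gram structure from Lemma \ref{lemma: lower bound of Gram binary linear}, the same use of Lemma \ref{lemma: correct classificationc linear} with $g_a=\tfrac12$ to replace $-\tilde\ell'$ by $\tfrac12\tilde\ell$, and the same two branches combined by a maximum. Your first branch, with the bookkeeping $\tfrac18\big(\tfrac{1-\gamma}{n}+\tfrac{\gamma}{2}\big)=\tfrac1{16}\big(\tfrac2n+\tfrac{n-2}{n}\gamma\big)$, matches the paper's derivation exactly.

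The one place your write-up does not close is the eigenvalue branch. The corrected inequality you state, $\boldsymbol{v}^\top\mathbf{M}\boldsymbol{v}\geq\lambda_{\min}(\mathbf{M})\left\|\boldsymbol{v}\right\|_2^2\geq\lambda_{\min}(\mathbf{M})\left\|\boldsymbol{v}\right\|_1^2/(n/2)$, is the valid one, but the factor $2/n$ it carries means this branch only yields $\tfrac{\lambda_{\min}}{8n}\mathcal{L}(\boldsymbol{\theta}(t))^2$ rather than the claimed $\tfrac{\lambda_{\min}}{16}\mathcal{L}(\boldsymbol{\theta}(t))^2$; the unnormalized bound $\boldsymbol{v}^\top\mathbf{M}\boldsymbol{v}\geq\lambda_{\min}(\mathbf{M})\left\|\boldsymbol{v}\right\|_1^2$ that you would actually need is false in general (take $\mathbf{X}_+$ with orthonormal columns and $\boldsymbol{l}_+=\mathbf{1}$, so the left side is $n/2$ while the right side is $(n/2)^2$). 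For what it is worth, the paper's own proof of the analogous Lemma \ref{lemma: gradient lower bound global GD} makes exactly this unnormalized jump, writing $\boldsymbol{l}_+^\top\mathbf{X}_+^\top\mathbf{X}_+\boldsymbol{l}_+\geq\lambda_{\min}(\mathbf{X}_+^\top\mathbf{X}_+)\big(\sum_i l_i\big)^2$, so you have reproduced the paper's argument including its weak point rather than introduced a new error; still, as written, neither your version nor the paper's establishes the second argument of the max at the constant $1/16$. The clean options are to keep only the $\tfrac2n+\tfrac{n-2}{n}\gamma$ branch (which you do prove at the stated constant) or to accept the additional $2/n$ factor in the $\lambda_{\min}$ branch.
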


\subsection{Global convergence at exponential rate}
\begin{remark}\rm
The results in this section will be discussed and proved when the events in Lemma \ref{lemma: estimate of the number of initial neural partition global linear} and \ref{lemma: initial norm and prediction global linear} happened. So all the results below hold with probability at least $1-\delta-2me^{-2d}$.
\end{remark}

\begin{lemma}[Gradient upper bound]\label{lemma: gradient upper bound glocal GD linear}
\[
\left\|\nabla \mathcal{L}(\boldsymbol{\theta})\right\|
\leq\mathcal{L}(\boldsymbol{\theta}).
\]
\end{lemma}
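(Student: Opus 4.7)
The plan is to mirror the proof of Lemma \ref{lemma: gradient upper bound glocal GD}, but exploit the crucial simplification that in the setting of Theorem \ref{thm: binary global GD linear} the outer weights are frozen: $a_k(t)\equiv a_k(0)=\pm 1/\sqrt m$, so $\sum_{k=1}^m a_k^2\equiv 1$. This means the factor $\|\boldsymbol\theta\|$ that appeared in Lemma \ref{lemma: gradient upper bound glocal GD} (which came from the contribution of the $a_k$-block of $\nabla f$ plus the $\boldsymbol b_k$-block $a_k\sigma'(\boldsymbol b_k^\top\boldsymbol x)\boldsymbol x$) collapses to an absolute constant $1$, which is precisely what yields the stronger bound $\|\nabla\mathcal L(\boldsymbol\theta)\|\le\mathcal L(\boldsymbol\theta)$.

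Concretely, first I would write
\[
\nabla\mathcal L(\boldsymbol\theta)=\frac{1}{n}\sum_{i=1}^n \tilde\ell'\big(y_if(\boldsymbol x_i;\boldsymbol\theta)\big)\,y_i\,\nabla_{\{\boldsymbol b_k\}} f(\boldsymbol x_i;\boldsymbol\theta),
\]
where the gradient is taken only with respect to the trained coordinates $\{\boldsymbol b_k\}_{k\in[m]}$. Next, invoking part (iii) of Assumption \ref{def: exponential type loss} with $g_b=1$ gives the pointwise bound $|\tilde\ell'(z)|\le \tilde\ell(z)$, and applying the triangle inequality yields
\[
\|\nabla\mathcal L(\boldsymbol\theta)\|\le\frac{1}{n}\sum_{i=1}^n \tilde\ell\big(y_if(\boldsymbol x_i;\boldsymbol\theta)\big)\cdot\sup_{\|\boldsymbol x\|\le 1}\|\nabla_{\{\boldsymbol b_k\}} f(\boldsymbol x;\boldsymbol\theta)\|=\mathcal L(\boldsymbol\theta)\cdot\sup_{\|\boldsymbol x\|\le 1}\|\nabla_{\{\boldsymbol b_k\}} f(\boldsymbol x;\boldsymbol\theta)\|.
\]

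Finally, I would bound the feature-Jacobian factor directly. Since $\frac{\partial f}{\partial \boldsymbol b_k}=a_k\sigma'(\boldsymbol b_k^\top\boldsymbol x)\boldsymbol x$ and $(\sigma'(\cdot))^2\le 1$,
\[
\|\nabla_{\{\boldsymbol b_k\}} f(\boldsymbol x;\boldsymbol\theta)\|^2=\sum_{k=1}^m a_k^2\big(\sigma'(\boldsymbol b_k^\top\boldsymbol x)\big)^2\|\boldsymbol x\|^2\le \|\boldsymbol x\|^2\sum_{k=1}^m a_k^2=\|\boldsymbol x\|^2,
\]
using that $a_k(t)=a_k(0)$ along the trajectory together with the initialization $a_k(0)\in\{\pm 1/\sqrt m\}$. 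With $\|\boldsymbol x\|\le 1$, the supremum is at most $1$, and combining with the display above gives the claim. There is no essential obstacle; the only subtlety worth flagging is keeping track of the fact that we are differentiating only in the $\boldsymbol b_k$-coordinates so that the $a_k$-block (which in the full-gradient proof contributed the extra $\sigma(\boldsymbol b_k^\top\boldsymbol x)^2$ term bounded by $\|\boldsymbol b_k\|^2$) does not appear.
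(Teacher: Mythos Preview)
Your proposal is correct and follows essentially the same argument as the paper: bound $|\tilde\ell'(z)|\le\tilde\ell(z)$ via Assumption \ref{def: exponential type loss}(iii) with $g_b=1$, pull out $\mathcal L(\boldsymbol\theta)$ by the triangle inequality, and then use that only the $\boldsymbol b_k$-block is active with $a_k^2=1/m$ to bound $\|\nabla_{\{\boldsymbol b_k\}}f(\boldsymbol x;\boldsymbol\theta)\|\le 1$. The paper's proof is line-for-line the same, writing $\sqrt{\sum_{k=1}^m \tfrac{1}{m}\sigma'(\boldsymbol b_k^\top\boldsymbol x)^2\|\boldsymbol x\|^2}\le 1$ directly.
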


\begin{proof}[Proof of Lemma \ref{lemma: gradient upper bound glocal GD linear}]\ \\
From  
\begin{align*}
    &\left|\tilde{\ell}'(y_if(\boldsymbol{x}_i;\boldsymbol{\theta}))\right|
    \leq\tilde{\ell}(y_if(\boldsymbol{x}_i;\boldsymbol{\theta})),
\end{align*}
we have
\begin{align*}
    \left\|\nabla \mathcal{L}(\boldsymbol{\theta})\right\|
    =&\left\|\frac{1}{n}\sum_{i=1}^n\tilde{\ell}'(y_if(\boldsymbol{x}_i;\boldsymbol{\theta}))y_i\nabla f(\boldsymbol{x}_i;\boldsymbol{\theta})\right\|
    \\\leq&
    \Big(\frac{1}{n}\sum_{i=1}^n \tilde{\ell}(y_if(\boldsymbol{x}_i;\boldsymbol{\theta}))\Big)\sup_{\left\|\boldsymbol{x}\right\|\leq1}\left\|\nabla f(\boldsymbol{x};\boldsymbol{\theta})\right\|
    \\\leq&
    \mathcal{L}(\boldsymbol{\theta})\sup_{\left\|\boldsymbol{x}\right\|=1}\sqrt{\sum_{k=1}^m\frac{1}{m}\sigma'(\boldsymbol{b}_k^\top\boldsymbol{x})^2\left\|\boldsymbol{x}\right\|^2}
    \\\leq&
     \mathcal{L}(\boldsymbol{\theta})\sqrt{\sum_{k=1}^m \frac{1}{m}}=\mathcal{L}(\boldsymbol{\theta}).
\end{align*}
\end{proof}

% \begin{lemma}[dynamical estimate of norm]\label{lemma: norm estimate linear}
% If we use the learning rate
% \[\eta_t=\frac{c}{\mathcal{L}(\boldsymbol{\theta}(t))},\ t\geq1,
% \]
% then
% we have the dynamical estimate of parameter norms for any $t\geq 1$:
% $\left\|\boldsymbol{\theta}(t+1)\right\|\leq
% (1+c)\left\|\boldsymbol{\theta}(t)\right\|$.
% \end{lemma}
% \begin{proof}[Proof of Lemma \ref{lemma: norm estimate linear}]\ \\
% For $t=1$, it holds naturally. 

% Assume it holds for $s\leq t$, then for $s=t+1$ we have:
% \begin{align*}
%     \left\|\boldsymbol{\theta}(t+1)\right\|\leq&\left\|\boldsymbol{\theta}(t)\right\|+\eta_t\left\|\nabla\mathcal{L}(\boldsymbol{\theta}(t))\right\|
%     \\\overset{\text{Lemma \ref{lemma: gradient upper bound glocal GD linear}}}{\leq}&
%     \left\|\boldsymbol{\theta}(t)\right\|+\eta_t\mathcal{L}(\boldsymbol{\theta}(t))\leq\left\|\boldsymbol{\theta}(t)\right\|+c
%     \\=&
%     \left\|\boldsymbol{\theta}(t)\right\|+\eta_t\mathcal{L}(\boldsymbol{\theta}(t))\leq\left\|\boldsymbol{\theta}(t)\right\|+c\sum_{k=1}^m a_k\leq(1+c)\left\|\boldsymbol{\theta}(t)\right\|.
% \end{align*}

% \end{proof}

\begin{lemma}[Hessian upper bound]\label{lemma: Hessian upper bound global GD linear}For any $\boldsymbol{\theta}=(a_1,\cdots,a_m,\boldsymbol{b}_1^\top,\cdots,\boldsymbol{b}_m^\top)^\top$, if $\boldsymbol{b}_k^\top\boldsymbol{x}_i\ne0$ holds for any $k\in[m]$ and $i\in[n]$, we have
\[
\left\|\nabla^2 \mathcal{L}(\boldsymbol{\theta})\right\|
\leq\mathcal{L}(\boldsymbol{\theta}(t)).
\]
\end{lemma}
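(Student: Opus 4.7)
The plan is to mimic the two-piece Hessian decomposition used in Lemma \ref{lemma: Hessian upper bound global GD}, but exploit the fact that here only the first-layer parameters $\boldsymbol{b}_k$ are trainable while $a_k \equiv a_k(0) = \pm 1/\sqrt{m}$ is held fixed. Concretely, write
\[
\nabla^2 \mathcal{L}(\boldsymbol{\theta}) = \frac{1}{n}\sum_{i=1}^n \mathbf{H}_{A,i}(\boldsymbol{\theta}) + \frac{1}{n}\sum_{i=1}^n \mathbf{H}_{B,i}(\boldsymbol{\theta}),
\]
with $\mathbf{H}_{A,i} = \tilde{\ell}''(y_i f(\boldsymbol{x}_i;\boldsymbol{\theta})) y_i^2 \nabla f(\boldsymbol{x}_i;\boldsymbol{\theta}) \nabla f(\boldsymbol{x}_i;\boldsymbol{\theta})^\top$ and $\mathbf{H}_{B,i} = \tilde{\ell}'(y_i f(\boldsymbol{x}_i;\boldsymbol{\theta})) y_i \nabla^2 f(\boldsymbol{x}_i;\boldsymbol{\theta})$, where all derivatives are taken with respect to $(\boldsymbol{b}_1,\dots,\boldsymbol{b}_m)$ only.

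The first step is to kill $\mathbf{H}_{B,i}$ entirely. Since $f(\boldsymbol{x};\boldsymbol{\theta}) = \sum_k a_k \sigma(\boldsymbol{b}_k^\top \boldsymbol{x})$ is additive over neurons, the off-diagonal blocks $\partial^2 f / \partial \boldsymbol{b}_j \partial \boldsymbol{b}_k^\top$ vanish for $j \ne k$, and the diagonal block equals $a_k \sigma''(\boldsymbol{b}_k^\top \boldsymbol{x}_i) \boldsymbol{x}_i \boldsymbol{x}_i^\top$. Because $\sigma$ is ReLU, $\sigma''$ is zero away from the origin, and the hypothesis $\boldsymbol{b}_k^\top \boldsymbol{x}_i \ne 0$ for every $k,i$ makes $\nabla^2 f(\boldsymbol{x}_i;\boldsymbol{\theta}) \equiv \mathbf{0}$ strictly. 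Hence $\mathbf{H}_{B,i} = 0$ for all $i$.

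The second step bounds $\|\mathbf{H}_{A,i}\|$. A direct computation gives
\[
\|\nabla f(\boldsymbol{x}_i;\boldsymbol{\theta})\|^2 = \sum_{k=1}^m a_k^2\, \sigma'(\boldsymbol{b}_k^\top \boldsymbol{x}_i)^2 \|\boldsymbol{x}_i\|^2 \leq \sum_{k=1}^m a_k^2 = 1,
\]
using $a_k^2 = 1/m$ (fixed throughout training) and $\|\boldsymbol{x}_i\| \leq 1$. Combined with Assumption \ref{def: exponential type loss}(iii), which yields $\tilde{\ell}''(z) \leq h\, \tilde{\ell}(z) = \tilde{\ell}(z)$, we get $\|\mathbf{H}_{A,i}\| \leq \tilde{\ell}(y_i f(\boldsymbol{x}_i;\boldsymbol{\theta}))$. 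Averaging over $i$ and using the triangle inequality, $\|\nabla^2 \mathcal{L}(\boldsymbol{\theta})\| \leq \frac{1}{n}\sum_i \tilde{\ell}(y_i f(\boldsymbol{x}_i;\boldsymbol{\theta})) = \mathcal{L}(\boldsymbol{\theta})$.

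There is no serious obstacle: the cleanness comes precisely from fixing the output layer, which removes both the $\|\boldsymbol{b}_k\|^2$ contribution to $\|\nabla f\|^2$ that appeared in Lemma \ref{lemma: Hessian upper bound global GD} (leaving the tighter bound $\|\nabla f\| \leq 1$ instead of $\|\nabla f\| \leq \|\boldsymbol{\theta}\|$) and the cross terms between $a_k$- and $\boldsymbol{b}_k$-derivatives that produced the nonzero off-diagonal blocks. The only ingredient that must be justified carefully is the strict vanishing $\sigma''(\boldsymbol{b}_k^\top \boldsymbol{x}_i) = 0$, which is where the lemma's standing hypothesis on $\boldsymbol{b}_k^\top \boldsymbol{x}_i$ is used.
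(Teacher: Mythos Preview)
Your proposal is correct and follows essentially the same approach as the paper's proof: the same two-term decomposition $\mathbf{H}_{A,i}+\mathbf{H}_{B,i}$, the same use of the hypothesis $\boldsymbol{b}_k^\top\boldsymbol{x}_i\ne 0$ to annihilate $\mathbf{H}_{B,i}$ via $\sigma''=0$, and the same bound $\|\nabla f(\boldsymbol{x}_i;\boldsymbol{\theta})\|^2\le \sum_k a_k^2=1$ together with $\tilde{\ell}''\le\tilde{\ell}$ to control $\mathbf{H}_{A,i}$. The only cosmetic difference is that the paper writes the rank-one norm of $\mathbf{H}_{A,i}$ via the variational (Rayleigh quotient) form while you compute $\|\nabla f\|^2$ directly, which is equivalent.
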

\begin{proof}[Proof of Lemma \ref{lemma: Hessian upper bound global GD linear}]\ \\
\begin{align*}
    \nabla^2 \mathcal{L}(\boldsymbol{\theta})
    = \frac{1}{n}\sum_{i=1}^n\mathbf{H}_{A,i}(\boldsymbol{\theta}) +\frac{1}{n}\sum_{i=1}^n \mathbf{H}_{B,i}(\boldsymbol{\theta}),
\end{align*}
where
\begin{gather*}
    \mathbf{H}_{A,i}(\boldsymbol{\theta}):=\tilde{\ell}''(y_if(\boldsymbol{x}_i;\boldsymbol{\theta})) y_i^2\nabla f(\boldsymbol{x}_i;\boldsymbol{\theta})\nabla f(\boldsymbol{x}_i;\boldsymbol{\theta})^\top,
    \\
    \mathbf{H}_{B,i}(\boldsymbol{\theta}):=\tilde{\ell}'(y_if(\boldsymbol{x}_i;\boldsymbol{\theta}))y_i\nabla^2 f(\boldsymbol{x}_i;\boldsymbol{\theta}).
\end{gather*}

Now we need to estimate $\left\|\mathbf{H}_{A,i}(\boldsymbol{\theta})\right\|$ and $\left\|\mathbf{H}_{B,i}(\boldsymbol{\theta})\right\|$ respectively.

For the first part $\mathbf{H}_{A,i}(\boldsymbol{\theta})$, from
\begin{align*}
    &\tilde{\ell}''(y_if(\boldsymbol{x}_i;\boldsymbol{\theta}))\leq\tilde{\ell}(y_if(\boldsymbol{x}_i;\boldsymbol{\theta})),
\end{align*}
we have:
\begin{align*}
    \left\|\mathbf{H}_{A,i}(\boldsymbol{\theta})\right\|
    =&
    \sup\limits_{\boldsymbol{w}\ne 0}\frac{\boldsymbol{w}^\top\nabla f(\boldsymbol{x}_i;\boldsymbol{\theta})\nabla f(\boldsymbol{x}_i;\boldsymbol{\theta})^\top\boldsymbol{w}}{\left\|\boldsymbol{w}\right\|^2}\tilde{\ell}''(y_if(\boldsymbol{x}_i;\boldsymbol{\theta}))
    \\\leq&\sup\limits_{\boldsymbol{w}\ne 0}\frac{\boldsymbol{w}^\top\nabla f(\boldsymbol{x}_i;\boldsymbol{\theta})\nabla f(\boldsymbol{x}_i;\boldsymbol{\theta})^\top\boldsymbol{w}}{\left\|\boldsymbol{w}\right\|^2}\tilde{\ell}(y_if(\boldsymbol{x}_i;\boldsymbol{\theta}))
    \\=&
    \sup_{u_1,\boldsymbol{v}_1,\cdots,u_m,\boldsymbol{v}_m {\rm\ not\  all\ }0}\frac{\sum\limits_{k=1}^m\Big(\boldsymbol{v}_k^\top\boldsymbol{x}_i\sigma'(\boldsymbol{b}_k^\top\boldsymbol{x}_i)\frac{1}{\sqrt{m}}\Big)^2}{\sum\limits_{k=1}^m\left\|\boldsymbol{v}_k\right\|^2}\tilde{\ell}(y_if(\boldsymbol{x}_i;\boldsymbol{\theta}))
        \\\leq&\tilde{\ell}(y_if(\boldsymbol{x}_i;\boldsymbol{\theta})).
\end{align*}

For the second part $\mathbf{H}_{B,i}(\boldsymbol{\theta})$,
from
\begin{align*}
    &\left|\tilde{\ell}'(y_if(\boldsymbol{x}_i;\boldsymbol{\theta}))\right|
    \leq\tilde{\ell}(y_if(\boldsymbol{x}_i;\boldsymbol{\theta})),
\end{align*}

we have:
\begin{align*}
\left\|\mathbf{H}_{B,i}(\boldsymbol{\theta})\right\|\leq&\left|\ell'(y_if(\boldsymbol{x}_i;\boldsymbol{\theta}))\right|\sup\limits_{\left\|\boldsymbol{x}\right\|\leq1}\left\|\nabla^2 f(\boldsymbol{x};\boldsymbol{\theta})\right\|
\\\leq&\ell(y_if(\boldsymbol{x}_i;\boldsymbol{\theta}))\sup\limits_{\left\|\boldsymbol{x}\right\|\leq1}\left\|\nabla^2 f(\boldsymbol{x};\boldsymbol{\theta})\right\|.
\end{align*}
So we only need to estimate $\left\|\nabla^2 f(\boldsymbol{x};\boldsymbol{\theta})\right\|$.

$\nabla^2 f(\boldsymbol{x};\boldsymbol{\theta})$ is combined by $m$ diagonal block:
\[
\nabla^2 f(\boldsymbol{x};\boldsymbol{\theta}):=\left(\begin{array}{ccc}  
    \mathbf{H}_f^{(1)} &  & \\
     & \ddots & \\ 
      &  &\mathbf{H}_f^{(m)}\\
  \end{array}\right),
\]
where the $k-$th block $\mathbf{H}_f^{(k)}$ is
\begin{align*}
\mathbf{H}_f^{(k)}= \frac{\partial^2 f(\boldsymbol{x};\boldsymbol{\theta})}{\partial \boldsymbol{b}_k{\partial \boldsymbol{b}_k}^\top} 
  =
   \frac{1}{\sqrt{m}}\sigma''\Big(\boldsymbol{b}_k^\top\boldsymbol{x}\Big)\boldsymbol{x}\boldsymbol{x}^\top.
\end{align*}
Because $\boldsymbol{b}_k^\top\boldsymbol{x}_i\ne0$ for any $k\in[m]$ and $i\in[n]$, $\sigma''(\boldsymbol{b}_k^\top\boldsymbol{x}_i)=0$ holds strictly, so $\mathbf{H}_f^{(k)}=\mathbf{0}_{d\times d}$.
Hence, we have the estimate of the second part:
\begin{align*}
\left\|\mathbf{H}_{B,i}(\boldsymbol{\theta})\right\|=0.
\end{align*}

Combining the two estimates, we obtain this lemma:
\begin{align*}
    &\left\|\nabla^2 \mathcal{L}(\boldsymbol{\theta})\right\|\leq
\frac{1}{n}\sum_{i=1}^n\left\|\mathbf{H}_{A,i}(\boldsymbol{\theta})\right\|+\frac{1}{n}\sum_{i=1}^n
\left\|\mathbf{H}_{B,i}(\boldsymbol{\theta})\right\|
\leq\mathcal{L}(\boldsymbol{\theta}(t)).
\end{align*}

\end{proof}

\begin{lemma}[Hessian upper bound along GD trajectory]\label{lemma: hessian upper bound GD trajectory linear}
Let $\boldsymbol{\theta}(t)$ be trained by Gradient Descent, and we define $H_t=\sup\limits_{\boldsymbol{\theta}\in\overline{\boldsymbol{\theta}(t)\boldsymbol{\theta}(t+1)}}\left\|\nabla^2 \mathcal{L}(\boldsymbol{\theta})\right
\|$. If $\eta_t=\frac{c}{\mathcal{L}(\boldsymbol{\theta}(t))}$ and $c\leq\frac{1}{2}$, we have the following estimate:
\[
H_t\leq2\mathcal{L}(\boldsymbol{\theta}(t)).
\]
\end{lemma}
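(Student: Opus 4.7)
The plan is to follow the same pattern as the corresponding proof of Lemma \ref{lemma: hessian upper bound GD trajectory} for the full-training case, but with the much cleaner gradient and Hessian bounds available in the single-layer training regime. The key observation is that Lemma \ref{lemma: Hessian upper bound global GD linear} already gives a pointwise bound $\|\nabla^2\mathcal{L}(\boldsymbol{\theta})\|\leq\mathcal{L}(\boldsymbol{\theta})$ valid at every $\boldsymbol{\theta}$ satisfying $\boldsymbol{b}_k^\top\boldsymbol{x}_i\neq 0$, so the whole task reduces to bounding $\sup_{\boldsymbol{\theta}\in\overline{\boldsymbol{\theta}(t)\boldsymbol{\theta}(t+1)}}\mathcal{L}(\boldsymbol{\theta})$ by $2\mathcal{L}(\boldsymbol{\theta}(t))$.

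First, I would verify that the pointwise Hessian bound applies on the whole segment. By Lemma \ref{lemma: correct neurons remain correct stage II linear} (S5), for every $k\in[m]$ and every convex combination $\boldsymbol{b}\in\overline{\boldsymbol{b}_k(t)\boldsymbol{b}_k(t+1)}$, the sign $\operatorname{sgn}(\boldsymbol{b}^\top\boldsymbol{x}_i)$ is constant and nonzero, so the hypothesis of Lemma \ref{lemma: Hessian upper bound global GD linear} holds along the entire segment. Hence $H_t\leq\sup_{\boldsymbol{\theta}\in\overline{\boldsymbol{\theta}(t)\boldsymbol{\theta}(t+1)}}\mathcal{L}(\boldsymbol{\theta})=:V_t$.

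Next, I would bound $V_t$ by a self-consistent inequality. For any $\alpha\in[0,1]$, set $\boldsymbol{\theta}_t^\alpha:=\boldsymbol{\theta}(t)+\alpha(\boldsymbol{\theta}(t+1)-\boldsymbol{\theta}(t))$. By the mean value theorem there exists $\beta\in[0,\alpha]$ with
\[
\mathcal{L}(\boldsymbol{\theta}_t^\alpha)=\mathcal{L}(\boldsymbol{\theta}(t))+\langle\nabla\mathcal{L}(\boldsymbol{\theta}_t^\beta),\boldsymbol{\theta}_t^\alpha-\boldsymbol{\theta}(t)\rangle\leq\mathcal{L}(\boldsymbol{\theta}(t))+\alpha\eta_t\|\nabla\mathcal{L}(\boldsymbol{\theta}_t^\beta)\|\,\|\nabla\mathcal{L}(\boldsymbol{\theta}(t))\|.
\]
Applying Lemma \ref{lemma: gradient upper bound glocal GD linear} twice (note we need no norm factor $\|\boldsymbol{\theta}\|$ in this linear-head setting), substituting $\eta_t=c/\mathcal{L}(\boldsymbol{\theta}(t))$, and using $\mathcal{L}(\boldsymbol{\theta}_t^\beta)\leq V_t$, I obtain
\[
\mathcal{L}(\boldsymbol{\theta}_t^\alpha)\leq\mathcal{L}(\boldsymbol{\theta}(t))+\alpha\,c\,V_t\leq\mathcal{L}(\boldsymbol{\theta}(t))+c\,V_t.
\]
Taking the supremum over $\alpha$ gives $V_t\leq\mathcal{L}(\boldsymbol{\theta}(t))+cV_t$, so for $c\leq 1/2$ we conclude $V_t\leq\mathcal{L}(\boldsymbol{\theta}(t))/(1-c)\leq 2\mathcal{L}(\boldsymbol{\theta}(t))$, and combining with the first step yields $H_t\leq 2\mathcal{L}(\boldsymbol{\theta}(t))$.

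The main (minor) subtlety is making the self-consistent argument rigorous: strictly speaking, the mean value theorem gives some $\boldsymbol{\theta}_t^\beta$ still inside the segment, so the bound $\mathcal{L}(\boldsymbol{\theta}_t^\beta)\leq V_t$ is circular only at first glance—it is legitimate because $V_t$ is a finite supremum on a compact segment (loss is continuous, Hessian bound gives local Lipschitzness), hence the inequality $V_t\leq\mathcal{L}(\boldsymbol{\theta}(t))+cV_t$ is a genuine inequality between finite quantities that can be rearranged. No other obstacles are anticipated, since unlike the full-training counterpart there is no $\|\boldsymbol{\theta}(t)\|^2$ factor to control, which is precisely what makes this one-layer version clean enough to yield an exponential (rather than polynomial) rate later.
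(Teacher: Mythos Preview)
Your proposal is correct and follows essentially the same approach as the paper's own proof: bound $V_t=\sup_{\boldsymbol{\theta}\in\overline{\boldsymbol{\theta}(t)\boldsymbol{\theta}(t+1)}}\mathcal{L}(\boldsymbol{\theta})$ via the mean value theorem and the gradient bound $\|\nabla\mathcal{L}(\boldsymbol{\theta})\|\leq\mathcal{L}(\boldsymbol{\theta})$ to get the self-consistent inequality $V_t\leq\mathcal{L}(\boldsymbol{\theta}(t))+cV_t$, then combine with the pointwise Hessian bound $\|\nabla^2\mathcal{L}(\boldsymbol{\theta})\|\leq\mathcal{L}(\boldsymbol{\theta})$. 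Your explicit invocation of (S5) to justify that the activation-pattern hypothesis of Lemma~\ref{lemma: Hessian upper bound global GD linear} holds along the segment is a point the paper's proof leaves implicit, so if anything your write-up is slightly more careful.
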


\begin{proof}[Proof of Lemma \ref{lemma: hessian upper bound GD trajectory linear}]\ \\
First, we estimate $V_t:=\sup\limits_{\boldsymbol{\theta}\in\overline{\boldsymbol{\theta}(t)\boldsymbol{\theta}(t+1)}}\mathcal{L}(\boldsymbol{\theta}(t))$. For any $\boldsymbol{\theta}_t^\alpha:=\boldsymbol{\theta}(t)+\alpha(\boldsymbol{\theta}(t+1)-\boldsymbol{\theta}(t))$ ($\alpha\in[0,1]$), there exist $\beta\in[0,1]$, s.t.
\begin{align*}
    \mathcal{L}(\boldsymbol{\theta}_t^\alpha)=&\mathcal{L}(\boldsymbol{\theta}(t))+\left<\nabla \mathcal{L}(\boldsymbol{\theta}_t^\beta),\boldsymbol{\theta}_t^\alpha-\boldsymbol{\theta}(t)\right>
    \\\leq&
    \mathcal{L}(\boldsymbol{\theta}(t))+\alpha\eta_t\left\|\nabla \mathcal{L}(\boldsymbol{\theta}_t^\beta)\right\|\left\|\nabla\mathcal{L}(\boldsymbol{\theta}(t))\right\|
    \\\overset{\text{Lemma \ref{lemma: gradient upper bound glocal GD linear}}}{\leq}&
     \mathcal{L}(\boldsymbol{\theta}(t))+\alpha\eta_t\mathcal{L}(\boldsymbol{\theta}_t^\beta)\mathcal{L}(\boldsymbol{\theta}(t))
                   \\{\leq}&
     \mathcal{L}(\boldsymbol{\theta}(t))+c V_t
\end{align*}
So we have $
    V_t\leq\mathcal{L}(\boldsymbol{\theta}(t))+c V_t$. From the selection of $c\leq\frac{1}{2}$, we have:
\[
V_t\leq2\mathcal{L}(\boldsymbol{\theta}(t)).
\]

Combining Lemma \ref{lemma: Hessian upper bound global GD linear}, we have
\begin{align*}
H_t
\leq&\sup\limits_{\boldsymbol{\theta}\in\overline{\boldsymbol{\theta}(t)\boldsymbol{\theta}(t+1)}}\mathcal{L}(\boldsymbol{\theta})=
 V_t\leq 2\mathcal{L}(\boldsymbol{\theta}(t)).
\end{align*}

\end{proof}

\begin{theorem}[Restatement of Theorem \ref{thm: binary global GD linear}]
Under Assumption \ref{def: exponential type loss} and \ref{ass: data separation}, let $\{\boldsymbol{b}_k(t)\}_{k\in[m]}$ be trained by Gradient Descent \eqref{equ: alg GD}. Let the width $m={\Omega}(\log(n/\delta))$, $\kappa=\mathcal{O}(\eta_0/n)$, and the learning rate
\begin{align*}
&\eta_0\leq{1}/{2\sqrt{2}},
\\
&\eta_t=\frac{c}{\mathcal{L}(\boldsymbol{\theta}(t))},\ t\geq1,
\end{align*}
where $c\leq\frac{1}{2}$.
Then with probability at least $1-\delta-2me^{-2d}$, for any $t\geq 1$ we have:
\[
\mathcal{L}(\boldsymbol{\theta}(t))\leq\Big(1-\frac{V c}{2}\Big)^{t-1}\tilde{\ell}(0).
\]
where $
V=\frac{1}{16}\Big(\frac{1}{2}-
\sqrt{\frac{8\log(n^2/\delta)}{m}}\Big)\max\Big\{s+\frac{2-2s}{n},\lambda_{\min}(\mathbf{X}_{+}^\top\mathbf{X}_+)\wedge\lambda_{\min}(\mathbf{X}_{-}^\top\mathbf{X}_-)\Big\}>0$,
$\mathbf{X}_+:=(\boldsymbol{x}_1,\cdots,\boldsymbol{x}_\frac{n}{2})$ and $\mathbf{X}_-:=(\boldsymbol{x}_{\frac{n}{2}+1},\cdots,\boldsymbol{x}_n)$.

\end{theorem}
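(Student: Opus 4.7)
The plan is to combine the three key ingredients assembled earlier in this appendix: the gradient lower bound (Lemma \ref{lemma: gradient lower bound global GD linear}), the trajectory Hessian upper bound (Lemma \ref{lemma: hessian upper bound GD trajectory linear}), and the descent lemma (Lemma \ref{lemma: loss quadratic upper bound 1}). The fact that only the middle layer $\{\boldsymbol{b}_k\}$ is trained makes this clean, because the $m$ Hessian blocks $\partial^2 f/\partial \boldsymbol{b}_k \partial \boldsymbol{b}_k^\top$ are proportional to $\sigma''(\boldsymbol{b}_k^\top\boldsymbol{x})\boldsymbol{x}\boldsymbol{x}^\top$, which vanishes on the GD trajectory by the neuron-partition result (Lemma \ref{lemma: correct neurons remain correct stage II linear} (S5)). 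This makes the Hessian scale linearly with the loss, matching the learning-rate schedule.

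First I would verify, via Lemmas \ref{lemma: neuron partition stage 1 linear}--\ref{lemma: gradient lower bound global GD linear}, that on the high-probability event (of probability at least $1-\delta-2me^{-2d}$) every $t\geq 1$ satisfies both $\|\nabla\mathcal{L}(\boldsymbol{\theta}(t))\|^2 \geq V\,\mathcal{L}(\boldsymbol{\theta}(t))^2$ and $H_t \leq 2\,\mathcal{L}(\boldsymbol{\theta}(t))$. Then, with the adaptive choice $\eta_t = c/\mathcal{L}(\boldsymbol{\theta}(t))$ and $c\leq 1/2$, we get the key inequality $\eta_t H_t/2 \leq c \leq 1/2$, so the descent-lemma factor $(1 - \eta_t H_t/2)$ is at least $1/2$.

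Plugging into the descent lemma then yields, for every $t\geq 1$,
\begin{align*}
\mathcal{L}(\boldsymbol{\theta}(t+1))
&\leq \mathcal{L}(\boldsymbol{\theta}(t)) - \tfrac{\eta_t}{2}\,\|\nabla\mathcal{L}(\boldsymbol{\theta}(t))\|^2 \\
&\leq \mathcal{L}(\boldsymbol{\theta}(t)) - \tfrac{c}{2\mathcal{L}(\boldsymbol{\theta}(t))}\cdot V\,\mathcal{L}(\boldsymbol{\theta}(t))^2
= \Bigl(1 - \tfrac{Vc}{2}\Bigr)\mathcal{L}(\boldsymbol{\theta}(t)).
\end{align*}
Iterating from $t=1$ gives the claimed $\mathcal{L}(\boldsymbol{\theta}(t)) \leq (1 - Vc/2)^{t-1}\mathcal{L}(\boldsymbol{\theta}(1))$. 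The exponential rate is available precisely because the loss-controlled learning rate $\eta_t \propto 1/\mathcal{L}(\boldsymbol{\theta}(t))$ exactly cancels the Hessian's growth rate, allowing us to extract a constant geometric factor at every step rather than the diminishing $c/t$ factor needed in Theorem \ref{thm: binary global GD}.

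The only non-routine aspect is making sure the gradient lower bound and Hessian bound actually hold for \emph{all} $t\geq 1$ simultaneously, not just up to some hitting time; this is where fixing $\{a_k\}$ helps crucially, since the $\boldsymbol{b}_k$-only dynamics preserves the initial neuron partition forever (Lemmas \ref{lemma: correct neurons remain correct stage I linear} and \ref{lemma: correct neurons remain correct stage II linear}), and one does not need to track the parameter norm of $\boldsymbol{\theta}(t)$ at all, unlike in the two-layer case where the growing $\|\boldsymbol{\theta}(t)\|$ forced the polynomial-rate bookkeeping. The main obstacle, if any, is just confirming that the earlier lemmas in this section transfer verbatim when $\{a_k\}$ is frozen (the one-step update for $\boldsymbol{b}_k$ is identical, and the neuron-partition arguments only used $\mathrm{sgn}(a_k)$), and that the initial-step estimate feeding into $\mathcal{L}(\boldsymbol{\theta}(1))$ remains valid under the fixed $a_k$ and initialization of scale $\kappa = \mathcal{O}(\eta_0\mu_0/n)$.
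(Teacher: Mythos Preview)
Your proposal is correct and follows essentially the same route as the paper: invoke the gradient lower bound (Lemma \ref{lemma: gradient lower bound global GD linear}) and the trajectory Hessian bound (Lemma \ref{lemma: hessian upper bound GD trajectory linear}), check that $\eta_t H_t\leq 2c\leq 1$, plug into the descent lemma (Lemma \ref{lemma: loss quadratic upper bound 1}), and iterate. The only detail you leave implicit is the passage from $\mathcal{L}(\boldsymbol{\theta}(1))$ to $\tilde{\ell}(0)$: the paper obtains $\mathcal{L}(\boldsymbol{\theta}(1))\leq\tilde{\ell}(0)$ directly from Lemma \ref{lemma: correct classificationc linear} (correct classification after the first step gives $y_if(\boldsymbol{x}_i;\boldsymbol{\theta}(1))>0$, hence each $\tilde{\ell}(y_if(\boldsymbol{x}_i;\boldsymbol{\theta}(1)))\leq\tilde{\ell}(0)$ by monotonicity), which is what your closing remark about ``the initial-step estimate feeding into $\mathcal{L}(\boldsymbol{\theta}(1))$'' should point to.
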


\begin{proof}[\bfseries\color{blue} Proof of Theorem \ref{thm: binary global GD linear}]\ \\
From Lemma \ref{lemma: correct classificationc linear}, we know that neurons adjust their directions in the first step and 
$\mathcal{L}(\boldsymbol{\theta}(1))\leq \tilde{\ell}(0)$, so we only need to prove the following by induction.

For $t=1$, it holds naturally. Assume it hold for $s\leq t$, then for $s=t+1$ we have the following estimates.

From Lemma \ref{lemma: hessian upper bound GD trajectory linear} and the selection of $c\leq\frac{1}{2}$, we know:
\begin{align*}
\eta_t\sup\limits_{\boldsymbol{\theta}\in\overline{\boldsymbol{\theta}(t)\boldsymbol{\theta}(t+1)}}\left\|\nabla^2 \mathcal{L}(\boldsymbol{\theta})\right
\|\leq 2\eta_t\mathcal{L}(\boldsymbol{\theta}(t))\leq 2c\leq 1.
\end{align*}

Combining Lemma \ref{lemma: loss quadratic upper bound 1} and Lemma \ref{lemma: gradient lower bound global GD linear}, we have the estimate:
\begin{align*}
     \mathcal{L}(\boldsymbol{\theta}(t+1))
    \leq&
    \mathcal{L}(\boldsymbol{\theta}(t))-\eta_t\Big(1-\frac{1}{2}\Big)\left\|\nabla \mathcal{L}(\boldsymbol{\theta}(t))\right\|^2
    \\\leq&
    \mathcal{L}(\boldsymbol{\theta}(t))-\frac{\eta_t}{2}V\mathcal{L}(\boldsymbol{\theta}(t))^2
    \leq\mathcal{L}(\boldsymbol{\theta}(t))\Big(1-\frac{\eta_t}{2}V\mathcal{L}(\boldsymbol{\theta}(t))\Big)
    \\\leq&
\mathcal{L}(\boldsymbol{\theta}(t))(1-\frac{V c}{2})\leq\cdots
\\\leq&
\mathcal{L}(\boldsymbol{\theta}(1))\Big(1-\frac{V c}{2}\Big)^t.
\end{align*}

Now we have proved the theorem for specific numbers $g_a=\frac{1}{2}$, $g_b=1$ and $h=1$ in Assumption \ref{def: exponential type loss}, such as the logistic loss $\ell(\boldsymbol{y}_1,\boldsymbol{y}_2)=\log(1+e^{-\boldsymbol{y}_1^\top\boldsymbol{y}_2})$ (Notice that $\tilde{\ell}(0)=\log 2$.)

In the same way, our result also holds for any other $g_a$, $g_b$ and $h$ in Assumption \ref{def: exponential type loss}. So we complete our proof.

\end{proof}

\newpage

\section{Early Stage Convergence for PRM problem}
\label{appendix: results PRM}

\subsection{Problem Setting and Result}

In this section, we study the similar fast training phenomenon in the early stage for population risk minimization (PRM) problem. 
Our results for ERM problem (Theorem \ref{thm: binary quadratic} and \ref{thm: one-hot}) could not be applied effectively to PRM problem, because  $m=\Omega(\log(n/\delta))$ becomes infinite when $n$ goes to infinity. However, we know that the loss landscape of PRM problem is often better than ERM problem under some distribution, so we may still be able to achieve similar results by loss landscape analysis.

We consider the following PRM problem with the teacher-student model, which has been studied in quite a few recent works \citep{li2017convergence, zhong2017recovery, tian2017analytical, safran2018spurious, safran2021effects}.
\begin{equation}\label{probelm: PRM}
    \min\limits_{\boldsymbol{\theta}}:\mathcal{L}(\boldsymbol{\theta})=\mathbb{E}_{\boldsymbol{x}\sim\mathcal{N}(\mathbf{0},\mathbf{I}_d)}\Bigg[\frac{1}{2}\Big(\sum_{k=1}^m \sigma(\boldsymbol{w}_k^\top\boldsymbol{x})-\sum_{k=1}^M\sigma(\boldsymbol{v}_k^\top\boldsymbol{x})\Big)^2\Bigg],
\end{equation}
where $\sigma(z)=\text{ReLU}(z)$, $\boldsymbol{\theta}=(\boldsymbol{w}_1^\top,\cdots,\boldsymbol{w}_m^\top)^\top\in\mathbb{R}^{m d}$, the first network is the student model (with width $m$), and the second network is the teacher model (with width $M$).
Without loss of generality, we assume $\left\|\boldsymbol{v}_k\right\|={1}/{M}$. If $M\geq d$, we assume $\boldsymbol{v}_i=\boldsymbol{e}_i/M$ for $i\in[d]$; If $M\leq d$, we assume $\boldsymbol{v}_i=\boldsymbol{e}_i/M$ for $i\in[M]$. These assumptions are similar to \cite{safran2018spurious, safran2021effects}.

We use Gradient Descent (GD) starting from random initialization to solve the problem (\ref{probelm: PRM}). As space is limited, we only state our result for the slightly difficult case $M\geq d$ here, and the other case $M\leq d$ is the same.

\begin{theorem}\label{thm: PRM M>d}
Assume $M\geq d\geq2$, let $\boldsymbol{\theta}(t)$ be trained by Gradient Descent with random initialization $\boldsymbol{w}_k(0)\overset{\text{i.i.d.}}{\sim}\mathbb{U}(\frac{d\kappa}{mM}\sqrt{\frac{d-1}{d}}\mathbb{S}^{d-1})$, where $\kappa\leq1$ controls the initialization scale. If the learning rate $\eta=\mathcal{O}(\frac{\kappa d}{m^2M})$, then
in the first $\Theta(\frac{m}{\kappa})$ 
iterations,
loss will descend $\Omega(\frac{(1-\kappa)^3d^3}{M^3}+\frac{\kappa(1-\kappa)d^2}{M^2})$.
\end{theorem}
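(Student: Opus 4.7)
The plan is to adapt the four-stage neuron-partition framework developed for Theorem~\ref{thm: binary quadratic} (initial partition $\to$ dynamical partition $\to$ parameter/hitting-time bounds $\to$ gradient lower bound combined with a quadratic per-step descent) to the population setting, replacing empirical concentration of the gradient by the closed-form Gaussian expectations afforded by the arc-cosine (Cho--Saul) kernel. The key identities I would invoke are $\mathbb{E}_{\boldsymbol{x}\sim\mathcal{N}(\mathbf{0},\mathbf{I}_d)}[\sigma(\boldsymbol{u}^\top\boldsymbol{x})\sigma(\boldsymbol{w}^\top\boldsymbol{x})]=\tfrac{\|\boldsymbol{u}\|\|\boldsymbol{w}\|}{2\pi}(\sin\theta+(\pi-\theta)\cos\theta)$ and $\mathbb{E}[\mathbb{I}\{\boldsymbol{w}^\top\boldsymbol{x}>0\}\sigma(\boldsymbol{u}^\top\boldsymbol{x})\boldsymbol{x}]=\tfrac{1}{2\pi}\bigl((\pi-\theta)\boldsymbol{u}+\|\boldsymbol{u}\|\sin\theta\,\hat{\boldsymbol{w}}\bigr)$ where $\theta$ is the angle between $\boldsymbol{u},\boldsymbol{w}$; these give closed forms for $\mathcal{L}(\boldsymbol{\theta})$ and each $\nabla_{\boldsymbol{w}_k}\mathcal{L}$ as sums of student--student and student--teacher contributions.

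I would set up the partition relative to the orthogonal teacher directions $\boldsymbol{v}_j=\boldsymbol{e}_j/M$, $j\in[d]$: for each $j$, let $\mathcal{P}_j(t)=\{k:\boldsymbol{w}_k(t)^\top\boldsymbol{e}_j>0\}$ and $\mathcal{N}_j(t)=[m]\setminus\mathcal{P}_j(t)$. Since $\boldsymbol{w}_k(0)$ is uniform on a sphere of radius $r_0:=\tfrac{d\kappa}{mM}\sqrt{(d-1)/d}$, spherical-cap concentration (the population analogue of Lemma~\ref{lemma: estimate of the number of initial neural partition 1}) yields $\bigl||\mathcal{P}_j(0)|-m/2\bigr|\leq\mathcal{O}(\sqrt{m\log(d/\delta)})$ simultaneously for every $j\in[d]$ with high probability. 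Substituting $\theta_{k,\boldsymbol{e}_j}\approx\pi/2$ at initialization into the kernel formulas shows that the teacher contribution to $\nabla_{\boldsymbol{w}_k}\mathcal{L}(0)$ equals $-c_1 M^{-1}\sum_{j\in[d]}\boldsymbol{e}_j+\mathcal{O}(r_0)$ for an explicit positive constant $c_1$, while the student self-interaction is $\mathcal{O}(m r_0)=\mathcal{O}(d\kappa/M)$. Hence, provided $\eta=\mathcal{O}(\kappa d/(m^2M))$ and $\kappa$ is small, one step of GD drives $\boldsymbol{w}_k(1)^\top\boldsymbol{e}_j\geq\Omega(\eta/M)$ uniformly in $k,j$, so $\mathcal{P}_j(1)=[m]$ for every $j\in[d]$; this is the analogue of Lemma~\ref{lemma: correct neurons remain correct 1}(S3).

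A monotone-direction argument parallel to (S1)(S2) of Lemma~\ref{lemma: correct neurons remain correct 1} then shows that, on the hitting time $T$ defined by the simultaneous conditions $\|\boldsymbol{w}_k(s)\|\leq 2d/(mM)$ and $\mathcal{P}_j(s)=[m]$ for all $k,j$ and $s\leq t+1$, the partition is frozen and every $\boldsymbol{w}_k(t)$ grows along $\sum_{j\in[d]}\boldsymbol{e}_j$ at an arithmetic rate $\Theta(\eta d/M)$, yielding $T=\Theta(m/\kappa)$. On this frozen partition the closed-form population gradient gives $\|\nabla\mathcal{L}(\boldsymbol{\theta}(t))\|^2\gtrsim m\,(d/M)^2\cdot(1-\kappa)^2$ (by projecting each $\nabla_{\boldsymbol{w}_k}\mathcal{L}$ onto $\sum_j\boldsymbol{e}_j$), while the population Hessian is controlled by $\mathcal{O}(m)$ through the same block decomposition as Lemma~\ref{lemma: Hessian upper bound ERM 1}; crucially the activation-kink term vanishes in expectation because the arc-cosine kernel is $C^1$. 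Summing the quadratic per-step descent $\mathcal{L}(\boldsymbol{\theta}(t+1))\leq\mathcal{L}(\boldsymbol{\theta}(t))-\eta(1-\eta H/2)\|\nabla\mathcal{L}\|^2$ over $1\leq t\leq T$ gives a cumulative descent $\eta T\cdot m(d/M)^2(1-\kappa)^2=\Theta((1-\kappa)^2 d^3/M^3)$; a separate $\kappa$-dependent bookkeeping of the initialization-aligned contribution supplies the companion $\kappa(1-\kappa)d^2/M^2$ term, recovering the stated bound $\Omega((1-\kappa)^3 d^3/M^3+\kappa(1-\kappa)d^2/M^2)$ against the initial loss $\mathcal{L}(\boldsymbol{\theta}(0))=\Theta(d^2/M^2)$.

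The main obstacle will be the first-step analysis when $m$ is comparable to or smaller than $d$: unlike the ERM setting, one cannot average away fluctuations across neurons, so the signal-to-noise ratio in $\nabla_{\boldsymbol{w}_k}\mathcal{L}(0)$ is delicate --- the teacher signal $c_1 M^{-1}\sum_{j\in[d]}\boldsymbol{e}_j$ must dominate both the student self-interaction and the nonlinear angle-dependent corrections $(\pi-\theta_{k,j})\boldsymbol{e}_j/(2\pi M)$ \emph{uniformly} in $k$. The crucial quantitative lemma will be a second-order Taylor expansion of $\theta\mapsto\sin\theta+(\pi-\theta)\cos\theta$ around $\theta=\pi/2$, combined with $\kappa\ll 1$ and a union bound over $k\in[m]$ and $j\in[d]$, forcing $\min_{k,j}\boldsymbol{w}_k(1)^\top\boldsymbol{e}_j\gtrsim \eta/M$. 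This is precisely what pins down the scaling $\eta=\mathcal{O}(\kappa d/(m^2M))$ in the hypothesis and what makes the frozen-partition argument available for the entire window $1\leq t\leq\Theta(m/\kappa)$.
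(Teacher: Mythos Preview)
Your proposal has a genuine gap, and the paper's approach is entirely different.

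\textbf{The gap in your first-step analysis.} You claim that the teacher contribution to $\nabla_{\boldsymbol{w}_k}\mathcal{L}(\boldsymbol{\theta}(0))$ equals $-c_1 M^{-1}\sum_{j\in[d]}\boldsymbol{e}_j+\mathcal{O}(r_0)$, so that one GD step forces $\boldsymbol{w}_k(1)^\top\boldsymbol{e}_j>0$ for every $k,j$. But from the very kernel identity you cite, the teacher contribution is
\[
-\sum_{j=1}^{M}\frac{\|\boldsymbol{v}_j\|}{2\pi}\Bigl((\pi-\theta_{k,j})\hat{\boldsymbol{v}}_j+\sin\theta_{k,j}\,\hat{\boldsymbol{w}}_k\Bigr).
\]
The radial piece $\sum_{j}\frac{1}{2\pi M}\sin\theta_{k,j}\,\hat{\boldsymbol{w}}_k$ is of order $\Theta(1)\,\hat{\boldsymbol{w}}_k$ (there are $M$ terms each of size $\Theta(1/M)$), not $\mathcal{O}(r_0)$; meanwhile the $j$-th coordinate of the directional piece $\sum_{j'}\frac{1}{2\pi M}(\pi-\theta_{k,j'})\hat{\boldsymbol{v}}_{j'}$ is only $\Theta(1/M)$. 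Thus the gradient at initialization points essentially along $\hat{\boldsymbol{w}}_k$, not along $\sum_j\boldsymbol{e}_j$, and a coordinate $\hat w_{k,j}<0$ of typical size $1/\sqrt{d}$ will \emph{not} be flipped positive after one step. The partition-freezing claim $\mathcal{P}_j(1)=[m]$ therefore fails, and with it the downstream gradient lower bound built on that frozen partition.

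\textbf{What the paper actually does.} The paper explicitly abandons the neuron-partition machinery for this theorem (see the proof-sketch remark that the ERM technique ``does not apply to PRM problem'') and instead exploits exactly the radial dominance you overlooked. Using the $1$-homogeneity of ReLU, it takes $-\boldsymbol{\theta}$ as descent direction and shows directly that
\[
\Bigl\langle -\boldsymbol{w}_k,\frac{\partial\mathcal{L}}{\partial\boldsymbol{w}_k}\Bigr\rangle
=\sum_{j=1}^{M}k(\boldsymbol{w}_k;\boldsymbol{v}_j)-\sum_{j=1}^{m}k(\boldsymbol{w}_k;\boldsymbol{w}_j)
\;\geq\;\Bigl(\frac{d}{2\pi M}\sqrt{\tfrac{d-1}{d}}-\tfrac{1}{2}\sum_j\|\boldsymbol{w}_j\|\Bigr)\|\boldsymbol{w}_k\|,
\]
the teacher cross-term being lower-bounded by Jensen's inequality applied to the convex function $\phi(\theta)=\sin\theta+(\pi-\theta)\cos\theta$ on $[\pi/2,\pi]$ and the monotone $\arccos$ (Lemmas~\ref{lemma: cross term lower bound} and~\ref{lemma: gradient lower bound}). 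The hitting time is defined purely in terms of the total student mass $\sum_j\|\boldsymbol{w}_j(t)\|$ (no sign/partition information at all), and the Hessian bound is taken from Safran--Shamir. No angle-flipping, no first-step partition argument, no Taylor expansion around $\theta=\pi/2$ is needed. If you want to repair your route, the natural fix is to drop the coordinate-wise partition entirely and work with the radial inner product from the start.
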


\subsection{Proof sketch of Theorem \ref{thm: PRM M>d}}

In Theorem \ref{thm: PRM M>d}, we study the population risk minimization problem under the teacher-student model. The proof technique of Theorem \ref{thm: binary quadratic} and \ref{thm: one-hot} does not apply to PRM problem due to $m=\Omega(\log(n/\delta))=\infty$, so we need some different technique. 

For simplicity, we use $f(\boldsymbol{x};\boldsymbol{\theta})$ and $f^*(\boldsymbol{x})$ to denote the student model and the teacher model respectively. $\ell({\boldsymbol{x}};\boldsymbol{\theta})=\frac{1}{2}(f(\boldsymbol{x};\boldsymbol{\theta})-f^*(\boldsymbol{x}))^2$. The main difficulty to derive loss descent is still to construct non-trivial gradient lower bound in the early stage. 
Our main method is to construct an efficient descent direction $\mathbf{\Phi}(t)$ s.t. $
\left<\mathbf{\Phi}(t),\nabla_{\boldsymbol{
\theta}} \mathcal{L}(\boldsymbol{\theta}(t))\right>\geq{B}(t)>0$, where $\frac{{B}(t)}{\left\|\mathbf{\Phi}(t)\right\|}$ should have effective lower bound. Inspired by the homogeneity of ReLU activation $\left<\boldsymbol{\theta},\nabla_{\boldsymbol{\theta}} f(\boldsymbol{x};\boldsymbol{\theta})\right>=f(\boldsymbol{x};\boldsymbol{\theta})$, in the early stage, we have the approximate that $\left<-\boldsymbol{\theta},\nabla_{\boldsymbol{\theta}} \ell(\boldsymbol{x};\boldsymbol{\theta})\right>=f(\boldsymbol{x};\boldsymbol{\theta})(f^*(\boldsymbol{x})-f(\boldsymbol{x};\boldsymbol{\theta}))\approx f^*(\boldsymbol{x})f(\boldsymbol{x};\boldsymbol{\theta})$. So we conjecture that $\mathbf{\Phi}(t):=-\boldsymbol{\theta}(t)$ has this property naturally.

\begin{lemma}[Informal Lemma \ref{lemma: gradient lower bound}] For any $\boldsymbol{\theta}\ne0$, s.t. $\sum\limits_{j=1}^m\left\|\boldsymbol{w}_j\right\|<\frac{d}{\pi M}\sqrt{\frac{d-1}{d}}$, we have
\[
\left<-\frac{\boldsymbol{w}_k}{\left\|\boldsymbol{w}_k\right\|},\frac{\partial \mathcal{L}(\boldsymbol{\theta})}{\partial\boldsymbol{w}_k}\right>
\geq
\Big(\frac{d}{2\pi M}\sqrt{\frac{d-1}{d}}-\frac{1}{2}(\sum_{j=1}^m\left\|\boldsymbol{w}_j\right\|)\Big),\ \forall k\in[m].
\]
\end{lemma}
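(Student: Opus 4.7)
}
The plan is to exploit the homogeneity of ReLU in combination with the Cho--Saul arc-cosine kernel formula (\cite{cho2009kernel}), which in our setting provides a closed form for the inner products $\mathbb{E}_{\boldsymbol{x}\sim\mathcal{N}(\mathbf{0},\mathbf{I}_d)}[\sigma(\boldsymbol{u}^\top\boldsymbol{x})\sigma(\boldsymbol{w}^\top\boldsymbol{x})]=\tfrac{1}{2\pi}\|\boldsymbol{u}\|\|\boldsymbol{w}\|\,(\sin\theta+(\pi-\theta)\cos\theta)$ with $\theta=\arccos(\boldsymbol{u}^\top\boldsymbol{w}/(\|\boldsymbol{u}\|\|\boldsymbol{w}\|))$. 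I will first differentiate $\mathcal{L}$ to obtain $\partial\mathcal{L}/\partial\boldsymbol{w}_k=\mathbb{E}[(f(\boldsymbol{x};\boldsymbol{\theta})-f^{*}(\boldsymbol{x}))\,\mathbb{I}\{\boldsymbol{w}_k^\top\boldsymbol{x}>0\}\,\boldsymbol{x}]$, then take the inner product with $-\boldsymbol{w}_k/\|\boldsymbol{w}_k\|$. Using $\boldsymbol{w}_k^\top\boldsymbol{x}\,\mathbb{I}\{\boldsymbol{w}_k^\top\boldsymbol{x}>0\}=\sigma(\boldsymbol{w}_k^\top\boldsymbol{x})$ and positive homogeneity, this collapses to
\[
\bigl\langle -\tfrac{\boldsymbol{w}_k}{\|\boldsymbol{w}_k\|},\tfrac{\partial\mathcal{L}}{\partial\boldsymbol{w}_k}\bigr\rangle
=\mathbb{E}\bigl[f^{*}(\boldsymbol{x})\sigma(\hat{\boldsymbol{w}}_k^\top\boldsymbol{x})\bigr]-\mathbb{E}\bigl[f(\boldsymbol{x};\boldsymbol{\theta})\sigma(\hat{\boldsymbol{w}}_k^\top\boldsymbol{x})\bigr],
\]
where $\hat{\boldsymbol{w}}_k:=\boldsymbol{w}_k/\|\boldsymbol{w}_k\|$ and we recognize the RHS as a ``signal minus interference'' decomposition.

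For the interference term, I will substitute $f(\boldsymbol{x};\boldsymbol{\theta})=\sum_{j=1}^{m}\sigma(\boldsymbol{w}_j^\top\boldsymbol{x})$ and apply Cho--Saul to each summand. Since $\sin\theta+(\pi-\theta)\cos\theta\leq\pi$ on $[0,\pi]$, each term is at most $\tfrac{1}{2}\|\boldsymbol{w}_j\|\|\hat{\boldsymbol{w}}_k\|=\tfrac{1}{2}\|\boldsymbol{w}_j\|$, giving the bound $\mathbb{E}[f(\boldsymbol{x};\boldsymbol{\theta})\sigma(\hat{\boldsymbol{w}}_k^\top\boldsymbol{x})]\leq\tfrac{1}{2}\sum_{j=1}^{m}\|\boldsymbol{w}_j\|$. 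This step is routine.

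For the signal term I will use the fact that $\boldsymbol{v}_i=\boldsymbol{e}_i/M$ for $i\in[d]$, so $f^{*}(\boldsymbol{x})=\tfrac{1}{M}\sum_{i=1}^{d}\sigma(x_i)$ and Cho--Saul yields
\[
\mathbb{E}\bigl[f^{*}(\boldsymbol{x})\sigma(\hat{\boldsymbol{w}}_k^\top\boldsymbol{x})\bigr]=\tfrac{1}{2\pi M}\sum_{i=1}^{d}\bigl(\sin\theta_i+(\pi-\theta_i)\cos\theta_i\bigr),\qquad \cos\theta_i=\hat{w}_{k,i}.
\]
The main obstacle is to show that this quantity is at least $\tfrac{d}{2\pi M}\sqrt{(d-1)/d}$ uniformly over the unit sphere $\{\hat{\boldsymbol{w}}_k\in\mathbb{R}^d:\|\hat{\boldsymbol{w}}_k\|=1\}$. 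My plan here is to exploit the geometric constraint $\sum_{i=1}^{d}\cos^{2}\theta_i=1$: writing $s_i=\sin\theta_i=\sqrt{1-\hat{w}_{k,i}^{2}}\geq 0$, one has $\sum_i s_i^{2}=d-1$, and I will combine the pointwise inequality $\sin\theta+(\pi-\theta)\cos\theta\geq h(\cos\theta)$ for a suitable auxiliary $h$ with the power-mean inequality on $\{s_i\}$ to produce the target lower bound $\sqrt{d(d-1)}$. This ``geometric averaging'' step is the technical crux and is where the factor $\sqrt{(d-1)/d}$ enters naturally (it is exactly the minimizer geometry where $\hat{\boldsymbol{w}}_k$ is spread uniformly across coordinates, so that each $|\hat{w}_{k,i}|=1/\sqrt{d}$ and $s_i=\sqrt{(d-1)/d}$).

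Finally, combining the signal lower bound and the interference upper bound yields the claimed inequality. The hypothesis $\sum_{j}\|\boldsymbol{w}_j\|<\frac{d}{\pi M}\sqrt{(d-1)/d}$ in the statement ensures the RHS is strictly positive, so that $-\boldsymbol{w}_k/\|\boldsymbol{w}_k\|$ is a genuine descent direction; this is exactly what is needed downstream for the loss-descent argument in Theorem \ref{thm: PRM M>d}, where the descent direction $\boldsymbol{\Phi}(t)=-\boldsymbol{\theta}(t)$ has norm $\sqrt{\sum_k\|\boldsymbol{w}_k\|^{2}}$ comparable to $\sum_k\|\boldsymbol{w}_k\|$ in the early phase.
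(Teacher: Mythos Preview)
Your overall architecture matches the paper's: use ReLU homogeneity plus the arc-cosine kernel (the paper's Lemma~\ref{lemma: loss homogeneity}) to write
$\bigl\langle-\hat{\boldsymbol{w}}_k,\partial\mathcal{L}(\boldsymbol{\theta})/\partial\boldsymbol{w}_k\bigr\rangle=\sum_{j=1}^{M}k(\hat{\boldsymbol{w}}_k;\boldsymbol{v}_j)-\sum_{j=1}^{m}k(\hat{\boldsymbol{w}}_k;\boldsymbol{w}_j)$,
then bound the interference by $\tfrac12\sum_j\|\boldsymbol{w}_j\|$ via $\phi(\theta):=\sin\theta+(\pi-\theta)\cos\theta\le\pi$. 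For the signal term the paper (Lemma~\ref{lemma: cross term lower bound}) is more explicit than you: it drops the teacher neurons with $j>d$ (your expression $f^{*}=\tfrac1M\sum_{i\le d}\sigma(x_i)$ already does this implicitly), reflects $\hat{\boldsymbol{w}}_k$ into the negative orthant so that every $\theta_i\in[\pi/2,\pi]$, and then applies Jensen to the convex $\phi$ on that interval together with Cauchy--Schwarz on $\sum_i|\hat w_{k,i}|$. Your sketch (``auxiliary $h$ plus power-mean on the $s_i$'') is too vague to compare step by step.

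The substantive problem is that the signal bound you set as the target,
\[
\sum_{i=1}^{d}\phi(\theta_i)\ \ge\ d\sqrt{\tfrac{d-1}{d}}\quad\text{for all }\hat{\boldsymbol{w}}_k\in\mathbb{S}^{d-1},
\]
is \emph{false}. You correctly locate the extremizer at $\hat w_{k,i}=-1/\sqrt d$ (indeed $\phi\circ\arccos$ is increasing and convex, so the worst case pushes all coordinates equally negative), but there $\cos\theta_i=-1/\sqrt d<0$, so the term $(\pi-\theta_i)\cos\theta_i$ is strictly negative and
\[
\phi(\theta_i)=\sqrt{\tfrac{d-1}{d}}-\tfrac{1}{\sqrt d}\,\arccos\tfrac{1}{\sqrt d}<\sqrt{\tfrac{d-1}{d}}.
\]
For $d=2$ the sum equals $\sqrt 2\,(1-\pi/4)\approx0.30$, far below the claimed $\sqrt 2$. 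Hence no choice of pointwise $h$ followed by a power-mean inequality can produce the constant $\sqrt{d(d-1)}$; the best uniform constant is $d\bigl(\sqrt{(d-1)/d}-\tfrac{1}{\sqrt d}\arccos(1/\sqrt d)\bigr)$. The paper's own last line in Lemma~\ref{lemma: cross term lower bound}, namely $\phi(\arccos(-1/\sqrt d))\ge\sqrt{(d-1)/d}$, commits the same sign slip, so your gap is not a departure from the paper's method but the same obstruction; the argument only goes through with this smaller constant (which still suffices for a qualitatively identical Theorem~\ref{thm: PRM M>d} after adjusting the numerical factors).
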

Other techniques such as hitting time estimate, parameter estimate and Hessian upper bound analysis can be achieved in the similar way, we do not repeat them here. Please refer to Appendix \ref{appendix: proof PRM}.

\newpage

\section{Proof of Theorem \ref{thm: PRM M>d}}\label{appendix: proof PRM}
\subsection{Preparation}
\textbf{Learning Rate.} We use the following learning rate:
\begin{equation}\label{equ: learning rate PRM}
\begin{aligned}
\eta\leq&\min\Bigg\{\frac{2\pi d \kappa\sqrt{\frac{d-1}{d}}}{(\pi+1)m M\Big(1+\frac{d}{\pi M}\sqrt{\frac{d-1}{d}}\Big)},\frac{1}{\frac{1}{2}+m(m-1)\Big(\frac{\kappa+(\frac{1}{\pi}-\kappa)\sqrt{\frac{d-1}{d}}}{2\pi\kappa}+\frac{1}{2}\Big)+\frac{m^2M}{2\pi d\kappa}}
\Bigg\}
\\=&\mathcal{O}\Big(\frac{d\kappa}{m^2 M}\Big).
\end{aligned}
\end{equation}

\textbf{Hitting time.} We define the hitting time as
\begin{equation}\label{equ: hitting time PRM}
T:=\sup\big\{t\geq0:\sum\limits_{j=1}^m\left\|\boldsymbol{w}_j(t+1)\right\|<\frac{d}{\pi M}\sqrt{\frac{d-1}{d}}\big\}
\end{equation}

\subsection{Gradient lower bound}

\begin{lemma}[Connection between loss and gradient]
\label{lemma: loss homogeneity}
For any $\boldsymbol{\theta}$ and $i\in[m]$, we have:
\begin{gather*}
    \left<\boldsymbol{w}_i,\frac{\partial \mathcal{L}(\boldsymbol{\theta})}{\partial \boldsymbol{w}_i }\right>=
    \sum_{j=1}^m k(\boldsymbol{w}_i;\boldsymbol{w}_j)-\sum_{j=1}^M k(\boldsymbol{w}_i;\boldsymbol{v}_j);
\\
\left<\boldsymbol{\theta},\nabla \mathcal{L}(\boldsymbol{\theta})\right>=
\sum_{i=1}^m\sum_{j=1}^m k(\boldsymbol{w}_i;\boldsymbol{w}_j)-\sum_{i=1}^m\sum_{j=1}^M k(\boldsymbol{w}_i;\boldsymbol{v}_j),
\end{gather*}
where $k(\boldsymbol{w};\boldsymbol{v})=\frac{1}{2\pi}\left\|\boldsymbol{w}\right\|\left\|\boldsymbol{v}\right\|\Big(\sin(\theta_{\boldsymbol{w},\boldsymbol{v}})+(\pi-\theta_{\boldsymbol{w},\boldsymbol{v}})\cos(\theta_{\boldsymbol{w},\boldsymbol{v}})\Big)$.
\end{lemma}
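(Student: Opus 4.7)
The plan is to prove this identity by a direct computation that combines two ingredients: the positive $1$-homogeneity of ReLU (``Euler's identity'') and the closed-form arc-cosine kernel for the Gaussian-averaged product of two ReLU neurons. Both identities are classical, so the bulk of the work is bookkeeping.

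First I would differentiate under the expectation. Writing $r(\boldsymbol{x};\boldsymbol{\theta}):=\sum_{j=1}^m\sigma(\boldsymbol{w}_j^\top\boldsymbol{x})-\sum_{j=1}^M\sigma(\boldsymbol{v}_j^\top\boldsymbol{x})$ for the residual, a standard dominated-convergence argument (the integrand is bounded by a Gaussian-integrable polynomial locally in $\boldsymbol{\theta}$) yields
\[
\frac{\partial \mathcal{L}(\boldsymbol{\theta})}{\partial \boldsymbol{w}_i}
= \mathbb{E}_{\boldsymbol{x}\sim\mathcal{N}(\mathbf{0},\mathbf{I}_d)}\Big[r(\boldsymbol{x};\boldsymbol{\theta})\,\sigma'(\boldsymbol{w}_i^\top\boldsymbol{x})\,\boldsymbol{x}\Big].
\]
Taking the inner product with $\boldsymbol{w}_i$ and using the pointwise identity $\sigma'(z)\,z=\sigma(z)$ (valid almost everywhere, and hence in Gaussian expectation) gives
\[
\left\langle \boldsymbol{w}_i,\frac{\partial \mathcal{L}(\boldsymbol{\theta})}{\partial \boldsymbol{w}_i}\right\rangle
= \mathbb{E}_{\boldsymbol{x}}\Big[r(\boldsymbol{x};\boldsymbol{\theta})\,\sigma(\boldsymbol{w}_i^\top\boldsymbol{x})\Big]
= \sum_{j=1}^{m} K(\boldsymbol{w}_i,\boldsymbol{w}_j)\;-\;\sum_{j=1}^{M} K(\boldsymbol{w}_i,\boldsymbol{v}_j),
\]
where $K(\boldsymbol{u},\boldsymbol{v}):=\mathbb{E}_{\boldsymbol{x}\sim\mathcal{N}(\mathbf{0},\mathbf{I}_d)}\big[\sigma(\boldsymbol{u}^\top\boldsymbol{x})\sigma(\boldsymbol{v}^\top\boldsymbol{x})\big]$.

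Next I would identify $K$ with the claimed $k$. This is the first-order arc-cosine kernel of Cho and Saul (2009): for any nonzero $\boldsymbol{u},\boldsymbol{v}\in\mathbb{R}^d$ with angle $\theta_{\boldsymbol{u},\boldsymbol{v}}$,
\[
K(\boldsymbol{u},\boldsymbol{v}) \;=\; \frac{1}{2\pi}\,\|\boldsymbol{u}\|\,\|\boldsymbol{v}\|\Big(\sin\theta_{\boldsymbol{u},\boldsymbol{v}}+(\pi-\theta_{\boldsymbol{u},\boldsymbol{v}})\cos\theta_{\boldsymbol{u},\boldsymbol{v}}\Big) \;=\; k(\boldsymbol{u};\boldsymbol{v}).
\]
By rotational invariance of $\mathcal{N}(\mathbf{0},\mathbf{I}_d)$, this expectation depends only on $(\|\boldsymbol{u}\|,\|\boldsymbol{v}\|,\theta_{\boldsymbol{u},\boldsymbol{v}})$, so one reduces to $d=2$ and evaluates the two-dimensional angular integral in polar coordinates; I would either redo this short computation or cite it directly.

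Substituting this identification into the previous display yields the first claimed formula. The second formula is then obtained immediately by summing over $i\in[m]$ and using $\langle\boldsymbol{\theta},\nabla\mathcal{L}(\boldsymbol{\theta})\rangle=\sum_{i=1}^m\langle\boldsymbol{w}_i,\partial\mathcal{L}/\partial\boldsymbol{w}_i\rangle$, since $\boldsymbol{\theta}$ is the concatenation of the $\boldsymbol{w}_i$. There is no real obstacle here; the only delicate point is that ReLU is not classically differentiable at the origin, so one must either work with the almost-sure identity $\sigma'(\boldsymbol{w}_i^\top\boldsymbol{x})\,\boldsymbol{w}_i^\top\boldsymbol{x}=\sigma(\boldsymbol{w}_i^\top\boldsymbol{x})$ under the Gaussian measure (which puts zero mass on the hyperplane $\boldsymbol{w}_i^\top\boldsymbol{x}=0$ whenever $\boldsymbol{w}_i\ne 0$), or else appeal to the convention $\sigma'(0)=0$ already adopted in the paper.
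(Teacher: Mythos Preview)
Your argument is correct, and it takes a somewhat different route from the paper. The paper first writes $\mathcal{L}$ in closed form as a sum of arc-cosine kernels $k(\cdot;\cdot)$, then explicitly computes $\partial k(\boldsymbol{w};\boldsymbol{v})/\partial\boldsymbol{w}$ and verifies by direct calculation that $\langle\boldsymbol{w},\partial k(\boldsymbol{w};\boldsymbol{v})/\partial\boldsymbol{w}\rangle=k(\boldsymbol{w};\boldsymbol{v})$ (with the diagonal case $\langle\boldsymbol{w},\partial k(\boldsymbol{w};\boldsymbol{w})/\partial\boldsymbol{w}\rangle=2k(\boldsymbol{w};\boldsymbol{w})$ handled separately), and finally assembles the pieces. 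You instead differentiate under the expectation and use the pointwise Euler identity $\sigma'(z)z=\sigma(z)$ for ReLU directly inside the integral, so that the inner product $\langle\boldsymbol{w}_i,\partial\mathcal{L}/\partial\boldsymbol{w}_i\rangle$ collapses immediately to $\mathbb{E}[r(\boldsymbol{x};\boldsymbol{\theta})\,\sigma(\boldsymbol{w}_i^\top\boldsymbol{x})]$, and only then invoke the arc-cosine formula to identify each term with $k$. Your approach is cleaner in that it never needs the explicit form of $\nabla k$ and handles the diagonal and off-diagonal terms uniformly; the paper's approach has the side benefit of producing the gradient formula for $k$, which it reuses later (e.g.\ in the gradient upper bound of Lemma~\ref{lemma: gradient estimate}).
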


\begin{proof}[Proof of Lemma \ref{lemma: loss homogeneity}]\ \\
From some prior work \citep{cho2009kernel, safran2018spurious, safran2021effects},
$\mathcal{L}(\boldsymbol{\theta})$ has the closed form:
\[
\mathcal{L}(\boldsymbol{\theta})=\frac{1}{2}\sum_{i,j=1}^m k(\boldsymbol{w}_i;\boldsymbol{w}_j) -\sum_{i=1}^m\sum_{j=1}^{M}k(\boldsymbol{w}_i;\boldsymbol{v}_j)+\frac{1}{2}\sum_{i,j=1}^M k(\boldsymbol{v}_i;\boldsymbol{v}_j),
\]
where
\[
\begin{aligned}
k(\boldsymbol{w};\boldsymbol{v})
&=
\mathbb{E}_{\boldsymbol{x}\sim\mathcal{N}(\boldsymbol{0},\boldsymbol{I}_d)}\Big[ \sigma(\boldsymbol{w}^\top \boldsymbol{x})\sigma(\boldsymbol{v}^\top\boldsymbol{x})\Big]
\\&=\frac{1}{2\pi }\left\|\boldsymbol{w}\right\|\left\|\boldsymbol{v}\right\|\Big(\sin(\theta_{\boldsymbol{w},\boldsymbol{v}})+(\pi-\theta_{\boldsymbol{w},\boldsymbol{v}})\cos(\theta_{\boldsymbol{w},\boldsymbol{v}})\Big)
\end{aligned}
\]
and $\theta_{\boldsymbol{w},\boldsymbol{v}}=\arccos\Big(\frac{\left<\boldsymbol{w},\boldsymbol{v}\right>}{\left\|\boldsymbol{w}\right\|\left\|\boldsymbol{v}\right\|}\Big)$. The similar result has shown in \cite{cho2009kernel}. And we can calculate the gradient of $f$:
\[
\begin{gathered}
 \frac{\partial k(\boldsymbol{w};\boldsymbol{v})}{\partial \boldsymbol{w}}=\frac{1}{2\pi} \left\|\boldsymbol{v}\right\|\Big(\sin(\theta_{\boldsymbol{w},\boldsymbol{v}})\Bar{\boldsymbol{w}}+(\pi-\theta_{\boldsymbol{w},\boldsymbol{v}})\Bar{\boldsymbol{v}}\Big),\ \boldsymbol{v\ne w};
 \\
 \frac{\partial k(\boldsymbol{w};\boldsymbol{w})}{\partial \boldsymbol{w}}=\boldsymbol{w}.
 \end{gathered}
\]
Then it is easy to verify that:

\begin{align*}
\left<\boldsymbol{w},\frac{\partial k(\boldsymbol{w};\boldsymbol{v})}{\partial \boldsymbol{w}}\right>
&=\frac{1}{2\pi } \left\|\boldsymbol{v}\right\|\Big(\sin(\theta_{\boldsymbol{w},\boldsymbol{v}})\left<\boldsymbol{w},\Bar{\boldsymbol{w}}\right>+(\pi-\theta_{\boldsymbol{w},\boldsymbol{v}})\left<\boldsymbol{w},\Bar{\boldsymbol{v}}\right>\Big)
\\&=\frac{1}{2\pi}\left\|\boldsymbol{w}\right\|\left\|\boldsymbol{v}\right\|\Big(\sin(\theta_{\boldsymbol{w},\boldsymbol{v}})+(\pi-\theta_{\boldsymbol{w},\boldsymbol{v}})\cos(\theta_{\boldsymbol{w},\boldsymbol{v}})\Big)
\\&=k(\boldsymbol{w};\boldsymbol{v}),\ \boldsymbol{v}\ne\boldsymbol{w};
\end{align*}
\begin{align*}
    \left<\boldsymbol{w},\frac{\partial k(\boldsymbol{w};\boldsymbol{w})}{\partial \boldsymbol{w}}\right>=\left\|\boldsymbol{w}\right\|^2=2k(\boldsymbol{w};\boldsymbol{w}).
\end{align*}
So for any $i\in[m]$, we have:
\begin{align*}
    \left<\boldsymbol{w}_i,\frac{\partial \mathcal{L}(\boldsymbol{\theta})}{\partial \boldsymbol{w}_i }\right>
    &=\frac{1}{2}\left<\boldsymbol{w}_i,\frac{\partial k(\boldsymbol{w}_i,\boldsymbol{w}_i)}{\partial \boldsymbol{w}_i }\right>
    +\sum_{\substack{j=1 \\ j\ne i}}^m
    \left<\boldsymbol{w}_i,\frac{\partial k(\boldsymbol{w}_i,\boldsymbol{w}_j)}{\partial \boldsymbol{w}_i }\right>-\sum_{j=1}^M
    \left<\boldsymbol{w}_i,\frac{\partial k(\boldsymbol{w}_i,\boldsymbol{v}_j)}{\partial \boldsymbol{w}_i }\right>
    \\&=
    \sum_{j=1}^m k(\boldsymbol{w}_i;\boldsymbol{w}_j)-\sum_{j=1}^M k(\boldsymbol{w}_i;\boldsymbol{v}_j);
\end{align*}
\begin{align*}
\left<\boldsymbol{\theta},\nabla \mathcal{L}(\boldsymbol{\theta})\right>
&=\sum_{i=1}^m \sum_{j=1 }^m k(\boldsymbol{w}_i;\boldsymbol{w}_j)-\sum_{i=1}^m\sum_{j=1}^M k(\boldsymbol{w}_i;\boldsymbol{v}_j).
\end{align*}

\end{proof}

\begin{lemma}\label{lemma: function analysis A}
Consider the function:
\[
f(\theta)=\sin\theta+(\pi-\theta)\cos\theta,\ \theta\in[0,\pi].
\]
Then $f(\theta)$ is decreasing on $[0,\pi]$, $0\leq f(\theta)\leq\pi$, and $f(\theta)$ is convex on $[\frac{\pi}{2},\pi]$.
\end{lemma}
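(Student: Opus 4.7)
The plan is to verify all three claims by direct differentiation of $f$, since the function is elementary and both $f'$ and $f''$ admit clean closed forms on $[0,\pi]$.

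First I would compute $f'(\theta)$. Using the product rule on $(\pi-\theta)\cos\theta$, the $\cos\theta$ coming from differentiating $\sin\theta$ and the $\cos\theta$ coming from differentiating $(\pi-\theta)\cos\theta$ cancel, leaving the tidy expression $f'(\theta)=-(\pi-\theta)\sin\theta$. On $[0,\pi]$ both factors $\pi-\theta$ and $\sin\theta$ are non-negative, so $f'(\theta)\leq 0$ throughout the interval, which gives monotonicity. The bounds $0\leq f(\theta)\leq\pi$ then follow immediately from the endpoint values $f(0)=\pi$ and $f(\pi)=0$.

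Next, for convexity on $[\pi/2,\pi]$, I differentiate once more to get $f''(\theta)=\sin\theta-(\pi-\theta)\cos\theta$. On $[\pi/2,\pi]$ one has $\sin\theta\geq 0$, $\cos\theta\leq 0$, and $\pi-\theta\geq 0$, so both $\sin\theta$ and $-(\pi-\theta)\cos\theta$ are non-negative; hence $f''\geq 0$ there, which is exactly convexity on $[\pi/2,\pi]$.

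There is no real obstacle here: the lemma is a routine calculus fact, and the only mildly non-obvious point is the clean cancellation in $f'$, which is what makes the sign analysis transparent. I would present the proof in three short paragraphs mirroring the three claims, each essentially one line of computation followed by a sign check on $[0,\pi]$ (respectively $[\pi/2,\pi]$ for convexity).
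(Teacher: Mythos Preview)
Your proposal is correct and follows essentially the same approach as the paper: compute $f'(\theta)=(\theta-\pi)\sin\theta$ and $f''(\theta)=\sin\theta+(\theta-\pi)\cos\theta$, then do a sign analysis on the relevant intervals. Your treatment of the convexity step is in fact slightly cleaner than the paper's, which factors out $\cos\theta$ and routes through $\tan\theta$, but the underlying idea is identical.
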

\begin{proof}[Proof of Lemma \ref{lemma: function analysis A}]\ \\
\begin{gather*}
    f'(\theta) = (\theta-\pi)\sin\theta,
    \\
    f''(\theta)=(\theta-\pi)\cos\theta+\sin\theta.
\end{gather*}
It is obviously that $f'(\theta)<0$ for $\theta\in(0,\pi)$, so $f$ is decreasing and $0\leq f(\theta)\leq\pi$. 

From $\tan\theta\leq0,\cos\theta\leq0$ for $\theta\in(\frac{\pi}{2},\pi]$, we\ have:
\[
\begin{gathered}
f''(\theta)=\cos\theta\Big(\theta-\pi+\tan\theta\Big)\geq(\pi-\theta)\cos\theta\geq0,\ \theta\in(\frac{\pi}{2},\pi],
\end{gathered}
\]
so $f(\theta)$ is convex for $\theta\in[\frac{\pi}{2},\pi]$.
\end{proof}

\begin{lemma}[Lower bound of cross terms]\label{lemma: cross term lower bound}For any $\boldsymbol{w}$ we have:
\[
\sum_{j=1}^M k(\boldsymbol{w}_i;\boldsymbol{v}_j)
\geq
\frac{d}{2\pi M}\sqrt{\frac{d-1}{d}}\left\|\boldsymbol{w}\right\|.
\]
where $k(\cdot;\cdot)$ is defined in Lemma \ref{lemma: loss homogeneity}.
\end{lemma}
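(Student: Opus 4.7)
My plan is to reduce this kernel bound to a scale-free, one-dimensional angular inequality and then settle the angular inequality using convex analysis of a single scalar profile.

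First, I would use that $f(\theta) = \sin\theta + (\pi-\theta)\cos\theta \geq 0$ on $[0,\pi]$ (Lemma~\ref{lemma: function analysis A}) so that every term $k(\boldsymbol{w};\boldsymbol{v}_j)$ is nonnegative. This lets me drop the (possibly unspecified) terms with index $j>d$ and retain only the orthonormal teacher directions $\boldsymbol{v}_j=\boldsymbol{e}_j/M$ for $j\in[d]$. Combined with $\|\boldsymbol{v}_j\|=1/M$ and the bilinear form of $k$, this gives
\[
\sum_{j=1}^{M} k(\boldsymbol{w};\boldsymbol{v}_j) \;\geq\; \frac{\|\boldsymbol{w}\|}{2\pi M}\sum_{j=1}^{d} f(\theta_j), \qquad \cos\theta_j=\frac{w_j}{\|\boldsymbol{w}\|},
\]
so the claim reduces to the scale-free angular inequality $\sum_{j=1}^{d} f(\theta_j) \geq \sqrt{d(d-1)}$, with $\sum_j \cos^2\theta_j \leq 1$.

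Next I would introduce $\alpha_j:=\cos\theta_j$ and analyze the scalar profile $g(\alpha):=f(\arccos\alpha) = \sqrt{1-\alpha^2}+(\pi-\arccos\alpha)\alpha$. Direct differentiation produces the clean identities
\[
g'(\alpha) \;=\; \pi-\arccos\alpha \;\in\; [0,\pi], \qquad g''(\alpha) \;=\; \frac{1}{\sqrt{1-\alpha^2}} \;>\; 0,
\]
so $g$ is nondecreasing and strictly convex on $(-1,1)$. I would then lower-bound $\sum_j g(\alpha_j)$ on $\{\sum_j\alpha_j^2\le 1\}$ by combining (i) the tangent-line inequality $g(\alpha)\geq g(1/\sqrt{d}) + g'(1/\sqrt{d})(\alpha - 1/\sqrt{d})$ from convexity, and (ii) a symmetrization via the identity $g(\alpha)+g(-\alpha) = 2\sqrt{1-\alpha^2}+(\pi-2\arccos\alpha)\alpha$. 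Summing the tangent-line bound and using the explicit evaluation $d\,g(1/\sqrt{d}) = \sqrt{d(d-1)}+\sqrt{d}\,(\pi-\arccos(1/\sqrt{d}))$ yields the target $\sqrt{d(d-1)}$ up to a linear correction.

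The main obstacle I anticipate is closing this linear correction $g'(1/\sqrt{d})\cdot\sum_j \alpha_j$: because $g$ is nondecreasing rather than even, the plain tangent-line bound degrades when the $\alpha_j$'s are clustered on the negative side of the sphere. I expect to handle this through a pairwise swap argument that fixes $\alpha_i^2+\alpha_k^2$ and uses $g''>0$ together with the symmetrized identity above to show the sum $g(\alpha_i)+g(\alpha_k)$ is minimized at the equal nonnegative assignment, iterating until the configuration is driven into the regime where the tangent-line estimate is sharp. Once the angular inequality is established, multiplying by $\|\boldsymbol{w}\|/(2\pi M)$ recovers the stated bound on the cross terms.
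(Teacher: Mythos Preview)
Your reduction to the scale-free angular inequality $\sum_{j=1}^d g(\alpha_j) \ge \sqrt{d(d-1)}$ with $g(\alpha)=f(\arccos\alpha)$ is correct, and your computation $g'(\alpha)=\pi-\arccos\alpha\ge 0$, $g''(\alpha)=(1-\alpha^2)^{-1/2}>0$ is a cleaner packaging of the convexity than what the paper does. The paper takes a different route: it first symmetrizes $\boldsymbol{w}\mapsto \boldsymbol{w}^*=(-|w_1|,\dots,-|w_d|)$ using that $\phi$ is decreasing, which pushes every angle into $[\pi/2,\pi]$; it then applies Jensen to $\phi$ on that interval, followed by a second Jensen step using convexity of $\arccos$ on $[-1,0]$ and Cauchy--Schwarz, landing on the single evaluation $\phi(\arccos(-1/\sqrt d))$.

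The genuine gap in your plan is the swap step. You propose to fix $\alpha_i^2+\alpha_k^2$ and argue that $g(\alpha_i)+g(\alpha_k)$ is minimized at the equal \emph{nonnegative} assignment. But you have just established $g'\ge 0$: replacing $(\alpha_i,\alpha_k)$ by $(-|\alpha_i|,-|\alpha_k|)$ preserves $\alpha_i^2+\alpha_k^2$ and can only \emph{decrease} the pair sum, so the minimizer on each such circle lives on the negative side, not the positive one. Carrying the swap in the correct direction drives the whole configuration toward $\alpha_j=-1/\sqrt d$ for every $j$; indeed, by Jensen applied directly to the convex nondecreasing $g$ together with $\bar\alpha\ge -1/\sqrt d$, the global minimum of $\sum_j g(\alpha_j)$ on the sphere is exactly $d\,g(-1/\sqrt d)=\sqrt{d(d-1)}-\sqrt d\,\arccos(1/\sqrt d)$, which is \emph{strictly smaller} than your target $\sqrt{d(d-1)}$. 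So once the $j>d$ terms are discarded, the angular inequality you are aiming for is already violated at this configuration, and no tangent-line or swap refinement can close the gap. The paper's chain runs into the same obstruction at the analogous step: with $\phi$ decreasing and $\arccos$ convex on $[-1,0]$, the inequality $\phi\bigl(\tfrac1d\sum\arccos x_i\bigr)\ge \phi\circ\arccos\bigl(\tfrac1d\sum x_i\bigr)$ actually points the wrong way, and the sign in the final evaluation of $\phi(\arccos(-1/\sqrt d))$ is off; correcting both reproduces precisely the smaller value $\sqrt{(d-1)/d}-\tfrac{1}{\sqrt d}\arccos(1/\sqrt d)$.
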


\begin{proof}[Proof of Lemma \ref{lemma: cross term lower bound}]\ \\
For convenience, we define the function:
\[
\phi(\theta)=\sin\theta+(\pi-\theta)\cos(\theta).
\]
Then we have:
\begin{align*}
\sum_{j=1}^M k(
\boldsymbol{w};\boldsymbol{v}_j)
=&\sum_{k=1}^M\frac{1}{2\pi}\left\|\boldsymbol{w}\right\|\left\|\boldsymbol{v}_i\right\|\Big(\sin(\theta_{\boldsymbol{w},\boldsymbol{v}_i})+(\pi-\theta_{\boldsymbol{w},\boldsymbol{v}_i})\cos(\theta_{\boldsymbol{w},\boldsymbol{v}_i})\Big)
\\=&
\frac{\left\|\boldsymbol{w}\right\|}{2\pi M}\sum_{i=1}^M\phi(\theta_{\boldsymbol{w},\boldsymbol{v}_i})
\\\overset{\rm Lemma\ \ref{lemma: function analysis A}}{\geq}&
\frac{\left\|\boldsymbol{w}\right\|}{2\pi M}\sum_{i=1}^d\phi(\theta_{\boldsymbol{w},\boldsymbol{e}_i}).
\end{align*}
We denote $\boldsymbol{w}=(w_1\cdots,w_d)$, then we consider $\boldsymbol{w}^*=(-|w_1|,\cdots,-|w_d|)$. From Lemma \ref{lemma: function analysis A} we know:
\[
\phi(\theta_{\boldsymbol{w},\boldsymbol{e}_i})
\geq
\phi(\theta_{\boldsymbol{w}^*,\boldsymbol{e}_i}).\]
So we have the upper bound estimate:
\begin{align*}
   \sum_{j=1}^M k(
\boldsymbol{w};\boldsymbol{v}_j)
\geq&
\frac{\left\|\boldsymbol{w}\right\|}{2\pi M}\sum_{i=1}^d\phi(\theta_{\boldsymbol{w},\boldsymbol{e}_i})
\geq \frac{\left\|\boldsymbol{w}\right\|}{2\pi M}\sum_{i=1}^d\phi(\theta_{\boldsymbol{w}^*,\boldsymbol{e}_i})
\\\overset{\rm Lemma\ \ref{lemma: jensen inequation},\ref{lemma: function analysis A}}{\geq}&
\frac{\left\|\boldsymbol{w}\right\|d}{2\pi M}\phi\Big(\frac{\left\|\boldsymbol{w}\right\|}{d}\sum_{i=1}^d \theta_{\boldsymbol{w}^*,\boldsymbol{e}_i}\Big)
\\=&
\frac{\left\|\boldsymbol{w}\right\|d}{2\pi M}\phi\Big(\frac{1}{d}\sum_{i=1}^d \arccos\Big(\frac{-|w_i|}{\left\|\boldsymbol{w}\right\|}\Big)\Big)
\\\overset{\rm Lemma\ \ref{lemma: jensen inequation},\ref{lemma: function analysis B}}{\geq}&
\frac{\left\|\boldsymbol{w}\right\|d}{2\pi M}\phi\circ \arccos\Big(-\frac{1}{d}\sum_{i=1}^d\frac{|w_i|}{\left\|\boldsymbol{w}\right\|}\Big)
\\\geq&
\frac{\left\|\boldsymbol{w}\right\|d}{2\pi M}\phi\circ \arccos\Big(-\frac{\sqrt{d}\left\|\boldsymbol{w}\right\|}{d\left\|\boldsymbol{w}\right\|}\Big)
\\=&
\frac{\left\|\boldsymbol{w}\right\|d}{2\pi M}\Big(\frac{\sqrt{d-1}}{\sqrt{d}}-(\arccos(-\frac{1}{\sqrt{d}})-\pi)\frac{1}{\sqrt{d}}\Big)
\\\geq&
\frac{\left\|\boldsymbol{w}\right\|d}{2\pi M}\sqrt{\frac{d-1}{d}}
.
\end{align*}
\end{proof}

\begin{lemma}[Gradient Lower Bound]\label{lemma: gradient lower bound} For any $\boldsymbol{\theta}\ne0$, if the condition $\sum\limits_{j=1}^m\left\|\boldsymbol{w}_j\right\|<\frac{d}{\pi M}\sqrt{\frac{d-1}{d}}$ holds, we have the lower bound of gradient:
\begin{gather*}
\left<-\boldsymbol{w}_k,\frac{\partial \mathcal{L}(\boldsymbol{\theta})}{\partial\boldsymbol{w}_k}\right>
\geq
\Bigg(\frac{d}{2\pi M}\sqrt{\frac{d-1}{d}}-\frac{1}{2}\Big(\sum_{j=1}^m\left\|\boldsymbol{w}_j\right\|\Big)\Bigg)\left\|\boldsymbol{w}_k\right\|,\ \forall k\in[m];
\\
    \left\|\frac{\partial \mathcal{L}(\boldsymbol{\theta})}{\partial\boldsymbol{w}_k}\right\|\geq\frac{d}{2\pi M}\sqrt{\frac{d-1}{d}}-\frac{1}{2}\sum_{j=1}^m\left\|\boldsymbol{w}_j\right\|,\ \forall k\in[m];
    \\
    \left\|\nabla \mathcal{L}(\boldsymbol{\theta})\right\|\geq
    \sqrt{m}\Big(\frac{d}{2\pi M}\sqrt{\frac{d-1}{d}}-\frac{1}{2}\sum_{j=1}^m\left\|\boldsymbol{w}_j\right\|\Big).
\end{gather*}
\end{lemma}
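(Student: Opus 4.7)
The plan is to derive the three lower bounds as successive corollaries of Lemma \ref{lemma: loss homogeneity} and Lemma \ref{lemma: cross term lower bound}. I will first prove inequality (1) for a single coordinate $\boldsymbol{w}_k$, then deduce (2) via Cauchy--Schwarz, and finally assemble (3) from the per-block bounds.

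For (1), I start from the identity in Lemma \ref{lemma: loss homogeneity} applied with $i=k$:
\[
\left<\boldsymbol{w}_k,\frac{\partial \mathcal{L}(\boldsymbol{\theta})}{\partial \boldsymbol{w}_k}\right>= \sum_{j=1}^m k(\boldsymbol{w}_k;\boldsymbol{w}_j)-\sum_{j=1}^M k(\boldsymbol{w}_k;\boldsymbol{v}_j),
\]
multiply by $-1$, and then bound the two terms separately. The cross term $\sum_{j=1}^M k(\boldsymbol{w}_k;\boldsymbol{v}_j)$ is bounded below by $\frac{d}{2\pi M}\sqrt{\tfrac{d-1}{d}}\left\|\boldsymbol{w}_k\right\|$ via Lemma \ref{lemma: cross term lower bound}. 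For the self-interaction term $\sum_{j=1}^m k(\boldsymbol{w}_k;\boldsymbol{w}_j)$, I use the pointwise bound $\sin\theta+(\pi-\theta)\cos\theta\le\pi$ (guaranteed by Lemma \ref{lemma: function analysis A}) to deduce $k(\boldsymbol{w}_k;\boldsymbol{w}_j)\le \tfrac{1}{2}\left\|\boldsymbol{w}_k\right\|\left\|\boldsymbol{w}_j\right\|$, so the whole sum is at most $\tfrac{1}{2}\left\|\boldsymbol{w}_k\right\|\sum_{j=1}^m\left\|\boldsymbol{w}_j\right\|$. Combining these two estimates immediately gives inequality (1). Note that the hypothesis $\sum_j\|\boldsymbol{w}_j\|<\tfrac{d}{\pi M}\sqrt{\tfrac{d-1}{d}}$ ensures the bracket on the right-hand side is strictly positive.

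For (2), I apply Cauchy--Schwarz to the inner product on the left of (1): $\left<-\boldsymbol{w}_k,\frac{\partial \mathcal{L}}{\partial \boldsymbol{w}_k}\right>\le \left\|\boldsymbol{w}_k\right\|\left\|\frac{\partial \mathcal{L}}{\partial \boldsymbol{w}_k}\right\|$. Dividing by $\|\boldsymbol{w}_k\|>0$ (the case $\boldsymbol{w}_k=0$ is handled by continuity or by noting both sides vanish consistently with the bound) yields (2). Finally, for (3), the gradient of $\mathcal{L}$ decomposes blockwise, so $\|\nabla\mathcal{L}(\boldsymbol{\theta})\|^2=\sum_{k=1}^m\|\partial\mathcal{L}/\partial\boldsymbol{w}_k\|^2$; plugging in the uniform-in-$k$ bound from (2) and taking square roots gives the factor $\sqrt{m}$.

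There is no real obstacle here: the nontrivial geometric content (the lower bound on the cross-term and the monotonicity/convexity of $\sin\theta+(\pi-\theta)\cos\theta$) is already packaged in Lemma \ref{lemma: cross term lower bound} and Lemma \ref{lemma: function analysis A}. The only mild subtlety is remembering to invoke the hypothesis $\sum_j\|\boldsymbol{w}_j\|<\tfrac{d}{\pi M}\sqrt{\tfrac{d-1}{d}}$ to guarantee the right-hand sides of (2) and (3) are positive (so that they are genuine lower bounds for the norms rather than vacuous negative quantities).
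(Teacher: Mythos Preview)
Your proposal is correct and follows essentially the same approach as the paper's proof: both start from the identity in Lemma \ref{lemma: loss homogeneity}, bound the cross term below via Lemma \ref{lemma: cross term lower bound}, bound the self-interaction term above via $k(\boldsymbol{w}_k;\boldsymbol{w}_j)\le\tfrac12\|\boldsymbol{w}_k\|\|\boldsymbol{w}_j\|$ (using $f(\theta)\le\pi$ from Lemma \ref{lemma: function analysis A}), and then deduce the norm bounds (2) and (3) by Cauchy--Schwarz and the block decomposition of $\nabla\mathcal{L}$. The only cosmetic difference is that the paper writes out the diagonal term $k(\boldsymbol{w}_k;\boldsymbol{w}_k)=\tfrac12\|\boldsymbol{w}_k\|^2$ separately before summing, whereas you absorb it into the uniform bound---but the resulting inequality is identical.
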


\begin{proof}[Proof of Lemma \ref{lemma: gradient lower bound}]\ \\
From Lemma \ref{lemma: loss homogeneity}, we have:
\[
\left<\boldsymbol{w}_i,\frac{\partial \mathcal{L}(\boldsymbol{\theta})}{\partial \boldsymbol{w}_i }\right>=
\sum_{j=1}^m k(\boldsymbol{w}_i;\boldsymbol{w}_j)-\sum_{j=1}^M k(\boldsymbol{w}_i;\boldsymbol{v}_j).
\]
From Lemma \ref{lemma: cross term lower bound}, we know:
\[
\sum_{j=1}^M k(\boldsymbol{w}_i;\boldsymbol{v}_j)\geq
\frac{d}{2\pi M}\sqrt{\frac{d-1}{d}}\left\|\boldsymbol{w}_k\right\|.
\]
And we can estimate that:
\begin{align*}
    &\sum_{j=1}^m k(\boldsymbol{w}_i;\boldsymbol{w}_j)
    \\\leq&
    \frac{1}{2}\left\|\boldsymbol{w}_i\right\|^2+\sum_{\substack{j=1\\ j\ne i}}^m\frac{1}{2\pi }\left\|\boldsymbol{w}_k\right\|\left\|\boldsymbol{w}_j\right\|\pi
    \\\leq&
    \frac{1}{2}\Big(\sum_{j=1}^m\left\|\boldsymbol{w}_j\right\|\Big)\left\|\boldsymbol{w}_i\right\|.
\end{align*}
So under the condition $\Big(\sum\limits_{j=1}^m\left\|\boldsymbol{w}_j\right\|\Big)<\frac{d}{\pi M}\sqrt{\frac{d-1}{d}}$, we have:
\begin{align*}
    &-\left<\boldsymbol{w}_i,\frac{\partial \mathcal{L}(\boldsymbol{\theta})}{\partial \boldsymbol{w}_i }\right>
    \\\geq&
    \sum_{j=1}^M k(\boldsymbol{w}_i;\boldsymbol{v}_j)-\Big(
    2k(\boldsymbol{w}_i;\boldsymbol{w}_i)+\sum_{\substack{j=1\\ j\ne i}}^m k(\boldsymbol{w}_i;\boldsymbol{w}_j)\Big)
    \\\geq&
    \frac{d}{2\pi M}\sqrt{\frac{d-1}{d}}\left\|\boldsymbol{w}_i\right\|-\frac{1}{2}\Big(\sum_{j=1}^m\left\|\boldsymbol{w}_j\right\|\Big)\left\|\boldsymbol{w}_i\right\|
    \\>&0.
\end{align*}
So we have:
\begin{align*}
    &\left\|\frac{\partial \mathcal{L}(\boldsymbol{\theta})}{\partial \boldsymbol{w}_i }\right\|
    \geq\frac{1}{\left\|\boldsymbol{w}_i\right\|}\left|-\left<\boldsymbol{w}_i,\frac{\partial \mathcal{L}(\boldsymbol{\theta})}{\partial \boldsymbol{w}_i }\right>\right|
    \\\geq&
    \frac{d}{2\pi M}\sqrt{\frac{d-1}{d}}-\frac{1}{2}\Big(\sum_{j=1}^m\left\|\boldsymbol{w}_j\right\|\Big)>0,
\end{align*}
and
\begin{align*}
    &\left\|\nabla \mathcal{L}(\boldsymbol{\theta})\right\|=\Bigg(\sum_{i=1}^m\left\|\frac{\partial \mathcal{L}(\boldsymbol{\theta})}{\partial \boldsymbol{w}_i }\right\|^2\Bigg)^{\frac{1}{2}}
    \\\geq&
    \sqrt{m}\Big(\frac{d}{2\pi M}\sqrt{\frac{d-1}{d}}-\frac{1}{2}\sum_{j=1}^m\left\|\boldsymbol{w}_j\right\|\Big)>0.
\end{align*}

\end{proof}

\subsection{Hitting time estimate}

\begin{lemma}[Gradient Upper Bound]\label{lemma: gradient estimate}

\begin{gather*}
     \left\|\frac{\partial \mathcal{L}(\boldsymbol{\theta})}{\partial \boldsymbol{w}_k}\right\|\leq
    \frac{\pi+1}{2\pi}\Big(1+\sum_{j=1}^m\left\|\boldsymbol{w}_j\right\|\Big),
    \\
 \left\|\nabla \mathcal{L}(\boldsymbol{\theta})\right\|\leq
\frac{(\pi+1)\sqrt{m}}{2\pi}\Big(1+\sum_{j=1}^m\left\|\boldsymbol{w}_j\right\|\Big)
.
\end{gather*}

\end{lemma}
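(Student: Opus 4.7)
The plan is to invoke the closed-form expressions for $\partial k(\boldsymbol{w};\boldsymbol{v})/\partial \boldsymbol{w}$ recorded in the proof of Lemma \ref{lemma: loss homogeneity}, then apply the triangle inequality termwise. First I would write
\[
\frac{\partial \mathcal{L}(\boldsymbol{\theta})}{\partial \boldsymbol{w}_k}
=\frac{1}{2}\boldsymbol{w}_k+\sum_{j\ne k}\frac{\partial k(\boldsymbol{w}_k;\boldsymbol{w}_j)}{\partial \boldsymbol{w}_k}-\sum_{j=1}^M \frac{\partial k(\boldsymbol{w}_k;\boldsymbol{v}_j)}{\partial \boldsymbol{w}_k},
\]
where the diagonal $\tfrac{1}{2}\boldsymbol{w}_k$ comes from differentiating $\tfrac{1}{2}k(\boldsymbol{w}_k;\boldsymbol{w}_k)$ and equals $\tfrac{1}{2}$ of the full gradient $\boldsymbol{w}_k=\partial_{\boldsymbol{w}}k(\boldsymbol{w},\boldsymbol{w})\bigl|_{\boldsymbol{w}=\boldsymbol{w}_k}$ recorded in Lemma \ref{lemma: loss homogeneity}.

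For every off-diagonal term I would plug in the explicit derivative
$\partial_{\boldsymbol{w}}k(\boldsymbol{w};\boldsymbol{v})=\tfrac{\|\boldsymbol{v}\|}{2\pi}\bigl(\sin\theta_{\boldsymbol{w},\boldsymbol{v}}\,\bar{\boldsymbol{w}}+(\pi-\theta_{\boldsymbol{w},\boldsymbol{v}})\bar{\boldsymbol{v}}\bigr)$ and use the triangle inequality together with the uniform bound $\sin\theta+(\pi-\theta)\le 1+\pi$ valid for all $\theta\in[0,\pi]$. This yields
\[
\Big\|\frac{\partial k(\boldsymbol{w}_k;\boldsymbol{u})}{\partial \boldsymbol{w}_k}\Big\|\le \frac{\pi+1}{2\pi}\|\boldsymbol{u}\|
\qquad\text{for }\boldsymbol{u}\in\{\boldsymbol{w}_j\}_{j\ne k}\cup\{\boldsymbol{v}_j\}_{j=1}^M.
\]
Since $\|\boldsymbol{v}_j\|=1/M$, the teacher sum contributes at most $\frac{\pi+1}{2\pi}\sum_{j=1}^M\|\boldsymbol{v}_j\|=\frac{\pi+1}{2\pi}$. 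Observing $\tfrac{1}{2}=\tfrac{\pi}{2\pi}\le\tfrac{\pi+1}{2\pi}$, the diagonal contribution $\tfrac{1}{2}\|\boldsymbol{w}_k\|$ can be absorbed into $\tfrac{\pi+1}{2\pi}\|\boldsymbol{w}_k\|$. Summing all three contributions gives
\[
\Big\|\frac{\partial\mathcal{L}(\boldsymbol{\theta})}{\partial \boldsymbol{w}_k}\Big\|\le \frac{\pi+1}{2\pi}\Big(1+\sum_{j=1}^m\|\boldsymbol{w}_j\|\Big),
\]
which is the first claim.

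The second claim then follows immediately because the right-hand side is a $k$-independent upper bound, so $\|\nabla\mathcal{L}(\boldsymbol{\theta})\|^2=\sum_{k=1}^m\|\partial_{\boldsymbol{w}_k}\mathcal{L}(\boldsymbol{\theta})\|^2\le m\bigl(\tfrac{\pi+1}{2\pi}\bigr)^2\bigl(1+\sum_j\|\boldsymbol{w}_j\|\bigr)^2$, and taking square roots gives the stated $\sqrt{m}$ factor. No nontrivial obstacle arises; the only subtlety is realizing that one must treat the diagonal $j=k$ term separately (its derivative formula differs from the off-diagonal one by a factor of $2$) and that the inequality $\tfrac{1}{2}\le\tfrac{\pi+1}{2\pi}$ permits a clean unification of all $m+M$ contributions under the common constant $\tfrac{\pi+1}{2\pi}$.
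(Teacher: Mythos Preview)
Your proposal is correct and follows essentially the same approach as the paper's proof: both decompose $\partial_{\boldsymbol{w}_k}\mathcal{L}$ via the closed-form gradients from Lemma~\ref{lemma: loss homogeneity}, bound each off-diagonal term by $\tfrac{\pi+1}{2\pi}\|\boldsymbol{u}\|$ using $\sin\theta+(\pi-\theta)\le \pi+1$, absorb the diagonal $\tfrac{1}{2}\|\boldsymbol{w}_k\|$ via $\tfrac{1}{2}\le\tfrac{\pi+1}{2\pi}$, and obtain the full-gradient bound by summing over $k$.
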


\begin{proof}[Proof of Lemma \ref{lemma: gradient estimate}]\ \\
From the closed form of gradient in the proof of Lemma \ref{lemma: loss homogeneity}:
\[
\frac{\partial \mathcal{L}(\boldsymbol{\theta})}{\partial \boldsymbol{w}_k}=\frac{1}{2}\frac{\partial k(\boldsymbol{w}_k;\boldsymbol{w}_k)}{\partial \boldsymbol{w}_k}+\sum_{\substack{j=1\\ j\ne k}}^m \frac{\partial k(\boldsymbol{w}_k;\boldsymbol{w}_j)}{\partial \boldsymbol{w}_k}-\sum_{j=1}^M \frac{\partial k(\boldsymbol{w}_k;\boldsymbol{v}_j)}{\partial \boldsymbol{w}_k},
\]
we can estimate the three parts respectively:
\begin{align*}
    \left\|\frac{1}{2}\frac{\partial k(\boldsymbol{w}_k;\boldsymbol{w}_k)}{\partial \boldsymbol{w}_k}\right\|
    \leq
    \frac{1}{2}\left\|\boldsymbol{w}_k\right\|;
\end{align*}
\begin{align*}
    &\left\|\sum_{\substack{j=1\\ j\ne k}}^m \frac{\partial k(\boldsymbol{w}_k;\boldsymbol{w}_j)}{\partial \boldsymbol{w}_k}\right\|
    \leq
    \sum_{\substack{j=1\\ j\ne k}}^m \left\|\frac{\partial k(\boldsymbol{w}_k;\boldsymbol{w}_j)}{\partial \boldsymbol{w}_k}\right\|
    \\\leq&
    \sum_{\substack{j=1\\j\ne k}}^m\frac{1}{2\pi }\left\|\boldsymbol{w}_j\right\|\Bigg(\left\|\sin(\theta_{\boldsymbol{w}_k,\boldsymbol{w}_j})\Bar{\boldsymbol{w}}_k\right\|
    +\left\|(\pi-\theta_{\boldsymbol{w}_k,\boldsymbol{w}_j})\Bar{\boldsymbol{w}}_j\right\|
    \Bigg)
    \\\leq&
    \frac{\pi +1}{2\pi }\Big(\sum_{\substack{j=1\\ j\ne k}}^m\left\|\boldsymbol{w}_j\right\|\Big);
\end{align*}
\begin{align*}
    &\left\|\sum_{j=1}^M \frac{\partial k(\boldsymbol{w}_k;\boldsymbol{v}_j)}{\partial \boldsymbol{w}_k}\right\|\leq
    \sum_{j=1}^M \left\|\frac{\partial k(\boldsymbol{w}_k;\boldsymbol{v}_j)}{\partial \boldsymbol{w}_k}\right\|
    \\\leq&
    \sum_{j=1}^ M\frac{1}{2\pi M}\Bigg(\left\|\sin(\theta_{\boldsymbol{w}_k,\boldsymbol{v}_j})\Bar{\boldsymbol{w}}_k\right\|
    +\left\|(\pi-\theta_{\boldsymbol{w}_k,\boldsymbol{v}_j})\Bar{\boldsymbol{v}}_j\right\|\Bigg)
    \\\leq&
    \frac{\pi+1}{2\pi}.
\end{align*}

Then we have the upper bound:
\begin{align*}
    \left\|\frac{\partial \mathcal{L}(\boldsymbol{\theta})}{\partial \boldsymbol{w}_k}\right\|\leq&
    \frac{1}{2}\left\|\boldsymbol{w}_k\right\|+
    \frac{\pi +1}{2\pi }\Big(\sum_{\substack{j=1\\ j\ne k}}^m\left\|\boldsymbol{w}_j\right\|\Big)+\frac{\pi+1}{2\pi }
    \\\leq&
    \frac{\pi+1}{2\pi }\Big(1+\sum_{j=1}^m\left\|\boldsymbol{w}_j\right\|\Big).
\end{align*}
\[
 \left\|\nabla \mathcal{L}(\boldsymbol{\theta})\right\|\leq
\frac{(\pi+1)\sqrt{m}}{2\pi}\Big(1+\sum_{j=1}^m\left\|\boldsymbol{w}_j\right\|\Big).
\]

\end{proof}

\begin{lemma}[Estimate of hitting time]\label{lemma: hitting time estimate PRM}
\[
T\geq T^*=\Theta\Big(\frac{d(1-\pi\kappa)}{\eta m M}\Big)=\Theta\Big(\frac{m(1-\pi\kappa)}{\kappa}\Big).
\]
\end{lemma}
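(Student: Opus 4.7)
\bigskip

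\textbf{Proof proposal for Lemma \ref{lemma: hitting time estimate PRM}.} The plan is to control the growth of the quantity $S(t) := \sum_{j=1}^m \|\boldsymbol{w}_j(t)\|$ via a scalar Gronwall-type recursion, then show that $S(t)$ stays strictly below the threshold $\tau := \frac{d}{\pi M}\sqrt{\frac{d-1}{d}}$ appearing in the definition (\ref{equ: hitting time PRM}) of $T$ for all $t \leq T^*$. Since the event defining $T$ only fails once $S(t+1) \geq \tau$, this gives $T \geq T^*$.

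First, I would combine the one-step update $\boldsymbol{w}_k(t+1) = \boldsymbol{w}_k(t) - \eta\,\partial\mathcal{L}(\boldsymbol{\theta}(t))/\partial\boldsymbol{w}_k$ with the gradient upper bound from Lemma \ref{lemma: gradient estimate}, namely $\|\partial\mathcal{L}/\partial\boldsymbol{w}_k\| \leq \frac{\pi+1}{2\pi}(1+S(t))$. The triangle inequality yields $\|\boldsymbol{w}_k(t+1)\| \leq \|\boldsymbol{w}_k(t)\| + \eta\cdot\frac{\pi+1}{2\pi}(1+S(t))$, and summing over $k\in[m]$ gives the scalar recursion
\[
  S(t+1) + 1 \;\leq\; (1+\beta)\bigl(S(t)+1\bigr), \qquad \beta := \eta m \cdot \tfrac{\pi+1}{2\pi}.
\]
Iterating, $S(t)+1 \leq (1+\beta)^t(S(0)+1)$, where $S(0) = m\cdot\frac{d\kappa}{mM}\sqrt{\frac{d-1}{d}} = \frac{d\kappa}{M}\sqrt{\frac{d-1}{d}} = \pi\kappa\tau$ by the initialization.

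Next, I would convert the desired inequality $S(t)\leq\tau$ into a condition on $t$: it suffices that $(1+\beta)^t \leq (1+\tau)/(1+\pi\kappa\tau)$. Using $\log(1+\beta)\leq\beta$ on the left and $\log\bigl(1+\frac{(1-\pi\kappa)\tau}{1+\pi\kappa\tau}\bigr) \geq \frac{(1-\pi\kappa)\tau}{2(1+\pi\kappa\tau)}$ on the right (valid since this ratio is bounded by $1$ when $d\leq M$ and $\kappa\leq 1$), it is enough to take
\[
  t \;\leq\; \frac{(1-\pi\kappa)\tau}{2\beta(1+\pi\kappa\tau)} \;=\; \Theta\!\left(\frac{d(1-\pi\kappa)}{\eta m M}\right),
\]
which is exactly the claimed order of $T^*$. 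The matching $=\Theta(m(1-\pi\kappa)/\kappa)$ follows by substituting the upper bound $\eta=\mathcal{O}(d\kappa/(m^2 M))$ from (\ref{equ: learning rate PRM}).

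The potential obstacle is making sure the absolute constants line up with the specific learning rate chosen in (\ref{equ: learning rate PRM}) so that $S(T^*+1)$ (not just $S(T^*)$) stays below $\tau$, since the hitting time is defined using the iterate at time $t+1$. This is a one-step correction: bound $S(T^*+1)\leq(1+\beta)(S(T^*)+1)-1$ and verify that the first term in the learning-rate bound (the one containing the factor $\frac{2\pi d\kappa\sqrt{(d-1)/d}}{(\pi+1)mM(1+d/(\pi M)\sqrt{(d-1)/d})}$) is precisely what is needed to absorb this extra factor $(1+\beta)$. Otherwise the argument is a routine discrete Gronwall computation and should present no serious difficulty.
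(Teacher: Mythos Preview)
Your Gronwall approach is correct and yields the right order for $T^*$, but it differs from what the paper does. The paper exploits the hitting-time definition itself: for every $s\leq T$ one automatically has $\sum_j\|\boldsymbol{w}_j(s)\|<\tau$, so plugging this a~priori bound into Lemma~\ref{lemma: gradient estimate} gives a \emph{constant} per-step increment $\eta m\frac{\pi+1}{2\pi}(1+\tau)$ rather than one depending on $S(t)$. Telescoping yields the linear bound
\[
S(t+1)\;\leq\;S(0)+\eta(t+1)\,\frac{(\pi+1)m}{2\pi}\,(1+\tau),
\]
and $T^*$ is simply read off as the largest $t$ for which the right-hand side stays below $\tau$. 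This avoids the log manipulations and the factor-of-two loss you incur, and gives an explicit closed form for $T^*+1$. Your exponential recursion has the advantage of being unconditional (no implicit continuation/induction through the threshold), but since $\beta T^*=\Theta(d/M)\leq\Theta(1)$ here, the exponential and linear bounds agree up to constants anyway. The one-step concern you flag at the end is genuine but harmless in both arguments; either way it is absorbed into the $\Theta$.
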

\begin{proof}[Proof of Lemma \ref{lemma: hitting time estimate PRM}]\ \\
From the estimation
\begin{align*}
    \left\|\boldsymbol{w}_k(t+1)\right\|
    \leq&
    \left\|\boldsymbol{w}_k(0)\right\|+\eta\sum_{s=0}^t\left\|\frac{\partial \mathcal{L}(\boldsymbol{\theta}(s))}{\partial\boldsymbol{w}_k}\right\|
    \\\overset{\rm Lemma\ \ref{lemma: gradient estimate}}{\leq}&
    \left\|\boldsymbol{w}_k(0)\right\|+\eta\sum_{s=0}^t\frac{\pi+1}{2\pi}\Big(1+\sum_{j=1}^m\left\|\boldsymbol{w}_j(s)\right\|\Big)
    \\\leq&
     \left\|\boldsymbol{w}_k(0)\right\|+\eta(t+1)\frac{\pi+1}{2\pi}\Big(1+\sum_{j=1}^m\left\|\boldsymbol{w}_j(s)\right\|\Big),
\end{align*}
we have:
\begin{align*}
\sum_{k=1}^m\left\|\boldsymbol{w}_k(t+1)\right\|\leq&
 \sum_{k=1}^m\left\|\boldsymbol{w}_k(0)\right\|+\eta(t+1)\frac{(\pi+1)m}{2\pi}\Big(1+\sum_{j=1}^m\left\|\boldsymbol{w}_j(s)\right\|\Big)    
 \\\leq&
  \sum_{k=1}^m\left\|\boldsymbol{w}_k(0)\right\|+\eta(t+1)\frac{(\pi+1)m}{2\pi}\Big(1+\frac{d}{\pi M}\sqrt{\frac{d-1}{d}}\Big)
  \\\leq&
\frac{d\kappa}{ M}\sqrt{\frac{d-1}{d}}+\eta(t+1)\frac{(\pi+1)m}{2\pi }\Big(1+\frac{d}{\pi M}\sqrt{\frac{d-1}{d}}\Big).   
\end{align*}
So if we define $T^*$ as:
\[T^*=\sup\Big\{t\geq0:\frac{d\kappa}{ M}\sqrt{\frac{d-1}{d}}+\eta(t+1)\frac{(\pi+1)m}{2\pi}\Big(1+\frac{d}{\pi M}\sqrt{\frac{d-1}{d}}\Big)\leq\frac{d}{
\pi M}\sqrt{\frac{d-1}{d}}\Big\},\]
From the definition of $T$ (\ref{equ: gamma definition})
we can get $T\geq T^*$ and
\begin{align*}
    T^*+1=&\frac{\frac{d}{\pi M}\sqrt{\frac{d-1}{d}}(1-\pi\kappa)}{\frac{\eta(\pi+1)m}{2\pi }\Big(1+\frac{d}{\pi M}\sqrt{\frac{d-1}{d}}\Big)}
    \\=&
    \frac{2d\sqrt{\frac{d-1}{d}}(1-\pi\kappa)}{(\pi +1)\eta m M\Big(1+\frac{d}{
    \pi M}\sqrt{\frac{d-1}{d}}\Big)}.
\end{align*}

\end{proof}

\subsection{Early stage convergence}

\begin{lemma}[Quadratic differentiability along GD trajectory]\label{lemma: quadratic diff along GD trajectory}
Let $\boldsymbol{\theta}(t)$ be trained by Gradient Descent with random initialization $\boldsymbol{w}_k(0)\sim\mathbb{U}\Big(\frac{d\kappa}{mM}\sqrt{\frac{d-1}{d}}\mathbb{S}^{d-1}\Big)$. If the learning rate is satisfied to
\[
\eta\leq\frac{2\pi d \kappa\sqrt{\frac{d-1}{d}}}{(\pi+1)m M\Big(1+\frac{d}{\pi M}\sqrt{\frac{d-1}{d}}\Big)}=\mathcal{O}\Big(\frac{d\kappa}{mM}\Big),
\]
then for any $0\leq t\leq T$, we have:
\[
\boldsymbol{\theta}(t)+\alpha\Big(\boldsymbol{\theta}(t+1)-\boldsymbol{\theta}(t)\Big)\ne\boldsymbol{0},\ \forall\alpha\in[0,1];
\]
\[\left\|\boldsymbol{w}_k(t)\right\|< \left\|\boldsymbol{w}_k(t+1)\right\|<2\left\|\boldsymbol{w}_k(t)\right\|,\ \forall k\in[m].
\]
\end{lemma}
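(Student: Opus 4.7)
The plan is to induct on $t \leq T$, simultaneously maintaining the strict monotonicity of $\|\boldsymbol{w}_k(\cdot)\|$ and its lower bound $\|\boldsymbol{w}_k(t)\| \geq \|\boldsymbol{w}_k(0)\| = \frac{d\kappa}{mM}\sqrt{(d-1)/d}$. Both conclusions of the lemma then follow at each step. The hitting-time definition \eqref{equ: hitting time PRM} already ensures $\sum_j\|\boldsymbol{w}_j(s)\|<\frac{d}{\pi M}\sqrt{(d-1)/d}$ for every $s\in\{1,\dots,T+1\}$, so the preconditions of Lemmas \ref{lemma: gradient lower bound} and \ref{lemma: gradient estimate} are available throughout the induction.

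For the strict monotonicity I would expand
\begin{align*}
\|\boldsymbol{w}_k(t+1)\|^2 = \|\boldsymbol{w}_k(t)\|^2 - 2\eta\Bigl\langle \boldsymbol{w}_k(t),\tfrac{\partial\mathcal{L}}{\partial\boldsymbol{w}_k}(\boldsymbol{\theta}(t))\Bigr\rangle + \eta^2\Bigl\|\tfrac{\partial\mathcal{L}}{\partial\boldsymbol{w}_k}(\boldsymbol{\theta}(t))\Bigr\|^2,
\end{align*}
and invoke Lemma \ref{lemma: gradient lower bound} to see that $-\langle\boldsymbol{w}_k,\partial\mathcal{L}/\partial\boldsymbol{w}_k\rangle>0$. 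Dropping the non-negative quadratic term gives $\|\boldsymbol{w}_k(t+1)\|^2 > \|\boldsymbol{w}_k(t)\|^2$, which both yields the first inequality $\|\boldsymbol{w}_k(t)\|<\|\boldsymbol{w}_k(t+1)\|$ and preserves the inductive lower bound.

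For the upper bound $\|\boldsymbol{w}_k(t+1)\|<2\|\boldsymbol{w}_k(t)\|$ I would combine the triangle inequality with Lemma \ref{lemma: gradient estimate}:
\begin{align*}
\|\boldsymbol{w}_k(t+1)\|-\|\boldsymbol{w}_k(t)\| \leq \eta\Bigl\|\tfrac{\partial\mathcal{L}}{\partial\boldsymbol{w}_k}(\boldsymbol{\theta}(t))\Bigr\| \leq \eta\cdot\tfrac{\pi+1}{2\pi}\Bigl(1+\tfrac{d}{\pi M}\sqrt{\tfrac{d-1}{d}}\Bigr).
\end{align*}
The hypothesized learning rate is calibrated precisely so that the right-hand side is $\leq \|\boldsymbol{w}_k(0)\|$, and by the induction hypothesis this is $\leq \|\boldsymbol{w}_k(t)\|$. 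The non-vanishing of convex combinations follows from the same estimate: for any $\alpha\in[0,1]$, the block $\boldsymbol{w}_k(t)-\alpha\eta\,\partial\mathcal{L}/\partial\boldsymbol{w}_k$ has norm at least $\|\boldsymbol{w}_k(t)\|-\eta\|\partial\mathcal{L}/\partial\boldsymbol{w}_k\|>0$, so every interpolated $\boldsymbol{w}_k$ is non-zero and a fortiori so is the interpolated $\boldsymbol{\theta}$.

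The main obstacle is not any single inequality but rather the coupling of the two controls: Lemma \ref{lemma: gradient lower bound} (used to get the monotonicity) requires $\sum_j\|\boldsymbol{w}_j\|$ to be small, which is supplied by the hitting-time window, while the upper bound on $\|\boldsymbol{w}_k(t+1)\|$ requires a lower bound on $\|\boldsymbol{w}_k(t)\|$, which is in turn supplied by the monotonicity. The learning-rate assumption is tight enough that these two requirements exactly close the induction; that is precisely why the scaling $\eta=\mathcal{O}(d\kappa/(mM))$ appears in the hypothesis, and any looser choice would break the inductive coupling.
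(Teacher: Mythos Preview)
Your proposal is correct and follows essentially the same route as the paper's proof: expand $\|\boldsymbol{w}_k(t+1)\|^2$ and use Lemma~\ref{lemma: gradient lower bound} for strict growth, then combine the triangle inequality with Lemma~\ref{lemma: gradient estimate} and the learning-rate bound to get the one-step increment below $\|\boldsymbol{w}_k(0)\|\leq\|\boldsymbol{w}_k(t)\|$, from which both the upper bound and the non-vanishing of the interpolants follow. Your explicit framing of the induction and of the coupling between the two controls is slightly more detailed than the paper's terse presentation, but the argument is the same.
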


\begin{proof}[Proof of Lemma \ref{lemma: quadratic diff along GD trajectory}]\ \\
First, we have the norm of random initialization:
\[
\left\|\boldsymbol{w}_k(0)\right\|=\frac{d\kappa}{m M}\sqrt{\frac{d-1}{d}},k\in[m].
\]
From Lemma \ref{lemma: gradient lower bound}, we have:
\begin{align*}
    \left\|\boldsymbol{w}_k(t+1)\right\|^2=&
    \left\|\boldsymbol{w}_k(t)+\boldsymbol{w}_k(t+1)-\boldsymbol{w}_k(t)\right\|^2
    \\=&
    \left\|\boldsymbol{w}_k(t)\right\|^2-2\eta\left<\boldsymbol{w}_k(t),\frac{\partial \mathcal{L}(\boldsymbol{\theta}(t))}{\boldsymbol{w}_k(t)}\right>+\eta^2\left\|\frac{\partial \mathcal{L}(\boldsymbol{\theta}(t))}{\boldsymbol{w}_k(t)}\right\|^2
    \\>&
    \left\|\boldsymbol{w}_k(t)\right\|^2+2\eta\Big(\frac{d}{2\pi M}\sqrt{\frac{d-1}{d}}-\frac{1}{2}\Big(\sum_{j=1}^m\left\|\boldsymbol{w}_j\right\|\Big)\Big)
    \\>& \left\|\boldsymbol{w}_k(t)\right\|^2.
\end{align*}
From Lemma \ref{lemma: gradient estimate}, we have 
\begin{align*}
    \left\|\boldsymbol{w}_k(t+1)\right\|=&
    \left\|\boldsymbol{w}_k(t)-\eta\frac{\partial \mathcal{L}(\boldsymbol{\theta}(t))}{\boldsymbol{w}_k(t)}\right\|
    \\\leq&
    \left\|\boldsymbol{w}_k(t)\right\|+\eta\left\|\frac{\partial \mathcal{L}(\boldsymbol{\theta}(t))}{\boldsymbol{w}_k(t)}\right\|
    \\\leq&
    \left\|\boldsymbol{w}_k(t)\right\|+\eta\frac{\pi+1}{2\pi}\Big(1+\sum_{j=1}^m\left\|\boldsymbol{w}_j(t)\right\|\Big)
    \\<&
    \left\|\boldsymbol{w}_k(t)\right\|+\eta\frac{\pi+1}{2\pi}\Big(1+\frac{d}{\pi M}\sqrt{\frac{d-1}{d}}\Big)
    \\\leq&
     \left\|\boldsymbol{w}_k(t)\right\|+\left\|\boldsymbol{w}_k(0)\right\|
     \\<& 2\left\|\boldsymbol{w}_k(t)\right\|.
\end{align*}
Combining the two estimates above, we have:
\[
\boldsymbol{w}(t)+\alpha\Big(\boldsymbol{w}(t+1)-\boldsymbol{w}(t)\Big)\ne\boldsymbol{0},\ \forall\alpha\in[0,1],\]
Then we have:
\[
\boldsymbol{\theta}(t)+\alpha\Big(\boldsymbol{\theta}(t+1)-\boldsymbol{\theta}(t)\Big)\ne\boldsymbol{0},\ \forall\alpha\in[0,1].\]

\end{proof}

\begin{lemma}[Hessian upper bound]\label{lemma: hessian upper bound PRM}
$T^*$ is defined in Lemma \ref{lemma: hitting time estimate PRM}. Use the learning rate $\eta$ in Lemma \ref{lemma: quadratic diff along GD trajectory}.
Consider all parameters along the trajectory of GD
\[
\mathcal{S}(T^*):=\bigcup\limits_{t\leq T^*-1}\Big\{\boldsymbol{\theta}:\boldsymbol{\theta}\in\overline{\boldsymbol{\theta}(t)\boldsymbol{\theta}({t+1})}\Big\},
\]
then for any $\boldsymbol{\theta}\in\mathcal{S}(T^*)$, we have:
\[
\left\|\nabla^2 \mathcal{L}(\boldsymbol{\theta})\right\|
\leq \frac{1}{2}+m(m-1)\Big(\frac{\kappa+(\frac{1}{\pi}-\kappa)\sqrt{\frac{d-1}{d}}}{2\pi\kappa}+\frac{1}{2}\Big)+\frac{m^2M}{2\pi d\kappa}.
\]
\end{lemma}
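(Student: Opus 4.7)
The strategy is to exploit the closed-form expression
\[
\mathcal{L}(\boldsymbol{\theta})=\frac12\sum_{i=1}^m k(\boldsymbol{w}_i;\boldsymbol{w}_i)+\sum_{1\le i<j\le m}k(\boldsymbol{w}_i;\boldsymbol{w}_j)-\sum_{i=1}^m\sum_{j=1}^M k(\boldsymbol{w}_i;\boldsymbol{v}_j)+\text{const}
\]
from Lemma~\ref{lemma: loss homogeneity} and to bound the Hessian block by block. Writing $\nabla^2\mathcal{L}(\boldsymbol{\theta})$ as an $m\times m$ array of $d\times d$ blocks $\mathbf{H}_{ij}=\partial^2\mathcal{L}/\partial\boldsymbol{w}_i\partial\boldsymbol{w}_j^\top$, I would split it as $\nabla^2\mathcal{L}=\mathbf{A}+\mathbf{B}+\mathbf{C}$, corresponding to the three groups of terms above. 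The self term yields $\mathbf{A}=\tfrac12\mathrm{blkdiag}(\boldsymbol{I}_d,\ldots,\boldsymbol{I}_d)$ since $\tfrac12k(\boldsymbol{w};\boldsymbol{w})=\tfrac14\|\boldsymbol{w}\|^2$, giving $\|\mathbf{A}\|=\tfrac12$. This is exactly the $\tfrac12$ in the bound.

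For the student--student block $\mathbf{B}$ and the student--teacher block $\mathbf{C}$, I would compute the two building blocks explicitly. Starting from $\nabla_{\boldsymbol{w}}k(\boldsymbol{w};\boldsymbol{v})=\tfrac{\|\boldsymbol{v}\|}{2\pi}\bigl(\sin\theta\,\bar{\boldsymbol{w}}+(\pi-\theta)\bar{\boldsymbol{v}}\bigr)$ (from the proof of Lemma~\ref{lemma: loss homogeneity}) and using $\nabla_{\boldsymbol{w}}\bar{\boldsymbol{w}}=\tfrac{1}{\|\boldsymbol{w}\|}(\boldsymbol{I}-\bar{\boldsymbol{w}}\bar{\boldsymbol{w}}^\top)$ and $\nabla_{\boldsymbol{w}}\theta=-\tfrac{1}{\sin\theta\,\|\boldsymbol{w}\|}(\bar{\boldsymbol{v}}-\cos\theta\,\bar{\boldsymbol{w}})$, one differentiates once more to obtain
\[
\bigl\|\nabla_{\boldsymbol{w}}^2k(\boldsymbol{w};\boldsymbol{v})\bigr\|\le \frac{C_1\|\boldsymbol{v}\|}{2\pi\|\boldsymbol{w}\|},\qquad \bigl\|\nabla_{\boldsymbol{w}}\nabla_{\boldsymbol{v}}^\top k(\boldsymbol{w};\boldsymbol{v})\bigr\|\le C_2,
\]
where $C_1,C_2$ are absolute constants; the apparent $1/\sin\theta$ singularities cancel after collecting terms (they arise from differentiating $\arccos$ twice and vanish at $\theta\in\{0,\pi\}$ by a Taylor expansion). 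The homogeneity of $k$ (degree $1$ in each argument) confirms the scaling.

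Plugging in the constraints available along the GD trajectory in $\mathcal{S}(T^\ast)$ is the next step. Lemma~\ref{lemma: quadratic diff along GD trajectory} gives $\|\boldsymbol{w}_k\|\ge \|\boldsymbol{w}_k(0)\|=\tfrac{d\kappa}{mM}\sqrt{\tfrac{d-1}{d}}$ (so in particular the blocks are well defined and $\boldsymbol{\theta}\neq\boldsymbol{0}$), and the hitting time definition~\eqref{equ: hitting time PRM} together with Lemma~\ref{lemma: hitting time estimate PRM} supplies $\sum_j\|\boldsymbol{w}_j\|<\tfrac{d}{\pi M}\sqrt{\tfrac{d-1}{d}}$. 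Combined with $\|\boldsymbol{v}_j\|=1/M$, these yield pairwise bounds of the form $\|\boldsymbol{w}_j\|/\|\boldsymbol{w}_i\|\le m/(\pi\kappa)$ and $\|\boldsymbol{v}_j\|/\|\boldsymbol{w}_i\|\le m/(d\kappa)\sqrt{d/(d-1)}$, which encode the factors appearing in the target expression (the factor $\tfrac{\kappa+(1/\pi-\kappa)\sqrt{(d-1)/d}}{2\pi\kappa}+\tfrac12$ is exactly what one gets by summing the diagonal-block and off-diagonal-block contributions for a single student--student pair, after substituting the lower bound on $\|\boldsymbol{w}_k\|$).

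Finally, I combine the three pieces using the coarse but sufficient bound $\|\mathbf{H}\|\le\sum_{i,j}\|\mathbf{H}_{ij}\|$ applied within each of $\mathbf{B}$ and $\mathbf{C}$ separately: $\mathbf{B}$ decomposes into $m(m-1)$ ordered student--student contributions, each bounded by $\frac{\kappa+(1/\pi-\kappa)\sqrt{(d-1)/d}}{2\pi\kappa}+\tfrac12$; $\mathbf{C}$ is block-diagonal but its $mM$ pairwise Hessians are summed over all $i$ and $j$, giving the $\tfrac{m^2M}{2\pi d\kappa}$ piece. Adding $\|\mathbf{A}\|=\tfrac12$ yields the claimed inequality. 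The main technical obstacle will be the explicit Hessian computation of $k(\boldsymbol{w};\boldsymbol{v})$, since it requires showing that the apparent singularities at collinear arguments are removable and pinning down sharp enough constants $C_1,C_2$ so that the numerical coefficients match; all other ingredients are standard applications of Lemmas~\ref{lemma: quadratic diff along GD trajectory} and~\ref{lemma: hitting time estimate PRM}.
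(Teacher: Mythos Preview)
Your overall strategy---decomposing $\nabla^2\mathcal{L}$ into the self term, the student--student terms, and the student--teacher terms, bounding each via $\|\nabla_{\boldsymbol{w}}^2 k(\boldsymbol{w};\boldsymbol{v})\|\lesssim \|\boldsymbol{v}\|/\|\boldsymbol{w}\|$ and $\|\nabla_{\boldsymbol{w}}\nabla_{\boldsymbol{v}}^\top k\|\lesssim 1$, and then substituting trajectory bounds---is exactly what underlies the paper's argument. The paper just shortcuts the first part by quoting Lemma~7 of \cite{safran2018spurious}, which packages this computation as
\[
\|\nabla^2\mathcal{L}(\boldsymbol{\theta})\|\le \tfrac12+m(m-1)\Bigl(\tfrac{\|\boldsymbol{w}_{\max}\|}{2\pi\|\boldsymbol{w}_{\min}\|}+\tfrac12\Bigr)+\tfrac{mM\|\boldsymbol{v}_{\max}\|}{2\pi\|\boldsymbol{w}_{\min}\|},
\]
so your plan to rederive this is more self-contained but not different in substance.

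There is, however, a genuine quantitative gap in your trajectory estimates. You propose to upper-bound each $\|\boldsymbol{w}_j\|$ by the hitting-time constraint $\sum_j\|\boldsymbol{w}_j\|<\tfrac{d}{\pi M}\sqrt{(d-1)/d}$, which yields $\|\boldsymbol{w}_j\|/\|\boldsymbol{w}_i\|\le m/(\pi\kappa)$. But the target constant $\tfrac{\kappa+(1/\pi-\kappa)\sqrt{(d-1)/d}}{2\pi\kappa}$ corresponds to a ratio of order $1/(\pi\kappa)$, not $m/(\pi\kappa)$: your bound is a factor of $m$ too loose, and plugging it in would produce $m^2(m-1)/(2\pi^2\kappa)$ in the middle term rather than $m(m-1)/(2\pi^2\kappa)$. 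The paper avoids this by tracking the \emph{per-neuron} upper bound from the proof of Lemma~\ref{lemma: hitting time estimate PRM},
\[
\|\boldsymbol{w}_k(t)\|\le \|\boldsymbol{w}_k(0)\|+\eta t\cdot\tfrac{\pi+1}{2\pi}\bigl(1+\tfrac{d}{\pi M}\sqrt{\tfrac{d-1}{d}}\bigr),
\]
and then substituting $t\le T^*+1$; this gives $\|\boldsymbol{w}_{\max}\|\le \tfrac{d\kappa}{mM}+\tfrac{d(1-\pi\kappa)}{\pi mM}\sqrt{(d-1)/d}$, which carries the crucial $1/m$ and recovers the stated coefficient. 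So your claim that the target factor ``is exactly what one gets'' from your stated ratio bound is not correct; you need the finer per-neuron estimate rather than the aggregate sum.
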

\begin{proof}[Proof of Lemma \ref{lemma: hessian upper bound PRM}]\ \\
With the help of Lemma 7 in \cite{safran2018spurious}, we know that for any $\boldsymbol{\theta}\ne\boldsymbol{0}$,
\[
\left\|\nabla^2 \mathcal{L}(\boldsymbol{\theta})\right\|\leq
\frac{1}{2}+m(m-1)\Big(\frac{\left\|\boldsymbol{w}_{\rm max}\right\|}{2\pi\left\|\boldsymbol{w}_{\rm min}\right\|}+\frac{1}{2})+\frac{m M\left\|\boldsymbol{v}_{\rm max}\right\|}{2\pi\left\|\boldsymbol{w}_{\rm min}\right\|}.
\]

From the proof of Lemma \ref{lemma: hitting time estimate PRM}, we have the estimate of $\left\|\boldsymbol{w}_{\max}(t)\right\|$:
\begin{align*}
   \max_{0\leq t\leq T^*+1}\left\|\boldsymbol{w}_{\max}(t)\right\|\leq&\left\|\boldsymbol{w}_{\max}(0)\right\|+\eta(T^*+1)\frac{(\pi+1)}{2\pi }\Big(1+\frac{d}{\pi M}\sqrt{\frac{d-1}{d}}\Big) \\\leq&
   \frac{d\kappa}{ m M}\sqrt{\frac{d-1}{d}}+\eta(T^*+1)\frac{(\pi+1)}{2\pi }\Big(1+\frac{d}{\pi M}\sqrt{\frac{d-1}{d}}\Big)\\\leq&
   \frac{d\kappa}{ m M}\sqrt{\frac{d-1}{d}}+
   \frac{2d\sqrt{\frac{d-1}{d}}(1-\pi\kappa)}{(\pi +1)m M\Big(1+\frac{d}{
    \pi M}\sqrt{\frac{d-1}{d}}\Big)}\frac{(\pi+1)}{2\pi }\Big(1+\frac{d}{\pi M}\sqrt{\frac{d-1}{d}}\Big)
    \\\leq&
      \frac{d\kappa}{ m M}+
   \frac{d(1-\pi\kappa)}{\pi m M}\sqrt{\frac{d-1}{d}}.
\end{align*}
Then we have:
\begin{align*}
&\frac{1}{2}+m(m-1)\Big(\frac{\kappa+(\frac{1}{\pi}-\kappa)\sqrt{\frac{d-1}{d}}}{2\pi\kappa}+\frac{1}{2}\Big)+\frac{m^2M}{2\pi d\kappa}
\\\geq&\max_{0\leq t\leq T^*+1}\Bigg(
\frac{1}{2}+m(m-1)\Big(\frac{\left\|\boldsymbol{w}_{\rm max}(t)\right\|}{2\pi\frac{d\kappa}{m M}}+\frac{1}{2}\Big)+\frac{m}{2\pi\frac{d\kappa}{m M}}\Bigg)
\\\geq&\max_{0\leq t\leq T^*+1}\Bigg(
\frac{1}{2}+m(m-1)\Big(\frac{\left\|\boldsymbol{w}_{\rm max}(t)\right\|}{2\pi\left\|\boldsymbol{w}_{\rm min}(t)\right\|}+\frac{1}{2}\Big)+\frac{m M\left\|\boldsymbol{v}_{\rm max}\right\|}{2\pi\left\|\boldsymbol{w}_{\rm min}(t)\right\|}\Bigg)
\\\geq&
\max_{0\leq t\leq T^*}\sup_{\alpha\in[0,1]}
\left\|\nabla^2 \mathcal{L}(\boldsymbol{\theta}(t)+\alpha(\boldsymbol{\theta}(t+1)-\boldsymbol{\theta}(t)))\right\|.
\end{align*}

\end{proof}

\begin{proof}[\bfseries\color{blue} Proof of Theorem \ref{thm: PRM M>d}]\ \\
From the choosing of $\eta$ (\ref{equ: learning rate PRM}),
we know that Lemma \ref{lemma: quadratic diff along GD trajectory} holds, so $\mathcal{L}(\boldsymbol{\theta}(t))$ is quadratic differential along GD trajectory if $\boldsymbol{\theta}(0)\ne\boldsymbol{0}$. Combining Lemma \ref{lemma: hessian upper bound PRM}, we can construct the quadratic upper bound for $0\leq t\leq T^*$:
\begin{align*}
&\mathcal{L}(\boldsymbol{\theta}(t+1))-\mathcal{L}(\boldsymbol{\theta}(t))-\left<\nabla \mathcal{L}(\boldsymbol{\theta}(t)),\boldsymbol{\theta}(t+1)-\boldsymbol{\theta}(t)\right>
\\
=&\int_{0}^1 \left<\nabla \mathcal{L}\big(\boldsymbol{\theta}(t)+\alpha(\boldsymbol{\theta}(t+1)-\boldsymbol{\theta}(t))\big)-\nabla \mathcal{L}(\boldsymbol{\theta}(t))
,\Theta(t+1)-\Theta(t)\right>{\mathrm d}\alpha
\\
\leq&
\int_{0}^1 \left\|\nabla \mathcal{L}\big(\boldsymbol{\theta}(t)+\alpha(\boldsymbol{\theta}(t+1)-\boldsymbol{\theta}(t))\big)-\nabla \mathcal{L}(\boldsymbol{\theta}(t))
\right\|\left\|\boldsymbol{\theta}(t+1)-\boldsymbol{\theta}(t)\right\|{\mathrm d}\alpha
\\\leq&
\frac{1}{2}\Big(\sup_{\theta\in\mathcal{S}(T^*)}\left\|\nabla^2\mathcal{L}(\boldsymbol{\theta})\right\|\Big)\left\|\boldsymbol{\theta}(t+1)-\boldsymbol{\theta}(t)\right\|^2.
\end{align*}
From the choose of $\eta$ (\ref{equ: learning rate PRM}) and Lemma \ref{lemma: hessian upper bound PRM}, we know the learning rate is satisfied to
\[\eta
\leq\frac{1}{2\sup_{\theta\in\mathcal{S}(T^*)}\left\|\nabla^2\mathcal{L}(\boldsymbol{\theta})\right\|},\]
so we have the following loss descent for $0\leq t\leq T^*$:
\begin{align*}
    &\mathcal{L}(\boldsymbol{\theta}(t+1))
    \\\leq&
    \mathcal{L}(\boldsymbol{\theta}(t))-\frac{\eta(1+\frac{1}{2})}{2}\left\|\nabla \mathcal{L}(\boldsymbol{\theta}(t))\right\|^2
    \\\overset{\rm Lemma\ \ref{lemma: gradient lower bound}}{\leq}&
    \mathcal{L}(\boldsymbol{\theta}(t))-\frac{3\eta}{4}\Big(\frac{d}{2\pi M}\sqrt{\frac{d-1}{d}}-\frac{1}{2}\sum_{j=1}^m\left\|\boldsymbol{w}_j(t)\right\|\Big)^2
    \\\leq&
    \mathcal{L}(\boldsymbol{\theta}(t))-\frac{3\eta m}{4}\Big(\frac{d}{2\pi M}\sqrt{\frac{d-1}{d}}-\frac{1}{2}\sum_{j=1}^m\left\|\boldsymbol{w}_j(0)\right\|-\frac{1}{2}\sum_{j=1}^m\sum_{s=0}^{t-1}\eta\left\|\frac{\partial \mathcal{L}(\boldsymbol{\theta}(s))}{\partial \boldsymbol{w}_j}\right\|\Big)^2
    \\\leq&
    \mathcal{L}(\boldsymbol{\theta}(t))-\frac{3\eta m}{4}\Bigg(\frac{d(1-\pi\kappa)}{2\pi M}\sqrt{\frac{d-1}{d}}-\frac{\eta t m(\pi +1)}{4\pi}\Big(1+\frac{d}{\pi M}\sqrt{\frac{d-1}{d}}\Big)\Bigg)^2
    .
\end{align*}
For convenience, we denote:
\begin{gather*}
    A=\frac{d(1-\pi\kappa)}{2\pi M}\sqrt{\frac{d-1}{d}},\quad B =\frac{\eta  m(\pi +1)}{4\pi}\Big(1+\frac{d}{\pi M}\sqrt{\frac{d-1}{d}}\Big),\quad C=\frac{A}{B}.
\end{gather*}
Then we have the cumulative estimate:
\begin{align*}
 \mathcal{L}(\boldsymbol{\theta}(T^*+1))
\leq& \mathcal{L}(\boldsymbol{\theta}(0))-\frac{3\eta m}{4}\sum_{t=0}^{T^*}\Bigg(\frac{d(1-\pi\kappa)}{2\pi M}\sqrt{\frac{d-1}{d}}-\frac{\eta t m(\pi +1)}{4\pi }\Big(1+\frac{d}{\pi M}\sqrt{\frac{d-1}{d}}\Big)\Bigg)^2
\\=&
 \mathcal{L}(\boldsymbol{\theta}(0))-\frac{3\eta m}{4}\sum_{t=0}^{T^*}\Big(A-B t\Big)^2
 \\=&
  \mathcal{L}(\boldsymbol{\theta}(0))-\frac{3\eta m B^2}{4}\sum_{t=0}^{C}\Big(C- t\Big)^2
 \\\overset{\text{Lemma \ref{lemma: quadratic sum}}}{=}&
 \mathcal{L}(\boldsymbol{\theta}(0))-\frac{\eta m B^2C^3}{4}
 \\=&
  \mathcal{L}(\boldsymbol{\theta}(0))-\frac{\eta mA^3}{4B}
  \\=&
  \mathcal{L}(\boldsymbol{\theta}(0))-\frac{(1-\pi\kappa)^3}{8(\pi+1)\pi^2\Big(1+\frac{d}{\pi M\sqrt{\frac{d-1}{d}}})}\Big(\sqrt{\frac{d-1}{d}}\Big)^3\Big(\frac{d}{M}\Big)^3.
\end{align*}

It is easy to see that:
\[
\mathcal{L}(\boldsymbol{0})=\frac{1}{2}.
\]

From Lemma \ref{lemma: loss homogeneity} and Lemma \ref{lemma: gradient lower bound}, we know that:
\begin{align*}
    \mathcal{L}(\boldsymbol{\theta}(0))-\mathcal{L}(\boldsymbol{0})
    &=\frac{1}{2}\left<\boldsymbol{\theta}(0),\nabla \mathcal{L}(\boldsymbol{\theta}(0))\right>=\frac{1}{2}\sum_{k=1}^m\left<\boldsymbol{w}_i,\frac{\partial \mathcal{L}(\boldsymbol{\theta}(0))}{\partial \boldsymbol{w}_i }\right>
    \\\leq&
    -\Big(\sum_{k=1}^m\left\|\boldsymbol{w}_i\right\|\Big)\Bigg(
    \frac{d}{4\pi M}\sqrt{\frac{d-1}{d}}-\frac{1}{4}\Big(\sum_{j=1}^m\left\|\boldsymbol{w}_j\right\|\Big)\Bigg)
    \\=&-\frac{d^2\kappa(1-\pi\kappa)}{4\pi M^2}\Big(\sqrt{\frac{d-1}{d}}\Big)^2.
\end{align*}
So we have:
\begin{align*}
    &\mathcal{L}(\boldsymbol{0})-\mathcal{L}(\boldsymbol{\theta}(T^*+1))
    \\=&
    \Big(\mathcal{L}(\boldsymbol{0})-\mathcal{L}(\boldsymbol{\theta}(0))\Big)+\Big(\mathcal{L}(\boldsymbol{\theta}(0))-\mathcal{L}(\boldsymbol{\theta}(T^*+1))\Big)
    \\\geq&
    \frac{\kappa(1-\pi\kappa)}{4\pi }\Big(\sqrt{\frac{d-1}{d}}\Big)^2\Big(\frac{d}{M}\Big)^2+\frac{(1-\pi\kappa)^3}{8(\pi+1)\pi^2\Big(1+\frac{d}{\pi M\sqrt{\frac{d-1}{d}}})}\Big(\sqrt{\frac{d-1}{d}}\Big)^3\Big(\frac{d}{M}\Big)^3
    \\=&\Omega\Big(\kappa(1-\kappa)\frac{d^2}{M^2}+(1-\kappa)^3\frac{d^3}{M^3}\Big).
\end{align*}

\end{proof}

\newpage

\section{Some Basic Inequalities}\label{app: inequalities}
In this section, we will state some basic inequalities used in our proof.

\begin{lemma}[Hoeffding's Inequality]\label{lemma: hoeffding} Let $X_1,\cdots,X_n$ are independent random variables, and $X_i\in[a_i,b_i]$ for any $i\in[n]$. Define $\bar{X}=\frac{1}{n}\sum_{i=1}^n X_i$. Then for any $\epsilon>0$, we have the following probability inequalities:
\begin{gather*}
    \mathbb{P}\Big(\bar{X}-\mathbb{E}[\bar{X}]\geq \epsilon\Big)\leq\exp\Big(-\frac{2n^2\epsilon^2}{\sum_{i=1}^n(b_i-a_i)^2}\Big),\\
      \mathbb{P}\Big(\bar{X}-\mathbb{E}[\bar{X}]\leq- \epsilon\Big)\leq\exp\Big(-\frac{2n^2\epsilon^2}{\sum_{i=1}^n(b_i-a_i)^2}\Big).
\end{gather*}
\end{lemma}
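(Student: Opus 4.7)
The plan is to prove this by the standard Chernoff-bound approach, which I will sketch here only at a high level since Hoeffding's inequality is classical. The symmetry between the two tail bounds means I would only prove the upper-tail bound in detail and then deduce the lower-tail bound by applying the same argument to the variables $-X_1,\dots,-X_n$, which are independent with $-X_i \in [-b_i,-a_i]$ (an interval of the same width).

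First I would introduce centered variables $Y_i := X_i - \mathbb{E}[X_i]$, which are independent, mean-zero, and satisfy $Y_i \in [a_i - \mathbb{E}X_i,\, b_i - \mathbb{E}X_i]$, an interval of length $b_i - a_i$. The Chernoff step then writes, for any $\lambda > 0$,
\begin{equation*}
\mathbb{P}\!\Big(\bar X - \mathbb{E}\bar X \geq \epsilon\Big) \;=\; \mathbb{P}\!\Big(\textstyle\sum_{i=1}^n Y_i \geq n\epsilon\Big) \;\leq\; e^{-\lambda n \epsilon}\,\mathbb{E}\!\Big[\exp\Big(\lambda \textstyle\sum_{i=1}^n Y_i\Big)\Big] \;=\; e^{-\lambda n \epsilon}\prod_{i=1}^n \mathbb{E}\!\left[e^{\lambda Y_i}\right],
\end{equation*}
where the last equality is independence.

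The key technical input is Hoeffding's lemma: for a mean-zero random variable $Y$ supported in an interval of length $L$, one has $\mathbb{E}[e^{\lambda Y}] \leq \exp(\lambda^2 L^2 / 8)$. I would prove this by the usual route: use convexity of $z \mapsto e^{\lambda z}$ to bound $e^{\lambda y}$ linearly on the interval, take expectations to eliminate the linear term (since $\mathbb{E}Y = 0$), and then apply a Taylor-with-remainder estimate on the resulting function $\varphi(\lambda) = \log \mathbb{E}[e^{\lambda Y}]$, showing $\varphi''(\lambda) \leq L^2/4$. Applied with $L = b_i - a_i$, this gives $\mathbb{E}[e^{\lambda Y_i}] \leq \exp(\lambda^2 (b_i-a_i)^2/8)$.

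Substituting back yields
\begin{equation*}
\mathbb{P}\!\Big(\bar X - \mathbb{E}\bar X \geq \epsilon\Big) \;\leq\; \exp\!\Big(-\lambda n \epsilon + \tfrac{\lambda^2}{8}\textstyle\sum_{i=1}^n (b_i - a_i)^2\Big).
\end{equation*}
Optimizing over $\lambda > 0$ by setting $\lambda = 4 n \epsilon / \sum_{i=1}^n (b_i - a_i)^2$ produces the exponent $-2 n^2 \epsilon^2 / \sum_{i=1}^n (b_i - a_i)^2$, which is the claimed bound. The lower-tail inequality then follows verbatim from the same argument applied to $-X_i$. The only nontrivial step is Hoeffding's lemma itself; everything else is bookkeeping (Markov/Chernoff, independence, and an explicit quadratic optimization).
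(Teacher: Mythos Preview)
Your proposal is correct and follows the standard Chernoff--Hoeffding argument. The paper itself does not prove this lemma: it is listed in the appendix of ``basic inequalities'' as a classical result and simply cited where needed, so there is no paper proof to compare against.
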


\begin{lemma}\label{lemma: basic exp}
For any $x\in(0,1)$, we have:
\begin{gather*}
    e^{\frac{1}{1+x}}\leq (1+x)^{\frac{1}{x}}\leq e,
    \\
    \Big(\frac{1}{e}\Big)^{\frac{1}{1-x}}\leq(1-x)^{\frac{1}{x}}\leq\frac{1}{e}.
\end{gather*}

\end{lemma}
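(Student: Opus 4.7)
The proof is an elementary calculus exercise. The plan is to take the natural logarithm of both sides of each of the four inequalities in the lemma, which (after multiplying through by the positive quantity $x$) reduces them to the four statements
\[
\ln(1+x)\leq x,\quad (1+x)\ln(1+x)\geq x,\quad \ln(1-x)\leq -x,\quad (1-x)\ln(1-x)\geq -x,
\]
all to be verified on $(0,1)$. The lemma then follows by exponentiating.

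For the first and third inequalities, I would use the integral representations $\ln(1+x)=\int_0^x \frac{dt}{1+t}$ and $\ln(1-x)=-\int_0^x \frac{dt}{1-t}$, and bound the integrands by $\frac{1}{1+t}\leq 1$ and $\frac{1}{1-t}\geq 1$ for $t\in[0,x]\subset[0,1)$, which immediately yields the two inequalities. For the second and fourth, I would introduce the auxiliary functions
\[
h(x)=(1+x)\ln(1+x)-x,\qquad g(x)=(1-x)\ln(1-x)+x,
\]
note that $h(0)=g(0)=0$, and differentiate to obtain $h'(x)=\ln(1+x)\geq 0$ and $g'(x)=-\ln(1-x)\geq 0$ on $[0,1)$. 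The fundamental theorem of calculus then forces $h,g\geq 0$ on $[0,1)$, which is precisely the desired pair of inequalities.

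To close the argument, I would combine the two upper bounds to get $(1+x)^{1/x}\leq e$ and $(1-x)^{1/x}\leq 1/e$, and combine the two lower bounds to get $(1+x)^{1/x}\geq e^{1/(1+x)}$ and $(1-x)^{1/x}\geq (1/e)^{1/(1-x)}$. There is no substantive obstacle here: the only points requiring any care are that $1-x>0$ throughout, so $\ln(1-x)$ is well-defined, and that dividing inequalities by the positive number $x$ preserves their direction — both of which are guaranteed by the hypothesis $x\in(0,1)$.
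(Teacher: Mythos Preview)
Your proof is correct. The paper does not actually supply a proof of this lemma---it is listed in the appendix of ``basic inequalities'' and stated without argument---so your elementary derivation via the logarithmic inequalities $\ln(1+x)\le x$, $(1+x)\ln(1+x)\ge x$, $\ln(1-x)\le -x$, and $(1-x)\ln(1-x)\ge -x$ is entirely appropriate and fills the gap.
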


\begin{lemma}\label{lemma: alpha increase} If $0<\eta<\frac{1}{C}$, then we have:
\begin{gather*}
\frac{(1+\alpha\eta)^t-(1-\alpha\eta)^t}{2}\leq\frac{(1+C\eta)^t-(1-C\eta)^t}{2},\ \forall\alpha\in[1,C],\forall t\geq 1;
\\
\frac{(1+\alpha\eta)^t+(1-\alpha\eta)^t}{2}\leq\frac{(1+C\eta)^t+(1-C\eta)^t}{2},\ \forall\alpha\in[1,C],\forall t\geq 1;
\\
\frac{(1+\alpha\eta)^{t-1}-(1-\alpha\eta)^{t-1}}{2}\leq\frac{(1+\alpha\eta)^t-(1-\alpha\eta)^t}{2},\ \forall\alpha\in[1,C],\forall t\geq 1;
\\
\frac{(1+\alpha\eta)^{t-1}+(1-\alpha\eta)^{t-1}}{2}\leq\frac{(1+\alpha\eta)^t+(1-\alpha\eta)^t}{2},\ \forall\alpha\in[1,C],\forall t\geq 1
.
\end{gather*}
\end{lemma}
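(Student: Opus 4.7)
\medskip

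The plan is to treat this purely as a statement about polynomials, since the $\eta < 1/C$ restriction plays no real role beyond ensuring $\alpha\eta < 1$ (which one does not actually need). Introduce the two auxiliary functions
\[
g_t(x) := \frac{(1+x)^t - (1-x)^t}{2}, \qquad h_t(x) := \frac{(1+x)^t + (1-x)^t}{2}.
\]
Expanding via the binomial theorem and cancelling, one gets
\[
g_t(x) = \sum_{\substack{0 \le k \le t \\ k \text{ odd}}} \binom{t}{k} x^k, \qquad h_t(x) = \sum_{\substack{0 \le k \le t \\ k \text{ even}}} \binom{t}{k} x^k.
\]
Both are polynomials in $x$ with non-negative coefficients, hence non-decreasing on $[0, \infty)$.

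For the first two inequalities, observe that $\alpha \in [1,C]$ and $\eta > 0$ give $0 \le \alpha \eta \le C\eta$, so non-decreasingness of $g_t$ and $h_t$ immediately yields
\[
g_t(\alpha\eta) \le g_t(C\eta), \qquad h_t(\alpha\eta) \le h_t(C\eta),
\]
which are the first two claims.

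For the last two inequalities, I would use the one-step telescoping identity
\[
g_t(x) - g_{t-1}(x) = \frac{(1+x)^{t-1}\bigl[(1+x)-1\bigr] - (1-x)^{t-1}\bigl[(1-x)-1\bigr]}{2} = x \cdot h_{t-1}(x),
\]
and analogously
\[
h_t(x) - h_{t-1}(x) = \frac{(1+x)^{t-1}\bigl[(1+x)-1\bigr] + (1-x)^{t-1}\bigl[(1-x)-1\bigr]}{2} = x \cdot g_{t-1}(x).
\]
Since $x = \alpha\eta \ge 0$ and both $g_{t-1}(\alpha\eta), h_{t-1}(\alpha\eta) \ge 0$ (again by the non-negative-coefficient expansion), these differences are non-negative, giving the last two inequalities.

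There is no real obstacle here: the lemma is a purely algebraic monotonicity fact, and the constraint $\eta < 1/C$ used elsewhere in the paper is not needed for the inequalities themselves (they hold for all $\eta > 0$ and $\alpha \in [1,C]$). The only care needed is to recognize the binomial-coefficient expansion so that positivity and monotonicity are manifest, rather than chasing derivatives or induction on $t$.
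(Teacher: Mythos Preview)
Your proof is correct. The paper itself lists this lemma in its appendix of ``basic inequalities'' without supplying a proof, so there is nothing to compare against; your argument via the binomial expansion of $g_t$ and $h_t$ into odd- and even-degree parts, together with the telescoping identities $g_t-g_{t-1}=x\,h_{t-1}$ and $h_t-h_{t-1}=x\,g_{t-1}$, is clean and complete. Your observation that the hypothesis $\eta<1/C$ is not actually needed for the four inequalities is also correct: that constraint matters elsewhere in the paper (to keep $1-C\eta>0$ in bounds that are later exponentiated), but the monotonicity statements here hold for all $\eta>0$.
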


\begin{lemma}[Jensen Inequation]\label{lemma: jensen inequation} For any convex function $f(\cdot)$, we have:
\begin{gather*}
f\Big(\mathbb{E}[X]\Big)\leq\mathbb{E}\Big[f(X)\Big];\\
f(\frac{1}{n}\sum_{i=1}^n x_i)\leq\frac{1}{n}\sum_{i=1}^n f(x_i).    
\end{gather*}

\end{lemma}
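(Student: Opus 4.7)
The plan is to establish the finite form by induction and then the expectation form via the supporting hyperplane characterization of convex functions. For the finite inequality, I would first invoke the definition of convexity itself, which gives the $n=2$ case $f(\lambda x_1 + (1-\lambda) x_2) \le \lambda f(x_1) + (1-\lambda) f(x_2)$ for any $\lambda \in [0,1]$, then specialize to $\lambda = 1/2$. For general $n$, I would induct: assuming the claim holds for $n-1$, write
\[
\frac{1}{n}\sum_{i=1}^n x_i = \frac{1}{n} x_n + \frac{n-1}{n}\cdot\frac{1}{n-1}\sum_{i=1}^{n-1}x_i,
\]
apply the two-point convexity inequality with $\lambda = (n-1)/n$, and then apply the inductive hypothesis to the average over $x_1,\ldots,x_{n-1}$. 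This gives $f\bigl(\tfrac{1}{n}\sum_i x_i\bigr) \le \tfrac{1}{n}\sum_i f(x_i)$.

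For the expectation form, I would use the supporting hyperplane (subgradient) characterization: at any point $x_0$ in the interior of the effective domain of a convex $f$, there exists $c \in \mathbb{R}$ such that $f(x) \ge f(x_0) + c(x - x_0)$ for all $x$. Taking $x_0 = \mathbb{E}[X]$ and substituting $X$ for $x$, then taking expectations of both sides gives
\[
\mathbb{E}[f(X)] \ge f(\mathbb{E}[X]) + c\bigl(\mathbb{E}[X] - \mathbb{E}[X]\bigr) = f(\mathbb{E}[X]),
\]
which is the desired inequality. The expectation on the left is well-defined provided $\mathbb{E}[|f(X)|] < \infty$ (or one works with extended reals), and the inequality $\mathbb{E}[f(X)] \ge f(\mathbb{E}[X])$ holds regardless in $[-\infty,\infty]$.

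The main (and only real) obstacle is the existence of the supporting affine minorant at $\mathbb{E}[X]$, which requires $\mathbb{E}[X]$ to lie in the interior of the domain of $f$; since Jensen's inequality is applied in the paper only to ordinary convex functions like $\arccos$ composed with linear functions and $\phi(\theta) = \sin\theta + (\pi-\theta)\cos\theta$ on their natural domains, this regularity is automatic and no further argument is needed. As this is a standard textbook result, I would state it as such and give the two-line induction plus supporting-hyperplane argument without further elaboration.
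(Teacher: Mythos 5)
Your proof is correct: the two-point convexity plus induction gives the finite-average form, and the supporting-hyperplane (subgradient) argument gives the expectation form, with your remark about the interior/boundary issue adequately covering the only delicate point. The paper itself states this lemma without proof as a standard basic inequality (it is only ever invoked in the finite-average form, applied to $\arccos$ and to $\phi(\theta)=\sin\theta+(\pi-\theta)\cos\theta$ on their convexity intervals), so your argument is simply the standard textbook proof and is fully consistent with how the result is used.
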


\begin{lemma}\label{lemma: function analysis B} The function
$\arccos (x)$ is decreasing and convex on $[-1,0]$.
\end{lemma}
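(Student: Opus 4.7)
The plan is to prove both claims via direct computation of the first and second derivatives of $\arccos$ on the interior of the interval and then extend the monotonicity/convexity to the endpoints by continuity. This is a standard single-variable calculus argument, so the main task is just to organize the computation cleanly.

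First I would recall the derivative formula $\frac{d}{dx}\arccos(x) = -\frac{1}{\sqrt{1-x^2}}$ for $x \in (-1, 1)$. Since $\sqrt{1-x^2} > 0$ on this open interval, this derivative is strictly negative, so $\arccos$ is strictly decreasing on $(-1, 1)$, and in particular on $(-1, 0)$. Continuity of $\arccos$ on the closed interval $[-1, 0]$ then upgrades this to monotonicity on $[-1, 0]$.

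Next I would compute the second derivative. Writing $\arccos'(x) = -(1-x^2)^{-1/2}$, the chain rule gives
\[
\arccos''(x) = -\frac{d}{dx}(1-x^2)^{-1/2} = -\left(-\tfrac{1}{2}\right)(1-x^2)^{-3/2} \cdot (-2x) = -\frac{x}{(1-x^2)^{3/2}}.
\]
For $x \in (-1, 0)$ we have $-x > 0$ and $(1-x^2)^{3/2} > 0$, so $\arccos''(x) > 0$ on $(-1, 0)$. This implies convexity on the open interval, and again continuity of $\arccos$ at the endpoints $x = -1$ and $x = 0$ extends convexity to the closed interval $[-1, 0]$.

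There is no real obstacle here; the only minor subtlety is that $\arccos'(x) \to -\infty$ as $x \to -1^+$, so one cannot literally evaluate derivatives at $x = -1$. The standard remedy, which I would invoke, is that a continuous function on a closed interval whose restriction to the open interior is convex (respectively monotone) is itself convex (respectively monotone) on the closed interval. This takes care of the endpoint $x = -1$ cleanly.
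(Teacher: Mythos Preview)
Your proof is correct. The paper itself does not supply a proof for this lemma; it is listed in the appendix of basic inequalities alongside other standard facts (Hoeffding, Jensen, etc.) that are stated without argument. Your direct derivative computation is exactly the expected justification, and your handling of the endpoint $x=-1$ via continuity is the right way to close the argument given that the derivative blows up there.
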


\begin{lemma}\label{lemma: quadratic sum}
For $n\in\mathbb{N}_{+}$, we have:
\[
\sum_{i=0}^n (n-i)^2=\frac{n(n+1)(2n+1)}{6}\sim \frac{(n+1)^3}{3},\ n\to\infty.
\]
\end{lemma}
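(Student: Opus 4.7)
The plan is to handle this in two short steps: a combinatorial identity and an asymptotic comparison. Since the identity itself is classical, I would not invest much into new machinery; rather, I would just reindex and invoke the standard sum-of-squares formula.

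First, I would perform the change of index $j = n-i$. As $i$ runs over $\{0, 1, \ldots, n\}$, the variable $j$ runs over the same set in reverse, so
\[
\sum_{i=0}^n (n-i)^2 \;=\; \sum_{j=0}^n j^2 \;=\; \sum_{j=1}^n j^2.
\]
Then I would cite (or prove by a one-line induction on $n$) the classical identity
\[
\sum_{j=1}^n j^2 \;=\; \frac{n(n+1)(2n+1)}{6}.
\]
The induction step is routine: assuming the formula at $n$, one verifies
\[
\frac{n(n+1)(2n+1)}{6} + (n+1)^2 \;=\; \frac{(n+1)(n+2)(2n+3)}{6},
\]
which is a direct algebraic check.

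For the asymptotic claim $\frac{n(n+1)(2n+1)}{6} \sim \frac{(n+1)^3}{3}$, I would simply compute the ratio:
\[
\frac{n(n+1)(2n+1)/6}{(n+1)^3/3} \;=\; \frac{n(2n+1)}{2(n+1)^2} \;=\; \frac{2n^2 + n}{2n^2 + 4n + 2} \;\longrightarrow\; 1
\]
as $n \to \infty$. This closes the lemma.

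There is no real obstacle here; the whole statement is essentially the standard sum-of-squares formula together with a first-order asymptotic expansion. The only thing to be mildly careful about is the reindexing at the start, so that the sum matches exactly the form used in the application inside the proof of Theorem~\ref{thm: PRM M>d} (where $C$ plays the role of $n$).
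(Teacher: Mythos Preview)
Your proof is correct. The paper itself does not provide a proof of this lemma; it is listed among the ``basic inequalities'' in the appendix and simply stated without argument, so your reindexing plus the classical sum-of-squares identity and ratio computation is exactly the kind of standard justification the authors are implicitly relying on.
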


% \begin{lemma}\label{lemma: conditional E 1}
% For any $\boldsymbol{u},\boldsymbol{v}$ s.t. $\boldsymbol{u}^\top\boldsymbol{v}\geq0$, we have
% \[
% \mathbb{E}_{\boldsymbol{w}\sim{\mathcal{N}}(\mathbf{0},\frac{\kappa^2}{md}\mathbf{I}_d)}\Big[\mathbb{I}\big\{\boldsymbol{w}^\top\boldsymbol{v}\geq 0\big\} \Big|\boldsymbol{w}^\top\boldsymbol{u}\geq 0\Big]\geq\frac{1}{2}.
% \]
% \end{lemma}

% \begin{proof}[Proof of Lemma \ref{lemma: conditional E 1}]\ \\
% From the rotation invariance of Gaussian, then we have:

% \begin{align*}
%     &\mathbb{E}_{\boldsymbol{w}\sim{\mathcal{N}}(\mathbf{0},\frac{\kappa^2}{md}\mathbf{I}_d)}\Big[\mathbb{I}\big\{\boldsymbol{w}^\top\boldsymbol{v}\geq 0\big\} \Big|\boldsymbol{w}^\top\boldsymbol{u}\geq 0\Big]
% \\=&
% \mathbb{E}_{\boldsymbol{w}\sim{\mathcal{N}}(\mathbf{0},\mathbf{I}_{d})}\Big[\mathbb{I}\big\{\boldsymbol{w}^\top{\boldsymbol{v}}\geq 0\big\} \Big|\boldsymbol{w}^\top{\boldsymbol{u}}\geq 0\Big]
% \\\geq&
% \mathbb{E}_{\boldsymbol{w}\sim{\mathcal{N}}(\mathbf{0},\mathbf{I}_{d})}\Big[\mathbb{I}\big\{\boldsymbol{w}^\top\boldsymbol{e}_2\geq 0\big\} \Big|\boldsymbol{w}^\top\boldsymbol{e}_1\geq 0\Big]
% \\\geq&
% \mathbb{E}_{\boldsymbol{w}\sim{\mathcal{N}}(\mathbf{0},\mathbf{I}_{d})}\Big[\mathbb{I}\big\{w_2\geq 0\big\} \Big|w_1\geq 0\Big]
% \\=&
% \mathbb{P}_{w_2\sim{\mathcal{N}}(0,1)}\Big(w_2\geq 0\Big)=\frac{1}{2}.
% \end{align*}

% \end{proof}

\newpage

\end{document}